\colorlet{linkequation}{blue}
\renewcommand{\P}{\mathbb{P}}
\newcommand{\E}{\mathbb{E}}
\newcommand{\Var}{\text{Var}}
\newcommand{\Cov}{\text{Cov}}
\newcommand{\Z}{\mathbb{Z}}
\newcommand{\R}{\mathbb{R}}
\newcommand{\N}{\mathbb{N}}
\renewcommand{\S}{\mathbb{S}}
\newcommand{\eps}{\varepsilon} 
\def\id{{\mathbf I}}
\newcommand{\<}{\langle}
\renewcommand{\>}{\rangle}
\newcommand{\diag}{\text{diag}}
\newcommand{\op}{{\rm op}}
\def\sT{{\mathsf T}}
\def\bzero{{\boldsymbol 0}}
\DeclareMathOperator*{\argmin}{arg\,min}
\newtheorem{theorem}{Theorem}
\newtheorem*{theorem*}{Theorem}
\newtheorem{lemma}{Lemma}
\newtheorem{assumption}{Assumption}
\newtheorem{definition}{Definition}
\newtheorem{proposition}{Proposition}
\newtheorem{claim}{Claim}
\newtheorem{corollary}{Corollary}
\newtheorem{example}{Example}
\theoremstyle{definition}
\newtheoremstyle{myremark} % name
    {\topsep}                    % Space above
    {\topsep}                    % Space below
    {\rm}                        % Body font
    {}                           % Indent amount
    {\bf}                        % Theorem head font
    {.}                          % Punctuation after theorem head
    {.5em}                       % Space after theorem head
    {}  % Theorem head spec (can be left empty, meaning normal)
\theoremstyle{myremark}
\newtheorem{remark}{Remark}[section]
\DeclareSymbolFont{rsfs}{U}{rsfs}{m}{n}
\DeclareSymbolFontAlphabet{\mathscrsfs}{rsfs}
\def\bA{{\boldsymbol A}}
\def\bB{{\boldsymbol B}}
\def\bC{{\boldsymbol C}}
\def\bD{{\boldsymbol D}}
\def\bG{{\boldsymbol G}}
\def\bH{{\boldsymbol H}}
\def\bK{{\boldsymbol K}}
\def\bM{{\boldsymbol M}}
\def\bQ{{\boldsymbol Q}}
\def\bR{{\boldsymbol R}}
\def\bS{{\boldsymbol S}}
\def\bU{{\boldsymbol U}}
\def\bV{{\boldsymbol V}}
\def\bW{{\boldsymbol W}}
\def\bX{{\boldsymbol X}}
\def\bZ{{\boldsymbol Z}}
\def\ba{{\boldsymbol a}}
\def\be{{\boldsymbol e}}
\def\boldf{{\boldsymbol f}}
\def\bg{{\boldsymbol g}}
\def\bh{{\boldsymbol h}}
\def\bi{{\boldsymbol i}}
\def\bu{{\boldsymbol u}}
\def\bv{{\boldsymbol v}}
\def\bw{{\boldsymbol w}}
\def\bx{{\boldsymbol x}}
\def\by{{\boldsymbol y}}
\def\bz{{\boldsymbol z}}
\def\bbeta{{\boldsymbol \beta}}
\def\bdelta{{\boldsymbol\delta}}
\def\beps{{\boldsymbol \eps}}
\def\btheta{{\boldsymbol \theta}}
\def\bDelta{{\boldsymbol \Delta}}
\def\bSigma{{\boldsymbol \Sigma}}
\def\hf{{\hat f}}
\def\cR{\mathcal{R}}
\def\test{{\rm test}}
\def\train{{\rm train}}
\def\CV{\text{CV}}
\def\GCV{{\rm GCV}}
\def\sfs{{\sf s}}
\def\spn{{\rm span}}
\def\de{{\rm d}}
\def\Tr{{\rm Tr}}
\def\de{{\rm d}}
\def\Unif{{\rm Unif}}
\def\ddiag{{\rm ddiag}}
\def\spn{{\rm span}}
\def\cV{{\mathcal V}}
\def\cP{{\mathcal P}}
\def\cT{{\mathcal T}}
\def\cC{{\mathcal C}}
\def\cL{{\mathcal L}}
\def\cF{{\mathcal F}}
\def\cE{{\mathcal E}}
\def\cS{{\mathcal S}}
\def\cV{{\mathcal V}}
\def\cP{{\mathcal P}}
\def\cT{{\mathcal T}}
\def\cH{{\mathcal H}}
\def\cA{{\mathcal A}}
\def\sM{{\mathsf M}}
\def\Unif{{\sf Unif}}
\def\normal{{\sf N}}
\def\proj{{\mathsf P}}
\def\sM{{\sf M}}
\def\naturals{{\mathbb N}}
\def\Kop{{\mathbb K}}
\def\normal{{\sf N}}
\def\proj{{\mathsf P}}
\def\sM{{\mathsf M}}
\def\Unif{{\sf Unif}}
\def\normal{{\sf N}}
\def\proj{{\mathsf P}}
\def\sM{{\sf M}}
\def\naturals{{\mathbb N}}
\def\proj{{\mathsf P}}
\def\sU{{\sf U}}
\def\sV{{\sf V}}
\def\tcE{\widetilde{\cE}}
\def\tmu{\widetilde  \mu}
\def\tbD{\widetilde{\bD}}
\def\seff{\mbox{\tiny\rm eff}}
\def\Ker{K}
\def\hf{{\hat f}}
\def\tbDelta{\widetilde{\bDelta}}
\def\cE{{\mathcal E}}
\def\cD{{\mathcal D}}
\def\cX{{\mathcal X}}
\def\cF{{\mathcal F}}
\def\cS{{\mathcal S}}
\def\de{{\rm d}}
\def\Unif{{\rm Unif}}
\def\cE{{\mathcal E}}
\def\normal{{\sf N}}
\def\bDelta{{\boldsymbol \Delta}}
\def\cX{{\mathcal X}}
\def\bA{{\boldsymbol A}}
\def\btheta{{\boldsymbol \theta}}
\def\cT{{\mathcal T}}
\def\cV{{\mathcal V}}
\def\diag{{\rm diag}}
\def\bS{{\boldsymbol S}}
\def\bD{{\boldsymbol D}}
\def\hf{\hat f}
\def\bR{{\boldsymbol R}}
\def\evn{{\mathsf m}}
\def\bC{{\boldsymbol C}}
\def\ind{\mathbbm{1}}
\def\balpha{\boldsymbol{\alpha}}
\def\bgamma{\boldsymbol{\gamma}}
\def\cU{\mathcal{U}}
\def\boldf{\boldsymbol{f}}
\def\cB{\mathcal{B}}
\def\Im{{\rm Im}}
\def\sR{\mathsf R}
\def\sV{\mathsf V}
\def\sB{\mathsf B}
\def\obR{\overline{\bR}}
\def\obM{\overline{\bM}}
\def\wbM{\widetilde{\bM}}
\def\hbtheta{\hat \btheta}
\def\br{{\boldsymbol r}}
\def\oxi{\overline{\xi}}
\def\sfG{\textsf{G}}
\def\balpha{\boldsymbol{\alpha}}
\def\bgamma{\boldsymbol{\gamma}}
\def\cU{\mathcal{U}}
\def\boldf{\boldsymbol{f}}
\def\cB{\mathcal{B}}
\def\Im{{\rm Im}}
\def\sR{\mathsf R}
\def\sV{\mathsf V}
\def\sB{\mathsf B}
\def\hbtheta{\hat \btheta}
\def\sfC{{\sf C}}
\def\sfc{{\sf c}}
\def\sfD{{\sf D}}
\def\sfM{{\sf M}}
\def\rmI{{\rm I}}
\def\rmII{{\rm II}}
\def\obQ{\overline{\bQ}}
\def\tS{\widetilde{S}}
\def\onu{\overline{\nu}}
\def\oT{\overline{T}}
\def\sL{\mathsf{L}}
\def\bq{\boldsymbol{q}}
\def\og{\overline{g}}
\def\oq{\overline{q}}
\def\bs{{\boldsymbol s}}
\def\obD{\overline{\bD}}
\def\sflf{{\sf leaf}}
\def\sfT{{\sf T}}
\def\sfG{{\sf G}}
\def\bsfT{{\boldsymbol \sfT}}
\def\bsfG{{\boldsymbol \sfG}}
\def\obi{\overline{\bi}}
\def\obsfT{\overline{\bsfT}}
\def\obsfG{\overline{\bsfG}}
\def\oi{\overline{i}}
\def\osfT{\overline{\sfT}}
\def\osfG{\overline{\sfG}}
\def\sfH{{\sf H}}
\def\tbD{\widetilde{\bD}}
\def\polylog{\text{polylog}}
\def\seff{{\sf eff}}
\def\sG{\mathsf{G}}
\def\sKL{\mathsf{KL}}
\def\oevn{\overline{\evn}}
\def\obeta{\overline{\beta}}
\def\oC{\overline{C}}
\def\opt{{\rm opt}}
\def\tbSigma{\widetilde{\bSigma}}
\title{A non-asymptotic theory of Kernel Ridge Regression: \\
deterministic equivalents, test error, and GCV estimator}
\author{Theodor Misiakiewicz\thanks{Toyota Technological Institute at Chicago. Email: \texttt{theodor.misiakiewicz@ttic.edu}}, \;\;
  \and
  Basil Saeed\thanks{Department of Electrical Engineering, Stanford University. Email:
  \texttt{bsaeed@stanford.edu}}
  }
\begin{document}

\maketitle

\begin{abstract}
We consider learning an unknown target function $f_*$ using kernel ridge regression (KRR) given i.i.d.~data $(\bu_i,y_i)$, $i\leq n$, where $\bu_i \in \cU$ is a covariate vector and $y_i = f_* (\bu_i) +\eps_i \in \R$. A recent string of work has empirically shown that the test error of KRR can be well approximated by a closed-form estimate derived from an `equivalent' sequence model that only depends on the spectrum of the kernel operator. However, a theoretical justification for this equivalence has so far relied either on restrictive assumptions---such as subgaussian independent eigenfunctions---, or asymptotic derivations for specific kernels in high dimensions. %{\color{blue} This leaves unclear which assumptions suffice for realistic kernels. }

In this paper, we prove that this equivalence holds for a general class of problems satisfying some spectral and concentration properties on the kernel eigendecomposition.  Specifically, we establish in this setting a non-asymptotic deterministic approximation for the test error of KRR---with explicit non-asymptotic bounds---that only depends on the eigenvalues and the target function alignment to the eigenvectors of the kernel. Our proofs rely on a careful derivation of deterministic equivalents for random matrix functionals in the dimension free regime pioneered by Cheng and Montanari \cite{cheng2022dimension}. 

We apply this setting to several classical examples and show an excellent agreement between theoretical predictions and numerical simulations. These results rely on having access to the eigendecomposition of the kernel operator. Alternatively, we prove that, under this same setting, the generalized cross-validation (GCV) estimator concentrates on the test error uniformly over a range of ridge regularization parameter that includes zero (the interpolating solution). As a consequence, the GCV estimator can be used to estimate from data the test error and optimal regularization parameter for KRR.

\end{abstract}

\clearpage

\tableofcontents

\clearpage
\section{Introduction}

\subsection{Background}

Consider the classical regression problem: we are given i.i.d.~samples $(\bu_i, y_i)_{i \leq n}$ from a common probability distribution $\P$ on $\cU \times \R$, where $\bu_i \in \cU$ is a covariate vector and $y_i \in \R$ is the response variable. The goal is to learn a model $\hat f : \cU \to \R$ which given a new data point $\bu_{\test}$ predicts the response via $\hf ( \bu_{\test}) $ with small test error
\[
\cR_{\test} ( \hf , \P) := \E_{(\bu_{\test},y_{\test}) \sim \P} \Big[ \big( y_{\test} - \hf ( \bu_{\test}) \big)^2 \Big] .
\]
We assume a model whereby the responses are $y_i = f_* (\bu_i) + \eps_i$, with noise $\eps_i$ independent of $\bu_i$, with $\E [ \eps_i ] = 0$ and $\E [ \eps_i^2 ] =\sigma_\eps^2$. We denote $\P_\bu$ the distribution of $\bu$ and assume that the target function $f_* \in L^2 (\cU) := L^2 (\cU , \P_{\bu} )$ is square integrable.

A standard approach for solving this problem is kernel ridge regression (KRR), which is now considered to be among the most fundamental tools in machine learning. Conceptually, KRR proceeds in two steps: (1) it embeds the covariates $\bu_i$ into a rich `feature space' $\phi (\bu_i) \in \cF$, a Hilbert space with inner-product $\<\cdot, \cdot \>_{\cF}$, via a featurization map $\phi: \cU \to \cF$; and (2) it fits a linear predictor $\hf_{\lambda} ( \bu ) = \< \hbtheta_{\lambda} , \phi (\bu) \>_{\cF} $ with respect to this embedding, where
\begin{equation}\label{eq:KRR_problem}
\hbtheta_{\lambda} := \argmin_{\btheta \in \cF} \Big\{ \sum_{i =1}^n \big( y_i - \< \btheta , \phi (\bu_i) \>_{\cF} \big)^2 + \lambda \| \btheta \|_{\cF}^2 \Big\} .
\end{equation}
Here  $\| \btheta \|_{\cF} = \< \btheta , \btheta \>_{\cF}^{1/2}$ denotes the norm in the feature space and $\lambda$ is the ridge regularization parameter. In practice, the featurization map $\phi$ and feature space $\cF$ are often defined\footnote{These two constructions are equivalent. Indeed, to any positive definite kernel $K : \cU \times \cU \to \R$, we can associate a feature space and feature map $\phi :\cU \to (\cF, \< \cdot, \cdot \>_{\cF})$ such that $K(\bu , \bu') = \< \phi (\bu) , \phi (\bu') \>_{\cF}$. In that case, we have $\cH= \{ \bu \mapsto \< \btheta , \phi (\bu) \>_{\cF} : \btheta \in \cF \}$ with RKHS norm given by $\| f \|_{\cH} = \inf \{ \| \btheta \|_{\cF}: f = \< \btheta , \phi \>_{\cF}\}$.} implicitly through a positive semi-definite kernel $K : \cU \times \cU \to \R$, via $K (\bu , \bu' ) := \< \phi (\bu) , \phi (\bu') \>_{\cF}$, which defines $(\cH, \< \cdot,\cdot \>_{\cH})$ a reproducing kernel Hilbert space (RKHS) \cite{berlinet2011reproducing}. The KRR problem \eqref{eq:KRR_problem} is thus often written compactly as
\begin{equation}\label{eq:KRR_problem_RKHS}
\hat f_{\lambda} = \argmin_{f \in \cH} \Big\{ \sum_{i =1}^n \big( y_i - f (\bu_i) \big)^2 + \lambda \| f \|_{\cH}^2 \Big\}  ,
\end{equation}
with solution that only depends on the kernel evaluations $K(\bu_i , \bu_j)$. This is known as the `representer theorem' or `kernel trick'.

The test error of KRR crucially depends on the eigenvalue decomposition of the kernel\footnote{We assume in this paper that $L^2(\cU, \P_{\bu})$ is separable and $K$ is trace-class, i.e., $\E_{\bu} [ K(\bu,\bu)]<\infty$.}:

\[
K (\bu , \bu ' ) = \sum_{j = 1}^\infty \xi_j \psi_j (\bu) \psi_j (\bu') ,
\]
where $\xi_1 \geq \xi_2 \geq \xi_3 \geq \cdots > 0$ are the positive eigenvalues in nonincreasing order, and $\{ \psi_j  \}_{j \geq 1}$ are the orthonormal eigenvectors. Without loss of generality, we can choose the featurization  map to be $\phi (\bu) := \left( \sqrt{\xi_j} \psi_j ( \bu) \right)_{j \geq 1}$ and $\cF = (\ell_2 , \< \cdot , \cdot \>)$ the space of $\ell_2$ sequences. We introduce $\bSigma = \E_{\bu} [ \phi (\bu) \phi (\bu)^\sT ] = \diag (\xi_1 , \xi_2 , \xi_3, \ldots )$ the covariance matrix of the featurization map (understood as a trace-class self-adjoint operator). For simplicity, we assume the target function satisfy $f_* \in \text{span} \{ \psi_j: j\geq 1\}$, so that we can decompose
\begin{align*}
    f_* (\bu) = \sum_{j = 1}^\infty \beta_{*,j} \psi_j (\bu) = \< \btheta_* , \phi (\bu) \>  , \qquad \| f_* \|_{L^2} = \| \bbeta_*\|_2 = \| \bSigma^{1/2} \btheta_* \|_2 < \infty  .
\end{align*}
This is automatically verified when $K$ is universal and $\{ \psi_j  \}_{j \geq 1}$ forms a complete basis of $L^2 (\cU)$. We can therefore forget about the covariate $\bu$ and directly work in the feature space. We will set hereafter $\bx := \phi (\bu)$, with distribution induced by the distribution over $\bu$, and $f_* ( \bx) := f_*(\bu) = \< \btheta_* , \bx \>$ with a slight abuse of notations.

Using these notations, the KRR solution and test error admit the following explicit formulas:
\begin{equation}\label{eq:KRR_explicit}
\begin{aligned}
    &\hbtheta_{\lambda} = \bX^\sT ( \bX \bX^\sT + \lambda \id_n )^{-1} \by  , \qquad \qquad \bX := [ \bx_1 , \ldots , \bx_n]^\sT \in \R^{n \times \infty} ,\\
       & \cR_{\test} (\hat f_\lambda , \P ) = \E \left[ \big( y - \hat f_\lambda (\bu) \big)^2\right]  = \big\| \btheta_* - \hbtheta_\lambda  \big\|_{\bSigma}^2  + \sigma_\eps^2,
\end{aligned}
\end{equation}
where we denoted $\| \bv \|_{\bSigma} = \| \bSigma^{1/2} \bv \|_2$.
The test error of KRR has been studied by many authors over the years \cite{bartlett2005local,caponnetto2007optimal,hsu2012random,wainwright2019high}. Among this extensive literature, we can distinguish two general analysis approaches:
\begin{itemize}
    \item An approximation-generalization decomposition using uniform convergence bounds \cite{bartlett2005local,koehler2021uniform} or source and capacity conditions \cite{caponnetto2007optimal,fischer2020sobolev,lin2020optimal}. This line of work characterizes the decay rate $n^{-\alpha}$ of the test error for a fixed model as $n \to \infty$. In particular, these rates are expected to be minimax optimal over classes of functions.

    \item High-dimensional asymptotics \cite{dicker2016ridge,richards2021asymptotics,mei2022generalization,xiao2022precise}. This approach considers a sequence of models $n:=n(d)$  and derives sharp asymptotics for the test error (up to an additive vanishing constant) as $n,d \to \infty$ while $n \asymp d^\gamma$.
\end{itemize}

\begin{figure}[t]
\centering
\includegraphics[width=0.75\textwidth]{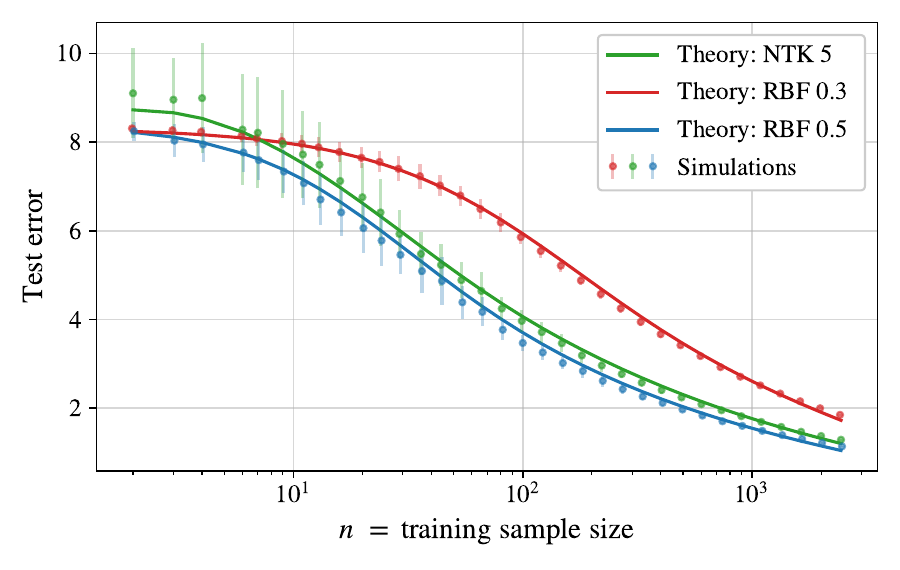}
\vspace{-10pt}
\caption{Test error of KRR plotted against the training sample size $n \in \{2, \ldots, 20000\}$. We consider data $(\bu_i,y_i)$ from MNIST where $u_i$ is a $d=28^2$ dimensional image and $y_i \in \{0,9\}$ is its label. We fit this data using KRR \eqref{eq:KRR_problem_RKHS} with three standard kernels $K_j (\bu,\bu')$; one simulation uses the ReLU NTK of depth 5 and the other two correspond to the RBF kernel with the bandwidths specified in the figure. We take $\lambda =0$ (interpolating solution). The continuous lines correspond to the theoretical predictions from the deterministic equivalent \eqref{eq:DetEquiv_Risk_Intro}, where the eigenvalues of $\bSigma$ and $\bbeta^*$ are estimated from a sample of the data of size $25000$. For the empirical test errors (markers), we solve KRR on $n$ images sampled uniformly from the training set, and report the average and the standard deviation of the test error over $50$ independent realizations.
\label{fig:real_test}
}
\end{figure}

These two approaches typically impose restrictive assumptions on the target function and the eigenvalue decay. Most importantly, both are expected to provide accurate insights to the statistician only \textit{asymptotically}, i.e., for a large dimension and/or number of samples. In parallel, a recent line of work \cite{spigler2020asymptotic,bordelon2020spectrum,canatar2021spectral,loureiro2021learning,cui2021generalization,simon2023eigenlearning} has shown empirically that the test error can be well approximated by a closed-form estimate that only depends on the eigenvalues $\bSigma$ of the kernel operator and the coefficients $\bbeta_*$ of the target function. In particular, this estimate is deterministic and non-asymptotic---depending on finite $n$ and a fixed covariance matrix---, and shows excellent agreement with numerical simulations for small $n$ and moderate covariate dimension. For concreteness, we illustrate these theoretical predictions in Figure \ref{fig:real_test} for KRR on MNIST data with RBF and NTK kernels, and Figure \ref{fig:sphere_test} for KRR with spherical data and inner-product kernels. We refer to \cite{bordelon2020spectrum,canatar2021spectral,loureiro2021learning,cui2021generalization,simon2023eigenlearning} for further numerical examples.
However a theoretical justification for this closed-form estimate has so far relied on either heuristic derivations \cite{canatar2021spectral} or strong assumptions on the data distribution \cite{loureiro2021learning,hastie2022surprises,cheng2022dimension}. 

The present paper is concerned with providing a mathematical foundation for such a non-asymptotic theory of kernel ridge regression. We present a set of abstract assumptions under which this deterministic approximation is accurate, with explicit non-asymptotic bounds, and verify these conditions in several classical settings. Our analysis builds upon recent advances in the study of random matrix functionals \cite{couillet2022random,cheng2022dimension} and non-linear kernels \cite{mei2022generalization,xiao2022precise}.

\subsection{A deterministic equivalent for the test error}
\label{sec:intro_det_equiv}

Let us denote from now on the test error $\cR_{\test} (\bbeta_*; \bX,\beps,\lambda) := \cR_{\test} (\hat f_\lambda ; \P)$ to emphasize the dependency on the feature matrix $\bX$, the label noise $\beps = (\eps_i )_{i\in [n]}$, the regularization parameter $\lambda$, and the coefficients of the target function $\bbeta_*$.

 It will be instructive to first consider the bias-variance decomposition of the test error over the label noise $\beps$ in the training data
\[
\E_{\beps} [ \cR_{\test} (\bbeta_*; \bX,\beps,\lambda)] = \cB ( \bbeta_* ; \bX, \lambda) +  \cV (\bX, \lambda) + \sigma_\eps^2,
\]
where the bias and variance are given explicitly by
\begin{equation}\label{eq:bias_variance_intro}
\begin{aligned}
     \cB ( \bbeta_* ; \bX, \lambda) =&~ \big\| \btheta_* - \E_{\beps} \big[ \hbtheta_\lambda \big]  \big\|_{\bSigma}^2 = \lambda^2 \< \bbeta_* , \bSigma^{-1/2} ( \bX^\sT \bX +\lambda)^{-1} \bSigma ( \bX^\sT \bX +\lambda)^{-1} \bSigma^{-1/2} \bbeta_* \> , \\
     \cV (\bX, \lambda) =&~ \sigma_\eps^2 \Tr \big( \bSigma \Cov_\beps (  \hbtheta_\lambda) \big) = \sigma_\eps^2 \cdot \Tr \big( \bSigma \bX^\sT\bX (\bX^\sT \bX + \lambda )^{-2}  \big)  .
\end{aligned}
\end{equation}

\paragraph*{Deterministic equivalents.}  While the bias and variance are random variables, it was observed in \cite{hastie2022surprises,cheng2022dimension} that they are both well-concentrated around non-random values. Following the terminology in random matrix theory, we will simply refer to these deterministic estimates as `deterministic equivalents' \cite{couillet2011random}. 

Define the effective regularization $\lambda_*$ to be the unique non-negative fixed point to the equation
\begin{equation}\label{eq:eff_reg_intro}
n - \frac{\lambda}{\lambda_*} = \Tr \big( \bSigma ( \bSigma + \lambda_* )^{-1} \big)  .
\end{equation}
The deterministic equivalents for the bias and variance terms are given by
\begin{align}\label{eq:det_equiv_bias_intro}
    \sB_n (\bbeta_*, \lambda) := &~ \frac{\lambda_*^2 \< \bbeta_* , ( \bSigma + \lambda_* )^{-2} \bbeta_* \> }{ 1 - n^{-1} \Tr ( \bSigma^2 ( \bSigma + \lambda_* )^{-2} )}  , \\
    \sV_n  (\lambda) :=&~ \frac{\sigma_\eps^2 \Tr ( \bSigma^2 ( \bSigma + \lambda_* )^{-2} ) }{n - \Tr ( \bSigma^2 ( \bSigma + \lambda_* )^{-2} )}, \label{eq:det_equiv_variance_intro}
\end{align}
and for the test error
\[
\sR_n (\bbeta_* , \lambda) := \sB_n + \sV_n + \sigma_\eps^2.
\]
Cheng and Montanari \cite{cheng2022dimension} established the two approximation bounds
\begin{equation}\label{eq:Cheng_rates}
\cB (\bbeta_* ; \bX,\lambda) = (1 + \widetilde{O} (n^{-1/2}) ) \cdot \sB_n(\bbeta_*,\lambda), \qquad\quad \cV (\bX,\lambda) = (1 + \widetilde{O} (n^{-1})) \cdot \sV_n (\lambda),
\end{equation}
 with probability $1 - o_n(1)$. These guarantees display two remarkable features. First, the bounds are \textit{dimension-free} and apply to infinite-dimensional features $\bx_i \in \R^{\infty}$. In particular, compared to previous work, they do not require a feature dimension $p$ with $1/C \leq p/n \leq C$ or a bounded condition number on the covariance matrix $\bSigma$. Second, the bounds are \textit{multiplicative} and the deterministic equivalents remain accurate even for vanishing bias and variance.

However, their analysis crucially relies on two important assumptions: (i) The features are `concentrated' in the sense that they satisfy the Hanson-Wright inequality. This includes features with independent sub-Gaussian coordinates or that satisfy the convex Lipschitz concentration property, but excludes more realistic kernels such as most non-linear kernels on the sphere. (ii) The target function must be very `smooth' with $\| \bSigma^{-1} \bbeta_* \|_2 < \infty$, i.e., a small subset of the RKHS\footnote{Recall that $f \in L^2(\cU)$ belongs to the RKHS $\cH$ if and only if $\| \bSigma^{-1/2} \bbeta_* \|_2 < \infty$.}. In particular, this implies that the bias $\cB_n = O(n^{-2})$ in their setting.

\begin{figure}[t]
\centering
\includegraphics[width=0.75\textwidth]{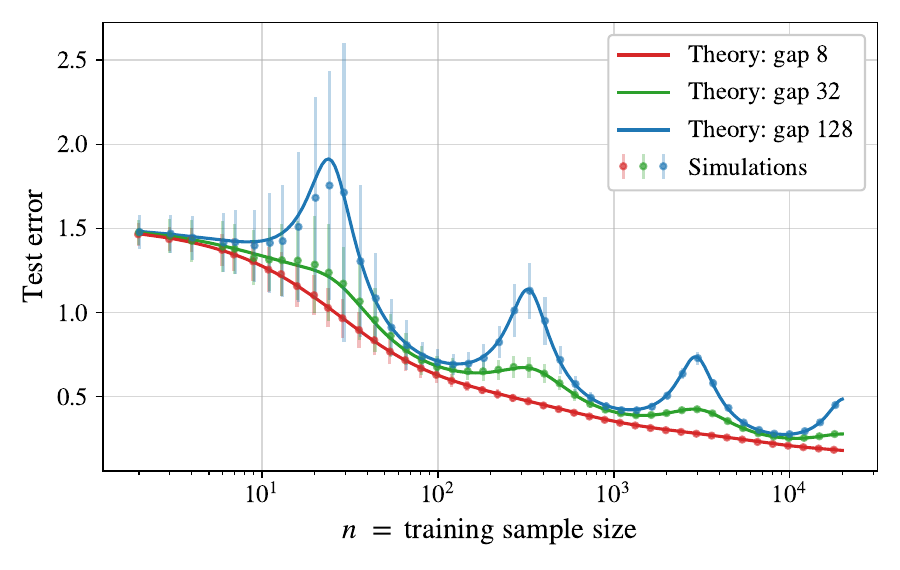}
\vspace{-10pt}
\caption{Test error of KRR plotted against the training sample size $n \in \{2, \ldots, 20000\}$. We consider data $(\bu_i,y_i)$ with $\bu_i \sim_{iid} \Unif (\S^{d-1} (\sqrt{d}))$, $d=24$, and $y_i = f_* (\bu_i) + \eps_i$ with independent label noise $\eps_i \sim \normal (0, \sigma_\eps^2)$, $\sigma_\eps^2 = 0.1$. We fit this data using KRR \eqref{eq:KRR_problem_RKHS} with three different inner-product kernels $K_j (\bu,\bu') = h_j (\< \bu , \bu'\>/d)$, $j \in [3]$, corresponding to spectral gaps $\in \{8,32,128\}$, and regularization parameter $\lambda =0$. The continuous lines correspond to the theoretical predictions from the deterministic equivalent \eqref{eq:DetEquiv_Risk_Intro}. For the empirical test errors (markers), we report the average and the standard deviation of the test error over $50$ independent realizations.
See Section \ref{sec:inner-product_main} for details. \label{fig:sphere_test}}
\end{figure}

To extend these approximation guarantees to (i) non-concentrated features and (ii) general target functions $f_* \in L^2(\cU)$, with $\| \bbeta_* \|_2 < \infty$, requires significant changes to the proof approach. In particular, it necessitates to establish dimension-free deterministic equivalents to higher-order functionals of the feature matrix.

% \paragraph*{Equivalent sequence model.} 
% \notate{Is this section necessary? Does anyone really care?}
% As noted in \cite{cheng2022dimension}, the deterministic equivalents \eqref{eq:det_equiv_bias_intro} and \eqref{eq:det_equiv_variance_intro} have a particularly simple interpretation in terms of an effective regression model with fixed design. For completeness, we briefly describe this connection below and refer the reader to \cite{cheng2022dimension} for details.  In this effective model, the random design matrix $\bX$ is replaced by $\bSigma^{1/2}$, the noise level $\sigma_\eps^2$ by $\omega^2/n$. The labels are now given by
% \[
% \by^s = \bSigma^{1/2} \btheta_* + \frac{\omega}{\sqrt{n} } \bg,
% \]
% where the $g_i$'s are independent with~$g_i \sim \normal (0,1)$. This model corresponds to the classical sequence model where we observe $y_i^s = \xi_i^{1/2} \theta_{*,i} + \omega g_i/\sqrt{n}$. The estimator $\hat \btheta_\lambda^s$ is now given by
% \[
% \hat \btheta_\lambda^s := \argmin_{\btheta} \left\{ \| \by^s - \bSigma^{1/2} \btheta \|_2^2 + \lambda_* \| \btheta \|_2^2 \right\}
% \]
%  where we replace the regularization parameter  by the effective regularization $\lambda_* \equiv \lambda_*(\lambda)$.
% Then the quantities \eqref{eq:det_equiv_bias_intro} and \eqref{eq:det_equiv_variance_intro} correspond to the bias and variance of the test error in this sequence model with noise level chosen to be the unique positive solution to
% \[
% \omega^2 = \sigma_\eps^2 +  \E_{\bg}\left[ \| \hat \btheta_\lambda^s  - \btheta_* \|_{\bSigma}^2 \right].
% \]

\subsection{Summary of main results}
\label{sec:summary}

The main result of this paper is a guarantee for the test error $\cR_{\test} (\bbeta_* ; \bX, \beps,\lambda)$ to be approximated by its associated deterministic equivalent
\begin{equation}\label{eq:DetEquiv_Risk_Intro}
    \sR_n (\bbeta_*,\lambda) =  \frac{\lambda_*^2 \< \bbeta_*, (\bSigma + \lambda_* )^{-2} \bbeta_* \> + \sigma_\eps^2}{1 - \frac{1}{n} \Tr( \bSigma^2 ( \bSigma + \lambda_* )^{-2} )} ,
\end{equation}
under general `abstract' assumptions on the feature vector $\bx$. While the test error is the most relevant quantity for practitioners, this approximation guarantee will hold for more general functionals of the feature matrix $\bX$, including the training error and the Stieltjes transform of the empirical kernel matrix (in particular, the GCV estimator stated below).
It is useful to first describe informally these general assumptions.

\paragraph*{General assumptions.} Non-linear kernels typically contain high-frequency eigenfunctions with heavy-tailed distribution (e.g., high-degree polynomials) in their diagonalization. However, it was observed in a recent string of papers \cite{ghorbani2020neural,mei2022generalization,misiakiewicz2022learning,xiao2022precise} that for a number of classical kernels, the top eigenspaces are associated to low-frequency eigenfunctions whose tails can be controlled. Furthermore, the high-frequency part of the kernel matrix concentrates on a non-random diagonal matrix. Following these insights, our analysis will treat separately the top eigenspaces from the rest of the features.

For any integer $\evn \in \naturals$, define the \textit{regularized tail rank} of $\bSigma$ with regularization $\lambda \geq 0$:
\[
r_\lambda (\evn) = \frac{\lambda + \sum_{j = \evn+1}^\infty \xi_j}{\xi_{\evn+1}}.
\]
This quantity first appeared in the benign-overfitting literature \cite{bartlett2020benign,tsigler2023benign,koehler2021uniform} that studied the properties of the (near-)interpolating solutions of ridge regression. We denote $\bx_{\leq \evn}$ the projection of the feature vector $\bx$ onto its top $\evn$ eigenspaces, with covariance $\bSigma_{\leq \evn} = \diag (\xi_1,\xi_2 , \ldots, \xi_\evn)$, and $\bx_{>\evn}$ the projection onto the rest of the eigenspaces, with covariance $\bSigma = \diag (\xi_{\evn+1} , \xi_{\evn+2}, \ldots)$.

For a given $n$, we assume that there exists an integer $\evn := \evn (n)$ such that, splitting the feature vector into $\bx = (\bx_{\leq \evn}, \bx_{>\evn})$, the following hold.
\begin{enumerate}
    \item \textit{Low-degree features.} We assume that for any deterministic vector $\bv \in \R^{\evn}$ and p.s.d.~matrix $\bA \in \R^{\evn \times \evn}$,
    \begin{align}
        \P \left( \big\vert \< \bv, \bx_{\leq \evn} \>  \big\vert \geq t \cdot   \bv^\sT \bSigma_{\leq \evn} \bv \right) \leq&~ \sfC_x \exp \left\{ - \sfc_x t^{2/\beta } \right\}, \label{eq:ass_quad_concentration_intro} \tag{a1}\\
    \P \left( \big\vert \bx_{\leq \evn}^\sT \bA \bx_{\leq \evn} - \Tr( \bSigma_{\leq \evn} \bA) \big\vert \geq t \cdot \varphi_{1} (\evn) \cdot \big\| \bSigma_{\leq \evn}^{1/2} \bA \bSigma_{\leq \evn}^{1/2} \big\|_F \right) \leq&~ \sfC_x \exp \left\{ - \sfc_x t^{1/\beta } \right\}  . \label{eq:ass_quad_concentration_intro_2} \tag{a2} 
    \end{align}
    Equation \eqref{eq:ass_quad_concentration_intro} amounts to an hypercontractivity condition on the subspace spanned by the top $\evn$ eigenfunctions. Equation \eqref{eq:ass_quad_concentration_intro_2} relaxes previous Hanson-Wright-type inequalities that were used to prove deterministic equivalents in \cite{louart2018concentration,cheng2022dimension}.

    \item \textit{High-degree features.} Denote $\bX_{>\evn}$ the high-degree part of the feature matrix. We assume that there exists $\varphi_{2,n} (\evn) \geq 1$ such that with high probability
    \begin{equation}\label{eq:high-degree_intro}
    \| \bX_{>\evn} \bX_{>\evn}^\sT - \Tr (\bSigma_{>\evn} ) \cdot  \id_n \|_\op \leq \varphi_{2,n}(\evn)  \sqrt{\frac{n}{r_{\lambda} (\evn)}} \cdot \left\{ \lambda + \Tr( \bSigma_{>\evn}) \right\}. \tag{b1}
    \end{equation}
    We expect $\varphi_{2,n} (\evn) = O(\polylog (n))$ as long as $r_\lambda (\evn) \gg n$ in many settings of interest. We present a number of such sufficient conditions throughout the paper, but choose to keep Assumption \eqref{eq:high-degree_intro} in this form for the sake of generality.
\end{enumerate}

The first assumption is inspired by polynomial kernels on Gaussian or spherical data $\bu \in \R^d$. In these cases, if $\bx_{\leq \evn}$ corresponds to an orthonormal basis of polynomials up to degree $\ell$, then Equations \eqref{eq:ass_quad_concentration_intro} and \eqref{eq:ass_quad_concentration_intro_2} are satisfied with $\beta = \ell$ and $\varphi_1 (\evn) = \Theta (d^{(\ell-1)/2})$. We show in a companion paper how these conditions can be relaxed in some settings. Note that taking $\bA = (\bX_{\leq \evn}^\sT \bX_{\leq \evn} + \lambda)^{-1}$ and $\varphi_1 (\evn) = o(\sqrt{n})$, Equation \eqref{eq:ass_quad_concentration_intro_2} is a quantitative version of a necessary condition for the Marchenko-Pastur theorem to hold \cite{yaskov2016necessary,misiakiewicz2022spectrum}.

Assumption \eqref{eq:high-degree_intro} allows to replace the high-frequency part of the kernel matrix $\bX_{>\evn} \bX_{>\evn}^\sT + \lambda \id$ by the deterministic matrix $(\lambda +\Tr(\bSigma_{>\evn}))\cdot \id$ whenever $\evn$ is chosen with $r_\lambda (\evn) \gg n$. This assumption covers two regimes of interest: the classical setting of uniform convergence when $\lambda \gg n \xi_{\evn+1}$, where $\lambda$ is chosen positive (e.g., by cross-validation), and the benign overfitting regime when $r_0 (\evn) \gg n$, where we can take $\lambda \to 0^+$ (the interpolating solution) while still achieving good generalization.

Besides these assumptions on the feature $\bx$, our formal statements will require a bound on the growth of the moments of the target function. However, we do not require $\| \bSigma^{-1/2} \bbeta_* \|_2 < \infty$ ($f_*$ in the RKHS) or $\| \bSigma^{-1} \bbeta_* \|_2 < \infty$, and our results apply more generally to $f_* \in L^2(\cU)$. 

We refer the reader to Section \ref{sec:assumptions_test_error} for a formal presentation of these assumptions.

\paragraph*{Approximation guarantees.}
We prove under the above assumptions that with high probability 
\begin{equation}\label{eq:test_approx_guarantee_intro}
\left| \cR_{\test} (\bbeta_*; \bX, \beps,\lambda) - \sR_n (\bbeta_*,\lambda) \right| = \frac{\widetilde{O} (1)}{\lambda_{>\evn}^6}\cdot \left\{ \frac{\varphi_1 (\evn)}{\sqrt{n}} + \varphi_{2,n}(\evn)  \sqrt{\frac{n}{r_\lambda (\evn)}} \right\} \sR_n (\bbeta_*,\lambda),
\end{equation}
where we denoted $\lambda_{>\evn} = \lambda + \Tr(\bSigma_{>\evn})$. Here $\Tilde O(1)$ hides $\polylog (n)$ factors and a mild dependence on an `effective rank' of the matrix $\bSigma_{\leq \evn}$. Equation \eqref{eq:test_approx_guarantee_intro} implies that the test error is well approximated by $\sR_n (\bbeta_*,\lambda)$ as soon as there exists $\evn := \evn (n)$ such that the features satisfy the above assumptions with
\[
\lambda_{>\evn} \gtrsim 1, \qquad \qquad \sqrt{n}\gtrsim \varphi_1 (\evn)  , \qquad \qquad r_\lambda (\evn) \gtrsim n \cdot \varphi_{2,n} (\evn)^2.
\]
The detailed statement of this approximation guarantee (Theorem \ref{thm:abstract_Test_error}) can be found in Section \ref{sec:main_theorem_test_error}.

\vspace{+14pt}

We next describe the two classical examples on which we illustrate our results.

\paragraph*{Example 1: concentrated features.} We assume that the feature vector $\bx$ satisfy Equation \eqref{eq:ass_quad_concentration_intro_2} with $\evn = \infty$ and $\varphi_1 (\evn) = 1$. This setting includes a number of popular theoretical models---such as feature vector $\bSigma^{-1/2} \bx$ with independent sub-Gaussian entries or that satisfies the Lipschitz convex concentration property---which were considered previously in \cite{bartlett2020benign,tsigler2023benign,cheng2022dimension}. Our general guarantee \eqref{eq:test_approx_guarantee_intro} simplifies in this case to
\[
\left| \cR_{\test} (\bbeta_*; \bX, \beps,\lambda) - \sR_n (\bbeta_*,\lambda) \right| =\widetilde{O} (1) \cdot \frac{1}{\lambda^6 \sqrt{n}} \cdot \sR_n (\bbeta_*,\lambda).
\]
This result improves on \cite{cheng2022dimension} in two significant ways. First, the relative rate $\Tilde{O} (n^{-1/2})$ holds \textit{uniformly} over all $\bbeta_*$. Second, our guarantees applies to \textit{any square-integrable function} $\| \bbeta_* \|_2 < \infty$ without requiring $\| \bSigma^{-1} \bbeta_* \|_2 < \infty$. 

We further note that in this setting, Assumption \eqref{eq:high-degree_intro} is verified with $\varphi_2(\evn) = O(\polylog(n))$ for any $\evn$ with $r_{\lambda} (\evn)\gtrsim n$. The flexibility of choosing $\evn < \infty$ allows to provide approximation guarantees for the interpolating solution $\lambda =0$ whenever $\Tr(\bSigma_{>\evn}) = \Omega (1)$. For example, if $\Tr(\bSigma_{>n^2}) = \Omega (\Tr(\bSigma))$, then the relative approximation rate is $\widetilde{O} (n^{-1/2})$ for any $\lambda \geq 0$.

%We refer to Section \ref{sec:example_concentrated} for details about the setting of concentrated features.

\paragraph*{Example 2: inner-product kernels on the sphere.} We consider covariates $\bu_i \sim \Unif (\S^{d-1} (\sqrt{d}))$ uniformly distributed on the $d$-dimensional sphere of radius $\sqrt{d}$, and an arbitrary inner-product kernel $K(\bu,\bu') = h(\<\bu,\bu'\> /d)$. In this example, the kernel operator admits a simple eigendecomposition in terms of spherical harmonics which has been leveraged in a recent string of work to investigate properties of non-linear kernels \cite{ghorbani2021linearized,mei2022generalization,xiao2022precise}. In particular, KRR in this model displays a number of interesting phenomena such as benign overfitting and multiple descents in the risk curve.

Applying the general guarantee \eqref{eq:test_approx_guarantee_intro} to this setting, we obtain
\begin{equation}\label{eq:approx_sphere_intro}
\left| \cR_{\test} (\bbeta_*; \bX, \beps,\lambda) - \sR_n (\bbeta_*,\lambda) \right| =\widetilde{O} (1) \cdot \left\{ \sqrt{\frac{d^{\ell-1}}{n}} + \sqrt{\frac{n}{d^{\ell+1}}}\right\}\cdot \sR_n (\bbeta_*,\lambda),
\end{equation}
where $\ell \geq1$ denotes the closest integer to $\log(n)/\log(d)$. Note that the rate of approximation is always better than $d^{-1/4}$.
%Compared to previous work, our results are \textit{non-asymptotic} and apply to a \textit{fixed target function} without having to randomize its high-degree coefficients.
We illustrate this result in Figure \ref{fig:sphere_test} where we compare the test errors obtained from simulations and the theoretical predictions from the deterministic equivalent \eqref{eq:DetEquiv_Risk_Intro} for $d = 24$ and varying $n \in \{2, \ldots, 20000\}$. Figure \ref{fig:sphere_test} shows excellent agreement between the empirical test error and its deterministic equivalent even for small $n$ and complex learning curves with multiple descents.
%We refer to Section \ref{sec:inner-product_main} for more details about this example.

\vspace{+14pt}

The deterministic equivalent for the test error described above relies on the exact eigendecomposition of the kernel operator and hard-to-verify assumptions on the featurization map. %However, computing the eigenvalues $\xi_k$ and the coefficients $\beta_k = \< f_*, \psi_k \>_{L^2}$ is challenging even in fully specified models, where both the covariate distribution and $f_*$ are known.  
 Nevertheless, previous studies  \cite{bordelon2020spectrum,canatar2021spectral,loureiro2021learning} empirically evaluated the kernel eigendecomposition from data and showed that the estimated deterministic equivalent \eqref{eq:DetEquiv_Risk_Intro} accurately predicts the KRR learning curves across various scenarios with real data and complex kernels, including trained neural networks. For completeness, we provide an illustration in Figure \ref{fig:real_test} with MNIST dataset and three standard kernels: an NTK and the RBF with two different bandwidths (see figure caption for details). It is an interesting research direction to relax our assumptions\footnote{It is worth noting that we can easily construct kernels with $\cR_\test$ that does not concentrate for any $n\in\naturals$ \cite{misiakiewicz2022spectrum}.} and connect them to more interpretable properties.

\paragraph*{Uniform consistency of the GCV estimator.} 

As a concrete application of the deterministic equivalents developed in this paper, we study the \textit{generalized cross-validation} (GCV) estimator.
Under the same general assumptions as above, we provide novel non-asymptotic guarantees for the GCV estimator in KRR.

The GCV estimator was introduced in \cite{craven1978smoothing,golub1979generalized} as an approximation to the leave-one-out cross-validation estimator. It is given by
\[
\widehat{\GCV}_\lambda ( \bK , \by) = n \frac{\by^\sT ( \bK + \lambda \id_n )^{-2} \by}{\Tr ( (\bK + \lambda)^{-1} )^2} ,
\]
where we denoted $\bK := \bX \bX^\sT = (K(\bu_i,\bu_j) )_{ij \in [n]}$ the empirical kernel matrix and $\by = (y_1, \ldots, y_n)$ the vector of labels. Under the same general assumptions as above, we prove that with high probability
\begin{equation}\label{eq:GCV_guarantee_intro}
\sup_{\lambda \in [0,\lambda_{\max}]} \left| \frac{\widehat{\GCV}_\lambda (\bK,\by)}{\cR_\test (\bbeta_* ; \bX,\beps,\lambda)}  - 1\right| = \frac{\widetilde{O} (1)}{\Tr( \bSigma_{>\evn})^6}\cdot \left\{ \frac{\varphi_1 (\evn)}{\sqrt{n}} + \varphi_{2,n}(\evn)  \sqrt{\frac{n}{r_{0} (\evn)}} \right\}.
\end{equation}
In particular, if we denote $\lambda_{\opt} \in \R_{\geq 0}$ the regularization parameter that minimizes $\cR_\test (\bbeta_* ; \bX,\beps,\lambda)$, and $\widehat\lambda_{\GCV}\in \R_{\geq 0}$ the parameter that minimizes $\widehat{\GCV}_\lambda (\bK,\by)$, the above guarantee implies
\[
\begin{aligned}
&~\left| \cR_{\test} (\bbeta_* ; \bX, \beps, \widehat\lambda_{\GCV}) -  \cR_{\test} (\bbeta_* ; \bX, \beps, \lambda_{\opt} ) \right| \\
=&~ \frac{\widetilde{O} (1)}{\Tr( \bSigma_{>\evn})^6} \left\{ \frac{\varphi_1 (\evn)}{\sqrt{n}} + \varphi_{2,n}(\evn)  \sqrt{\frac{n}{r_{0} (\evn)}} \right\} \cdot  \cR_{\test} (\bbeta_* ; \bX, \beps, \lambda_{\opt} ).
\end{aligned}
\]
Hence, the GCV estimator can be used to efficiently estimate the test error and optimally tune the ridge regularization. This corroborates recent findings in \cite{jacot2018neural,wei2022more} that have empirically demonstrated the accuracy of GCV in KRR. 

Compared to previous works that have studied the uniform consistency of the GCV estimator \cite{xu2019consistent,hastie2022surprises,patil2021uniform}, the guarantee \eqref{eq:GCV_guarantee_intro} is \textit{non-asymptotic} and applies to settings with infinite-dimensional features and non-linear kernels. Furthermore, it is \textit{scale-free} and does not depend on the value of the test error, which can be vanishing.

%We refer the reader to Section \ref{sec_main:GCV} for the formal results on the GCV estimator.

\paragraph*{Technical contributions.} Our main technical contributions are novel characterizations for random matrix functionals in the dimension free regime introduced by \cite{cheng2022dimension} (see Section \ref{sec:related} for background). In this regime, we fix the distribution of the feature vector $\bx \in \R^p$---potentially infinite dimensional ($p=\infty$)---and vary $n \in \naturals$. We prove deterministic equivalents for different functionals of the feature matrix $\bX = [ \bx_1, \bx_2, \ldots, \bx_n]^\sT \in \R^{n \times p}$ with explicit multiplicative bounds. For example, in the case of concentrated features $\bx \in \R^\infty$ (Example 1 above), we show that for any deterministic p.s.d.~operator $\bA \in \R^{\infty \times \infty}$ with $\Tr(\bA) < \infty$, the following holds with probability at least $1 - n^{-D}$
\[
\left| \lambda \Tr( \bA (\bX^\sT \bX + \lambda)^{-1} ) - \lambda_*\Tr( \bA ( \bSigma + \lambda_*)^{-1} ) \right| \leq C_{x,D}  \frac{\nu_\lambda (n)^{5/2} \log^{\beta +\frac12} (n)}{\sqrt{n}} \cdot \lambda_* \Tr( \bA (  \bSigma + \lambda_*)^{-1} ),
\]
where $\bSigma = \E[ \bx \bx^\sT]$ is the feature covariance matrix, $\lambda_*$ is the effective regularization as defined in Eq.~\eqref{eq:eff_reg_intro}, $\nu_\lambda (n)$ is defined in Eq.~\eqref{eq:reduced_nu_lambda}, and $C_{x,D}$ is a constant that only depends on $D>0$ and $\sfc_x,\sfC_x,\beta$ appearing in Eq.~\eqref{eq:ass_quad_concentration_intro_2}.
The full statement of these deterministic equivalents can be found in Section \ref{sec_outline:det_equiv}.

\paragraph*{Organization.} Section \ref{sec_main:test_error} introduces notations and assumptions that will be used throughout the paper, and states our main result on the deterministic equivalent of the KRR test error (Theorem \ref{thm:abstract_Test_error} in Section \ref{sec:main_theorem_test_error}). We further discuss and connect our assumptions to previous work in Section \ref{sec:kernel_eigendecomposition}. In Section \ref{sec_main:examples}, we apply our general results to the examples of concentrated features (Section \ref{sec:example_concentrated}) and inner-product kernels on the sphere (Section \ref{sec:inner-product_main}). Section \ref{sec_main:GCV} presents the GCV estimator and states our main guarantee on its uniform consistency (Theorem \ref{thm:abstract_GCV}). Finally, we outline the strategy for proving our main results in Section \ref{sec:outline_proofs}. The complete proofs are deferred to the appendices.

\subsection{Related literature}
\label{sec:related}

The performance of KRR has been studied extensively over the years \cite{bartlett2005local,caponnetto2007optimal,hsu2012random,wainwright2019high,fischer2020sobolev,lin2020optimal,bartlett2020benign,tsigler2023benign}. For example, it was shown in \cite{caponnetto2007optimal} that KRR achieves minimax optimal rates over subclasses of functions under source and capacity conditions. Recently, a number of works have sought to precisely characterize the learning curves of KRR in high dimensions. \cite{dicker2016ridge} considered ridge regression with Gaussian features $\bx_i \sim \normal (0,\id_d)$ and computed the asymptotic risk in the proportional asymptotics regime with $d,n \to \infty$ with $d/n \to \gamma \in (0,\infty)$. This result was later extended to anisotropic feature distribution in \cite{dobriban2018high}, and to general linear target functions in \cite{richards2021asymptotics,wu2020optimal,hastie2022surprises}.  The asymptotic test error of KRR with an inner-product kernel was derived in \cite{liang2020just,bartlett2021deep} using the linearization of the kernel in this regime \cite{el2010spectrum}. The setting of our second example was investigated in \cite{ghorbani2021linearized,xiao2022precise} which considered inner-product kernels on the sphere and computed precise asymptotics for the test error in the polynomial high-dimensional regime, where $n,d \to \infty$, with $n /d^\kappa \to \gamma$, for any $\kappa,\gamma>0$. They showed that the learning curve displays a staircase decay as $n$ increases, where each time $\log(n)/\log(d)$ crosses an integer value, KRR fits one more degree polynomial approximation to the target function. At the critical regimes $n \asymp d^\ell, \ell \in \naturals$ , a peak can appear due to the degeneracy of the eigenspace spanned by the degree-$\ell$ spherical harmonics \cite{xiao2022precise}. We improve upon these works in two important ways. First, our bounds and deterministic equivalents are non-asymptotic and resolve the ambiguity of the polynomial scaling at finite $n,d$. Second, our bounds hold for a fixed target function without requiring to randomize its coefficients. %The precise asymptotics of the risk in the polynomial scaling was extended to inner-product kernels on anisotropic data \cite{ghorbani2020neural}, group-invariant kernels \cite{mei2021learning}, and convolutional kernels \cite{misiakiewicz2022learning,xiao2022eigenspace,xiao2022precise}.

The paper \cite{mei2022generalization} developed a framework for computing the asymptotic test error of KRR under abstract conditions on the kernel eigendecomposition. Namely, they assume the top eigenspaces satisfy an hypercontractivity condition (corresponding to our condition \eqref{eq:ass_quad_concentration_intro}) and the high-degree part of the empirical kernel matrix concentrates on a deterministic matrix (our condition \eqref{eq:high-degree_intro}). In this paper, we simplify these assumptions and make them quantitative to allow for explicit non-asymptotic bounds. Most importantly, \cite{mei2022generalization} assumes a `spectral gap' condition with the low-degree part being low-dimensional, i.e., $\evn \ll n$. In this regime, KRR behaves effectively as a shrinkage operator  
%, with $\lambda_* =(1+o_n(1)) \Tr(\bSigma_{>\evn})/n$ and $n^{-1}\Tr (\bSigma^2 ( \bSigma + \lambda_*)^{-2}) = o_n(1)$, 
and their proof only uses matrix concentration bounds. Removing the spectral gap assumption requires random matrix tools and in particular, to prove deterministic equivalents for random matrix functionals. Further discussion on \cite{mei2022generalization} can be found in Section \ref{sec:kernel_eigendecomposition}.

In parallel to this high-dimensional work, a line of research \cite{sollich2001gaussian,spigler2020asymptotic,bordelon2020spectrum,canatar2021spectral,loureiro2021learning,cui2021generalization,simon2023eigenlearning} heuristically derived and empirically validated a non-asymptotic closed-form estimate for the KRR test error, i.e., the deterministic equivalent \eqref{eq:DetEquiv_Risk_Intro}. \cite{bordelon2020spectrum} presents two different approaches to obtain this analytical expression: a continuous approximation to the learning curves inspired by the Gaussian process literature \cite{sollich2001gaussian}, and replica method with a saddle-point approximation. \cite{loureiro2021learning} assumes the eigenfunctions to be independent Gaussians and derive Eq.~\eqref{eq:DetEquiv_Risk_Intro} using CGMT in the proportional regime. \cite{simon2023eigenlearning} provides a simple heuristic derivation based on a `conservation law'. \cite{spigler2020asymptotic,bordelon2020spectrum,canatar2021spectral,loureiro2021learning,simon2023eigenlearning} conducted comprehensive empirical evaluations and demonstrated the accuracy of these predictions in various synthetic and real-data settings. We refer to \cite{bordelon2020spectrum,simon2023eigenlearning} for further background and discussion on this literature. 
Our paper aims to bridge the gap between these heuristic closed-form estimates and random matrix theory, and provide further supporting evidence for this line of work.

%We believe that our paper is a step towards breaching the gap between these heuristic closed-form estimates and random matrix theory, and providing a mathematical foundation for this line of work.

Non-asymptotic deterministic equivalents for the test error were first derived for ridge regression in \cite{hastie2022surprises} and KRR with inner-product kernels in the proportional regime $n \asymp d$ in \cite{bartlett2021deep} using the `anisotropic local law' proved in \cite{knowles2017anisotropic}.  Most closely related to our paper, \cite{cheng2022dimension} considered the case of infinite-dimensional concentrated features (our Example 1) and derived non-asymptotic and multiplicative approximation guarantees (see Section \ref{sec:intro_det_equiv}). The regime they consider, which we call \textit{dimension-free regime} following the title of their work, is quite different from how RMT is usually set up. It is useful to spell out the difference here. The classical setting of RMT typically considers a sequence of problems indexed by $d \in \naturals$. For each $d$, we sample $n(d)$ independent features $(\bx_i^{(d)})_{i \in [n(d)]}$ with common covariance $\tbSigma^{(d)} \in \R^{p(d) \times p(d)}$. The goal is to study the behavior of the random matrix $\bX^{(d)} = [\bx_1^{(d)}, \ldots, \bx_{n(d)}^{(d)}]^\sT \in \R^{n(d) \times p(d)}$, e.g., the spectrum of $n(d)^{-1} (\bX^{(d)})^\sT \bX^{(d)}$,
as $d \to \infty$. The sequence of covariance matrices $\tbSigma^{(d)}$ is typically assumed to satisfy $\| \tbSigma^{(d)} \|_\op \leq C$ with empirical spectral distribution that converges to a density with mass bounded away from $0$. In contrast, \cite{cheng2022dimension} fixes the distribution of the feature vector $\bx \in \R^\infty$---taken infinite-dimensional--- and vary $n \in \naturals$. In particular, the covariance matrix $\bSigma$ is fixed with $\Tr(\bSigma ) < \infty$ and does not have bounded conditioning number. Intuitively, as $n$ increases, the empirical spectrum of $\bX^\sT \bX$ has spikes along the eigenspaces with eigenvalues $\xi_k \gg 1/n$ and a bulk that comes from the contribution of the eigenspaces with $\xi_k \asymp 1/n$. We extends the results of \cite{cheng2022dimension} in two directions. First, we provide an alternative proof of the deterministic equivalent for the linear functional considered in \cite[Theorem 5]{cheng2022dimension}. This proof is significantly simpler and allows for general p.s.d.~matrix (see Remark \ref{rmk:comparison_Cheng_theorems}). Second, we prove dimension-free deterministic equivalents for higher-order functionals of the resolvent which are necessary to study KRR with arbitrary square integrable target functions.

% GCV estimator.\cite{jacot2020kernel}
% \cite{hastie2022surprises}
% \cite{wei2022more}
% \cite{patil2021uniform}
% \cite{xu2019consistent}

From a technical viewpoint, our paper belongs to the growing body of work that have derived deterministic equivalents to study problems in statistics and machine learning, including spectral clustering \cite{liao2021sparse}, random Fourier features \cite{liao2020random}, multi-layer random features \cite{schroder2023deterministic}, and the SGD dynamics on GLMs \cite{collins2023hitting}. We refer the reader to \cite{couillet2011random,couillet2022random} for background and references on deterministic equivalents.

% \subsection{Notations}

% For a positive integer $n$, let $[n]$ be the set $\{1,2,\ldots , n \}$. We will consider vectors $\bx \in \R^p$, where if $p< \infty$

% We will denote $\| \bx \|_2$ both euclidean norm in finite dimension or $\ell_2$ norm in $\ell_2$ space, (will be clear from  context). We denote $L^2 $, $\| \|_{L^2}$ etc... $\| \|_\bA$, operator norm, frobenius norm...

% Constants $C_{a_1, \ldots,a_k}$

% $\ddiag(\bA)$.

% $\widetilde{O}$

\section{Test error of kernel ridge regression}
\label{sec_main:test_error}

In this section, we present our main results on the test error of kernel ridge regression (KRR). We start in Section \ref{sec:setting_definitions} by introducing our setting and some definitions. We state the general assumptions under which our results hold in Section \ref{sec:assumptions_test_error} and our master theorem (Theorem \ref{thm:abstract_Test_error}) in Section \ref{sec:main_theorem_test_error}. 
%We then illustrate our results by applying our general theorem to the special case of concentrated features in Section \ref{sec:concentrated_features_test_error}.  
Finally, while Section \ref{sec:assumptions_test_error} states general assumptions on the feature map $\bx = \phi (\bu)$, Section \ref{sec:kernel_eigendecomposition} connects these conditions to properties of the kernel operator.

\subsection{Setting and definitions}\label{sec:setting_definitions}

We are given $n$ i.i.d.~samples $(\bx_i,y_i)_{i \in [n]}$ with feature vectors $\bx_i \in \R^p$ and responses $y_i = f_* (\bx_i) +\eps_i$. The target function is linear in the feature space $f_* (\bx) = \< \btheta_* , \bx\>$ and we assume the label noise $\eps_i$ to be independent with $\E[\eps_i] = 0$ and $\E[ \eps_i^2] = \sigma_\eps^2$. Here we denote $p$ the dimension of the feature vector and consider both the classical linear model setting with finite dimensional features $p<\infty$ and the RKHS case with infinite dimensional features $p = \infty$. 
 
We denote $\bSigma = \E[ \bx \bx^\sT]$ the covariance matrix of the features\footnote{Note that we will only assume that $\E[\bx_{>\evn}] = 0$, and we allow for $\E[\bx_{\leq \evn}] \neq 0$. In that case, $\bSigma$ is most commonly referred to as the `raw' covariance matrix or second moment matrix.}.   Without loss of generality, we will set $\| \bSigma \|_\op = 1$ and take $\bSigma$ to be diagonal with
\[
\bSigma = \diag (\xi_1, \xi_2, \xi_3, \ldots),
\]
where $1 = \xi_1 \geq \xi_2 \geq \xi_3 \geq \cdots $ are the positive eigenvalues in nonincreasing order. For infinite-dimensional features $p = \infty$, we assume the covariance to be trace class $\Tr(\bSigma) <\infty$. In that case, $\bx$ is a random element in the Hilbert space $\ell_2 := \{ \bx = (x_1 , x_2 , x_3,\ldots): \sum_{j = 1}^\infty x_j^2  < \infty\} $ with inner product $\< \bu, \bv \> := \< \bu, \bv \>_{\ell_2} = \sum_{j =1}^\infty u_j v_j$, and $\bSigma$ is understood to be a trace class self-adjoint operator. We introduce $\bz_i = \bSigma^{-1/2} \bx_i$ the whitened features and $\bbeta_* = \bSigma^{1/2} \btheta_*$ the whitened coefficients. Without loss of generality, we assume $\E [ f_*(\bx)^2] = \< \btheta_*, \bSigma \btheta_*\> <\infty$ and we can write
\[
f_* (\bx) = \< \bx , \btheta_*\> = \< \bz , \bbeta_*\>, \qquad \quad \| f_*\|_{L^2}^2 = \E[f_*(\bx)^2] = \| \bbeta_* \|_2^2 < \infty.
\]

Our approach consists in analyzing separately the top eigenspaces ---which are associated to `low-frequency', well-concentrated features--- from the rest of the eigenspaces ---associated to `high-frequency', heavy-tailed features. More precisely, for every $\evn \in \naturals \cup \{\infty\}$, $\evn \leq p$, we split the feature vector $\bx = (\bx_{\leq \evn} , \bx_{>\evn})$ into: 
\begin{itemize}
    \item A low-degree part $\bx_{\leq \evn}$ corresponding to the projection onto the top eigenspaces associated to the $\evn$ largest eigenvalues. We denote the covariance matrix of the low-degree part by
    \[
    \bSigma_{\leq \evn} = \E [ \bx_{\leq \evn} \bx_{\leq \evn}^\sT ] = \diag ( \xi_1 , \xi_2 , \ldots , \xi_m).
    \]

    \item A high-degree part $\bx_{>\evn}$ corresponding to the projection onto the bottom $p - \evn$ eigenspaces with smallest eigenvalues. We denote the covariance matrix of the high-degree part by
    \[
    \bSigma_{>\evn} = \E [ \bx_{>\evn} \bx_{>\evn}^\sT ] = \diag (\xi_{\evn+1} , \xi_{\evn+2}, \ldots ) .
    \]
\end{itemize}

In the case of $\evn = p$, we simply take $\bx = \bx_{\leq \evn}$. For simplicity of presentation, we will always assume that $\E[ \bx_{>\evn} ] = 0$ (e.g., the constant function is in the span of the top $\evn$ eigenfunctions). We write the feature matrix $\bX = [\bx_1 , \ldots, \bx_n]^\sT \in \R^{n \times p}$ in block form $\bX= [ \bX_{\leq \evn } , \bX_{>\evn} ]$ with $\bX_{\leq \evn} \in \R^{n\times \evn}$ and $\bX_{>\evn} \in \R^{n \times (p - \evn)}$. Similarly, we decompose the target function into a low-degree and a high-degree part 
\[
f_*(\bx) =  f_{*,\leq \evn}(\bx) + f_{*,>\evn} (\bx) ,
\]
where
 \[
 f_{*,\leq \evn}(\bx) = \< \btheta_{*,\leq \evn} , \bx_{\leq \evn}\> = \< \bbeta_{*,\leq \evn} , \bz_{\leq \evn} \>  , \qquad \quad f_{*,> \evn}(\bx) = \< \btheta_{*,> \evn} , \bx_{> \evn}\> = \< \bbeta_{*,> \evn} , \bz_{> \evn} \>  .
 \]

Note that the deterministic equivalents depend on an \emph{effective regularization} $\lambda_*$ which corresponds to the ridge regularization parameter in an associated `equivalent sequence model' \cite{cheng2022dimension}.

\begin{definition}[Effective regularization]\label{def:effective_regularization}
    For an integer $n$, covariance $\bSigma$, and regularization $\lambda \geq 0$, we define the \emph{effective regularization} $\lambda_*$ associated to $(n,\bSigma,\lambda)$ to be the unique non-negative solution to the equation
    \begin{equation}\label{eq:def_lambda_star}
        n - \frac{\lambda}{\lambda_*} = \Tr \big( \bSigma ( \bSigma + \lambda_* )^{-1} \big)  .
    \end{equation}
\end{definition}

%The existence and uniqueness of $\lambda_*$ simply follows by observing that the left-hand side and right-hand side of Eq.~\eqref{eq:def_lambda_star} are monotone nondecreasing and decreasing respectively. 
Equation \eqref{eq:def_lambda_star} corresponds to the general Marchenko-Pastur equation \cite{pastur1967distribution}. In particular, we show that under the setting of Theorem \ref{thm:abstract_Test_error}, the inverse effective regularization $1/\lambda_*$ approximates the Stieltjes transform of the empirical kernel matrix (see Theorem \ref{thm:stieltjes_general} in Appendix \ref{app_test_error:Stieltjes_general} for a formal statement):
\[
 \frac{1}{n }\Tr\left( ( \bX \bX^\sT + \lambda)^{-1} \right) = (1 + o(1)) \cdot \frac{1}{n\lambda_*}.
\]

Our approximation guarantees will depend on the covariance $\bSigma$ through two measures of ranks: the \textit{regularized tail rank} of $\bSigma_{>\evn}$ (high-degree features) and the \textit{effective rank} of $\bSigma_{\leq \evn}$ (low-degree features) defined below.

\begin{definition}[Ranks]\label{def:effective_tail_rank}
    For a covariance matrix $\bSigma = \diag (\xi_1,\xi_2,\ldots) \in \R^{p \times p}$ with eigenvalues in nonincreasing order $\xi_1 \geq \xi_2 \geq \xi_3 \geq \cdots$, we define two notions of rank:

    \begin{enumerate}
        \item[(a)] \emph{(Regularized tail rank.)} For any integer $\evn \leq p$ and regularization parameter $\lambda \geq 0$, the regularized tail rank $r_{\lambda} (\evn)$ is defined by
        \begin{equation}
            r_{\lambda} (\evn) = \frac{\lambda + \sum_{j =\evn+1}^p \xi_j}{\xi_{\evn+1}}.
        \end{equation}
        We use the convention $r_\lambda (\evn) = \infty$ for $\evn = p$.

        \item[(b)] \emph{(Effective rank.)} For any integers $n$ and $\evn \leq p$, the effective rank $r_{\seff,\evn} (n)$ of $(n,\bSigma_{\leq \evn})$ is defined as the smallest scalar such that $r_{\seff,\evn} (n)\geq n$ and
        \[
        r_{\seff,\evn} (n) \geq  \frac{\sum_{j =k+1}^\evn \xi_j}{\xi_{k+1}},\qquad \quad \text{for all $\;\;\;0\leq k \leq \min(n,\evn) - 1$.}
        \]
    \end{enumerate}
\end{definition}

The effective rank $r_{\seff, \evn} (n)$ at $\evn = p$ is the same as the one that appeared in \cite{cheng2022dimension} to study deterministic equivalents in the dimension free regime. We can think of $r_{\seff,\evn} (n)$ as corresponding to a global version of the notion of \textit{intrinsic dimension} \cite[Chapter 7]{tropp2015introduction}. More precisely, denote $\bSigma_{\leq\evn, >k} = \diag ( \xi_{k+1}, \xi_{k+2}, \ldots , \xi_\evn)$ the covariance matrix $\bSigma_{\leq \evn}$ projected orthogonally to the top $k$ eigenspaces. The intrinsic dimension of $\bSigma_{\leq\evn, >k}$ is defined as
\[
\text{intDim} (\bSigma_{\leq\evn, >k}) := \frac{\Tr( \bSigma_{\leq\evn, >k})}{\| \bSigma_{\leq\evn, >k}\|_\op} = \frac{\sum_{j =k+1}^\evn \xi_j}{\xi_{k+1}},
\]
and captures the number of `relevant' dimensions of $\bSigma_{\leq\evn, > k,}$, i.e., eigenspaces that have significant spectral content. Thus, $r_{\seff,\evn} (n)$ upper bounds the 
 intrinsic dimension of $\bSigma_{\leq \evn}$ at all scales up to the $n$-th eigenvalue, i.e., for all $\bSigma_{\leq\evn, >k}$ with $k =0, \ldots, n-1$.

 The (regularized) tail rank was introduced in the study of ridge regression in the (near-)interpolating regime \cite{bartlett2020benign,tsigler2023benign,koehler2021uniform}. This rank appears naturally in our analysis in the following sense. For convenience, denote from now on
\[
\lambda_{>\evn} = \lambda + \Tr( \bSigma_{>\evn}) = \lambda + \sum_{j =\evn+1}^p \xi_j.
\]
Let $\lambda_{*,\evn}$ be the effective regularization associated to $(n,\bSigma_{\leq \evn},\lambda_{>\evn})$ and define
\[
\sR_{n,\leq \evn} (\bbeta_*, \lambda_{>\evn}) := \frac{\lambda_{*,\evn}^2 \< \bbeta_{\leq \evn}, ( \bSigma_{\leq \evn} + \lambda_{*,\evn} )^{-2} \bbeta_{\leq \evn} \> + \| \bbeta_{>\evn} \|_2^2 + \sigma_\eps^2 }{1 - \frac{1}{n} \Tr( \bSigma_{\leq \evn}^2 ( \bSigma_{\leq \evn} + \lambda_{*,\evn} )^{-2} ) }.
\]
This simply corresponds to the test error of a truncated model with $n$ features $\bx_{\leq \evn}$, regularization $\lambda_{>\evn}$, target function $\< \bbeta_{\leq \evn}, \bz_{\leq \evn}\>$, and independent label noise with variance $\| f_{*,>\evn} \|_{L^2}^2 + \sigma_\eps^2$. This truncated model approximates the original model with 
\[
\left| \sR_{n} (\bbeta_*, \lambda)  - \sR_{n,\leq \evn} (\bbeta_*, \lambda_{>\evn})  \right|  \lesssim \frac{n}{r_{\lambda} (\evn)} \cdot  \sR_{n} (\bbeta_*, \lambda),
\]
and this bound is tight.

\subsection{Assumptions}
\label{sec:assumptions_test_error}

For a given $n$, we will assume that there exists an integer $\evn := \evn (n)$ such that the low-degree features $\bx_{\leq \evn}$, the high-degree features $\bx_{>\evn}$, and the high degree part of the target function $f_{*,>\evn}$ satisfy the following assumptions.

\begin{assumption}[Concentration at $n \in \naturals$]\label{ass:main_assumptions}
There exist $\sfc_x,\sfC_x,\beta>0$ and $\evn \in \naturals \cup \{ \infty\}$, $\evn \leq p$, such that the regularized tail rank at $\evn$ satisfies
\[
r_{\lambda} (\evn) \geq 2n,
\]
and the following hold.
\begin{itemize}
    \item[\emph{(a)}] \emph{(Low-degree features.)} There exists $\varphi_1 (\evn)>0$ such that for any deterministic vector $\bv \in \R^{\evn}$ with $\| \bSigma_{\leq \evn}^{1/2} \bv \|_2 < \infty$ and p.s.d.~matrix $\bA \in \R^{\evn \times \evn}$ with $\Tr(\bSigma_{\leq \evn} \bA) < \infty$, we have
    \begin{align}
        \P \left( \big\vert \< \bv, \bx_{\leq \evn} \>  \big\vert \geq t \cdot   \bv^\sT \bSigma_{\leq \evn} \bv \right) \leq&~ \sfC_x \exp \left\{ - \sfc_x t^{2/\beta } \right\}, \label{eq:ass_quad_concentration_2} \tag{a1}\\
    \P \left( \big\vert \bx_{\leq \evn}^\sT \bA \bx_{\leq \evn} - \Tr( \bSigma_{\leq \evn} \bA) \big\vert \geq t \cdot \varphi_{1} (\evn) \cdot \big\| \bSigma_{\leq \evn}^{1/2} \bA \bSigma_{\leq \evn}^{1/2} \big\|_F \right) \leq&~ \sfC_x \exp \left\{ - \sfc_x t^{1/\beta } \right\}  . \label{eq:ass_quad_concentration_1} \tag{a2} 
    \end{align}

    \item[\emph{(b)}] \emph{(High-degree features.)} There exist $p_{2,n}(\evn) \in (0,1)$ and $\varphi_{2,n}(\evn)  \geq 1$ such that with probability at least $1 - p_{2,n} (\evn)$, we have
    \begin{equation}
    \| \bX_{>\evn} \bX_{>\evn}^\sT - \Tr (\bSigma_{>\evn} ) \cdot  \id_n \|_\op \leq \varphi_{2,n}(\evn)  \sqrt{\frac{n}{r_{\lambda} (\evn)}} \cdot \left\{ \lambda + \Tr( \bSigma_{>\evn}) \right\}. \tag{b1}
    \end{equation}

    \item[\emph{(c)}] \emph{(Target function.)} The high-degree part of the target function satisfies the tail bound
    \begin{equation}\label{eq:equation_c1_assumption}
\P \left( | f_{*,>\evn} (\bx ) | \geq t \cdot \| f_{*,>\evn} \|_{L^2} \right) \leq \sfC_x \exp \left\{ - \sfc_x t^{2/\beta} \right\}. \tag{c1}
\end{equation}
\end{itemize}
\end{assumption}

 Assumption \ref{ass:main_assumptions}.(b) implies that we can approximate the Gram matrix $\bX_{>\evn} \bX_{>\evn}^\sT$ of the high-frequency features by $\Tr(\bSigma_{>\evn} ) \cdot \id_n$ with probability $1 - p_{2,n} (\evn)$. In particular, we can replace on this event the resolvent matrix in the expression of the test error by
\[
( \bX \bX^\sT + \lambda )^{-1} \approx ( \bX_{\leq \evn} \bX_{\leq \evn}^\sT + \lambda_{>\evn} )^{-1},
\]
which does not depend on the high-degree features anymore. 
We can then use Assumption \ref{ass:main_assumptions}.(a) to derive deterministic equivalents for functionals of the low-degree feature matrix $\bX_{\leq \evn}$ (see Section \ref{sec_outline:det_equiv}). Finally, Assumption \ref{ass:main_assumptions}.(c) allows us to show that the high-degree part of the target function effectively behaves as independent additive label noise with mean $0$ and variance $\| f_{*,>\evn} \|_{L^2}^2$ (see Lemma \ref{lem:high-degree_part} in Appendix \ref{app_test_error:tech_high-degree_target}). We provide further discussion on these assumptions in Section \ref{sec:kernel_eigendecomposition}.

To show the concentration of the test error over the randomness of the label noise, we will further consider the following assumption.

\begin{assumption}[Label noise]\label{ass:noise_subGaussian}
    The label noises $\{\eps_i\}_{i\in[n]}$ are independent, mean-zero, and $\tau_\eps^2$-sub-Gaussian with variance denoted $\sigma_\eps^2 = \E[ \eps_i^2]$.
\end{assumption}

\subsection{A master theorem}
\label{sec:main_theorem_test_error}

Throughout the paper, our bounds will be explicit in terms of the model parameters $\{n, \bSigma, \lambda,\bbeta_*,\sigma_\eps^2\}$, including the tail ranks $\{r_\lambda (\evn),r_{\seff,\evn}\}$, as well as the constants $\{\varphi_1 (\evn), \varphi_{2,n} (\evn),p_{2,n}(\evn)\}$ appearing in Assumption \ref{ass:main_assumptions}. For the rest, we will denote $C_{a_1,\dots,a_k}$ constants that only depend on the values of $\{a_i\}_{i\in[k]}$. We use $a_i = `x$' to denote the dependency on $\sfc_x,\sfC_x,\beta$ from Assumption \ref{ass:main_assumptions}, and $a_i = `\eps$' to denote the dependency on $\tau^2_\eps$ from Assumption \ref{ass:noise_subGaussian}.

Our relative approximation bound will depend on the effective rank $r_{\seff,\evn} (n)$ through
\begin{equation}\label{eq:reduced_nu_lambda}
\nu_{\lambda,\evn} (n) = 1 + \frac{\xi_{\lfloor \eta n \rfloor,\evn}\cdot  r_{\seff,\evn} (n) \sqrt{\log (r_{\seff,\evn} (n))}  }{\lambda_{>\evn}},
\end{equation}
where $\eta = \eta_x \in (0,1/2)$ is a constant that will only depend on $\sfc_x,\sfC_x,\beta$, and we define $\xi_{\lfloor \eta n \rfloor,\evn} = \xi_{\lfloor \eta n \rfloor}$ if $\lfloor \eta n \rfloor \leq \evn$ and $0$ otherwise. When $ \evn = p$, we will simply denote $\nu_{\lambda} (n) := \nu_{\lambda,p} (n)$.

We are now in position to state our master theorem on the test error of kernel ridge regression.

\begin{theorem}[Deterministic equivalent for the KRR test error]\label{thm:abstract_Test_error}
    Consider $D,K>0$, integer $n$, regularization parameter $\lambda \geq 0$, and target function $f_* \in L^2 (\cU)$ with parameters $\| \bbeta_*\|_2 = \| f_* \|_{L^2} < \infty$. Assume that the features $\{\bx_i\}_{i\in[n]}$ and $f_*$ satisfy Assumption \ref{ass:main_assumptions} with some $\evn := \evn (n) \in \naturals \cup \{\infty\}$, and the $\{\eps_i\}_{i\in [n]}$ satisfy Assumption \ref{ass:noise_subGaussian}. There exist constants $\eta := \eta_x \in(0,1/2)$, $C_{D,K} >0$, and $C_{x,\eps,D,K}>0$ such that for all $n \geq C_{D,K}$ and $\lambda_{>\evn} >0$, if it holds that
\begin{equation}\label{eq:condition_test_abstract}
\lambda_{>\evn} \cdot \nu_{\lambda,\evn} (n) \geq n^{-K},\qquad
\varphi_{2,n} (\evn)  \sqrt{\frac{n}{r_\lambda (\evn)}} \leq \frac{1}{2},
\qquad
 \varphi_1(\evn) \nu_{\lambda,\evn} (n)^{8} \log^{3\beta + \frac{1}{2}} (n) \leq K \sqrt{n},
\end{equation}
then with probability at least $1 - n^{-D} - p_{2,n} (\evn)$, we have
\begin{equation}\label{eq:abstract_test_error}
\left|\cR_{\test} ( \bbeta_*;\bX,\beps, \lambda) -    \sR_{n} (\bbeta_*, \lambda) \right| \leq C_{x,\eps,D,K} \cdot \cE_{\sR,n} (\evn) \cdot  \sR_{n} (\bbeta_*, \lambda),
\end{equation}
where the relative approximation rate is given by
\begin{equation}\label{eq:asbstract_relative_error}
    \cE_{\sR,n}  (\evn) :=  \frac{\varphi_1(\evn) \nu_{\lambda,{\evn}}(n)^{6} \log^{3\beta +1/2}(n)}{\sqrt{n}}    +  \nu_{\lambda,\evn} (n)  \varphi_{2,n} (\evn)  \sqrt{\frac{n}{r_\lambda (\evn)}}  .
\end{equation}
\end{theorem}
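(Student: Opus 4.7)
The first step is to decompose the test error over the label noise as in Equation \eqref{eq:bias_variance_intro}, so that $\cR_\test(\bbeta_*;\bX,\beps,\lambda) = \cB(\bbeta_*;\bX,\lambda) + \cV(\bX,\lambda) + \sigma_\eps^2 + \Delta(\bX,\beps,\lambda)$, where $\Delta$ is a mean-zero fluctuation in $\beps$. I will handle $\Delta$ separately by conditioning on $\bX$ and applying the Hanson–Wright inequality under Assumption \ref{ass:noise_subGaussian}. For $\cB + \cV$, the plan is a two-stage approximation: \emph{Stage 1} replaces the full resolvent by a resolvent that only involves the low-degree feature matrix $\bX_{\leq\evn}$ (using Assumption \ref{ass:main_assumptions}(b) and (c)); \emph{Stage 2} applies dimension-free deterministic equivalents to the low-degree problem (using Assumption \ref{ass:main_assumptions}(a)); a final step relates the truncated deterministic equivalent $\sR_{n,\leq\evn}(\bbeta_*,\lambda_{>\evn})$ back to $\sR_n(\bbeta_*,\lambda)$.

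\textbf{Stage 1: reduction to the low-degree model.} I would write
\[
\bX\bX^\sT + \lambda\id_n = \bX_{\leq\evn}\bX_{\leq\evn}^\sT + \lambda_{>\evn}\id_n + \bE_{>\evn},\qquad \bE_{>\evn}:=\bX_{>\evn}\bX_{>\evn}^\sT - \Tr(\bSigma_{>\evn})\id_n,
\]
and use Assumption \ref{ass:main_assumptions}(b) to bound $\|\bE_{>\evn}\|_\op \leq \varphi_{2,n}(\evn)\sqrt{n/r_\lambda(\evn)}\cdot \lambda_{>\evn}$, which is at most $\lambda_{>\evn}/2$ under the second condition of \eqref{eq:condition_test_abstract}. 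A standard resolvent perturbation identity then allows replacing $(\bX\bX^\sT+\lambda\id_n)^{-1}$ by $\wbM := (\bX_{\leq\evn}\bX_{\leq\evn}^\sT + \lambda_{>\evn}\id_n)^{-1}$ inside the bias and variance expressions, with a multiplicative error of order $\varphi_{2,n}(\evn)\sqrt{n/r_\lambda(\evn)}$, which accounts for the second term in $\cE_{\sR,n}(\evn)$. Since the training labels can be rewritten as $\by = \bX_{\leq\evn}\btheta_{*,\leq\evn} + \bw$ with $\bw_i := f_{*,>\evn}(\bx_i) + \eps_i$, I then invoke Assumption \ref{ass:main_assumptions}(c) (as in Lemma \ref{lem:high-degree_part}) to show that $\bw$ behaves as additive noise with effective variance $\|f_{*,>\evn}\|_{L^2}^2 + \sigma_\eps^2$. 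At this point the task is reduced to analyzing the test error of the low-degree problem with parameters $(n,\bSigma_{\leq\evn},\lambda_{>\evn},\btheta_{*,\leq\evn},\|f_{*,>\evn}\|_{L^2}^2+\sigma_\eps^2)$, whose deterministic equivalent is exactly $\sR_{n,\leq\evn}(\bbeta_*,\lambda_{>\evn})$.

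\textbf{Stage 2: deterministic equivalents, then close the loop.} For the variance I would use the linear trace identity $\Tr(\bSigma_{\leq\evn}\bX_{\leq\evn}^\sT(\bX_{\leq\evn}\bX_{\leq\evn}^\sT+\lambda_{>\evn})^{-2}\bX_{\leq\evn})$ and invoke the linear-functional deterministic equivalent from Section \ref{sec_outline:det_equiv}, for which Assumption \ref{ass:main_assumptions}(a2) (a Hanson–Wright type bound) is tailor-made; this yields a multiplicative error $\varphi_1(\evn)\nu_{\lambda,\evn}(n)^{O(1)}\log^{O(\beta)}(n)/\sqrt{n}$. For the bias, I would apply the quadratic-functional deterministic equivalent to
\[
\bbeta_{*,\leq\evn}^\sT\bSigma_{\leq\evn}^{-1/2}(\bX_{\leq\evn}^\sT\bX_{\leq\evn}+\lambda_{>\evn})^{-1}\bSigma_{\leq\evn}(\bX_{\leq\evn}^\sT\bX_{\leq\evn}+\lambda_{>\evn})^{-1}\bSigma_{\leq\evn}^{-1/2}\bbeta_{*,\leq\evn},
\]
uniformly in the deterministic vector $\bbeta_{*,\leq\evn}$, again with the same relative rate. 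Combining Stage 1 and Stage 2, the overall multiplicative error is controlled by $\cE_{\sR,n}(\evn)$ against $\sR_{n,\leq\evn}$. Finally, a direct algebraic comparison using the two Marchenko–Pastur fixed-point equations (one defining $\lambda_*$ for $(n,\bSigma,\lambda)$, the other defining $\lambda_{*,\evn}$ for $(n,\bSigma_{\leq\evn},\lambda_{>\evn})$) gives $|\sR_n - \sR_{n,\leq\evn}| \lesssim (n/r_\lambda(\evn))\,\sR_n$, which is again absorbed into $\cE_{\sR,n}(\evn)$. The noise fluctuation $\Delta$ is handled by Hanson–Wright on $\beps$ conditionally on $\bX$, using that the relevant quadratic form has Frobenius norm bounded in terms of $\sV_n$ and hence by $\sR_n$.

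\textbf{Main obstacle.} The hard part is the quadratic-functional deterministic equivalent in Stage 2, which must be \emph{uniform} in $\bbeta_{*,\leq\evn}$ and give a \emph{multiplicative} error, for a target assumed only to satisfy $\|\bbeta_*\|_2<\infty$. A naive approach -- writing the bias as $-\partial_\lambda$ of a linear trace of the resolvent, as in \cite{cheng2022dimension} -- only delivers additive error bounds and moreover essentially forces $\|\bSigma^{-1}\bbeta_*\|_2 < \infty$. Instead, I plan to derive the equivalent directly via leave-one-out / Sherman–Morrison on $(\bX_{\leq\evn}^\sT\bX_{\leq\evn}+\lambda_{>\evn})^{-1}$, reducing the quadratic form to scalar recursions; Assumption \ref{ass:main_assumptions}(a2) then controls each error term at the cost of an extra $\varphi_1(\evn)/\sqrt{n}$ factor, while Assumption \ref{ass:main_assumptions}(a1) (hypercontractivity on the top subspace) lets me iterate the argument at the $O(\log n)$ dyadic scales of $\bSigma_{\leq\evn}$ implicit in $\nu_{\lambda,\evn}(n)$. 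The accumulation of these iterations is precisely what produces the factor $\nu_{\lambda,\evn}(n)^{6}$ and the condition $\varphi_1(\evn)\nu_{\lambda,\evn}(n)^8\log^{3\beta+1/2}(n)\lesssim \sqrt{n}$ in \eqref{eq:condition_test_abstract}.
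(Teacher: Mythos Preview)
Your proposal is correct and follows essentially the same architecture as the paper: resolvent replacement $\bG \to \bG_0$ via Assumption~\ref{ass:main_assumptions}(b), absorption of $f_{*,>\evn}$ into effective noise via Assumption~\ref{ass:main_assumptions}(c) (the paper's Lemma~\ref{lem:high-degree_part}), deterministic equivalents for the resulting low-degree functionals $\Phi_3,\Phi_4$ via leave-one-out and a martingale argument, Hanson--Wright for the noise cross-term, and finally the algebraic comparison $\sR_{n,\leq\evn}\approx\sR_n$ via the two fixed points (the paper's Lemma~\ref{lem:reduced_det_equiv}).

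One small correction to your narrative: the factor $\nu_{\lambda,\evn}(n)^6$ does \emph{not} come from iterating over $O(\log n)$ dyadic scales of $\bSigma_{\leq\evn}$. In the paper, $\nu_{\lambda,\evn}(n)$ enters once, through the high-probability operator-norm bound $\|\bM_0\|_\op \lesssim \nu_{\lambda,\evn}(n)/n$ (Lemma~\ref{lem:tech_bounds_norm_M}, proved via matrix Bernstein with intrinsic dimension on the tail block plus a smallest-singular-value bound on the top block). The power~$6$ then arises from the number of times this bound is invoked when expanding the leave-one-out differences for the quadratic functional $\Phi_3$ (Proposition~\ref{prop:TrAMM_LOO}, Claims~\ref{claim:B1}--\ref{claim:TrM2-}); each Sherman--Morrison unfolding produces extra factors of $\bM_i$ whose norms are controlled by $\nu_{\lambda,\evn}(n)/n$. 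There is no multiscale recursion. This distinction matters if you try to execute the argument: a dyadic-scale induction would be awkward to close, whereas tracking powers of $\|\bM_0\|_\op$ through a fixed-depth expansion is straightforward bookkeeping.
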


 Section \ref{sec:outline_proofs} outlines the proof strategy of this theorem  and presents a self-contained proof in the special case of concentrated features. The full proof of Theorem \ref{thm:abstract_Test_error} can be found in Appendix \ref{app:main_proofs}.

Before illustrating this theorem in specific examples, it is useful to develop some intuition about the general expressions in Eqs.~\eqref{eq:condition_test_abstract} and \eqref{eq:asbstract_relative_error}. For most cases of interest, including regularly varying spectrum (see \cite{cheng2022dimension}), the effective rank satisfies $r_{\seff} (n) \lesssim n^C$ and $\xi_{\lfloor \eta n \rfloor,\evn} \cdot r_{\seff} (n) \lesssim \log^C (n)$ for some constant $C>0$. Thus for concreteness, we assume $\nu_{\lambda,\evn} (n) \lesssim \polylog (n) / \min (1,\lambda_{>\evn})$ and $\varphi_{2,n} (\evn) \lesssim \polylog(n)$ (see remark in Section \ref{sec:summary}) in the discussion below. 

We expect the technical conditions~\eqref{eq:condition_test_abstract} to be mild under Assumption \ref{ass:main_assumptions}: (i) $\lambda_{>\evn} \cdot \nu_{\lambda,\evn} (n) \geq n^{-K}$ is satisfied as soon as $\lambda$ or $\Tr(\bSigma_{>\evn}) \geq n^{-K}$; (ii) $\varphi_{2,n} (\evn)\sqrt{n/r_\lambda (\evn)} \leq 1/2$ implies that the high-frequency part is indeed concentrated around identity with $\lambda_{>\evn}/2 \leq \lambda_{\min} (\bX_{>\evn} \bX_{>\evn}^\sT ) \leq \lambda_{\max} (\bX_{>\evn} \bX_{>\evn}^\sT ) \leq 3 \lambda_{>\evn}/2$; and (iii) we are in a regime where $\varphi_1(\evn) \cdot \polylog (n) \lesssim \lambda_{>\evn}^8 \sqrt{n}$.

Hence, the rate of approximation \eqref{eq:asbstract_relative_error} scales as
\begin{equation}\label{eq:intuitive_rate}
\cE_{\sR,n}  (\evn) \lesssim \frac{\polylog (n)}{\min (\lambda_{>\evn}, 1)^6} \left\{ \frac{\varphi_1(\evn)}{ \sqrt{n}} + \sqrt{\frac{n\xi_{\evn+1}}{\lambda_{>\evn} }} \right\}.
\end{equation}
Therefore, we expect the deterministic estimate $\sR_n (\bbeta_*,\lambda)$ to be a good approximation of the test error as long as we can choose $\evn (n ) \in \naturals \cup \{\infty\}$ such that Assumption \ref{ass:main_assumptions} is verified and $\lambda_{>\evn} = \Omega (1)$, $\varphi_1(\evn) \ll \sqrt{n}$, and $n \xi_{\evn+1} \ll 1$. We will show such a setting in Section \ref{sec:inner-product_main} which studies the case of inner-product kernels on the sphere. Note that we did not optimize the proof over the dependency on $\lambda_{>\evn}$ and we do not expect the scaling $\lambda_{>\evn}^{-6}$ to be tight in Eq.~\eqref{eq:intuitive_rate}. 

Let us further comment on some important features of the approximation guarantee established in Theorem \ref{thm:abstract_Test_error}:
\begin{itemize}
    \item The deterministic equivalent $\sR_{n} (\bbeta_*, \lambda)$ and approximation guarantee are fully \textit{non-asymptotic}: they depend explicitly on finite $n$, fixed feature distribution $\bx$ with covariance $\bSigma$, and fixed target function $f_*$. In particular,  Theorem \ref{thm:abstract_Test_error} only depends on the feature dimension $p$ through $\varphi_1 (\evn)$, $\varphi_{2,n} (\evn)$, and the tail rank $r_\lambda (\evn)$, and applies to the important case of infinite-dimensional features.
    
    \item The bound in Eq.~\eqref{eq:abstract_test_error} is \textit{multiplicative} and applies to settings where the test error is small, e.g., the test error follows a power law $\cR_{\test} \approx n^{-\gamma}$ \cite{wei2022more,cui2022error}. Furthermore the relative approximation error \eqref{eq:asbstract_relative_error} does not depend on the target function $f_*$ as long as it satisfies Assumption \ref{ass:main_assumptions}.(c). In particular, compared to previous work, we do not impose $f_*$ to be in the RKHS, random, or worst-case over a class of functions.

    \item The approximation guarantee holds for the \textit{interpolating solution}---which has attracted a lot of attention in the past few years \cite{belkin2019reconciling,bartlett2020benign,hastie2022surprises,cheng2022dimension}---as long as the conditions of Theorem \ref{thm:abstract_Test_error} are satisfied for $\lambda = 0$. 
    %In particular, we expect $\sR_n (\bbeta_*,\lambda)$ to approximate the test error for any $\lambda \geq 0$ as soon as Assumption \ref{ass:main_assumptions} is satisfied for $\lambda =0$, with $\Tr(\bSigma_{>\evn}) = \Omega (1)$ and $n \xi_{\evn+1} \ll \Tr(\bSigma_{>\evn})$.
\end{itemize}

\subsection{Kernel operator and sufficient conditions}\label{sec:kernel_eigendecomposition}

In this section, we connect Assumption \ref{ass:main_assumptions} and Theorem \ref{thm:abstract_Test_error} to properties of the kernel operator. We start with some background on kernel methods.

Let $(\cU , \P_\bu)$ be a Polish probability space and denote $L^2 (\cU) := L^2 ( \cU, \P_\bu)$ the space of square-integrable functions on $(\cU , \P_{\bu})$. Note that since $(\cU,\P_{\bu})$ is a Polish probability space, $L^2(\cU)$ is separable. We will denote the scalar product and norm on $L^2(\cU)$ by $\< \cdot , \cdot \>_{L^2}$ and $\| \cdot \|_{L^2}$:
\[
\< f , g\>_{L^2} = \int_{\cU} f(\bu) g(\bu) \P_\bu (\de \bu).
\]
 We further introduce $\| f \|_{L^p} = \E_{\bu \sim \P_\bu} [ |f (\bu) |^p ]^{1/p}$ the $L^p$-norm of $f$ for $p\geq 1$.  

We consider $K : \cU \times \cU \to \R$ a positive semi-definite kernel, that is a symmetric function such that for all finite sets of points $\bu_1, \ldots , \bu_k \in \cU$ and coefficients $c_1, \ldots ,c_k \in \R$, $k \in \naturals$,
\[
\sum_{i,j \in [k]} c_i c_j K (\bu_i,\bu_j) \geq 0.
\]
In this paper, we assume that the function $\bu \mapsto K(\bu,\bu)$ is integrable with respect to $\P_\bu$, i.e., $\int_{\cU} K( \bu,\bu) \P_{\bu} (\de \bu) < \infty$. Note that by Cauchy-Schwarz inequality, this implies that the kernel is square integrable $K \in L^2 ( \cU \times \cU)$.

The kernel $K$ induces the integral operator $\Kop :L^2 (\cU) \to L^2 (\cU)$ defined via
\[
\Kop f (\bu) = \int_{\cU} K(\bu,\bu') f(\bu') \P_{\bu} (\de \bu'), \qquad \forall f \in L^2(\cU).
\]
It is immediate that the operator $\Kop$ is self-adjoint, positive semi-definite, and trace-class with $\Tr(\Kop) = \E_\bu [K(\bu,\bu)] <\infty$. We define $\cD \subseteq L^2 (\cU)$ the closed-linear subspace such that $\cD^\perp = \Ker^{-1}(0)$, i.e., $\Kop f=0 $ for all $f \in \cD^\perp$ and $\Kop f \neq 0$ for all $f \in \cD\setminus \{0\}$. If $\cD = L^2 (\cU)$, we say that the kernel $K$ is universal in $L^2(\cU)$. 

By the spectral theorem of compact operators, we can decompose the kernel operator $\Kop$ into
\[
\Kop = \sum_{j =1}^\infty \xi_j \psi_j \psi_j^*,
\]
where $( \psi_j )_{j \geq 1}$ is an orthonormal basis of $\cD = \spn \{ \psi_j : j \geq 1\} \subseteq L^2(\cU)$,  $\< \psi_j , \psi_{j'} \>_{L^2} = \delta_{jj'}$, and $\{ \xi_j \}_{j\geq 1} \subset \R_{>0}$ are the positive eigenvalues in nonincreasing order $\xi_1 \geq \xi_2 \geq \cdots$. In particular, $\sum_{j = 1}^\infty \xi_j < \infty$ by the trace-class assumption. The kernel can be decomposed similarly as
\[
K (\bu_1 , \bu_2 ) = \sum_{ j=1}^\infty \xi_j \psi_j (\bu_1) \psi_j (\bu_2).
\]

The kernel $K$ defines a reproducing kernel Hilbert space (RKHS) $\cH \subseteq \cD$ via
\[
\cH = \left\{ f \in \cD : \| f \|_{\cH}^2 = \sum_{j =1}^\infty \frac{\< f , \psi_j \>_{L^2}^2}{\xi_j} < \infty \right\},
\]
where $\| \cdot \|_{\cH}$ corresponds to the RKHS norm associated to $\cH$. In particular, $\cH$ is dense in $\cD$. In fact, we can construct an isometry between $\cD$ and $\cH$ as follows. Denote $\Kop^{1/2} :\cD \to \cD$ the unique positive self-adjoint square root of $\Kop$ defined by $\Kop^{1/2} = \sum_{j \geq 1} \sqrt{\xi_j} \psi_j \psi_j^*$. Then the image $\Im (\Kop^{1/2} (\cD)) = \cH$ and conversely, for all $h \in \cH$, there exists a unique $f \in \cD$ such that $h = \Kop^{1/2} f$. In particular, denoting $\Kop^{-1/2}: \cH \to \cD$ the inverse, we get that for all $f,g \in \cH$,
\[
\< f, g \>_{\cH} = \< \Kop^{-1/2} f , \Kop^{-1/2} g \>_{L^2},
\]
and $\Kop^{1/2}$ indeed forms an isometry between $\cD$ and $\cH$.

Using these notations, the KRR solution is given by
\begin{equation}\label{eq:KRR_solution_discussion}
\hat f_\lambda = \argmin_{f \in \cH} \left\{ \sum_{i \in [n]} \left(y_i - f (\bu_i) \right)^2 + \lambda \| f \|_{\cH}^2 \right\},
\end{equation}
and the deterministic equivalent of the test error can be rewritten as
\begin{equation}\label{eq:det_equiv_operator}
\sR_n (f_* , \lambda) =  \left( 1 - \frac{1}{n} \Tr \big( \Kop^2 ( \Kop + \lambda_*)^{-2}\big) \right)^{-1} \Big[ \< f_*,  ( \Kop + \lambda_* )^{-2} f_* \>_{L^2}  + \sigma_\eps^2 \Big] .
\end{equation}
Observe that this deterministic equivalent is proportional to the test error of an effective shrinkage estimator $f^{\seff}_{\lambda_*}$, i.e.,
\begin{equation}\label{eq:effective_shrinkage}
\sR_n (f_* , \lambda)  = C_\star (\lambda_*) \cdot \E\left[ \left( y - f^{\seff}_{\lambda_*} (\bu) \right)^2\right],
\end{equation}
where $f^{\seff}_{\lambda_*}$ is the solution of the ridge regression problem
\begin{equation}\label{eq:shrinkage_estimator}
f^{\seff}_{\lambda_*} := \argmin_{f \in \cH} \left\{ \| f_* - f \|_{L^2}^2 + \lambda_* \|  f \|_{\cH}^2 \right\} = \Kop (\Kop + \lambda_* )^{-1} f_*.
\end{equation}
This corresponds to replacing in Eq.~\eqref{eq:KRR_solution_discussion} the empirical risk by the population risk $\| f_* - f \|_{L^2}^2 = \E_{\bu} [ (f_*(\bu) - f(\bu))^2]$ and the regularization parameter $\lambda$ by the effective regularization $\lambda_*$. Equation \eqref{eq:effective_shrinkage} agrees with the standard intuition of KRR: the solution $\hat f_\lambda$ estimates accurately the projection of $f_*$ onto the top eigenspaces with $\xi_j \gg \lambda_*$, while shrinking to $0$ the projection of $f_*$ onto eigenspaces with $\xi_j \ll \lambda_*$. Note that Equation \eqref{eq:effective_shrinkage} was observed in \cite{mei2022generalization} for KRR under a simplified setting: they assume that there exists $\evn = o(n)$ such that $n\xi_{\evn+1} = o (  \Tr(\bSigma_{>\evn}))$ (a `spectral gap' assumption) which leads to  $\lambda_* = (1 +o(1)) \cdot \lambda_{>\evn}/n$ and  $C_\star (\lambda_*) = 1+ o(1)$.

Let us now turn to Assumption \ref{ass:main_assumptions} under which the deterministic estimate~\eqref{eq:det_equiv_operator} approximates the true test error. For any integer $\evn \in \naturals$, we introduce the low and high-degree part of the kernel
\[
\begin{aligned}
    K_{\leq \evn} (\bu_1 , \bu_2) = &~ \sum_{j = 1}^\evn \xi_j \psi_j (\bu_1 ) \psi_j(\bu_2), \qquad \quad K_{> \evn} (\bu_1 , \bu_2) = \sum_{j = \evn+1}^\infty \xi_j \psi_j (\bu_1 ) \psi_j(\bu_2),
\end{aligned}
\]
and their associated integral operators $\Kop_{\leq \evn}$ and $\Kop_{>\evn}$.

\paragraph*{Low-degree part of the kernel.} As discussed in Section \ref{sec:summary}, Equation \eqref{eq:ass_quad_concentration_2} amounts to an hypercontractivity condition on the subspace spanned by the top $\evn$ eigenfunctions:

\begin{assumption}[Hypercontractivity of the top eigenspaces]\label{ass:hypercontractivity} There exist constants $C,\beta>0$ such that for any $h \in \spn \{ \psi_j : j = 1, \ldots , \evn \} $ and integer $q \geq 2$,
\begin{equation}\label{eq:hypercont}
\| h \|_{L^{q}}^2 \leq (Cq)^{\beta} \cdot \| h \|_{L^2}^2.
\end{equation}
\end{assumption}

This assumption was already exploited in \cite{mei2022generalization,mei2021learning,misiakiewicz2022learning,xiao2022precise} to study various non-linear kernels on the sphere and hypercube. Examples of function spaces satisfying Assumption \ref{ass:hypercontractivity} include the subspace of degree-$\ell$ polynomials over either the Gaussian measure, or uniform measure on the sphere or the hypercube \cite{gross1975logarithmic,bonami1970etude,beckner1975inequalities,beckner1992sobolev}, where Equation \eqref{eq:hypercont} holds with $\beta = \ell$ and $C=1$.

For a p.s.d.~matrix $\bA = (a_{ij})_{ij\in[\evn]} \in \R^{\evn \times \evn}$, define the function $A(\bu) = \sum_{i,j \in [\evn]} a_{ij} \psi_i (\bu)\psi_j (\bu)$. Then we can show that under Assumption \ref{ass:hypercontractivity} (see for example \cite[Lemma 6]{mei2022generalization}):
\begin{equation}\label{eq:a2_from_hypercontractivity}
\E \left[ \left| A(\bu) - \Tr(\bA) \right|^q\right]^{2/q} \leq (Cq)^{2\beta} \cdot\Tr(\bA)^2 \leq (Cq)^{2\beta} \cdot \evn \| \bA \|_F^2, \qquad \forall q\geq2.
\end{equation}
Therefore Equation \eqref{eq:ass_quad_concentration_2} implies Equation \eqref{eq:ass_quad_concentration_1} with $\varphi_1 (\evn) = \sqrt{\evn}$. However this bound can be improved in many cases of interest. In particular, when $\{ \psi_j\}_{j\in[\evn]}$ is an orthonormal basis of degree-$\ell$ polynomials with respect to either the Gaussian, uniform on the sphere or hypercube measures, then $\varphi_1 (\evn) = \widetilde{O} (\sqrt{\evn/d})$. This bound is proved for spherical harmonics in Proposition \ref{prop_app:quad_form_quadratic} of Appendix \ref{app_ex:ass_a_spherical}. The case of multivariate Hermite polynomials and Fourier-Walsh basis are much easier and follow from a similar argument.

\paragraph*{High-degree part of the kernel.} We present below sufficient conditions for Assumption \ref{ass:main_assumptions}.(b) to hold.

\begin{assumption}[High-degree kernel concentration]\label{ass:sufficient_high_degree} We assume the following hold.

\begin{itemize}
    \item[(a)] For an integer $\oevn  > \evn$ with $n \cdot \xi_{\oevn+1} \leq \xi_{\evn +1}$, there exist constants $\obeta,\oC>0$ such that for any $h \in \spn \{ \psi_j : j = \evn+1, \ldots , \oevn \} $ and integer $q \geq 2$,
    \[
    \| h \|_{L^{q}}^2 \leq (\oC q)^{\obeta} \cdot \| h \|_{L^2}^2.
    \]

    \item[(b)] There exists $\alpha_1>0$ such that
    \[
    \E\left[ \max_{i \in [n]} \left| K_{>\evn} (\bu_i,\bu_i) - \E[K_{>\evn} (\bu_i,\bu_i)] \right| \right] \leq \alpha_1 \sqrt{\frac{n\xi_{\evn+1}}{\Tr(\Kop_{>\evn})}}  \cdot \E[K_{>\evn} (\bu_i,\bu_i)].
    \]
\end{itemize}
\end{assumption}

This assumption is a quantitative version of \cite[Assumption 4]{mei2022generalization}. We show next using a standard heavy-tailed matrix concentration result \cite[Theorem 5.48]{vershynin2010introduction} that Assumption \ref{ass:sufficient_high_degree} implies a bound in expectation on the operator norm.

\begin{proposition}\label{prop:sufficient_high_degree}
Define $\bK_{>\evn} = (K_{>\evn} (\bu_i,\bu_j))_{ij \in [n]} \in \R^{n \times n}$ the high-degree part of the empirical kernel matrix. Under Assumption \ref{ass:sufficient_high_degree}, there exists a universal constant $C>0$ such that
\begin{align}\label{eq:op_exp}
    \E \left[ \left\| \bK_{>\evn} - \Tr(\Kop_{>\evn})\cdot \id_n \right\|_\op \right] \leq C \left\{ \alpha_1 + (2\oC)^{\obeta} \log^{\obeta}(n) \right\} \sqrt{\frac{n\xi_{\evn+1}}{\Tr(\bSigma_{>\evn})}} \cdot \Tr(\bSigma_{>\evn}).
\end{align}
\end{proposition}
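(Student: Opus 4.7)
I separate the centered high-degree kernel matrix into its diagonal and off-diagonal parts, and further split the off-diagonal at the eigenvalue threshold $\oevn$:
\begin{equation*}
\bK_{>\evn} - \Tr(\Kop_{>\evn})\id_n = \bD + \bF_1 + \bF_2,
\end{equation*}
with $\bD := \diag(\bK_{>\evn}) - \Tr(\Kop_{>\evn})\id_n$, $\bF_1$ the off-diagonal of the ``middle'' block $\bK_{(\evn,\oevn]} := (K_{(\evn,\oevn]}(\bu_i,\bu_j))_{ij}$ built from $K_{(\evn,\oevn]}(\bu,\bu') := \sum_{\evn < j \leq \oevn} \xi_j \psi_j(\bu)\psi_j(\bu')$, and $\bF_2$ the off-diagonal of the ``tail'' block $\bK_{>\oevn}$. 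Since the high-degree eigenfunctions are mean zero, each off-diagonal block has mean-zero entries; by the triangle inequality it suffices to bound $\E\|\bD\|_\op$, $\E\|\bF_1\|_\op$, $\E\|\bF_2\|_\op$ separately.

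The diagonal term is handled directly by Assumption~\ref{ass:sufficient_high_degree}(b): $\|\bD\|_\op = \max_{i \in [n]} |K_{>\evn}(\bu_i,\bu_i) - \E K_{>\evn}(\bu,\bu)|$, which produces the $\alpha_1$ contribution to \eqref{eq:op_exp}. The tail term is controlled by a crude Hilbert--Schmidt bound. Orthonormality of the $\psi_j$ together with independence of distinct samples gives $\E[K_{>\oevn}(\bu,\bu')^2] = \sum_{k > \oevn} \xi_k^2 \leq \xi_{\oevn+1}\Tr(\Kop_{>\oevn})$, and hence by Jensen's inequality
\begin{equation*}
\E\|\bF_2\|_\op \leq \sqrt{\E\|\bF_2\|_F^2} \leq n \sqrt{\xi_{\oevn+1}\Tr(\Kop_{>\oevn})} \leq \sqrt{n\xi_{\evn+1}\Tr(\Kop_{>\evn})},
\end{equation*}
where the final step uses the hypothesis $n\xi_{\oevn+1} \leq \xi_{\evn+1}$ together with $\Tr(\Kop_{>\oevn}) \leq \Tr(\Kop_{>\evn})$.

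The substantive work is to bound $\E\|\bF_1\|_\op$, for which I invoke the cited Vershynin Theorem~5.48 on the middle-range feature matrix $\bX_{(\evn,\oevn]}$. Since that theorem requires rows of controlled norm, I first truncate using hypercontractivity. Applying Assumption~\ref{ass:sufficient_high_degree}(a) via the reasoning leading to \eqref{eq:a2_from_hypercontractivity} (with the matrix $\bA$ replaced by $\bSigma_{(\evn,\oevn]}$) gives
\begin{equation*}
\E\bigl[\,\bigl|\|\bx_{(\evn,\oevn]}\|^2 - \Tr(\bSigma_{(\evn,\oevn]})\bigr|^q\bigr]^{1/q} \lesssim (\oC q)^{\obeta}\, \Tr(\bSigma_{(\evn,\oevn]}),
\end{equation*}
and choosing $q \asymp \log n$ in Markov's inequality forces $\|\bx_{(\evn,\oevn]}\|^2 \leq (2\oC\log n)^{\obeta} \Tr(\bSigma_{(\evn,\oevn]})$ with probability $1 - n^{-C}$ for any prescribed constant $C$. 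Vershynin's theorem applied to the truncated matrix then produces
\begin{equation*}
\E\|\bF_1\|_\op \lesssim (2\oC)^{\obeta}\log^{\obeta}(n)\, \sqrt{n\xi_{\evn+1}\Tr(\Kop_{>\evn})},
\end{equation*}
while the untruncated mass contributes negligibly via Cauchy--Schwarz against $\sqrt{\E\|\bF_1\|_F^2} \leq \sqrt{n^2\xi_{\evn+1}\Tr(\Kop_{>\evn})}$. Combining the three contributions yields \eqref{eq:op_exp}.

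\textbf{Main obstacle.} The principal difficulty is the middle block $\bF_1$: the features $\bx_{(\evn,\oevn]}$ may live in a very high-dimensional space, and hypercontractivity supplies only polynomial moment bounds, precluding a direct application of sub-Gaussian matrix concentration. The truncation trick at logarithmic moment order is essential---it reduces $\bX_{(\evn,\oevn]}$ to the bounded-row setting of Theorem~5.48 at the cost of a $\log^{\obeta}(n)$ factor, which is precisely the form appearing in \eqref{eq:op_exp}; the remainder of the bookkeeping is routine.
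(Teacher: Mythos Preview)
Your handling of the diagonal term $\bD$ and the far tail $\bF_2$ matches the paper's proof. The gap is in your treatment of the middle off-diagonal block $\bF_1$.

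Applying the cited form of Vershynin's Theorem~5.48 to the \emph{feature} matrix $\bX_{(\evn,\oevn]}$ (even after truncating the row norms at level $M=(2\oC\log n)^{\obeta}\Tr(\bSigma_{(\evn,\oevn]})$) gives only an upper bound on $\|\bX_{(\evn,\oevn]}\|_\op$, namely
\[
\E\bigl[\|\bX_{(\evn,\oevn]}\|_\op^2\bigr] \;\lesssim\; n\,\xi_{\evn+1}+M\log n,
\]
and hence on $\|\bK_{(\evn,\oevn]}\|_\op=\|\bX_{(\evn,\oevn]}\|_\op^2$, not on its off-diagonal part $\bF_1$. The term $M\log n$, of order $(2\oC\log n)^{\obeta}\Tr(\bSigma_{>\evn})\log n$, already exceeds the target $\sqrt{n\xi_{\evn+1}\Tr(\bSigma_{>\evn})}$ whenever $n\xi_{\evn+1}\ll\Tr(\bSigma_{>\evn})$, which is precisely the regime where the proposition is meant to be informative. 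Subtracting the diagonal by the triangle inequality does not rescue the argument: hypercontractivity alone only yields
\[
\E\Bigl[\max_{i}\bigl|\|\bx_{i,(\evn,\oevn]}\|^2-\Tr(\bSigma_{(\evn,\oevn]})\bigr|\Bigr]\;\lesssim\;(2\oC\log n)^{\obeta}\,\Tr(\bSigma_{(\evn,\oevn]}),
\]
so the diagonal deviation is of the same unwanted order and no cancellation occurs. In short, a one-sided bound on $\|\bX\|_\op$ cannot control $\|\bX\bX^{\sT}-\diag(\bX\bX^{\sT})\|_\op$.

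The paper's argument removes the diagonal \emph{by construction} via a decoupling step: $\E\|\bF_1\|_\op\le 4\max_{T\subseteq[n]}\E\|\bDelta_{T,T^c}\|_\op$, where $\bDelta_{T,T^c}=\bigl(K_{(\evn,\oevn]}(\bu_i,\bu_j)\bigr)_{i\in T,\,j\in T^c}$ is a rectangular block with disjoint row and column index sets. Conditionally on $\{\bu_j\}_{j\in T^c}$, the rows of $\bDelta_{T,T^c}$ are i.i.d.\ vectors in $\R^{|T^c|}$, and Vershynin's theorem is applied to \emph{this} matrix. The conditional second-moment matrix is $\bSigma[T]=\bX_{T^c}\bSigma_{(\evn,\oevn]}\bX_{T^c}^{\sT}$, whose operator norm is at most $\xi_{\evn+1}\bigl(\|\ddiag(\bK_{(\evn,\oevn]})\|_\op+\|\bF_1\|_\op\bigr)$; this produces a self-referencing inequality in $\E\|\bF_1\|_\op$ that, after rearrangement, delivers the claimed bound. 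The decoupling step is the missing idea in your plan.
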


The proof of this proposition can be found in Appendix \ref{app:sufficient}. Using Markov's inequality and Equation~\eqref{eq:op_exp}, we deduce that for any $\delta \in (0,1)$, Assumption \ref{ass:sufficient_high_degree} implies Assumption \ref{ass:main_assumptions}.(b) with $p_{2,n} (\evn) = \delta$ and 
\[
\varphi_{2,n} (\evn) = \frac{C}{\delta} \left\{ \alpha_1 + (2\oC)^{\obeta} \log^{\obeta}(n)\right\}.
\]
Assumption \ref{ass:sufficient_high_degree} is readily verified for the various kernels on the sphere and hypercube considered in \cite{mei2022generalization,mei2021learning,misiakiewicz2022learning,misiakiewicz2022spectrum} with $\alpha_1 = o_n( 1)$. In Section \ref{sec:inner-product_main}, we will consider a tighter moment method analysis in the case of inner-product kernels on the sphere to show that the bound \eqref{eq:op_exp} holds with probability at least $1 - n^{-D}$ instead of in expectation.

\paragraph*{Target function.} The particular form of Assumption \ref{ass:main_assumptions}.(c) is motivated by the goal of showing a \textit{multiplicative} approximation bound $\cE_{\sR,n} (\evn) \cdot \sR_n (\bbeta_*,\lambda)$. If instead, we relax this approximation bound to $\cE_{\sR,n} (\evn) \cdot \E [y^2] = \cE_{\sR,n} (\evn) \cdot (\| f_* \|_{L^2}^2 + \sigma_\eps^2)$, we can replace Assumption \ref{ass:main_assumptions}.(c) by a simpler assumption to verify on the growth of the moments of the target function
\[
\E_\bu \left[ |f_* (\bu) |^q\right] \leq (\sfC_{\beta} q)^{q \beta /2} \| f_* \|_{L^2}^q, \qquad \quad \forall q \geq 2.
\]
Indeed using the hypercontractivity Assumption \ref{ass:hypercontractivity} on $f_{*,\leq \evn}$, we have by triangular inequality
\[
\| f_{*,>\evn} \|_{L^q} \leq \| f_{*} \|_{L^q} + \| f_{*,\leq \evn} \|_{L^q} \leq (Cq)^{\beta /2} \| f_* \|_{L^2} + (Cq)^{\beta /2} \| f_{*,<\evn} \|_{L^2} \leq 2(Cq)^{\beta /2}\| f_* \|_{L^2},
\]
which implies that
\[
\P \left( |f_{*,>\evn} (\bu) | \geq t \cdot \| f_* \|_{L^2} \right) \leq \sfC \exp \left\{ -\sfc t^{2/\beta} \right\},
\]
for some constant $\sfC,\sfc>0$ that only depends on $\beta,\sfC_\beta$. See the proof of Theorem \ref{thm:spherical_test} in Appendix \ref{app:proof_inner_test} for more details.

\begin{remark}[The case $\cD \subsetneq L^2 (\cU)$.] \label{rmk:uncorrelated_part_target}
    For simplicity of presentation, we assumed throughout this section that $f_* \in \cD = \spn \{ \psi_j : j \geq 1\}$. However, our proof readily extend to the case $f_* \not\in \cD$ as long as we assume that $\proj_{\perp,\cD} f_*$ satisfy Assumption \ref{ass:main_assumptions}.(c), where $\proj_{\perp,\cD}$ denotes the projection orthogonal to $\cD$. See the proof of Lemma \ref{lem:high-degree_part} in Appendix \ref{app_test_error:tech_high-degree_target} which controls the high-degree part of the target function  and only uses that $\E[f_{*,>\evn} (\bx_i) \bx_{i,\leq \evn}] = 0$.
\end{remark}

\section{Two classical examples}
\label{sec_main:examples}

We illustrate our master theorem in two classical settings:
\begin{itemize}
    \item[(1)] \textit{Concentrated features.} This setting includes a number of popular theoretical models that have been considered in the literature. While not the main goal of this paper, this first example provides a simple setting to test the predictions of Theorem \ref{thm:abstract_Test_error} and compare them to previous works.

    \item[(2)] \textit{Inner-product kernels on the sphere.} In this example, we consider covariates uniformly distributed on the $d$-dimensional sphere $\bu_i \sim \Unif(\S^{d-1} (\sqrt{d}))$ and an inner-product kernel $K(\bu_i,\bu_j) = h(\< \bu_i,\bu_j\>/d)$. This setting has been used as a simple model to explore properties of non-linear kernels \cite{bach2017breaking,ghorbani2021linearized,misiakiewicz2022spectrum,xiao2022precise}. In particular, compared to the first example, the feature vector is not well concentrated and includes polynomials of arbitrary large degree. This setting will require to fully exploit Theorem \ref{thm:abstract_Test_error} to obtain approximation guarantees. 
\end{itemize}

While we only consider the above two examples, we believe that the proof techniques developed in the case of inner-product kernels on the sphere extend to various settings that have been previously studied in the literature, including inner-product kernels on anisotropic data \cite{ghorbani2020neural}, invariant kernels \cite{mei2021learning,bietti2021sample}, and convolutional kernels \cite{misiakiewicz2022learning,xiao2022precise}. The main challenge when applying Theorem \ref{thm:abstract_Test_error} is to have access to the eigendecomposition of the kernel to verify Assumption \ref{ass:main_assumptions}. It is an interesting research direction to relax this requirement.

\subsection{The case of concentrated features}
\label{sec:example_concentrated}

In this first example, the feature vector $\bx \in \R^p$, $p \in \naturals \cup \{\infty\}$, satisfy the following assumption:

\begin{assumption}[Concentrated features.]\label{ass:concentrated}
There exist $\sfc_x,\sfC_x,\beta >0$ such that for any p.s.d.~matrix $\bA \in \R^{p\times p}$ with $\Tr(\bSigma \bA) <\infty$, we have
\begin{equation}\label{eq:ass_concentrated_features}
        \P \left( \left| \bx^\sT \bA \bx - \Tr(\bSigma \bA) \right| \geq t \cdot \| \bSigma^{1/2} \bA \bSigma^{1/2} \|_F \right) \leq \sfC_x \exp \left\{ -\sfc_x t^{1/\beta } \right\} .
    \end{equation}
\end{assumption}

This assumption is verified in a number of popular theoretical models:
\begin{itemize}
    \item The feature vector $\bz = \bSigma^{-1/2} \bx$ has independent sub-Gaussian coordinates with uniformly-bounded sub-Gaussian norm. This is for example the setting studied in \cite{bartlett2020benign,tsigler2023benign}.

    \item The feature vector $\bz = \bSigma^{-1/2} \bx$ satisfy the Lipschitz convex concentration property: for any $1$-Lipschitz convex function $\varphi:\R^p \to \R$ and $t > 0$,
    \begin{equation}\label{eq:convex_concentration}
    \P \left( \left| \varphi (\bz) - \E[\varphi (\bz)]\right| \geq t \right) \leq 2 \exp \left\{ - t^2 / \tau_x^2 \right\}.
    \end{equation}
    For example, the convex concentration property is verified by random vectors $\bz$ that satisfy a log-Sobolev inequality or that have strongly log-concave probability density. See discussion in \cite{cheng2022dimension}.
\end{itemize}

These two examples satisfy the Hanson-Wright inequality \cite{rudelson13hanson,adamczak2015note} which implies Assumption \ref{ass:concentrated} with $\beta = 1$. This is the  condition assumed in \cite{cheng2022dimension} to study ridge regression with infinite-dimensional features. The case $\beta >1$ was considered in \cite{louart2018concentration} where they prove that a similar inequality to Eq.~\eqref{eq:ass_concentrated_features} is implied by a relaxation of the convex concentration property \eqref{eq:convex_concentration} with $t^2$ replaced by $t^{2/\beta}$. Using this property, \cite{louart2018concentration} derive non-asymptotic bounds on deterministic equivalents for random matrix functionals with $p =O(n)$.

Observe that Equation \eqref{eq:ass_concentrated_features} applied to $\bA = \bv \bv^\sT$ implies Equation \eqref{eq:ass_quad_concentration_2} (with a slight reparametrization of the constants $\sfc_x,\sfC_x$). Thus under Assumption \ref{ass:concentrated}, the feature vector $\bx$ directly satisfy the general Assumption \ref{ass:main_assumptions} with $\evn = p$.
Applying Theorem \ref{thm:abstract_Test_error} to this setting, we obtain the following simple approximation guarantee, where we recall that $\nu_\lambda (n) := \nu_{\lambda,p} (n)$ is defined in Eq.~\eqref{eq:reduced_nu_lambda}.

\begin{corollary}[Test error with concentrated features]\label{cor:well_concentrated_Test_error}
Under Assumptions \ref{ass:noise_subGaussian} and \ref{ass:concentrated}, for any $D,K>0$, there exist constants $\eta := \eta_x \in(0,1/2)$, $C_{K,D}>0$, and $C_{x,\eps,D,K}>0$ such that the following holds. For any $n \geq C_{K,D}$, regularization parameter $\lambda > 0$, and target function $f_* \in L^2(\cU)$ with parameters $\| \bbeta _* \|_2 < \infty$, if it holds that
\[
\lambda \cdot \nu_{\lambda} (n) \geq n^{-K},
\qquad \qquad
 \nu_{\lambda} (n)^{8} \log^{3\beta + \frac{1}{2}} (n) \leq K \sqrt{n},
\]
then with probability at least $1 - n^{-D}$, we have
\begin{equation}\label{eq:approx_concentrated_test}
\left|\cR_{\test} ( \bbeta_*;\bX,\beps, \lambda) -    \sR_{n} (\bbeta_*, \lambda) \right| \leq C_{x,\eps,D,K} \frac{\nu_{\lambda}(n)^{6} \log^{3\beta +1/2}(n)}{\sqrt{n}}  \sR_{n} (\bbeta_*, \lambda).
\end{equation}
\end{corollary}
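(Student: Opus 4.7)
The plan is to deduce Corollary 1 from Theorem 1 by specializing to $\evn = p$, so that the entire feature vector sits in the ``low-degree'' block and the high-degree block is empty. The whole argument is a reduction: verify Assumption 1 in this regime, then substitute $\evn = p$ into the statement of Theorem 1 and observe that the conditions and conclusion of Corollary 1 drop out.

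The first task is to check Assumption 1 with $\evn = p$. Property (a2) is exactly Assumption 3 with $\varphi_1(\evn) = 1$, so nothing is required. Property (a1) is obtained by applying Assumption 3 to the rank-one p.s.d.~matrix $\bA = \bv \bv^\sT$: since $\bx^\sT \bA \bx = \langle \bv, \bx\rangle^2$, $\Tr(\bSigma \bA) = \bv^\sT \bSigma \bv$, and $\|\bSigma^{1/2} \bA \bSigma^{1/2}\|_F = \bv^\sT \bSigma \bv$, one gets
\[
\P\!\left( \big|\langle \bv, \bx\rangle^2 - \bv^\sT \bSigma \bv\big| \geq t \cdot \bv^\sT \bSigma \bv \right) \leq \sfC_x \exp\{-\sfc_x t^{1/\beta}\},
\]
and taking square roots (after absorbing a harmless additive constant inside the exponent) yields the sub-Gaussian-type tail on $|\langle \bv, \bx\rangle|$ with exponent $2/\beta$, as stated in (a1). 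Properties (b1) and (c1) are vacuous: when $\evn = p$, the block $\bX_{>\evn}$ is $n \times 0$, so $\bX_{>\evn}\bX_{>\evn}^\sT = 0$, $\Tr(\bSigma_{>\evn}) = 0$, and we may take $\varphi_{2,n}(\evn) = 0$ and $p_{2,n}(\evn) = 0$; similarly $f_{*,>\evn} \equiv 0$, so (c1) is trivial. By convention $r_\lambda(\evn) = \infty$.

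With these substitutions one has $\lambda_{>\evn} = \lambda$, $\nu_{\lambda, \evn}(n) = \nu_\lambda(n)$, and $\varphi_{2,n}(\evn)\sqrt{n/r_\lambda(\evn)} = 0$. The three technical conditions in Eq.~\eqref{eq:condition_test_abstract} collapse to the two conditions stated in Corollary 1 (the second one in Eq.~\eqref{eq:condition_test_abstract} becoming the trivial $0 \leq 1/2$), the high-probability event has probability at least $1 - n^{-D} - p_{2,n}(\evn) = 1 - n^{-D}$, and the relative rate $\cE_{\sR,n}(\evn)$ in Eq.~\eqref{eq:asbstract_relative_error} reduces to $\nu_\lambda(n)^6 \log^{3\beta + 1/2}(n)/\sqrt{n}$, matching Eq.~\eqref{eq:approx_concentrated_test}.

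There is essentially no ``hard part'' here---all the substantive work lies in Theorem 1 itself. The only minor check is the passage from the quadratic-form bound (a2) to the linear-form bound (a1) via the rank-one specialization, where one must be careful with the change of scale between $\bv^\sT \bSigma \bv$ and $\sqrt{\bv^\sT \bSigma \bv}$ and with absorbing the resulting constants into $\sfc_x, \sfC_x$. Once that is in hand, Corollary 1 follows by direct invocation of Theorem 1.
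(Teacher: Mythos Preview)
Your proposal is correct and matches exactly how the paper presents the corollary: the text immediately preceding Corollary~\ref{cor:well_concentrated_Test_error} says that Assumption~\ref{ass:concentrated} applied to $\bA=\bv\bv^\sT$ gives (a1), so Assumption~\ref{ass:main_assumptions} holds with $\evn=p$ and $\varphi_1(p)=1$, and then Theorem~\ref{thm:abstract_Test_error} yields the result (the only cosmetic slip is that you set $\varphi_{2,n}(\evn)=0$ whereas the assumption requires $\varphi_{2,n}\geq 1$, but since $r_\lambda(p)=\infty$ the relevant term vanishes either way). It is worth noting, though, that the paper also supplies a second, self-contained proof in Section~\ref{sec_outline:proof_concentrated} (Theorem~\ref{thm:well_concentrated_test_train}): rather than invoking the master theorem, it decomposes the test error directly into bias/variance-type terms $Q_1,Q_2,Q_3$ and bounds each one using the functional-level deterministic equivalents of Theorem~\ref{thm:main_det_equiv_summary} together with Hanson--Wright for the noise; this alternative route yields slightly different exponents ($\nu_\lambda^7\log^{2\beta+1/2}$ versus $\nu_\lambda^6\log^{3\beta+1/2}$) and serves to illustrate the proof mechanics without the overhead of the general high-degree machinery.
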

For the reader's convenience, we provide a self-contained proof of this corollary in Section \ref{sec_outline:proof_concentrated}.

From Equation~\eqref{eq:approx_concentrated_test}, we see that the rate of approximation scales as $\widetilde{O} (n^{-1/2})$ for concentrated feature vectors. This matches the optimal rate expected from the \textit{local law} fluctuations for the resolvent \cite{alex2014isotropic,knowles2017anisotropic} (see Remark \ref{rmk:local_average_law} below). Furthermore, in contrast to \cite{cheng2022dimension}, the rate $\widetilde{O} (n^{-1/2})$ is achieved for \textit{any} target function without any dependency on $\bbeta_*$. In particular, the approximation guarantee \eqref{eq:approx_concentrated_test} applies to all square-integrable functions $f_* \in L^2(\cU)$.

Theorem \ref{thm:abstract_Test_error} allows to prove a more general guarantee by setting $\evn < p$. Indeed, for any integer $\evn \in \naturals$ chosen with $r_{\lambda} (\evn) \geq 2 n$, Assumption \ref{ass:concentrated} implies Assumption \ref{ass:main_assumptions}.(b) with 
\[
p_{2,n} (\evn) = n^{-D}, \qquad \quad \varphi_{2,n} (\evn) = C_{x,D} \log^{2\beta +1} (n),
\]
for any constant $D>0$. This is proved in Appendix \ref{app:proof_concentrated_high_deg}. Moreover, Assumption \ref{ass:main_assumptions}.(c) directly follows by taking $\bA = \bSigma_{>\evn}^{-1/2} \bbeta_{*,>\evn} \bbeta_{*,>\evn}^\sT \bSigma_{>\evn}^{-1/2}$ in Equation \eqref{eq:ass_concentrated_features}. For the reader's convenience, we write a second corollary for the case $\evn < \infty$:

\begin{corollary}\label{cor:well_concentrated_interpolating}
 Under Assumptions \ref{ass:noise_subGaussian} and \ref{ass:concentrated}, for any $D,K>0$, there exist constants $\eta := \eta_x \in(0,1/2)$, $C_{K,D}>0$, and $C_{x,\eps,D,K}>0$ such that the following holds. For any $n \geq C_{K,D}$, regularization parameter $\lambda > 0$, target function $f_* \in L^2(\cU)$ with parameters $\| \bbeta _* \|_2 < \infty$, and integer $\evn \in \naturals$, if it holds that 
\[
\lambda_{>\evn} \cdot \nu_{\lambda,\evn} (n) \geq n^{-K}, \qquad \log^{3\beta + 1} (n) \sqrt{\frac{n}{r_\lambda (\evn)}} \leq \frac{1}{2},
\qquad
 \nu_{\lambda,\evn} (n)^{8} \log^{3\beta + 1} (n) \leq K \sqrt{n},
\]
then with probability at least $1 - n^{-D}$, we have
\[
\left|\cR_{\test} ( \bbeta_*;\bX,\beps, \lambda) -    \sR_{n} (\bbeta_*, \lambda) \right| \leq C_{x,\eps,D,K}  \cdot \nu_{\lambda,\evn}(n) \log^{3\beta +1}(n) \left\{ \frac{\nu_{\lambda,\evn}(n)^{5} }{\sqrt{n}}  + \sqrt{\frac{n}{r_\lambda (\evn)}} \right\} \cdot\sR_{n} (\bbeta_*, \lambda).
\]   
\end{corollary}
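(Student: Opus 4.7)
The plan is to deduce the corollary directly from the master Theorem \ref{thm:abstract_Test_error} by verifying that Assumption \ref{ass:concentrated} implies Assumption \ref{ass:main_assumptions} with the explicit parameters
\[
\varphi_1(\evn) = 1, \qquad \varphi_{2,n}(\evn) \leq C_{x,D}\log^{2\beta+1}(n), \qquad p_{2,n}(\evn) = n^{-D}.
\]
Once these are established, substituting them into the rate \eqref{eq:asbstract_relative_error} and absorbing $\log^{2\beta+1}(n)$ into $\log^{3\beta+1}(n)$ gives the form stated in the corollary; the three hypotheses \eqref{eq:condition_test_abstract} of Theorem \ref{thm:abstract_Test_error} then become exactly the three conditions listed here (the stronger condition $\log^{3\beta+1}(n)\sqrt{n/r_\lambda(\evn)} \leq 1/2$ implies $\varphi_{2,n}(\evn)\sqrt{n/r_\lambda(\evn)} \leq 1/2$, and similarly for the third).

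First I would verify the low-degree conditions \eqref{eq:ass_quad_concentration_1}, \eqref{eq:ass_quad_concentration_2} and the target-tail condition \eqref{eq:equation_c1_assumption}, all of which are immediate consequences of \eqref{eq:ass_concentrated_features}. For \eqref{eq:ass_quad_concentration_1}, apply \eqref{eq:ass_concentrated_features} to the p.s.d.\ matrix padded with zeros on the high-degree block; one reads off $\varphi_1(\evn)=1$ since $\|\bSigma_{\leq\evn}^{1/2}\bA\bSigma_{\leq\evn}^{1/2}\|_F$ is unchanged. For \eqref{eq:ass_quad_concentration_2}, take $\bA=\bv\bv^\sT$ in \eqref{eq:ass_concentrated_features}; this produces a sub-Weibull tail of exponent $1/\beta$ on $\<\bv,\bx_{\leq\evn}\>^2$, which after the change of variable $s=\sqrt{1+t}$ (and a harmless reparametrization of $\sfc_x,\sfC_x$) gives the required sub-Weibull tail of exponent $2/\beta$ on $|\<\bv,\bx_{\leq\evn}\>|$. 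Finally, for \eqref{eq:equation_c1_assumption} one applies \eqref{eq:ass_concentrated_features} with $\bA=\btheta_{*,>\evn}\btheta_{*,>\evn}^\sT$, noting that both $\Tr(\bSigma \bA)$ and $\|\bSigma^{1/2}\bA\bSigma^{1/2}\|_F$ equal $\|\bbeta_{*,>\evn}\|_2^2 = \|f_{*,>\evn}\|_{L^2}^2$, which yields a sub-Weibull tail on $f_{*,>\evn}(\bx)^2$ and therefore on $|f_{*,>\evn}(\bx)|$.

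The main obstacle and the only nontrivial step is the high-degree operator-norm bound (b1), which is the content of Appendix \ref{app:proof_concentrated_high_deg}. My strategy there is an $\eps$-net argument on $\S^{n-1}$: for each fixed unit $\bu$, one writes
\[
\bu^\sT\bigl( \bX_{>\evn}\bX_{>\evn}^\sT - \Tr(\bSigma_{>\evn})\id_n \bigr) \bu = \sum_i u_i^2 \bigl(\|\bx_{i,>\evn}\|_2^2 - \Tr(\bSigma_{>\evn})\bigr) + \sum_{i\neq j} u_i u_j \<\bx_{i,>\evn},\bx_{j,>\evn}\>.
\]
Each diagonal summand is an independent Hanson--Wright-type quadratic form to which \eqref{eq:ass_concentrated_features} applies directly, and a union bound over $i\in[n]$ controls their maximum by $\log^\beta(n)\sqrt{n\xi_{\evn+1}\Tr(\bSigma_{>\evn})}$. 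For the off-diagonal part, conditioning on $\{\bx_{j,>\evn}\}_{j\neq i}$ renders the inner sum linear in $\bx_{i,>\evn}$, so a second application of \eqref{eq:ass_concentrated_features} (applied to the linear form, i.e.\ rank-one $\bA$) together with a standard iteration gives sub-Weibull concentration with a factor $\sqrt{n\xi_{\evn+1}\Tr(\bSigma_{>\evn})}$ and an extra $\log^{\beta+1/2}(n)$. Discretizing $\S^{n-1}$ at scale $1/2$ (net of cardinality $5^n$) and taking a union bound turns $n$ into an additional multiplicative $\log$ factor, leading to the overall polylogarithmic dependence $\log^{2\beta+1}(n)$ at failure probability $n^{-D}$.

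With Assumption \ref{ass:main_assumptions} verified, Theorem \ref{thm:abstract_Test_error} applies and, after relabeling constants and bounding $\log^{2\beta+1/2}(n)$ by $\log^{3\beta+1}(n)$ in the second term of $\cE_{\sR,n}(\evn)$, we obtain the stated bound
\[
\left|\cR_{\test}(\bbeta_*;\bX,\beps,\lambda) - \sR_n(\bbeta_*,\lambda)\right| \leq C_{x,\eps,D,K}\,\nu_{\lambda,\evn}(n)\log^{3\beta+1}(n)\left\{\frac{\nu_{\lambda,\evn}(n)^5}{\sqrt{n}} + \sqrt{\frac{n}{r_\lambda(\evn)}}\right\} \sR_n(\bbeta_*,\lambda).
\]
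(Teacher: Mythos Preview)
Your overall strategy---verify Assumption \ref{ass:main_assumptions} from Assumption \ref{ass:concentrated} and then invoke Theorem \ref{thm:abstract_Test_error}---is exactly the paper's route, and your verification of \eqref{eq:ass_quad_concentration_2}, \eqref{eq:ass_quad_concentration_1}, and \eqref{eq:equation_c1_assumption} is correct and matches the text preceding the corollary.

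The gap is in your proposed proof of the high-degree bound \eqref{eq:high-degree_intro}. An $\eps$-net of $\S^{n-1}$ has cardinality of order $5^n$, not polynomial in $n$; a union bound over it does not ``turn $n$ into an additional multiplicative $\log$ factor'' as you assert. To make the per-direction failure probability $\lesssim 5^{-n}n^{-D}$, you must drive the threshold $t$ until the tail $\exp(-\sfc_x t^{1/\beta})$ beats $5^n$, i.e.\ $t\gtrsim n^{\beta}$. For $\beta=1$ this yields the desired $\sqrt{n}$ scaling (and the paper explicitly acknowledges the net argument works in that sub-Gaussian case), but for $\beta>1$ the resulting operator-norm bound is inflated by a factor $n^{\beta-1}$, which ruins the rate. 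The corollary is stated for general $\beta$, so this step fails.

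The paper's Appendix \ref{app:proof_concentrated_high_deg} (Proposition \ref{prop:high_freq_concentration_well_concentrated}) takes a completely different route for exactly this reason: it controls the off-diagonal part $\bDelta$ of $\bX_{>\evn}\bX_{>\evn}^\sT$ by the moment method, bounding $\E[\Tr(\bDelta^{2p})]$ through a combinatorial expansion over index sequences (Lemmas \ref{lem:bound_product_high_freq} and \ref{lem:bound_moment_high_freq}) and then setting $p=\log n$ in Markov's inequality. This avoids any union bound over directions and yields $\varphi_{2,n}(\evn)=C_{x,D}\log^{2\beta+1}(n)$ for every $\beta>0$. Once that is in place, the rest of your argument (substituting into \eqref{eq:asbstract_relative_error} and matching the hypotheses \eqref{eq:condition_test_abstract}) goes through as you describe.
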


Corollary \ref{cor:well_concentrated_interpolating} allows to provide approximation guarantees for the interpolating solution $\lambda =0$ as soon as $\evn$ is chosen such that $\Tr( \bSigma_{\geq \evn}) = \Omega (1)$.
In particular, if $\Tr (\bSigma_{>n^2}) = \Omega ( \Tr(\bSigma))$, Corollary \ref{cor:well_concentrated_interpolating} shows that the rate of approximation is $\Tilde{O}(n^{-1/2})$ for any $\lambda \geq 0$ by taking $\evn = n^2$.

\begin{figure}[t]
\centering
\includegraphics[width=0.75\textwidth]{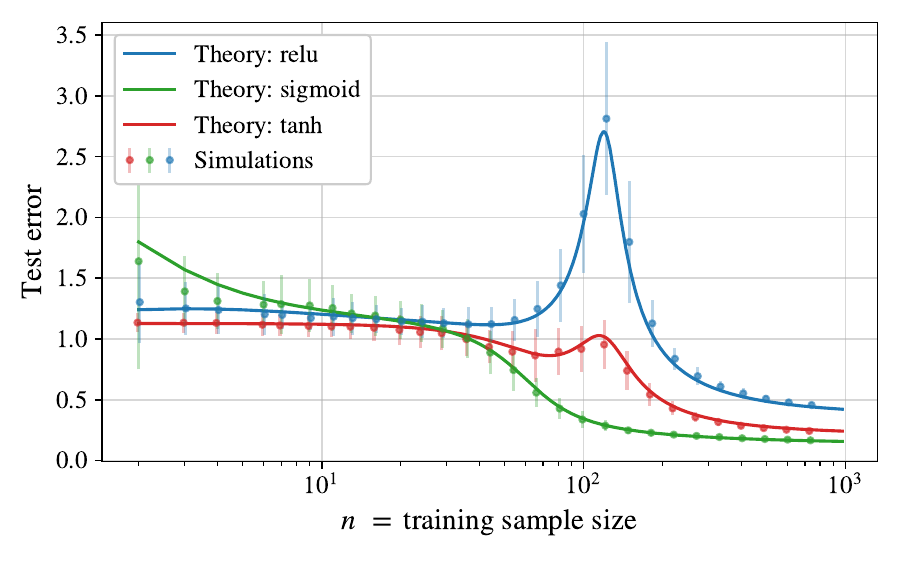}
\vspace{-10pt}
\caption{Test error of ridge regression with features $\bx = \sigma (\bW \bu) \in \R^p$ with covariates $\bu \sim \normal (0,\id_d)$, where the weight matrix $\bW = [\bw_1 , \ldots , \bw_p]^\sT \in \R^{p \times d}$ is fixed and the activation is chosen $\sigma \in \{ \text{ReLu}, \text{sigmoid}, \text{tanh}\}$. We take $\bw_j \sim_{i.i.d.} \normal(0,\id_d/d)$ and target function $f_* (\bu) = \frac{1}{\sqrt{2}}\< \be, \bu \> + \frac{1}{2} \left( \< \be , \bu \>^2 - 1 \right)$ with $\be \in \R^d$, $\| \be \|_2 = 1$ chosen arbitrarily. Here we set $d = 60$, $p = 120$, $\lambda = 0.1$, and $\sigma_\eps^2 =0$.
For the empirical test errors (markers), we report the average and the standard deviation of the test error over $50$ independent realizations. \label{fig:concentrated_test}}
\end{figure}

We illustrate Corollary \ref{cor:well_concentrated_Test_error} in Figure \ref{fig:concentrated_test}. We consider a setting where the data is given by $(\bu_i,y_i)_{i \in [n]}$ with $\bu_i \sim \normal (0,\id_d)$ and $y_i = f_*(\bu_i) + \eps_i$, for some target function $f_*$ verifying Eq.~\eqref{eq:equation_c1_assumption} (e.g., $f_*$ is Lipschitz or a polynomial).
Fix a weight matrix $\bW = [ \bw_1 , \ldots , \bw_p]^\sT \in \R^{p \times d}$. We consider fitting this data using a two-layer neural network 
\[
\hat f (\bu)= \sum_{j \in [p]} \theta_j \sigma ( \< \bw_j , \bu \>) = \btheta^\sT \sigma (\bW \bu),
\]
where we only train the second layer weights $\btheta \in \R^p$ using ridge regression with regularization parameter $\lambda$. Following our notations, this corresponds to KRR with a feature map $\bx = \sigma (\bW\bu) \in \R^p$ which has second moment matrix $\bSigma = \E[\bx \bx^\sT] = (H (\bw_i, \bw_j) )_{ij \in [p]}$ where
\begin{equation}\label{eq:def_H_example}
H (\bw_i, \bw_j) := \E_{\bu \sim \normal(0,\id_d)} [\sigma( \< \bw_i , \bu\>) \sigma (\<\bw_j , \bu\>)].
\end{equation}
We assume that there exists a constant $\sfC>0$ such that $\sigma : \R \to \R$ is $\sfC$-Lipschitz and $\| \bW \|_\op \leq \sfC$. Then, $\bx$ satisfies the Lipschitz convex concentration property \eqref{eq:convex_concentration} with a constant that only depends on $\sfC$ \cite{couillet2018random}. Therefore, we can apply Corollary \ref{cor:well_concentrated_Test_error} (note that the part of $f_*$ uncorrelated to $\bx$ can be taken care of by following Remark \ref{rmk:uncorrelated_part_target}), and the test error is well approximated by the deterministic equivalent
\[
\sR_n (f_*, \lambda) = \frac{\< \bbeta_*, (\bSigma + \lambda_* )^{-2} \bbeta_* \> + \sigma_\eps^2 + \| f_* \|_{L^2}^2 - \| \bbeta_*\|_2^2}{1 - \frac{1}{n}\Tr(\bSigma^2(\bSigma +\lambda_*)^{-2})},
\]
where $\bSigma \in \R^{p \times p}$ has entries given by Eq.~\eqref{eq:def_H_example} and $\bbeta_* = \bSigma^{-1/2} \E_\bu[\sigma(\bW \bu) f_*(\bu)]$.

\begin{remark}[Local and average laws]\label{rmk:local_average_law} In the proof of Corollary \ref{cor:well_concentrated_Test_error}, we show approximation rates $\widetilde{O}(n^{-1/2})$ for both the bias and variance terms \eqref{eq:bias_variance_intro}. While we expect this rate to be optimal for the bias term, \cite{cheng2022dimension} proves a faster rate $\widetilde{O} (n^{-1})$ for the variance term, which matches the scale of fluctuations expected from the \textit{average law} for the resolvent. This is a well known phenomenon in random matrix theory: linear statistics of random matrices have variance $\widetilde{O} (n^{-1})$, much smaller than $\widetilde{O}(n^{-1/2})$, due to eigenvalue correlations. 

We note that the rate $\widetilde{O}(n^{-1/2})$ is a fundamental limitation of our proof strategy when showing deterministic equivalents (see Remark \ref{rmk:comparison_Cheng_theorems} in Section \ref{sec_outline:det_equiv} and Remark \ref{rmk:interpolation_path} in Appendix \ref{app_det_equiv:preliminaries} for further discussion). On the other hand, our approach is elementary and significantly improves the approximation guarantee for the bias term, which was the primary motivation. Combining Corollary \ref{cor:well_concentrated_Test_error} and the results from \cite{cheng2022dimension}, we thus obtain
\[
\cB (\bbeta_* ; \bX,\lambda) = (1 + \widetilde{O} (n^{-1/2}) ) \cdot \sB_n(\bbeta_*,\lambda), \qquad\quad \cV (\bX,\lambda) = (1 + \widetilde{O} (n^{-1})) \cdot \sV_n (\lambda),
\]
for any target function $\| \bbeta_* \|_2 < \infty$.
\end{remark}

\subsection{Inner-product kernels on the sphere}
\label{sec:inner-product_main}

Denote $\S^{d-1} (\sqrt{d}) := \{ \bu \in \R^d: \| \bu \|_2 = \sqrt{d}\}$ the $d$-dimensional sphere of radius $\sqrt{d}$, and $\tau_d$ the uniform measure on $\S^{d-1} (\sqrt{d})$. In our second example, we assume that the covariates are distributed according to $\{ \bu_i \}_{i \in [n]} \sim_{i.i.d.} \tau_d$ and that the p.s.d.~kernel $K: \S^{d-1} (\sqrt{d}) \times \S^{d-1} (\sqrt{d}) \to \R$ is an inner-product kernel defined by $K(\bu,\bu') = h (\<\bu,\bu'\> / d)$ for some function $h : [-1,1] \to \R$.

We start by introducing some notations. Let $L^2 ( \S^{d-1} (\sqrt{d})):= L^2 (\S^{d-1} (\sqrt{d}),\tau_d)$ be the space of square integrable functions with respect to $\tau_d$. We denote $\< f,g\>_{L^2} = \E_{\bu \sim \tau_d}[f(\bu) g(\bu)]$ the scalar product and $\| f\|_{L^2}  = \<f , f \>^{1/2}_{L^2}$ the norm in $L^2 ( \S^{d-1} (\sqrt{d}))$. For each $k \in \naturals$, let $V_{d,k}$ be the linear subspace spanned by degree-$k$ polynomials that are orthogonal (with respect to $\< \cdot, \cdot \>_{L^2}$) to all polynomials of degree less or equal to $k-1$. Using these definitions, we have the following orthogonal decomposition
\[
L^2 ( \S^{d-1} (\sqrt{d})) = \bigoplus_{k = 0}^\infty V_{d,k}.
\]
 Denote $B_{d,k} := \dim (V_{d,k})$ and $\proj_k$ the orthogonal projection onto $V_{d,k}$ in $L^2 ( \S^{d-1} (\sqrt{d}))$. For each $k \in \naturals$, we further introduce a set $\{ Y_{ks} \}_{s \in [B_{d,k}]}$ of degree-$k$ spherical harmonics that forms an orthonormal basis of $V_{d,k}$. In particular, by construction
 \[
 \< Y_{ks}, Y_{lr} \>_{L^2} = \delta_{kl}\delta_{sr},
 \]
 and $\{ Y_{ks} \}_{k\geq 0, s\in [B_{d,k}]}$ forms an orthonormal basis of $L^2 ( \S^{d-1} (\sqrt{d}))$.

The inner-product kernel has the following simple eigendecomposition in this basis of spherical harmonics
\[
K ( \bu,\bu') = h(\< \bu , \bu'\> / d) = \sum_{k = 0}^\infty \oxi_k \sum_{s \in [B_{d,k}]} Y_{ks} (\bu) Y_{ks} (\bu'),
\]
where the eigenvalues $\oxi_k$ have multiplicity $B_{d,k}$ and are given explicitly by
\[
\oxi_k =  \frac{1}{\sqrt{B_{d,k}}}\E_{\bu} \left[ h(u_1/\sqrt{d}) Q_k ( u_1/\sqrt{d})\right].
\]
Here $\{ Q_k \}_{k \geq 0}$ denotes the orthonormal basis of Gegenbauer polynomials in $L^2 ( [-1,1], \tau_{d,1})$ where $\tau_{d,1}$ is the marginal distribution of $u_1/\sqrt{d}$ with $\bu \sim \tau_d$. See Appendix \ref{app_ex:background_spherical} for further background on spherical harmonics and Gegenbauer polynomials.

We will assume that the kernel function $h:[-1,1] \to \R$ satisfy the following conditions:

\begin{assumption}[Inner-product kernel at $L \in \naturals$]\label{ass:inner_product_sphere}
    We assume that there exists a constant $\sfC_L >0$ such that the kernel function $h :[-1,1] \to \R$ satisfy the following.
    \begin{itemize}
        \item[\emph{(a)}] We assume that $h(1) \leq \sfC_L$ and 
        \[
        \oxi_k B_{d,k} \geq \frac{1}{\sfC_{L}}, \qquad\quad \text{for $k=0,\ldots,L$.}
        \]
        \item[\emph{(b)}] We assume that $h_{>L}(1) \geq 1/\sfC_L$. In particular, $h$ is not a degree-$L$ polynomial.

        \item[\emph{(c)}] We assume the moments of the target function have bounded growth: for any integer $q \geq 2$,
        \begin{equation}\label{eq:bound_moment_f_*_spherical}
        \E_\bu \left[ | f_*( \bu)|^q\right] \leq  (\sfC_L q)^{qL/2} \| f_*\|_{L^2}^q.
        \end{equation}
        We further assume that $\| \proj_{>L} f_* \|_{L^2} \geq \| f_* \|_{L^2} /\sfC_L$.
    \end{itemize}
\end{assumption}

For example, Assumption \ref{ass:inner_product_sphere}.(c) is verified for $f_* ( \bx) = g(\<\be, \bx\>)$ with $|g(x)| \leq \sfC_{L} \| g \|_{L^2} (1 + |x|)^L$ where $g$ is not a degree-$L$ polynomial. Below, we will denote $C_L$ constants that only depends on the values of $L,\sfC_L$. Note that the conditions in Assumption \ref{ass:inner_product_sphere} are chosen to simplify the statement of our approximation guarantees and can be relaxed using a more careful analysis.

The effective regularization is given in this setting as the unique non-negative solution of
%Its associated deterministic equivalent is given by
\begin{equation}\label{eq:eff_reg_Sphere}
\begin{aligned}
    n - \frac{\lambda}{\lambda_*} =&~ \sum_{k=0}^\infty \frac{B_{d,k} \oxi_k}{\oxi_k + \lambda_*}.
\end{aligned}
\end{equation}
In this example, we denote the test error $\cR_{\test} (f_*;\bU,\beps,\lambda)$ to emphasize the dependency on the covariates $\bU := [ \bu_1 , \ldots, \bu_n]^\sT \in \R^{n \times d}$.
Its associated deterministic equivalent is given by
\begin{equation}\label{eq:det_equiv_Sphere}
\sR_n (f_* ,\lambda) =  \frac{1}{1 - \frac{1}{n} \sum_{k=0}^\infty \frac{B_{d,k} \oxi_k^2}{(\oxi_k + \lambda_*)^2}}  \left( \sum_{k=0}^\infty \| \proj_k f_* \|_{L^2}^2 \frac{\lambda_*^2}{(\oxi_k + \lambda_*)^2} + \sigma_\eps^2 \right).
\end{equation}

Applying Theorem \ref{thm:abstract_Test_error} to this setting, we obtain the following approximation guarantee:

\begin{theorem}[Test error for inner-product kernels on the sphere]\label{thm:spherical_test} For any integer $L >0$ and constant $D>0$, and assuming that the target function $f_* \in L^2 (\S^{d-1} (\sqrt{d}))$ and the inner-product kernel $h:[-1,1] \to \R$ satisfy Assumption \ref{ass:inner_product_sphere} at $L$, there exist constants $C_{L,D}>0$ and $C_{L,\eps,D}$ such that the following holds. For any integers $d \geq C_{L,D}$ and $C_{L,D} \leq n \leq d^L$, and defining $\ell\geq 1$ to be the closest positive integer to $\log(n) / \log(d)$, we have with probability at least $1 - n^{-D}$, 
    \[
    \left| \cR_{\test} ( f_* ; \bU,\beps, \lambda) - \sR_n (f_*,\lambda) \right| \leq C_{L,\eps,D} \cdot \log^{3L +7/2} (n\wedge d) \left\{ \sqrt{\frac{d^{\ell -1}}{n}} + \sqrt{\frac{n}{d^{\ell+1}}} \right\}\cdot \sR_n (f_*,\lambda).
    \]
\end{theorem}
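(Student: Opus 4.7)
The plan is to reduce Theorem~\ref{thm:spherical_test} to the master Theorem~\ref{thm:abstract_Test_error} by splitting the featurization into low-frequency and high-frequency spherical harmonics. Given $n \leq d^L$, let $\ell \geq 1$ be the closest integer to $\log(n)/\log(d)$, and set
\[
\evn \;=\; \sum_{k=0}^{\ell} B_{d,k} \;\asymp\; d^{\ell},
\]
so that $\bx_{\leq \evn}$ collects $\{Y_{ks}: k \leq \ell,\; s \in [B_{d,k}]\}$ and $\bx_{>\evn}$ collects the rest. Under Assumption~\ref{ass:inner_product_sphere}, one has $\oxi_k \asymp d^{-k}$ for $k \leq L$, so that $\Tr(\bSigma_{>\evn}) = h_{>\ell}(1) \asymp 1$, the top eigenvalue of $\bSigma_{>\evn}$ is $\oxi_{\ell+1} \asymp d^{-(\ell+1)}$, and the regularized tail rank obeys $r_\lambda(\evn) \gtrsim d^{\ell+1}$. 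Since $\lambda_{>\evn} \gtrsim 1$, combined with the bounds $r_{\seff,\evn}(n) \lesssim d^{\ell}$ and $\xi_{\lfloor \eta n \rfloor} \asymp d^{-\ell}$, one concludes $\nu_{\lambda,\evn}(n) \lesssim \polylog(n)$.

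The verification of Assumption~\ref{ass:main_assumptions}.(a) is classical: hypercontractivity on $\spn\{Y_{ks}: k \leq \ell\}$ reads $\|h\|_{L^q}^2 \leq (Cq)^{\ell} \|h\|_{L^2}^2$, giving \eqref{eq:ass_quad_concentration_2} with $\beta = \ell$ via Markov's inequality. For the quadratic-form bound \eqref{eq:ass_quad_concentration_1}, the sharpened rate $\varphi_1(\evn) = \widetilde{O}(\sqrt{\evn/d}) = \widetilde{O}(d^{(\ell-1)/2})$ follows from Proposition~\ref{prop_app:quad_form_quadratic} in the appendix, rather than the loose $\sqrt{\evn}$ implied by \eqref{eq:a2_from_hypercontractivity}. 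Assumption~\ref{ass:main_assumptions}.(c) follows from the moment growth \eqref{eq:bound_moment_f_*_spherical}: by the triangle inequality and hypercontractivity of $f_{*,\leq \evn}$, one gets $\|f_{*,>\evn}\|_{L^q} \lesssim_L q^{L/2} \|f_*\|_{L^2}$, and since $\|f_{*,>\evn}\|_{L^2} \gtrsim_L \|f_*\|_{L^2}$ by Assumption~\ref{ass:inner_product_sphere}.(c), this delivers the required Weibull tail relative to $\|f_{*,>\evn}\|_{L^2}$.

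The main obstacle is Assumption~\ref{ass:main_assumptions}.(b), which requires the high-frequency Gram matrix $\bK_{>\evn} = (h_{>\ell}(\langle \bu_i, \bu_j \rangle/d))_{i,j \in [n]}$ to concentrate at $\Tr(\bSigma_{>\evn}) \cdot \id_n$ in operator norm \emph{with probability at least $1 - n^{-D}$}. The matrix-Bernstein bound encoded in Proposition~\ref{prop:sufficient_high_degree} yields only an expectation bound, which is too weak. The plan is instead to perform a direct moment method, estimating $\E \,\Tr\bigl[(\bK_{>\evn} - \Tr(\bSigma_{>\evn}) \id_n)^{2r}\bigr]$ for $r \asymp \log n$ and concluding by Markov. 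Expanding the trace as a sum over closed length-$2r$ walks on $[n]$, one uses the Gegenbauer orthogonality of high-degree spherical harmonics to kill walks whose edges are not all repeated, while the scaling $\oxi_{\ell+1} B_{d,\ell+1} \asymp 1$ together with standard graph-combinatorial bookkeeping control the surviving contributions. This is analogous to the moment computations in \cite{ghorbani2021linearized,xiao2022precise,misiakiewicz2022spectrum}, but must be carried out non-asymptotically; it yields Assumption~\ref{ass:main_assumptions}.(b) with $\varphi_{2,n}(\evn) = \polylog(n)$ and $p_{2,n}(\evn) = n^{-D}$.

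Putting everything together, the technical conditions \eqref{eq:condition_test_abstract} are satisfied for $d$ large, since $\lambda_{>\evn} \gtrsim 1$, $\nu_{\lambda,\evn}(n) \lesssim \polylog(n)$, and $\varphi_1(\evn)/\sqrt{n} \lesssim d^{-1/4}$ because $n \leq d^L$ and $\ell \leq L$. Theorem~\ref{thm:abstract_Test_error} then yields
\[
\cE_{\sR,n}(\evn) \;\lesssim\; \polylog(n) \cdot \bigg\{ \sqrt{\frac{d^{\ell-1}}{n}} + \sqrt{\frac{n}{d^{\ell+1}}} \bigg\},
\]
from which Theorem~\ref{thm:spherical_test} follows.
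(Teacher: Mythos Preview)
Your proposal is correct and follows essentially the same approach as the paper: choose $\evn = B_{d,\leq \ell}$, verify Assumption~\ref{ass:main_assumptions}.(a) via hypercontractivity and Proposition~\ref{prop_app:quad_form_quadratic} with $\varphi_1(\evn) = \widetilde{O}(d^{(\ell-1)/2})$, verify (c) via the triangle inequality with the moment-growth hypothesis, and establish (b) with probability $1-n^{-D}$ via a moment method, then plug into Theorem~\ref{thm:abstract_Test_error}. The only minor technical variation is in (b): rather than running the moment method directly on $\bK_{>\evn} - \Tr(\bSigma_{>\evn})\id_n$, the paper first decomposes $h_{>\ell}$ into Gegenbauer layers and bounds $\sup_{k \geq \ell+1}\|\bQ_k - \id\|_\op$, handling finitely many $k$ near $\ell+1$ by the moment method (Lemma~\ref{lem:bound_moment_high_freq}) and the tail $k \geq L(D,\ell)$ by a uniform bound from \cite{ghorbani2021linearized}; this layer-by-layer argument avoids having to track the full kernel's combinatorics, but your direct approach would also work.
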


The proof of this theorem can be found in Appendix \ref{app:proof_inner_test}. Observe that the approximation rate is always better than $\widetilde{O} (d^{-1/4})$ (for $n \geq d^{1/4}$), with rate $\widetilde{O} (d^{-1/2})$ when $n = d^\ell$.

The KRR test error with inner-product kernel on the sphere was precisely characterized in the high-dimensional polynomial scaling \cite{ghorbani2021linearized,xiao2022precise} where $n,d \to \infty$ with $n/d^\kappa \to \gamma$ for $\kappa,\gamma \in \R_{>0}^2$. We recover these asymptotics by taking the polynomial scaling limit in Eqs.~\eqref{eq:eff_reg_Sphere} and \eqref{eq:det_equiv_Sphere}. However, Theorem \ref{thm:spherical_test} improves over these previous results in two significant ways:
\begin{itemize}
    \item The deterministic equivalent \eqref{eq:det_equiv_Sphere} depends explicitly on finite $n,d$ and resolves the ambiguity of the polynomial scaling. For instance, taking $d = 100$ and $n =1000$, it is unclear which polynomial scaling prediction to use, e.g., the one at $n = 10 d$, $n = d^2/10$, or $n = d^{3/2}$.

    \item The approximation guarantee is \textit{multiplicative} instead of \textit{additive}, with an explicit convergence rate. Furthermore, it holds for a fixed target function $f_* \in L^2(\S^{d-1} (\sqrt{d}))$ without randomizing its coefficients \cite{xiao2022precise} or assuming $f_*$ to be a ridge function \cite{hu2022sharp}.
\end{itemize}

We illustrate Theorem \ref{thm:spherical_test} in Figure \ref{fig:sphere_test}. We consider a similar experimental setting as the one considered in \cite[Figure 4]{xiao2022precise}. The data distribution $(\bu_i,y_i)$ is taken to be $\bu_i \sim \Unif (\S^{d-1} (\sqrt{d}))$ with $d= 24$ and $y_i = f_* (\bu_i) + \eps_i$ with $\eps_i \sim \normal (0,\sigma_\eps^2)$ and $\sigma_\eps^2 = 0.1$. We fix the target function to be
\[
f_* (\bu) = \sum_{k = 1}^7 C_{d,k} \sum_{j \in [d]} \prod_{s = j}^{j+k -1} u_s,
\]
where we use the cyclic convention $u_{d+i} = u_i$ and we choose the coefficients $C_{d,k}$ such that $\| \proj_k f_* \|_{L^2}^2 = C_{d,k}^2 d \cdot\E[ u_1^2 \cdots u_k^2 ] = k^{-2}$. To fit this data, we consider KRR with regularization parameter $\lambda =0$ (the `ridge-less' minimum-norm interpolating solution) and inner-product kernel
\[
h (\< \bu , \bu'\> /d) = \sum_{k=1}^7 \oxi_k \sqrt{B_{d,k}} \cdot Q_k ( \< \bu , \bu'\> /d),
\]
where $\oxi_k = \text{Gap}^{-(k-1)}$. We choose $\text{Gap} \in \{8,32,128\}$. Figure \ref{fig:sphere_test} reports the theoretical predictions from the deterministic equivalent \eqref{eq:det_equiv_Sphere} (continuous lines) and the empirical test errors (markers) obtained by solving KRR on $n$ i.i.d.~data points, for $n$ equally spaced (in log-scale) between $n=2$ and $n= 20000$. We plot the average and the standard deviation of the empirical test errors over $50$ independent realizations. 

Figure \ref{fig:sphere_test} shows that the empirical test errors agrees with the deterministic equivalent predictions \eqref{eq:det_equiv_Sphere} for a wide range of $n$, moderate $d$, and for learning curves displaying vastly different behavior (monotonically decreasing for $\text{Gap} = 8$, or with complex multiple descents for $\text{Gap} = 128$). In particular, the average empirical test error is well approximated by the deterministic equivalent predictions even for $n =2$, while the standard deviation decreases rapidly with $n$, becoming smaller than $15\%$ of the mean as soon as $n \geq n_0 \in \{2,54,66\}$ for $\text{Gap} \in \{8,64, 128\}$ respectively.

\section{Uniform consistency of the GCV estimator}
\label{sec_main:GCV}

The deterministic approximation to the test error studied in Section \ref{sec_main:test_error} crucially depends on having access to the eigendecomposition of the kernel, which is generically intractable. Furthermore, we would like in practice to estimate the relevant statistics of our predictive model \textit{from data}. As a concrete application of the deterministic approximations developed in Section \ref{sec_main:test_error}, we study below the generalized cross-validation (GCV) estimator. In short, we show that under the abstract Assumption \ref{ass:main_assumptions}, the GCV estimator can be used to efficiently estimate the test error and optimally tune the regularization parameter $\lambda$, with explicit non-asymptotic bounds.

First, we briefly recall the definition of the cross-validation estimator. Consider $\hat f_\lambda$ the KRR predictor trained on $n$ data points $(\bu_i,y_i)_{i \in [n]}$ as defined in problem~\eqref{eq:KRR_problem_RKHS}. For each $i\in [n]$, we introduce $\hf^{-i}_\lambda$ the KRR predictor trained on all but the $i$-th sample $(\bu_i , y_i)$. The leave-one-out cross-validation (CV) estimator is then given by
\[
\widehat{\CV}_n (\lambda) = \frac{1}{n } \sum_{i = 1}^n \big( y_i - \hf_\lambda^{-i} (\bu_i)  \big)^2 ,
\]
which can be used as an estimate of the out-of-sample prediction error $\cR_\test (\hat f_\lambda , \P)$. 

For linear models, the leave-one-out predictors $\hf^{-i}_\lambda$ do not need to be computed for each $i \in [n]$, which would be computationally expensive. Instead, $\widehat{\CV}_n (\lambda) $ can be rewritten as a weighted average of the training errors, which is known as the ``shortcut formula''. Denote $\hat\boldf_\lambda = [ \hf_{\lambda} (\bu_1) , \ldots , \hf_{\lambda} (\bu_n ) ]$ the KRR predictor evaluated on the $n$ training data points and observe that $\hat\boldf_\lambda$ can be written as a linear transformation of the $n$ labels $\by = (y_1, \ldots , y_n)$ with
\[
\hat\boldf_\lambda = \bS_{\lambda} \by \, , \qquad\qquad \bS_{\lambda } := \bX \bX^\sT  (\bX \bX^\sT + \lambda)^{-1}  .
\]
Then, a simple application of the Sherman-Morrison-Woodbury formula yields the identity
\[
\widehat{\CV}_n (\lambda) = \frac{1}{n} \sum_{i =1}^n \left( \frac{y_i - \hf_{\lambda} (\bx_i) }{1 - (\bS_{\lambda})_{ii}} \right)^2 \, .
\]

An alternative to the CV estimator was introduced in \cite{craven1978smoothing,golub1979generalized} using the following heuristic approximation: observing that the training samples are exchangeable, we replace $(\bS_{\lambda})_{ii}$ by their average $\Tr(\bS_\lambda)/n$ in the expression of $\widehat{\CV}_n (\lambda) $. The resulting estimator, known as the \textit{generalized cross-validation} (GCV) estimator, reads
\begin{equation}\label{eq:expression_GCV}
\widehat{\GCV}_\lambda (\bK,\by) = \frac{1}{n} \frac{\| \by - \hat\boldf_\lambda  \|_2^2}{(1 - \Tr ( \bS_{\lambda} )/n)^2} = n \frac{\by^\sT ( \bK + \lambda \id_n )^{-2} \by}{\Tr ( (\bK + \lambda)^{-1} )^2} ,
\end{equation}
where we denoted $\bK := \bX \bX^\sT = (K(\bu_i,\bu_j) )_{ij \in [n]}$ the empirical kernel matrix. In words, GCV estimates the test error using the train error normalized by the square of the Stieltjes transform of the empirical kernel matrix. As observed in \cite{hastie2022surprises,wei2022more}, the right most expression in Eq.~\eqref{eq:expression_GCV} is well defined when taking $\lambda \to 0$ (train error equal to $0$) as long as $\bK$ is full rank, and the GCV estimator apply to the min-norm interpolating solution.

\begin{figure}[t]
\centering
\includegraphics[width=0.9\textwidth]{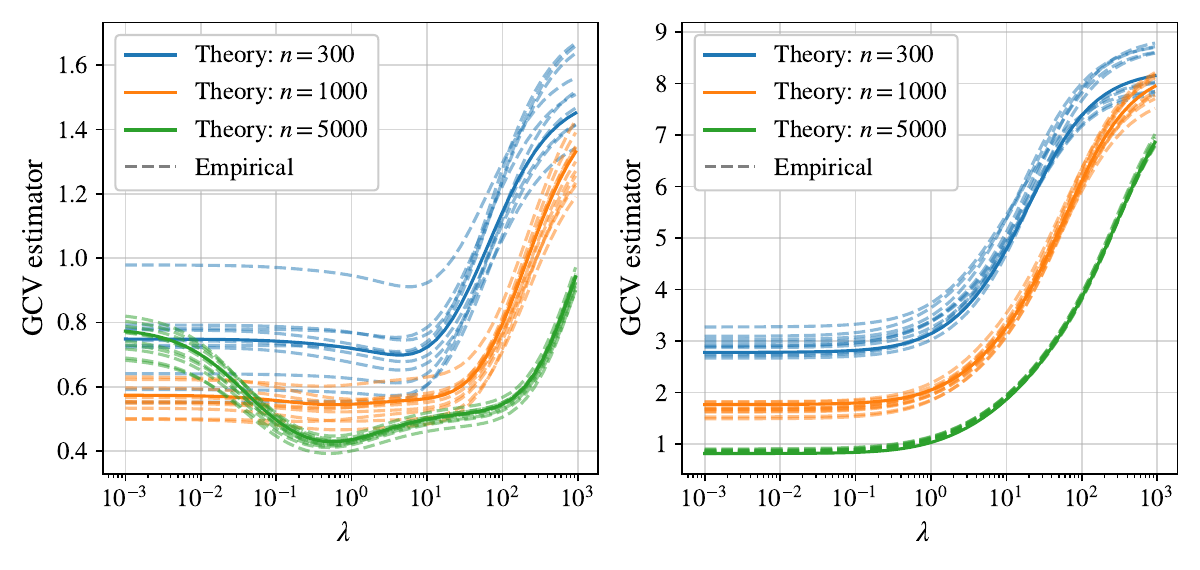}
\vspace{-10pt}
\caption{Predictions of the GCV estimator compared to its deterministic equivalent plotted against the regularization parameter $\lambda \in [10^{-3}, 10^3]$ and sample size $n \in \{300,1000,5000\}$. 
The setting for the synthetic data (left) is the same as Figure~\ref{fig:sphere_test} with gap equal to $128$, while the setting for the real data (right) is the same as in Figure~\ref{fig:real_test} with the NTK kernel of depth 5. The continuous lines correspond to the theoretical predictions as computed by the deterministic equivalents, while each of the 20 dashed lines corresponds to the GCV estimator computed from a sample of size $n$.
\label{fig:GCV}}
\end{figure}

As an application of the deterministic approximations developed in Section \ref{sec_main:test_error}, we show that under the abstract setting of Section \ref{sec:assumptions_test_error}, the GCV estimator uniformly approximates the test error over the choice of the ridge parameter $\lambda \in [0, n^K]$.

\begin{theorem}[Uniform consistency of the GCV estimator]\label{thm:abstract_GCV}
Consider $D,K>0$, integer $n$, a maximum regularization parameter $\lambda_{\max} \leq n^K$, and target function $f_* \in L^2 (\cU)$ with parameters $\| \bbeta_*\|_2 = \| f_* \|_{L^2} < \infty$. Assume that the features $\{\bx_i\}_{i\in[n]}$ and $f_*$ satisfy Assumption \ref{ass:main_assumptions} with $\lambda= 0$ and some $\evn := \evn (n) \in \naturals \cup \{\infty\}$, and the $\{\eps_i\}_{i\in [n]}$ satisfy Assumption \ref{ass:noise_subGaussian}. There exist constants $\eta := \eta_x \in(0,1/2)$, $C_{D,K} >0$, $C_{x,\eps,D,K}>0$, and $\widetilde{C}_{x,\eps,D,K} >0$ such that, if it holds that $n \geq C_{D,K}$ and
\begin{align}\label{eq:condition_GCV_abstract1}
 \Tr(\bSigma_{>\evn})  \geq n^{-K},\qquad \qquad C_{x,\eps,D,K}\cdot \cE_{\sR,n} (\evn) \leq 1.
% \varphi_{2,n} (\evn)  \sqrt{\frac{n}{r_0 (\evn)}} \leq \frac{1}{2}, \\
%  C_{x,\eps,D,K} \cdot \varphi_1(\evn) \nu_{0,\evn} (n)^{8} \log^{3\beta + \frac{1}{2}} (n)\leq \sqrt{n}, \label{eq:condition_GCV_abstract2}
\end{align}
where $\cE_{\sR,n} (\evn)$ is defined in Eq.~\eqref{eq:asbstract_relative_error}, then with probability at least $1 - n^{-D} - p_{2,n} (\evn)$, we have
    \begin{equation}\label{eq:abstract_GCV_error}
    \sup_{\lambda \in [0, \lambda_{\max}]}\left| \frac{\widehat{\GCV}_\lambda (\bK,\by)}{\cR_{\test} (\bbeta_*;\bX,\beps,\lambda)} - 1 \right| \leq \widetilde{C}_{x,\eps,K,D} \cdot \cE_{\sG,n} (\evn),
    \end{equation}
    where 
    \begin{equation}\label{eq:asbstract_relative_error_GCV}
    \cE_{\sG,n}  (\evn) :=  \frac{\varphi_1(\evn) \nu_{0,{\evn}}(n)^{8} \log^{3\beta +1/2}(n)}{\sqrt{n}}    +  \nu_{0,\evn} (n)  \varphi_{2,n} (\evn)  \sqrt{\frac{n}{r_0 (\evn)}}  .
\end{equation}
\end{theorem}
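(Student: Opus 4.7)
The plan is to show that $\widehat{\GCV}_\lambda$ shares the same deterministic equivalent as the test error, namely $\sR_n(\bbeta_*,\lambda)$. Once I establish
\[
\left|\widehat{\GCV}_\lambda - \sR_n(\bbeta_*,\lambda)\right| \leq C\,\cE_{\sG,n}(\evn)\cdot \sR_n(\bbeta_*,\lambda)
\]
with high probability, combining this with Theorem~\ref{thm:abstract_Test_error} and the smallness hypothesis~\eqref{eq:condition_GCV_abstract1} on $\cE_{\sR,n}(\evn)$ yields $\widehat{\GCV}_\lambda/\cR_{\test} = 1 + O(\cE_{\sG,n}(\evn))$ by the triangle inequality, since both $\widehat{\GCV}_\lambda$ and $\cR_{\test}$ are then comparable to $\sR_n$.

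For a fixed $\lambda \in [0,\lambda_{\max}]$, I will expand $\by = \bX\btheta_*+\beps$ in the numerator of~\eqref{eq:expression_GCV}, producing a signal term $\btheta_*^\sT\bX^\sT(\bK+\lambda)^{-2}\bX\btheta_*$, a cross term with zero conditional mean over $\beps$, and a noise term $\beps^\sT(\bK+\lambda)^{-2}\beps$. Theorem~\ref{thm:stieltjes_general} handles the denominator through $\Tr((\bK+\lambda)^{-1}) \approx 1/\lambda_*$. Conditionally on $\bX$, Hanson-Wright applied to the sub-Gaussian $\beps$ shows that the cross term is lower-order and concentrates the noise piece on $\sigma_\eps^2\Tr((\bK+\lambda)^{-2})$; differentiating~\eqref{eq:def_lambda_star} in $\lambda$ yields $\partial_\lambda\lambda_*=1/[n(1-\chi)]$ with $\chi := n^{-1}\Tr(\bSigma^2(\bSigma+\lambda_*)^{-2})$, giving $\Tr((\bK+\lambda)^{-2})\approx 1/[n(1-\chi)\lambda_*^2]$. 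Using the algebraic identity $\bX^\sT(\bK+\lambda)^{-2}\bX=\bX^\sT\bX(\bX^\sT\bX+\lambda)^{-2}$, the signal term reduces to a second-order resolvent quadratic form whose dimension-free deterministic equivalent is $\langle\bbeta_*,(\bSigma+\lambda_*)^{-2}\bbeta_*\rangle/[n(1-\chi)]$, obtained through the same machinery as in Section~\ref{sec_outline:det_equiv}: Assumption~\ref{ass:main_assumptions}(b1) at $\lambda=0$ absorbs the high-degree block into an effective floor $\lambda_{>\evn}=\Tr(\bSigma_{>\evn})$, while Assumptions~\ref{ass:main_assumptions}(a1)-(a2) supply the necessary low-degree concentration for the resolvent functionals of $\bX_{\leq\evn}$. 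Plugging these three approximations back into~\eqref{eq:expression_GCV}, the $\lambda_*^{\pm 2}$ factors and the $1/(1-\chi)$ factors recombine so that the deterministic equivalent of $\widehat{\GCV}_\lambda$ equals $\sR_n(\bbeta_*,\lambda)$ exactly. The extra power of $\nu_{0,\evn}$ appearing in $\cE_{\sG,n}$ relative to $\cE_{\sR,n}$ arises from the additional differentiation required by the second-order resolvent.

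To upgrade these pointwise bounds to uniformity on $[0,\lambda_{\max}]$, I will use a discretization argument. On the high-probability event from Assumption~\ref{ass:main_assumptions}(b1) at $\lambda=0$, one has $\lambda_{\min}(\bK+\lambda) \geq \Tr(\bSigma_{>\evn})/2 + \lambda$ uniformly in $\lambda \geq 0$, so $\widehat{\GCV}_\lambda$, $\cR_{\test}(\lambda)$, and $\sR_n(\lambda)$ are all smooth functions of $\lambda$ with derivatives polynomially bounded in $n$. A grid of size $n^{O(1)}$, combined with a union bound of the pointwise approximation at each grid point and Lipschitz interpolation in between, produces the uniform statement~\eqref{eq:abstract_GCV_error} at the cost of logarithmic factors absorbed into the constant. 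The hypothesis $\Tr(\bSigma_{>\evn}) \geq n^{-K}$ is used critically here to keep the effective regularization floor polynomially large all the way down to the interpolating limit $\lambda=0$.

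The principal technical obstacle is the multiplicative dimension-free deterministic equivalent for the second-order signal functional $\btheta_*^\sT\bX^\sT\bX(\bX^\sT\bX+\lambda)^{-2}\btheta_*$ that is valid uniformly over all $\bbeta_*$ with $\|\bbeta_*\|_2<\infty$, including target functions that need not lie in the RKHS. This is the same difficulty that drives the proof of Theorem~\ref{thm:abstract_Test_error} and requires the higher-order deterministic equivalent machinery of Section~\ref{sec_outline:det_equiv} rather than only the first-order resolvent theory of~\cite{cheng2022dimension}; a secondary subtlety is ensuring that the Lipschitz-in-$\lambda$ constants remain polynomial as $\lambda \to 0$, which is precisely why the assumption $\Tr(\bSigma_{>\evn}) \geq n^{-K}$ enters the statement.
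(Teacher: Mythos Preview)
Your proposal is correct and follows essentially the same approach as the paper: both show that $\widehat{\GCV}_\lambda$ and $\cR_{\test}$ share the deterministic equivalent $\sR_n(\bbeta_*,\lambda)$, then upgrade to uniformity via a polynomial-size grid plus Lipschitz-in-$\lambda$ bounds (where the floor $\Tr(\bSigma_{>\evn})\geq n^{-K}$ keeps the Lipschitz constants polynomial). The only organizational difference is that the paper is more modular---it invokes the already-proved training-error approximation (Theorem~\ref{thm:training_general}) and Stieltjes approximation (Theorem~\ref{thm:stieltjes_general}) as black boxes and combines them through the identity $\sL_n/(\lambda^2\sfs_n^2)=\sR_n$, whereas you re-derive the numerator decomposition $\by^\sT(\bK+\lambda)^{-2}\by=T_1+2T_2+T_3$ inline; the substance is identical, and the $\nu^8$ in $\cE_{\sG,n}$ indeed comes from the training-error rate $\cE_{\sL,n}$ rather than from an extra differentiation.
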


Compared to previous works that have studied the uniform consistency of the GCV estimator \cite{xu2019consistent,hastie2022surprises,patil2021uniform}, the guarantee \eqref{eq:abstract_GCV_error} displays the following favorable properties:
\begin{itemize}
    \item It is \textit{non-asymptotic} and provides guarantees for the GCV estimator for moderate number of samples. Furthermore, it applies beyond the standard linear regression setting, e.g., infinite-dimensional features (Section \ref{sec:example_concentrated}) or inner-product kernels on the sphere (Section \ref{sec:inner-product_main}, with $\cE_{\sG,n} (\evn)$ same rate as in Theorem \ref{thm:spherical_test}).

    \item The guarantee is \textit{scale-free} and does not depend on the magnitude of $\cR_\test$. Note that this is stronger than showing $\widehat{\GCV}_\lambda - \cR_\test \to 0$ as this does not suffice to argue that $\widehat{\GCV}_\lambda$ recovers the scale of $\cR_\test$ whenever $\cR_\test$ is itself vanishing, e.g., $\cR_\test \approx n^{-\gamma}$.
\end{itemize}

As a consequence of Theorem \ref{thm:abstract_GCV}, we can use the GCV estimator to estimate the optimally-tuned ridge regularization, i.e., the parameter $\lambda_{\opt} \geq 0$ that minimizes the test error
\[
\lambda_{\opt} =  \argmin_{\lambda \geq 0} \;\;\cR_{\test} (\bbeta_* ; \bX, \beps, \lambda).
\]
Consider $\hat\lambda_{\GCV}$ that minimizes the GCV estimator
\[
\begin{aligned}
\widehat\lambda_{\GCV} =&~ \argmin_{\lambda \geq 0} \;\;\widehat{\GCV}_\lambda (\bK,\by).
\end{aligned}
\]
Then, under mild assumptions on the model, we have with the same probability as Eq.~\eqref{eq:abstract_GCV_error} that
\[
\left| \cR_{\test} (\bbeta_* ; \bX, \beps, \widehat\lambda_{\GCV}) -  \cR_{\test} (\bbeta_* ; \bX, \beps, \lambda_{\opt} ) \right| \leq \widetilde{C}_{x,\eps,K,D} \cdot \cE_{\sG,n} (\evn) \cdot \cR_{\test} (\bbeta_* ; \bX, \beps, \lambda_{\opt} ).
\]

The proof of Theorem \ref{thm:abstract_GCV} proceeds by comparing the GCV estimator and the test error to their common deterministic equivalent and can be found in Appendix \ref{app_test:GCV}.

As long as $\Tr(\bSigma_{>\evn})$ is large enough, Theorem \ref{thm:abstract_GCV} provides guarantees for the interpolating solution $\lambda =0$ (or even negative regularization as remarked in \cite{patil2021uniform}). If it is not the case, e.g., overfitting is not benign, then we can replace the interval $\lambda \in [0,\lambda_{\max}]$ by $\lambda \in [ \lambda_{\min},\lambda_{\max}]$ with $\lambda_{\min} >0$. For concreteness, we write a separate corollary in this case, which we specialize to the example of concentrated features.

\begin{corollary}[GCV estimator with concentrated features]\label{thm:well_concentrated_GCV} Under Assumptions \ref{ass:noise_subGaussian} and \ref{ass:concentrated}, for any $D,K>0$, there exist constants $\eta := \eta_x \in(0,1/2)$, $C_{K,D}>0$, $C_{x,\eps,D,K}>0$, and $\widetilde{C}_{x,\eps,D,K} >0$ such that the following holds. For any $n \geq C_{K,D}$, regularization parameter range $0 < \lambda_{\min} \leq \lambda_{\max} \leq n^K$, and $f_* \in L^2(\cU)$, if it holds that
\[
\lambda_{\min} \cdot \nu_{\lambda_{\min}} (n) \geq n^{-K},
\qquad \qquad
 C_{x,\eps,K,D} \cdot \nu_{\lambda_{\min}} (n)^{8} \log^{3\beta + \frac{1}{2}} (n) \leq \sqrt{n},
\]
then with probability at least $1 - n^{-D}$, we have
\[
\sup_{\lambda \in [\lambda_{\min}, \lambda_{\max}]}\left| \frac{\widehat{\GCV}_\lambda (\bK,\by)}{\cR_{\test} (\bbeta_*;\bX,\beps,\lambda)} - 1 \right| \leq \widetilde{C}_{x,\eps,K,D} \frac{\nu_{\lambda_{\min}}(n)^{8} \log^{3\beta +1/2}(n)}{\sqrt{n}}.
\]
\end{corollary}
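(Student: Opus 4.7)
The plan is to specialize Theorem \ref{thm:abstract_GCV} to concentrated features with the choice $\evn = p$. First I would verify that Assumption \ref{ass:main_assumptions} holds under Assumption \ref{ass:concentrated}: taking $\bA = \bv\bv^\sT$ in \eqref{eq:ass_concentrated_features} yields \eqref{eq:ass_quad_concentration_2} (up to reparameterizing the constants), while \eqref{eq:ass_concentrated_features} read verbatim is exactly \eqref{eq:ass_quad_concentration_1} with $\varphi_1(p) = 1$. With $\evn = p$ the high-degree block is empty: $r_\lambda(p) = \infty$ by convention, so Assumption \ref{ass:main_assumptions}.(b) holds with $\varphi_{2,n}(p)\sqrt{n/r_\lambda(p)} = 0$, and $f_{*,>p} = 0$ makes Assumption \ref{ass:main_assumptions}.(c) vacuous. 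Thus the abstract rate \eqref{eq:asbstract_relative_error_GCV} formally collapses to $\nu_{0,p}(n)^8 \log^{3\beta+1/2}(n)/\sqrt{n}$, matching the form of the target bound up to replacing $\nu_{0,p}$ by $\nu_{\lambda_{\min}}$.

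\textbf{Truncating the ridge range.} The subtlety is that applying Theorem \ref{thm:abstract_GCV} verbatim fails because $\Tr(\bSigma_{>\evn}) = 0$ when $\evn = p$, violating the entrance condition $\Tr(\bSigma_{>\evn}) \geq n^{-K}$, and $\nu_{0,p}(n)$ diverges (its denominator $\lambda_{>p} = 0$ vanishes in \eqref{eq:reduced_nu_lambda}). Since the corollary only claims uniformity over $\lambda \in [\lambda_{\min}, \lambda_{\max}]$ with $\lambda_{\min} > 0$, I would rerun the proof of Theorem \ref{thm:abstract_GCV} substituting $\lambda_{\min}$ for $\Tr(\bSigma_{>\evn})$ wherever the original argument uses a lower bound on the effective regularization $\lambda_*$ (which, by the Marchenko-Pastur relation \eqref{eq:def_lambda_star}, satisfies $\lambda_* \geq \lambda/n \geq \lambda_{\min}/n$ throughout the interval). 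This substitution replaces every occurrence of $\nu_{0,p}(n)$ by $\nu_{\lambda_{\min},p}(n) = \nu_{\lambda_{\min}}(n)$, turns the entrance condition $\Tr(\bSigma_{>\evn}) \geq n^{-K}$ into $\lambda_{\min}\cdot\nu_{\lambda_{\min}}(n) \geq n^{-K}$, and produces the claimed multiplicative rate. The assumption $\lambda_{\max} \leq n^K$ enters only logarithmically through the size of the grid used in the supremum.

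\textbf{Main obstacle.} Theorem \ref{thm:abstract_GCV} already delivers uniformity over $\lambda \in [0,\lambda_{\max}]$, so uniformity itself is not at issue once the reduction above is performed; the restriction $\lambda \geq \lambda_{\min}$ only improves the bound. What requires care is ensuring the rerun of the proof remains multiplicative and scale-free at the lower endpoint $\lambda_{\min}$. The key subroutines---the dimension-free deterministic equivalents for the Stieltjes transform $n^{-1}\Tr((\bK+\lambda)^{-1})$ and the quadratic form $n^{-1}\by^\sT(\bK+\lambda)^{-2}\by$ that together form the GCV denominator and numerator---are proved in Section \ref{sec_outline:det_equiv} with multiplicative rates depending polynomially on $\nu_{\lambda,p}(n)$. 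Substituting $\lambda_{\min}$ for $0$ in each estimate preserves this multiplicative structure. The main bookkeeping challenge I expect is verifying that the exponent $8$ of $\nu_{\lambda_{\min}}$ in \eqref{eq:asbstract_relative_error_GCV} is preserved through the interplay between the Stieltjes-transform and quadratic-form error rates, rather than being inflated by the truncation; this should follow from the same interpolation-path bookkeeping as in the proof of Theorem \ref{thm:abstract_GCV} applied with a uniformly nonvanishing $\lambda_* \geq \lambda_{\min}/n$.
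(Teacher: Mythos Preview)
Your proposal is correct and matches the paper's approach: the paper treats this corollary as an immediate modification of Theorem~\ref{thm:abstract_GCV} (and its proof in Appendix~\ref{app_test:GCV}) to the concentrated-feature setting with $\evn=p$, restricting the interval to $[\lambda_{\min},\lambda_{\max}]$ precisely so that $\lambda_{\min}$ can play the role of $\Tr(\bSigma_{>\evn})$ in the lower bounds on $\lambda_*$ and $\|\bG\|_\op^{-1}$ used in Lemmas~\ref{lem:Lipschitz_bound_det_equiv}--\ref{lem:uniform_cv_functionals}. Your identification of the obstruction (the vanishing of $\Tr(\bSigma_{>p})$ and divergence of $\nu_{0,p}$) and its fix (monotonicity of $\nu_\lambda(n)$ in $\lambda$ gives $\nu_\lambda(n)\le\nu_{\lambda_{\min}}(n)$ uniformly over the interval) is exactly the point, and the exponent~$8$ is inherited verbatim from $\cE_{\sG,n}$ in~\eqref{eq:asbstract_relative_error_GCV} since $\varphi_1(p)=1$ and the second summand vanishes.
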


We illustrate Theorem \ref{thm:abstract_GCV} in Figure \ref{fig:GCV}. The left plot corresponds to the setting of Figure \ref{fig:sphere_test} and the right plot to the setting of Figure \ref{fig:real_test} (see caption of the figure for details).

\section{Outline of the proofs}
\label{sec:outline_proofs}

In this section, we outline the proof strategy for the main results in this paper. To prove our results on the GCV estimator, we will establish approximation guarantees on the training error and the Stieltjes transform of the empirical kernel matrix similar to Theorem \ref{thm:abstract_Test_error}. We denote the Stietljes transform and the training error by
\begin{align}
    \label{eq:def_Stieltjes_main}
 s_n (\bX, \lambda) :=&~ \frac{1}{n}  \Tr \big[ ( \bX \bX^\sT + \lambda)^{-1} \big], \\
 \cL_{\train} ( \bbeta_*;\bX,\beps, \lambda) =&~ \frac{1}{n} \sum_{i \in [n]} \big(y_i - \hat f_\lambda (\bx_i)\big)^2. \label{eq:def_training_main}
\end{align}
Their associated deterministic equivalents are given by
\begin{align}
  \sfs_n (\lambda) :=&~ \frac{1}{n \lambda_*}, \label{eq:def_equiv_Stieltjes_main}\\
    \sL_n (\bbeta_* , \lambda) := &~   \left( \frac{\lambda}{n \lambda_*}\right)^2 \cdot \frac{\lambda_*^2 \< \bbeta_*, (\bSigma + \lambda_*)^{-2} \bbeta_*\> +\sigma_\eps^2}{1 - \frac{1}{n} \Tr( \bSigma^2 (\bSigma + \lambda_*)^{-2})}.\label{eq:def_equiv_training_main}
\end{align}
From these expressions, we can directly observe that
\[
\frac{ \sL_n (\bbeta_* , \lambda)}{\lambda^2\sfs_n (\lambda)^2} = \sR_n (\bbeta_* , \lambda),
\]
and the test error and the GCV estimator have the same deterministic equivalents. Thus, Theorem \ref{thm:abstract_GCV} immediately follows from approximation guarantees between the training error and Stieltjes transform and their deterministic equivalents, using a standard union bound over an $\eps$-net of $\lambda \in [0 , \lambda_{\max}]$. The details of this proof can be found in Appendix \ref{app_test:GCV}.

The main technical results underlying our proofs are deterministic equivalents for functionals of the low-degree feature matrix. We describe and discuss these results in Section \ref{sec_outline:det_equiv}. We then outline the general strategy to establish our deterministic approximations in Section \ref{sec_outline:proof_strategy}. Finally, Section \ref{sec_outline:proof_concentrated} provides a self-contained proof in the case of concentrated features (Assumption \ref{ass:concentrated}). The complete proofs of Theorem \ref{thm:abstract_Test_error} and Theorem \ref{thm:abstract_GCV} can be found in Appendix \ref{app:main_proofs}.

\subsection{Deterministic equivalents for feature matrix functionals}
\label{sec_outline:det_equiv}

Consider a feature matrix $\bX = [ \bx_1 , \ldots , \bx_n]^\sT \in \R^{n \times p}$, where $p \in \naturals \cup \{ \infty\}$. For the sake of readability, we will assume that $\bX$ directly corresponds to the low-degree part of the feature matrix, i.e., the feature vectors $\bx_i$'s satisfy Assumption \ref{ass:main_assumptions}.(a) with $\evn = p$ and constants $\sfc_x,\sfC_x,\beta,\varphi_1 (p)>0$. Recall that $\lambda_*$ denote the effective regularization associated to $(n,\bSigma,\lambda)$ (Definition \ref{def:effective_regularization}) and $\nu_{\lambda} (n) := \nu_{\lambda,p} (n)$ is defined as per Eq.~\eqref{eq:reduced_nu_lambda}. 

The test error, training error and Stieltjes transform are functionals of the feature matrix $\bX$ and in particular of the resolvent matrix
\[
\bR := (\bX^\sT \bX + \lambda )^{-1} \in \R^{p \times p}.
\]
As we allow for infinite-dimensional features $p = \infty$ and covariance matrices $\bSigma$ without bounded conditioning number, a key role will be played in our analysis by the following rescaled resolvent
\[
\bM :=  \bSigma^{1/2} (\bX^\sT \bX + \lambda )^{-1} \bSigma^{1/2} \in \R^{p \times p}.
\]
Note that $\Tr( \bM) \leq \Tr(\bSigma)/\lambda < \infty$ by the trace-class assumption on the kernel. In our regime, we show that $\bM$ behaves effectively as a deterministic matrix
\[
\obM := \bSigma ( \mu_* \bSigma + \lambda)^{-1},
\]
where we defined $\mu_* := \lambda / \lambda_*$.

While our analysis can be applied more generally, we will focus on four functionals that are relevant to the purpose of the paper. For a general p.s.d.~matrix $\bA \in \R^{p\times p}$, define
\[
\begin{aligned}
    \Phi_1(\bX;\bA) := &~ \Tr \left(\bA \bSigma^{1/2} (\bX^\sT \bX + \lambda )^{-1}  \bSigma^{1/2}  \right), \\
    \Phi_2 (\bX) := &~  \Tr \left( \frac{\bX^\sT \bX}{n} ( \bX^\sT \bX + \lambda )^{-1} \right), \\
    \Phi_3(\bX ; \bA) :=&~  \Tr \left(  \bA \bSigma^{1/2} ( \bX^\sT \bX + \lambda)^{-1} \bSigma ( \bX^\sT \bX + \lambda)^{-1} \bSigma^{1/2} \right) ,\\
    \Phi_4 ( \bX ; \bA) := &~ \Tr\left( \bA \bSigma^{1/2} ( \bX^\sT \bX + \lambda)^{-1}\frac{\bX^\sT \bX}{n} ( \bX^\sT \bX + \lambda)^{-1}\bSigma^{1/2} \right)   . 
\end{aligned}
\]
Recalling $\bX =\bZ \bSigma^{1/2}$, note that these functionals only depend on $\bM$ and $\bZ^\sT \bZ$. We show below that we can approximate these functionals by the following deterministic functions
\[
\begin{aligned}
    \Psi_1 ( \mu_*; \bA) :=&~  \Tr \left( \bA \bSigma (\mu_* \bSigma + \lambda )^{-1}  \right) ,\\
     \Psi_2 ( \mu_*) :=&~\frac{1}{n} \Tr\left( \bSigma ( \bSigma + \lambda_* )^{-1} \right) ,\\
      \Psi_3 ( \mu_*; \bA) :=&~ \frac{\Tr( \bA \bSigma^2 ( \mu_* \bSigma + \lambda)^{-2})}{1 - \frac{1}{n} \Tr(\bSigma^2 (\bSigma + \lambda_*)^{-2})} , \\
       \Psi_4 ( \mu_*; \bA) :=&~ \frac{1}{n} \cdot \frac{\Tr(\bA \bSigma^2 ( \bSigma + \lambda_*)^{-2})}{n -  \Tr(\bSigma^2 ( \bSigma + \lambda_* )^{-2} ) }.
\end{aligned}
\]
Without loss of generality, we can assume that $\Tr(\bA \bSigma) <\infty$ for $\Phi_1$, since otherwise $\Phi_1 (\bX,\bA)  = \Psi_1 (\bX) = \infty$ almost surely, and $\Tr(\bA \bSigma^2) <\infty$ for $\Phi_3$ and $\Phi_4$, since otherwise $\Phi_j (\bX;\bA) = \Psi_j (\mu_*; \bA) = \infty$, $j = 3,4$, almost surely.

The following theorem gathers the approximation guarantees for these four functionals.

\begin{theorem}[Dimension-free deterministic equivalents]\label{thm:main_det_equiv_summary}
    Assume the features $(\bx_i)_{i\in[n]}$ satisfy Assumption \ref{ass:main_assumptions}.(a) with some constants $\sfc_x,\sfC_x,\beta,\varphi_1 (p)>0$. For any $D,K>0$, there exist constants $\eta := \eta_x \in (0,1/2)$ (only depending on $\sfc_x,\sfC_x,\beta$), $C_{D,K} >0$ (only depending on $K,D$), and $C_{x,D,K}>0$ (only depending on $\sfc_x,\sfC_x,\beta,D,K$), such that the following holds. For all $n \geq C_{D,K}$ and  $\lambda >0$, if it holds that 
    \begin{equation}\label{eq:conditions_det_equiv_main}
    \lambda \cdot \nu_\lambda (n) \geq n^{-K}, \qquad \varphi_1(p) \nu_\lambda (n)^{5/2} \log^{\beta + \frac{1}{2}} (n) \leq K \sqrt{n} ,
    \end{equation}
    then for any deterministic p.s.d.~matrix $\bA$, we have with probability at least $1 - n^{-D}$ that
    \begin{align}
         \big\vert \Phi_1(\bX;\bA) - \Psi_1 (\mu_* ; \bA) \big\vert \leq&~ C_{x,D,K} \frac{\varphi_1(p) \nu_\lambda (n)^{5/2} \log^{\beta +\frac12} (n) }{\sqrt{n}}     \Psi_1 (\mu_* ; \bA) , \label{eq:det_equiv_phi1_main} \\
         \big\vert \Phi_2(\bX) - \Psi_2 (\mu_* ) \big\vert \leq&~ C_{x,D,K} \frac{\varphi_1(p) \nu_\lambda (n)^{5/2} \log^{\beta +\frac12} (n) }{\sqrt{n}}     \Psi_2 (\mu_*) , \label{eq:det_equiv_phi2_main} \\
         \big\vert \Phi_3(\bX;\bA) - \Psi_3 (\mu_* ; \bA) \big\vert \leq&~ C_{x,D,K} \frac{\varphi_1(p) \nu_\lambda (n)^{6} \log^{2\beta +\frac12} (n) }{\sqrt{n}}     \Psi_3 (\mu_* ; \bA) , \label{eq:det_equiv_phi3_main} \\
         \big\vert \Phi_4(\bX;\bA) - \Psi_4 (\mu_* ; \bA) \big\vert \leq&~ C_{x,D,K} \frac{\varphi_1(p) \nu_\lambda (n)^{6} \log^{\beta +\frac12} (n) }{\sqrt{n}}     \Psi_4 (\mu_* ; \bA) . \label{eq:det_equiv_phi4_main} 
    \end{align}
\end{theorem}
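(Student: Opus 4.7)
The plan is to implement a leave-one-out strategy for the resolvent $\bR = (\bX^\sT\bX+\lambda)^{-1}$, adapted to the dimension-free regime so that all bounds are uniform in the feature dimension $p$ and multiplicative in the deterministic targets $\Psi_j$. The core reduction uses the Sherman--Morrison identity $\bR\bx_i = \bR_i\bx_i/(1+\xi_i)$, where $\bR_i = (\bX_{-i}^\sT\bX_{-i}+\lambda)^{-1}$ and $\xi_i = \bx_i^\sT\bR_i\bx_i$, to decouple each rank-one contribution from its own data point. First, using Assumption \ref{ass:main_assumptions}(a) together with the rank-one stability bound $\|\bR_i - \bR\|_{\op}\leq 1/\lambda$, I would establish a high-probability concentration inequality of the form
\[
\big|\bx_i^\sT \bG \bx_i - \Tr(\bSigma \bG)\big|\;\leq\; \mathsf{err}_n\cdot \|\bSigma^{1/2}\bG\bSigma^{1/2}\|_F, \qquad \mathsf{err}_n \asymp \varphi_1(p)\,\log^{\beta}(n),
\]
for $\bG$ independent of $\bx_i$. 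Applying this with $\bG = \bR_i$ and $\bG = \bB\bR_i$, where $\bB := \bSigma^{1/2}\bA\bSigma^{1/2}$, and plugging into the master identity $\sum_i \bR\bx_i\bx_i^\sT + \lambda\bR = \id$, one arrives at the scalar approximate equation
\[
\frac{n\,\Tr(\bB\bR\bSigma)}{1+\tau} + \lambda\,\Tr(\bB\bR) \;\approx\; \Tr(\bB),\qquad \tau := \Tr(\bSigma\bR).
\]
Matching the empirical scalar $\mu := n/(1+\tau)$ to the Marchenko--Pastur fixed point $\mu_* = \lambda/\lambda_*$ via a stability analysis of the MP equation (which is where the factor $\nu_\lambda(n)^{5/2}$ enters), I obtain $\Tr(\bB\bR)\approx \Tr(\bB(\mu_*\bSigma+\lambda)^{-1})$, proving \eqref{eq:det_equiv_phi1_main}. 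The bound \eqref{eq:det_equiv_phi2_main} for $\Phi_2$ is then a direct consequence, since $\Phi_2 = 1 - \lambda n^{-1}\Tr((\bK+\lambda)^{-1})$ is the specialization of the $\Phi_1$ analysis to the trace-class test matrix $\bSigma$, via the identity $\Tr(\bX^\sT\bX\,\bR) = n - \lambda\Tr((\bK+\lambda)^{-1})$.

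For the quadratic functionals $\Phi_3$ and $\Phi_4$, I would use the ghost-variable method: introduce an independent $\bx_*\sim\P_{\bx}$ and exploit $\bSigma = \E_{\bx_*}[\bx_*\bx_*^\sT]$, so that $\Phi_3(\bX;\bA) = \E_{\bx_*}\!\bigl[\bx_*^\sT\bR\bB\bR\bx_*\bigr]$. Applying Sherman--Morrison with the ghost data point, $\bR_+ = (\bX^\sT\bX + \bx_*\bx_*^\sT + \lambda)^{-1}$, gives $\bR\bx_*\bx_*^\sT\bR = (1+\xi_*)(\bR-\bR_+)$ with $\xi_* = \bx_*^\sT\bR\bx_*$, and hence reduces the quadratic functional to the linear form
\[
\Phi_3(\bX;\bA) = \E_{\bx_*}\!\bigl[(1+\xi_*)\,\Tr(\bB(\bR-\bR_+))\bigr].
\]
Step~1 concentrates $\xi_*\approx \Tr(\obM)$, and the linear equivalent for $\Phi_1$ applied with both $n$ and $n+1$ samples controls $\Tr(\bB(\bR-\bR_+))$ through an implicit differentiation of the MP equation in $n$. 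Unfolding this identity yields an approximate self-consistent relation for $\Phi_3$ whose solution is exactly $\Psi_3$; the denominator $1 - n^{-1}\Tr(\bSigma^2(\bSigma+\lambda_*)^{-2})$ arises naturally as the linearization of the MP equation around $\mu_*$, while the numerator matches $\Tr(\bA\bSigma^2(\mu_*\bSigma+\lambda)^{-2})$. The case of $\Phi_4$ is then handled analogously: the resolvent identity $\bR\bX^\sT\bX\bR = \bR - \lambda\bR^2$ reduces it to a linear combination involving $\Phi_1$ and a second-order resolvent object, which the same ghost-variable argument treats.

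The main obstacle will be Step~3: keeping the bound for $\Phi_3$ and $\Phi_4$ multiplicative despite the denominator $1 - n^{-1}\Tr(\bSigma^2(\bSigma+\lambda_*)^{-2})$ in $\Psi_3$ being potentially close to $0$. A naive approach via differentiating the linear equivalent $\Phi_1(\bX;\bA)$ in $\lambda$ fails because pointwise approximation does not transfer to derivatives without losing a factor from the difference quotient; the ghost-variable route bypasses this by producing the self-consistent denominator \emph{directly}, but one must carefully track how the stability of the MP fixed point (responsible for the $\nu_\lambda(n)^{5/2}$ in Step~2) propagates through the additional linearization. This is what forces the stronger factor $\nu_\lambda(n)^6$ in \eqref{eq:det_equiv_phi3_main}--\eqref{eq:det_equiv_phi4_main}, and controlling the propagation in a multiplicative sense (rather than additively in $\Psi_3$) is the crux of the argument.
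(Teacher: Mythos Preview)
Your approach for $\Phi_1$ is essentially correct and close to the paper's: both use Sherman--Morrison leave-one-out and a stability comparison between the empirical $n/(1+\tau)$ and the MP fixed point $\mu_*$. The paper organizes this more explicitly by splitting $|\Phi_1-\Psi_1|\leq |\Phi_1-\E\Phi_1|+|\E\Phi_1-\Psi_1|$ and treating the martingale part via Azuma--Hoeffding with truncation, but your route would also get there.

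The genuine gap is in your treatment of $\Phi_3$. The ghost-variable identity $\Phi_3=\E_{\bx_*}\bigl[(1+\xi_*)\,\Tr\bigl(\bB(\bR-\bR_+)\bigr)\bigr]$ is correct, and implicit differentiation of the MP equation does produce the right deterministic answer $\Psi_3$. But applying the $\Phi_1$ equivalent separately at $n$ and $n+1$ and subtracting does \emph{not} bypass the differencing issue you flag for $\lambda$-differentiation; it is the same obstruction in the discrete variable $n$. Each of $\Tr(\bB\bR)$ and $\E_{\bx_*}\Tr(\bB\bR_+)$ carries an error of order $\cE_{1,n}\Psi_1(\mu_*;\bA)$, whereas the target $\Psi_1(\mu_*(n);\bA)-\Psi_1(\mu_*(n+1);\bA)$ is of order $\Psi_3/n$. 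Since $\Psi_3/\Psi_1$ can be as small as $\xi_j/\lambda$ for $\bA$ concentrated on a low-eigenvalue direction $\be_j$, the relative error becomes $O(n\,\cE_{1,n})\gg 1$ and the bound is not multiplicative in $\Psi_3$. The two errors at $n$ and $n+1$ share $n$ samples and are correlated, but exploiting that correlation amounts to redoing the leave-one-out analysis at the level of $\bM^2$, not invoking the $\Phi_1$ result as a black box.

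The paper proceeds directly on $\E[\Tr(\bA\bM^2)]$: expanding $\bM-\obM_-$ (with $\obM_-$ defined via $\kappa=\E\Tr(\bM_-)$) and computing $\E[\Tr(\bA(\bM-\obM_-)^2)]$, one term survives and equals, up to controlled errors,
\[
\frac{n\,\E[\Tr(\bM^2)]}{(1+\kappa)^2}\,\Tr(\bA\obM_-^2).
\]
This gives a \emph{linear} self-consistent equation in $\E[\Tr(\bA\bM^2)]$ with $\E[\Tr(\bM^2)]$ as coefficient; solving first for $\bA=\id$ and then for general $\bA$ yields $\Psi_3$ with a genuinely multiplicative remainder. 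The crucial point is that the denominator $1-n^{-1}\Tr(\bSigma^2(\bSigma+\lambda_*)^{-2})$ appears by solving this self-consistency, not by subtracting two first-order equivalents. For $\Phi_4$ the paper then reduces to $\Phi_3$ at $n-1$ via one more leave-one-out step (your identity $\bR\bX^\sT\bX\bR=\bR-\lambda\bR^2$ instead produces $\Tr(\bB\bR^2)$, which is \emph{not} $\Phi_3$ and would need its own analysis).
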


Before describing the proof strategy for this theorem, let us make a few simple comments about the approximation bounds \eqref{eq:det_equiv_phi1_main}--\eqref{eq:det_equiv_phi4_main}:
\begin{enumerate}
    \item These guarantees are \textit{non-asymptotic} and depend explicitly on finite $n$, regularization $\lambda >0$ and fixed feature distribution with covariance $\bSigma$. Furthermore, they are \textit{dimension-free}: they only depend on $p$ through the assumption $\varphi_1(p)$ (e.g., $p = \infty$ and $\varphi_1(p)=1$ in the concentrated feature case) and $\bSigma$ through $\nu_\lambda (n)$ (recall that we expect $\nu_\lambda (n) \lesssim \polylog(n) /\lambda$ for most regularly varying spectrum).

    \item The bounds are \textit{multiplicative} and do not depend on $\bA$: the approximation guarantees hold regardless of the scale of $\Psi_j (\bX;\bA)$. In particular, this allows to study settings where these functionals decay as $n^{-\gamma}$.

    \item Compared to \cite{cheng2022dimension} which required $\| \bA \|_\op <\infty$ to prove a version of Eq.~\eqref{eq:det_equiv_phi1_main}, these approximation guarantees hold without restriction on the p.s.d.~matrix $\bA$. Crucially for our purpose, Equations \eqref{eq:det_equiv_phi3_main} and \eqref{eq:det_equiv_phi4_main} apply to any matrix $\bA = \bSigma^{-1} \bbeta_* \bbeta_*^\sT \bSigma^{-1}$ which satisfy $\Tr(\bA \bSigma^2) = \| \bbeta_* \|_2^2 <\infty$ without imposing $\Tr(\bA) = \| \bSigma^{-1} \bbeta_* \|_2^2 < \infty$.
\end{enumerate}

The proof of Theorem \ref{thm:main_det_equiv_summary} follows a general blueprint to show deterministic equivalents of random matrix functionals \cite{louart2018concentration,couillet2022random}. We split our bound into a martingale part and a deterministic part
\[
\big\vert \Phi(\bX) - \Psi (\mu_*) \big\vert \leq \underbrace{\big\vert \Phi(\bX) - \E[\Phi(\bX)] \big\vert}_{(\sfM)} + \underbrace{\big\vert\E[\Phi(\bX)] - \Psi (\mu_*)\big\vert}_{(\sfD)}.
\]
For the martingale part $(\sfM)$, we construct a martingale difference sequence that interpolates between $\Phi (\bX)$ and $\E[\Phi(\bX)]$ by integrating one more feature at a time. More precisely, denote $\E_{i}$ the expectation over features $\{ \bx_{i+1}, \ldots, \bx_n\}$, for $i = 0, \ldots , n$. Then we can write
\[
\Phi(\bX) - \E[\Phi(\bX)]  = \sum_{i = 1}^n \Big( \E_i - \E_{i-1} \Big)\Phi(\bX)  .
\]
We can then directly bound this term using Azuma-Hoeffding inequality with a standard truncation argument. For the deterministic part, we follow a leave-one-out type computation similar to the one used in \cite[Theorem 2.6]{couillet2022random} which considered $\Phi_1 (\bX;\bA)$ with $p/n$ bounded.

The proof of Theorem \ref{thm:main_det_equiv_summary} and further background on our setting can be found in Appendix \ref{app:det_equiv}.

\begin{remark}[Comparison with {\cite{cheng2022dimension}}]\label{rmk:comparison_Cheng_theorems} A dimension-free approximation guarantee on $\Phi_1(\bX;\bA)$ was previously proven in \cite[Corollary 6.5]{cheng2022dimension} in the case of $\varphi_1(p) =1$, $\beta = 1$, and $\| \bA \|_\op < \infty$, using a different interpolation path (see Remark \ref{rmk:interpolation_path} in Appendix \ref{app_det_equiv:preliminaries}). For all $D>0$, they obtained the following bound with probability $1 - O (n^{-D})$:
\begin{equation}\label{eq:Bound_Cheng}
| \Phi_1 (\bX ; \bA) - \Psi_1 ( \mu_* ; \bA) | \leq  C_{x,D} \frac{\nu_\lambda (n)^3 \log^2(n)}{ \kappa^{6.5} n \sqrt{\rho_1 (\bA)}} \cdot \Psi_1 ( \mu_* ; \bA) ,
\end{equation}
where 
\[
\kappa := \min( \lambda/(n\lambda_*), 1 - \lambda/(n\lambda_*) ) , \qquad \quad \rho_1 (\bA) := \frac{\Psi_1 (\mu_*; \bA/\| \bA \|_\op)}{\Psi_1 (\mu_* ; \id )} .
\]
Note that $\rho_1( \bA) \leq 1$ for all matrices $ \bA $. In contrast, Theorem \ref{thm:main_det_equiv_summary} provides an approximation rate $\widetilde{O} (n^{-1/2})$ uniformly over all $\bA$ and does not impose $\| \bA \|_\op < \infty$.

On one hand, Equation \eqref{eq:Bound_Cheng} achieves approximation rate $\widetilde{O} (n^{-1})$ when $\rho_1 (\bA) = \Omega(1)$. In particular, using that $\rho_1(\id) = 1$, the above bound allows \cite{cheng2022dimension} to establish the tight rate $\widetilde{O}(n^{-1})$ for the variance term \eqref{eq:Cheng_rates} expected from the `average law' fluctuations.  On the other hand, the bound can become much worse when $\rho_1 (\bA)$ is small. In particular, Theorem \ref{thm:main_det_equiv_summary} always improves on Eq.~\eqref{eq:Bound_Cheng} for rank 1 matrices $\bA$, since $\rho_1 (\bu \bu^\sT) \leq C/n$ for any $\bu \in \R^p$. Further, note for example that $\rho_1 (\be_j \be_j^\sT ) \lesssim \xi_j / \lambda \to 0$ as $j \to \infty$ and therefore the bound \eqref{eq:Bound_Cheng} becomes vacuous. 
%Furthermore Theorem \ref{thm_app:det_equiv_TrAM} applies to any matrix $\bA$ without requiring $\| \bA \|_\op < \infty$. Importantly for the goal of this paper, it applies to any matrix $\bA = \bSigma^{-1/2} \bu \bu^\sT \bSigma^{-1/2}$.
\end{remark}

\subsection{Proof strategy}
\label{sec_outline:proof_strategy}

Given Theorem \ref{thm:main_det_equiv_summary}, the proofs for the deterministic equivalents of the Stieltjes transform, training error and test error follow from a simple outline. We first simplify the functionals by using concentration over the high-degree features and the label noise:
\begin{itemize}
    \item From Assumption \ref{ass:main_assumptions}.(b), with probability at least $1 - p_{2,n}(\evn)$, we can approximate the resolvent $( \bX \bX^\sT + \lambda)^{-1}$ by $(\bX_{\leq \evn} \bX_{\leq \evn}^\sT + \lambda_{>\evn})^{-1}$ and, recalling condition \eqref{eq:condition_test_abstract}, bound
    \[
   \| \bX_{>\evn} \bSigma_{>\evn} \bX_{>\evn} \|_\op \leq \frac{3}{2}\xi_{\evn+1} \lambda_{>\evn}.
    \]
    \item Using Assumption \ref{ass:main_assumptions}.(c), we show that the high-frequency part of the target function effectively behave as independent additive noise to the label. More precisely, denote $\boldf_{>\evn} = (f_{*,>\evn} (\bx_1),\ldots , f_{*,>\evn} (\bx_n))$ and consider a random matrix $\bB \in \R^{n \times n}$ that only depends on $\bX_{\leq \evn}$, and a fixed vector $\bv \in \R^n$. Then we prove under Assumption \ref{ass:main_assumptions}.(c) and using $\E[f_{*,>\evn} (\bx_i) \bx_{i,\leq \evn} ] = \bzero$, that with high probability
    \[
    \boldf_{>\evn}^\sT  \bB \boldf_{>\evn} = (1+o(1)) \cdot \| f_{*,>\evn} \|_{L^2}^2 \cdot \Tr( \bB) , \qquad \bv^\sT \bB     \boldf_{>\evn} = o(1) \cdot \| f_{*,>\evn} \|_{L^2} \| \bB \bv \|_2.
    \]

    \item Under Assumption \ref{ass:noise_subGaussian}, the random vector $\beps$ satisfy the Hanson-Wright inequality \cite{rudelson13hanson}. Therefore the functionals concentrate on their expectations with respect to $\beps$.
\end{itemize}

Using the above three points, we show that the three quantities of interest concentrates on functionals that only depend on the low-degree features $\bX_{\leq \evn}$, regularization $\lambda_{>\evn}$, parameter $\bbeta_{\leq \evn}$, and the variance $\| f_{*,>\evn} \|_{L^2}^2 + \sigma_\eps^2$. In fact, we can express them in terms of $\Phi_2(\bX_{\leq \evn})$, $\Phi_3(\bX_{\leq \evn};\bA)$, and $\Phi_4 (\bX_{\leq \evn};\bA)$ with regularization parameter $\lambda_{>\evn}$ and $\bA$ equal to either $\id_\evn$, $\bSigma_{\leq \evn}^{-1}$, or $\bSigma_{\leq \evn}^{-1} \bbeta_{\leq \evn}\bbeta_{\leq \evn}^\sT \bSigma_{\leq \evn}^{-1}$. Recalling that $\bX_{\leq \evn}$ satisfy Assumption \ref{ass:main_assumptions}.(a), we can apply Theorem \ref{thm:main_det_equiv_summary} and approximate these functionals by $\Psi_2 (\mu_{*,\evn})$, $\Psi_3 (\mu_{*,\evn};\bA)$, and $\Psi_4 (\mu_{*,\evn};\bA)$ where $\mu_{*,\evn} = \lambda_{>\evn}/\lambda_{*,\evn}$ with $\lambda_{*,\evn}$ the effective regularization associated to $(n,\bSigma_{\leq \evn}, \lambda_{>\evn})$. Finally, we show that these deterministic equivalents, that depend on $\lambda_{*,\evn}$, are close to the original deterministic equivalents in terms of the effective regularization $\lambda_{*}$ associated to $(n,\bSigma,\lambda)$ as soon as $r_{\lambda}(\evn) \gtrsim n$.

% \paragraph*{Proof outline:} We will introduce two simplifications. First, note that from Assumption \ref{ass_app:main_assumptions}.(b), we can use the approximation $ \bX \bX^\sT + \lambda\approx \bX_0 \bX_0^\sT + \lambda_+$ with probability at least $1 - p_{2,n} (\evn)$. More precisely, we can approximate the resolvent $\bG = (\bX \bX^\sT + \lambda)^{-1}$ with error
% \begin{equation}\label{eq:approx_G_G0_idea}
%  \Tr( \bG - \bG_0 ) = \Tr\left[( \bX \bX^\sT + \lambda)^{-1}\bDelta_+ (\bX_0 \bX_0^\sT + \lambda_+ )^{-1}\right] \leq \varphi_{2,n} (\evn) \sqrt{\frac{n\xi_+}{\lambda_+}} \cdot \Tr(\bG).
% \end{equation}
% Throughout the proofs, we will therefore work with $\bG$ replaced by $\bG_0$  and conclude at the end using Eq.~\eqref{eq:approx_G_G0_idea} and an union bound. Second, we show that the high-frequency part of the target function effectively behave as independent additive noise to the label. Applying these two simplifications, we reduce the expression of the quantities \eqref{eq:def_Stieltjes_app}, \eqref{eq:def_training_app} and \eqref{eq:def_test_app} to functionals of $\bX_0$ which satisfies Assumption \ref{ass_app:main_assumptions}.(a). Thus we can directly apply the deterministic equivalents proved in Appendix \ref{app:det_equiv}.

\subsection{The case of concentrated feature}
\label{sec_outline:proof_concentrated}

For concreteness, we provide below a simple self-contained proof in the case of concentrated features.

\begin{theorem}\label{thm:well_concentrated_test_train}
Under Assumptions \ref{ass:noise_subGaussian} and \ref{ass:concentrated}, for any $D,K>0$, there exist constants $\eta := \eta_x \in(0,1/2)$, $C_{K,D}>0$, $C_{x,K,D} >0$ and $C_{x,\eps,D,K}>0$ such that the following holds. For any $n \geq C_{K,D}$, regularization $\lambda > 0$, and target function $f_* \in L^2(\cU)$ with parameters $\| \bbeta _* \|_2 < \infty$, if it holds that
\begin{equation}\label{eq:condition1_test_error_app_concentrated}
\lambda \cdot \nu_\lambda (n) \geq n^{-K}, \qquad  \nu_\lambda (n)^{7} \log^{2\beta + \frac{1}{2}} (n) \leq K \sqrt{n} ,
\end{equation}
then with probability at least $1 - n^{-D}$, we have
\begin{align}
     | s_n ( \bX, \lambda)  - \sfs_n (\lambda) | \leq&~  C_{x,K,D} \frac{\nu_\lambda (n)^{7/2} \log^{\beta + \frac{1}{2}} (n)}{\sqrt{n}} \sfs_n (\lambda) ,\label{eq_thm:stielt_det_equiv_well_concentrated} \\
      | \cL_{\train} (\bbeta_* ; \bX, \beps, \lambda) - \sL_{n} ( \bbeta_* , \lambda) | \leq &~ C_{x,\eps,K,D} \frac{\nu_\lambda (n)^{7} \log^{2\beta + \frac{1}{2}} (n)}{\sqrt{n}}   \sL_{n} ( \bbeta_* , \lambda),  \label{eq_thm:train_det_equiv_well_concentrated} \\
        | \cR_{\test} (\bbeta_* ; \bX, \beps, \lambda) - \sR_{n} ( \bbeta_* , \lambda) | \leq &~ C_{x,\eps,K,D} \frac{\nu_\lambda (n)^{7} \log^{2\beta + \frac{1}{2}} (n)}{\sqrt{n}}    \sR_{n} ( \bbeta_* , \lambda).  \label{eq_thm:test_det_equiv_well_concentrated}
\end{align}

\end{theorem}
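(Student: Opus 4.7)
The plan is to deduce the three approximations directly from the dimension-free deterministic equivalents of Theorem \ref{thm:main_det_equiv_summary} applied to $\Phi_2, \Phi_3, \Phi_4$ with suitable p.s.d.\ matrices, combined with sub-Gaussian concentration of the label noise via Hanson-Wright. First I verify that Assumption \ref{ass:concentrated} implies Assumption \ref{ass:main_assumptions}.(a) with $\evn = p$ and $\varphi_1(p) = 1$: the quadratic-form bound \eqref{eq:ass_quad_concentration_1} is the assumption itself, and the linear bound \eqref{eq:ass_quad_concentration_2} follows by specialization to $\bA = \bv\bv^\sT$ (after a harmless reparametrization of constants). On an event of probability at least $1 - n^{-D}$, Theorem \ref{thm:main_det_equiv_summary} thus provides multiplicative approximations $\Phi_j(\bX;\bA) = (1 + O(\text{rate}_j))\Psi_j(\mu_*;\bA)$ for $j \in \{2,3,4\}$ and all p.s.d.\ $\bA$ with $\Tr(\bA\bSigma^2)<\infty$, with rates $\nu_\lambda(n)^{5/2}, \nu_\lambda(n)^{6}, \nu_\lambda(n)^{6}$ times $\polylog(n)/\sqrt{n}$.

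For the Stieltjes transform I would use the identities $\lambda s_n(\bX,\lambda) = 1 - \Phi_2(\bX)$ and $\lambda\sfs_n(\lambda) = 1 - \Psi_2(\mu_*)$ (the second is equivalent to the fixed-point definition of $\lambda_*$), giving $|s_n - \sfs_n|/\sfs_n = (\lambda\sfs_n)^{-1}|\Phi_2(\bX) - \Psi_2(\mu_*)|$. The extra factor $\Psi_2(\mu_*)/(\lambda\sfs_n) = n\lambda_*/\lambda - 1$ is then shown to be $O(\nu_\lambda(n))$ through the fixed-point equation, accounting for the $\nu_\lambda^{7/2}$ exponent in \eqref{eq_thm:stielt_det_equiv_well_concentrated}. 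For the test error, I expand $\btheta_* - \hbtheta_\lambda = \lambda(\bX^\sT\bX+\lambda)^{-1}\btheta_* - \bX^\sT(\bK+\lambda)^{-1}\beps$ with $\bK = \bX\bX^\sT$, obtaining
\[
\cR_{\test} - \sigma_\eps^2 = \lambda^2\Phi_3(\bX;\bA_*) \;+\; \beps^\sT \bM_\bX \beps \;-\; 2\lambda\,\btheta_*^\sT(\bX^\sT\bX+\lambda)^{-1}\bSigma\bX^\sT(\bK+\lambda)^{-1}\beps,
\]
with $\bA_* := \bSigma^{-1}\bbeta_*\bbeta_*^\sT\bSigma^{-1}$ and $\bM_\bX := (\bK+\lambda)^{-1}\bX\bSigma\bX^\sT(\bK+\lambda)^{-1}$. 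Direct algebra (using the fixed-point equation) yields $\lambda^2\Psi_3(\mu_*;\bA_*) = \sB_n$ and $n\sigma_\eps^2\Psi_4(\mu_*;\id) = \sV_n$; since $\E_\beps[\beps^\sT\bM_\bX\beps] = \sigma_\eps^2\Tr(\bM_\bX) = n\sigma_\eps^2\Phi_4(\bX;\id)$ and the cross term has $\beps$-mean zero, Theorem \ref{thm:main_det_equiv_summary} applied to $\Phi_3(\bX;\bA_*)$ and $\Phi_4(\bX;\id)$ delivers the bias and variance targets. Fluctuations of $\beps^\sT\bM_\bX\beps$ around its conditional mean are controlled by Hanson-Wright, and the cross term is absorbed into the bias and variance via the inequality $2|\mathrm{cross}| \leq \gamma\,\cB + \gamma^{-1}\beps^\sT\bM_\bX\beps$ for $\gamma \to 1$, preserving the multiplicative structure. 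For the training error, the parallel decomposition of $\cL_{\train} = \lambda^2\by^\sT(\bK+\lambda)^{-2}\by/n$ combined with the algebraic identity $\lambda^2\Tr((\bK+\lambda)^{-2})/n = \lambda[s_n(\bX,\lambda) - \Phi_4(\bX;\bSigma^{-1})]$ reduces the noise piece to quantities already handled by Theorem \ref{thm:main_det_equiv_summary} (noting $\Tr(\bSigma^{-1}\bSigma^2) = \Tr(\bSigma) < \infty$), and a direct computation shows the associated deterministic equivalent equals the noise piece of $\sL_n = (\lambda\sfs_n)^2 \sR_n$.

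The main obstacle is maintaining the \emph{multiplicative} form of the error bounds throughout. Two points in particular demand care. First, the noise part of the training error is expressed as a difference $s_n(\bX,\lambda) - \Phi_4(\bX;\bSigma^{-1})$ whose deterministic equivalent suffers substantial cancellation when $\lambda \ll n\lambda_*$; both approximations must therefore be compared to a single deterministic value rather than bounded separately and subtracted. Second, the Hanson-Wright deviations and the $\beps$-cross term must be absorbed into the leading bias and variance rather than appearing as additive errors, which requires control of $\|\bM_\bX\|_\op$ and $\|\bM_\bX\|_F^2$ in terms of the very $\Phi_4(\bX;\id)$ that governs the variance, together with the Cauchy-Schwarz split above. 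The extra factor of $\nu_\lambda(n)$ in the stated exponent (namely $\nu_\lambda^7$ rather than $\nu_\lambda^6$ coming from the raw $\Phi_3, \Phi_4$ rates) is the precise cost of normalizing by the possibly small denominator $\lambda\sfs_n$, exactly as encountered in the Stieltjes step.
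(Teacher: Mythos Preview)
Your overall strategy matches the paper's proof closely: decompose the three quantities, identify the pieces as $\Phi_2$, $\Phi_3(\cdot;\bA_*)$, $\Phi_4(\cdot;\id)$, $\Phi_4(\cdot;\bSigma^{-1})$, invoke Theorem~\ref{thm:main_det_equiv_summary}, and use Hanson--Wright for the noise quadratic forms. The identity $\lambda s_n = 1 - \Phi_2$ and the attribution of the extra $\nu_\lambda$ factor to $n\lambda_*/\lambda = 1 + \Tr(\obM) \le 2\nu_\lambda(n)$ (Lemma~\ref{lem:properties_mu_obM}) are exactly what the paper does. Your observation that the training-error noise term involves a cancellation inside $s_n - \Phi_4(\bX;\bSigma^{-1})$ and must be compared to a single deterministic value is also on target.

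There is, however, a genuine gap in your handling of the cross terms. The inequality $2|\mathrm{cross}| \le \gamma\,\cB + \gamma^{-1}\beps^\sT \bM_\bX \beps$ is just Cauchy--Schwarz, and for any choice of $\gamma$ (including $\gamma\to 1$) it bounds the cross term by something of the \emph{same order} as the leading bias and variance, not by a small fraction of them. This would yield $c\,\sR_n \le \cR_{\test} \le C\,\sR_n$ for constants, but not the claimed $(1+o(1))$ multiplicative approximation. The cross term $Q_2 = \langle \beps,\, \bG\bX\bSigma\lambda\bR\btheta_*\rangle$ has zero $\beps$-mean, and the point is that it is \emph{small} with high probability, not merely bounded by $Q_1 + Q_3$. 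The paper obtains this by applying sub-Gaussian concentration (Hanson--Wright for a linear form) conditionally on $\bX$:
\[
|Q_2| \le C_{\eps,D}\log(n)\,\sigma_\eps\,\|\bG\bX\bSigma\lambda\bR\btheta_*\|_2 \le C_{\eps,D}\log(n)\,\sigma_\eps\,\|\bM\|_\op^{1/2}\sqrt{Q_1},
\]
and then uses Lemma~\ref{lem:tech_bounds_norm_M} to get $\|\bM\|_\op \le C_{x,D}\nu_\lambda(n)/n$, which supplies the missing $n^{-1/2}$ factor. The training-error cross term $T_2$ is handled the same way (with $\|\bG\|_\op \le \lambda^{-1}$ playing the analogous role). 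Your final paragraph gestures at controlling $\|\bM_\bX\|_\op$, but the mechanism you actually propose for the cross term does not use it; you need to apply noise concentration to $Q_2$ itself, and the operator-norm bound enters when estimating the resulting deterministic vector norm.
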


\begin{proof}[Proof of Theorem \ref{thm:well_concentrated_test_train}]
For convenience, we introduce the following notations:
\[
\begin{aligned}
    \bG :=&~ (\bX \bX^\sT + \lambda )^{-1} \in \R^{n \times n}, \qquad \quad &\bR:= &~ (\bX^\sT \bX + \lambda)^{-1} \in \R^{p \times p}, \\
    \bM:=&~ \bSigma^{1/2} \bR \bSigma^{1/2} \in \R^{p \times p}, \qquad \quad & \obM :=&~ \bSigma ( \mu_* \bSigma + \lambda)^{-1}\in \R^{p \times p}, \\
    \Upsilon_1 :=&~ \frac{1}{n} \Tr( \bSigma (\bSigma + \lambda_*)^{-1}), \qquad \quad & \Upsilon_2 :=&~ \frac{1}{n} \Tr( \bSigma^2 (\bSigma + \lambda_*)^{-2}).
\end{aligned}
\]
Recall that $\Upsilon_1 = 1 - \frac{\lambda}{n\lambda_*} $ by definition of $\lambda_*$. We will further denote $\cE_{j,n}$ the right-hand side rates in Eqs.~\eqref{eq:det_equiv_phi1_main}--\eqref{eq:det_equiv_phi4_main} so that with probability at least $1 -n^{-D}$,
\[
| \Phi_j (\bX ; \bA) - \Psi_j ( \mu_* ; \bA) | \leq \cE_{j,n} \cdot \Psi_j ( \mu_* ; \bA).
\]

\noindent
\textbf{Step 1: Stieltjes transform.}

We can rewrite the Stieljes transform as
\[
\frac{1}{n} \Tr( \bG) = \frac{1}{\lambda} \Big( 1 - \frac{1}{n} \Tr \big(\bX^\sT \bX (\bX^\sT \bX + \lambda )^{-1} \big) \Big).
\]
The right-hand side corresponds to the functional $\Phi_2(\bX)$. Under Assumption \ref{ass:concentrated} and the conditions \eqref{eq:condition1_test_error_app_concentrated}, we can apply Theorem \ref{thm:main_det_equiv_summary} and get with probability at least $1 - n^{-D}$ that
\[
\left| \frac{1}{n} \Tr \big(\bX^\sT \bX (\bX^\sT \bX + \lambda )^{-1} \big) - \left(1 - \frac{\lambda}{n \lambda_*} \right)\right| \leq \cE_{2,n} \cdot \left(1 - \frac{\lambda}{n \lambda_*} \right).
\]
We deduce that with probability at least $1 - n^{-D}$
\[
\left| \frac{1}{n} \Tr( \bG) - \frac{1}{n\lambda_*} \right| \leq \cE_{2,n} \cdot \frac{1}{\lambda}\left(1 - \frac{\lambda}{n \lambda_*} \right) \leq 2 \cE_{2,n} \nu_\lambda (n) \cdot \frac{1}{n\lambda_*},
\]
where we used that $\lambda^{-1} ( 1 - \lambda/(n\lambda_*)) = \Tr(\obM) / (n \lambda_*)$ and $\Tr(\obM) \leq 2 \nu_\lambda (n)$ from Lemma \ref{lem:properties_mu_obM}. Replacing $\cE_{2,n}$ by the expression in Theorem \ref{thm:main_det_equiv_summary} yields Eq.~\eqref{eq_thm:stielt_det_equiv_well_concentrated}.

\noindent
\textbf{Step 2: Training error.}

Using $\by = \boldf + \beps$, we decompose the training error into three contributions
\begin{equation}\label{eq:decomposition_train_well_concentrated}
\frac{1}{n}\| \by - \bX \hbtheta_\lambda \|_2^2 =  \frac{1}{n}\| \by - \bX \bX^\sT \bG \by \|_2^2 =  \frac{1}{n} \lambda^2 \by^\sT \bG^2 \by = \lambda^2 \cdot \left(T_1 + 2T_2 + T_3 \right),
\end{equation}
where we denoted
\[
T_1 =  \frac{1}{n}\boldf^\sT \bG^2 \boldf , \qquad T_2 = \frac{1}{n} \beps^\sT \bG^2 \boldf, \qquad T_3 =  \frac{1}{n}\beps^\sT \bG^2 \beps.
\]

\paragraph*{Term $T_1$.} Introduce $\bA_* = \bSigma^{-1} \bbeta_* \bbeta_*^\sT \bSigma^{-1} $. Observe that we can rewrite this term as
\[
T_1 = \frac1n\boldf^\sT \bG^2 \boldf = \frac1n \btheta_* \bR \bX^\sT \bX \bR \btheta_* = \frac{1}{n} \Tr\big( \bSigma^{1/2} \bA_* \bSigma^{1/2} \bR \bX^\sT \bX \bR \big),
\]
which corresponds to the functional $\Phi_4(\bX;\bA_*)$. Under the assumptions of Theorem \ref{thm:well_concentrated_test_train}, we can apply Theorem \ref{thm:main_det_equiv_summary} and get with probability at least $1 - n^{-D}$ that
\begin{equation}\label{eq:T1_train_well}
\left|T_1 - \Psi_4 ( \mu_* ; \bA_* ) \right| \leq \cE_{4,n}  \cdot \Psi_4 ( \mu_* ; \bA_* ).
\end{equation}

\paragraph*{Term $T_3$.} Using Hanson-Wright inequality \eqref{eq:Hanson_Wright_noise} on $\beps$ conditional on $\bX$, there exists a constant $C_{\eps,D}$ such that with probability at least $1 - n^{-D}$
\begin{align}\label{eq:HW_T3_well)}
\left|  T_3 - \frac{\sigma_\eps^2}{n} \Tr( \bG^2)\right| \leq C_{\eps,D} \frac{\log(n)}{n} \cdot \sigma_\eps^2 \| \bG^2 \|_F.
\end{align}
We control the mean and variance terms separately in the inequality above.

For the mean, we can rewrite 
\[
\frac{1}{n}\Tr(\bG^2) = \frac{1}{\lambda} \left( \frac{1}{n}\Tr(\bG) - \frac{1}{n}\Tr( \bX \bX^\sT \bG^2 ) \right).
\]
The term linear in $\bG$ was bounded in the first step above, while the second term corresponds to the functional $\Phi_4(\bX ; \bSigma^{-1})$. 
Hence, by Theorem \ref{thm:main_det_equiv_summary}, we conclude that with probability at least $1- n^{-D}$
\[
\left| \frac{1}{n}  \Tr( \bX \bX^\sT \bG^2 ) - \Psi_4 ( \mu_*; \bSigma^{-1} ) \right| \leq \cE_{4,n}  \cdot \Psi_4 ( \mu_*; \bSigma^{-1} ).
\]
Note that 
\[
\begin{aligned}
\frac{1}{\lambda} \left( \sfs_n (\lambda) -  \Psi_4 ( \mu_*; \bSigma^{-1} ) \right) = \frac{1}{n\lambda\lambda_*} \left( 1 - \frac{\frac{\lambda_*}{n}\Tr( \bSigma ( \bSigma +\lambda_*)^{-2} )}{1 - \Upsilon_2}\right)= \frac{1}{(n\lambda_*)^2} \cdot \frac{1}{1 - \Upsilon_2} =: \oT_3,
\end{aligned}
\]
where we used that $\frac{\lambda_*}{n}\Tr( \bSigma ( \bSigma +\lambda_*)^{-2} ) = \Upsilon_1 - \Upsilon_2$ and $1 - \Upsilon_1 = \frac{\lambda}{n\lambda_*}$. Thus, the mean is bounded by
\begin{equation}\label{eq:mean_T3_train_well}
\begin{aligned}
    \left| \frac{1}{n} \Tr(\bG^2)  - \oT_3 \right| \leq&~ \frac{1}{\lambda} \left| s_n (\bX , \lambda) - \sfs_n (\lambda) \right| + \frac{1}{\lambda}|\Phi_4(\bX ; \bSigma^{-1}) - \Psi_4 ( \mu_*; \bSigma^{-1} ) | \\
    \leq &~ \max \left\{ 2 \cE_{2,n} \nu_\lambda (n) , \cE_{4,n}  \right\} \cdot 2 \frac{n\lambda_*}{\lambda (n \lambda_*)^2}\\
    \leq &~ C \nu_\lambda (n) \cE_{4,n} \cdot \oT_3,
\end{aligned}
\end{equation}
where we used $\Psi_4 ( \mu_*; \bSigma^{-1} ) \leq \sfs_n$, the identity $n\lambda_*/\lambda = 1 + \Tr(\obM) \leq 2 \nu_\lambda (n)$, and $(1- \Upsilon_2)^{-1} \geq 1$, and that $\cE_{2,n} \leq \cE_{4,n}$ from their expressions in Theorem \ref{thm:main_det_equiv_summary}.

Let us now bound the variance term in the right-hand side of Eq.~\eqref{eq:HW_T3_well)}.Using $\| \bG\|_\op \leq 1/\lambda$, we first note that
\[
\frac{\Tr(\bG^4)}{\Tr(\bG^2) \oT_3} \leq \frac{1}{\lambda^2\oT_3} \leq  (1 + \Tr(\obM))^2 \leq (2 \nu_\lambda(n))^2.
\]
Therefore, we deduce that the right-hand side is bounded by
\begin{equation*}
\frac{\log(n)}{\sqrt{n}} \cdot \sigma_\eps^2 \| \bG^2 \|_F \leq  C \nu_{\lambda} (n) \log(n) \sigma_\eps^2 \sqrt{\frac{\Tr (\bG^2)}{n} \oT_3} \leq C_{x,K,D} \cdot \nu_{\lambda} (n) \log(n) \sigma_\eps^2 \oT_3,
\end{equation*}
where we used that $\Tr(\bG^2)/n \leq C_{x,K,D} \oT_3$ by Eq.~\eqref{eq:mean_T3_train_well} and conditions \eqref{eq:condition1_test_error_app_concentrated}. 
Combining this bound and Eq.~\eqref{eq:mean_T3_train_well} into Eq.~\eqref{eq:HW_T3_well)} yields with probability at least $1 - n^{-D}$
\begin{equation}\label{eq:T3_train_well}
| T_3 - \sigma_\eps^2 \oT_3 | \leq C_{x,\eps,D,K} \cdot \nu_\lambda (n)  \cE_{4,n} \cdot \sigma_\eps^2 \oT_3 .
\end{equation}

\paragraph*{Term $T_2$.} Again, using Hanson-Wright inequality \eqref{eq:Hanson_Wright_noise}, we have with probability at least $1 - n^{-D}$
\[
| T_2 | \leq C_{\eps,D} \frac{\log(n)}{n} \sigma_\eps \sqrt{ \boldf^\sT \bG^4 \boldf}.
\]
We observe that, similarly as above,
\[
\frac{ \boldf^\sT \bG^4 \boldf}{\oT_3 \cdot \boldf^\sT \bG^2 \boldf} \leq \frac{1}{\lambda^2 \oT_3} \leq (2 \nu_\lambda (n))^2 ,
\]
and therefore, we get using Eq.~\eqref{eq:T1_train_well} that 
\begin{equation}\label{eq:T2_train_well}
| T_2 | \leq C_{\eps,D} \frac{\nu_\lambda (n)\log(n)}{\sqrt{n}} \sigma_\eps \sqrt{ \oT_3 \cdot \frac{\boldf^\sT \bG^2 \boldf}{n}} \leq C_{x,\eps,K,D} \frac{\nu_\lambda (n)\log(n)}{\sqrt{n}} \left\{ \sigma_\eps^2 \oT_3 + \Psi_4 (\mu_* ; \bA_* ) \right\}.
\end{equation}

\paragraph*{Combining the terms.} Combining the bounds \eqref{eq:T1_train_well}, \eqref{eq:T3_train_well} and \eqref{eq:T2_train_well} into Eq.~\eqref{eq:decomposition_train_well_concentrated} yields that with probability at least $1 - n^{-D}$, we have
\[
\begin{aligned}
    \left| \cL_{\train} (\bbeta_* ; \bX,\beps,\lambda) - \sL_{n} (\bbeta_* , \lambda) \right| \leq&~ \lambda^2 |  T_1 - \Psi_4 (\mu_*; \bA_*) | + 2 \lambda^2 |T_2| + \lambda^2 |T_3 -\sigma_\eps^2 \oT_3 | \\
    \leq&~ C_{x,\eps,D,K} \cdot  \nu_\lambda (n) \max \left\{  \cE_{4,n} , n^{-1/2} \log(n)\right\} \cdot \sL_{n} (\bbeta_* , \lambda),
\end{aligned}
\]
where we used $\sL_{n} (\bbeta_* , \lambda) = \lambda^2 \Psi_4 (\mu_*; \bA_*) + \sigma_\eps^2 \lambda^2\oT_3 $. Replacing $ \cE_{4,n}$ by its expressions from Theorem \ref{thm:main_det_equiv_summary} concludes the proof of Eq.~\eqref{eq_thm:train_det_equiv_well_concentrated}.

\noindent
\textbf{Step 3: Test error.}

We proceed similarly to Step 2 and decompose the test error into three contributions
\begin{equation}\label{eq:decomposition_test_well_concentrated}
\| \btheta_* - \bR \bX^\sT \by \|_{\bSigma}^{2} = Q_1 - 2 Q_2 + Q_3,
\end{equation}
where we denoted
\[
Q_1 = \| \btheta_* - \bR \bX^\sT\bX \btheta_* \|_{\bSigma}^{2}, \qquad Q_2 = \< \beps, \bX \bR \bSigma ( \btheta_* - \bR \bX^\sT\bX \btheta_*)\>, \qquad Q_3 = \| \bR \bX^\sT \beps \|_{\bSigma}^2.
\]

\paragraph*{Term $Q_1$.} Recall that we denoted $\bA_* = \bSigma^{-1} \bbeta_* \bbeta_*^\sT \bSigma^{-1}$. This  term can be rewritten as
\[
Q_1 = \lambda^2 \| \bR \btheta_* \|_{\bSigma}^2 = \lambda^2 \Tr( \bSigma^{1/2} \bA_*  \bSigma^{1/2} \bR \bSigma \bR ),
\]
which corresponds to the functional $\Phi_3 (\bX ; \bA_*)$.  Under the assumptions of Theorem \ref{thm:well_concentrated_test_train}, we can apply again Theorem \ref{thm:main_det_equiv_summary} and get with probability at least $1 - n^{-D}$ that
\begin{equation}\label{eq:Q1_test_well}
\left|Q_1 - \lambda^2 \Psi_3 ( \mu_* ; \bA_* ) \right| \leq \cE_{3,n} \cdot \lambda^2\Psi_3 ( \mu_* ; \bA_* ).
\end{equation}

\paragraph*{Term $Q_3$.} Using Hanson-Wright inequality \eqref{eq:Hanson_Wright_noise} on $\beps$, there exists a constant $C_{\eps,D}$ such that with probability at least $1 - n^{-D}$
\begin{align}\label{eq:HW_Q3_well)}
\left|  Q_3 - \sigma_\eps^2 \Tr(\bSigma \bR \bX^\sT \bX \bR) \right| \leq C_{\eps,D} \log(n) \cdot \sigma_\eps^2 \| \bX \bR \bSigma \bR \bX^\sT \|_F.
\end{align}
The mean corresponds to the functional $\Phi_4 (\mu_*; \id)$. From Theorem \ref{thm:main_det_equiv_summary}, we get that with probability at least $1 - n^{-D}$
\begin{equation}\label{eq:mean_Q3_test_well}
\begin{aligned}
    \left|  \Tr(\bSigma \bR \bX^\sT \bX \bR)  - n \Psi_4 (\mu_* ; \id) \right| \leq&~ \cE_{4,n} \cdot n \Psi_4 (\mu_* ; \id).
\end{aligned}
\end{equation}
For the variance term, we use that $\| \bM \|_\op \leq C_{x,D}\nu_\lambda (n)/ n$ with probability at least $1 - n^{-D}$ by Lemma \ref{lem:tech_bounds_norm_M}, and obtain
\[
\frac{n \| \bZ \bM^2 \bZ^\sT \|_F^2 }{(\Tr(\bZ \bM^2 \bZ^\sT) +1)^2} \leq n \| \bM\|_\op \leq C_{x,D} \nu_\lambda (n).
\]
Therefore, the right-hand side of Eq.~\eqref{eq:HW_Q3_well)} is bounded by
\[
\log(n) \cdot \sigma_\eps^2 \| \bX \bR \bSigma \bR \bX^\sT \|_F \leq C_{x,D,K} \frac{\nu_\lambda (n) \log (n)}{\sqrt{n}} \left\{ \sigma_\eps^2 \cdot n \Psi_4 (\mu_* ; \id)  + \sigma_\eps^2 \right\}.
\]
Combining this bound and Eq.~\eqref{eq:mean_Q3_test_well} yields with probability at least $1 - n^{-D}$ that
\begin{equation}\label{eq:Q3_test_well}
| Q_3 - \sigma_\eps^2 \cdot n \Psi_4 (\mu_* ; \id) | \leq C_{\eps,x,D} \max \left\{ \cE_{4,n}  ,   n^{-1/2} \nu_\lambda(n) \log(n) \right\} \cdot \left\{ \sigma_\eps^2 \cdot n \Psi_4 (\mu_* ; \id)  + \sigma_\eps^2 \right\} .
\end{equation}

\paragraph*{Term $Q_2$.} Using Hanson-Wright inequality \eqref{eq:Hanson_Wright_noise}, we have with probability at least $1 - n^{-D}$
\begin{equation}\label{eq:Q2_test_well}
 \begin{aligned}
     | Q_2 | \leq&~ C_{\eps,D} \log(n) \sigma_\eps \lambda \sqrt{ \< \btheta_*, \bR \bSigma \bR \bX^\sT \bX \bR \bSigma \bR \btheta_* \>}\\
     \leq&~ C_{\eps,D} \log(n) \| \bSigma^{1/2} \bR \bX^\sT \|_\op  \sqrt{\sigma_\eps^2 \cdot T_1 }\\
     \leq&~ C_{\eps,x,D} \frac{\nu_\lambda (n)^{1/2} \log(n)}{\sqrt{n}}  \left\{\sigma_\eps^2 +  \lambda^2 \Psi_3 (\mu_*; \bA_* ) \right\},
 \end{aligned}   
\end{equation}
where we used that $\| \bSigma^{1/2} \bR \bX^\sT \|_\op \leq \| \bM \|_\op^{1/2} \leq C_{x,D} (\nu_\lambda(n) / n)^{1/2}$ with probability at least $1 - n^{-D}$ by Lemma \ref{lem:tech_bounds_norm_M}.

\paragraph*{Combining the terms.} Combining the bounds \eqref{eq:Q1_test_well}, \eqref{eq:Q3_test_well} and \eqref{eq:Q2_test_well} into Eq.~\eqref{eq:decomposition_test_well_concentrated} yields with probability at least $1 - n^{-D}$
\[
\begin{aligned}
    \left| \cR_{\test} (\bbeta_* ; \bX,\beps,\lambda) - \sR_{n} (\bbeta_* , \lambda) \right| \leq&~  |  Q_1 - \lambda^2 \Psi_3 (\mu_*; \bA_*) | + 2 |Q_2| +  |Q_3 -\sigma_\eps^2 \cdot n \Psi_4 (\mu_*; \id) | \\
    \leq&~ C_{\eps,x, D,K}  \max \left\{ \cE_{3,n}  , \cE_{4,n} , n^{-1/2} \nu_\lambda (n) \log(n)\right\} \cdot \sR_{n} (\bbeta_* , \lambda),
\end{aligned}
\]
where we used $\sR_{n} (\bbeta_* , \lambda) =  \lambda^2 \Psi_3 (\mu_*; \bA_*) + \sigma_\eps^2 (n \Psi_4 (\mu_*; \id) +1) $. Replacing $\cE_{3,n}, \cE_{4,n}$ by their expressions from Theorem \ref{thm:main_det_equiv_summary} concludes the proof of Eq.~\eqref{eq_thm:test_det_equiv_well_concentrated}.
\end{proof}

\addcontentsline{toc}{section}{References}
\bibliographystyle{amsalpha}
\bibliography{bibliography.bbl}

\clearpage

\appendix

\section{Proof of deterministic equivalents in the dimension free regime}\label{app:det_equiv}

In this appendix, we prove deterministic equivalents for the different functionals  of the random feature matrix $\Phi (\bX)$ that appear in the proofs of the main results. These results might be of independent interest and for the sake of readability, this appendix is self-contained and can be read independently from the rest of the paper. 

In Section \ref{app_det_equiv:assumptions}, we restate our assumptions, the functionals of interest and their associated deterministic equivalents. We start with some preliminaries in Section \ref{app_det_equiv:preliminaries} where we outline the general proof strategy. Sections \ref{app_det_equiv:proof_M}, \ref{app_det_equiv:proof_ZZM}, \ref{app_det_equiv:proof_MM} and \ref{app_det_equiv:proof_MZZM} provide proofs for these different deterministic equivalents. Finally, we defer the proof of some technical bounds to Section \ref{app_det_equiv:technical}.

\subsection{Definitions and assumptions}
\label{app_det_equiv:assumptions}

We consider a feature vector $\bx \in \R^p$ with covariance matrix $\bSigma = \E[\bx \bx^\sT]$. 
Without loss of generality, we assume $\| \bSigma \|_\op = 1$ and $\bSigma$ to be diagonal
\[
\bSigma = \diag ( \xi_1 , \xi_2 , \xi_3 , \ldots )  ,
\]
where $1 = \xi_1 \geq \xi_2 \geq \xi_3 \geq \cdots > 0$ are the positive eigenvalues in non-increasing order. We allow $p = \infty$ by further assuming that $\Tr( \bSigma) < \infty$. In that case, $\bx$ is a random element in the Hilbert space $\ell_2 := \{ \bx = (x_1 , x_2 , x_3,\ldots): \sum_{j = 1}^\infty x_j^2  < \infty\} $ with inner product $\< \bu, \bv \> := \< \bu, \bv \>_{\ell_2} = \sum_{j =1}^\infty u_j v_j$, and $\bSigma$ is understood to be a trace class self-adjoint operator. We will treat both cases $p<\infty$ and $p = \infty$ in a unified manner by using, with a slight abuse of notations, the matrix calculus notations for (infinite-dimensional) linear operators. In particular, $\| \cdot \|_2$ and $\< \cdot , \cdot \>$ will denote the norm and inner product in both euclidean space and $\ell_2$ space depending on the context.

In this appendix, we assume that $\bx$ satisfy Assumption \ref{ass:main_assumptions}.(a) with $\evn = p$ (no high-degree part). Recall that the effective rank of $\bSigma$, which we denote here $r_\bSigma$ to emphasize the dependency on the covariance matrix, is given as the smallest scalar such that $r_{\bSigma} (n ) \geq n$ and
\[
r_{\bSigma} (n) \geq \frac{\sum_{j = k+1}^p}{\xi_{k+1}}, \qquad \quad \text{for all } 0\leq k \leq \min(n,p)-  1.
\]
For convenience, we gather below the assumptions that the feature vector $\bx$ satisfies throughout Appendix \ref{app:det_equiv}.

\begin{assumption}\label{ass_app:deterministic_equivalent}
    We assume that the covariance $\bSigma = \E [ \bx \bx^\sT ]$ is a trace class operator, i.e., $\Tr(\bSigma) = \E [ \| \bx \|_2^2 ] <\infty$. Without loss of generality, we further assume that $\| \bSigma \|_\op = 1$ and $\bSigma = \diag (\xi_1 , \xi_2 , \xi_3 , \ldots)$, where $1 = \xi_1 \geq \xi_2 \geq \xi_3 \geq \cdots > 0$ are the positive eigenvalues in non-increasing order. 
    
    We further assume that there exist $\sfc_x, \sfC_x , \beta >0$ and $\varphi_1 (p)>0$ such that for any deterministic p.s.d.~matrix $\bA \in \R^{p\times p}$ and vector $\bv \in \R^{p}$,
        \begin{align}
                    \P \left( \big\vert \< \bv , \bx \>^2 - \bv^\sT \bSigma \bv) \big\vert \geq t \cdot  \bv^\sT \bSigma \bv \right) \leq&~ \sfC_x \exp \left\{ - \sfc_x t^{1/\beta } \right\},  \tag{a1} \label{eq_app:DE_concentration_b2}\\
            \P \left( \big\vert \bx^\sT \bA \bx - \Tr( \bSigma \bA) \big\vert \geq t \cdot \varphi_1 (p)  \cdot \big\| \bSigma^{1/2} \bA \bSigma^{1/2} \big\|_F \right) \leq&~ \sfC_x \exp \left\{ - \sfc_x t^{1/\beta } \right\}  .\label{eq_app:DE_concentration_b1} \tag{a2}
        \end{align}

    % \begin{itemize}
    %     \item[\emph{(a)}] \emph{(Effective rank.)} For any integer $n$, define $r_{\bSigma} := r_{\bSigma} (n) = \max(n,r_1 (n) )$ where $r_1(n)$ is the tail effective rank up to $n$ and is given by
    %     \begin{equation}\label{eq_app:DE_effective_rank}
    %     r_1 (n) = \max_{1 \leq k \leq \min(n,p)} \frac{1}{\xi_k} \sum_{j = k}^p \xi_j  . \tag{a1}
    %     \end{equation}

    %     \item[\emph{(b)}] \emph{(Concentrated features.)} 
    % \end{itemize} 
\end{assumption}

% Assumption \ref{ass_app:deterministic_equivalent}.(a) defines the tail effective rank $r_{\bSigma}$. In particular, this rank is large if the top $n$ eigenvalues have large degeneracy. For example, if $\Tr(\bSigma) = O(1)$ and the first $n$ eigenvalues are of size $\Theta(1/n)$, then $r_\bSigma = \Theta (n)$. On the other hand, if the first $n^2$ eigenvalues are of size $\Theta (1/n^2)$, then $r_{\bSigma} = \Theta(n^2)$. For Assumption \ref{ass_app:deterministic_equivalent}.(b), if $\varphi_1 (p) = 1$, then Eq.~\eqref{eq_app:DE_concentration_b1} implies Eq.~\eqref{eq_app:DE_concentration_b2} by taking $\bA = \bv \bv^\sT$. On the other hand, if $\varphi_1 (p)$ is allowed to be large (e.g., $p^c$ for some $c \in (0,1)$), then for some matrices $\bA$, the variance $\varphi_1 (p) \| \bSigma^{1/2} \bA \bSigma^{1/2} \|_F$ is much larger than $\Tr(\bSigma \bA)$ and the quadratic form is not concentrated around its expectation. For example, if $\bA = \bv \bv^\sT$, then $ \| \bSigma^{1/2} \bA \bSigma^{1/2} \|_F = \< \bv , \bSigma \bv\> =  \Tr( \bA \bSigma)$. We will therefore further assume that $\varphi_1 (p) =1 $ in the case of rank $1$ matrices, which correspond to an hypercontractivity asssumption on the subspace $\{ \< \bv, \bx \>: \bv \in \R^p \}$.

We are given $n$ i.i.d.~features $(\bx_i)_{i \in [n]}$, and we denote $\bX = [ \bx_1 , \ldots , \bx_n ]^\sT \in \R^{n \times p}$ the feature matrix. We introduce $\bz_i = \bSigma^{-1/2} \bx_i$ the whitened features, and $\bZ = \bX \bSigma^{-1/2}$ the whitened feature matrix. The test error and GCV estimator are functionals of $\bX$. In particular, they depend on the following resolvent matrix
\[
\bR = ( \bX^\sT \bX + \lambda \id_p)^{-1} .
\]
In this section we consider functionals that depend on products of $\bX$, $\bR$ and deterministic matrices. 

Throughout this appendix, we assume that $\lambda >0$. Recall that we defined the effective regularization $\lambda_*$ to be the unique non-negative solution of 
\begin{align}\label{eq:det_equiv_fixed_point_lambda_star}
n - \frac{\lambda}{\lambda_*} = \Tr (\bSigma (\bSigma + \lambda_* \id )^{-1} )  .
\end{align}
 We consider the change of variable $\mu_*:= \mu_* (\lambda) = \lambda / \lambda_*$, such that $\mu_*$ is the unique non-negative solution of 
\begin{align}\label{eq:det_equiv_fixed_point_mu_star}
\mu_* = \frac{n}{1 + \Tr(\bSigma (\mu_* \bSigma + \lambda)^{-1} )} .
\end{align}
Both $\mu_*$ and $\lambda_*$ are increasing functions with $\lambda$.
We introduce the following deterministic matrix
\[
\obR = ( \mu_* \bSigma + \lambda \id)^{-1} .
\]
We further consider the rescaled resolvents 
\[
 \bM = \bSigma^{1/2} \bR \bSigma^{1/2} , \qquad  \obM = \bSigma^{1/2} \obR \bSigma^{1/2} .
\]

Our goal is to prove explicit non-asymptotic bounds between functionals of $\bX$ and their deterministic equivalents. More precisely, denote $\Phi (\bX)$ the feature matrix functional and $\Psi (\mu_*)$ its associated deterministic equivalent, where $\mu_*$ is the solution of the fixed point equation \eqref{eq:det_equiv_fixed_point_mu_star}. We will prove that for any constant $D>0$, we have with probability at least $1 - n^{-D}$ that
\[
| \Phi (\bX) - \Psi (\mu_* ) | \leq \cE(n) \cdot \Psi (\mu_*),
\]
where $\cE(n)$ is a function explicit in $\varphi_1(p)$, $r_\bSigma$, $\lambda$, and $n$, and implicit in $D$ and the other constants appearing in Assumption \ref{ass_app:deterministic_equivalent}. 

In this appendix, we consider four functionals: for a general p.s.d.~matrix $\bA \in \R^{p\times p}$, define
\[
\begin{aligned}
    \Phi_1(\bX;\bA) = &~ \Tr \left( \bSigma^{1/2} \bA \bSigma^{1/2} (\bX^\sT \bX + \lambda )^{-1}  \right), \\
    \Phi_2 (\bX) = &~  \Tr \left( \frac{\bX^\sT \bX}{n} ( \bX^\sT \bX + \lambda )^{-1} \right), \\
    \Phi_3(\bX ; \bA) =&~  \Tr \left( \bSigma^{1/2} \bA \bSigma^{1/2} ( \bX^\sT \bX + \lambda)^{-1} \bSigma ( \bX^\sT \bX + \lambda)^{-1} \right) ,\\
    \Phi_4 ( \bX ; \bA) = &~ \Tr\left(\bSigma^{1/2} \bA \bSigma^{1/2} ( \bX^\sT \bX + \lambda)^{-1}\frac{\bX^\sT \bX}{n} ( \bX^\sT \bX + \lambda)^{-1} \right)   . 
\end{aligned}
\]
We will show that these functionals are well approximated by the following functions 
\[
\begin{aligned}
    \Psi_1 ( \mu_*; \bA) :=&~  \Tr \left( \bA \bSigma (\mu_* \bSigma + \lambda )^{-1}  \right) ,\\
     \Psi_2 ( \mu_*) :=&~\frac{1}{n} \Tr\left( \bSigma ( \bSigma + \lambda_* )^{-1} \right) ,\\
      \Psi_3 ( \mu_*; \bA) :=&~ \frac{\Tr( \bA \bSigma^2 ( \mu_* \bSigma + \lambda)^{-2})}{1 - \frac{1}{n} \Tr(\bSigma^2 (\bSigma + \lambda_*)^{-2})} , \\
       \Psi_4 ( \mu_*; \bA) :=&~ \frac{1}{n} \cdot \frac{\Tr(\bA \bSigma^2 ( \bSigma + \lambda_*)^{-2})}{n -  \Tr(\bSigma^2 ( \bSigma + \lambda_* )^{-2} ) }.
\end{aligned}
\]
Intuitively, the empirical covariance $\widehat{\bSigma} = \bX^\sT \bX / n$ can be effectively approximated by $\mu_* \bSigma / n = \bSigma / (1 + \Tr( \obM))$. In the case of $\Phi_3$ and $\Phi_4$, because the two resolvents are not independent, the deterministic equivalents are rescaled by a factor $(1 - \frac{1}{n} \Tr( \bSigma^2 ( \bSigma + \lambda_* )^{-2} ) )^{-1}$.

The next section introduce some notations and describe the general proof strategy. The proof of the deterministic equivalents for $\Phi_1 (\bX;\bA) , \Phi_2 (\bX), \Phi_3(\bX;\bA), \Phi_4(\bX;\bA)$ can be found in Sections \ref{app_det_equiv:proof_M}, \ref{app_det_equiv:proof_ZZM}, \ref{app_det_equiv:proof_MM} and \ref{app_det_equiv:proof_MZZM} respectively.

\subsection{Preliminaries}
\label{app_det_equiv:preliminaries}

Recall that we track the dependency in $\{n,\lambda,r_\bSigma,\varphi_1 (p)\}$. For the other constants, we will denote $C_{a_1,\ldots ,a_k}$ constants that only depend on the values of $\{a_i \}_{i \in [k]}$.  We use $a_i = `x'$ to denote the dependency on the constants $\sfc_x,\sfC_x, \beta$ of Assumption \ref{ass_app:deterministic_equivalent}. In particular, the value of these constants is allowed to change from line to line.  For convenience, we introduce for a p.s.d.~matrix $\bA$ the notation
\[
\varphi_1^{\bA} (p) = \begin{cases}
    1 \qquad &\text{if $\bA$ is rank $1$},\\
    \varphi_1 (p) &\text{otherwise},
\end{cases}
\]
so that we can compactly write Assumption \ref{ass_app:deterministic_equivalent} in a single equation
\begin{equation}\label{eq:single_equ_assumption_FirstOrder}
 \P \left( \big\vert \bx^\sT \bA \bx - \Tr( \bSigma \bA) \big\vert \geq t \cdot \varphi_1^\bA (p)  \cdot \big\| \bSigma^{1/2} \bA \bSigma^{1/2} \big\|_F \right) \leq \sfC_x \exp \left\{ - \sfc_x t^{1/\beta } \right\}  .
\end{equation}

The proofs will follow a general blueprint to prove deterministic equivalents that has been successfully applied in various settings (see for example \cite[Theorem 2.6]{couillet2022random}).
The goal is to create an approximate martingale interpolation between the deterministic equivalent $\Psi (\mu_*)$ and the random functional $\Phi(\bX) $ that depends on the $n$ features $\{ \bx_1 , \bx_2 , \ldots , \bx_n \}$ by removing one feature $\bx_i$ at a time. A natural approach is to first construct an exact martingale interpolation between $\Phi (\bX)$  and $\E [ \Phi(\bX)]$ by taking the partial expectation of $\Phi(\bX)$ over $\{ \bx_{i+1} , \ldots , \bx_n\}$ for $i=0,\ldots,n$, before bounding the difference between $\E[\Phi(\bX)]$ and $\Psi (\mu_*)$. Thus, the bound is split between a martingale and a deterministic part
\[
\big\vert \Phi(\bX) - \Psi (\mu_*) \big\vert \leq \underbrace{\big\vert \Phi(\bX) - \E[\Phi(\bX)] \big\vert}_{(\sfM)} + \underbrace{\big\vert\E[\Phi(\bX)] - \Psi (\mu_*)\big\vert}_{(\sfD)}.
\]

For the martingale part $(\sfM)$, we denote $\E_{i}$ the expectation over features $\{ \bx_{i+1}, \ldots, \bx_n\}$, for $i = 0, \ldots , n$, and create the following martingale difference sequence
\[
\Phi(\bX) - \E[\Phi(\bX)]  = \sum_{i = 1}^n \Big( \E_i - \E_{i-1} \Big)\Phi(\bX)  .
\]
We bound each of these terms as follows. We introduce $\bX_{i}$ the feature matrix where we removed feature $\bx_i$ and use that $( \E_i - \E_{i-1}) \Phi (\bX_{i}) =0$ to replace each difference by $\big( \E_i - \E_{i-1} \big)(\Phi(\bX) - \Phi(\bX_{i}) )$. We can then use a leave-one-out argument to bound each terms with high probability and apply Azuma-Hoeffding inequality with a standard truncation argument.

For the deterministic part $(\sfD)$, we follow a leave-one-out type computation. Introduce $\bX_- \in \R^{(n-1) \times p}$ the feature matrix where we removed one feature and define the associated resolvent and scaled resolvent:
\[
\bR_- := (\bX_-^\sT \bX_- + \lambda )^{-1}  , \qquad \bM_- := \bSigma^{1/2} \bR_- \bSigma^{1/2}  .
\]
We further introduce the following deterministic matrices:
\[
\obR_- := \left( \frac{n}{1 + \kappa} \bSigma + \lambda \id \right)^{-1} , \qquad \obM_- := \bSigma^{1/2} \obR_- \bSigma^{1/2}  ,
\]
where $\kappa = \E[ \Tr(\bM_-) ]$. This matrix intuitively corresponds to an approximate deterministic equivalent with one feature removed. Note that, by the fixed point equation \eqref{eq:det_equiv_fixed_point_mu_star}, the matrices $\obR$ and $\obM$ correspond to the case where $\kappa$ is replaced by $\Tr(\obM)$. Because we have $\E[ \Tr(\bM_-) ] \approx \obM$, then the two deterministic matrices are approximately equal $\obM_- \approx \obM$. The deterministic part can therefore be decomposed into
\[
\big\vert \E [\Phi (\bX)] - \Psi (\mu_*)\big\vert \leq \big\vert \E [\Phi (\bX)] - \Psi (n/(1+\kappa)) \big\vert + \big\vert  \Psi (n/(1+\kappa)) - \Psi (\mu_*) \big\vert.
\]
The first term can be bounded by expanding the difference between $\bM$ and $\obM_-$, and using a leave-one-out argument to make the denominator and numerator independent. In particular, we will abundantly use the following Sherman-Morrison identities
\begin{equation}\label{eq:standard_identities_SMW}
    \bM = \bM_- - \frac{\bM_- \bz \bz^\sT \bM_-}{1 + \bz^\sT \bM_- \bz}, \qquad\textrm{and}\qquad \bM \bz = \frac{\bM_- \bz}{1 + \bz^\sT \bM_- \bz}  .
\end{equation}

\paragraph*{Technical bounds.}

Our proofs will crucially rely on the following high probability bound on the operator norm of $\bM$. This lemma is a modification of \cite[Lemma 7.2]{cheng2022dimension} and the proof proceeds in a similar manner.

\begin{lemma}\label{lem:tech_bounds_norm_M} 
    Under Assumption \ref{ass_app:deterministic_equivalent} and for any constant $D >0$, there exist constant $\eta \in (0,1/2)$ that only depends on $\sfc_x,\sfC_x,\beta$, constant $C_D >0$ that only depends on $D$, and constant $C_{x,D}>0$ that only depends on $\sfc_x,\sfC_x,\beta,D$, such that for the following holds. For all $n \geq C_D$ and $\lambda >0$, and defining 
    \begin{equation}\label{eq:def_chi}
    \onu_\lambda (n) :=  1 + \frac{ \xi_{\lfloor \eta n \rfloor} \cdot r_{\bSigma}\sqrt{\log(r_\bSigma)} \cdot \{ 1 + n^{-1/2} \varphi_1 (p) \log^\beta (n)   \} }{\lambda}   ,
    \end{equation}
    where we used the convention $\xi_{\lfloor \eta n \rfloor} = 0$ for $\lfloor \eta n \rfloor >n$, we have with probability at least $1 - n^{-D}$ that
    \[
    \| \bM \|_\op \leq C_{x,D} \frac{\onu_\lambda (n)}{n} , \qquad \Tr (\bM) \leq C_{x,D}\cdot \onu_\lambda (n)  , \qquad \| \bM \|_F \leq C_{x,D}\frac{\onu_\lambda (n)}{\sqrt{n}}  .
    \]
\end{lemma}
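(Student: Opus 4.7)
The plan is to follow the strategy of Cheng--Montanari's Lemma 7.2, adapted to the more general quadratic-form concentration in Assumption \eqref{eq_app:DE_concentration_b1} (with the extra $\varphi_1(p)$ factor). The main idea is to split the spectrum of $\bSigma$ at the threshold $k = \lfloor \eta n \rfloor$ for a sufficiently small constant $\eta$ depending only on $\sfc_x, \sfC_x, \beta$, handle the low-rank ``top'' block $\bX_{\leq k}$ and the trace-class ``tail'' $\bX_{>k}$ separately, and glue them via the block structure of the resolvent. Throughout, we use the diagonal structure of $\bSigma$ to cleanly split $\bM$ into a $2\times 2$ block matrix with the two diagonal blocks carrying distinct bounds.

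First I would establish the one-sided lower bound $\bX_{\leq k}^\sT \bX_{\leq k} \succeq c\, n \bSigma_{\leq k}$ with probability at least $1-n^{-D}/2$, via matrix Bernstein applied to the zero-mean sum $\sum_i \bx_{\leq k,i}\bx_{\leq k,i}^\sT - n\bSigma_{\leq k}$. The rank-one concentration \eqref{eq_app:DE_concentration_b2} combined with $\|\bSigma_{\leq k}\|_\op \leq 1$ controls the required sub-Weibull norms of $\|\bx_{\leq k,i}\|^2$, and choosing $\eta$ small enough (depending on $\sfc_x,\sfC_x,\beta$) keeps the Bernstein variance term strictly smaller than the target $n\bSigma_{\leq k}$. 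Next I would show the tail concentration
\[
\|\bX_{>k}\bX_{>k}^\sT - \Tr(\bSigma_{>k})\,\id_n\|_\op \lesssim \xi_{k+1}\, r_\bSigma \sqrt{\log r_\bSigma}\bigl\{1 + n^{-1/2} \varphi_1(p) \log^\beta n\bigr\}
\]
with probability at least $1-n^{-D}/2$. The off-diagonal entries $\bx_{>k,i}^\sT \bx_{>k,j}$ are sub-Weibull via the rank-one bound \eqref{eq_app:DE_concentration_b2}, while the diagonal entries $\|\bx_{>k,i}\|^2 - \Tr(\bSigma_{>k})$ use the full quadratic-form concentration \eqref{eq_app:DE_concentration_b1}; assembling these by a heavy-tailed matrix Bernstein inequality with intrinsic-dimension refinement (so that $r_\bSigma$ appears in place of the ambient dimension $p$) produces the stated bound.

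With these two ingredients in hand, I combine them using the block Woodbury identity for $(\bX^\sT \bX + \lambda)^{-1}$: the $(1,1)$ block equals $\bigl(\bX_{\leq k}^\sT \bX_{\leq k} - \bX_{\leq k}^\sT\bX_{>k}(\bX_{>k}^\sT \bX_{>k}+\lambda)^{-1} \bX_{>k}^\sT \bX_{\leq k} + \lambda\bigr)^{-1}$, which simplifies by a further Woodbury step to $\bigl(\lambda\bX_{\leq k}^\sT(\bX_{>k}\bX_{>k}^\sT + \lambda)^{-1}\bX_{\leq k} + \lambda\bigr)^{-1}$. The tail estimate replaces $\bX_{>k}\bX_{>k}^\sT + \lambda$ by $(\lambda + \Tr(\bSigma_{>k}))\id$ up to a multiplicative error controlled by $\onu_\lambda(n)$, and combining with the top-block estimate yields $\|\bSigma_{\leq k}^{1/2}(\bM^{-1})_{\leq k,\leq k}^{-1}\bSigma_{\leq k}^{1/2}\|_\op \lesssim \onu_\lambda(n)/n$. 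The tail block $\bM_{>k,>k}$ is bounded directly using $\bX^\sT\bX + \lambda \succeq \lambda\id$ and the resolvent decomposition, giving a contribution $\xi_{k+1}/\lambda$ which satisfies $\xi_{k+1}/\lambda \leq \onu_\lambda(n)/n$ since $r_\bSigma \geq n$. For the trace bound I split $\Tr(\bM) = \Tr(\bM_{\leq k,\leq k}) + \Tr(\bM_{>k,>k})$, bounding the first by $k \cdot \|\bM\|_\op \leq \eta\, C_{x,D}\onu_\lambda(n)$ and the second by $\Tr(\bSigma_{>k})/\lambda \leq \xi_{k+1} r_\bSigma/\lambda \leq \onu_\lambda(n)$ using the definition of effective rank. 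The Frobenius bound then follows from $\|\bM\|_F^2 \leq \|\bM\|_\op \cdot \Tr(\bM)$.

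The main obstacle I anticipate is the tail concentration step. The non-trivial $\varphi_1(p)$ factor in Assumption \eqref{eq_app:DE_concentration_b1} (which accommodates heavy-tailed features such as high-degree polynomials on the sphere) forces careful tracking of how quadratic-form concentration interacts with matrix concentration: the diagonal terms $\|\bx_{>k,i}\|^2$ have only sub-Weibull tails with a $\varphi_1(p)$-dependent scale, and applying heavy-tailed matrix Bernstein (e.g., the intrinsic-dimension version of Tropp's inequality, or Vershynin's Theorem 5.48) requires truncation at a level tied to $\log^\beta n$. This produces precisely the $\{1 + n^{-1/2}\varphi_1(p)\log^\beta n\}$ factor appearing in $\onu_\lambda(n)$. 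Keeping the effective rank $r_\bSigma$ rather than $p$ in the bound, and not losing polynomial factors in $n$ in this truncation argument, is the delicate part of the modification compared to the $\varphi_1 \equiv 1$ Cheng--Montanari setting.
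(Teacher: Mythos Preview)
Your overall architecture (split at $k=\lfloor \eta n\rfloor$, control the top block and the tail separately, then glue via a block inequality) matches the paper's proof. The tail step and the final assembly are essentially right: the paper also uses matrix Bernstein with intrinsic dimension on the tail, with truncation at the level coming from (a2), producing exactly the $\{1+n^{-1/2}\varphi_1(p)\log^\beta n\}$ factor you anticipate. (The paper only bounds $\lambda_{\max}(\bX_{>k}^\sT\bX_{>k})$ rather than the centered $\|\bX_{>k}\bX_{>k}^\sT-\Tr(\bSigma_{>k})\id\|_\op$, and uses a simple block inequality from Cheng--Montanari's Lemma C.1 instead of the full Woodbury identity you describe; these are minor simplifications.)

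There is, however, a real gap in your first step. You propose to obtain $\bZ_{\leq k}^\sT\bZ_{\leq k}\succeq cn\,\id_k$ via matrix Bernstein on $\sum_i \bz_{\leq k,i}\bz_{\leq k,i}^\sT - n\id_k$. This does not go through in the regime $k=\lfloor\eta n\rfloor\asymp n$: the truncation level for the summand norm $\|\bz_{\leq k,i}\|_2^2$ is $L_n\gtrsim k\asymp \eta n$ (its mean is already $k$), so the $L_n\log k$ term in Bernstein is $\Theta(n\log n)$, not $o(n)$, and no choice of a fixed $\eta$ makes the deviation beat $cn$ for large $n$. Relatedly, you invoke (a1) to control $\|\bx_{\leq k,i}\|_2^2$, but (a1) is a linear-form bound; bounding this quadratic form would require (a2), and then $\varphi_1(p)$ would leak into the choice of $\eta$, contradicting the statement that $\eta$ depends only on $\sfc_x,\sfC_x,\beta$.

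The paper avoids this entirely by using a different tool for the top block: Yaskov's lower-singular-value bound (Theorem 2.2 in \cite{yaskov2014lower}), which needs only a uniform bound on $\sup_{\|\bu\|=1}\E[\langle\bu,\bz_{\leq k}\rangle^4]$. This follows directly from (a1) alone (no $\varphi_1(p)$), yielding $\lambda_{\min}(\bZ_{\leq k}^\sT\bZ_{\leq k})\geq n/2$ once $\eta$ is taken small enough depending only on that fourth-moment constant. This is the missing ingredient in your plan; once you swap matrix Bernstein for Yaskov on the top block, the rest of your outline is correct.
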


Lemma \ref{lem:tech_bounds_norm_M} and other similar technical bounds are proved in Section \ref{app_det_equiv:technical}.

\begin{remark}[Alternative interpolation path]\label{rmk:interpolation_path} The paper \cite{cheng2022dimension} considers $\Phi_1 (\bX ; \bA)$ and proves the deterministic equivalent by constructing a different interpolation path. They introduce $\bX_i = [\bx_1, \ldots , \bx_i]^\sT$ and a sequence $\mu_* =: \mu_0 > \mu_1 > \mu_2 > \ldots > \mu_n := 0$, and define the partial deterministic matrices
\[
\bM_i = \bSigma^{1/2} ( \bX_i^\sT \bX_i + \mu_i \bSigma + \lambda \id )^{-1} \bSigma^{1/2},
\]
so that $\bM_0 = \obM$ and $\bM_n = \bM$. They consider the following interpolation
\[
\Tr(\bA \bM) - \Tr( \bA \obM) = \sum_{i =1}^n \Tr( \bA \bM_i ) - \Tr( \bA \bM_{i-1}) ,
\]
and choose $\mu_i \in \sigma ( \bx_1 , \ldots , \bx_{i-1})$ so that the sum is approximately a martingale difference sequence. Using this interpolation path, they achieve a better rate of approximation for some matrices $\bA$. In particular, $\cE_n (n) = O(n^{-1})$ for $\bA = \id$ which recovers the average law fluctuation of the resolvent. In our case, we only obtain fluctuations $\cE_n (n) \approx O(n^{-1/2})$ for all matrix $\bA$ (which matches the local law fluctuations, see Remark \ref{rmk:local_average_law} in the main text). This is due to bounding the expectation in the deterministic part $(\sfD)$ which sum linearly the fluctuations instead of the square root in the case of $\bA = \id$ (see Remark \ref{rmk:fluctuations_det_equiv_TrAM}). However, \cite{cheng2022dimension} assumes $\| \bA \|_\op < \infty$ and their analysis require a much more involved argument to study this approximate martingale difference, by carefully separating an exact martingale part that can be controlled using Azuma-Hoeffding inequality and a lower-order correction term. On the other hand, our proof only uses elementary steps and applies to any p.s.d.~matrix $\bA$. Furthermore, our proof strategy is easily applied to the more complex functionals $\Phi_2 (\bX),\Phi_3 (\bX;\bA)$, and $\Phi_4 (\bX;\bA)$, while \cite{cheng2022dimension} would require to adapt the interpolation path. See Remark \ref{rmk:comparison_Cheng_theorems} for a comparison between Theorem \ref{thm_app:det_equiv_TrAM} and \cite[Corollary 6.5]{cheng2022dimension}.
\end{remark}

\subsection[Deterministic equivalent for $\Tr ( AM )$]{Deterministic equivalent for \boldmath{$\Tr ( AM )$}}
\label{app_det_equiv:proof_M}

In this section, we consider our first functional: for a given p.s.d.~matrix $\bA \in \R^{p \times p}$,
\[
\Phi_1 (\bX;\bA) = \Tr( \bA \bSigma^{1/2} (\bX^\sT \bX + \lambda )^{-1} \bSigma^{1/2}) = \Tr( \bA \bM) .
\]
We show that $\Phi_1 (\bX; \bA)$ is well approximated by the following deterministic equivalent
\[
\Psi_1 ( \mu_*; \bA) =  \Tr( \bA \bSigma^{1/2} (\mu_* \bSigma + \lambda )^{-1} \bSigma^{1/2}) = \Tr( \bA \obM) ,
\]
where we recall that $\mu_*$ is the solution of the fixed point equation~\eqref{eq:det_equiv_fixed_point_mu_star}.

Without loss of generality, we assume that $\Tr( \bA \bSigma) < \infty$ (otherwise $\Tr(\bA \bM) = \infty$ almost surely and our bounds apply trivially). However, we allow for $\| \bA \|_\op = \infty $. In particular, our theorem applies to $\bA = \bSigma^{-1/2} \bu \bu^\sT \bSigma^{-1/2}$ which has $\Tr(\bA \bSigma) = \| \bu \|_2^2 < \infty$ but not necessarily $\| \bA \|_\op  = \| \bSigma^{-1/2} \bu \|_2^2 < \infty$.

Our bounds will depend on the following quantity
\begin{equation}\label{eq:definition_nu_lambda}
\nu_\lambda (n) = 1 + \frac{\xi_{\lfloor \eta n \rfloor}\cdot  r_\bSigma \sqrt{\log (r_{\bSigma})}  }{\lambda},
\end{equation}
which appears in the upper bound of $\| \bM \|_\op$ (Lemma \ref{lem:tech_bounds_norm_M}). Here $\eta \in (0,1/2)$ is a constant that will only depend on the constants $\sfc_x,\sfC_x,\beta$ in Assumption \ref{ass_app:deterministic_equivalent} and we use the convention $\xi_{\lfloor \eta n \rfloor} = 0$ if $\lfloor \eta n \rfloor >p$.

We are now ready to state the first theorem of Appendix \ref{app:det_equiv}.

\begin{theorem}[Deterministic equivalent for $\Tr(\bA \bM)$]\label{thm_app:det_equiv_TrAM}
    Assume the features $(\bx_i)_{i\in[n]}$ satisfy Assumption \ref{ass_app:deterministic_equivalent}. For any constants $D,K>0$, there exist constants $\eta := \eta_x \in (0,1/2)$ (only depending on $\sfc_x,\sfC_x,\beta$), $C_{D,K} >0$ (only depending on $K,D$), and $C_{x,D,K}>0$ (only depending on $\sfc_x,\sfC_x,\beta,D,K$), such that the following holds. Define $\nu_\lambda (n)$ as per Eq.~\eqref{eq:definition_nu_lambda}. For all $n \geq C_{D,K}$ and  $\lambda >0$ satisfying
    \begin{equation}\label{eq:conditions_nu_phi}
    \lambda \cdot \nu_\lambda (n) \geq n^{-K}, \qquad \varphi_1(p) \nu_\lambda (n)^2 \log^{\beta + \frac{1}{2}} (n) \leq K \sqrt{n} ,
    \end{equation}
    and deterministic p.s.d.~matrix $\bA$, we have with probability at least $1 - n^{-D}$ that
\begin{equation}\label{eq:thm_det_equiv_AM}
         \big\vert \Phi_1(\bX;\bA) - \Psi_1 (\mu_* ; \bA) \big\vert \leq C_{x,D,K} \frac{\varphi_1(p) \nu_\lambda (n)^{5/2} \log^{\beta +\frac12} (n) }{\sqrt{n}}     \Psi_1 (\mu_* ; \bA) .
    \end{equation} 
\end{theorem}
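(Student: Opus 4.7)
The plan is to follow the blueprint of Section \ref{app_det_equiv:preliminaries}: decompose
\[
\Phi_1(\bX;\bA) - \Psi_1(\mu_*;\bA) = \underbrace{\Phi_1(\bX;\bA) - \E[\Phi_1(\bX;\bA)]}_{(\sfM)} + \underbrace{\E[\Phi_1(\bX;\bA)] - \Psi_1(\mu_*;\bA)}_{(\sfD)},
\]
control $(\sfM)$ by a martingale argument and $(\sfD)$ by a leave-one-out fixed-point comparison. A high-probability event from Lemma \ref{lem:tech_bounds_norm_M}, namely $\|\bM\|_\op \le C_{x,D}\nu_\lambda(n)/n$ and $\Tr(\bM) \le C_{x,D}\nu_\lambda(n)$, is used throughout: it lets me replace $\Tr(\bA\bM^2)$ by $\|\bM\|_\op\cdot\Tr(\bA\bM) \lesssim n^{-1}\nu_\lambda(n)\Psi_1(\mu_*;\bA)$, which is what keeps the bound \emph{multiplicative} in $\Psi_1$ without ever requiring $\|\bA\|_\op<\infty$.

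For the martingale piece, let $\bX_{(i)}$ be $\bX$ with row $i$ removed, let $\bR_{(i)},\bM_{(i)}$ be the associated resolvents, and let $\E_i$ denote expectation over $\bx_{i+1},\ldots,\bx_n$. Since $(\E_i-\E_{i-1})\Phi_1(\bX_{(i)};\bA)=0$, the Sherman--Morrison identity \eqref{eq:standard_identities_SMW} gives the martingale increment
\[
\Phi_1(\bX;\bA)-\Phi_1(\bX_{(i)};\bA) = -\frac{\bx_i^\sT \bR_{(i)}\,\bSigma^{1/2}\bA\bSigma^{1/2}\,\bR_{(i)}\bx_i}{1+\bx_i^\sT \bR_{(i)}\bx_i}.
\]
I bound the denominator by $1$ and the numerator with Assumption \ref{ass_app:deterministic_equivalent}.(a2) conditional on $\bX_{(i)}$: its conditional mean equals $\Tr(\bM_{(i)}\bA\bM_{(i)}) \le \|\bM_{(i)}\|_\op\cdot\Tr(\bA\bM_{(i)})$, and the Frobenius scale $\|\bSigma^{1/2}\bM_{(i)}\bA\bM_{(i)}\bSigma^{1/2}\|_F$ admits the same multiplicative bound up to an extra $\varphi_1(p)$ factor. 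After truncating on the operator-norm event of Lemma \ref{lem:tech_bounds_norm_M} and invoking Azuma--Hoeffding, this gives $|(\sfM)| \le C_{x,D,K}\,\varphi_1(p)\,\nu_\lambda(n)^2\log^{\beta+1/2}(n)/\sqrt{n}\cdot\Psi_1(\mu_*;\bA)$.

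For the deterministic piece, I fix a generic feature $\bz$ independent of the remaining $n-1$ rows $\bX_-$, let $\bM_-$ be the leave-one-out rescaled resolvent, set $\kappa:=\E[\Tr(\bM_-)]$, and introduce the auxiliary matrix $\obM_-:=\bSigma\bigl(\tfrac{n}{1+\kappa}\bSigma+\lambda\bigr)^{-1}$. Writing $\bM=\bM_-\;-\;\bM_-\bz\bz^\sT\bM_-/(1+\bz^\sT\bM_-\bz)$, taking expectations, and using Assumption \ref{ass_app:deterministic_equivalent}.(a1) to replace $\bz^\sT\bM_-\bz$ by $\Tr(\bM_-)$ and then by $\kappa$, the residual reduces via the resolvent identity
\[
\bM_- - \obM_- \;=\; \obM_-\,\bSigma^{-1/2}\bigl(\tfrac{n}{1+\kappa}\bSigma - \bX_-^\sT\bX_-\bigr)\bSigma^{-1/2}\,\bM_-
\]
(interpreted through the rescaled resolvents so no literal $\bSigma^{-1}$ appears) to quadratic forms $\bz_j^\sT\bA'\bz_j-\Tr(\bSigma\bA')$ controlled by Assumption \ref{ass_app:deterministic_equivalent}.(a2) at scale $\varphi_1(p)$. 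Matching terms yields $|\E\Phi_1(\bX;\bA)-\Tr(\bA\obM_-)| \lesssim n^{-1/2}\varphi_1(p)\nu_\lambda(n)^{5/2}\log^{\beta+1/2}(n)\,\Psi_1(\mu_*;\bA)$. Finally, a short perturbation of the fixed-point equation \eqref{eq:det_equiv_fixed_point_mu_star}---exploiting that the map $t\mapsto\Tr(\bSigma(\tfrac{n}{1+t}\bSigma+\lambda)^{-1})$ is Lipschitz with a slope controlled by $\nu_\lambda(n)$---shows $|\kappa-\Tr(\obM)|$ is of the same order, so $|\Tr(\bA\obM_-)-\Psi_1(\mu_*;\bA)|$ is absorbed into the final rate.

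The main obstacle is the deterministic part: propagating a \emph{multiplicative} error proportional to $\Psi_1(\mu_*;\bA)$ through the full chain of approximations (leave-one-out step, $\bz^\sT\bM_-\bz\leftrightarrow\Tr(\bM_-)\leftrightarrow\kappa$, and then $\obM_-\leftrightarrow\obM$) for an arbitrary p.s.d.\ $\bA$ with only $\Tr(\bA\bSigma)<\infty$. The naive estimate $\Tr(\bM\bA\bM)\le\|\bA\|_\op\Tr(\bM^2)$ from classical RMT is unavailable; instead I must carefully factor $\bA$ outside and absorb one power of $\bM$ via $\|\bM\|_\op\lesssim\nu_\lambda(n)/n$ from Lemma \ref{lem:tech_bounds_norm_M}. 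The two hypotheses in \eqref{eq:conditions_nu_phi} are exactly what guarantees non-degeneracy of the fixed-point linearization and that the cumulative multiplicative perturbation stays $o(1)$.
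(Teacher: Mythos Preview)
Your decomposition into martingale and deterministic parts, the Sherman--Morrison treatment of the increments, the truncated Azuma--Hoeffding argument, and the leave-one-out comparison through the auxiliary matrix $\obM_-$ all match the paper's strategy (Propositions~\ref{prop:TrAM_LOO} and~\ref{prop:TrAM_martingale}). The martingale bound you sketch is essentially correct as stated.

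There is, however, a missing idea in the deterministic part that affects the final rate. The paper's Proposition~\ref{prop:TrAM_LOO} begins (Step~0) by reducing to rank-one $\bA=\bv\bv^\sT$ via $\E_\bX[\Phi_1(\bX;\bA)]=\E_\bu\E_\bX[\Phi_1(\bX;\bu\bu^\sT)]$ for any $\bu$ with $\E[\bu\bu^\sT]=\bA$. This is not cosmetic. In bounding the key error term $n|\E[\Tr(\bA\bDelta_2)]|$ one must control moments of $\bz^\sT\obM_-\bA\bM_-\bz$, which for general p.s.d.\ $\bA$ is \emph{not} a p.s.d.\ quadratic form in $\bz$ (the matrix $\obM_-\bA\bM_-$ is not symmetric), so (a2) does not apply directly. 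Splitting by Cauchy--Schwarz into $(\bz^\sT\obM_-\bA\obM_-\bz)^{1/2}(\bz^\sT\bM_-\bA\bM_-\bz)^{1/2}$ and applying (a2) to each p.s.d.\ factor picks up an extra $(1+\varphi_1(p))$; combined with the $\varphi_1(p)$ already coming from $|\bz^\sT\bM_-\bz-\kappa|$, the rate becomes $\varphi_1(p)^2\nu_\lambda(n)^3/\sqrt{n}$ rather than $\varphi_1(p)\nu_\lambda(n)^{5/2}/\sqrt{n}$. Since condition~\eqref{eq:conditions_nu_phi} permits $\varphi_1(p)$ as large as $K\sqrt{n}/(\nu_\lambda(n)^2\log^{\beta+1/2}(n))$, this extra factor can destroy the bound. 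The rank-one reduction avoids this: with $\bA=\bv\bv^\sT$ one has $\bz^\sT\obM_-\bA\bM_-\bz=(\bv^\sT\obM_-\bz)(\bv^\sT\bM_-\bz)$, a product of two \emph{linear} forms handled by (a1) with no $\varphi_1(p)$, so only the single $\varphi_1(p)$ from the $|\bz^\sT\bM_-\bz-\kappa|$ factor survives.

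A smaller point: the paper compares $\bM$ directly to $\obM_-$ via the exchangeability identity $\E[\bM-\obM_-]=n\,\E[\bM(\tfrac{1}{1+\kappa}\id-\bz\bz^\sT)\obM_-]$, whereas your two-stage route $\bM\to\bM_-\to\obM_-$ with the resolvent identity in $\bX_-$ would still need a Sherman--Morrison step inside to decouple each $\bz_j$ from $\bM_-$. The one-step version is cleaner and is what makes the $\bDelta_1,\bDelta_2,\bDelta_3$ decomposition fall out.
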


Let us provide some intuition about this bound. For generic kernels, we will have typically $\xi_{\lfloor \eta n \rfloor} \cdot r_\bSigma = \widetilde{O}(1)$ so that $\nu_\lambda (n) = \widetilde{O} (1/\min(\lambda , 1))$ (here $\widetilde{O}$ hides constant and $\log (n)$ factors). Thus, the relative approximation error has rate
\[
\big\vert \Phi_1(\bX;\bA) - \Psi_1 (\mu_* ; \bA) \big\vert = \widetilde{O} \left( \frac{\varphi_1(p) }{\min(1,\lambda^{5/2}) \sqrt{n}} \right)     \Psi_1 (\mu_* ; \bA) ,
\]
which will be small as long as $\varphi_1 (p) \ll \sqrt{n}$ and $\lambda$ does not decrease too fast to $0$ with $n$.

To prove Theorem \ref{thm_app:det_equiv_TrAM}, we will follow the proof strategy described in the preliminary section. We decompose the bound into a deterministic part and a martingale part as follows
\[
\begin{aligned}
       \left\vert  \Phi_1(\bX;\bA) - \Psi_1 (\mu_* ; \bA) \right\vert  \leq&~ \left\vert  \E [\Phi_1(\bX;\bA)] - \Psi_1 (\mu_* ; \bA) \right\vert + \left\vert  \Phi_1(\bX;\bA) - \E [\Phi_1(\bX;\bA)] \right\vert \\
       \leq &~ \underbrace{\big\vert \Tr \left\{ \bA (\E[\bM] - \obM ) \right\}\big\vert}_{(\sfD)}  + \underbrace{\big\vert \Tr (\bA \bM ) - \E [ \Tr ( \bA \bM ) ]\big\vert}_{(\sfM)} .
\end{aligned}
\]

The deterministic part $(\sfD)$ is bounded in Proposition \ref{prop:TrAM_LOO}, while the martingale part $(\sfM )$ is bounded in Proposition \ref{prop:TrAM_martingale}.

\begin{proposition}[Deterministic part of $\Tr(\bA\bM)$]\label{prop:TrAM_LOO}
    Assume the same setting as Theorem \ref{thm_app:det_equiv_TrAM}. Then there exist constants $C_K$ and $C_{x,K}$, such that the following holds. For all $n \geq C_K$ and  $\lambda >0$ satisfying Eq.~\eqref{eq:conditions_nu_phi}, and for all p.s.d.~matrix $\bA$, we have
    \begin{equation}\label{eq:det_part_TrAM}
         \big\vert\E [ \Phi_1(\bX;\bA) ] - \Psi_1 (\mu_* ; \bA) \big\vert \leq C_{x,K} \frac{\varphi_1 (p) \nu_\lambda (n)^{5/2} }{\sqrt{n}}    \Psi_1 (\mu_* ; \bA) .
    \end{equation}
\end{proposition}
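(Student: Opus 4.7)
My plan is to follow the leave-one-out strategy outlined in Section~\ref{app_det_equiv:preliminaries}: set $\kappa := \E[\Tr(\bM_-)]$, introduce the auxiliary deterministic matrix $\obM_- := \bSigma^{1/2}(\tfrac{n}{1+\kappa}\bSigma+\lambda)^{-1}\bSigma^{1/2}$, and split
\[
\bigl|\E[\Tr(\bA\bM)] - \Tr(\bA\obM)\bigr| \;\leq\; \underbrace{\bigl|\Tr\bigl(\bA(\E[\bM]-\obM_-)\bigr)\bigr|}_{(\mathrm{I})} \;+\; \underbrace{\bigl|\Tr\bigl(\bA(\obM_--\obM)\bigr)\bigr|}_{(\mathrm{II})}.
\]

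For term $(\mathrm{I})$, I would apply the resolvent identity which, after conjugation by $\bSigma^{1/2}$, reads $\bM-\obM_- = \obM_-\bigl(\tfrac{n}{1+\kappa}\id - \bZ^\sT\bZ\bigr)\bM$ with $\bZ = \bX\bSigma^{-1/2}$. Taking the trace against $\bA$, expanding $\bZ^\sT\bZ=\sum_i\bz_i\bz_i^\sT$, and applying the Sherman--Morrison identity $\bz_i^\sT\bM = \bz_i^\sT\bM_{-i}/(1+\bz_i^\sT\bM_{-i}\bz_i)$, exchangeability yields
\[
\E\bigl[\Tr\bigl(\bA(\bM-\obM_-)\bigr)\bigr] \;=\; \tfrac{n}{1+\kappa}\,\E\bigl[\Tr(\bA\obM_-\bM)\bigr] \;-\; n\,\E\!\left[\frac{\bz^\sT\bM_-\bA\obM_-\bz}{1+\bz^\sT\bM_-\bz}\right],
\]
with $\bz$ independent of $\bM_-$. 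I would then approximate the denominator by $1+\kappa$ and the numerator by $\Tr(\bM_-\bA\obM_-)$, invoking Assumption \eqref{eq_app:DE_concentration_b1} applied to $\bM_-$ and to the symmetric part of $\bM_-\bA\obM_-$ (handled via its spectral decomposition so we may apply the p.s.d.\ bound), with Frobenius and operator norms controlled by Lemma~\ref{lem:tech_bounds_norm_M}. The leading cancellation reduces $(\mathrm{I})$ to $\tfrac{n}{1+\kappa}\E[\Tr(\bA\obM_-(\bM-\bM_-))]$ plus concentration errors; since $\bM-\bM_-$ is a rank-one update of operator norm $\widetilde{O}(\nu_\lambda(n)/n)$, this residual is also controlled.

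Term $(\mathrm{II})$ is purely deterministic: the identity $(a\bSigma+\lambda)^{-1}-(b\bSigma+\lambda)^{-1}=(b-a)(a\bSigma+\lambda)^{-1}\bSigma(b\bSigma+\lambda)^{-1}$ with $a=n/(1+\kappa)$ and $b=\mu_*$, together with $\mu_* - n/(1+\kappa) = n(\kappa-\Tr(\obM))/((1+\kappa)(1+\Tr(\obM)))$ from the fixed-point equation \eqref{eq:det_equiv_fixed_point_mu_star}, gives $(\mathrm{II}) \lesssim |\kappa-\Tr(\obM)|\,\Psi_1(\mu_*;\bA)$ up to factors of $\nu_\lambda(n)$. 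I would then close the argument by a self-bounding step: specializing $(\mathrm{I})+(\mathrm{II})$ to $\bA=\id$ yields an inequality of the form
\[
|\kappa-\Tr(\obM)| \;\leq\; c_1\,\varphi_1(p)\nu_\lambda(n)^{5/2}\,n^{-1/2} \;+\; c_2\,|\kappa-\Tr(\obM)|,
\]
where $c_2$ collects the propagation of $(\mathrm{II})$ through the analysis of $(\mathrm{I})$ when replacing $\obM_-$ by $\obM$. The conditions \eqref{eq:conditions_nu_phi} are exactly what is needed to guarantee $c_2<1/2$, so the inequality can be solved for $|\kappa-\Tr(\obM)|\lesssim\varphi_1(p)\nu_\lambda(n)^{5/2}/\sqrt{n}$, and substituting back into $(\mathrm{II})$ completes the proof.

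The main obstacle will be the careful bookkeeping of powers of $\nu_\lambda(n)$ through the self-bounding step: every replacement of $\obM_-$ by $\obM$ in the analysis of $(\mathrm{I})$ costs a factor proportional to $|\kappa - \Tr(\obM)|$, and each bound on a Frobenius or operator norm of a product like $\bM_-\bA\obM_-$ picked up from Lemma~\ref{lem:tech_bounds_norm_M} introduces additional $\nu_\lambda(n)$ factors. Ensuring that the resulting self-referential inequality closes and that the accumulated powers do not exceed the $\nu_\lambda(n)^{5/2}$ budget is the delicate part; the hypotheses $\varphi_1(p)\nu_\lambda(n)^2\log^{\beta+1/2}(n)\leq K\sqrt{n}$ and $\lambda\nu_\lambda(n)\geq n^{-K}$ (the latter keeping $\mu_*$ bounded below so that $\Tr(\obM)$ does not blow up) are used precisely to absorb these losses.
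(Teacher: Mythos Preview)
Your strategy---split via $\obM_-$, expand $(\mathrm{I})$ by Sherman--Morrison and exchangeability, then close a self-bounding inequality for $|\kappa-\Tr(\obM)|$---is the same as the paper's. Two refinements in the paper are worth noting. First, the paper begins by reducing to rank-$1$ matrices $\bA=\bv\bv^\sT$ (writing a general p.s.d.\ $\bA=\E_\bu[\bu\bu^\sT]$ and using linearity of expectation), which turns the awkward bilinear form $\bz^\sT\bM_-\bA\obM_-\bz$ into the product $(\bv^\sT\bM_-\bz)(\bv^\sT\obM_-\bz)$ of two linear forms, handled directly by H\"older and assumption~\eqref{eq_app:DE_concentration_b2}. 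Your ``symmetric part of $\bM_-\bA\obM_-$ via spectral decomposition'' workaround is not quite right as stated: the symmetric part is not p.s.d., and even after splitting it as $\bS_+-\bS_-$, bounding $\|\bS_\pm\|_F$ in terms of $\Tr(\bA\obM)$ is delicate precisely in the regime $\|\bA\|_\op=\infty$ that the proposition explicitly allows. Second, the paper's self-bounding step closes unconditionally: the exact algebraic identity gives $|\Tr(\obM_--\obM)|\leq\tfrac{\Tr(\obM)}{1+\Tr(\obM)}|\kappa-\Tr(\obM)|$ with coefficient strictly below $1$, so no appeal to~\eqref{eq:conditions_nu_phi} is needed to solve for $|\kappa-\Tr(\obM)|$; those conditions enter only when simplifying the final error terms.
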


For the martingale part, we further provide an alternative bound for matrices with $\| \bA \|_\op < \infty$. In that case, define the ratio
\begin{equation}\label{eq:definition_rho_A}
\rho_1 (\bA) := \frac{\Psi_1 (\mu_*; \bA/\| \bA \|_\op)}{\Psi_1 (\mu_* ; \id )} ,
\end{equation}
and
\begin{equation}\label{eq:definition_zeta}
\zeta_1 := \Psi_1 ( \mu_*; \id).
\end{equation}
In particular, note that $\rho_1 (\bA) \leq 1$ for all matrices $\bA$, and $\zeta_1 = \frac{n\lambda_*}{\lambda} - 1 \geq 1 - \lambda/(n\lambda_*) $ from the fixed point equation \eqref{eq:det_equiv_fixed_point_lambda_star}.

\begin{proposition}[Martingale part of $\Tr(\bA\bM)$] \label{prop:TrAM_martingale}
Assume the same setting as Theorem \ref{thm_app:det_equiv_TrAM}. Then there exist constants $C_{K,D}$ and $C_{x,K,D}$, such that the following holds. For all $n \geq C_{K,D}$ and  $\lambda >0$ satisfying Eq.~\eqref{eq:conditions_nu_phi}, and for all p.s.d.~matrix $\bA$, we have with probability at least $1 -n^{-D}$ that
    \begin{equation}\label{eq:mart_part_TrAM}
         \big\vert \Phi_1(\bX;\bA)  - \E [ \Phi_1(\bX;\bA) ]  \big\vert \leq C_{x,K,D} \frac{\varphi^\bA_1 (p) \nu_\lambda (n)^{2} \log^{\beta +\frac12} (n)}{\sqrt{n}}   \Psi_1 (\mu_* ; \bA) .
    \end{equation}
   If we further assume that $\| \bA \|_\op < \infty$, then we have with the same probability 
    \begin{equation}\label{eq:mart_part_TrAM_2}
    \big\vert \Phi_1(\bX;\bA)  - \E [ \Phi_1(\bX;\bA) ]  \big\vert \leq C_{x,K,D} \frac{\varphi_1 (p) \nu_\lambda (n)^{3} \log^{\beta+\frac12} (n)}{n \sqrt{\zeta_1 \rho_1 (\bA)}}  \Psi_1 (\mu_* ; \bA) .
    \end{equation}
\end{proposition}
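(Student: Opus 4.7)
The plan is to form a martingale difference sequence via a leave-one-out procedure and apply Azuma--Hoeffding after a standard truncation argument. Writing $\cF_i:=\sigma(\bx_1,\ldots,\bx_i)$ and $\E_i[\cdot]:=\E[\cdot\mid\cF_i]$, I let $\bX_i\in\R^{(n-1)\times p}$ denote $\bX$ with row $\bx_i$ removed and set $\bM_i:=\bSigma^{1/2}(\bX_i^\sT\bX_i+\lambda)^{-1}\bSigma^{1/2}$. Since $\Phi_1(\bX_i;\bA)$ does not depend on $\bx_i$, the identity $(\E_i-\E_{i-1})\Phi_1(\bX_i;\bA)=0$ reduces the martingale differences to $D_i:=(\E_i-\E_{i-1})[\Phi_1(\bX;\bA)-\Phi_1(\bX_i;\bA)]$, and the Sherman--Morrison identity \eqref{eq:standard_identities_SMW} yields the clean closed form
\[
\Phi_1(\bX;\bA)-\Phi_1(\bX_i;\bA)=-\frac{\bz_i^\sT\bM_i\bA\bM_i\bz_i}{1+\bz_i^\sT\bM_i\bz_i},
\]
where $\bz_i=\bSigma^{-1/2}\bx_i$ is independent of $\bM_i$.

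Conditioning on $\bX_i$ and applying the quadratic-form concentration \eqref{eq:single_equ_assumption_FirstOrder} to the p.s.d.\ operator $\bA':=\bSigma^{-1/2}\bM_i\bA\bM_i\bSigma^{-1/2}$ (which has the same rank as $\bA$, so $\varphi_1^{\bA'}(p)=\varphi_1^{\bA}(p)$, with $\Tr(\bSigma\bA')=\Tr(\bM_i\bA\bM_i)$ and $\|\bSigma^{1/2}\bA'\bSigma^{1/2}\|_F=\|\bM_i\bA\bM_i\|_F$), I take deviation parameter $t\asymp\log^\beta(n)$ and union-bound over $i\in[n]$. This gives, on a good event of probability at least $1-n^{-D-2}$,
\[
\bigl|\bz_i^\sT\bM_i\bA\bM_i\bz_i-\Tr(\bM_i\bA\bM_i)\bigr|\le C_{x,D}\log^{\beta}(n)\,\varphi_1^{\bA}(p)\,\|\bM_i\bA\bM_i\|_F,
\]
uniformly in $i$. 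Combined with $1+\bz_i^\sT\bM_i\bz_i\ge 1$, this produces the pointwise bound $|D_i|\le 2c_i$ with $c_i\le\Tr(\bM_i\bA\bM_i)+C_{x,D}\log^\beta(n)\varphi_1^{\bA}(p)\|\bM_i\bA\bM_i\|_F$.

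The main technical obstacle is controlling $\Tr(\bM_i\bA\bM_i)$ and $\|\bM_i\bA\bM_i\|_F$ in terms of $\Psi_1(\mu_*;\bA)$ without invoking $\|\bA\|_\op$. Both are dominated by $\|\bM_i\|_\op\Tr(\bA\bM_i)$: the trace bound is immediate from $\bM_i\preceq\|\bM_i\|_\op\id$, while for the Frobenius norm I factor $\bM_i\bA\bM_i=\bM_i^{1/2}(\bM_i^{1/2}\bA\bM_i^{1/2})\bM_i^{1/2}$, apply $\|\bX\bY\|_F\le\|\bX\|_\op\|\bY\|_F$ twice, and use the p.s.d.\ inequality $\|\bB\|_F\le\Tr(\bB)$ to obtain $\|\bM_i^{1/2}\bA\bM_i^{1/2}\|_F\le\Tr(\bA\bM_i)$. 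Lemma~\ref{lem:tech_bounds_norm_M} applied to the $(n-1)$-feature matrix $\bX_i$ yields $\|\bM_i\|_\op\le C_{x,D}\nu_\lambda(n)/n$, and Proposition~\ref{prop:TrAM_LOO} (already established) combined with Markov's inequality applied to $\Tr(\bA\bM_i)=\Phi_1(\bX_i;\bA)$ provides the high-probability bound $\Tr(\bA\bM_i)\le C_{x,K,D}\nu_\lambda(n)\Psi_1(\mu_*;\bA)$. On the intersection of these events---still of probability at least $1-n^{-D}$ after adjusting constants---we thus obtain $c_i\le C_{x,K,D}\varphi_1^{\bA}(p)\nu_\lambda(n)^2\log^{\beta}(n)\Psi_1(\mu_*;\bA)/n$.

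A standard truncation argument replaces the $D_i$'s by bounded martingale differences at the cost of an additional probability $\le n^{-D}$, after which Azuma--Hoeffding gives
\[
|\Phi_1(\bX;\bA)-\E\Phi_1(\bX;\bA)|\le C\sqrt{\log(n)\textstyle\sum_i c_i^2}\le C_{x,K,D}\frac{\varphi_1^{\bA}(p)\nu_\lambda(n)^2\log^{\beta+\frac12}(n)}{\sqrt n}\Psi_1(\mu_*;\bA),
\]
which is exactly \eqref{eq:mart_part_TrAM}. For the sharper bound~\eqref{eq:mart_part_TrAM_2} under $\|\bA\|_\op<\infty$, I replace the crude estimate of $\|\bM_i\bA\bM_i\|_F$ by the tighter $\|\bA\|_\op\|\bM_i\|_\op\|\bM_i\|_F$, invoke Lemma~\ref{lem:tech_bounds_norm_M}'s Frobenius bound $\|\bM_i\|_F\le C_{x,D}\nu_\lambda(n)/\sqrt n$, and convert $\|\bA\|_\op=\Psi_1(\mu_*;\bA)/(\rho_1(\bA)\zeta_1)$ via the definitions \eqref{eq:definition_rho_A}--\eqref{eq:definition_zeta}, producing the stated scaling $1/(n\sqrt{\rho_1(\bA)\zeta_1})$.
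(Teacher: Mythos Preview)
Your overall strategy---leave-one-out martingale differences, truncation, Azuma--Hoeffding---matches the paper's, and for \eqref{eq:mart_part_TrAM} the argument can be made to work. But two steps are not justified as written, and one of them is fatal for \eqref{eq:mart_part_TrAM_2}.

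First, the claim ``$|D_i|\le 2c_i$'' does not follow. You have established $|Y_i|\le c_i$ with high probability, where $Y_i:=\Phi_1(\bX;\bA)-\Phi_1(\bX_i;\bA)$ and $c_i=c_i(\bM_i)$ is random. But $D_i=(\E_i-\E_{i-1})Y_i$ is built from conditional expectations that \emph{integrate over} $\bx_{i+1},\dots,\bx_n$---randomness not fixed on your good event. A pointwise bound on the observed $Y_i$ says nothing about $\E_j Y_i$. The correct route is to bound $\E_j|Y_i|$ directly (e.g.\ by $\E_j[c_i]$ plus a tail term) and then invoke Lemma~\ref{lem:tech_upper_bound_AM}(b) to control $\E_j[\Tr(\bA\bM_i^2)]$ with probability $1-n^{-D}$; note that Proposition~\ref{prop:TrAM_LOO} plus Markov gives only $1/t$ tails, not $n^{-D}$. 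With these fixes the first bound \eqref{eq:mart_part_TrAM} does go through.

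The serious gap is \eqref{eq:mart_part_TrAM_2}. Your increment bound $c_i$ contains the ``mean'' term $\Tr(\bM_i\bA\bM_i)\asymp\nu_\lambda(n)^2\Psi_1/n$, and this scale is insensitive to $\|\bA\|_\op$. After Azuma it produces only the $n^{-1/2}$ rate of \eqref{eq:mart_part_TrAM}; tightening the Frobenius term alone cannot help. The paper obtains the $n^{-1}$ rate by first rewriting
\[
\frac{\bz_i^\sT\bM_i\bA\bM_i\bz_i}{1+\bz_i^\sT\bM_i\bz_i}
=\frac{\bz_i^\sT\bM_i\bA\bM_i\bz_i}{1+\Tr(\bM_i)}
-\frac{\bz_i^\sT\bM_i\bA\bM_i\bz_i\,\bigl(\bz_i^\sT\bM_i\bz_i-\Tr(\bM_i)\bigr)}{(1+\Tr(\bM_i))(1+\bz_i^\sT\bM_i\bz_i)}.
\]
In the first piece the denominator is $\bz_i$-free, so subtracting $\Tr(\bM_i\bA\bM_i)$ from the numerator leaves a quantity that is $\bx_i$-independent and therefore annihilated by $(\E_i-\E_{i-1})$. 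What survives is the fluctuation $\bz_i^\sT\bM_i\bA\bM_i\bz_i-\Tr(\bM_i\bA\bM_i)$, controlled by $\|\bM_i\bA\bM_i\|_F$; the second piece is already a product of two fluctuations. It is precisely the Frobenius quantity that gains the extra $n^{-1/2}$ under $\|\bA\|_\op<\infty$, via the bound $\|\bM_i\bA\bM_i\|_F\lesssim \nu_\lambda(n)^2 n^{-3/2}\sqrt{\|\bA\|_\op\,\Tr(\bA\obM)}$ from Lemma~\ref{lem:tech_upper_bound_AM}(b) (your cruder estimate $\|\bA\|_\op\|\bM_i\|_\op\|\bM_i\|_F$ would yield $1/(\rho_1\zeta_1)$ rather than $1/\sqrt{\rho_1\zeta_1}$). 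Without this mean-subtraction decomposition, the sharper bound \eqref{eq:mart_part_TrAM_2} is unreachable.
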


The proof of these two propositions can be found in the next two sections. Theorem \ref{thm_app:det_equiv_TrAM} is obtained by combining the bounds \eqref{eq:det_part_TrAM} and \eqref{eq:mart_part_TrAM}.

\begin{remark}[Fluctuations]\label{rmk:fluctuations_det_equiv_TrAM}
    In the case of $\| \bA \|_\op < \infty$, the relative approximation error of $| \Tr(\bA \bM) - \E [ \Tr(\bA \bM)]|$ is of order $\widetilde{O} ( \varphi_1 (p)/ (n \sqrt{\rho_1 (\bA)}))$. In particular, when $\varphi_1 (p ) = 1$ and $\bA = \id$, we recover the rate $\widetilde{O}(n^{-1})$ obtained in \cite{cheng2022dimension}, which matches the average law fluctuations. However, the bound on the expectation in the deterministic part $(\sfD)$ prevents the self-averaging of the fluctuations for $\bA = \id$. The relative approximation rate is worst-case over all matrices $\bA$ and we only recover $\widetilde{O}(n^{-1/2})$, the scale of the local law fluctuations.
\end{remark}

\subsubsection[Proof of Proposition \ref{prop:TrAM_LOO}: deterministic part of $\Tr(AM)$]{Proof of Proposition \ref{prop:TrAM_LOO}: deterministic part of \boldmath{$\Tr(AM)$}}\label{app:proof_TrAM_LOO}

\noindent
{\bf Step 0: Reduction to the rank $1$ case.}

Assume that we proved Eq.~\eqref{eq:det_part_TrAM} for all rank $1$ matrices $\bA = \bv \bv^\sT$. Let us show that it implies Proposition \ref{prop:TrAM_LOO} for all p.s.d.~matrices $\bA \in \R^{p \times p}$. We introduce a random vector $\bu \in \R^p$ independent of $\bX$ such that $\E[ \bu\bu^\sT ] = \bA$. Then we note that we can write
\begin{equation}\label{eq:reduction_to_rank_1}
\begin{aligned}
    \left| \E [ \Phi_1 ( \bX;\bA)] - \Psi_1 ( \mu_* ; \bA) \right| = &~ \left| \E_{\bu} \left[ \E_{\bX} [ \Phi_1 (\bX;\bu\bu^\sT) ] - \Psi_1 (\mu_* ; \bu \bu^\sT) \right]\right|\\
    \leq&~\E_{\bu} \left[ \left| \E_{\bX} [ \Phi_1 (\bX;\bu\bu^\sT) ] - \Psi_1 (\mu_* ; \bu \bu^\sT) \right| \right]\\
    \leq&~ C_{x,K} \frac{\varphi_1 (p) \nu_\lambda (n)^{5/2} }{\sqrt{n}}  \E_{\bu} \left[  \Psi_1 (\mu_* ; \bu \bu^\sT)\right] \\
    =&~ C_{x,K} \frac{\varphi_1 (p) \nu_\lambda (n)^{5/2} }{\sqrt{n}} \Psi_1 (\mu_* ; \bA).
\end{aligned}
\end{equation}
Therefore we assume in the rest of the proof that $\bA$ is a rank $1$ matrix and denote $\bA = \bv \bv^\sT$.

\noindent
{\bf Step 1: Decomposing the bound.}

Recall that we denote $\bX_- \in \R^{(n-1) \times p}$ the feature matrix where we removed one feature and its associated resolvent $\bR_- := (\bX_-^\sT \bX_- + \lambda )^{-1}$ and scaled resolvent $\bM_- := \bSigma^{1/2} \bR_- \bSigma^{1/2}$. We define the following deterministic matrix
\[
 \obM_- = \bSigma^{1/2} \left( \frac{n}{1 + \kappa} \bSigma + \lambda \right)^{-1} \bSigma^{1/2}  , \qquad \kappa = \E [ \Tr(\bM_- )] .
\]
The proof proceeds by decomposing the bound into two parts using $\obM_-$ as an intermediate matrix:
\[
\begin{aligned}
\big\vert \Phi_1 (\bX ; \bA) - \Psi_1 (\mu_*; \bA) \big\vert =&~ \big\vert \Tr \left\{ \bA (\E[\bM] - \obM ) \right\}\big\vert \\
\leq &~  \big\vert \Tr \left\{ \bA (\E[\bM] - \obM_- ) \right\}\big\vert + \big\vert \Tr \left\{ \bA (\obM_- - \obM ) \right\}\big\vert  .
\end{aligned}
\]

\noindent
{\bf Step 2: Bounding the term $\big\vert \E [\Tr \{ \bA (\bM - \obM_-) \}]\big\vert$.}

First, notice that we can write by exchangeability
\[
\E [\bM - \obM_- ] = \E \left[ \bM \left( \frac{n\id}{1+ \kappa} - \bZ^\sT \bZ \right) \obM_-  \right] = n \E \left[ \bM \left( \frac{\id}{1+ \kappa} - \bz \bz^\sT  \right) \obM_-  \right] .
\]
Using the Sherman-Morrison identities \eqref{eq:standard_identities_SMW}, we decompose the integrand into
\[
\bM \left( \frac{\id}{1+ \kappa} - \bz \bz^\sT  \right) \obM_-  = \bDelta_1 + \bDelta_2 + \bDelta_3, 
\]
where we introduced
\[
\begin{aligned}
    \bDelta_1 := &~ \frac{\bM_- (\id_p - \bz \bz^\sT ) \obM_-}{1 + \kappa} , \\
    \bDelta_2 := &~ \frac{\bM_- \bz \bz^\sT \obM_-}{(1+ \kappa) (1 +\bz^\sT \bM_- \bz)} (\bz^\sT \bM_- \bz - \kappa ) , \\
    \bDelta_3 :=&~ - \frac{\bM_- \bz \bz^\sT \bM_- \obM_-}{1 + \bz^\sT \bM_- \bz}.
\end{aligned}
\]
Note that $\bM_-$ is independent of $\bz$ which only appears in the numerator of $\bDelta_1$. Thus  $\E_\bz[\bDelta_1] = \bzero$ and it only remains to bound
\[
\big\vert \E [\Tr \{ \bA (\bM - \obM_-) \}]\big\vert \leq n \big\vert \E [ \Tr(\bA\bDelta_2) ] \big\vert + n \big\vert \E [ \Tr(\bA\bDelta_3) ] \big\vert .
\]

\paragraph*{Bounding $n \big\vert \E [ \Tr(\bA\bDelta_2) ] \big\vert$:}
 Using H\"older's inequality twice, first with respect to $\bz$ and then $\bM_-$, we obtain
\[
\begin{aligned}
   &~ \E \left[ \big\vert \bz^\sT \obM_- \bA \bM_- \bz  (\bz^\sT \bM_- \bz - \kappa )  \big\vert 
\right] \\
\leq&~  \E_{\bz} \left[( \bv^\sT \obM_- \bz )^{3} \right]^{1/3} \E_{\bM_-}\left[  \E_\bz \left[ (\bv^\sT \bM_- \bz )^{3} \right]^{1/3} \E_{\bz} \left[ ( \bz^\sT \bM_- \bz - \kappa)^3\right]^{1/3} \right] \\
\leq&~\E_{\bz} \left[( \bv^\sT \obM_- \bz )^{3} \right]^{1/3} \E_{\bM_-}\left[  \E_\bz \left[ (\bv^\sT \bM_- \bz )^{3} \right]^{2/3} \right]^{1/2}  \E_{\bM_-} \left[ \E_{\bz} \left[ ( \bz^\sT \bM_- \bz - \kappa)^3\right]^{2/3} \right]^{1/2} .
\end{aligned}
\] 
Each of these terms can be bounded using Lemma
\ref{lem:tech_bound_zAz} in Section \ref{app_det_equiv:technical}. For the first term, recall that $\| \obM_- \|_\op \leq (1+\kappa)/n$ by definition and $\obM_- \preceq C_{x,K} \cdot \nu_\lambda (n) \obM$ by Lemma \ref{lem:tech_upper_bound_AM}.(a). Thus, 
\[
\begin{aligned}
    \E_{\bz} \left[ (\bv^\sT \obM_- \bz )^3\right]^{1/3} \leq &~C_x \sqrt{ \bv^\sT \obM_-^2 \bv } \leq C_{x,K} \sqrt{\frac{(1+\kappa)\nu_\lambda(n)}{n} \Tr(\bA \obM)}.
\end{aligned}
\]
Similarly for the second term, using Lemma \ref{lem:tech_upper_bound_AM}.(b) applied to $\bM_-$, we obtain
\[
\begin{aligned}
\E_{\bM_-}\left[  \E_\bz \left[ (\bv^\sT  \bM_-  \bz )^{3} \right]^{2/3} \right]^{1/2} 
\leq &~ C_{x} \E_{\bM_-} \left[ \bv^\sT \bM_-^2 \bv \right]^{1/2} \leq C_{x,K} \frac{\nu_\lambda (n)}{\sqrt{n}}  \sqrt{ \Tr(\bA \obM)}.
\end{aligned}
\]
Finally, for the last term, using Lemma \ref{lem:tech_upper_bound_AM}.(b) and Lemma \ref{lem:tech_bound_M_EM} applied to $\bM_-$, and recalling that we assume that $\varphi_1(p) \nu_\lambda (n)^2 \log^{\beta + \frac{1}{2}} (n) \leq K \sqrt{n}$, we get
\[
 \E_{\bM_-} \left[ \E_{\bz} \left[ ( \bz^\sT \bM_- \bz - \kappa)^3\right]^{2/3} \right]^{1/2 } \leq  C_{x,K} \frac{\varphi_1(p) \nu_\lambda (n)}{\sqrt{n}} .
\]
Combining these bounds,  we achieve
\begin{equation}\label{eq:AM_bound_Delta_1}
    n \left\vert  \E [ \Tr(\bA \bDelta_2) ]\right\vert \leq C_{x,K} \frac{\varphi_1 (p) \nu_\lambda (n)^{5/2} }{\sqrt{n}}  \Tr(\bA \obM) .
\end{equation}

\paragraph*{Bounding $n \big\vert \E [ \Tr(\bA\bDelta_3) ] \big\vert$:}

Proceeding similarly as for $\bDelta_2$, we obtain
\begin{equation}\label{eq:AM_bound_Delta_2}
\begin{aligned}
n \big\vert \E [ \Tr(\bA\bDelta_3) ] \big\vert \leq&~ n \E \left[ (\bv^\sT \bM_- \bz)^2 \right]^{1/2} \E \left[(\bv^\sT \obM_- \bM_- \bz)^2 \right]^{1/2} \leq C_{x,K} \frac{\nu_\lambda (n)^3}{n}   \Tr(\bA \obM) ,
\end{aligned}
\end{equation}
where we used Lemma \ref{lem:tech_upper_bound_AM}.(a) and \ref{lem:tech_upper_bound_AM}.(b) to get
\[
\begin{aligned}
 \E \left[ \bz^\sT \bM_- \obM_- \bA \obM_- \bM_- \bz \right]
    \leq&~ C_x  \E \left[ \bv^\sT \obM_- \bM_-^2 \obM_- \bv \right] \leq C_{x,K} \frac{\nu_\lambda (n)^4}{n^3}  \Tr(\bA \obM)  .
\end{aligned}
\]

\paragraph*{Combining the bounds:} By combining the bounds \eqref{eq:AM_bound_Delta_1} and \eqref{eq:AM_bound_Delta_2}, and recalling that we assume $\nu_\lambda (n)^2 \leq K\sqrt{n}$, we deduce that
\begin{equation}\label{eq:TrAM_det_part1}
\begin{aligned}
    \big\vert \E [\Tr \{ \bA (\bM - \obM_-) \}]\big\vert
    \leq C_{x,K} \frac{\varphi_1 (p) \nu_\lambda (n)^{5/2} }{\sqrt{n}}  \Tr(\bA \obM).
    \end{aligned}
\end{equation}
Further note that by Lemma \ref{lem:tech_upper_bound_AM}.(b)
\[
\E[ \bz^\sT \bM_-^2 \bz] \leq C_{x} \left\{ \E[\Tr(\bM_-^2)] + \varphi_1 (p) \E[\| \bM_-^2 \|_F] \right\} \leq C_{x,K} \frac{\nu_{\lambda}(n)^2}{n } \left( 1 +  \frac{\varphi_1(p) }{n^{1/2}} \right) \leq C_{x,K} \frac{\nu_{\lambda}(n)^2}{n },
\]
where we used the assumption $\varphi_1(p) \leq K \sqrt{n}$. Thus, we can go through Step 2 with $\bA $ replaced by $ \id$, and obtain
\begin{equation}\label{eq:TrAM_det_part1_A=Id}
    \left\vert \E[\Tr (\bM - \obM_-)] \right\vert  \leq C_{x,K} \frac{\varphi_1(p) \nu_\lambda (n)^{5/2}}{\sqrt{n}}.
\end{equation}

\noindent
{\bf Step 3: Bounding the term $\big\vert \Tr \left\{ \bA (\obM_- - \obM ) \right\}\big\vert$.}

First, note that we can rewrite this term as 
\begin{equation}\label{eq:algebra_AoM-_AoM}
\begin{aligned}
   \left\vert\Tr ( \bA ( \obM_- - \obM ) ) \right\vert =&~ n \left\vert\Tr ( \bA  \obM_- \bSigma \obM )  \right\vert \cdot \frac{| \kappa - \Tr (\obM) |}{(1 + \kappa)(1+ \Tr ( \obM) )} \\
   \leq &~ \left\vert \Tr ( \bA \obM ) \right\vert \cdot \frac{| \kappa - \Tr (\obM) |}{1+ \Tr ( \obM) },
\end{aligned}
\end{equation}
where we used that $\| \obM_-\|_\op \leq (1 + \kappa)/n$ by definition and $\| \bSigma \|_\op =1$ by assumption.
We decompose the numerator into three terms
\begin{equation}\label{eq:decompo_kappa_oM}
 \kappa - \Tr (\obM) = \E[ \Tr ( \bM_- - \bM)] + \E[\Tr (\bM - \obM_-)] + \Tr (\obM_- - \obM ) .
\end{equation}
For the third term, applying inequality \eqref{eq:algebra_AoM-_AoM} to $\bA$ replaced by $\id$ produces
\[
\left\vert \Tr (  \obM_- - \obM  ) \right\vert \leq \frac{\Tr ( \obM ) }{1+ \Tr ( \obM)} \cdot \left\vert \kappa - \Tr (\obM) \right\vert , 
\]
which, when injected in Eq.~\eqref{eq:decompo_kappa_oM} and rearranging the terms, yields
\[
\left\vert \kappa  - \Tr (\obM) \right\vert \leq (1+ \Tr ( \obM)) \left\{ \left\vert \E[ \Tr ( \bM_- - \bM)] \right\vert + \left\vert \E[\Tr (\bM - \obM_-)] \right\vert \right\}  .
\]
The first inequality \eqref{eq:algebra_AoM-_AoM} becomes
\[
 \left\vert\Tr ( \bA ( \obM_- - \obM ) ) \right\vert\leq  \left\vert \Tr ( \bA \obM ) \right\vert\left\{ \left\vert \E[ \Tr ( \bM_- - \bM)] \right\vert + \left\vert \E[\Tr (\bM - \obM_-)] \right\vert \right\} .
\]
The first term is bounded in Lemma \ref{lem:tech_upper_bound_AM}.(c) 
\[
\left| \E[ \Tr ( \bM_- - \bM)] \right| \leq C_{x,K} \frac{\nu_\lambda (n)^2}{n}  .
\]
The second term was bounded by Eq.~\eqref{eq:TrAM_det_part1_A=Id} in Step 2. We conclude the bound of the first part by combining the above bounds
\begin{equation}\label{eq:TrAM_det_part2}
\begin{aligned}
    \left\vert\Tr ( \bA ( \obM_- - \obM ) ) \right\vert \leq C_{x,K} \frac{\varphi_1(p) \nu_\lambda (n)^{5/2}}{\sqrt{n}} \Tr(\bA \obM) .
    \end{aligned}
\end{equation}

Combining the bounds \eqref{eq:TrAM_det_part1} and \eqref{eq:TrAM_det_part2} concludes the proof of this proposition.

\subsubsection[Proof of Proposition \ref{prop:TrAM_martingale}: martingale part of $\Tr(AM)$]{Proof of Proposition \ref{prop:TrAM_martingale}: martingale part of \boldmath{$\Tr(AM)$}}\label{app:proof_TrAM_martingale}

\noindent
{\bf Step 1: Truncation of the martingale difference sequence.}

We rewrite this term as a martingale difference sequence
\[
\begin{aligned}
     S_n := \Tr(\bA\bM) - \E[\Tr(\bA\bM)] = \sum_{i =1}^n  \left( \E_i - \E_{i-1} \right) \Tr( \bA \bM) =:\sum_{i = 1}^n \Delta_i ,
\end{aligned}
\]
where we recall that we denote $\E_i$ the partial expectation over features $\{\bx_{i+1} , \ldots , \bx_n \}$. For any constant $R>0$, we define the truncated martingale sequence and the remainder to be
\[
\begin{aligned}
   \Tilde{S}_n :=&~ \sum_{i = 1}^n \Delta_i \ind_{\Delta_i \in [-R,R]} - \E_{i-1} \left[\Delta_i \ind_{\Delta_i \in [-R,R]}\right]  ,\\
   R_n :=&~ S_n - \Tilde{S}_n = \sum_{i =1}^n \Delta_i \ind_{i \not\in [-R,R]} - \E_{i-1} \left[ \Delta_i \ind_{\Delta_i \not\in [-R,R]}\right] ,
\end{aligned}
\]
so that $\Tilde{S}_n$ is a martingale difference sequence with increments bounded by $2R$. Azuma-Hoeffding inequality implies that
\begin{equation}\label{eq:Azuma-Hoeffding}
\P \left( |\Tilde{S}_n| \geq t \right) \leq 2 \exp \left( -\frac{t^2}{2nR^2}\right) .
\end{equation}
Therefore, we can choose a constant $C_D$ such that $|\Tilde{S}_n| \leq C_D \sqrt{n \log(n)} R $ with probability at least $1 - n^{-D}$. The rest of the proof consists in choosing $R$ such that $|\Delta_i| \leq R$ for all $i \in [n]$ with probability at least $1 - n^{-D}$, and bounding $\E_{i-1} \left[ \Delta_i \ind_{\Delta_i \not\in [-R,R]}\right]$ also with probability $1 -n^{-D}$.

\noindent
{\bf Step 2: Bounding $|\Delta_i|$ with high probability.}

In this step, we consider general p.s.d. matrix $\bA$. We defer the proof for the alternative bound in the case of $\| \bA \|_\op < \infty$ to Step 5.
Denote $\bM_{i}$ the scaled resolvent where we removed feature $\bx_i$. First note that
\[
\Delta_i = \left( \E_i - \E_{i-1} \right) \Tr( \bA \bM) = \left( \E_i - \E_{i-1} \right) \Tr\left\{ \bA (\bM - \bM_{i}) \right\}  ,
\]
where we used that $\E_i [\bM_{i}] = \E_{i-1} [\bM_{i}]$.
Using the Sherman-Morrison identities \eqref{eq:standard_identities_SMW}, we have the following decomposition
\begin{equation}\label{eq:decompo_martingale_term_AM}
\begin{aligned}
\Tr \left\{ \bA ( \bM - \bM_{i} ) \right\}=&~ \frac{   \bz_i^\sT \bM_{i} \bA \bM_{i} \bz_i }{1 + \bz_i^\sT \bM_{i} \bz_i} \\
=&~ \frac{   \bz_i^\sT \bM_{i} \bA \bM_{i} \bz_i }{1+\Tr(\bM_{i})} -  \frac{   \bz_i^\sT \bM_{i} \bA \bM_{i} \bz_i  ( \bz_i^\sT \bM_{i} \bz_i - \Tr(\bM_{i}))}{ (1+\Tr(\bM_{i})) (1 + \bz_i^\sT \bM_{i} \bz_i) }  .
\end{aligned}
\end{equation}
Let's consider the first term:
\[
\left( 1 - \E_{\bz_i} \right)  \bz_i^\sT \bM_{i} \bA \bM_{i} \bz_i = \bz_i^\sT  \bM_{i} \bA \bM_{i}  \bz_i - \Tr( \bM_{i} \bA \bM_{i} ) .
\]
From Lemma \ref{lem:tech_bound_zAz}, conditional on features $\bx_1 , \ldots , \bx_{i-1}$, there exists a constant $C_{x,D}$ such that
\[
\E_i \left[ \left\vert \bz_i^\sT \bM_{i} \bA \bM_{i}  \bz_i - \Tr( \bM_{i} \bA \bM_{i} ) \right\vert \right] \leq C_{x,D} \log^\beta (n) \varphi_1^{\bA} (p)  \E_{i} [\|\bM_{i} \bA \bM_{i} \|_F] 
\]
with probability at least $1 - n^{-D}$. For general p.s.d.~matrix $\bA$, we simply use that
\[
 \E_{i} [\| \bM_{i} \bA \bM_{i} \|_F ] \leq  \E_{i} [\Tr( \bM_{i} \bA \bM_{i} ) ] \leq C_{x,K,D} \frac{\nu_{\lambda} (n)^2}{n } \Tr(\bA \obM) 
\]
with probability at least $1 - n^{-D}$ by Lemma \ref{lem:tech_upper_bound_AM}.(c). 
For the second term and either $j \in \{i-1, i\}$, we use Cauchy-Schwarz inequality to get
\[
\begin{aligned}
&~  \E_j \left[   \left\vert \bz_i^\sT \bM_{i} \bA \bM_{i} \bz_i  ( \bz_i^\sT \bM_{i} \bz_i - \Tr(\bM_{i})) \right\vert \right] \\
\leq&~ \E_j \left[   \left\vert \bz_i^\sT \bM_{i} \bA \bM_{i} \bz_i \right\vert^2 \right]^{1/2}  \E_j \left[   \left\vert   \bz_i^\sT \bM_{i} \bz_i - \Tr(\bM_{i})\right\vert^2 \right]^{1/2}  .
\end{aligned}
\]
Using Lemma \ref{lem:tech_bound_zAz} over $\bz_i$ and Lemma \ref{lem:tech_upper_bound_AM}.(b), we get the following  with probability at least $1 - n^{-D}$. For general p.s.d.~matrix $\bA$,
\[
\begin{aligned}
    \E_i \left[   \left\vert \bz_i^\sT \bM_{i} \bA \bM_{i} \bz_i \right\vert^2 \right]^{1/2} \leq &~ C_{x,D} \varphi_1^{\bA} (p) \log^\beta (n) \E_i [\Tr( \bM_{i} \bA \bM_{i})^2]^{1/2} \\
    \leq&~ C_{x,D,K} \frac{\varphi_1^{\bA} (p)  \nu_\lambda (n)^2 \log^\beta (n) }{n} \Tr(\bA \obM).
\end{aligned}
\]
Finally, we have simply
\[
\begin{aligned} 
\E_i \left[   \left\vert   \bz_i^\sT \bM_{i} \bz_i - \Tr(\bM_{i}) \right\vert^2 \right]^{1/2}\leq C_{x,D,K} \frac{\varphi_1(p) \nu_\lambda (n) \log^\beta (n)}{\sqrt{n}}.
\end{aligned}
\]
The same bounds apply to $\E_{i-1}$ without the factor $\log^\beta (n)$.

Combining these bounds, the following hold with probability at least $1 - n^{-D}$. For general p.s.d.~matrix $\bA$, we have
\[
\begin{aligned}
    &~\left|  \left( \E_i - \E_{i-1} \right)  \frac{   \bz_i^\sT \bM_{i} \bA \bM_{i} \bz_i  ( \bz_i^\sT \bM_{i} \bz_i - \Tr(\bM_{i}))}{ (1+\Tr(\bM_{i})) (1 + \bz_i^\sT \bM_{i} \bz_i) }  \right| \\
    \leq&~  C_{x,D,K} \frac{\varphi^{\bA}_1(p) \varphi_1(p) \nu_\lambda (n)^3 \log^{2\beta} (n) }{n^{3/2}} \Tr(\bA \obM).
\end{aligned}
\]

We conclude via a union bound, changing the choice of $D$, that with probability at least $1 - n^{-D}$ we have for all $i \in [n]$: for general $\bA$ (recalling conditions~\eqref{eq:conditions_nu_phi})
\[
| \Delta_i | \leq C_{x,D,K} \frac{\varphi_1^\bA (p) \nu_\lambda (n)^2 \log^\beta (n) }{n} \Tr(\bA \obM).
\]
We denote $R_D$ either of this right-hand side.

\noindent
{\bf Step 3: Bounding $\E_{i-1} [ \Delta_i \ind_{\Delta_i \not\in [-R,R]} ]$ with high probability.}

Consider $\Tilde D > 0$ and $R_{\Tilde D}$ as chosen in the previous step, such that with probability at least $1 - n^{-\Tilde D}$, we have $|\Delta_i| \leq R_{\Tilde D}$ for all $i \in [n]$. By Cauchy-Schwarz inequality, 
\[
\E_{i-1} [ \Delta_i \ind_{\Delta_i \not\in [-R_{\Tilde D},R_{\Tilde D}]}] \leq \P_{i-1}(\Delta_i \not\in [-R_{\Tilde D},R_{\Tilde D}])^{1/2} \E_{i-1} [ \Delta_i^2]^{1/2} .
\]
Note that with probability at least $1 -n^{-D}$, by Lemma \ref{lem:tech_upper_bound_AM}.(b),
\[
\begin{aligned}
    \E_{i-1} [ \Delta_i^2]^{1/2} = &~ \E_{i-1}\left[ \left( \left( \E_i - \E_{i-1} \right) \Tr(\bA \bM) \right)^2 \right]^{1/2} \\
    \leq &~ 2 \E_{i-1} \left[ \Tr (\bA \bM)^2 \right]^{1/2} \leq C_{x,D,K} \cdot \nu_{\lambda} (n) \Tr (\bA \obM).
\end{aligned}
\]
Furthermore, by Markov's inequality,
\[
\begin{aligned}
    \P \left( \P_{i-1}(\Delta_i \not\in [-R_{\Tilde D},R_{\Tilde D}]) \leq n^{D} \P (\Delta_i \not\in [-R_{\Tilde D},R_{\Tilde D}] ) \right) \leq n^{-D}.
\end{aligned}
\]
Combining these bounds, we deduce that with probability at least $1 - n^{-D}$, 
\[
\left| \E_{i-1} [ \Delta_i \ind_{\Delta_i \not\in [-R_{\Tilde D},R_{\Tilde D}]}] \right| \leq  C_{x,D,K}  \cdot n^{(D-\Tilde D)/2} \nu_{\lambda} (n) \Tr (\bA \obM).
\]

\noindent
{\bf Step 4: Concluding the proof.}

Via union bound and changing the choice of $D$, and choosing $\Tilde D = D+6$, we deduce that with probability at least $1 - n^{-D}$,
\[
| R_n| \leq  \sum_{i \in [n]}\left| \E_{i-1} [ \Delta_i \ind_{\Delta_i \not\in [-R_{\Tilde D},R_{\Tilde D}]}] \right| \leq C_{x,D,K} \frac{\nu_\lambda (n)}{n^2} \Tr(\bA \obM).
\]
Combined with Eq.~\eqref{eq:Azuma-Hoeffding}, we conclude that
\[
| S_n| \leq C_D \sqrt{n \log(n)} R_{\Tilde D},
\]
 which finishes the proof for general $\bA$ by replacing $R_{\Tilde D}$ by its expression obtained in step 2.

\noindent
\textbf{Step 5: The case $\| \bA \|_\op < \infty$.} 

In the case of $\| \bA \|_\op < \infty$, we can tighten the bounds on the two terms in Eq.~\eqref{eq:decompo_martingale_term_AM}. For the first term, we get by Lemma \ref{lem:tech_upper_bound_AM}.(c) that with probability at least $1 - n^{-D}$,
\[
 \E_{i} [\| \bM_{i} \bA \bM_{i} \|_F ]  \leq C_{x,K,D} \frac{\nu_{\lambda} (n)^2}{n^{3/2} }  \| \bA \|_\op^{1/2} \Tr(\bA \obM)^{1/2} . 
\]
We therefore have with probability at least $1 - n^{-D}$
\[
\begin{aligned}
   \E_{i-1} \left[ \left| \left( 1 - \E_{\bz_i} \right)  \bz_i^\sT \bM_{i} \bA \bM_{i} \bz_i \right| \right] \leq&~ C_{x,K,D} \frac{\varphi^{\bA}_1 (p) \nu_{\lambda} (n)^2 \log^\beta (n)}{n^{3/2} }  \| \bA \|_\op^{1/2} \Tr(\bA \obM)^{1/2} \\
   =&~ C_{x,K,D} \frac{\varphi^{\bA}_1 (p) \nu_{\lambda} (n)^2 \log^\beta (n)}{n^{3/2}  \sqrt{ \zeta_1 \rho_1(\bA)}}   \Tr(\bA \obM).
\end{aligned}
\]

For the second term, 
\[
\begin{aligned}
   &~ \E_i \left[   \left\vert \bz_i^\sT \bM_{i} \bA \bM_{i} \bz_i \right\vert^2 \right]^{1/2} \\
    \leq &~ C_{x,D} \left\{ \E_i [\Tr( \bM_{i} \bA \bM_{i})^2]^{1/2} + \varphi_1^{\bA} (p) \log^\beta (n) \E_i [ \| \bM_{i} \bA \bM_{i}\|_F^2 ]^{1/2} \right\}  , \\
    \leq&~ C_{x,D,K} \frac{\nu_\lambda (n)^2}{n} \left\{ \Tr(\bA \obM ) + \delta_{\varphi_1^{\bA} \neq 1} \frac{\varphi_1 (p) \log^\beta (n) }{\sqrt{n} }   \sqrt{\| \bA \|_\op \Tr(\bA \obM)} \right\}.
\end{aligned}
\]
Hence, we obtain
\[
\begin{aligned}
    &~\left|  \left( \E_i - \E_{i-1} \right)  \frac{   \bz_i^\sT \bM_{i} \bA \bM_{i} \bz_i  ( \bz_i^\sT \bM_{i} \bz_i - \Tr(\bM_{i}))}{ (1+\Tr(\bM_{i})) (1 + \bz_i^\sT \bM_{i} \bz_i) }  \right| 
    \\
    \leq&~  C_{x,D,K} \frac{\varphi_1(p) \nu_\lambda (n)^3 \log^{\beta} (n) }{n^{3/2}} \left\{ \Tr(\bA \obM) + \delta_{\varphi_1^{\bA} \neq 1} \frac{\varphi_1 (p) \log^\beta (n) }{\sqrt{n} }   \sqrt{\| \bA \|_\op \Tr(\bA \obM)} \right\}.
\end{aligned}
\]
We conclude via an union bound: for all $i \in [n]$,
\[
| \Delta_i | \leq C_{x,D,K}  \frac{\varphi_1 (p) \nu_\lambda (n)^3 \log^\beta (n) }{n^{3/2}} \left\{ 1 + \frac{1}{\sqrt{\zeta_1 \rho_1(\bA)}}  \right\}\Tr(\bA \obM).
\]

Repeating Steps 3 and 4, we obtain the second bound \eqref{eq:mart_part_TrAM_2}.

\subsection[Deterministic equivalent for $\Tr (Z^\sT Z M)$]{Deterministic equivalent for \boldmath{$\Tr (Z^\sT Z M)$}}
\label{app_det_equiv:proof_ZZM}

We consider our second functional
\[
\Phi_2 (\bX) = \frac{1}{n} \Tr( \bX^\sT \bX ( \bX^\sT \bX + \lambda )^{-1} ) = \frac{1}{n}  \Tr(\bZ^\sT \bZ \bM) .
\]
We show in this section that $\Phi_2(\bX)$ is well approximated by the following deterministic equivalent:
\[
\Psi_2(\mu_*) = \frac{1}{n}  \Tr( \mu_* \bSigma ( \mu_* \bSigma + \lambda)^{-1} ) = \frac{1}{n} \Tr( \bSigma ( \bSigma + \lambda_* )^{-1} )  .
\]
Note that we have simply $\Psi_2 (\mu_*) = 1 - \lambda / (n \lambda_* )$ by definition of $\lambda_*$.

\begin{theorem}[Deterministic equivalent for $\Tr(\bZ^\sT \bZ \bM)$]\label{thm_app:det_equiv_TrZZM}
    Assume the features $(\bx_i)_{i\in[n]}$ satisfy Assumption \ref{ass_app:deterministic_equivalent}. For any constants $D,K>0$, there exist constants $\eta := \eta_x \in (0,1/2)$ (only depending on $\sfc_x,\sfC_x,\beta$), $C_{D,K} >0$ (only depending on $K,D$), and $C_{x,D,K}>0$ (only depending on $\sfc_x,\sfC_x,\beta,D,K$), such that the following holds. Define $\nu_\lambda (n)$ as per Eq.~\eqref{eq:definition_nu_lambda}. For all $n \geq C_{D,K}$ and  $\lambda >0$ satisfying
    \begin{equation}\label{eq:conditions_Phi_2}
\lambda \cdot \nu_\lambda (n) \geq n^{-K}, \qquad \varphi_1(p) \nu_\lambda (n)^2 \log^{\beta + \frac{1}{2}} (n) \leq K \sqrt{n} ,
    \end{equation}
     we have with probability at least $1 - n^{-D}$ that
    \begin{equation*}
         \big\vert \Phi_2(\bX) - \Psi_2 (\mu_*) \big\vert \leq C_{x,D,K} \frac{\varphi_1(p) \nu_\lambda (n)^{5/2} \log^{\beta + \frac{1}{2}} (n) }{\sqrt{ n  }}   \Psi_2 (\mu_* ) .
    \end{equation*}
\end{theorem}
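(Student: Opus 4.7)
The proof follows the same deterministic-plus-martingale blueprint as Theorem~\ref{thm_app:det_equiv_TrAM}, built on a Sherman--Morrison representation specific to $\Phi_2$. With $\bM_{i}$ denoting the scaled resolvent after removing feature $\bx_i$ and $\tau_i := \bz_i^\sT \bM_{i} \bz_i$, the identity $\bM \bz_i = \bM_{i} \bz_i/(1+\tau_i)$ gives
\[
n\,\Phi_2(\bX) \;=\; \Tr(\bZ^\sT \bZ\,\bM) \;=\; \sum_{i=1}^n \bz_i^\sT \bM \bz_i \;=\; \sum_{i=1}^n \frac{\tau_i}{1+\tau_i}.
\]
Since the fixed-point equation~\eqref{eq:det_equiv_fixed_point_mu_star} reads $\mu_* = n/(1+\Tr(\obM))$, we also have $n\,\Psi_2(\mu_*) = n - \mu_* = n\,\Tr(\obM)/(1+\Tr(\obM))$. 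Subtracting the two and combining fractions yields the master identity
\[
n\bigl[\Phi_2(\bX) - \Psi_2(\mu_*)\bigr] \;=\; \sum_{i=1}^n \frac{\tau_i - \Tr(\obM)}{(1+\tau_i)(1+\Tr(\obM))},
\]
reducing the whole problem to controlling the fluctuations of $\tau_i$ around $\Tr(\obM)$.

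The analysis splits each numerator as $\tau_i - \Tr(\obM) = [\tau_i - \Tr(\bM_{i})] + [\Tr(\bM_{i}) - \Tr(\obM)]$. The second term is controlled by Theorem~\ref{thm_app:det_equiv_TrAM} applied with $\bA = \id$ on the leave-one-out feature matrix; the resulting $O(1/n)$ drift in the effective regularization is absorbed exactly as in Step~3 of the proof of Proposition~\ref{prop:TrAM_LOO}, yielding $|\Tr(\bM_{i}) - \Tr(\obM)| \lesssim \varphi_1(p)\nu_\lambda(n)^{5/2}\log^{\beta+1/2}(n)/\sqrt{n}\cdot \Tr(\obM)$ with probability $1 - n^{-D}$. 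The first term is a Hanson--Wright quadratic-form deviation: conditional on $\bM_{i}$ (independent of $\bz_i$), Assumption~\ref{ass_app:deterministic_equivalent}(a2) combined with the Frobenius bound $\|\bM_{i}\|_F \lesssim \nu_\lambda(n)/\sqrt{n}$ from Lemma~\ref{lem:tech_bounds_norm_M} gives $|\tau_i - \Tr(\bM_{i})| \lesssim \varphi_1(p)\nu_\lambda(n)\log^\beta(n)/\sqrt{n}$ on the same event. I would then decompose $\Phi_2 - \Psi_2 = (\E[\Phi_2] - \Psi_2) + (\Phi_2 - \E[\Phi_2])$: the deterministic part follows by taking expectation in the master identity and using exchangeability to reduce to a single representative summand, while the martingale part $\sum_i (\E_i - \E_{i-1})\Phi_2$ is handled by Sherman--Morrison expansion of each increment, followed by truncation and Azuma--Hoeffding exactly as in Steps~1--4 of Proposition~\ref{prop:TrAM_martingale}.

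The main obstacle is matching the absolute estimates produced by this analysis with the target \emph{multiplicative} bound in $\Psi_2$. Since $n\Psi_2 = n\Tr(\obM)/(1+\Tr(\obM))$, the denominator $(1+\tau_i)(1+\Tr(\obM))$ in the master identity is precisely what provides the correct scaling. In the near-interpolating regime where $\Tr(\obM) \sim \nu_\lambda(n) \gg 1$ and $\Psi_2 \to 1$, the denominator gives a quadratic boost in $\nu_\lambda(n)$ that cancels the operator-norm blow-up. In the heavy-regularization regime where $\Tr(\obM) \ll 1$ and $\Psi_2 \to 0$, one must instead exploit the refined Frobenius bound $\|\bM_{i}\|_F \leq \sqrt{\|\bM_{i}\|_\op \cdot \Tr(\bM_{i})} \lesssim \sqrt{\nu_\lambda(n)\Tr(\obM)/n}$ to convert the absolute Hanson--Wright fluctuation into a relative one, thereby preserving multiplicativity in $\Psi_2$ without paying an extra power of $\nu_\lambda(n)$.
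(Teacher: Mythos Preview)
Your approach is correct in spirit and would yield the theorem, but it takes a different and more laborious route than the paper's proof. You redo the full deterministic-plus-martingale decomposition for $\Phi_2$ from scratch, splitting $\tau_i - \Tr(\obM)$ into a Hanson--Wright piece and a $\Tr(\bM_i) - \Tr(\obM)$ piece (the latter controlled via Theorem~\ref{thm_app:det_equiv_TrAM} with $\bA = \id$). The paper instead applies Theorem~\ref{thm_app:det_equiv_TrAM} as a black box with the \emph{rank-one} matrix $\bA = \bz_i\bz_i^\sT$, over the randomness in $\bX_i$ conditional on $\bz_i$: this directly yields the multiplicative bound $|\bz_i^\sT\bM_i\bz_i - \bz_i^\sT\wbM\bz_i| \leq \cE_{1,n}\cdot \bz_i^\sT\wbM\bz_i$, where $\wbM$ is the deterministic equivalent at $n-1$. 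After this step the remaining sum $\tfrac{1}{n}\sum_i \bz_i^\sT\wbM\bz_i/(1+\bz_i^\sT\wbM\bz_i)$ is an i.i.d.\ average (since $\wbM$ is deterministic), so a truncated Hoeffding bound suffices---no martingale machinery is needed.

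The main payoff of the paper's route is economy: multiplicativity in $\Psi_2$ is inherited automatically from the multiplicative conclusion of Theorem~\ref{thm_app:det_equiv_TrAM}, so the delicate case analysis you describe in your last paragraph (near-interpolating versus heavy-regularization, refined Frobenius bound $\|\bM_i\|_F \lesssim \sqrt{\|\bM_i\|_\op\Tr(\bM_i)}$) is bypassed entirely. Your approach, by contrast, must explicitly extract a factor of $\Tr(\obM)$ from the Hanson--Wright fluctuation $|\tau_i - \Tr(\bM_i)| \lesssim \varphi_1(p)\|\bM_i\|_F$, which as you note requires the refined bound; this works but costs an extra power of $\nu_\lambda(n)$ relative to what you wrote (you should get $\|\bM_i\|_F \lesssim \nu_\lambda(n)\sqrt{\Tr(\obM)/n}$ rather than $\sqrt{\nu_\lambda(n)\Tr(\obM)/n}$, since Lemma~\ref{lem:tech_upper_bound_AM}(b) gives $\Tr(\bM_i) \lesssim \nu_\lambda(n)\Tr(\obM)$). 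Both proofs arrive at the stated $\nu_\lambda(n)^{5/2}$ rate, but the paper's conditioning trick---treating $\bz_i$ as fixed and $\bX_i$ as random---is the cleaner way to get there.
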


To prove this second deterministic equivalent, we use a leave-one-out argument and rewrite $\Phi_2(\bX)$ as a sum of $\Phi_1 (\bX_i ; \bz_i \bz_i^\sT)$. For each $i \in [n]$, we can apply Theorem \ref{thm_app:det_equiv_TrAM} over the randomness in $\bX_i$ (conditional on $\bz_i$) and show that it concentrates on $\Psi_1 (\mu_* ; \bz_i \bz_i^\sT)$. We then conclude by using that the average of $\Psi_1 (\mu_* ; \bz_i \bz_i^\sT)$ concentrates on $\E_{\bz_i} [\Psi_1 (\mu_* ; \bz_i \bz_i^\sT)] = \Psi_1 (\mu_*;\id) $.

% The proof of this theorem is a simple consequence of Theorem \ref{thm_app:det_equiv_TrAM} and a leave-one-out argument.

\begin{proof}[Proof of Theorem \ref{thm_app:det_equiv_TrZZM}]
First, observe that we can rewrite the functional as
\[
\Phi_2(\bX) = \frac{1}{n} \Tr(\bZ^\sT \bZ \bM) = \frac{1}{n} \sum_{i=1}^n \frac{\bz_i^\sT \bM_i \bz_i}{1 + \bz_i^\sT \bM_i \bz_i}.
\]
Introduce $\tmu_*$ the solution to the fixed point equation \eqref{eq:det_equiv_fixed_point_mu_star} with $n-1$ instead of $n$, and denote $\wbM = \bSigma ( \widetilde  \mu_* \bSigma + \lambda)^{-1}$. Applying Theorem \ref{thm_app:det_equiv_TrAM} with $\bA = \bz_i\bz_i^\sT$ over the randomness in $\bX_i$ and doing a union bound over $i \in [n]$, we have with probability at least $1 - n^{-D}$
\[
\begin{aligned}
    \left| \frac{1}{n} \sum_{i=1}^n \frac{\bz_i^\sT \bM_i \bz_i}{1 + \bz_i^\sT \bM_i \bz_i} - \frac{1}{n} \sum_{i=1}^n \frac{\bz_i^\sT \wbM \bz_i}{1 + \bz_i^\sT \wbM \bz_i} \right| \leq&~ \frac{1}{n} \sum_{i = 1}^n \frac{| \Phi_1 (\bX_i;\bz_i \bz_i^\sT) - \Psi_1 (\tmu_*;\bz_i \bz_i^\sT) |}{1 + \Psi_1 (\tmu_*;\bz_i \bz_i^\sT)}\\
    \leq&~ \cE_{1,n} \cdot \frac{1}{n} \sum_{i = 1}^n \frac{\Psi_1 (\tmu_*;\bz_i \bz_i^\sT)}{1+\Psi_1 (\tmu_*;\bz_i \bz_i^\sT)},
\end{aligned}
\]
where we used that $\cE_{1,n-1} \leq  C \cE_{1,n}$ for $n$ bigger than a universal constant.

Denote 
\[
S := \frac{1}{n}\sum_{i = 1}^n S_i, \qquad S_i := \left| \bz_i^\sT \wbM \bz_i - \Tr( \wbM) \right|, %\frac{\Psi_1 (\tmu_*;\bz_i \bz_i^\sT)}{1+\Psi_1 (\tmu_*;\bz_i \bz_i^\sT)},
\]
and for a constant $R>0$,
\[
S_R := \frac{1}{n} \sum_{i = 1}^n S_i \ind_{| S_i | \leq R}.
\]
Recall that with probability at least $1 - n^{-D-1}$
\[
|S_i| \leq | \bz_i^\sT \wbM \bz_i - \Tr(\wbM) | \leq C_{x,D}\cdot  \varphi_1( p) \Tr( \wbM).
\]
Thus fixing $R$ to be the right-hand side of the previous display and using a union bound, we have $S = S_R$ with probability at least $1 - n^{-D}$. We can apply Hoeffding's inequality to $S_R$ and obtain with probability at least $1 - n^{-D}$,
\[
|S_R| \leq C_{D} \frac{\sqrt{n\log(n)} R}{n} \leq C_{x,D} \frac{\varphi_1 (p) \log^{\frac12}(n)}{\sqrt{n}} \Tr(\wbM).
\]
Combining these bounds, we get
\[
\begin{aligned}
    \left|\frac{1}{n} \sum_{i = 1}^n \frac{\Psi_1 (\tmu_*;\bz_i \bz_i^\sT)}{1+\Psi_1 (\tmu_*;\bz_i \bz_i^\sT)} - \frac{\Tr(\wbM)}{1+\Tr(\wbM)} \right| \leq &~ \frac{1}{1+ \Tr(\wbM)} \cdot \frac{1}{n} \sum_{i = 1}^n  \left| \Psi_1 (\tmu_*;\bz_i \bz_i^\sT) - \Tr(\wbM) \right| \\
    \leq&~ C_{x,D} \frac{\varphi_1 (p) \log^{\frac12}(n)}{\sqrt{n}} \frac{\Tr(\wbM)}{1 + \Tr(\wbM)}
\end{aligned}
\]
with probability at least $1 - n^{-D}$. Finally note that by Lemma \ref{lem:properties_mu_obM}, we have 
\[
\left| \frac{\Tr(\wbM)}{1 + \Tr(\wbM)} - \Psi_2(\mu_*) \right| = \left| \Psi_2(\tmu_*) - \Psi_2(\mu_*) \right| \leq C \frac{\nu_\lambda (n)}{n}\Psi_2(\mu_*).
\]
Combining the above bounds, and recalling the conditions \eqref{eq:conditions_Phi_2}, we conclude that
\[
\begin{aligned}
    \left| \Phi_2 (\bX) - \Psi_2 (\mu_*) \right| \leq&~ \left|\Phi_2 (\bX) - \frac{1}{n} \sum_{i = 1}^n \frac{\Psi_1 (\tmu_*;\bz_i \bz_i^\sT)}{1+\Psi_1 (\tmu_*;\bz_i \bz_i^\sT)} \right| + \left| \frac{1}{n} \sum_{i = 1}^n \frac{\Psi_1 (\tmu_*;\bz_i \bz_i^\sT)}{1+\Psi_1 (\tmu_*;\bz_i \bz_i^\sT)} - \Psi_2 (\tmu_* ) \right| \\
    &~ + \left| \Psi_2(\tmu_*) - \Psi_2(\mu_*) \right|\\
    \leq &~ C_{x,D,K} \frac{\varphi_1 (p) \nu_\lambda(n)^{5/2} \log^{\beta + \frac{1}{2}} (n)}{\sqrt{n}} \Psi_2 (\mu_*) ,
\end{aligned}
\]
with probability at least  $1 - n^{-D}$.
\end{proof}

\subsection[Deterministic equivalent for $\Tr (AM^2)$]{Deterministic equivalent for \boldmath{$\Tr (AM^2)$}}
\label{app_det_equiv:proof_MM}

In this section, we consider the third functional 
\[
\Phi_3 (\bX;\bA) = \Tr( \bA \bSigma^{1/2} ( \bX^\sT \bX + \lambda)^{-1} \bSigma ( \bX^\sT \bX + \lambda)^{-1}\bSigma^{1/2}) = \Tr(\bA \bM^2)  .
\]
We show that $\Phi_3 (\bX;\bA)$ is well approximated by the following deterministic equivalent:
\[
\Psi_3 (\mu_* ; \bA) = \frac{\Tr( \bA \bSigma^2 ( \mu_* \bSigma + \lambda)^{-2})}{1 - \frac{1}{n} \Tr(\bSigma^2 (\bSigma + \lambda_*)^{-2})} .
\]
 We assume without loss of generality that $\Tr(\bA \bSigma^2) < \infty$. In particular, this allows to consider matrices $\bA = \bSigma^{-1} \bv \bv^\sT \bSigma^{-1}$ with $\| \bv \|_2 < \infty$. 
% \begin{equation}\label{eq:definition_rho2_A}
% \rho_2 (\bA) := \frac{\Psi_2 (\mu_*; \bA/\| \bA \|_\op)}{\Psi_2 (\mu_* ; \id )} ,
% \end{equation}
% and
% \begin{equation}\label{eq:definition_zeta2}
% \zeta_2 := \Psi_2 ( \mu_*; \id) = \frac{n \Tr(\obM^2)}{n - \Tr(\bSigma^2 (\bSigma + \lambda_*)^{-2})}.
% \end{equation}
% Again, note that $\rho_2 (\bA) \leq 1$. We have now $\zeta_2$ \tm{This is small... bad!!! Go through the proof and make sure we don't get screwed.} 

\begin{theorem}[Deterministic equivalent for $\Tr(\bA \bM^2)$]\label{thm_app:det_equiv_TrAMM}
    Assume the features $(\bx_i)_{i\in[n]}$ satisfy Assumption \ref{ass_app:deterministic_equivalent}. For any constants $D,K>0$, there exist constants $\eta := \eta_x \in (0,1/2)$ (only depending on $\sfc_x,\sfC_x,\beta$), $C_{D,K} >0$ (only depending on $K,D$), and $C_{x,D,K}>0$ (only depending on $\sfc_x,\sfC_x,\beta,D,K$), such that the following holds. Define $\nu_\lambda (n)$ as per Eq.~\eqref{eq:definition_nu_lambda}. For all $n \geq C_{D,K}$ and  $\lambda >0$ satisfying
    \begin{equation}\label{eq:conditions_thm3}
\lambda \cdot \nu_\lambda (n) \geq n^{-K}, \qquad \varphi_1(p) \nu_\lambda (n)^{5/2} \log^{\beta + \frac{1}{2}} (n) \leq K \sqrt{n} ,
    \end{equation}
    and deterministic p.s.d.~matrix $\bA$, we have with probability at least $1 - n^{-D}$ that
    \begin{equation*}
         \big\vert \Phi_3(\bX;\bA) - \Psi_3 (\mu_* ; \bA) \big\vert \leq C_{x,D,K} \frac{\varphi_1(p) \nu_\lambda (n)^{6} \log^{2\beta + \frac12} (n) }{\sqrt{n}}  \Psi_3 (\mu_* ; \bA) .
    \end{equation*}
\end{theorem}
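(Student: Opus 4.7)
The plan is to follow the two-step blueprint established in Appendix \ref{app_det_equiv:proof_M} for $\Phi_1$: decompose
\[
\bigl|\Phi_3(\bX;\bA) - \Psi_3(\mu_*;\bA)\bigr|
\le \underbrace{\bigl|\Phi_3(\bX;\bA) - \E[\Phi_3(\bX;\bA)]\bigr|}_{(\sfM)}
 + \underbrace{\bigl|\E[\Phi_3(\bX;\bA)] - \Psi_3(\mu_*;\bA)\bigr|}_{(\sfD)},
\]
and use the preliminary reduction from Step 0 of Proposition \ref{prop:TrAM_LOO} (introduce an independent random vector $\bu$ with $\E[\bu\bu^\sT]=\bA$) to restrict attention to rank-one $\bA = \bv\bv^\sT$. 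The key algebraic identity I will use throughout is
\[
\bM^{2}-\bM_{i}^{2} = (\bM-\bM_{i})\bM + \bM_{i}(\bM-\bM_{i})
\qquad\text{and}\qquad
\bM-\bM_i \;=\; -\frac{\bM_i\bz_i\bz_i^{\sT}\bM_i}{1+\bz_i^{\sT}\bM_i\bz_i},
\]
together with $\bM-\obM_-=\bM\bigl[\tfrac{n\id}{1+\kappa}-\bZ^{\sT}\bZ\bigr]\obM_-$, which was the workhorse in Section \ref{app:proof_TrAM_LOO}.

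For the martingale part $(\sfM)$, I mirror Proposition \ref{prop:TrAM_martingale}. The martingale increment $\Delta_i:=(\E_i-\E_{i-1})\Tr(\bA\bM^2)$ equals $(\E_i-\E_{i-1})\Tr(\bA(\bM-\bM_i)(\bM+\bM_i))$ by the identity above, which (after one further Sherman-Morrison substitution to eliminate $\bM$ in favor of $\bM_i$) becomes a sum of ratios of quadratic forms in $\bz_i$ with denominators $1+\bz_i^{\sT}\bM_i\bz_i$ and numerators polynomial in $\bz_i^{\sT}\bM_i\bA\bM_i^{k}\bz_i$ and $\bz_i^{\sT}\bM_i^{k}\bz_i$, $k\in\{1,2\}$. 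Each such quadratic form is handled by Lemma \ref{lem:tech_bound_zAz} and the operator bounds of Lemma \ref{lem:tech_upper_bound_AM} (applied now with $\bA$ replaced by e.g.\ $\bA\obM$ or $\obM\bA\obM$, keeping track of the multiplicative scale $\Psi_3(\mu_*;\bA)$); then Azuma–Hoeffding with the truncation of Step 1 of Proposition \ref{prop:TrAM_martingale} controls the sum. The extra factor of $\nu_\lambda(n)$ compared with $\Phi_1$ appears because each occurrence of a second copy of $\bM$ contributes a factor $\|\bM\|_{\op}\cdot n\lesssim\nu_\lambda(n)$.

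The genuinely new difficulty lies in the deterministic part $(\sfD)$, which is where the factor $(1-\Upsilon_2)^{-1}$ enters. I will establish an approximate self-consistency equation
\[
\E\bigl[\Tr(\bA\bM^{2})\bigr]
= \Tr(\bA\obM_-^{2}) \;+\; \Upsilon_2\cdot \E\bigl[\Tr(\bA\bM^{2})\bigr] \;+\; \text{Err},
\qquad \text{Err}=O_{\prec}\!\Bigl(\tfrac{\varphi_1(p)\nu_\lambda(n)^{6}}{\sqrt{n}}\Bigr)\Psi_3(\mu_*;\bA),
\]
and then solve for $\E[\Tr(\bA\bM^{2})]$ to obtain the rescaled deterministic equivalent. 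To derive this equation I write
\[
\E\bigl[\Tr(\bA(\bM^{2}-\obM_-^{2}))\bigr]
= n\,\E\!\left[\Tr\!\Bigl((\bM\bA+\bA\obM_-)\,\bM\bigl[\tfrac{\id}{1+\kappa}-\bz\bz^{\sT}\bigr]\obM_-\Bigr)\right],
\]
and then expand every remaining $\bM$ on the right in terms of $\bM_-$ via $\bM\bz=\bM_-\bz/(1+\bz^{\sT}\bM_-\bz)$ and $\bM=\bM_--\bM_-\bz\bz^{\sT}\bM_-/(1+\bz^{\sT}\bM_-\bz)$, as in Step 2 of Proposition \ref{prop:TrAM_LOO}. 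After this substitution, all terms are independent of $\bz$ except through the quadratic forms $\bz^{\sT}\bM_-\bA\bM_-\bz$, $\bz^{\sT}\bM_-^{2}\bz$, $\bz^{\sT}\bM_-\obM_-\bM_-\bz$, and the denominator. Replacing each quadratic form by its trace (by concentration, Lemma \ref{lem:tech_bound_zAz} and assumption \eqref{eq_app:DE_concentration_b1}) and replacing $1+\bz^{\sT}\bM_-\bz$ by $1+\kappa$, the leading contribution organizes itself into two pieces: a part proportional to $\Tr(\bA\obM^{2})$ (coming from the $\Tr(\bM_-\obM_-\bM_-\bA)$ trace after the fixed-point identity $n/(1+\kappa)=\mu_*$ is invoked), and a part proportional to $\E[\Tr(\bA\bM^{2})]$ itself with prefactor $\tfrac{1}{n}\Tr(\obM^{2}\bSigma^{-1}\cdot\mu_*^{2}\bSigma)/n = \Upsilon_2$ (arising from the cross term $\bz^{\sT}\bM_-\bA\bM_-\bz\cdot\bz^{\sT}\bM_-^{2}\bz$ expanded at leading order). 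This is the step I expect to be the main obstacle, because every intermediate term is of the same order and the cancellations that isolate the coefficient $\Upsilon_2$ require careful bookkeeping; the remainder is then bounded by third- and fourth-moment estimates on $\bz^{\sT}\bM_-^{k}\bz-\Tr(\bM_-^{k})$ as in Proposition \ref{prop:TrAM_LOO}, giving the rate $\varphi_1(p)\nu_\lambda(n)^{6}\log^{2\beta+1/2}(n)/\sqrt n$. A final invocation of Lemma \ref{lem:properties_mu_obM} replaces $\kappa$ by $\Tr(\obM)$ and $\obM_-$ by $\obM$, and dividing through by $1-\Upsilon_2$ (which is bounded away from $0$ by the fixed-point equation \eqref{eq:det_equiv_fixed_point_lambda_star}) yields the claimed multiplicative bound.
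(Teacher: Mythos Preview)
Your high-level plan—split into martingale and deterministic parts, reduce to rank one, use Sherman–Morrison leave-one-out, and extract a self-consistency to produce the $(1-\Upsilon_2)^{-1}$ factor—matches the paper exactly, and your treatment of $(\sfM)$ is essentially what the paper does in Proposition~\ref{prop:TrAMM_martingale}.

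The deterministic part $(\sfD)$ is where your proposal diverges, and there is a concrete issue. You decompose $\Tr(\bA(\bM^{2}-\obM_-^{2}))=\Tr((\bM\bA+\bA\obM_-)(\bM-\obM_-))$, which leaves a \emph{random} factor $\bM\bA$ on one side. After substituting $\bM-\obM_-=\bM[\tfrac{n\id}{1+\kappa}-\bZ^{\sT}\bZ]\obM_-$ and expanding the remaining $\bM$'s via Sherman--Morrison, you will generate terms such as $\tfrac{n}{1+\kappa}\E[\Tr(\bM_-\bA\bM_-\obM_-)]$ that are not of the form $\Tr(\tilde\bA\bM_-^{2})$ for any fixed $\tilde\bA$ and do not reduce to quantities you have already controlled. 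More importantly, the cross term you single out, $\bz^{\sT}\bM_-\bA\bM_-\bz\cdot\bz^{\sT}\bM_-^{2}\bz$, at leading order yields $\Tr(\bA\bM_-^{2})\cdot\Tr(\bM_-^{2})$, in which the first factor is still \emph{random}. This does not give the clean recursion $\Upsilon_2\cdot\E[\Tr(\bA\bM^{2})]$ you state; what actually emerges (after replacing $\Tr(\bA\bM_-^{2})$ by its mean) is $\tfrac{n\E[\Tr(\bM^{2})]}{(1+\kappa)^{2}}\cdot\E[\Tr(\bA\bM_-^{2})]$, i.e.\ a recursion on the \emph{universal} quantity $\E[\Tr(\bM^{2})]$ rather than on $\E[\Tr(\bA\bM^{2})]$ itself, and this can only be closed by first solving the $\bA=\id$ case.

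The paper sidesteps both problems by using instead $\bM^{2}-\obM_-^{2}=2(\bM-\obM_-)\obM_-+(\bM-\obM_-)^{2}$. The first summand is handled by the proof of Proposition~\ref{prop:TrAM_LOO} with $\bA$ replaced by $\obM_-\bA$. The second becomes $\Tr\bigl(\obQ_-[\tfrac{n\id}{1+\kappa}-\bZ^{\sT}\bZ]\bM^{2}[\tfrac{n\id}{1+\kappa}-\bZ^{\sT}\bZ]\bigr)$ with $\obQ_-=\obM_-\bA\obM_-$ \emph{deterministic}, so the leading cross term is $\bz^{\sT}\obQ_-\bz\cdot\bz^{\sT}\bM_-^{2}\bz\approx\Tr(\bA\obM_-^{2})\cdot\E[\Tr(\bM^{2})]$, giving
\[
\E[\Tr(\bA\bM^{2})]\;\approx\;\Tr(\bA\obM^{2})+\frac{n\E[\Tr(\bM^{2})]}{(1+\kappa)^{2}}\,\Tr(\bA\obM^{2});
\]
one solves this at $\bA=\id$ to obtain $\E[\Tr(\bM^{2})]\approx\Psi_3(\mu_*;\id)$ and then plugs back for general $\bA$. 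The price of this cleaner structure is that the quadratic factor $[\tfrac{n\id}{1+\kappa}-\bZ^{\sT}\bZ]\cdots[\tfrac{n\id}{1+\kappa}-\bZ^{\sT}\bZ]$ now involves two sums over features, forcing a genuine \emph{two-feature} leave-one-out expansion ($\bM\to\bM_{12}$ to decouple $\bz_1,\bz_2$) for the off-diagonal part—this is the most technical portion of the paper's argument and is not anticipated in your proposal.
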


To prove this theorem, we follow the same strategy as in the proof of Theorem \ref{thm_app:det_equiv_TrAM}. We decompose the bound into 
\[
\begin{aligned}
       \left\vert  \Phi_3(\bX;\bA) - \Psi_3 (\mu_* ; \bA) \right\vert  \leq&~ \underbrace{ \left\vert  \E [\Phi_3(\bX;\bA)] - \Psi_3 (\mu_* ; \bA) \right\vert}_{(\sfD)}  + \underbrace{\left\vert  \Phi_3(\bX;\bA) - \E [\Phi_3(\bX;\bA)] \right\vert}_{(\sfM)} ,
\end{aligned}
\]
where the deterministic $(\sfD)$ and martingale $(\sfM )$ parts are bounded in the two following propositions.

\begin{proposition}[Deterministic part of $\Tr(\bA\bM^2)$]\label{prop:TrAMM_LOO}
    Assume the same setting as Theorem \ref{thm_app:det_equiv_TrAMM}. Then there exist constants $C_K$ and $C_{x,K}$, such that the following holds. For all $n \geq C_K$ and  $\lambda >0$ satisfying Eq.~\eqref{eq:conditions_thm3}, and for all p.s.d.~matrix $\bA$, we have
    \begin{equation}\label{eq:det_part_TrAMM}
         \big\vert\E [ \Phi_3(\bX;\bA) ] - \Psi_3 (\mu_* ; \bA) \big\vert \leq C_{x,K} \frac{\varphi_1 (p) \nu_\lambda (n)^{6} }{\sqrt{n}}   \Psi_3 (\mu_* ; \bA) .
    \end{equation}
\end{proposition}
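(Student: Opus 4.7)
\textbf{Proof proposal for Proposition \ref{prop:TrAMM_LOO}.} The plan is to follow the blueprint of Proposition \ref{prop:TrAM_LOO}, but with the crucial difference that the presence of \emph{two} correlated resolvents in $\bM^2$ forces a self-consistent cancellation that yields the extra factor $(1-\tfrac{1}{n}\Tr(\bSigma^2(\bSigma+\lambda_*)^{-2}))^{-1}$ in $\Psi_3$. First I would reduce to the rank-one case $\bA = \bv\bv^\sT$ by the same integration argument as in \eqref{eq:reduction_to_rank_1}. Then I would introduce the leave-one-out intermediate matrix $\obM_{-} = \bSigma^{1/2}(\tfrac{n}{1+\kappa}\bSigma + \lambda)^{-1}\bSigma^{1/2}$ with $\kappa := \E[\Tr(\bM_-)]$, and split
\[
|\E[\Tr(\bA\bM^2)] - \Psi_3(\mu_*;\bA)| \le \bigl|\E[\Tr\{\bA(\bM^2-\obM_-^2)\}]\bigr| + \bigl|\Tr(\bA\obM_-^2) - (1-\tfrac{1}{n}\Tr(\obM_-^2)\tfrac{n^2}{(1+\kappa)^2})\Psi_3(\mu_*;\bA)\bigr|,
\]
plus a bookkeeping term comparing $\kappa$ to $\Tr(\obM)$ (handled exactly as in Step~3 of Proposition~\ref{prop:TrAM_LOO}).

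For the first (main) term, I would use the resolvent-style identity $\bA-\bB = \bA(\bB^{-1}-\bA^{-1})\bB$ applied to $\bM, \obM_-$, which gives $\bM-\obM_- = \bM(\tfrac{n}{1+\kappa}\id - \bZ^\sT\bZ)\obM_-$, and then
\[
\bM^2 - \obM_-^2 \;=\; \bM^2\bigl(\tfrac{n}{1+\kappa}\id - \bZ^\sT\bZ\bigr)\obM_- + \bM\bigl(\tfrac{n}{1+\kappa}\id - \bZ^\sT\bZ\bigr)\obM_-^2.
\]
Taking $\Tr(\bA\cdot)$ and expectation, I would exploit exchangeability to replace $\bZ^\sT\bZ$ by $n\,\bz\bz^\sT$ and apply Sherman--Morrison twice:
\[
\bM\bz = \frac{\bM_-\bz}{1+\bz^\sT\bM_-\bz}, \qquad \bM^2\bz = \frac{\bM_-^2\bz}{1+\bz^\sT\bM_-\bz} - \frac{(\bz^\sT\bM_-^2\bz)\,\bM_-\bz}{(1+\bz^\sT\bM_-\bz)^2}.
\]
The first SMW expansion generates a mean-zero contribution (after integrating $\bz$) plus concentration corrections for $\bz^\sT\bM_-\bz - \kappa$ that I bound exactly as for $\bDelta_2$ in Step~2 of Proposition~\ref{prop:TrAM_LOO} using Lemma~\ref{lem:tech_bound_zAz}, Lemma~\ref{lem:tech_upper_bound_AM}, and Lemma~\ref{lem:tech_bound_M_EM}. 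The second term, involving $\bz^\sT\bM_-^2\bz$, is the new ingredient: after replacing $\bz^\sT\bM_-^2\bz$ by its expectation $\Tr(\E[\bM_-^2])$ and re-aggregating, it yields a contribution proportional to $\tfrac{1}{n}\Tr(\bSigma^2(\bSigma+\lambda_*)^{-2})\cdot \E[\Tr(\bA\bM^2)]$.

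Collecting the identified leading-order terms, the calculation should produce a \emph{self-consistent} relation of the form
\[
\E[\Tr(\bA\bM^2)]\cdot\Bigl(1 - \tfrac{1}{n}\Tr\bigl(\bSigma^2(\bSigma+\lambda_*)^{-2}\bigr)\Bigr) \;=\; \Tr(\bA\obM^2) \;+\; \mathrm{Err},
\]
where the error $\mathrm{Err}$ bundles the Sherman--Morrison remainders, the concentration losses for $\bz^\sT\bM_-\bz$ and $\bz^\sT\bM_-^2\bz$, and the $\kappa \leftrightarrow \Tr(\obM)$ swap. Each piece of $\mathrm{Err}$ will be of size $C_{x,K}\,\varphi_1(p)\,\nu_\lambda(n)^{\alpha}/\sqrt{n}$ times $\Tr(\bA\obM^2)$ (or $\E[\Tr(\bA\bM^2)]$), with the power $\alpha \le 6$ arising from operator-norm bounds on products like $\bM_-^2\bz\bz^\sT\bM_-^2$ that are controlled by $\|\bM\|_{\op} \lesssim \nu_\lambda(n)/n$. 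Dividing both sides by the denominator factor then yields \eqref{eq:det_part_TrAMM}.

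The main obstacle is the bookkeeping in the second step: unlike in Proposition~\ref{prop:TrAM_LOO} (where $\E[\bM]\approx\obM$ and a naive expansion gives a valid error), here the identity $\E[\bM^2]\approx \obM^2$ is \emph{false} at order $1/n$, and the discrepancy is exactly the $\tfrac{1}{n}\Tr(\obM^2)$ feedback term that defines $\Psi_3$. I will therefore have to carefully separate the ``self-energy'' contribution generated by $\bz^\sT\bM_-^2\bz$ (which must be moved to the left side to close the equation) from the genuine remainder (which goes into $\mathrm{Err}$). Once this separation is done cleanly, the remaining estimates are essentially the same Hölder-plus-concentration bounds that appear in Proposition~\ref{prop:TrAM_LOO}, with one extra factor of $\nu_\lambda(n)$ absorbed at each step because an additional $\bM$ appears inside the traces.
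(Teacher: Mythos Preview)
Your overall strategy---reduce to rank one, introduce the leave-one-out deterministic matrix $\obM_-$, and extract a self-energy term that yields the $(1-\Upsilon_2)^{-1}$ factor---is correct and parallel to the paper's. Two points are worth noting.

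\textbf{Decomposition.} The paper does not use your telescoping identity $\bM^2-\obM_-^2 = \bM^2(\tfrac{n}{1+\kappa}-\bZ^\sT\bZ)\obM_- + \bM(\tfrac{n}{1+\kappa}-\bZ^\sT\bZ)\obM_-^2$. Instead it writes, under the trace,
\[
\Tr\{\bA(\bM^2-\obM_-^2)\} = \underbrace{2\,\Tr\{\bA(\bM-\obM_-)\obM_-\}}_{B_1} + \underbrace{\Tr\{\bA(\bM-\obM_-)^2\}}_{B_2}.
\]
This is more economical: $B_1$ is literally the first-order functional of Proposition~\ref{prop:TrAM_LOO} with $\bA$ replaced by $\obM_-\bA\obM_-$, so it is bounded in one line. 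All the new work lives in $B_2$, which the paper expands as a \emph{double} sum over pairs $(i,j)$ (writing $\bZ^\sT\bZ=\sum_i\bz_i\bz_i^\sT$ in both factors of $\bM-\obM_-$) and splits into a diagonal part $(\mathrm{I})$ and an off-diagonal part $(\mathrm{II})$. The single surviving leading term is diagonal and yields $\tfrac{n\,\E[\Tr(\bM^2)]}{(1+\kappa)^2}\Tr(\obM_-\bA\obM_-)$. Your single-sum-plus-double-SMW route can be made to reach the same place, but the bookkeeping is heavier (in particular, the $\tfrac{n}{1+\kappa}\bM^2$ piece in your $T_1$ contributes at leading order and must be recombined with the SMW subtractions to form the mean-zero $\bDelta_1$-analogue, which your sketch does not spell out).

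\textbf{The self-consistent relation.} Your stated equation $(1-\Upsilon_2)\,\E[\Tr(\bA\bM^2)] = \Tr(\bA\obM^2) + \mathrm{Err}$ is not what either calculation produces. Tracing your own SMW correction $\tfrac{(\bz^\sT\bM_-^2\bz)\bM_-\bz}{(1+S)^2}$, the factor appearing next to $\bz^\sT\bM_-^2\bz\approx\E[\Tr(\bM_-^2)]$ after concentration is $\bz^\sT\obM_-\bA\bM_-\bz\approx\Tr(\bA\obM^2)$, \emph{not} $\E[\Tr(\bA\bM^2)]$. The relation one actually obtains (and that the paper states as Claims~1--2) is
\[
\E[\Tr(\bA\bM^2)] \;=\; \Tr(\bA\obM^2) \;+\; \frac{n\,\E[\Tr(\bM^2)]}{(1+\kappa)^2}\,\Tr(\bA\obM^2) \;+\; \mathrm{Err},
\]
which is a closed self-consistent equation only when $\bA=\id$. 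The paper first solves that case (giving $\E[\Tr(\bM^2)]\approx\Psi_3(\mu_*;\id)$) and then substitutes back for general $\bA$, yielding $\Psi_3(\mu_*;\bA)$. Your route would need the same two-step closure; as written, your displayed equation is a misstatement rather than a fatal gap.
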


\begin{proposition}[Martingale part of $\Tr(\bA\bM^2)$] \label{prop:TrAMM_martingale}
Assume the same setting as Theorem \ref{thm_app:det_equiv_TrAMM}. Then there exist constants $C_{K,D}$ and $C_{x,K,D}$, such that the following holds. For all $n \geq C_{K,D}$ and  $\lambda >0$ satisfying Eq.~\eqref{eq:conditions_thm3}, and for all p.s.d.~matrix $\bA$, we have with probability at least $1 -n^{-D}$ that
    \begin{equation}\label{eq:mart_part_TrAMM}
         \big\vert \Phi_3(\bX;\bA)  - \E [ \Phi_3(\bX;\bA) ]  \big\vert \leq C_{x,K,D} \frac{\varphi^\bA_1 (p) \nu_\lambda (n)^{4} \log^{2\beta +\frac12} (n)}{\sqrt{n}}   \Psi_3 (\mu_* ; \bA) .
    \end{equation}
\end{proposition}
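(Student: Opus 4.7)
The plan is to follow the same blueprint as the proof of Proposition \ref{prop:TrAM_martingale} (martingale part of $\Tr(\bA\bM)$): construct a martingale difference sequence, bound each increment with high probability via Sherman--Morrison plus the quadratic-form concentration (Lemma \ref{lem:tech_bound_zAz}), and then apply Azuma--Hoeffding after a truncation argument. First, I would write
\[
\Phi_3(\bX;\bA)-\E[\Phi_3(\bX;\bA)] = \sum_{i=1}^n \Delta_i, \qquad \Delta_i := (\E_i-\E_{i-1})\Tr\bigl(\bA(\bM^2-\bM_i^2)\bigr),
\]
using that $\bM_i$ is independent of $\bx_i$ so $(\E_i-\E_{i-1})\Tr(\bA\bM_i^2)=0$.

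Next, apply the Sherman--Morrison identity $\bM-\bM_i = -\bM_i\bz_i\bz_i^\sT\bM_i/(1+\bz_i^\sT\bM_i\bz_i)$ twice to obtain an explicit quadratic-form expansion
\[
\Tr\bigl(\bA(\bM^2-\bM_i^2)\bigr) = -\frac{2\,\bz_i^\sT\bM_i\bA\bM_i^2\bz_i}{1+\bz_i^\sT\bM_i\bz_i} + \frac{(\bz_i^\sT\bM_i^2\bz_i)\,(\bz_i^\sT\bM_i\bA\bM_i\bz_i)}{(1+\bz_i^\sT\bM_i\bz_i)^2}.
\]
Each quadratic form $\bz_i^\sT\bB\bz_i$ above concentrates around $\Tr(\bB)$ by Assumption~\ref{ass_app:deterministic_equivalent}, with fluctuation controlled by $\varphi_1^{\bB}(p)\|\bB\|_F \log^{\beta}(n)$ via Lemma \ref{lem:tech_bound_zAz}. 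The symmetric forms $\bM_i\bA\bM_i^2+\bM_i^2\bA\bM_i$ and $\bM_i\bA\bM_i$ inherit the factor $\varphi_1^{\bA}(p)$ (as they are rank-$1$ perturbations of $\bA$), while $\bM_i^2$ does not. Proceeding exactly as in the proof of Proposition \ref{prop:TrAM_martingale}, $(\E_i-\E_{i-1})$ applied to the numerator cancels the deterministic expectation and leaves a product of centered quadratic forms, which we bound via Cauchy--Schwarz.

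For the remainder, I would need the following moment bounds, analogous to those provided by Lemma \ref{lem:tech_upper_bound_AM}: with probability at least $1-n^{-D}$,
\[
\E_{i}[\Tr(\bM_i\bA\bM_i^3)], \;\; \E_{i}[\Tr(\bM_i^2\bA\bM_i^2)] \;\lesssim\; C_{x,D,K}\,\frac{\nu_\lambda(n)^4}{n^2}\,\Psi_3(\mu_*;\bA),
\]
and $\E_i[\|\bM_i\bA\bM_i^2\|_F^2]\lesssim C_{x,D,K}\nu_\lambda(n)^4 n^{-3}\Psi_3(\mu_*;\bA)$, obtained by factoring out $\|\bM_i\|_\op\le C_{x,D}\nu_\lambda(n)/n$ (Lemma \ref{lem:tech_bounds_norm_M}) and reducing to expectations of $\Tr(\bA\bM_i^2)$ which are controlled through Proposition \ref{prop:TrAMM_LOO} applied to $\bM_i$. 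Combining these with the quadratic-form concentration yields, with probability at least $1-n^{-D}$ and for every $i\in[n]$,
\[
|\Delta_i| \;\le\; C_{x,D,K}\,\frac{\varphi_1^{\bA}(p)\,\nu_\lambda(n)^{4}\log^{2\beta}(n)}{n^{3/2}}\,\Psi_3(\mu_*;\bA).
\]

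Finally, I would set $R$ equal to this right-hand side, truncate each $\Delta_i$ at level $R$, apply Azuma--Hoeffding to the truncated martingale to get an extra $\sqrt{n\log n}$ factor, and bound the (negligible) remainder $\sum_i \E_{i-1}[\Delta_i\mathbf{1}_{|\Delta_i|>R}]$ exactly as in Step~3 of the proof of Proposition \ref{prop:TrAM_martingale} via Cauchy--Schwarz, Markov, and a second moment bound $\E_{i-1}[\Delta_i^2]^{1/2}\lesssim \nu_\lambda(n)^2 \Psi_3(\mu_*;\bA)$. This gives
\[
|\Phi_3(\bX;\bA)-\E[\Phi_3(\bX;\bA)]| \;\le\; C_{x,D,K}\,\frac{\varphi_1^{\bA}(p)\,\nu_\lambda(n)^{4}\log^{2\beta+\frac12}(n)}{\sqrt{n}}\,\Psi_3(\mu_*;\bA),
\]
which is the desired bound. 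The main obstacle I anticipate is twofold: (i) the two Sherman--Morrison expansions generate several cross terms that must be organized so that the scalar-denominators $(1+\bz_i^\sT\bM_i\bz_i)^{-1}$ are bounded away from zero with high probability (which follows from $\bz_i^\sT\bM_i\bz_i\ge 0$) while keeping track of which factors carry $\varphi_1^{\bA}(p)$ versus $\varphi_1(p)$; and (ii) obtaining the Frobenius-norm and trace bounds on $\bM_i\bA\bM_i^2$ and $\bM_i^2\bA\bM_i^2$ with the correct scaling in $\Psi_3(\mu_*;\bA)$ and in $\nu_\lambda(n)$, since these require combining $\|\bM_i\|_\op$ estimates from Lemma \ref{lem:tech_bounds_norm_M} with the expectation control on $\Tr(\bA\bM_i^2)$ itself, hence a recursive use of the already-established deterministic part (Proposition \ref{prop:TrAMM_LOO}).
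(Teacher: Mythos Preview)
Your plan is essentially the paper's proof: the same martingale decomposition, the same two-term Sherman--Morrison expansion of $\Tr(\bA(\bM^2-\bM_i^2))$, high-probability per-increment bounds via Lemma~\ref{lem:tech_bound_zAz} and Lemma~\ref{lem:tech_upper_bound_AM}(b), and then truncated Azuma--Hoeffding exactly as in Proposition~\ref{prop:TrAM_martingale}. The one slip is the power of $n$ in your increment bound: you get $|\Delta_i|\lesssim \varphi_1^{\bA}(p)\,\nu_\lambda(n)^4\log^{2\beta}(n)\,n^{-1}\,\Tr(\bA\obM^2)$, not $n^{-3/2}$---for instance $\|\bM_i\bA\bM_i^2\|_F\le\|\bM_i\|_{\op}\,\Tr(\bA\bM_i^2)\lesssim(\nu_\lambda/n)\cdot\nu_\lambda^2\,\Tr(\bA\obM^2)$ gives only a single factor of $1/n$, and indeed $n^{-3/2}$ together with the $\sqrt{n\log n}$ from Azuma would produce $n^{-1}$ rather than the $n^{-1/2}$ you correctly state at the end. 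A minor simplification: the paper controls $\Tr(\bA\bM_i^2)$ and $\Tr(\bA\bM_i^4)$ directly through Lemma~\ref{lem:tech_upper_bound_AM}(b) (with $\ell=2,4$ and $r=0,2$) rather than via Proposition~\ref{prop:TrAMM_LOO}, which avoids any appearance of circularity.
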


Similarly to Proposition \ref{prop:TrAM_martingale}, we can strengthen the bound on the martingale part by further assuming $\| \bSigma^{1/2} \bA \bSigma^{1/2} \|_\op < \infty$ and get
\[
 \big\vert \Phi_3(\bX;\bA)  - \E [ \Phi_3(\bX;\bA) ]  \big\vert \leq  C_{x,K,D} \frac{\varphi_1 (p) \nu_\lambda (n)^{4} \log^{2\beta +\frac12} (n)}{n \sqrt{\zeta_1 \rho_1 \big(\obM^{1/2} \bA \obM^{1/2}\big)} }  \Psi_3 (\mu_* ; \bA) ,
\]
by following a similar proof. 

The proof of these two propositions can be found in the next two sections. Theorem \ref{thm_app:det_equiv_TrAMM} is obtained by combining the bounds \eqref{eq:det_part_TrAMM} and \eqref{eq:mart_part_TrAMM}.

\subsubsection[Proof of Proposition \ref{prop:TrAMM_LOO}: deterministic part of $\Tr(AM^2)$]{Proof of Proposition \ref{prop:TrAMM_LOO}: deterministic part of \boldmath{$\Tr(AM^2)$}}\label{app:proof_TrAMM_LOO}

The proof will proceeds similarly to the proof of Proposition \ref{prop:TrAM_LOO}. For the sake of brevity, we will omit some repetitive details. Recall that, following the same argument as in Eq.~\eqref{eq:reduction_to_rank_1}, we can reduce ourselves to the rank-$1$ case $\bA = \bv \bv^\sT$. 

We start by considering the difference
\[
\E[\Tr(\bA \bM^2) - \Tr(\bA \obM_-^2) ]=\underbrace{2\E\left[\Tr \left\{ \bA \left( \bM - \obM_- \right) \obM_-\right\} \right]}_{=: B_1 (\bA)} +  \underbrace{\E \left[ \Tr\left\{ \bA \left( \bM - \obM_- \right)^2 \right\} \right]}_{=: B_2(\bA)} . 
\]
For convenience, we introduce the following notations
\[
\begin{aligned}
\bQ_- :=&~ \obM_- \bA\obM_-, \qquad\qquad \bQ = \obM \bA\obM,\\
\tcE_n (\bA) :=&~ \frac{\varphi_1 (p)}{\sqrt{n}} \Big\{ 1 + \delta_{\varphi^{\bA} \neq 1} \cdot   \frac{\varphi_1(p)}{\sqrt{n\zeta_1 } } \rho_1 \Big(\obM^{1/2} \bA \obM^{1/2}\Big)^{-1/2}\Big\}.
\end{aligned}
\]
We will further drop the dependency in $\bA$ and simply denote $\tcE_n$ in the case $\varphi^{\bA} = 1$, i.e., $\tcE_n = \varphi_1(p) / \sqrt{n}$.
Under the setting of Proposition \ref{prop:TrAMM_LOO}, we will show the following three claims:

\begin{claim}\label{claim:B1}
    There exists a constant $C_{x,K}$ such that
    \begin{align}\label{eq:claim1_B1}
    | B_1 (\bA) | \leq C_{x,K}  \cdot  \nu_\lambda (n)^{6} \tcE_n \cdot \Tr( \bA \obM^2 ).
    \end{align}
\end{claim}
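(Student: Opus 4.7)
By the randomization trick of Step 0 in the proof of Proposition \ref{prop:TrAM_LOO} (cf.~\eqref{eq:reduction_to_rank_1}, write $\bA = \E_{\bu}[\bu\bu^\sT]$ for $\bu$ independent of $\bX$ and condition), it suffices to prove the bound for rank-one $\bA = \bv\bv^\sT$, in which case $\tcE_n = \varphi_1(p)/\sqrt{n}$ and the factor $\varphi_1^\bA = 1$. Starting from
\[
\E[\bM] - \obM_- \;=\; n\,\E\bigl[\bM\bigl(\tfrac{\id}{1+\kappa} - \bz\bz^\sT\bigr)\obM_-\bigr],
\]
I would apply Sherman--Morrison \eqref{eq:standard_identities_SMW} exactly as in Step 2 of Proposition \ref{prop:TrAM_LOO} to decompose the integrand into $\bDelta_1+\bDelta_2+\bDelta_3$. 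Since $\obM_-$ is deterministic given $\bX_-$ and hence independent of $\bz$, one still has $\E_\bz[\bDelta_1]=0$, so
\[
B_1(\bA) \;=\; 2n\,\E[\Tr(\bA\bDelta_2\obM_-)] \;+\; 2n\,\E[\Tr(\bA\bDelta_3\obM_-)].
\]

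For $\bA=\bv\bv^\sT$ the two resulting expressions are
\[
\Tr(\bA\bDelta_2\obM_-) = \frac{(\bv^\sT\obM_-^2\bz)(\bv^\sT\bM_-\bz)(\bz^\sT\bM_-\bz-\kappa)}{(1+\kappa)(1+\bz^\sT\bM_-\bz)}, \qquad \Tr(\bA\bDelta_3\obM_-) = -\frac{(\bv^\sT\obM_-\bM_-\bz)^2}{1+\bz^\sT\bM_-\bz}.
\]
These are structurally the same as the integrands bounded in the proof of Proposition \ref{prop:TrAM_LOO}, except that the ``$\obM_-$'' and ``$\bM_-$'' factors appearing as linear forms against $\bz$ are each replaced by one extra power: $\obM_-\mapsto\obM_-^2$ in the $\bDelta_2$ term and $\bM_-\mapsto\obM_-\bM_-$ in the $\bDelta_3$ term. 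I would therefore apply the same triple-H\"older bound to the $\bDelta_2$ contribution and the same Cauchy--Schwarz bound to the $\bDelta_3$ contribution, invoking Lemma \ref{lem:tech_bound_zAz} for moments of linear forms in $\bz$, Lemma \ref{lem:tech_bound_M_EM} for the quadratic-form fluctuation $\bz^\sT\bM_-\bz-\kappa$, and Lemma \ref{lem:tech_upper_bound_AM}.(a)--(b) for the required moment bounds on $\bM_-$.

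The key new ingredient is converting bounds proportional to $\Tr(\bA\obM) = \bv^\sT\obM\bv$ (as in Proposition \ref{prop:TrAM_LOO}) into bounds proportional to $\Tr(\bA\obM^2) = \|\obM\bv\|_2^2$. This is achieved by absorbing one extra factor of $\obM_-$ using $\|\obM_-\|_\op \leq (1+\kappa)/n \leq C_{x,K}\nu_\lambda(n)/n$ together with the operator inequality $\obM_-\preceq C_{x,K}\nu_\lambda(n)\,\obM$ from Lemma \ref{lem:tech_upper_bound_AM}.(a) (both matrices being diagonal in the eigenbasis of $\bSigma$). Concretely,
\[
\E_\bz\bigl[(\bv^\sT\obM_-^2\bz)^3\bigr]^{1/3} \;\leq\; C_x\sqrt{\bv^\sT\obM_-^4\bv} \;\leq\; C_{x,K}\,\frac{\nu_\lambda(n)^2}{n}\sqrt{\Tr(\bA\obM^2)},
\]
and analogously $\bv^\sT\obM_-\bM_-^2\obM_-\bv\leq\|\obM_-\|_\op^2\,\bv^\sT\bM_-^2\bv$, whose expectation is handled by Lemma \ref{lem:tech_upper_bound_AM}.(b) with a further power of $\nu_\lambda(n)^2/n$ extracted. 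Tracking these extra factors through the triple-H\"older (resp. Cauchy--Schwarz) argument and multiplying by the prefactor $n$ produces $n|\E[\Tr(\bA\bDelta_2\obM_-)]|\leq C_{x,K}\varphi_1(p)\nu_\lambda(n)^6 n^{-1/2}\Tr(\bA\obM^2)$ and a strictly smaller bound for $\bDelta_3$, which combined give \eqref{eq:claim1_B1}. The main obstacle is the bookkeeping that forces the final factor to be $\Tr(\bA\obM^2)$ rather than $\Tr(\bA\obM)\cdot\|\obM_-\|_\op$ — in particular, being careful to use the operator inequality between $\obM_-$ and $\obM$ (rather than just $\|\obM_-\|_\op$) at the precise step where the $\bv$-dependence is squared, which is what upgrades the $\nu_\lambda(n)^{5/2}$ of Proposition \ref{prop:TrAM_LOO} to the $\nu_\lambda(n)^6$ claimed here.
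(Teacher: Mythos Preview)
Your overall route---post-multiply each $\bDelta_j$ by $\obM_-$ and rerun the H\"older estimates of Proposition~\ref{prop:TrAM_LOO}---is the paper's approach up to a cyclic shift, but the bookkeeping you describe has a gap. In the $\bDelta_2$ contribution the two $\bz$-linear forms are $\bv^\sT\obM_-^2\bz$ and $\bv^\sT\bM_-\bz$. Your ``absorb the extra $\obM_-$'' mechanism only applies to the first; the second carries no $\obM_-$ at all, so if you simply reuse the Proposition~\ref{prop:TrAM_LOO} bound for it you obtain $\E[\bv^\sT\bM_-^2\bv]^{1/2}\lesssim\nu_\lambda(n)\,n^{-1/2}\sqrt{\Tr(\bA\obM)}$. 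The product then scales like $\sqrt{\Tr(\bA\obM)\,\Tr(\bA\obM^2)}$, and $\Tr(\bA\obM)$ is \emph{not} dominated by any multiple of $\Tr(\bA\obM^2)$ (take $\bA$ supported where $\obM$ has small eigenvalues), so \eqref{eq:claim1_B1} does not follow from what you wrote. The fix is to invoke Lemma~\ref{lem:tech_upper_bound_AM}.(b) with $\ell=2$, $r=0$ rather than $r=1$ for that factor, namely $\E[\bv^\sT\bM_-^2\bv]\leq C_{x,K}\,\nu_\lambda(n)^2\,\Tr(\bA\obM^2)$; after this one change your product does yield the claim. (Minor: your expression for $\Tr(\bA\bDelta_3\obM_-)$ is also off, since $\bM_-$ and $\obM_-$ do not commute---the correct term is $-(\bv^\sT\bM_-\bz)(\bz^\sT\bM_-\obM_-^2\bv)/(1+\bz^\sT\bM_-\bz)$, not a perfect square---but this does not affect the bound.)

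The paper sidesteps this asymmetry by passing one factor of $\obM_-$ to the \emph{left} of $\bA$ via cyclicity: using $\bM-\obM_- = \obM_-\bigl(\tfrac{n\id}{1+\kappa}-\bZ^\sT\bZ\bigr)\bM$ one gets
\[
\Tr\bigl\{\bA(\bM-\obM_-)\obM_-\bigr\} \;=\; \Tr\Bigl\{\bQ_-\bigl(\tfrac{n\id}{1+\kappa}-\bZ^\sT\bZ\bigr)\bM\Bigr\}, \qquad \bQ_- := \obM_-\bA\obM_-.
\]
For rank-one $\bA$ this amounts to replacing $\bv$ by $\bw=\obM_-\bv$ throughout, so \emph{both} linear forms in the decomposition are $\bw$-based and each automatically contributes a factor $\sqrt{\Tr(\bA\obM^2)}$ when squared; the Proposition~\ref{prop:TrAM_LOO} argument can then be reused verbatim with no need to switch the value of $r$.
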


\begin{claim}\label{claim:B2}
    There exists a constant $C_{x,K}$ such that
    \begin{align}\label{eq:claim2_B2}
    \left| B_2(\bA) - \frac{n \E [ \Tr (\bM^2) ]}{(1 + \kappa)^2}\Tr( \obQ_-) \right| \leq C_{x,K} \cdot \nu_\lambda (n)^{6} \tcE_n (\bA) \cdot \Tr( \bA \obM^2 ).
    \end{align}
\end{claim}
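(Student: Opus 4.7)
Following the proof of Proposition~\ref{prop:TrAM_LOO}, I would first reduce to the rank-one case $\bA = \bv\bv^\sT$ via the averaging trick of Eq.~\eqref{eq:reduction_to_rank_1}, noting that the claimed leading term and the error bound are both linear in $\bA$. Then, combining the resolvent identities
\[
\bM - \obM_- \;=\; \bM\bigl(\tfrac{n}{1+\kappa}\,\id - \bZ^\sT\bZ\bigr)\obM_- \;=\; \obM_-\bigl(\tfrac{n}{1+\kappa}\,\id - \bZ^\sT\bZ\bigr)\bM
\]
with the exchangeability of the rows of $\bZ$, I would write
\[
B_2(\bA) \;=\; n\,\E\!\left[\Tr\!\left\{\bA\,(\bM - \obM_-)\,\bM\bigl(\tfrac{\id}{1+\kappa} - \bz\bz^\sT\bigr)\obM_-\right\}\right]
\]
for a fresh feature $\bz$. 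The right-most factor is decomposed as $\bDelta_1+\bDelta_2+\bDelta_3$ exactly as in the proof of Proposition~\ref{prop:TrAM_LOO}, and the first $(\bM - \obM_-)$ factor is split via Sherman--Morrison into $(\bM_- - \obM_-) - \bM_-\bz\bz^\sT\bM_-/(1+\bz^\sT\bM_-\bz)$.

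Expanding the product yields six cross terms. Four of them contribute only to the error. The pairing $(\bM_- - \obM_-)\cdot\bDelta_1$ vanishes since $\E_\bz[\bDelta_1]=\bzero$ and $\bM_-\!-\!\obM_-$ is independent of $\bz$; the pairings of $(\bM_- - \obM_-)$ with $\bDelta_2$ or $\bDelta_3$ are bounded by Cauchy--Schwarz in $\bz$, Lemma~\ref{lem:tech_bound_zAz} on the remaining quadratic forms, and the first-order deterministic bound \eqref{eq:TrAM_det_part1_A=Id} on $\E[\Tr(\bM_- - \obM_-)]$; and the pairing of the rank-one correction with $\bDelta_2$ absorbs an extra $\varphi_1(p)\log^\beta(n)/\sqrt{n}$ factor from the concentration of $\bz^\sT\bM_-\bz$ about $\kappa$. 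Combined with the operator, Frobenius and trace bounds of Lemma~\ref{lem:tech_upper_bound_AM}, these four terms accumulate at most $C_{x,K}\nu_\lambda(n)^6\tcE_n(\bA)\Tr(\bA\obM^2)$, matching the stated error.

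The remaining pairing, between the rank-one correction and $\bDelta_3$, supplies the leading order:
\[
n\,\E\!\left[\frac{(\bz^\sT\bM_-^2\bz)\,(\bz^\sT\bM_-\obM_-\bA\bM_-\bz)}{(1+\bz^\sT\bM_-\bz)^2}\right].
\]
I would replace $(1+\bz^\sT\bM_-\bz)^2$ by $(1+\kappa)^2$ via Lemma~\ref{lem:tech_bound_zAz} and a Taylor expansion (with controlled error), and then decouple the two quadratic forms through the identity
\[
\E_\bz\bigl[(\bz^\sT\bB_1\bz)(\bz^\sT\bB_2\bz)\bigr] \;=\; \Tr(\bB_1)\Tr(\bB_2) \;+\; O\!\bigl(\Tr(\bB_1\bB_2) + \varphi_1(p)^2\|\bB_1\|_F\|\bB_2\|_F\bigr),
\]
which follows from applying \eqref{eq_app:DE_concentration_b1} to the tensorized form $(\bz\otimes\bz)^\sT(\bB_1\otimes\bB_2)(\bz\otimes\bz)$. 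The $\Tr(\bB_1)\Tr(\bB_2)$ piece produces $\tfrac{n}{(1+\kappa)^2}\E[\Tr(\bM_-^2)]\cdot\E[\Tr(\bM_-\obM_-\bA\bM_-)]$, at which point one invokes Proposition~\ref{prop:TrAM_LOO} applied to the symmetric matrix $(\obM_-\bA + \bA\obM_-)/2$ to replace $\E[\Tr(\bM_-\obM_-\bA\bM_-)]$ by $\Tr(\obQ_-)$, and Lemma~\ref{lem:tech_upper_bound_AM}.(c) to swap $\bM_-$ for $\bM$ in the other trace, yielding the target $\tfrac{n\E[\Tr(\bM^2)]}{(1+\kappa)^2}\Tr(\obQ_-)$ up to the claimed relative error.

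The main obstacle I anticipate is the joint control of the tensorized quadratic form for general $\bz$ satisfying only \eqref{eq_app:DE_concentration_b1} but not Gaussianity: a naive Cauchy--Schwarz bound on the cross term would produce a $\varphi_1(p)^2$ factor that is too large to close the induction, and a careful Hölder in $\bz$ and $\bM_-$ -- splitting one of the two factors off as a scalar controlled by Lemma~\ref{lem:tech_bound_zAz} and the other by operator-norm-times-trace bounds via Lemma~\ref{lem:tech_upper_bound_AM} -- is needed to reduce this cross contribution to $\varphi_1(p)/\sqrt{n}$. The secondary technical hurdle is tracking the $\nu_\lambda(n)$ exponent through the leave-one-out, the decoupling identity, and the final $\bM_-\to\bM$ swap; the value $\nu_\lambda(n)^6$ appears to be the tight outcome of this bookkeeping.
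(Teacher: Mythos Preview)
Your decomposition is asymmetric --- one factor of $(\bM-\obM_-)$ expanded via the resolvent identity, the other via Sherman--Morrison --- whereas the paper expands both symmetrically. Writing $\bM-\obM_-=\obM_-\bigl(\tfrac{n}{1+\kappa}\id-\bZ^\sT\bZ\bigr)\bM$ on each side yields
\[
B_2(\bA)=\E\Bigl[\Tr\Bigl\{\obQ_-\bigl(\tfrac{n\id}{1+\kappa}-\bZ^\sT\bZ\bigr)\bM^2\bigl(\tfrac{n\id}{1+\kappa}-\bZ^\sT\bZ\bigr)\Bigr\}\Bigr],
\]
with the deterministic p.s.d.\ matrix $\obQ_-=\obM_-\bA\obM_-$ on the outside and $\bM^2$ in the middle. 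Exchangeability splits this into a same-index term $(\rmI)$ and a two-index term $(\rmII)$; each noise factor is decomposed as $(\tfrac{\id}{1+\kappa}-\bz_i\bz_i^\sT)\bM=(\bDelta_{i1}+\bDelta_{i2}+\bDelta_{i3})^\sT\bM_i$. The sole leading contribution sits in $(\rmI)$ at $n\E[\Tr(\obQ_-\bDelta_{11}^\sT\bM_1^2\bDelta_{11})]$, whose dominant piece is $(1+\kappa)^{-2}\E[\bz_1^\sT\obQ_-\bz_1\cdot\bz_1^\sT\bM_1^2\bz_1]$. Because $\obQ_-$ is already deterministic and p.s.d., this decouples directly to $\Tr(\obQ_-)\E[\Tr(\bM_1^2)]$, and a one-feature swap gives $\E[\Tr(\bM^2)]$; no additional deterministic equivalent is needed to produce $\Tr(\obQ_-)$.

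Your route has a concrete gap at exactly this step. First, you list six cross terms but discuss only five: the pairing of the rank-one correction with $\bDelta_1$ is missing, and it carries the non-negligible piece $(\bz^\sT\bM_-^2\bz)(\bz^\sT\obM_-\bA\bM_-\bz)/[(1+S)(1+\kappa)]$. More seriously, the leading quadratic form you isolate, $\bz^\sT\bM_-\obM_-\bA\bM_-\bz$, has trace $\Tr(\obM_-\bA\bM_-^2)$ --- a \emph{second-order} functional in $\bM_-$ with the non-p.s.d.\ test matrix $(\obM_-\bA+\bA\obM_-)/2$. Invoking Proposition~\ref{prop:TrAM_LOO} on that symmetrized matrix controls only $\E[\Tr(\obM_-\bA\bM_-)]\approx\Tr(\obQ_-)$, i.e.\ one copy of $\bM_-$, not two; converting $\E[\Tr(\obM_-\bA\bM_-^2)]$ to $\Tr(\obQ_-)$ would require the very second-order equivalent you are establishing, so the argument is circular. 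The symmetric expansion is precisely what sidesteps this by making $\obQ_-$ appear on the outside from the start.
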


\begin{claim}\label{claim:TrM2-}
     There exists a constant $C_{x,K}$ such that
     \begin{align}
     \left| \Tr( \obQ_- ) - \Tr (\bA \obM^2) \right| \leq&~ C_{x,K} \cdot \nu_\lambda (n)^{5/2}\tcE_n \cdot \Tr( \bA \obM^2 ), \label{eq:claim3_eq1}\\
     \left| \frac{n \Tr( \obQ_- )}{(1 + \kappa)^2} - \frac{n\Tr(\bA \obM^2)}{(1 + \Tr(\obM))^2}\right| \leq &~ C_{x,K} \cdot \nu_\lambda (n)^{5/2}\tcE_n \cdot n\Tr(\bA \obM^2). \label{eq:claim3_eq2}
     \end{align}
\end{claim}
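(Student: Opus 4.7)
The plan is to exploit that $\obM$ and $\obM_-$ are both functions of $\bSigma$ alone and therefore commute (being diagonal in the same basis). Explicitly, $\obM = \bSigma(c_1\bSigma+\lambda)^{-1}$ with $c_1 := \mu_* = n/(1+\Tr(\obM))$, and $\obM_- = \bSigma(c_2\bSigma+\lambda)^{-1}$ with $c_2 := n/(1+\kappa)$. A standard resolvent identity together with commutativity gives the clean formula
\begin{equation*}
\obM - \obM_- \;=\; (c_2-c_1)\,\obM\,\obM_-,
\end{equation*}
which, read eigenvalue-by-eigenvalue, upgrades to the multiplicative sandwich
\begin{equation*}
(1+\delta)^{-1}\obM \;\preceq\; \obM_- \;\preceq\; (1+\delta)\,\obM,\qquad \delta := \frac{|c_1-c_2|}{c_1\wedge c_2}.
\end{equation*}

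The second step is to estimate the scalar $\delta$. From $c_1-c_2 = n(\kappa-\Tr(\obM))/[(1+\kappa)(1+\Tr(\obM))]$ one sees that $\delta$ is controlled by $|\kappa-\Tr(\obM)|/(1+\kappa\wedge \Tr(\obM))$. The needed bound on $|\kappa-\Tr(\obM)|$ is already supplied by Step~3 of the proof of Proposition \ref{prop:TrAM_LOO} (applied to $\bA=\id$, in combination with Lemma \ref{lem:tech_upper_bound_AM}.(c)):
\begin{equation*}
|\kappa-\Tr(\obM)| \;\leq\; C_{x,K}(1+\Tr(\obM))\,\nu_\lambda(n)^{5/2}\,\tcE_n,
\end{equation*}
yielding $\delta \leq C_{x,K}\nu_\lambda(n)^{5/2}\tcE_n$. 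Under the hypotheses of Proposition \ref{prop:TrAMM_LOO} this $\delta$ is $\leq 1/2$, so $c_1\wedge c_2 \geq c_1/2$ and the sandwich above is usable.

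Estimate \eqref{eq:claim3_eq1} now follows directly from the sandwich and the cyclicity of the trace: since $\bA \succeq 0$, $\Tr(\bA\obM_-^2) = \Tr(\obM_-\bA\obM_-) \leq (1+\delta)^2\Tr(\bA\obM^2)$ and symmetrically, so $|\Tr(\obQ_-)-\Tr(\bA\obM^2)| \leq 3\delta\,\Tr(\bA\obM^2)$. For \eqref{eq:claim3_eq2} one splits
\begin{equation*}
\frac{n\Tr(\obQ_-)}{(1+\kappa)^2} - \frac{n\Tr(\bA\obM^2)}{(1+\Tr(\obM))^2} \;=\; \frac{n[\Tr(\obQ_-)-\Tr(\bA\obM^2)]}{(1+\kappa)^2} \;+\; n\Tr(\bA\obM^2)\!\left[\frac{1}{(1+\kappa)^2}-\frac{1}{(1+\Tr(\obM))^2}\right],
\end{equation*}
the first term being handled by \eqref{eq:claim3_eq1} together with $n/(1+\kappa)^2 \leq n$, and the second by the identity $a^{-2}-b^{-2}=(b-a)(a+b)/(ab)^2$ with $a=1+\kappa$, $b=1+\Tr(\obM)$, which produces the factor $|\kappa-\Tr(\obM)|/[(1+\kappa)(1+\Tr(\obM))]$; multiplied by $n$ this is at most $\mu_*\cdot |\kappa-\Tr(\obM)|/(1+\kappa) \leq n\cdot C_{x,K}\nu_\lambda(n)^{5/2}\tcE_n$. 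The only real concern is bookkeeping: one must verify the reused bound on $|\kappa-\Tr(\obM)|$ depends only on the $\bA=\id$ case of Proposition \ref{prop:TrAM_LOO} (avoiding circularity within the proof of Proposition \ref{prop:TrAMM_LOO}) and keep careful track of the extra factor of $n$ in \eqref{eq:claim3_eq2}. There is no conceptual obstacle beyond this.
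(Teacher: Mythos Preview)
Your proof is correct and follows essentially the same approach as the paper. Both arguments reduce the claim to the single scalar estimate $|\mu_*-\tmu_-|/\tmu_- \le C_{x,K}\nu_\lambda(n)^{5/2}\tcE_n$ (equivalently your bound on $\delta$), which is precisely what Step~3 of the proof of Proposition~\ref{prop:TrAM_LOO} supplies; the paper then expands $(\tmu_-\bSigma+\lambda)^2-(\mu_*\bSigma+\lambda)^2$ algebraically where you use the operator sandwich $(1+\delta)^{-1}\obM\preceq\obM_-\preceq(1+\delta)\obM$, and for \eqref{eq:claim3_eq2} the paper rewrites $n/(1+\kappa)^2=\tmu_-^2/n$ before splitting, but these are purely presentational differences.
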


Combining Claims \ref{claim:B1} and \ref{claim:B2} implies the bound
\[
\left| \E[ \Tr(\bA \bM^2) ] - \Tr( \obQ_-) - \frac{n \E [ \Tr(\bM^2)]}{(1+\kappa)^2} \Tr(\obQ_-)\right| \leq  C_{x,K}  \cdot  \nu_\lambda (n)^{6} \tcE_n (\bA) \cdot \Tr( \bA \obM^2 ).
\]
First consider the case of $\bA = \id$. Combining the above display with Claim \ref{claim:TrM2-} yields
\begin{equation}\label{eq:combine_MM}
\begin{aligned}
&~\left| \E[ \Tr( \bM^2) ] - \Tr( \obM^2) - \frac{n \Tr(\obM^2)}{(1+\Tr(\obM))^2} \E [ \Tr(\bM^2)]  \right| \\
\leq&~ C_{x,K} \nu_\lambda (n)^{6} \tcE_n (\id) \Tr( \obM^2 ) + \left| \Tr( \obM_-^2 ) - \Tr (\obM^2) \right|  \\
&~+ \left|  \frac{n \Tr( \obM_-^2 )}{(1 + \kappa)^2} - \frac{n\Tr( \obM^2)}{(1 + \Tr(\obM))^2}\right| \E[\Tr(\bM^2)] \\
\leq&~ C_{x,K}  \cdot  \nu_\lambda (n)^{6} \tcE_n (\id) \cdot \Tr(  \obM^2 ),
\end{aligned}
\end{equation}
where we used that $\E[ \Tr(\bM^2)] \leq C_{x,K} \nu_\lambda (n)^2 \Tr(\obM^2)$ by Lemma \ref{lem:tech_upper_bound_AM}.(b). 
Observe that 
\[
1 - \frac{n \Tr(\obM^2)}{(1+\Tr(\obM))^2} = 1 - \frac{1}{n} \Tr( \bSigma^2 (\bSigma +\lambda_*)^{-2} ) \geq 1 - \frac{1}{n} \Tr( \bSigma (\bSigma +\lambda_*)^{-1} ) = \frac{\lambda}{n\lambda_*}>0.
\]
Thus, dividing both side of the inequality~\eqref{eq:combine_MM} by $1 - n \Tr(\obM^2)/(1+\Tr(\obM))^2$ results in
\[
\left| \E [ \Tr( \bM^2) ] - \Psi_3 (\mu_* ; \id )\right| \leq C_{x,K} \nu_\lambda(n)^6 \tcE_n (\id) \cdot \Psi_3 (\mu_* ; \id ).
\]
In fact, going through the bounds in the proof of Claims \ref{claim:B1} and \ref{claim:B2} with $\bA = \id$ without keeping track of the relative bounds, we have the following upper bound
\[
\left| \E [ \Tr( \bM^2) ] - \Psi_3 (\mu_* ; \id )\right| \leq C_{x,K} \frac{\nu_\lambda(n)^6\varphi_1(p)}{n^{3/2}}\cdot \frac{1}{1 - \frac{1}{n}\Tr(\bSigma^2 (\bSigma +\lambda_*)^{-2})}.
\]
Finally, note that
\[
\Tr( \bA \obM^2) + \frac{n\Psi_3(\mu_*;\id)}{(1+ \Tr(\obM))^2}\Tr( \bA \obM^2)  = \Psi_3 (\mu_* ; \bA).
\]
By combining the above displays, we conclude that
\[
\begin{aligned}
    \left| \E[ \Tr(\bA \bM^2) ] - \Psi_3 (\mu_* ; \bA) \right| \leq&~C_{x,K} \nu_\lambda (n)^{6} \tcE_n (\bA) \Tr( \bA\obM^2 ) + \left| \Tr( \bA\obM_-^2 ) - \Tr (\bA\obM^2) \right|  \\
&~+ \left|  \frac{n \Tr( \bA \obM_-^2 )}{(1 + \kappa)^2} - \frac{n\Tr( \bA \obM^2)}{(1 + \Tr(\obM))^2}\right| \E[\Tr(\bM^2)] \\
&~+ \frac{n\Tr( \bA \obM^2)}{(1 + \Tr(\obM))^2} \left| \E [ \Tr( \bM^2) ] - \Psi_3 (\mu_* ; \id )\right| \\
\leq&~ C_{x,K}  \cdot  \nu_\lambda (n)^{6} \tcE_n (\bA) \cdot \Psi_3 (\mu_* ; \bA),
\end{aligned}
\]
where we used that $\E[ \Tr(\obM^2)] \leq C_{x,K} \nu_\lambda(n)^2/ n$ and $\Tr(\bA \obM^2) \leq \Psi_3 (\mu_* ;\bA)$. Recalling that we can reduce ourselves to the case $\bA$ rank $1$ (see Eq.~\eqref{eq:reduction_to_rank_1}), this finishes the proof of Proposition \ref{prop:TrAMM_LOO} by replacing $\tcE_n$ by its expression.

The rest of this section is devoted to proving Claims \ref{claim:B1}-\ref{claim:TrM2-}.

\paragraph*{Proof of Claim \ref{claim:B1}.}
To establish this claim, we simply observe that we can rewrite $B_1$ as
\[
\Tr \left\{ \bA \left( \bM - \obM_- \right) \obM_-\right\} = \Tr\left\{ \obM_- \bA \obM_- \left( \frac{n\id}{1 + \kappa}  - \bZ^\sT \bZ\right) \bM \right\} . 
\]
Thus, we can follow the same proof as in Proposition \ref{prop:TrAM_LOO} and immediately obtain Eq.~\eqref{eq:claim1_B1}.
% \[
% B_1 (\id)  \leq C_{x,K} \frac{\varphi_1(p) \nu_\lambda (n)^{7/2}}{\sqrt{n}} .
% \]

\paragraph*{Proof of Claim \ref{claim:B2}.}
For convenience, let us introduce the following notations: we denote $\bM_i$ the scaled resolvent where we removed feature $\bx_i$ for $i\in \{1,2\}$, and $\bM_{12}$ the scaled resolvent where both features $\bx_1, \bx_2$ are removed. We will further denote for $i\in \{1,2\}$
\[
S_i = \bz_i^\sT \bM_i \bz_i  , \qquad \tS_i = \bz_i^\sT \bM_{12} \bz_i  .
\]
Following the same decomposition as in proof of Proposition \ref{prop:TrAM_LOO}, we introduce for $i \in \{1,2\}$
\[
\begin{aligned}
    \bDelta_{i1} = &~\frac{\id - \bz_i \bz_i^\sT}{1 + \kappa} , \qquad \bDelta_{i2} = \frac{\bz_i \bz_i^\sT }{(1 + \kappa) (1 + S_i)} (S_i - \kappa)  ,\\
    \bDelta_{i3} =&~ - \frac{ \bz_i \bz_i^\sT\bM_i}{1 + S_i} ,
\end{aligned}
\]
so that the following identity holds
\begin{equation}\label{eq:decompo_Delta_i_M}
\left( \frac{\id}{1 + \kappa}  - \bz_i^\sT \bz_i\right) \bM =  \left( \bDelta_{i1} +\bDelta_{i2} + \bDelta_{i3} \right)^\sT \bM_i  .
\end{equation}

By exchangeability, we can decompose $B_2 (\bA)$ as
\[
\begin{aligned}
 B_2(\bA) =&~ \E \left[ \Tr \left\{ \obM_- \bA \obM_- \left( \frac{n\id}{1 + \kappa}  - \bZ^\sT \bZ\right) \bM^2 \left( \frac{n\id}{1 + \kappa}  - \bZ^\sT \bZ\right) \right\} \right]  = (\rmI) + (\rmII), 
\end{aligned}
\]
where
\[
\begin{aligned}
    (\rmI) &= n \E \left[ \Tr \left\{ \obM_- \bA \obM_- \left( \frac{\id}{1 + \kappa}  - \bz_1^\sT \bz_1\right) \bM^2 \left( \frac{\id}{1 + \kappa}  - \bz_1^\sT \bz_1\right) \right\} \right]\\
    &= n\E\Tr \left( \obQ_- \left\{ \bDelta_{11} +\bDelta_{12} + \bDelta_{13} \right\}^\sT  \bM_1^2 \left\{ \bDelta_{11} +\bDelta_{12} + \bDelta_{13} \right\} \right) \\
    &= n\E\sum_{r,s \in [3]}  \Tr\left( \obQ_- \bDelta_{1r}^\sT   \bM_1^2 \bDelta_{1s}  \right)  .
    \\
 (\rmII) &=  n (n-1) \E \left[ \Tr \left\{ \obM_- \bA \obM_- \left( \frac{\id}{1 + \kappa}  - \bz_1^\sT \bz_1\right) \bM^2 \left( \frac{\id}{1 + \kappa}  - \bz_2^\sT \bz_2\right) \right\} \right] \\
 &=
n (n-1)\E \Tr \left( \obQ_- \left\{ \bDelta_{11} +\bDelta_{12} + \bDelta_{13} \right\}^\sT  \bM_1 \bM_2 \left\{ \bDelta_{21} +\bDelta_{22} + \bDelta_{23} \right\} \right) \\
 &=n (n-1)\E \sum_{r,s \in [3]}  \Tr\left( \obQ_- \bDelta_{1r}^\sT   \bM_1 \bM_2 \bDelta_{2s}  \right) .
\end{aligned}
\]

Each of these twelve terms can be bounded in an analogous manner. Here, we will demonstrate the approach for the six leading terms, and omit the details for the other terms for the sake of brevity. We will show that the only non-vanishing term is
$\obQ_- \bDelta_{11}^\sT   \bM_1^2 \bDelta_{11}$.
For the remaining error terms, we will liberally use the bounds
\begin{equation*}
    \E[((S_i - \kappa))^q]^{1/q} \le C_{x,K,q} \frac{\varphi_1(p) \nu_\lambda(n)}{\sqrt{n}},
\end{equation*}
whose proof follows similarly to Step 2 in the proof of Proposition~\ref{prop:TrAM_LOO}, and
\begin{equation*}
    \E_{\bz_1,\bz_2}[(\bz_1^\sT \bB \bz_2)^q]^{1/q} \le C_{x,K} \left(\| \bB\|_F + \varphi_1(p)^{1/2} \|\bB^2\|_F^{1/2}\right), 
\end{equation*}
where $\bB$ is independent of $\bz_1,\bz_2$, which follows from Lemma~\eqref{lem:tech_bound_zAz}. Furthermore, for the terms of $(\rmII)$ we will heavily use the expansion
\begin{equation}\label{eq:expansion_M1M2}
\begin{aligned}
    \bM_1\bM_2 = &~\left( \bM_{12} - \frac{\bM_{12} \bz_2 \bz_2^\sT \bM_{12}}{1 + \tS_2}\right) \left( \bM_{12} - \frac{\bM_{12} \bz_1 \bz_1^\sT \bM_{12}}{1 + \tS_1}\right) \\
    =&~ \bM_{12}^2 - \frac{\bM_{12}\bz_2\bz_2^\sT \bM_{12}^2}{1 + \tS_2} 
    - \frac{\bM_{12}^2 \bz_1\bz_1^\sT \bM_{12}}{1+\tS_1} + \frac{\bM_{12}\bz_2\bz_2^\sT \bM_{12}^2 \bz_1\bz_1^\sT \bM_{12}}{(1+\tS_1)(1+\tS_2)},
\end{aligned}
\end{equation}
which follows from the Sherman-Morrison identities \eqref{eq:standard_identities_SMW}.

\paragraph*{Bounding $\E [ \Tr\left( \obQ_- \bDelta_{11}^\sT   \bM_1^2 \bDelta_{11}  \right)]$ of $(\rmI)$:}
We expand $\bDelta_{11}$ and obtain
\[
\begin{aligned}
    (1 +\kappa)^2 \E \left[ \Tr\left( \obQ_- \bDelta_{11}^\sT   \bM_1^2 \bDelta_{11}  \right) \right] =&~ -  \E [ \Tr\left( \obQ_-  \bM_1^2   \right)] +  \E[\bz_1^\sT \obQ_- \bz_1 \bz_1^\sT \bM_1^2 \bz_1]  .
\end{aligned}
\]
Using Lemma \ref{lem:tech_upper_bound_AM}.(b), the first term is bounded by
\[
\begin{aligned}
    \left| \E [ \Tr\left( \obQ_-  \bM_1^2   \right)] \right| \leq&~ C_{x,K} \frac{\nu_\lambda(n)^2}{n^2} \Tr( \obQ_-)  \leq C_{x,K} \frac{\nu_\lambda(n)^4}{n^2} \Tr( \bA \obM^2).
\end{aligned}
\]
Meanwhile, the second term satisfy
\[
\begin{aligned}
    \left| \E[\bz_1^\sT \obQ_- \bz_1 \bz_1^\sT \bM_1^2 \bz_1] - \Tr(\obQ_-)\E [\Tr(\bM_1^2)] \right| 
    \leq&~ \Tr(\obQ_-) \E_{\bz_1}\left[ \left(\bz_1^\sT \E[\bM_1^2] \bz_1 - \E[\Tr( \bM_1)^2]\right)^2 \right]^{1/2} \\
    \leq&~ C_{x,K} \frac{\varphi_1(p) \nu_\lambda (n)^4}{n^{3/2}}\Tr(  \bA \obM^2 ) .
\end{aligned}
\]

\noindent Further using that by Lemma \ref{lem:tech_upper_bound_AM}.(c)
\[
\Tr(\obQ_-) \left| \E [\Tr(\bM_1^2)] - \E [\Tr(\bM^2)] \right|\leq C_{x,K} \frac{\nu_\lambda (n)^4}{n^2} \Tr(\obQ_-) \leq C_{x,K} \frac{\nu_\lambda (n)^5}{n^{3/2}}\Tr(  \bA \obM^2 ) ,
\]
we deduce
\begin{align*}
n   \left| \E [ \Tr\left( \obQ_- \bDelta_{11}^\sT   \bM_1^2 \bDelta_{11}  \right)] - \frac{\Tr(\obQ_-)\E [\Tr(\bM^2)]}{(1+\kappa)^2} \right| &\leq C_{x,K}  \nu_\lambda (n)^5 \tcE_n \Tr( \bA \obM^2 ) .
\end{align*}

\paragraph*{Bounding $\E [ \Tr\left( \obQ_- \bDelta_{11}^\sT   \bM_1^2 \bDelta_{12}  \right)]$ of $(\rmI)$:}

Expanding the matrix $\bDelta_{11}$ once again we obtain
\begin{align*}
    &~(1+\kappa)^2\E[\Tr( \obQ_- \bDelta_{11}^\sT   \bM_1^2 \bDelta_{12} )]
    \\
    =&~\E\left[\frac{\Tr(\obQ_- \bM_1^2 \bz_1\bz_1^\sT) (S_1 -\kappa) }{(1 + S_1)}\right] 
    - \E\left[ \frac{\Tr(\obQ_- \bz_1\bz_1^\sT \bM_1^2 \bz_1\bz_1^\sT) (S_1 - \kappa)}{(1+S_1)}\right].
\end{align*}
The first term is bounded similarly as in the previous paragraph: 
\begin{align*}
    &~ \left| \E\left[\frac{\Tr(\obQ_- \bM_1^2 \bz_1\bz_1^\sT) (S_1 -\kappa) }{(1 + S_1)}\right] \right|\\
    \leq &~  \E \left[ (S_1 -\kappa)^3\right]^{1/3}\E \left[ (\bz_1^\sT \bM_1^2 \obQ_- \bM_1^2 \bz_1)^{3/2}\right]^{1/3}  \E\left[(\bz_1^\sT \obQ_- \bz_1 )^{3/2} \right]^{1/3} \\
    \le&~ C_{x,K} \frac{\varphi_1 (p) \nu_\lambda (n)}{\sqrt{n}}\left\{ \E\left[\Tr\left(\bM_1^2\obQ_- \bM_1^2\right)^{3/2}\right]^{2/3} + \delta_{\varphi_1^\bA \neq 1} \cdot \varphi_1 (p) \E \left[ \|\bM_1^2 \obQ_- \bM_1^2\|_F^{3/2} \right]^{2/3} \right\}^{1/2}\\
    &~ \times \left\{ \Tr(\obQ_-) + \delta_{\varphi_1^\bA \neq 1} \cdot \varphi_1 (p)  \| \bQ_- \|_F \right\}^{1/2} \\
    \le&~  C_{x,K} \frac{\varphi_1(p) \nu_\lambda(n)^5}{n^{5/2}} \left\{ \Tr(\bA \obM^2) + \delta_{\varphi_1^\bA \neq 1} \cdot \frac{\varphi_1(p) }{\sqrt{n}} \| \obM^{1/2} \bA \obM^{1/2} \|_\op^{1/2} \Tr(\bQ)^{1/2} \right\}\\
\le&~ C_{x,K} \frac{\nu_\lambda(n)^{5}}{n^2} \tcE_{n} (\bA) \Tr(\bA \obM^2) \\
\leq&~ C_{x,K} \cdot \frac{1}{n}\tcE_n (\bA) \Tr(\bA \obM^2), 
\end{align*}
where we used condition Eq.~\eqref{eq:conditions_thm3} in the last inequality.

Meanwhile, the second term can be bounded via H\"older's as 
\begin{equation}\label{eq:D11M11D22_T2} 
\begin{aligned}
    &~\E\left[ \frac{\Tr(\obQ_- \bz_1\bz_1^\sT \bM_1^2 \bz_1\bz_1^\sT) (S_1 - \kappa)}{(1+S_1)}\right]\\
    \le&~ 
    \E \left[(\bz_1^\sT\obQ_- \bz_1)^3\right]^{1/3}\E\left[(\bz_1^\sT \bM_1^2 \bz_1)^3\right]^{1/3} \E\left[(S_1 - \kappa)^3\right]^{1/3}\\
    \le &~
    C_{x,K}\left(\Tr(\obQ_-) + \delta_{\varphi_1^\bA \neq 1} \cdot \frac{\varphi_1(p)\nu_\lambda(n)^{1/2}}{\sqrt{n}}\sqrt{\| \obM_-^{1/2} \bA \obM_-^{1/2}\|_\op \Tr(\obQ_-)}  \right)
   \cdot \frac{\varphi_1(p) \nu_\lambda(n)^3}{n^{3/2}}  \\
   \leq&~C_{x,K} \frac{\nu_\lambda(n)^{5}}{n} \tcE_n (\bA) \cdot \Tr(\bA \obM^2),
    \end{aligned}
\end{equation}
which gives the desired error bound when scaled by the factor $n$.

%\[
%\| \obQ_- \|_F \leq \sqrt{\| \obQ_- \|_\op \Tr(\obQ_-) } \leq  \| \obM \|_\op^{1/2} \| \obM^{1/2} \bA \obM_-^{1/2} \|_\op^{1/2} \Tr( \obQ_- )^{1/2} ]\leq \frac{\nu_\lambda(n)^{1/2}}{\sqrt{n}} \frac{\Tr(\obQ_-)}{\sqrt{\zeta_1 \rho_1 (\obM^{1/2} \bA \obM_-^{1/2}) }}
%\]

\paragraph*{Bounding $\E [ \Tr\left( \obQ_- \bDelta_{12}^\sT   \bM_1^2 \bDelta_{13}  \right)]$ of $(\rmI)$:}
Replacing $\bDelta_{1s}$ by their expression, we get
\begin{align}
    |(1+\kappa)\E [ \Tr( \obQ_- \bDelta_{12}^\sT   \bM_1^2 \bDelta_{13}  )]|
    &\le \E[|\Tr(\obQ_- \bz_1\bz_1^\sT\bM_1^2 \bz_1\bz_1^\sT \bM_1 )(S_1 -\kappa)   |].
\end{align}
By comparing to Eq.~\eqref{eq:D11M11D22_T2}, one directly sees that this term is of lower order than the previous term (by noting the additional factor of $\bM_1$).
Note that a similar argument bounds the remaining terms of $(\rmI)$ which we omit.

\paragraph*{Bounding $\Tr\left( \obQ_- \bDelta_{11}^\sT   \bM_1 \bM_2 \bDelta_{21}  \right)$ of $(\rmII)$:}
Expanding $\bM_1\bM_2$ as described in Eq.~\eqref{eq:expansion_M1M2}, we have
\[
\begin{aligned}
&~ \E \left[\Tr\left( \obQ_- \bDelta_{11}^\sT   \bM_1 \bM_2 \bDelta_{21}  \right) \right]\\
=&~ \E \left[ \Tr\left( \obQ_- \bDelta_{11}^\sT   \left( \bM_{12} - \frac{\bM_{12} \bz_2 \bz_2^\sT \bM_{12}}{1 + \tS_2}\right) \left( \bM_{12} - \frac{\bM_{12} \bz_1 \bz_1^\sT \bM_{12}}{1 + \tS_1}\right) \bDelta_{21} \right) \right]\\
=&~\E \left[ \frac{1}{(1+ \tS_1)(1+\tS_2)}\Tr\left( \obQ_- \bDelta_{11}^\sT \bM_{12} \bz_2 \bz_2^\sT \bM_{12}^2\bz_1 \bz_1^\sT \bM_{12} \bDelta_{21} \right) \right] ,
\end{aligned}
\]
where we used that $\E_{\bz_i} [ \bDelta_{i1} ] =0$. Decomposing $\bDelta_{i1}$, we obtain
\[
\begin{aligned}
  &~(1+\kappa)^2  \Tr\left( \obQ_- \bDelta_{11}^\sT \bM_{12} \bz_2 \bz_2^\sT \bM_{12}^2\bz_1 \bz_1^\sT \bM_{12} \bDelta_{21} \right) \\
  =&~\bz_2^\sT \bM_{12}^2 \bz_1\bz_1^\sT\bM_{12}\bz_2 \left(\bz_2^\sT \bM_{12}\bz_1\bz_1^\sT \obQ_- \bz_2 - \Tr(\bM_{12}\bz_1\bz_1^\sT \obQ_-)\right)\\
  &~+ \bz_2^\sT \bM_{12}^2 \bz_1 \bz_2^\sT \obQ_- \bM_{12} \bz_2\bz_1^\sT\bM_{12}\bz_2 -
 \bz_2^\sT \bM_{12}^2 \bz_1  \bz_1^\sT\bM_{12}\obQ_- \bM_{12} \bz_2 .
  %&\hspace{30mm}=~  \bz_1^\sT \bM_{12} \obQ_- \bM_{12} \bz_2 \cdot \bz_2^\sT \bM_{12}^2 \bz_1  - 2   \bz_1^\sT \bM_{12} \bz_2 \cdot \bz_2^\sT \obQ_- \bM_{12} \bz_2 \cdot \bz_2^\sT \bM_{12}^2 \bz_1\\
  %&\hspace{30mm} + \left( \bz_1^\sT \bM_{12} \bz_2 \right)^2 \cdot \bz_2^\sT \obQ_- \bz_1 \cdot \bz_2^\sT \bM_{12}^2 \bz_1
\end{aligned}
\]
Applying H\"older's inequality to the first term and using that $\|\bB\|_F \le \Tr(\bB)$ for p.s.d.~matrices, we obtain
\begin{align*}
&~ \left| \E\left[\bz_2^\sT \bM_{12}^2 \bz_1\bz_1^\sT\bM_{12}\bz_2 \left(\bz_2^\sT \bM_{12}\bz_1\bz_1^\sT \obQ_- \bz_2 - \Tr(\bM_{12}\bz_1\bz_1^\sT \obQ_-)\right) \right] \right|\\
\le&~ 
\E\left[(\bz_2^\sT \bM_{12}^2 \bz_1)^3\right]^{1/3}\E\left[(\bz_1^\sT\bM_{12}\bz_2)^3\right]^{1/3}
\E\left[ \left(\bz_2^\sT \bM_{12}\bz_1\bz_1^\sT \obQ_- \bz_2 - \Tr(\bM_{12}\bz_1\bz_1^\sT \obQ_-)\right)^3\right]^{1/3}\\
\le&~ C_{x,K}\left(\E[\|\bM_{12}^2\|_F^3]^{1/3} + \varphi_1(p)^{1/2} \E[\|\bM_{12}^4\|_F^{3/2}]^{1/3} \right)
\left(\E[\|\bM_{12}\|_F^3]^{1/3} + \varphi_1(p)^{1/2} \E [\|\bM_{12}^2\|_F^{3/2}]^{1/3} \right)\\
&~ \times \left\{ \E \left[(\bz_1\bM_{12}\bz_2 )^{6}\right]^{1/6} \E \left[ (\bz_1^\sT \obQ_- \bz_2)^{6}\right]^{1/6}  + \E \left[(\bz_1\bM_{12} \obQ_- \bM_{12} \bz_1 )^{3}\right]^{1/6} \E \left[ (\bz_1^\sT \obQ_- \bz_1)^{3}\right]^{1/6} \right\} \\
\le&~  C_{x,K}\frac{\nu_\lambda (n)^4}{n^2} \Big\{  \Big(\E \left[ \| \bM_{12} \|_F^6 \right]^{1/6} + \varphi_1 (p)^{1/2} \E \left[ \| \bM_{12}^2 \|_F^3\right]^{1/6}  \Big)  \left( \| \obQ_- \|_F + \varphi^\bA_1(p)^{1/2} \| \obQ_-^2 \|_F^{1/2} \right) \\
&~ + \Big( \E \left[\Tr( \obQ_- \bM_{12}^2)^3)\right] +  \varphi_1^\bA (p)^3 \E \left[ \| \bM_{12}\obQ_- \bM_{12} \|_F^3\right]  \Big)^{1/6}   \left( \Tr( \obQ_- ) + \varphi^{\bA}_1(p) \| \obQ_-\|_F \right)^{1/2} \Big\}\\
\le&~ C_{x,K}  \frac{\nu_\lambda (n)^4}{n^{5/2}}  \cdot \left( \Tr (\obQ_-) + \delta_{\varphi_1^\bA \neq 1} \varphi_1(p)^{1/2} \sqrt{\| \obQ_- \|_\op \| \Tr (\obQ_-)} \right)  \\
\leq&~ C_{x,K} \frac{\nu_\lambda (n)^6}{n^2} \tcE_n (\bA) \Tr(\bA \obM^2),
\end{align*}
where we recall that $\varphi_1 (p) \leq C_{x,K} \sqrt{n}$ by condition \eqref{eq:conditions_thm3}.
Following a similar procedure for the second term gives 
\begin{align*}
    &~\left| \E\left[\bz_2^\sT \bM_{12}^2 \bz_1 \bz_2^\sT \obQ_- \bM_{12} \bz_2\bz_1^\sT\bM_{12}\bz_2\right] \right|\\
    \leq&~ \E\left[ (\bz_2^\sT \bM_{12}^2 \bz_1)^4 \right]^{1/4}\E\left[ (\bz_2^\sT \obQ_-  \bz_2)^2 \right]^{1/4} \E\left[ (\bz_2^\sT \bM_{12} \obQ_- \bM_{12} \bz_2)^2 \right]^{1/4} \E\left[ (\bz_2^\sT \bM_{12} \bz_1)^4 \right]^{1/4}\\
    \le&~ C_{x,K} \nu_\lambda (n)^4 \frac{1}{n^{3/2}} \cdot  \frac{1}{n} \cdot \frac{1}{n^{1/2}} \cdot \left(\Tr(\obQ_-) + \delta_{\varphi_1^\bA \neq 1} \varphi_1(p) \|\obQ_-\|_F\right)\\
    \le&~ C_{x,K} \frac{\nu_\lambda (n)^6}{n^{5/2}}\tcE_n (\bA) \Tr(\bA \obM^2),
\end{align*}
which is of lower order.

Finally, for the last term we proceed similarly by writing
\begin{align*}
    &~ \left| \E\left[\bz_1^\sT \bM_{12} \obQ_- \bM_{12} \bz_2 \cdot \bz_2^\sT \bM_{12}^2 \bz_1 \right] \right|\\
  \leq  &~
    \E\left[(\bz_1^\sT \bM_{12} \obQ_- \bM_{12} \bz_2)^2\right]^{1/2}\E\left[(\bz_2^\sT \bM_{12}^2 \bz_1)^2\right]^{1/2}\\
   \leq&~
    C_{x,K}\left(\E [\|\bM_{12} \obQ_- \bM_{12}\|_F^2]^{1/2} + \delta_{\varphi_1^\bA \neq 1} \varphi_1(p) \E [\| (\bM_{12} \obQ_- \bM_{12})^2\|_F ]^{1/2} \right)\\
    &~ \times
    \left(\E[\|\bM_{12}^2 \|_F^2]^{1/2} + \varphi_1(p) \E[\| \bM_{12}^4 \|_F]^{1/2}\right)\\
   \leq&~ C_{x,K} \frac{\nu_\lambda (n)^4 \varphi_1 (p)^{1/2}}{n^{7/2}} \left\{ \Tr(\obQ_- ) + \delta_{\varphi_1^\bA \neq 1} \varphi_1(p) \|\obQ_-\|_F \right\} \\
    \leq&~C_{x,K} \frac{\nu_\lambda (n)^6}{n^{11/4}}\tcE_n (\bA) \Tr(\bA \obM^2),
\end{align*}
which is also of lower order after scaling by $n^2$.

\paragraph*{Bounding $\Tr\left( \obQ_- \bDelta_{12}^\sT   \bM_1 \bM_2 \bDelta_{22}  \right)$ of $(\rmII)$:}
We expand $\bM_1\bM_2$ via the resolvent identities \eqref{eq:expansion_M1M2} which gives four terms. We bound each of the four terms separately. For the first term (involving $\bM_{12}^2$), we introduce 
\[
\bD_{i} = \frac{S_i - \kappa}{1+S_i} \bz_i \bz_i^\sT , \qquad \quad \tbD_{i} = \frac{\tS_i - \kappa}{1+\tS_i} \bz_i \bz_i^\sT.
\]
Using Sherman-Morrison identity \eqref{eq:standard_identities_SMW}, note that we have
\[
S_1 = \bz_1^\sT \bM_1 \bz_1 = \bz_1^\sT \bM_{12} \bz_1 - \frac{(\bz_1^\sT \bM_{12} \bz_2)^2}{1 + \bz_2^\sT \bM_{12} \bz_2},
\]
and we can decompose the difference as
\[
\bD_{i} - \tbD_{i} = - \frac{(\kappa +1) (\bz_1^\sT \bM_{12} \bz_2)^2 }{(1+ \tS_2)(1 +\tS_1) (1 + S_i)}\bz_i \bz_i^\sT=: - \delta_i \cdot (\bz_1^\sT \bM_{12} \bz_2)^2\bz_i \bz_i^\sT.
\]
We expand this first term as
\begin{equation}\label{eq:decompo_tDelta12}
\begin{aligned}
&~ (1+ \kappa)^2 \E \left[\Tr\left( \obQ_- \bDelta_{12}   \bM_{12}^2 \bDelta_{22}  \right)\right] \\
= &~ \E \left[\Tr\left( \obQ_- \tbD_{1}   \bM_{12}^2 \tbD_{2}  \right)\right] - 2 \E \left[  \delta_1 (\bz_1^\sT \bM_{12} \bz_2)^2 \bz_1^\sT \bM_{12}^2 \tbD_{22} \obQ_- \bz_1 \right]\\
&~ + \E \left[  \delta_1 \delta_2 (\bz_1^\sT \bM_{12} \bz_2)^4 \bz_1^\sT \bM_{12}^2 \bz_2 \bz_2^\sT \obQ_- \bz_1 \right].
\end{aligned}
\end{equation}
Note that in the first term, the only dependency on $\bz_i$ appears in $\tbD_{i}$, and we denote $\bD := \E_i [ \tbDelta_{i} ]$. We can bound its operator norm as follows 
\[
\begin{aligned}
    \| \bD \|_\op = \sup_{\| \br \|_2 = 1} | \br^\sT \bD \br| \leq &~ \E_{\bz_1}\left[ \< \bz_1 , \br \>^2 |\bz_1^\sT \bM_{12} \bz_1 - \kappa| \right] \\
    \leq&~ C_x \left\{ \E_{\bz_1} \left[ ( \bz_1^\sT \bM_{12} \bz_1 - \Tr( \bM_{12} ) )^2 \right]^{1/2} + \left|\Tr( \bM_{12} ) - \kappa \right|  \right\}\\
    \leq&~ C_x \left\{ \varphi_1 (p) \| \bM_{12} \|_F + \left|\Tr( \bM_{12} ) - \kappa \right|  \right\}.
\end{aligned}
\]
Hence, we can bound the first term in Eq.~\eqref{eq:decompo_tDelta12} by
\[
\begin{aligned}
   \left| \E \left[\Tr\left( \obQ_- \tbD_{1}   \bM_{12}^2 \tbD_{2}  \right)\right] \right|=&~ \left|\E  \left[\Tr\left( \obQ_- \tbD   \bM_{12}^2 \tbD  \right)\right]\right|\\
   \leq&~ \Tr(\obQ_-) \E \left[ \| \bM_{12}\|_\op^4 \right]^{1/2} \E \left[  \| \bD \|_\op^4\right]^{1/2}\\
   \leq&~ C_{x,K} \Tr(\obQ_-) \cdot \frac{\nu_\lambda (n)^2}{n^2} \cdot \frac{\varphi_1 (p)^2 \nu_\lambda (n)^2}{n}  \\
   \leq&~ C_{x,K} \frac{\varphi_1(p) \nu_\lambda (n)^6}{n^{5/2}} \Tr(\bA \obM^2).
\end{aligned}
\]
The second term in the decomposition \eqref{eq:decompo_tDelta12} can be bounded less carefully:
\[
\begin{aligned}
&~\frac{1}{(1+\kappa)}\left| \E \left[  \delta_1 (\bz_1^\sT \bM_{12} \bz_2)^2 \bz_1^\sT \bM_{12}^2 \tbD_{22} \obQ_- \bz_1 \right] \right| \\
\leq&~  \E\left[ (\bz_1^\sT \bM_{12} \bz_2)^8 \right]^{1/4} \E\left[ (\bz_1^\sT \bM_{12}^2 \bz_2)^4 \right]^{1/4} \E\left[ (\bz_2^\sT \obQ_- \bz_1 )^4 \right]^{1/4} \E\left[ (\tS_i - \kappa)^4 \right]^{1/4} \\
\leq&~ C_{x,K} \cdot \frac{\nu_\lambda (n)^2}{n} \cdot \frac{\nu_\lambda (n)^2}{n^{3/2}} \cdot \frac{\varphi_1 (p) \nu_\lambda (n)}{\sqrt{n}} \cdot \left\{ \Tr( \obQ_- ) + \varphi^{\bA}_1(p) \cdot \| \obQ_- \|_F \right\}\\
\leq&~ C_{x,K} \cdot \frac{\nu_\lambda(n)^6}{n^{5/2}} \tcE_n (\bA) \cdot \Tr(\bA \obM^2).
\end{aligned}
\]
The bound on the third term in the decomposition \eqref{eq:decompo_tDelta12} follows from a similar computation.

To analyze the second and third terms in the expansion of $\bM_1\bM_2$, we proceed similarly to the first term and replace $\bD_i$ by $\tbD_i$. We simply sketch the first term, and the other terms follow similarly as above. By taking the expectation $\E_{\bz_i} [ \tbD_i]$ and making $\bD $ appear, we get
\begin{align*}
   &~ \left| \E\left[\Tr\left(\obQ_- \tbD_1 \frac{\bM_{12}\bz_2\bz_2^\sT \bM_{12}^2 }{1+\tS_2} \tbD_{2}\right)\right]\right| 
   \\
   =&~ \left| \E\left[\Tr\left(\obQ_- \bD \frac{\bM_{12}\bz_2\bz_2^\sT \bM_{12}^2 \bz_2 \bz_2^\sT }{(1+\tS_2)^2} \right) (\tS_2 - \kappa)\right]\right|\\
   \leq&~ \E\left[ (\bz_2^\sT \bM_{12}^2 \bz_2)^4 \right]^{1/4} \E\left[ (\bz_2^\sT \obQ_- \bz_2)^2 \right]^{1/4} \E\left[ (\bz_2^\sT \bM_{12} \bD \obQ_- \bD \bM_{12} \bz_2)^2 \right]^{1/4} \E\left[ (\tS_2 - \kappa)^4 \right]^{1/4} \\
   \le&~  C_{x,K} \frac{\varphi_1 (p)^2 \nu_\lambda (n)^5}{n^3} \left\{ \Tr(\obQ_-) + \varphi_1^\bA (p) \| \obQ_- \|_F \right\}\\
\le&~ C_{x,K}\frac{\nu_\lambda (n)^6}{n^2} \tcE_n (\bA) \Tr(\bA\obM^2).
\end{align*}

Finally, for the final term in the expansion of $\bM_1\bM_2$, we do not need to be careful and we simply using H\"older's inequality to get 
\begin{align*}
&\left| \E\left[\Tr\left(\obQ_- \bDelta_{12}^\sT 
\frac{\bM_{12}\bz_2\bz_2^\sT \bM_{12}^2 \bz_1\bz_1^\sT \bM_{12}}{(1+\tS_1)(1+\tS_2)}
\bDelta_{22}\right)\right]\right| \\
\le&~
\E\left[(S_1 - \kappa)^8\right]^{1/4} \E \left[ (\bz_1^\sT \bM_{12} \bz_2)^8 \right]^{1/4} \E \left[ (\bz_1^\sT \bM_{12}^2 \bz_2)^4 \right]^{1/4} \E \left[ (\bz_1^\sT \obQ_- \bz_2)^4 \right]^{1/4} \\
\leq&~  C_{x,K} \frac{\varphi_1 (p)^2 \nu_\lambda (n)^6}{n^{7/2}} \left\{ \Tr(\obQ_-) + \varphi_1^\bA (p) \| \obQ_- \|_F \right\}\\
\le&~ C_{x,K}\frac{\nu_\lambda (n)^6}{n^{5/2}} \tcE_n (\bA) \Tr(\bA\obM^2).
\end{align*}

\paragraph*{Bounding $\Tr\left( \obQ_- \bDelta_{12}^\sT   \bM_1 \bM_2 \bDelta_{23}  \right)$ of $({\rm II})$:}

Once again we use the resolvent identities \eqref{eq:expansion_M1M2} to expand $\bM_1\bM_2$ and bound each of the resulting four terms separately.
 Note that $\bDelta_{i3} = -(1+\kappa) \bDelta_{i2} \bM_i/(S_i - \kappa)$. Thus, by comparison to the previous term, the error term obtained is the same as the previous part up to an additional $\bM_2$ matrix and one less $(S_i - \kappa)$, and is of lower order. The remaining terms in $({\rm II})$ similarly depend on $\bDelta_{i3}$ and are bounded via a similar argument, which we omit for the sake of brevity.

\paragraph*{Proof of Claim \ref{claim:TrM2-}.} For convenience, we will denote $\tmu_- = n / (1+\kappa)$. First, note that in the proof of Proposition \ref{prop:TrAM_LOO}, we showed that
\[
\begin{aligned}
    \frac{|\mu_* - \tmu_-|}{\tmu_-} = \frac{| \kappa - \Tr(\obM)}{1 + \Tr( \obM)} \leq C_{x,K} \frac{\varphi_1(p) \nu_\lambda (n)^{5/2}}{\sqrt{n}} = C_{x,K} \cdot \nu_\lambda (n)^{5/2}\tcE_n.
\end{aligned}
\]
In particular, we have $|\mu_* - \tmu_-| \leq C_{x,K} \tmu_-$ by condition \eqref{eq:conditions_thm3}.
Thus, we obtain
\[
\begin{aligned}
    \left| \Tr( \obQ_- ) - \Tr( \bA \obM^2) \right| = &~ \left| \Tr\left[ \bA \bSigma^2 ( \mu_* \bSigma + \lambda)^{-2} ( \tmu_- \bSigma + \lambda)^{-2}  \left\{ ( \tmu_- \bSigma + \lambda)^{2} - ( \mu_* \bSigma + \lambda)^{2}\right\}  \right]\right|\\
    \leq&~ \left\{ \frac{| \mu_*^2 - \tmu_-^2|}{\tmu_-^2}  + 2 \frac{| \mu_* - \tmu_- |}{\tmu_-} \right\} \Tr(\bA \bSigma^2 ( \mu_* \bSigma + \lambda)^{-2}) \\
    \leq&~ C_{x,K} \cdot \nu_\lambda (n)^{5/2}\tcE_n \Tr(\bA \obM^2).
\end{aligned}
\]
Similarly,
\[
\begin{aligned}
    \left| \frac{n \Tr( \obQ_- )}{(1+\kappa)^2} - \frac{n \Tr(\bA \obM^2)}{(1 + \Tr(\obM))^2}\right| =&~ \left| \frac{\tmu_-^2 \Tr( \obQ_- )}{n} - \frac{\mu_*^2 \Tr( \bA \obM^2)}{n} \right|  \\
    \leq&~ \frac{\mu_*^2}{n} \left| \Tr( \obQ_- ) - \Tr( \bA \obM^2) \right| + \frac{\left| \mu_*^2 - \tmu_-^2 \right|}{\tmu_-^2} \cdot \frac{\tmu_-^2}{n}  \Tr( \obQ_- ) \\
    \leq&~  C_{x,K} \cdot  \nu_\lambda (n)^{5/2}\tcE_n n \Tr(\bA \obM^2),
\end{aligned}
\]
which finishes the proof of this claim.

\subsubsection[Proof of Proposition \ref{prop:TrAMM_martingale}: martingale part of $\Tr(AM^2)$]{Proof of Proposition \ref{prop:TrAMM_martingale}: martingale part of \boldmath{$\Tr(AM^2)$}}\label{app:proof_TrAMM_martingale}
The proof for the martingale part proceeds in a manner analogous to Proposition \ref{prop:TrAMM_martingale}. 
The difference is in Step 2 and obtaining with high probability the bound $|\Delta_i| \le R$ where
\begin{equation*}
    \Delta_i := (\E_i - \E_{i-1})\Tr(\bA \bM^2)  =(\E_i - \E_{i-1})\Tr(\bA (\bM^2 - \bM_i^2)).
\end{equation*}
We show below that we can choose
\begin{equation}\label{eq:Rchoice_AMM}
    R =  C_{x,D} \frac{\nu_\lambda(n)^4}{n}    \varphi_1^\bA (p) \log^\beta(n) \Tr(\bA \obM^2).
\end{equation}
For bounding $\E_{i-1} [\Delta_i \ind_{\Delta_i \not\in[-R,R]}]$ in Step 3, note that
\[
\E_{i-1} [ \Delta_i^2 ]^{1/2} \leq 2 \E_{i-1} \left[ \Tr(\bA \bM^2)^2\right]^{1/2} \leq C_{x,D,K} \cdot \nu_\lambda(n)^2 \Tr(\bA \obM^2),
\]
by Lemma \ref{lem:tech_upper_bound_AM}.(b), and the rest of the proof follows similarly. 

Let us show Eq.~\eqref{eq:Rchoice_AMM}. We omit some details that are similar to the proof of Proposition \ref{prop:TrAM_martingale} for the sake of brevity. First, we use the Sherman-Morrison identities \eqref{eq:standard_identities_SMW} and decompose 
\begin{align*}
    \Tr\bA( \bM^2 - \bM_i^2) =&~ \Tr \bA\Big( -2 \frac{\bM_i^2 \bz_i \bz_i^\sT \bM_i }{1+S_i} 
    + \frac{\bM_i \bz_i\bz_i^\sT \bM_i^2 \bz_i \bz_i^\sT \bM_i}{(1 + S_i)^2}  \Big).
\end{align*}

Let's consider $\E_j(\Tr\bA(\cdot))$ with $j \in \{i,i-1\}$ applied to each of these two terms. 
For the first term, this is bounded with probability at least $1 - n^{-D}$ by
\begin{align*}
   \left| \E_i \left[ \frac{\Tr(\bA \bM_i^2 \bz_i \bz_i^\sT \bM_i)}{1 + S_i}\right] \right| \leq&~  \E_i \left[ |\bz_i^\sT \bM_i \bA \bM_i \bz_i|   \right]^{1/2} \E_i \left[ |\bz_i^\sT \bM_i^2 \bA \bM_i^2 \bz_i|   \right]^{1/2}\\
   \leq&~ C_{x,D} \log^\beta (n) \varphi_1^\bA(p) \E_i \left[ \|\bM_i \bA \bM_i\|_F \right]^{1/2} \E_i \left[ \|\bM_i^2 \bA \bM_i^2\|_F \right]^{1/2} \\
    \le &~
     C_{x,D,K} \log^\beta(n) \frac{\nu_\lambda(n)^3}{n}\varphi_1^\bA(p) \Tr(\bA \obM^2),
\end{align*}
where we simply used $\|\bM_i \bA \bM_i \|_F \leq \Tr(\bM_i \bA \bM_i)$ for a p.s.d.~matrix and Lemma \ref{lem:tech_upper_bound_AM}.(b).

Similarly, the second term us bounded with probability at least $1 - n^{-D}$ by
\[
\begin{aligned}
       \left| \E_i \left[ \frac{\Tr(\bA \bM_i \bz_i \bz_i^\sT \bM_i^2 \bz_i \bz_i^\sT \bM_i)}{(1 + S_i)^2}\right] \right| \leq&~  \E_i \left[ |\bz_i^\sT \bM_i \bA \bM_i \bz_i|^2   \right]^{1/2}  \E_i \left[ |\bz_i^\sT \bM_i^2 \bz_i|^2   \right]^{1/2}\\
   \leq&~ C_{x,D} \log^{2\beta} (n) \varphi_1^\bA(p) \E_i \left[ \|\bM_i \bA \bM_i\|_F^2 \right]^{1/2} \cdot \frac{\nu_\lambda (n)^2}{n} \\
    \le &~
     C_{x,D,K} \log^{2\beta} (n) \frac{\nu_\lambda(n)^4}{n}\varphi_1^\bA(p) \Tr(\bA \obM^2).
\end{aligned}
\]
Similar bounds hold with $\E_i$ replaced by $\E_{i-1}$ (without the factors $\log^\beta (n)$). Finally, to obtain a relative bound, note that $\Tr(\bA \obM^2) \leq \Psi_3 (\bA;\mu_*)$.

\subsection[Deterministic equivalent for $\Tr (AMZ^\sT Z M)$]{Deterministic equivalent for \boldmath{$\Tr (AMZ^\sT Z M)$}}
\label{app_det_equiv:proof_MZZM}

We consider the fourth and last functional
\[
\Phi_4 (\bX ; \bA) = \frac{1}{n}\Tr(\bA \bSigma^{1/2} ( \bX^\sT \bX + \lambda)^{-1} \bX^\sT \bX ( \bX^\sT \bX + \lambda)^{-1}\bSigma^{1/2} ) = \frac{1}{n} \Tr( \bA \bM \bZ^\sT \bZ \bM)  .
\]
We show that $\Phi_4 (\bX;\bA)$ is well approximated by the following deterministic equivalent:
\[
\Psi_4 (\mu_* ; \bA) =  \frac{1}{n^2} \cdot\frac{\Tr(\bA \bSigma^2 ( \bSigma + \lambda_*)^{-2})}{1 - \frac{1}{n} \Tr(\bSigma^2 ( \bSigma + \lambda_* )^{-2} ) }. 
%= \frac{ \mu_*^2\Tr(\bA \obM^2)}{ \left( 1 - \frac{n\Tr (\obM^2)}{(1+ \Tr(\obM))^2}\right)} 
\]
Note that we have the identity $\Psi_4 (\mu_* ; \bA) = \frac{\mu_*^2}{n^2} \Psi_3 (\mu_*;\bA)$. Again, we can assume without loss of generality that $\Tr( \bA \bSigma^2) < \infty$.

\begin{theorem}[Deterministic equivalent for $\Tr(\bA \bM \bZ^\sT \bZ \bM)$]\label{thm_app:det_equiv_TrAMZZM}
    Assume the features $(\bx_i)_{i\in[n]}$ satisfy Assumption \ref{ass_app:deterministic_equivalent}. For any constants $D,K>0$, there exist constants $\eta \in (0,1/2)$ (only depending on $\sfc_x,\sfC_x,\beta$), $C_{D,K} >0$ (only depending on $K,D$), and $C_{x,D,K}>0$ (only depending on $\sfc_x,\sfC_x,\beta,D,K$), such that the following holds. Define $\nu_\lambda (n)$ as per Eq.~\eqref{eq:definition_nu_lambda}. For all $n \geq C_{D,K}$ and  $\lambda >0$ satisfying
    \begin{equation}\label{eq:conditions_thm4}
\lambda \cdot \nu_\lambda (n) \geq n^{-K}, \qquad \varphi_1(p) \nu_\lambda (n)^{5/2} \log^{\beta + \frac{1}{2}} (n) \leq K \sqrt{n} ,
    \end{equation}
    and deterministic p.s.d.~matrix $\bA$, we have with probability at least $1 - n^{-D}$ that
    \begin{equation}
         \big\vert \Phi_4(\bX;\bA) - \Psi_4 (\mu_* ; \bA) \big\vert \leq C_{x,D,K} \frac{\varphi_1(p) \nu_\lambda (n)^{6} \log^{\beta + \frac12} (n ) }{\sqrt{n}}   \Psi_4 (\mu_* ; \bA) .
    \end{equation}
\end{theorem}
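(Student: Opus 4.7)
I plan to deduce Theorem~\ref{thm_app:det_equiv_TrAMZZM} from Theorems~\ref{thm_app:det_equiv_TrAM} and~\ref{thm_app:det_equiv_TrAMM} via a Sherman--Morrison/leave-one-out argument, mirroring the proof of Theorem~\ref{thm_app:det_equiv_TrZZM}. The key starting identity, obtained by combining $\bM \bZ^\sT \bZ \bM = \sum_i \bM\bz_i \bz_i^\sT \bM$ with the Sherman--Morrison relation $\bM \bz_i = \bM_i \bz_i/(1+\bz_i^\sT \bM_i \bz_i)$ from~\eqref{eq:standard_identities_SMW}, is
\[
\Phi_4(\bX;\bA) \;=\; \frac{1}{n}\sum_{i=1}^n \bz_i^\sT \bM \bA \bM \bz_i \;=\; \frac{1}{n}\sum_{i=1}^n \frac{\bz_i^\sT \bM_i \bA \bM_i \bz_i}{(1+\bz_i^\sT \bM_i \bz_i)^2}.
\]
The plan is to show that each summand concentrates on $\Psi_3(\mu_*;\bA)/(1+\Tr(\obM))^2$, and to conclude using the algebraic identity $\Psi_4(\mu_*;\bA) = \Psi_3(\mu_*;\bA)/(1+\Tr(\obM))^2$, which follows from $(\bSigma+\lambda_*)^{-2} = \mu_*^2(\mu_*\bSigma+\lambda)^{-2}$ together with the fixed-point relation $\mu_*/n = 1/(1+\Tr(\obM))$ of~\eqref{eq:det_equiv_fixed_point_mu_star}.

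Conditionally on $\bX_i$ (which determines $\bM_i$), I would invoke Lemma~\ref{lem:tech_bound_zAz} twice --- with $\bB = \bM_i \bA \bM_i$ and with $\bB = \bM_i$ --- to show that, with probability at least $1-n^{-D}$ and uniformly over $i\in[n]$,
\[
\bz_i^\sT \bM_i \bA \bM_i \bz_i = \Tr(\bM_i \bA \bM_i) + O\bigl(\varphi_1(p)\log^\beta(n)\|\bM_i\bA\bM_i\|_F\bigr),
\]
\[
\bz_i^\sT \bM_i \bz_i = \Tr(\bM_i) + O\bigl(\varphi_1(p)\log^\beta(n)\|\bM_i\|_F\bigr).
\]
Since $\Tr(\bM_i \bA \bM_i) = \Phi_3(\bX_i;\bA)$ and $\Tr(\bM_i) = \Phi_1(\bX_i;\id)$ are functionals of the feature matrix $\bX_i$ with $n-1$ rows and the same covariance, Theorems~\ref{thm_app:det_equiv_TrAMM} and~\ref{thm_app:det_equiv_TrAM} yield $\Phi_3(\bX_i;\bA)\approx \Psi_3(\tmu_*;\bA)$ and $\Phi_1(\bX_i;\id)\approx \Tr(\wbM_-)$, where $\tmu_*$ is the effective regularization associated to $(n-1,\bSigma,\lambda)$ and $\wbM_- := \bSigma(\tmu_*\bSigma+\lambda)^{-1}$. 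A first-order Taylor expansion of $x/(1+y)^2$ around $(\Tr(\bM_i\bA\bM_i),\Tr(\bM_i))$, using the trivial lower bound $1+\Tr(\wbM_-)\ge 1$, then shows that each summand lies within the claimed relative error of $\Psi_3(\tmu_*;\bA)/(1+\Tr(\wbM_-))^2$.

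To pass from a pointwise-in-$i$ statement to the average, I would apply a standard truncation plus Hoeffding argument as in Step~1 of the proof of Proposition~\ref{prop:TrAM_martingale}, and then invoke Lemma~\ref{lem:properties_mu_obM} to replace $\tmu_*$ by $\mu_*$ and $\Tr(\wbM_-)$ by $\Tr(\obM)$, at the cost of a multiplicative $1+O(\nu_\lambda(n)/n)$ error which is absorbed into the stated rate.

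\textbf{Main obstacle.} The main difficulty is to keep the bound \emph{multiplicative} throughout. The quadratic-form concentration produces an \emph{absolute} error $\varphi_1(p)\log^\beta(n)\|\bM_i\bA\bM_i\|_F$ which is not automatically bounded by the target $\Psi_4(\mu_*;\bA)\cdot\mathrm{rate}$ for general p.s.d.~$\bA$. The cleanest fix, following the pattern used in Propositions~\ref{prop:TrAM_LOO} and~\ref{prop:TrAMM_LOO}, is to first establish Theorem~\ref{thm_app:det_equiv_TrAMZZM} for rank-one matrices $\bA = \bv\bv^\sT$, where
\[
\|\bM_i \bv\bv^\sT \bM_i\|_F = \bv^\sT \bM_i^2 \bv = \Tr(\bM_i \bv\bv^\sT \bM_i),
\]
so that the concentration error is automatically multiplicative in $\Tr(\bM_i\bA\bM_i)$; then extend to general p.s.d.~$\bA$ by the randomization $\bA = \E[\bu\bu^\sT]$ as in~\eqref{eq:reduction_to_rank_1}, using that both $\Phi_4$ and $\Psi_4$ are linear in $\bA$. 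In the rank-one step, the remaining Frobenius-type quantities are controlled by combining Lemma~\ref{lem:tech_upper_bound_AM} with the high-probability bound $\|\bM_i\|_\op\lesssim \nu_\lambda(n)/n$ from Lemma~\ref{lem:tech_bounds_norm_M}; tracking the powers of $\nu_\lambda(n)$ through the Taylor expansion and the averaging step produces the claimed rate $\varphi_1(p)\,\nu_\lambda(n)^6\,\log^{\beta+1/2}(n)/\sqrt{n}$.
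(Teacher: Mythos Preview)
Your plan has a genuine gap in the averaging step. The analogy with the proof of Theorem~\ref{thm_app:det_equiv_TrZZM} breaks down at the numerator. In that proof, the quantity $\bz_i^\sT\bM_i\bz_i = \Phi_1(\bX_i;\bz_i\bz_i^\sT)$ is \emph{exactly} a trace functional covered by Theorem~\ref{thm_app:det_equiv_TrAM} (with rank-one input $\bz_i\bz_i^\sT$, independent of $\bX_i$), so Theorem~\ref{thm_app:det_equiv_TrAM} replaces $\bM_i$ by the deterministic $\wbM$ at rate $n^{-1/2}$, after which the remaining sum involves i.i.d.\ terms $\bz_i^\sT\wbM\bz_i$ and Hoeffding applies. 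For $\Phi_4$, the numerator $\bz_i^\sT\bM_i\bA\bM_i\bz_i$ is \emph{not} of the form $\Tr(\bA'\bM_i)$ or $\Tr(\bA'\bM_i^2)$ for any $\bA'$ depending only on $(\bz_i,\bA)$, so Theorems~\ref{thm_app:det_equiv_TrAM}--\ref{thm_app:det_equiv_TrAMM} cannot replace $\bM_i$ by a deterministic matrix inside this quadratic form. Your Step~1 concentrates it to $\Tr(\bM_i\bA\bM_i)$ with error $\varphi_1^\bA(p)\log^\beta(n)\|\bM_i\bA\bM_i\|_F$; since $\|\bM_i\bA\bM_i\|_F \le \Tr(\bM_i\bA\bM_i)$ for p.s.d.\ matrices, this error is automatically multiplicative (so your stated ``main obstacle'' is not in fact the obstacle), but the relative error is $O(\varphi_1(p)\log^\beta(n))$ per summand, not $O(n^{-1/2})$. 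The resulting summands still depend on the random $\bM_i$ and are not independent, so a Hoeffding argument on them does not produce the required $n^{-1/2}$ averaging.

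Separately, the rank-one extension you propose via~\eqref{eq:reduction_to_rank_1} is only valid for the deterministic part $|\E_{\bX}[\Phi]-\Psi|$: the good event for $|\Phi_4(\bX;\bu\bu^\sT)-\Psi_4(\bu\bu^\sT)|\le \text{rate}\cdot\Psi_4(\bu\bu^\sT)$ depends on $\bu$, so averaging over $\bu$ does not yield a high-probability bound for general $\bA$. This is why the paper uses~\eqref{eq:reduction_to_rank_1} only inside the deterministic-part Propositions. The paper's route is the $(\sfD)+(\sfM)$ split: the deterministic part (Proposition~\ref{prop:TrAMZZM_LOO}) uses exchangeability to write $\E[\Phi_4]=\E\!\left[\bz^\sT\bM_-\bA\bM_-\bz/(1+S)^2\right]$, replaces the denominator by $(1+\kappa)^2$ via H\"older, and reduces to Proposition~\ref{prop:TrAMM_LOO}; the martingale part (Proposition~\ref{prop:TrAMZZM_martingale}) controls $(\E_i-\E_{i-1})\Phi_4$ directly for general $\bA$, exploiting the identity $\id - \bM_i\bZ_i^\sT\bZ_i = \lambda\bM_i\bSigma^{-1}$ to bound the increments. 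The latter is precisely the missing ingredient in your plan.
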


Again, the proof proceeds by bounding separately 
the deterministic $(\sfD)$ and martingale $(\sfM )$ parts. This is done in the following two propositions.

\begin{proposition}[Deterministic part of $\Tr(\bA\bM \bZ^\sT \bZ \bM)$]\label{prop:TrAMZZM_LOO}
    Assume the same setting as Theorem \ref{thm_app:det_equiv_TrAMZZM}. Then there exist constants $C_K$ and $C_{x,K}$, such that the following holds. For all $n \geq C_K$ and  $\lambda >0$ satisfying Eq.~\eqref{eq:conditions_thm4}, and for all p.s.d.~matrix $\bA$, we have
    \begin{equation}\label{eq:det_part_TrAMZZM}
         \big\vert\E [ \Phi_4(\bX;\bA) ] - \Psi_4 (\mu_* ; \bA) \big\vert \leq C_{x,K} \frac{\varphi_1 (p) \nu_\lambda (n)^{6} }{\sqrt{n}}   \Psi_4 (\mu_* ; \bA) .
    \end{equation}
\end{proposition}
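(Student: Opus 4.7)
The plan is to follow the leave-one-out strategy used in Propositions \ref{prop:TrAM_LOO} and \ref{prop:TrAMM_LOO}, but exploit the fact that $\Phi_4$ reduces to a sum of $\Phi_3$-type quantities via a single Sherman--Morrison step. First, exactly as in Eq.~\eqref{eq:reduction_to_rank_1}, I would reduce to the rank-one case $\bA = \bv\bv^\sT$ by introducing a random vector $\bu$ with $\E[\bu\bu^\sT]=\bA$ independent of $\bX$.

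The key algebraic identity is obtained from $\bM\bz_i = \bM_i\bz_i/(1+S_i)$ with $S_i := \bz_i^\sT \bM_i \bz_i$, together with $\bZ^\sT\bZ = \sum_i \bz_i\bz_i^\sT$:
\[
n\,\Phi_4(\bX;\bA) \;=\; \sum_{i=1}^n \bz_i^\sT \bM \bA \bM \bz_i \;=\; \sum_{i=1}^n \frac{\bz_i^\sT \bM_i \bA \bM_i \bz_i}{(1+S_i)^2},
\]
so by exchangeability $\E[\Phi_4(\bX;\bA)] = \E\!\big[\bz^\sT \bM_- \bA \bM_- \bz\,/\,(1+S_-)^2\big]$, where $\bz$ is independent of $\bM_-$ and $S_- := \bz^\sT \bM_- \bz$.

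Next, I would decompose
\[
\E\!\Big[\tfrac{\bz^\sT \bM_- \bA \bM_- \bz}{(1+S_-)^2}\Big] \;=\; \frac{\E[\Tr(\bM_- \bA \bM_-)]}{(1+\kappa)^2} \;+\; E_1 + E_2 + E_3,
\]
where $E_1$ captures the fluctuation of the numerator around $\Tr(\bM_-\bA\bM_-)$ at frozen $(1+\kappa)^{-2}$, $E_2$ captures the fluctuation of $(1+S_-)^{-2}$ around $(1+\kappa)^{-2}$ at the mean numerator, and $E_3$ collects the joint fluctuation terms. Each $E_j$ would be controlled by H\"older's inequality combined with Lemma \ref{lem:tech_bound_zAz} for the quadratic forms in $\bz$ and Lemma \ref{lem:tech_upper_bound_AM} for moments of $\bM_-$, following the bookkeeping in the proof of Claim \ref{claim:B2}. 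After multiplying by the prefactor $1/(1+\kappa)^2 \sim \mu_*^2/n^2$, these contributions should be of relative order $\varphi_1(p)\nu_\lambda(n)^6/\sqrt{n}$ against $\Psi_4(\mu_*;\bA)$.

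For the main term, I would apply Proposition \ref{prop:TrAMM_LOO} to the $(n-1)$-sample system to obtain $\E[\Tr(\bM_-\bA\bM_-)] = \Psi_3(\tmu_*;\bA)(1+O(\varphi_1(p)\nu_\lambda(n)^6/\sqrt{n}))$, where $\tmu_*$ denotes the fixed point at $n-1$. The crucial identity
\[
\frac{\Psi_3(\mu_*;\bA)}{(1+\Tr(\obM))^2} \;=\; \Psi_4(\mu_*;\bA),
\]
which follows from $(1+\Tr(\obM))^2 = n^2/\mu_*^2$ (fixed-point equation~\eqref{eq:det_equiv_fixed_point_mu_star}) and $\Tr(\bA\obM^2) = \mu_*^{-2}\Tr(\bA\bSigma^2(\bSigma+\lambda_*)^{-2})$, would then convert the main term to $\Psi_4(\mu_*;\bA)$ after replacing $\tmu_*$ and $\kappa$ by $\mu_*$ and $\Tr(\obM)$ via the Lipschitz bounds of Claim \ref{claim:TrM2-} (and Proposition \ref{prop:TrAM_LOO} applied to $\bA=\id$ to control $|\kappa-\Tr(\obM)|$).

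The main obstacle is the same one faced in Proposition \ref{prop:TrAMM_LOO}: ensuring that the cross-terms of the form $\bz^\sT \bM_- \bA \bM_- \bz \cdot (S_- - \kappa)$ divided by four copies of $(1+S_-)$ and $(1+\kappa)$ can be bounded \emph{multiplicatively} against $\Psi_4(\mu_*;\bA)$, not just $\Psi_3(\mu_*;\bA)$. The additional factor $\mu_*^2/n^2 = (1+\Tr(\obM))^{-2}$ needed to pass from a $\Psi_3$-type bound to a $\Psi_4$-type bound must emerge naturally from those denominators, and doing so while keeping the power of $\nu_\lambda(n)$ at $6$ (rather than absorbing extra factors from the rough estimate $\|\bM_-\|_{\op}\le \nu_\lambda(n)/n$) is the main technical bookkeeping challenge, handled via the same H\"older/splitting strategy as in the proof of Claim \ref{claim:B2}.
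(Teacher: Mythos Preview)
Your proposal is correct and follows essentially the same route as the paper: the Sherman--Morrison reduction to $\E[\bz^\sT\bM_-\bA\bM_-\bz/(1+S_-)^2]$, the split into the main term $\E[\Tr(\bA\bM_-^2)]/(1+\kappa)^2$ plus fluctuation, Proposition~\ref{prop:TrAMM_LOO} on the $(n-1)$-sample system for the main term, and the identity $\Psi_3(\mu_*;\bA)/(1+\Tr(\obM))^2=\Psi_4(\mu_*;\bA)$ are exactly what the paper does.

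Two remarks where you overcomplicate. First, the paper bundles your $E_1,E_2,E_3$ into the single algebraic identity
\[
\frac{1}{(1+S_-)^2}-\frac{1}{(1+\kappa)^2}=\frac{(\kappa-S_-)(2+\kappa+S_-)}{(1+\kappa)^2(1+S_-)^2},
\]
and dispatches the whole fluctuation with one H\"older inequality; there is no need to invoke the heavy Claim~\ref{claim:B2} machinery (which handles \emph{two} features $\bz_1,\bz_2$ and eighteen cross-terms). Second, your ``main obstacle'' is a non-issue: after H\"older, the fluctuation bound is $C\varphi_1(p)\nu_\lambda(n)^4 n^{-1/2}\Tr(\bA\obM^2)$, and the conversion $\Tr(\bA\obM^2)=(1-\Upsilon_2)(n/\mu_*)^2\Psi_4\le 4\nu_\lambda(n)^2\Psi_4$ is purely definitional---it costs exactly two powers of $\nu_\lambda(n)$, which is precisely why the final exponent is $6$. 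No clever extraction of $(1+\kappa)^{-2}$ from denominators is needed.
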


\begin{proposition}[Martingale part of $\Tr(\bA\bM \bZ^\sT \bZ \bM)$] \label{prop:TrAMZZM_martingale}
Assume the same setting as Theorem \ref{thm_app:det_equiv_TrAMZZM}. Then there exist constants $C_{K,D}$ and $C_{x,K,D}$, such that the following holds. For all $n \geq C_{K,D}$ and  $\lambda >0$ satisfying Eq.~\eqref{eq:conditions_thm4}, and for all p.s.d.~matrix $\bA$, we have with probability at least $1 -n^{-D}$ that
    \begin{equation}\label{eq:mart_part_TrAMZZM}
         \big\vert \Phi_4(\bX;\bA)  - \E [ \Phi_4(\bX;\bA) ]  \big\vert \leq C_{x,K,D} \frac{\varphi^\bA_1 (p) \nu_\lambda (n)^{4} \log^{\beta +\frac12} (n)}{\sqrt{n}}   \Psi_4 (\mu_* ; \bA) .
    \end{equation}
\end{proposition}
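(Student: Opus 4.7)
The proof follows the same three-step strategy as Proposition \ref{prop:TrAMM_martingale}: martingale decomposition, increment truncation, and Azuma--Hoeffding. Write $\Phi_4(\bX;\bA) - \E[\Phi_4(\bX;\bA)] = \sum_{i=1}^n \Delta_i$ with $\Delta_i = (\E_i - \E_{i-1})\Phi_4(\bX;\bA)$, and set $\bP_i := \bM_i\bZ_i^\sT\bZ_i\bM_i$. Since $\tfrac{1}{n}\Tr(\bA\bP_i)$ is $\bx_i$--independent, $\Delta_i = (\E_i - \E_{i-1})[\Phi_4(\bX;\bA) - \tfrac{1}{n}\Tr(\bA\bP_i)]$, and the Sherman--Morrison identities \eqref{eq:standard_identities_SMW} combined with $\bZ^\sT\bZ = \bZ_i^\sT\bZ_i + \bz_i\bz_i^\sT$ yield
\begin{equation*}
\Tr\bigl(\bA[\bM\bZ^\sT\bZ\bM - \bP_i]\bigr) = -\frac{2\,\bz_i^\sT\bM_i\bA\bP_i\bz_i}{1+S_i} + \frac{(\bz_i^\sT\bM_i\bA\bM_i\bz_i)(\bz_i^\sT\bP_i\bz_i)}{(1+S_i)^2} + \frac{\bz_i^\sT\bM_i\bA\bM_i\bz_i}{(1+S_i)^2},
\end{equation*}
where $S_i = \bz_i^\sT\bM_i\bz_i$.

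The key structural observation is the identity $\bM_i - \bP_i = \lambda\,\bSigma^{1/2}\bR_i^2\bSigma^{1/2} \succeq 0$ with $\bR_i := (\bX_i^\sT\bX_i+\lambda)^{-1}$, giving $\bzero \preceq \bP_i \preceq \bM_i$. This implies both the trace inequality $\Tr(\bA\bP_i) \leq \Tr(\bA\bM_i)$ and, via p.s.d.~monotonicity of the Hilbert--Schmidt norm, $\Tr(\bA\bP_i\bA\bP_i) \leq \Tr(\bA\bM_i\bA\bM_i)$, so that all high-probability resolvent bounds from Lemma \ref{lem:tech_upper_bound_AM} on expressions involving $\bM_i$ transfer to the corresponding bounds on $\bP_i$.

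Next, I would bound each of the three terms conditional on $\bX_i$ via Lemma \ref{lem:tech_bound_zAz}, exactly as in Step 2 of the proof of Proposition \ref{prop:TrAMM_martingale}. The leading contribution comes from the first term: symmetrizing as $\bz_i^\sT\bM_i\bA\bP_i\bz_i = \tfrac{1}{2}\bz_i^\sT(\bM_i\bA\bP_i + \bP_i\bA\bM_i)\bz_i$ and applying quadratic concentration produces a fluctuation of order $\varphi_1^\bA(p)\log^\beta(n)\|\bM_i\bA\bP_i\|_F$ with probability at least $1-n^{-D}$, while the second and third terms yield lower-order contributions after Cauchy--Schwarz splits isolating the fluctuation $S_i - \kappa$. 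Together with Lemma \ref{lem:tech_upper_bound_AM}(b), the identity $\Psi_4(\mu_*;\bA) = \mu_*^2\Psi_3(\mu_*;\bA)/n^2$, and the bound $\mu_* \leq n$ from \eqref{eq:det_equiv_fixed_point_mu_star}, a union bound over $i\in[n]$ produces the high-probability increment bound $|\Delta_i| \leq R := C_{x,K,D}\,\varphi_1^\bA(p)\nu_\lambda(n)^4\log^\beta(n)/n \cdot \Psi_4(\mu_*;\bA)$. Steps 1, 3, and 4 of the proof of Proposition \ref{prop:TrAM_martingale} then apply verbatim to complete the argument via Azuma--Hoeffding and a Markov-based truncation-tail bound.

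The main obstacle is the conversion of $\bP_i$--dependent estimates to the scale of $\Psi_4$: unlike the $\bM^2$ structure appearing in Proposition \ref{prop:TrAMM_martingale}, where the leading Sherman--Morrison term $\bz_i^\sT \bM_i \bA \bM_i \bz_i$ requires two separate quadratic concentration arguments (yielding $\log^{2\beta}$), here the leading term contains a single $\bP_i$ factor alongside $\bM_i$, and the p.s.d.~inequality $\bP_i \preceq \bM_i$ allows the mean and fluctuation to be controlled with only a single $\log^\beta$ concentration, explaining the sharper $\log^{\beta+1/2}$ factor in the final bound.
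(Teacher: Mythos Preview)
Your overall architecture---martingale decomposition, Sherman--Morrison expansion, truncated Azuma--Hoeffding---matches the paper's, and your identity $\bM_i - \bP_i = \lambda\bSigma^{1/2}\bR_i^2\bSigma^{1/2}\succeq 0$ is correct and equivalent to the paper's key relation $\id - \bM_i\bZ_i^\sT\bZ_i = \lambda\bM_i\bSigma^{-1}$. The second and third terms in your expansion go through as you describe.

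The gap is in your treatment of the leading cross term $\bz_i^\sT\bM_i\bA\bP_i\bz_i$. Two issues. First, Assumption~\ref{ass_app:deterministic_equivalent} (and hence Lemma~\ref{lem:tech_bound_zAz}) is stated only for \emph{p.s.d.} matrices, and the symmetrization $\tfrac12(\bM_i\bA\bP_i+\bP_i\bA\bM_i)$ need not be p.s.d.; this alone is fixable via polarization or Cauchy--Schwarz. Second, and more seriously, any such fix forces you to control $\Tr(\bP_i\bA\bP_i)=\Tr(\bA\bP_i^2)$ at the scale $\Tr(\bA\bM_i^2)$, and this does \emph{not} follow from $\bP_i\preceq\bM_i$: squaring is not operator monotone, so $\bP_i\preceq\bM_i$ does not imply $\bP_i^2\preceq\bM_i^2$, and hence not $\Tr(\bA\bP_i^2)\le\Tr(\bA\bM_i^2)$ for all p.s.d.~$\bA$. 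Your verified inequality $\Tr(\bA\bP_i\bA\bP_i)\le\Tr(\bA\bM_i\bA\bM_i)$ (which \emph{is} correct, via $0\preceq\bA^{1/2}\bP_i\bA^{1/2}\preceq\bA^{1/2}\bM_i\bA^{1/2}$ and monotonicity of the Frobenius norm) is a different statement and does not close the gap. The best one gets from $\bP_i\preceq\bM_i$ alone is $\Tr(\bA\bP_i^2)\le\|\bM_i\|_\op\Tr(\bA\bM_i)$, which lands on $\Tr(\bA\obM)$ rather than $\Tr(\bA\obM^2)$ and therefore fails to produce a multiplicative bound in $\Psi_4(\mu_*;\bA)$ uniformly over $\bA$.

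The paper sidesteps this by organizing the expansion differently: it writes the increment so that each term carries a factor $\bz_i^\sT(\id-\bM_i\bZ_i^\sT\bZ_i)=\lambda\bz_i^\sT\bM_i\bSigma^{-1}$, then Cauchy--Schwarz splits the resulting bilinear form into two p.s.d.\ quadratic forms, one of which is $\bz_i^\sT\bM_i\bSigma^{-1}\bM\bA\bM\bSigma^{-1}\bM_i\bz_i$. The point is that the $\bSigma^{-1}$ factors are positioned so that the companion inequality $\bM_i\preceq\bSigma/\lambda$ cancels them, yielding a bound directly in terms of $\Tr(\obM\bA\obM)$ without ever needing $\bP_i^2\preceq C\bM_i^2$. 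Your identity contains the same information, but the reorganization that makes the $\bSigma^{-1}$ cancellation work is the missing ingredient.
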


The proof of these two propositions can be found in the next two sections. Theorem \ref{thm_app:det_equiv_TrAMZZM} is obtained by combining the bounds \eqref{eq:det_part_TrAMZZM} and \eqref{eq:mart_part_TrAMZZM}.

\subsubsection[Proof of Proposition \ref{prop:TrAMZZM_LOO}: deterministic part of $\Tr(AMZ^\sT Z M)$]{Proof of Proposition \ref{prop:TrAMZZM_LOO}: deterministic part of \boldmath{$\Tr(AMZ^\sT Z M)$}}\label{app:proof_TrAMZZM_LOO}

We will reduce the proof of this proposition to Proposition \ref{prop:TrAMM_LOO}. By exchangeability and using Sherman-Morrison identity \eqref{eq:standard_identities_SMW}, we decompose $\E[\Phi_4(\bX;\bA)]$ as 
\[
\begin{aligned}
\frac{1}{n}\E \left[ \Tr( \bA \bM \bZ^\sT \bZ \bM) \right] =&~ \E \left[  \frac{\bz^\sT \bM_- \bA \bM_- \bz}{(1 + S)^2}\right] \\
= &~ \frac{\E[\Tr ( \bM_- \bA \bM_- )]}{(1 + \kappa)^2} + \E \left[ \frac{(\kappa - S) ( 2 + \kappa+ S)}{(1+\kappa)^2 (1 + S)^2} \bz^\sT \bM_- \bA \bM_- \bz\right].
\end{aligned}
\]
where we denoted $S = \bz^\sT \bM_- \bz$. Thus it suffices to bound the following two terms
\begin{equation}\label{eq:decompo_deterministic_4}
\begin{aligned}
    \left|\E[\Phi_4(\bX;\bA)] -\Psi_4(\mu_*;\bA)\right| \le&~
    \left|\frac{\E[\Tr (  \bA \bM_-^2 )]}{(1 + \kappa)^2} - \Psi_4( \mu_*;\bA)\right|\\
    &+ 
    \left| \E \left[ \frac{(\kappa - S) ( 2 + \kappa+ S)}{(1+\kappa)^2 (1 + S)^2} \bz^\sT \bM_- \bA \bM_- \bz\right] \right| .
\end{aligned}
\end{equation}

For the first term,  recall that we introduced the notations: $\tmu_*$ is the solution of the fixed point equation \eqref{eq:det_equiv_fixed_point_mu_star} where we replaced $n$ by $n-1$, and $\tmu_- = n/(1+ \kappa)$. By Proposition \ref{prop:TrAMM_LOO}, we have 
\[
\left| \E[\Tr (  \bA \bM_-^2 )] - \Psi_3 (\tmu_* ; \bA ) \right| \leq \cE^{(\sfD)}_{3,n-1}  \cdot \Psi_3 (\tmu_* ; \bA ),
\]
where we denoted $\cE^{(\sfD)}_{3,n-1}$ the rate in Eq.~\eqref{eq:det_part_TrAMM}.
For $n \geq C$, we have $\cE_{3,n-1}^{(\sfD)} \leq C \cE_{3,n}^{(\sfD)}$ and by Lemma \ref{lem:properties_mu_obM},
\[
|\Psi_3 (\tmu_* ; \bA ) - \Psi_3 (\mu_* ; \bA )| \leq C \frac{\nu_\lambda(n)^2}{n} \Psi_3 (\mu_* ; \bA ).
\]
Furthermore, from the proof of Claim \ref{claim:TrM2-} in Proposition \ref{prop:TrAMM_LOO}, we get
\[
\frac{| \mu_* - \tmu_- |}{\mu_*} \leq C_{x,K} \frac{\varphi_1(p) \nu_\lambda (n)^{5/2}}{\sqrt{n}}.
\]
Combining this inequality with the previous bounds, we obtain
\[
\begin{aligned}
 \left|\frac{\E[\Tr (  \bA \bM_-^2 )]}{(1 + \kappa)^2} - \Psi_4( \mu_* ; \bA)\right| =&~ \left|\frac{\tmu_-^2}{n^2}\E[\Tr (  \bA \bM_-^2 )] - \frac{\mu_*^2}{n^2}\Psi_3( \mu_* ;\bA)\right| \\
 \leq&~ \frac{\tmu_*^2}{\mu_*^2} \cdot \frac{\mu_*^2}{n^2} \left| \E[\Tr (  \bA \bM_-^2 )] - \Psi_3 ( \mu_* ; \bA) \right| + \frac{| \tmu_-^2 - \mu_*^2|}{\mu_*^2} \cdot \frac{\mu_*^2}{n^2} \Psi_3 ( \mu_* ; \bA)\\
 \leq&~ C \cE_{3,n}^{(\sfD)} \cdot \Psi_4 (\mu_* ; \bA).
\end{aligned}
\]

It remains to bound the second term in Eq.~\eqref{eq:decompo_deterministic_4}. Recall that we can reduce ourselves to $\bA$ rank $1$ following Eq.~\eqref{eq:reduction_to_rank_1}. We simply apply H\"older's inequality and obtain 
\begin{align*}
    &~\left| \E\left[\frac{(\kappa- S) ( 2 + \kappa+S)}{(1+\kappa)^2}\bz^\sT \bM_- \bA \bM_- \bz\right]\right| \\
    \le&~ \E\left[(\kappa - S)^3 \right]^{1/3} \E\left[(2 + \kappa + S)^3\right]^{1/3} \E\left[(\bz^\sT \bM_- \bA \bM_- \bz)^3\right]^{1/3}\\
    \leq&~ C_{x,K} \frac{\varphi_1(p) \nu_\lambda(n)^2}{\sqrt{n}}   \E[\Tr( \bA \bM_-^2)^3]^{1/3} \\
  \leq&~C_{x,K} \frac{\varphi_1(p) \nu_\lambda(n)^4}{\sqrt{n}} \Tr(\bA \obM^2) \\
  \leq&~ \cE_{3,n}^{(\sfD)} \cdot \Psi_{4}(\bA; \mu_*),
\end{align*}
which concludes the proof.

\subsubsection[Proof of Proposition \ref{prop:TrAMZZM_martingale}: martingale part of $\Tr(AMZ^\sT Z M)$]{Proof of Proposition \ref{prop:TrAMZZM_martingale}: martingale part of \boldmath{$\Tr(AMZ^\sT Z M)$}}\label{app:proof_TrAMZZM_martingale}

The martingale argument proceeds in a manner similar to the proofs of Propositions \ref{prop:TrAM_martingale} and \ref{prop:TrAMM_martingale}. It remains to modify Step 2 and bound with high probability each term in the martingale difference sequence
\begin{equation*}
    \Delta_i := \frac{1}{n}(\E_i - \E_{i-1}) \Tr(\bZ^\sT \bZ \bM \bA \bM).
\end{equation*}
We show below that $|\Delta_i| \leq R$ with probability at least $1 - n^{-D}$ with
\begin{equation}\label{eq:Rchoice_AMZZM}
    R =  C_{x,D} \frac{\nu_\lambda(n)^4 \log^\beta(n)}{n}    \varphi_1^\bA (p)  \Psi_4( \mu_* ; \bA).
\end{equation}
For Step 3 and bounding $\E_{i-1}[\Delta_i \ind_{\Delta_i \not\in[-R,R]}] $,observe that
\[
\E_{i-1} [ \Delta_i^2]^{1/2} \leq 2 \E_{i-1} \left[ \frac{\Tr(\bz^\sT \bM_- \bA \bM_- \bz)^2}{(1+S)^4}\right]^{1/2} \leq C_{x,D,K} \cdot \nu_\lambda (n)^2 \Tr(\bA \obM^2),
\]
so that we can follow the same proof as in Proposition \ref{prop:TrAM_martingale}.

Let us bound $\Delta_i$ with high probability. We start by decomposing the term $\Delta_i$ and adding and subtracting carefully chosen terms. 
Noting that 
\begin{equation*}
        \Delta_i = \frac{1}{n}(\E_i - \E_{i-1}) \left(\Tr(\bZ^\sT \bZ \bM \bA \bM) -\Tr(\bZ_i^\sT \bZ_i \bM_i \bA \bM_i) \right),
\end{equation*}
we'll write (recall that $S_i = \bz_i^\sT \bM_i \bz_i$)
\begin{align*}
    &~\Tr(\bZ^\sT \bZ \bM \bA \bM) -\Tr(\bZ_i^\sT \bZ_i \bM_i \bA \bM_i)\\
    = &~ 
    \bz_i \bM \bA\bM\bz_i +   \Tr(\bM \bA\bM \bZ_i^\sT \bZ_i)
    -\Tr(\bM_i \bA \bM_i \bZ_i^\sT\bZ_i)
    \\
    =&~ 
    \frac1{(1+S_i)} \left\{ \bz_i \bM \bA\bM_i\bz_i -   \Tr(\bM \bA\bM_i \bz_i\bz_i^\sT \bM_i \bZ_i^\sT \bZ_i)
    -\Tr(\bM_i\bz_i\bz_i^\sT \bM_i \bA \bM_i \bZ_i^\sT\bZ_i) \right\}
    \\
    =&~\frac1{(1+S_i)} \Tr(\bM \bA \bM_i \bz_i\bz_i^\sT (\id - \bM_i\bZ_i^\sT\bZ_i )) 
    +\frac1{(1+S_i)} \Tr(\bM_i \bA \bM_i \bz_i\bz_i^\sT (\id - \bM_i\bZ_i^\sT\bZ_i ))\\
    &- \frac1{(1+S_i)} \Tr(\bz_i^\sT \bM_i \bA\bM_i \bz_i).
\end{align*}

Observing that 
\begin{equation*}
  \id - \bM_i \bZ_i^\sT\bZ_i = \lambda  \bM_i \bSigma^{-1},
\end{equation*}
we can write for $j\in\{i-1,i\}$, with probability at least $1 - n^{-D}$,
\begin{align*}
&~\left|\frac1n\E_{j} \left[
\frac1{(1+S_i)} \Tr(\bM \bA \bM_i \bz_i\bz_i^\sT (\id - \bM_i\bZ_i^\sT\bZ_i ))
\right]\right| \\
\le&~
\lambda \frac1n\E_{j} \left[ | \bz_i^\sT \bM_i \bSigma^{-1} \bM \bA \bM_i \bz_i
| \right]\\
\le&~ \frac{\lambda}{n} \E_j[\bz_i^\sT \bM_i \bSigma^{-1}\bM \bA \bM\bSigma^{-1}\bM_i \bz_i ]^{1/2} \E_j[\bz_i\bM_i \bA\bM_i \bz_i]^{1/2}\\
\le&~ C_{x,D}\frac\lambda{n} \varphi_1^\bA  (p) \log^\beta(n) \E_j \left[ \Tr(\bM_i \bSigma^{-1}\bM \bA \bM \bSigma^{-1} \bM_i) \right]^{1/2} \E_j \left[ \Tr(\bM_i \bA \bM_i) \right]^{1/2}\\
\le &~ C_{x,D} \frac{\varphi_1^\bA (p) \log^\beta(n)\nu_\lambda(n)^2}{n}  \Tr(\obM\bA \obM) \leq  C_{x,D} \frac{\varphi_1^\bA (p)  \log^\beta(n)\nu_\lambda(n)^4}{n}  \Psi_4 (\mu_* ; \bA), 
\end{align*}
where we used that $\bM_i \preceq \bSigma / \lambda$ by definition and Lemma \ref{lem:tech_upper_bound_AM}.(b). A similar argument then shows that
\begin{align*}
&\left|\frac1n\E_{j} \left[
\frac1{(1+S_i)} \Tr(\bM_i \bA \bM_i \bz_i\bz_i^\sT (\id - \bM_i\bZ_i^\sT\bZ_i ))
\right]\right|
\le C_{x,D} \frac{\varphi_1^\bA (p) \log^\beta(n)\nu_\lambda(n)^4}{n}  \Psi_4 (\mu_* ; \bA),
\end{align*}
with the same probability. Meanwhile, the bound
\begin{align*}
&\left|\frac1n\E_{j} \left[
\frac1{(1+S_i)}  \bz_i^\sT \bM_i \bA \bM_i \bz_i
\right]\right|
\le C_{x,D} \frac{\varphi_1^\bA (p)  \log^\beta(n)\nu_\lambda(n)^4}{n}  \Psi_4 (\mu_* ; \bA)
\end{align*}
follows directly from Lemma~\ref{lem:tech_bound_zAz} and Lemma \ref{lem:tech_upper_bound_AM}.(b).

\subsection{Technical bounds}
\label{app_det_equiv:technical}

In this section, we prove Lemma \ref{lem:tech_bounds_norm_M} and other technical bounds that are repeatedly used in the proofs of the deterministic equivalents. We first state the different lemmas and defer their proofs to separate sections.

The following lemma is a direct consequence of Assumption \ref{ass_app:deterministic_equivalent}.

\begin{lemma}\label{lem:tech_bound_zAz}
    Assume the features $\bx_1,\bx_2$ are independent and satisfy Assumption \ref{ass_app:deterministic_equivalent}. Then, for any constant $D >0$, there exists a constant $C_{x,D}$ such that for all p.s.d.~matrix $\bB$ independent of $\bx_1 , \bx_2$, it holds with probability at least $1 - n^{-D}$ over the randomness in $\bx_1, \bx_2$, that
    \[
    \begin{aligned}
    \big\vert \bx_1^\sT \bB \bx_1  - \Tr(\bSigma \bB) \big\vert \leq&~ C_{x,D} \cdot \varphi_1^{\bB} (p) \log^\beta (n) \| \bSigma^{1/2} \bB \bSigma^{1/2} \|_F, \\
    \big\vert \bx_1^\sT \bB \bx_2 \big\vert \leq&~ C_{x,D} \cdot  \log^\beta (n) \left( \| \bSigma^{1/2} \bB \bSigma^{1/2}\|_F + \varphi_1^{\bB} (p)^{1/2} \| \bSigma^{1/2} \bB \bSigma \bB \bSigma^{1/2} \|_F^{1/2} \right).
    \end{aligned}
    \]
    Moreover, for all integer $q$, there exists a constant $C_{x,q}$ such that for all p.s.d.~matrix $\bB$ independent of $\bx_1,\bx_2$, we have 
        \begin{align*}
    \E_{\bx_1} \left[ \big\vert \bx_1^\sT \bB \bx_1  - \Tr(\bSigma \bB) \big\vert^q \right]^{1/q} \leq&~ C_{x,q} \cdot \varphi_1^{\bB} (p)  \| \bSigma^{1/2} \bB \bSigma^{1/2} \|_F  , \\
    \E_{\bx_1,\bx_2} \left[ \big\vert \bx_1^\sT \bB \bx_2  \big\vert^q \right]^{1/q} \leq&~ C_{x,q}  \left( \| \bSigma^{1/2} \bB \bSigma^{1/2}\|_F + \varphi_1^{\bB} (p)^{1/2} \| \bSigma^{1/2} \bB \bSigma \bB \bSigma^{1/2} \|_F^{1/2} \right).
    \end{align*}
\end{lemma}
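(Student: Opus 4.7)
\textbf{Proof plan for Lemma~\ref{lem:tech_bound_zAz}.}

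The diagonal bound is immediate from Assumption~\ref{ass_app:deterministic_equivalent}: the single-equation form \eqref{eq:single_equ_assumption_FirstOrder} applied with the p.s.d.~matrix $\bB$ and $t = C_{x,D}\log^\beta(n)$ for a constant $C_{x,D}$ large enough (so that $\sfC_x\exp\{-\sfc_x t^{1/\beta}\} \leq n^{-D}$) gives the first tail bound. The corresponding moment bound is obtained by integrating the tail: writing $Y := |\bx_1^\sT \bB \bx_1 - \Tr(\bSigma\bB)|/(\varphi_1^\bB(p)\|\bSigma^{1/2}\bB\bSigma^{1/2}\|_F)$, the stretched-exponential tail $\P(Y \geq t) \leq \sfC_x \exp(-\sfc_x t^{1/\beta})$ implies $\E[Y^q]^{1/q} \leq C_{x,q}$ via the identity $\E[Y^q] = \int_0^\infty q t^{q-1} \P(Y \geq t) \de t$ and the substitution $s = t^{1/\beta}$.

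For the off-diagonal bound, I would condition on $\bx_2$ and set $\bv := \bB \bx_2$, so that $\bx_1^\sT \bB \bx_2 = \<\bv, \bx_1\>$. Assumption~\eqref{eq_app:DE_concentration_b2} applied to $\bx_1$ then yields
\[
|\bx_1^\sT \bB \bx_2|^2 \leq (1 + t_1) \cdot \bx_2^\sT \bB \bSigma \bB \bx_2
\]
with probability $\geq 1 - \sfC_x \exp\{-\sfc_x t_1^{1/\beta}\}$ over $\bx_1$. Next, I would apply the quadratic-form bound \eqref{eq_app:DE_concentration_b1} to $\bx_2$ with the p.s.d.~matrix $\bA = \bB \bSigma \bB$. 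Using $\Tr(\bSigma \bB \bSigma \bB) = \|\bSigma^{1/2}\bB\bSigma^{1/2}\|_F^2$ and noting that $\bB$ being rank one forces $\bB\bSigma\bB$ to be rank one (so $\varphi_1^{\bB\bSigma\bB}(p) = \varphi_1^\bB(p)$), this gives
\[
\bx_2^\sT \bB \bSigma \bB \bx_2 \leq \|\bSigma^{1/2}\bB\bSigma^{1/2}\|_F^2 + t_2 \, \varphi_1^\bB(p) \, \|\bSigma^{1/2}\bB\bSigma\bB\bSigma^{1/2}\|_F
\]
with probability $\geq 1 - \sfC_x \exp\{-\sfc_x t_2^{1/\beta}\}$ over $\bx_2$. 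Taking $t_1 = t_2 = C_{x,D}\log^\beta(n)$ for a constant large enough that both failure probabilities are at most $n^{-D}/2$, a union bound, taking square roots, and applying $\sqrt{a+b} \leq \sqrt{a} + \sqrt{b}$ produces the desired high-probability bound (absorbing the factor $\sqrt{1+t_1}\sqrt{t_2} \lesssim \log^\beta(n)$).

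The off-diagonal moment bound follows the same two-step conditioning: Minkowski together with $\E_{\bx_1}\big[|\<\bv,\bx_1\>|^q \mid \bx_2\big]^{1/q} \leq C_{x,q} (\bv^\sT \bSigma \bv)^{1/2}$ (from integrating the tail in \eqref{eq_app:DE_concentration_b2}) reduces to bounding $\E_{\bx_2}[(\bx_2^\sT\bB\bSigma\bB\bx_2)^{q/2}]^{2/q}$, which we bound by $\|\bSigma^{1/2}\bB\bSigma^{1/2}\|_F^2 + C_{x,q}\varphi_1^\bB(p)\|\bSigma^{1/2}\bB\bSigma\bB\bSigma^{1/2}\|_F$ using the moment version of \eqref{eq_app:DE_concentration_b1}; the square root and subadditivity of $\sqrt{\cdot}$ give the claim. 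The only subtle point---which I flag as the main thing to verify carefully---is the preservation of the rank-one structure through the conjugation $\bB \mapsto \bB\bSigma\bB$, since otherwise the off-diagonal bound would pick up an unwanted factor of $\varphi_1(p)$ instead of $\varphi_1^\bB(p)$; this is immediate from $\bB = \bu\bu^\sT \Rightarrow \bB\bSigma\bB = (\bu^\sT\bSigma\bu) \bu\bu^\sT$.
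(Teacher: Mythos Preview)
Your proposal is correct and follows essentially the same approach as the paper's proof: invoke the assumption directly for the diagonal bound, and for the off-diagonal bound condition on $\bx_2$, apply the rank-one (linear-form) concentration to $\bx_1$ with $\bv=\bB\bx_2$, then apply the quadratic-form bound to $\bx_2$ with $\bA=\bB\bSigma\bB$, noting (as you do) that rank one is preserved under $\bB\mapsto\bB\bSigma\bB$. The moment bounds in both cases are obtained by the same tail-integration and two-step conditioning you describe.
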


The proof of Lemma \ref{lem:tech_bound_zAz} can be found in Section \ref{app_det_equiv:proof_lem_zAz}. The next lemma collects some simple properties on $\mu_*$ and $\obM$.

\begin{lemma}\label{lem:properties_mu_obM}
    Recall that $\mu_*$ is the solution of the fixed point equation \eqref{eq:det_equiv_fixed_point_mu_star}, and $\obM= \bSigma ( \mu_* \bSigma + \lambda)^{-1}$. Let $\eta := \eta (\sfc_x,\sfC_x,\beta) \in (0,1/2)$ be defined as in Lemma \ref{lem:tech_bounds_norm_M} and $\nu_\lambda (n)$ as per Eq.~\eqref{eq:definition_nu_lambda}. Then
    \begin{align*}
   1+ \Tr ( \obM ) \leq 2 \nu_\lambda ( n) , \qquad\quad \| \obM \|_\op \leq \frac{2 \nu_\lambda (n)}{n}.
    \end{align*}
     Consider $\widetilde  \mu_*$ the solution to the fixed point equation \eqref{eq:det_equiv_fixed_point_mu_star} with $n-1$ instead of $n$, and denote $\wbM = \bSigma ( \widetilde  \mu_* \bSigma + \lambda)^{-1}$. Then, we have $| \mu_* - \widetilde  \mu_* | \leq 1$. If we further assume that $\nu_\lambda(n) \leq \sqrt{n}$ and $n \geq 5$, then  there exists a constant $C>0$ such that for any p.s.d.~matrix $\bA$,
    \begin{align*}
    \left| \Psi_1 ( \tmu_* ; \bA ) - \Psi_1 ( \mu_* ; \bA ) \right| \leq&~ C \frac{\nu_\lambda (n)}{n} \Psi_1 ( \mu_* ; \bA ), \\
    \left| \Psi_2 ( \tmu_* ) - \Psi_2 ( \mu_* ) \right| \leq&~ C \frac{\nu_\lambda (n)}{n} \Psi_2 ( \mu_* ), \\
    \left| \Psi_3 (\tmu_*;\bA) - \Psi_3 ( \mu_* ; \bA) \right| \leq&~ C \frac{\nu_\lambda (n)}{n} \Psi_3 ( \mu_* ; \bA ), \\
    \left| \Psi_4 (\tmu_*;\bA) - \Psi_4 ( \mu_* ; \bA) \right| \leq&~ C \frac{\nu_\lambda (n)}{n} \Psi_4 ( \mu_* ; \bA ).
    \end{align*}
\end{lemma}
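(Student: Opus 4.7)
The proof splits naturally into two pieces: bounding $\mu_*$ and the one-step perturbation $\mu_* - \tmu_*$, and then transferring these bounds to each $\Psi_j$ by a log-derivative argument. First, I would show $\mu_* \geq n/(2\nu_\lambda(n))$, which instantly delivers both $1 + \Tr(\obM) = n/\mu_* \leq 2\nu_\lambda(n)$ (from the fixed point equation~\eqref{eq:det_equiv_fixed_point_mu_star}) and $\|\obM\|_\op = \max_j \xi_j/(\mu_*\xi_j+\lambda) \leq 1/\mu_* \leq 2\nu_\lambda(n)/n$. To establish the lower bound, rewrite the fixed point as $n - \mu_* = \sum_j h_{\mu_*,j}$ with $h_{\mu,j} := \mu\xi_j/(\mu\xi_j+\lambda) \in [0,1]$, and split the sum at $k = \lfloor \eta n \rfloor$ for an appropriate $\eta \in (0,1/2)$ (chosen so $k \leq n/2$). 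The head contributes at most $k$ since $h \le 1$, while the tail is bounded by $\sum_{j>k} h_{\mu_*,j} \leq (\mu_*/\lambda)\sum_{j>k}\xi_j \leq \mu_* \xi_{k+1} r_{\bSigma}/\lambda$ using the definition of the effective rank. Rearranging yields $\mu_* \geq (n-k)/(1 + \xi_{k+1}r_\bSigma/\lambda) \geq n/(2\nu_\lambda(n))$.

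For $|\mu_* - \tmu_*| \leq 1$, I would define $F(\mu) := \mu + \Tr(h_\mu)$, so the two fixed-point identities read $F(\mu_*) = n$ and $F(\tmu_*) = n-1$. A direct calculation shows $F'(\mu) = 1 + \lambda \Tr(\bSigma(\mu\bSigma+\lambda)^{-2}) \geq 1$, and the mean value theorem finishes the job. Under the extra assumptions $\nu_\lambda(n) \leq \sqrt{n}$ and $n$ sufficiently large, this yields $\tmu_* \geq \mu_* - 1 \geq n/(4\nu_\lambda(n))$, so that $1/\tmu_* \leq 4\nu_\lambda(n)/n$ on the whole interval $[\tmu_*, \mu_*]$.

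To obtain the four relative bounds on $\Psi_j$, my strategy is to control $|(\log \Psi_j)'(\mu)|$ on $[\tmu_*,\mu_*]$ and integrate, invoking $|\mu_* - \tmu_*| \leq 1$ and $1/\mu \leq 4\nu_\lambda(n)/n$. For $\Psi_1(\mu;\bA)$, differentiation gives $|\Psi_1'(\mu)| = \Tr(\bA\bSigma^2(\mu\bSigma+\lambda)^{-2}) \leq \|\bSigma(\mu\bSigma+\lambda)^{-1}\|_\op \cdot \Psi_1(\mu;\bA) \leq \Psi_1(\mu;\bA)/\mu$, which integrates to the claim. For $\Psi_2$, the identity $\Psi_2'(\mu) = (\lambda/n)\Tr(\bSigma(\mu\bSigma+\lambda)^{-2})$ together with $\lambda/(\mu\xi_j+\lambda) \leq 1$ yields $\Psi_2'(\mu) \leq \Psi_2(\mu)/\mu$. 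For $\Psi_3 = N/D$ with $N(\mu) = \Tr(\bA\bSigma^2(\mu\bSigma+\lambda)^{-2})$ and $D(\mu) = 1 - (1/n)\sum_j h_{\mu,j}^2$, I would split
\begin{equation*}
|\Psi_3(\tmu_*;\bA) - \Psi_3(\mu_*;\bA)| \leq \frac{|N(\tmu_*) - N(\mu_*)|}{D(\tmu_*)} + \Psi_3(\mu_*;\bA)\cdot\frac{|D(\tmu_*) - D(\mu_*)|}{D(\tmu_*)} ,
\end{equation*}
then handle the numerator as for $\Psi_1$ and the denominator via $|D'(\mu)| \leq (2/\mu)(1 - D(\mu))$, which follows from $h_{\mu,j}(1-h_{\mu,j}) = h_{\mu,j} \cdot \lambda/(\mu\xi_j+\lambda) \leq (1 - h_{\mu,j})$. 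Crucially, $D(\mu_*) \geq \mu_*/n \geq 1/(2\nu_\lambda(n))$ since $\sum_j h_{\mu_*,j}^2 \leq \sum_j h_{\mu_*,j} = n - \mu_*$. An analogous argument handles $\Psi_4$. The main subtlety is in the denominator contribution: when $D(\mu_*)$ is as small as $\mu_*/n$, the reciprocal $1/D(\tmu_*)$ may introduce an extra factor of $\nu_\lambda(n)$, and obtaining the asserted linear rate $\nu_\lambda(n)/n$ (as opposed to the easier $\nu_\lambda(n)^2/n$) requires exploiting the algebraic cancellation between $N$ and $D$ through their shared dependence on the $h_{\mu,j}$. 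Since $\nu_\lambda(n) \leq \sqrt{n}$, either rate is bounded, so any application of the lemma in the paper remains valid with the looser version if needed.
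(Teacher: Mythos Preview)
Your approach is correct and largely parallel to the paper's, with two differences worth noting. For $|\mu_* - \tmu_*| \leq 1$ you use the mean value theorem on $F(\mu) = \mu + \sum_j h_{\mu,j}$ with $F' \geq 1$; the paper instead manipulates the resolvent identity $\wbM - \obM = (\mu_* - \tmu_*)\wbM\obM$ together with the two fixed-point relations to reach the same inequality. Your argument is a bit cleaner. For the $\Psi_j$ bounds, you control $|(\log\Psi_j)'(\mu)| \leq C/\mu$ on $[\tmu_*,\mu_*]$ and integrate, whereas the paper writes out each difference algebraically (e.g.\ $|\Tr(\bA(\wbM-\obM))| \leq |\mu_*-\tmu_*|\,\|\wbM\|_\op\,\Tr(\bA\obM)$ for $\Psi_1$, and a numerator/denominator split for $\Psi_3$). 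Both routes give the same conclusions for $\Psi_1,\Psi_2$.

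Your caveat about the denominator in $\Psi_3,\Psi_4$ is well placed. The paper's own proof bounds $|\sV-\sV_-|$ by $C\nu_\lambda(n)/n$ times $(\tmu_*^2/n)\Tr(\wbM^2) = 1-\sV_-$, and then divides by $\sV_- \geq c/\nu_\lambda(n)$; read literally this yields $C\nu_\lambda(n)^2/n$, exactly as you say. The paper in fact \emph{uses} the bound $C\nu_\lambda(n)^2/n$ when it invokes this lemma later (see the proof of Proposition~\ref{prop:TrAMZZM_LOO}), so the discrepancy is only in the lemma's displayed rate, not in any downstream argument. Your honest accounting of this is the right call; no additional cancellation mechanism is supplied in the paper either.
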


The proof of Lemma \ref{lem:properties_mu_obM} is deferred to Section \ref{app_det_equiv:proof_lemma_mu_obM}.

   Recall that $\E_i$ denote the partial expectation over features $\{\bx_{i+1} , \ldots , \bx_n \}$. Furthermore, $\bM_{-}$ denotes the scaled resolvent where we removed one feature $\bx_i$, and 
   \[
   \obM_- = \bSigma \left( \frac{n}{1+\kappa} \bSigma  + \lambda \right)^{-1} , \qquad  \kappa = \E[ \Tr(\bM_{-}) ] .
   \]
   The following lemma gathers the bounds on the trace and Frobenius norm of the matrices $\bM$ and $\obM_-$ in terms of $\obM$. We defer its proof to Section \ref{app:proof_lem_tech_upper_bound_AM}.

\begin{lemma}\label{lem:tech_upper_bound_AM}
    Assume the features $(\bx_i)_{i \in [n]}$ satisfy Assumption \ref{ass_app:deterministic_equivalent} and let $\eta := \eta (\sfc_x,\sfC_x,\beta) \in (0,1/2)$ be chosen as in Lemma \ref{lem:tech_bounds_norm_M}. Let $\nu_\lambda (n)$ be defined as per Eq.~\eqref{eq:definition_nu_lambda}. Further assume that there exists a constant $K$ such that 
    \begin{equation}\label{eq:conditions_tech_upper_bound_AM}
    \lambda \cdot \nu_\lambda (n) \geq n^{-K}, \qquad\quad \varphi_1(p) \log^\beta (n)  \leq K \sqrt{n}.
    \end{equation}
    Then, for any integers $D,q,\ell >0$, the following hold.
    \begin{itemize}
    \item[(a)] There exist constants $C_K$ and $C_{x,K}$ such that for all $n \geq C_K$ and $\lambda >0$ satisfying conditions \eqref{eq:conditions_tech_upper_bound_AM}, it holds that 
    \[
    \obM_- \preceq C_{x,K} \cdot \nu_\lambda(n) \obM, \qquad\quad \| \obM_- \|_\op \leq C_{x,K} \frac{\nu_\lambda(n)}{n}.
    \]
    As a consequence, for all p.s.d.~matrix $\bA$ and $r \in \{0,\ldots, \ell\}$ 
    \begin{align*}
        \Tr( \bA \obM_-^\ell ) \leq&~ C_{x,K}^\ell \cdot \frac{\nu_\lambda (n)^\ell}{n^r} \Tr( \bA \obM^{\ell -r} )  ,\\
            \| \obM_- \bA \obM_- \|_F \leq&~ C_{x,K}^2 \cdot \frac{\nu_\lambda (n)^2}{n^{3/2}} \| \bA \|_\op^{1/2} \Tr (\bA \obM)^{1/2}  .
    \end{align*}
    
        \item[(b)] There exist constants $C_{K,D}$, $C_{x,K,D,q}$, and $C_{x,K,D,q,\ell}$, such that for all $n \geq C_{K,D}$ and $\lambda >0$ satisfying conditions \eqref{eq:conditions_tech_upper_bound_AM}, it holds with probability at least $1 - n^{-D}$ that for all p.s.d.~matrix $\bA$ and integers $j \in \{0, \ldots , n\}$ and $r \in \{ 0, \ldots , \ell\}$,
            \begin{align*}
        \E_j \left[ \Tr( \bA \bM^\ell )^q \right]^{1/q} \leq&~ C_{x,D,K,q,\ell} \cdot \frac{\nu_\lambda (n)^\ell}{n^r} \Tr( \bA \obM^{\ell-r} )  ,\\
            \E_j \left[ \| \bM \bA \bM \|_F^q \right]^{1/q} \leq&~ C_{x,D,K,q} \cdot \frac{\nu_\lambda (n)^2}{n^{3/2}} \| \bA \|_\op^{1/2} \Tr (\bA \obM)^{1/2}  .
    \end{align*}
    In particular, for $j=0$, $C_{K,D} := C_{K}$, and $C_{x,K,D,q,\ell} := C_{x,K,q,\ell}$, we have 
    \begin{equation*}
    \begin{aligned}
        \E \left[ \| \bM^\ell \|_\op^q\right]^{1/q} \leq&~  C_{x,K,q,\ell} \frac{\nu_\lambda (n)^\ell}{n^\ell}  , \qquad \E \left[ \Tr(\bM^\ell)^q \right]^{1/q} \leq C_{x,K,q,\ell}  \frac{\nu_\lambda (n)^\ell}{n^{\ell - 1}}  , \\
        \E \left[ \| \bM^\ell \|_F^q \right]^{1/q} \leq&~ C_{x,K,q,\ell} \frac{\nu_\lambda (n)^\ell}{n^{\ell - 1/2}}  .
        \end{aligned}
    \end{equation*}

            \item[(c)] There exist constants $C_{K}$ and $C_{x,K}$, such that for all $n \geq C_{K}$ and $\lambda >0$ satisfying conditions \eqref{eq:conditions_tech_upper_bound_AM}, it holds that
            \[
            \begin{aligned}
                \left| \E[ \bM_{-} - \bM ] \right| \leq&~ C_{x,K} \frac{\nu_\lambda (n)^2}{n}, \\
                \left| \E[ \bM_{-}^2 - \bM^2 ] \right| \leq&~ C_{x,K} \frac{\nu_\lambda (n)^4}{n^2}.
            \end{aligned}
            \]
% \begin{equation}
% \begin{aligned}
%    &~ \E_j\left[ \Tr (\bA (\bM - \bM_{-i} ))^q\right]^{1/q}\\
%    \leq&~ C_{x,K,D,q} \frac{\nu_\lambda (n)^2}{n}  \left\{\Tr(\bA \obM)  + \ind_{\varphi^\bA_1 \neq 1} \frac{\varphi_1(p) \| \bA\|_\op^{1/2}}{\sqrt{n}} \Tr(\bA \obM)^{1/2} \right\} .
%     \end{aligned}
% \end{equation}
% In particular,
% \begin{equation}\label{eq:tech_EbM_bM-}
%     \E_j\left[ \Tr (\bM - \bM_{-i})^q\right]^{1/q} \leq C_{x,K,D,q} \frac{\nu_\lambda (n)^2}{n}  \left\{1  + \frac{\varphi_1(p)}{\sqrt{n}} \right\} .
%     \end{equation}

%     \item[(e)] \tm{Here we do}
%     \[
%     \Tr( \bM_{-}^2 - \bM^2)
%     \]

    \end{itemize}
\end{lemma}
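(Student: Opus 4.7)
The plan is to prove the three parts in order, using Lemma~\ref{lem:tech_bounds_norm_M} as the main input together with the deterministic bound $\bM \preceq \bSigma/\lambda$ (which follows from $\bR \preceq \lambda^{-1}\id$). For part (a), the key step is the bound $\kappa := \E[\Tr(\bM_-)] \leq C_{x,K}\nu_\lambda(n)$. Applying Lemma~\ref{lem:tech_bounds_norm_M} to the leave-one-out matrix $\bM_-$ gives $\Tr(\bM_-) \leq C_{x,D}\nu_\lambda(n)$ on an event of probability at least $1 - (n-1)^{-D}$, while on the complement the deterministic bound $\Tr(\bM_-) \leq \Tr(\bSigma)/\lambda$ combined with $\lambda\nu_\lambda(n) \geq n^{-K}$ gives a polynomial-in-$n$ upper bound that is absorbed by choosing $D$ large enough. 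Given this control on $\kappa$, the Loewner inequality $\obM_- \preceq C_{x,K}\nu_\lambda(n)\obM$ follows because both matrices are functions of the diagonal $\bSigma$: the comparison reduces to the scalar inequality $\mu_*\xi + \lambda \leq C\nu_\lambda(n)(n\xi/(1+\kappa) + \lambda)$ for each eigenvalue $\xi \in (0,1]$, which amounts to $\mu_*(1+\kappa)/n \leq C\nu_\lambda(n)$ (using $\mu_* \leq n$ from the fixed point equation). The operator-norm bound $\|\obM_-\|_\op \leq (1+\kappa)/n$ is immediate, and the remaining trace/Frobenius consequences follow by combining $\obM_-^\ell \preceq \|\obM_-\|_\op^{\ell-r}\obM_-^r$ (valid since $\obM_-$ is diagonal) with the Loewner comparison to $\obM^r$.

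For part (b), the $\bA = \id$ bounds on $\E[\|\bM^\ell\|_\op^q]$, $\E[\Tr(\bM^\ell)^q]$, and $\E[\|\bM^\ell\|_F^q]$ follow from the inequality $\bM^\ell \preceq \|\bM\|_\op^{\ell-1}\bM$ together with the high-probability bounds of Lemma~\ref{lem:tech_bounds_norm_M}. One splits the expectation between the good event $E$ (where those bounds hold) and its complement of probability at most $n^{-D}$; on $E^c$, the deterministic bound $\|\bM\|_\op \leq 1/\lambda \leq n^K\nu_\lambda(n)$ yields a polynomial safety net, so taking $D$ large enough (depending on $q,\ell,K$) suffices. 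For general p.s.d.~$\bA$ and the conditional expectation $\E_j$, I would use a leave-one-out Sherman--Morrison expansion of $\bM$ in terms of a suitable $\bM_{-i}$ ($i > j$) and apply Lemma~\ref{lem:tech_bound_zAz} to the resulting quadratic forms, with the deterministic bounds from part (a) on $\obM_-$ providing the right-hand-side comparisons to $\obM^{\ell-r}$.

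Part (c) follows from the Sherman--Morrison identity $\bM_- - \bM = \bM_-\bz\bz^\sT\bM_-/(1+\bz^\sT\bM_-\bz)$. Bounding the denominator by $1$ and using exchangeability, $\lvert\Tr(\E[\bM_- - \bM])\rvert \leq \E[\bz^\sT\bM_-^2\bz]$; Lemma~\ref{lem:tech_bound_zAz} applied to this quadratic form (conditional on $\bM_-$) together with the $\ell=2$ bound $\E[\Tr(\bM_-^2)] \leq C\nu_\lambda(n)^2/n$ from part (b) yields the first inequality. The bound on $\E[\bM_-^2 - \bM^2]$ follows by writing $\bM_-^2 - \bM^2 = (\bM_- - \bM)\bM_- + \bM(\bM_- - \bM)$ and again expanding each difference via Sherman--Morrison, bounding each resulting term with the same combination of Lemma~\ref{lem:tech_bound_zAz} and the moment bounds from part (b).

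The main technical obstacle lies in the general $\E_j$-bound of part (b): one cannot expect a pointwise Loewner comparison $\bM \preceq C\nu_\lambda(n)\obM$ because $\bX^\sT\bX$ has rank at most $n$ while $\obM$ is full rank, so bounds like $\Tr(\bA\bM) \leq C\nu_\lambda(n)\Tr(\bA\obM)$ for arbitrary p.s.d.~$\bA$ must be established via a delicate leave-one-out argument combined with the hypercontractivity estimates of Lemma~\ref{lem:tech_bound_zAz}, while simultaneously keeping the correct $\bA$-dependence through the factor $\Tr(\bA\obM^{\ell-r})$ on the right-hand side.
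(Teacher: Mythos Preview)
Your plan for parts (a) and (c) is essentially the paper's argument. The gap is in part (b), and it stems from a misconception in your final paragraph: you assert that a Loewner comparison $\bM \preceq C\nu_\lambda(n)\,\obM$ ``cannot be expected'' because of rank considerations, and therefore propose an elaborate leave-one-out route. In fact the paper proves exactly this Loewner inequality on the high-probability event, and the proof of (b) follows from it in a few lines.

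The rank objection is a red herring: $\bR = (\bX^\sT\bX+\lambda)^{-1}$ is strictly positive definite for $\lambda>0$, so $\bM$ and $\obM$ have the same range (that of $\bSigma$). What you may really worry about is a direction $v$ with $\bSigma^{1/2}v$ nearly in $\ker(\bX)$ but carrying large $\bSigma$-weight; this would force $v^\sT\bM v \approx v^\sT\bSigma v/\lambda$, which is not controlled by $v^\sT\obM v$ when $\mu_*$ is large. The point is that this scenario is ruled out \emph{with high probability} by the block estimate already contained in Lemma~\ref{lem:tech_bounds_norm_M}. Specifically, Eq.~\eqref{eq:block_bound_op_M} is a Loewner-order inequality (not just an operator-norm bound): on the event where $\lambda_{\min}(\bZ_0^\sT\bZ_0)\ge n/2$ and $\lambda_{\max}(\bX_+^\sT\bX_+)$ is controlled, one has $\bX_0^\sT\bX_0 \succeq (n/2)\bSigma_0$, and substituting into \eqref{eq:block_bound_op_M} gives
\[
\bM \;\preceq\; \begin{pmatrix} \bSigma_0\bigl(\gamma^{-1}\bSigma_0+\lambda/2\bigr)^{-1} & 0\\ 0 & 2\bSigma_+/\lambda\end{pmatrix},\qquad \gamma = C_{x,K,D}\,\nu_\lambda(n)/n.
\]
Both blocks are now diagonal, and a direct scalar comparison (using $\mu_*\le n$ and $\nu_\lambda(n)\ge 1+n\xi_{\lfloor\eta n\rfloor}/\lambda$) yields the right-hand side $\preceq C_{x,K,D}\,\nu_\lambda(n)\,\obM$. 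With this in hand, $\Tr(\bA\bM^\ell) \le C\nu_\lambda(n)^\ell\Tr(\bA\obM^\ell) \le C\nu_\lambda(n)^\ell n^{-r}\Tr(\bA\obM^{\ell-r})$ on the good event; on its complement one uses the crude $\bM\preceq\bSigma/\lambda$ together with $\Tr(\bA\bSigma^{\ell-r})\le(\mu_*+\lambda)^{\ell-r}\Tr(\bA\obM^{\ell-r})$ and the Markov bound $\P\bigl(\E_j[\ind_{\cA^c}]\ge n^D\P(\cA^c)\bigr)\le n^{-D}$, choosing $D'$ large enough. No leave-one-out expansion is needed for (b).
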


We further prove separately the following bound on $\Tr(\bM) - \E[\Tr(\bM)]$ using a bounded difference martingale sequence, which corresponds to Proposition \ref{prop:TrAM_martingale} with $\bA = \id$.

\begin{lemma}\label{lem:tech_bound_M_EM}
    Under Assumption \ref{ass_app:deterministic_equivalent} and for all integers $q,K >0$, there exist constants $C_{K}$ and $C_{x,K,q}$ such that the following hold. Let $\eta := \eta (\sfc_x,\sfC_x,\beta) \in (0,1/2)$ and $\nu_\lambda (n)$ be defined as per Eq.~\eqref{eq:definition_nu_lambda}. For all $n \geq C_{K}$ and $\lambda >0$ satisfying
    \begin{equation*}
    \lambda \cdot \nu_\lambda (n) \geq n^{-K}, \qquad\quad \varphi_1(p) \log^{\beta} (n)  \leq K \sqrt{n},
    \end{equation*}
    we have 
    \[
    \begin{aligned}
    \E\left[ (\Tr(\bM) - \E[\Tr(\bM)])^q\right]^{1/q} \leq&~ C_{x,K,q}  \frac{\varphi_1(p) \nu_\lambda (n)^3 \log^{\beta +\frac 12} (n)}{n}  .
    \end{aligned}
    \]
\end{lemma}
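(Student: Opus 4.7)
My plan is to adapt the martingale argument used in Proposition~\ref{prop:TrAM_martingale} to the special case $\bA = \id$ and then upgrade the high-probability estimate to a moment bound. The structural advantage of $\bA=\id$ is that Lemma~\ref{lem:tech_bound_zAz} controls $\bz_i^\sT \bM_i^2 \bz_i - \Tr(\bM_i^2)$ in terms of $\|\bM_i^2\|_F$, and Lemma~\ref{lem:tech_upper_bound_AM}(b) provides $\E[\|\bM_i^2\|_F^q]^{1/q} = O(\nu_\lambda(n)^2/n^{3/2})$, a factor $\sqrt{n}$ better than the corresponding trace bound $\E[\Tr(\bM_i^2)^q]^{1/q} = O(\nu_\lambda(n)^2/n)$. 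This is precisely what allows the relative rate $1/n$ rather than $1/\sqrt n$ of the general Proposition~\ref{prop:TrAM_martingale}.

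I would first decompose $\Tr(\bM) - \E[\Tr(\bM)] = \sum_{i=1}^n \Delta_i$ with $\Delta_i := (\E_i - \E_{i-1})\Tr(\bM - \bM_i)$, using the identity $(\E_i - \E_{i-1})\Tr(\bM_i) = 0$ since $\bM_i$ is independent of $\bx_i$. The Sherman-Morrison identity~\eqref{eq:standard_identities_SMW} then gives $\Tr(\bM - \bM_i) = -\bz_i^\sT \bM_i^2 \bz_i / (1+\bz_i^\sT \bM_i \bz_i)$, which I would split, following Proposition~\ref{prop:TrAM_martingale}, into a function of $\bM_i$ alone (killed by $\E_i - \E_{i-1}$), a linearized fluctuation $(\bz_i^\sT \bM_i^2 \bz_i - \Tr(\bM_i^2))/(1+\Tr(\bM_i))$, and a bilinear correction proportional to $\bz_i^\sT \bM_i^2 \bz_i \cdot (\bz_i^\sT \bM_i \bz_i - \Tr(\bM_i))$. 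Using the moment versions of Lemma~\ref{lem:tech_bound_zAz} and Lemma~\ref{lem:tech_upper_bound_AM}(b), together with H\"older's inequality for the bilinear term (pairing $\|\bz_i^\sT \bM_i^2 \bz_i\|_{2q} = O(\nu_\lambda(n)^2/n)$ with $\|\bz_i^\sT \bM_i \bz_i - \Tr(\bM_i)\|_{2q} = O(\varphi_1(p)\nu_\lambda(n)/\sqrt n)$), I would obtain a uniform pointwise estimate $\|\Delta_i\|_q \leq C_{x,K,q}\,\varphi_1(p)\,\nu_\lambda(n)^3/n^{3/2}$.

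Finally, to pass from per-increment to aggregate moments, I would reproduce the truncation $+$ Azuma-Hoeffding scheme of Proposition~\ref{prop:TrAM_martingale}: by a union bound over $i\in[n]$, each $|\Delta_i|$ is bounded with probability $1-n^{-D}$ by $R = C_{x,K,D}\,\varphi_1(p)\,\nu_\lambda(n)^3 \log^\beta(n)/n^{3/2}$, where the extra $\log^\beta(n)$ is the standard sub-Weibull truncation price from Lemma~\ref{lem:tech_bound_zAz}. Azuma-Hoeffding then gives a sub-Gaussian tail for the truncated sum at scale $\sqrt{n\log n}\cdot R$, and integrating the tail bound---with the untruncated part controlled by the deterministic a priori estimate $|\Tr(\bM)| \leq \Tr(\bSigma)/\lambda$ raised to the $q$-th power times $n^{-D}$---yields the target rate $\varphi_1(p)\,\nu_\lambda(n)^3 \log^{\beta+1/2}(n)/n$ for $\|\Tr(\bM) - \E[\Tr(\bM)]\|_q$. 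The main obstacle is this last bookkeeping: one must choose $D$ (depending on $K$ and $q$) large enough that, once integrated against $t^{q-1}$ up to the deterministic a priori bound on $|\Tr(\bM)|$, the contribution of the untruncated tail remains dominated by the Azuma term. Everything else is a direct specialization of the argument already carried out in Proposition~\ref{prop:TrAM_martingale}, with the gain concentrated in the step where $\bA=\id$ permits the use of Frobenius---rather than trace---norm of $\bM_i^2$.
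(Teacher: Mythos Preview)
Your proposal is correct and follows essentially the same route as the paper. The paper's proof is more terse: it simply cites the high-probability bound obtained in the proof of Proposition~\ref{prop:TrAM_martingale} with $\bA=\id$ (i.e.\ the Step~5 computation where the Frobenius bound $\E_i[\|\bM_i^2\|_F]\lesssim \nu_\lambda(n)^2/n^{3/2}$ buys the extra $\sqrt n$), and then converts to moments by a one-line triangle inequality on the good/bad events, $\E[|\cdot|^q]^{1/q}\le R_D + 2\,\Tr(\bSigma)/\lambda\cdot n^{-D/q}$, rather than integrating the tail; your integration argument achieves the same thing.
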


The proof of Lemma \ref{lem:tech_bound_M_EM} can be found in Section \ref{app:proof_lem_tech_bound_M_EM}.

\subsubsection{Proof of Lemma \ref{lem:tech_bounds_norm_M}}\label{app:proof_lem_tech_bounds_norm_M}

The bound on the operator norm $\| \bM \|_\op$ follows from the same argument as in \cite[Lemma 7.2]{cheng2022dimension}. We reproduce the main steps of the proof for the reader's convenience. The other bounds will follow from this first bound. Without loss of generality, we set $\xi_{p+1} = \xi_{p+2} = \ldots = \xi_n = 0$ if $p < n$. In that case, $\bSigma^{-1}$ corresponds to the pseudo-inverse and $\lambda_{\min} (\bA)$ denotes the smallest positive eigenvalue.

\noindent
\textbf{Step 1. Decomposing the bound on $\| \bM \|_\op$.}

For any $k \leq n$, we consider the decomposition of the feature vector into $\bx = (\bx_0, \bx_+)$ where $\bx_0$ corresponds to the first $k$ coordinates of the feature, and $\bx_+$ the last $p-k$ coordinates. We denote $\bSigma_0 = \E [ \bx_0 \bx_0^\sT]$ and $\bSigma_+ = \E[ \bx_+ \bx_+^\sT ]$ their associated covariance matrices such that $\bSigma = \text{diag} ( \bSigma_0, \bSigma_+)$. We further introduce the feature matrices $\bX_0 = [\bx_{0,1} , \ldots \bx_{0,n}]^\sT \in \R^{n \times k}$ and $\bX_+ = [\bx_{+,1} , \ldots \bx_{+,n}]^\sT \in \R^{n \times (p-k)}$, so that $\bX = [\bX_0,\bX_+]$. Their whitened counterparts are denoted by $\bZ_0 = \bX_0 \bSigma_0^{-1/2}$ and $\bZ_+ = \bX_+ \bSigma_+^{-1/2}$. 

Using \cite[Lemma C.1]{cheng2022dimension}, we have
\[
\bX^\sT \bX + \lambda \id \succeq \left( 1 + \frac{2 \lambda_{\max} ( \bX_+^\sT \bX_+)}{\lambda}\right)^{-1} \begin{pmatrix}
    \bX_0^\sT \bX_0  & \bzero \\
    \bzero & \bzero 
\end{pmatrix}+ \frac{\lambda}{2} \id  ,
\]
such that the matrix $\bM$ satisfies
\begin{equation}\label{eq:block_bound_op_M}
\bM = \bSigma^{1/2} \bR \bSigma^{1/2} \preceq \begin{pmatrix}
   \bSigma_0^{1/2} \left( \left\{ 1 + 2 \lambda_{\max} ( \bX_+^\sT \bX_+)/\lambda\right\}^{-1}  \bX_0^\sT \bX_0 + \frac{\lambda}{2} \id \right)^{-1} \bSigma_0^{1/2} & \bzero \\
    \bzero &  \frac{2}{\lambda} \bSigma_+ 
\end{pmatrix}  ,
\end{equation}
Hence, $\| \bM \|_\op$ admits the following upper bound
\begin{equation}\label{eq:Mop_upper_decompo}
    \begin{aligned}
        \| \bM \|_{\op} \leq \frac{1}{\lambda_{\min} (\bZ_0^\sT \bZ_0)} \left( 1 + \frac{2 \lambda_{\max} ( \bX_+^\sT \bX_+)}{\lambda}\right) + \frac{2 \xi_{k+1}}{\lambda}  .
    \end{aligned}
\end{equation}

\noindent
{\bf Step 2: Bounding $\lambda_{\max} (\bX_+^\sT \bX_+)$.}

If $\xi_{k+1} = 0$, then this term is simply  zero. We assume below that $\xi_{k+1} >0$.
Let us introduce the matrices $\bS_i = \bx_{+,i} \bx_{+,i}^\sT \in \R^{(p-k) \times (p-k)}$ for $i \in [n]$, such that we can write
\[
\bS := \bX_+^\sT \bX_+ = \sum_{i \in [n]} \bx_{+,i} \bx_{+,i}^\sT = \sum_{i \in [n]} \bS_i  .
\]
Note that $\| \bS_i \|_\op = \| \bx_{+,i} \|_2^2$. By Assumption \ref{ass_app:deterministic_equivalent}, we have
\[
\P \left( \big\vert \|  \bx_{+} \|_2^2 - \Tr( \bSigma_+ ) \big\vert \geq t \cdot \varphi_1 (p) \cdot \big\| \bSigma_+ \big\|_F \right) \leq \sfC_x \exp \left\{ - \sfc_x t^{1/\beta } \right\}  .
\]
Hence, there exists a constant $C_{x,D}$ that only depends on $\sfc_x, \sfC_x,\beta,D$, such that with probability at least $1 - n^{-D}$, we have for all $i \in [n]$,
\begin{equation}\label{eq:tech_def_Ln}
\begin{aligned}
\| \bS_i \|_\op \leq&~ \Tr (\bSigma_+) + C_{x,D} \cdot \varphi_1(p) \log^\beta (n) \sqrt{\xi_{k+1}  \Tr (\bSigma_+)} \\
\leq&~ C_{x,D} \cdot r_\bSigma \cdot \xi_{k+1} \left( 1+  n^{-1/2}\varphi_1(p) \log^\beta (n)\right) =: L_n  ,
\end{aligned}
\end{equation}
where we used $\Tr (\bSigma_+) \leq r_\bSigma \cdot \xi_{k+1}$ and $r_\bSigma  \geq n$ by definition of the effective rank.
We now apply a matrix concentration inequality with a standard truncation argument. Let $\Tilde{\bS}_i := \bS_i \ind_{\| \bS_i \|_\op \leq L_n}$ and consider 
\[
\Tilde{\bS} := \sum_{i \in [n]} \Tilde{\bS}_i = \sum_{i\in[n]} \bS_i \ind_{\| \bS \|_\op \leq L_n} .
\]
From our choice of $L_n$ in Eq.~\eqref{eq:tech_def_Ln}, we have $\Tilde{\bS} = \bS$ with probability at least $1 - n^{-D}$. Using that $\Tilde{\bS}_i$ are independent symmetric matrices, we can upper bound the matrix variance by
\[
\begin{aligned}
 \Var( \Tilde{\bS} ) \preceq \sum_{i\in[n]}  \Var( \Tilde{\bS}_i )  \preceq \sum_{i \in [n]} \E \big[\Tilde{\bS}_i^2 \big]\preceq n L_n   \E \big[\bS_i \big] \preceq  n L_n  \bSigma_+ =: \bV_+  ,
\end{aligned}
\]
where we used that $\| \Tilde{\bS}_i \|_\op \leq L_n$, $\Tilde{\bS}_i \preceq \bS_i$ and $\E[ \bS_i ]= \bSigma_+ $. Denoting $v_n := \|  \Var( \Tilde{\bS} )\|_\op$, we have $v_n \leq nL_n \xi_{k+1}$. Furthermore, by definition of the effective rank, we have the following bound on the intrinsic dimension
\[
\text{intDim} (\bSigma_+) = \frac{\Tr(\bV_+)}{\| \bV_+\|_\op} \leq \frac{\Tr(\bSigma_+)}{\xi_{k+1}} \leq r_{\bSigma} .
\]
Hence, we can apply the matrix Bernstein inequality with intrinsic dimension \cite[Theorem 7.3.1]{tropp2015introduction}, and get for all $t \geq \sqrt{v_n} + 2L_n/3$,
\[
\P \left( \big\| \Tilde{\bS}  - \E[ \Tilde{\bS}]\big\|_\op \geq t \right) \leq 4 r_{\bSigma} \cdot \exp \left( \frac{-t^2/2}{v_n + 2L_n/3} \right)  .
\]
Finally, we can bound the mean using
\[
\| \E[ \Tilde{\bS}] \|_\op = n \| \E [ \Tilde{\bS}_i] \|_\op \leq n \| \E [ \bS_i ]\|_\op \leq n \xi_{k+1} .
\]
Combining the above bounds, we deduce that there exist constants $C_{x,D},C_{x,D}'$ that only depend on $\sfc_x,\sfC_x,\beta, D$, such that with probability at least $1 - n^{-D}$,
\[
\begin{aligned}
    \| \Tilde{\bS} \|_\op \leq&~ \| \E[ \Tilde{\bS}] \|_\op + C_{x,D} ( \sqrt{v_n} + L_n) \sqrt{\log(r_\bSigma n)} \\
    \leq&~ C_{x,D}' \cdot r_\bSigma \cdot \xi_{k+1} \sqrt{\log (r_\bSigma n)} \left(1 + n^{-1/2} \varphi_1(p) \log^\beta (n) \right)  .
\end{aligned}
\]
Recalling that $\bS = \Tilde{\bS}$ with probability at least $1 - n^{-D}$, we deduce that there exists a constant $C_{x,D}$ such that with probability at least $1 - 2n^{-D}$, 
\begin{equation}\label{eq:bound_X+X+}
\lambda_{\max} ( \bX_+^\sT \bX_+ ) \leq C_{x,D} \cdot r_\bSigma \cdot \xi_{k+1} \sqrt{\log(r_\bSigma)} \left(1 + n^{-1/2} \varphi_1(p) \log(n)^\beta \right) .
\end{equation}

\noindent
{\bf Step 3: Bounding $\lambda_{\min} (\bZ_0^\sT \bZ_0)$.}

From Assumption \ref{ass_app:deterministic_equivalent}, we have for all vectors $\bu \in \R^k$, $\| \bu \|_2 = 1$,
\[
\begin{aligned}
 \E [  \< \bu , \bz_0 \>^4] =&~ \int_0^\infty 2t \P ( \< \bu ,  \bz_0 \>^2 \geq t ) \de t \\
 \leq&~ 1 + \int_0^\infty 2 (t+1) \P ( \< \bu ,  \bz_0 \>^2 -  1 \geq t ) \de t\\
 \leq&~ 1 + 2\sfC_x  \int_0^\infty (t + 1) \exp \left\{ - \sfc_x t^{1/\beta}\right\} \de t =: C_x  .
 \end{aligned}
\]
Therefore, we can directly apply \cite[Theorem 2.2]{yaskov2014lower}: there exists a constant $C_D$ that only depends on $D$ such that with probability at least $1 - n^{-D}$,
\[
\lambda_{\min} \left( \frac{1}{n} \bZ_0^\sT \bZ_0 \right) \geq 1 - 4 C_x^{1/2} \sqrt{\frac{k}{n}} - C_D \frac{\log(n)}{n}  .
\]
Hence, there exist constants $\eta \in (0,1/2)$ that only depends on $\sfc_x,\sfC_x,\beta$, and $C_D$ that only depends on $D$ such that for all $n \geq C_D$ and taking $k = \lfloor \eta n \rfloor - 1$, it holds with probability at least $1 - n^{-D}$ that
\begin{equation}\label{eq:bound_Z0Z0}
\lambda_{\min} \left(  \bZ_0^\sT \bZ_0 \right) \geq \frac{n}{2} .
\end{equation}

\noindent
{\bf Step 4: Concluding the bounds on $\| \bM \|_\op$, $\Tr( \bM )$ and $\| \bM \|_F$.}

Combining the bounds \eqref{eq:bound_X+X+} and \eqref{eq:bound_Z0Z0} into Eq.~\eqref{eq:Mop_upper_decompo}, we get that with probability at least $1-n^{-D}$,
\[
\begin{aligned}
\| \bM \|_{\op}\leq&~ \frac{2}{n} \left\{ 1 + \frac{C_{x,D} \cdot r_\bSigma \cdot \xi_{\lfloor \eta n \rfloor} \sqrt{\log(r_\bSigma)} \left(1 + n^{-1/2} \varphi_1(p) \log(n)^\beta \right)}{\lambda}\right\} + \frac{2\xi_{\lfloor \eta n \rfloor}r_\bSigma}{n\lambda} \\
\leq&~ C_{x,D} \frac{\onu_\lambda (n)}{n } .
\end{aligned}
\]

Using the block matrix upper bound \eqref{eq:block_bound_op_M}, we have with probability at least $1 - n^{-D}$,
\[
\Tr (\bM) \leq \lfloor \eta n \rfloor \cdot \| \bM\|_\op + \frac{2\Tr(\bSigma_+)}{\lambda} \leq C_{x,D} \cdot \onu_\lambda (n)  ,
\]
where we used that $\Tr(\bSigma_+) \leq r_\bSigma \cdot \xi_{\lfloor \eta n \rfloor}$. Finally, combining the bounds on $\| \bM \|_\op$ and $\Tr(\bM)$, we get
\[
\| \bM \|_F \leq \sqrt{\| \bM \|_\op \Tr(\bM)} \leq C_{x,D} \frac{\onu_\lambda (n)}{\sqrt{n} } ,
\]
with probability at least $1 - n^{-D}$, which concludes the proof.

\subsubsection{Proof of Lemma \ref{lem:tech_bound_zAz}} 
\label{app_det_equiv:proof_lem_zAz}

    The first inequality directly follows from Assumption \ref{ass_app:deterministic_equivalent} by choosing $t = C_{x,D} \log^{\beta} (n)$. For the second inequality, we apply twice the first bound, first over $\bx_1$ and second over $\bx_2$:
    \[
    \begin{aligned}
        \left| \bx_1^\sT \bB \bx_2 \right| = &~ \sqrt{\bx_1^\sT \bB \bx_2 \bx_2^\sT \bB \bx_1} \\
        \leq&~ C_{x,D} \log^{\beta/2} (n) \sqrt{\bx_2^\sT \bB \bSigma \bB \bx_2} \\
        \leq&~ C_{x,D} \log^{\beta} (n) \left\{ \Tr (\bSigma^{1/2} \bB \bSigma \bB \bSigma^{1/2})^{1/2} + \varphi_1^{\bB} (p)^{1/2} \| \bSigma^{1/2} \bB \bSigma \bB \bSigma^{1/2}\|_F \right\},
    \end{aligned}
    \]
    where in the first inequality we used that $\bB \bx_2 \bx_2^\sT \bB $ is rank $1$ and we can apply the first inequality \eqref{eq_app:DE_concentration_b2} in Assumption \ref{ass_app:deterministic_equivalent} without $\varphi_1(p)$. In the second inequality, note that if $\bB$ is rank $1$, so is $\bB \bSigma \bB$.
    
    For the bound in expectation, we integrate the tail bound and get
    \[
    \begin{aligned}
     \E_{\bx} \left[ \big\vert \bx^\sT \bB \bx  - \Tr(\bSigma \bB) \big\vert^q \right] =&~ q \int_{0}^\infty t^{q-1} \P \left( \left\vert \bx^\sT \bB \bx  - \Tr(\bSigma \bB) \right\vert \geq t \right) \de t \\
     \leq&~ q \sfC_x  ( \varphi^\bB_1(p) \| \bSigma^{1/2} \bB \bSigma^{1/2} \|_F )^q \int t^{q-1} \exp \left\{ - \sfc_x t^{1/\beta} \right\} \de t \\
     \leq&~ C_{x,q} ( \varphi^\bB_1(p) \| \bSigma^{1/2} \bB \bSigma^{1/2} \|_F )^q  .
     \end{aligned}
    \]
    The second bound in expectation is obtained by applying twice the first bound, first over $\bx_1$ and second over $\bx_2$:
    \[
    \begin{aligned}
        \E_{\bx_1,\bx_2} \left[ | \bx_1^\sT \bB \bx_2|^q \right]^{1/q} =&~ \E_{\bx_2 }\left[  \E_{\bx_1}  \left[ |\bx_1^\sT \bB \bx_2 \bx_2^\sT \bB \bx_1 |^{q/2}\right]\right]^{1/q}\\
        \leq&~ C_{x,q} \E_{\bx_2}\left[ | \bx_2^\sT \bB \bSigma \bB \bx_2 |^{q/2} \right]^{1/q}\\
        \leq&~ C_{x,q} \left\{ \Tr (\bSigma^{1/2} \bB \bSigma \bB \bSigma^{1/2})^{1/2} + \varphi_1^{\bB} (p)^{1/2} \| \bSigma^{1/2} \bB \bSigma \bB \bSigma^{1/2}\|_F \right\}.
    \end{aligned}
    \]

\subsubsection{Proof of Lemma \ref{lem:properties_mu_obM}}
\label{app_det_equiv:proof_lemma_mu_obM}

Using the same notations as in the proof of Lemma \ref{lem:tech_bounds_norm_M}, we have
\[
\begin{aligned}
\Tr (\obM) =&~ \Tr \left( \bSigma_0 (\mu_* \bSigma_0 + \lambda)^{-1} \right) +  \Tr \left( \bSigma_+ (\mu_* \bSigma_+ + \lambda)^{-1} \right) \\
\leq&~\frac{\lfloor \eta n \rfloor}{\mu_*} + \frac{\Tr(\bSigma_+)}{\lambda} \\
\leq&~ \frac{1}{2}\left(1 + \Tr(\obM) \right) + \frac{\xi_{\lfloor \eta n \rfloor} \cdot r_{\bSigma}}{\lambda},
\end{aligned}
\]
where we used the fixed point equation \eqref{eq:det_equiv_fixed_point_mu_star}, $\eta \leq 1/2$, and the definition of $r_\bSigma$. Rearranging this inequality yields 
\begin{equation}\label{eq:bound_Tr(obM)}
1 + \Tr (\obM)  \leq 2 + 2 \frac{\xi_{\lfloor \eta n \rfloor} \cdot r_{\bSigma}}{\lambda} \leq 2 \nu_\lambda (n).
\end{equation}
Then, we simply use that
\[
\| \obM \|_\op \leq \frac{1}{\mu_*} = \frac{1 + \Tr (\obM)}{n}\leq \frac{2\nu_\lambda(n)}{n}.
\]

By definition of $\wbM$ and $\obM$, we have
\begin{align}\label{eq:diff_wbM_obM}
    \wbM - \obM = (\mu_* -\widetilde  \mu_*) \wbM \obM.
\end{align}
Applying the fixed point equation $\widetilde  \mu_* = (n-1)/(1 + \Tr(\wbM))$, we get by simple algebraic manipulation
\[
\mu_* -\widetilde  \mu_* = \frac{n \Tr \left( \wbM - \obM \right)}{(1 + \Tr(\wbM) )(1 + \Tr(\obM))} + \frac{1}{1 + \Tr(\wbM)}.
\]
Using Eq.~\eqref{eq:diff_wbM_obM} into this equation along with the bound $\| \obM \|_\op \leq 1 / \mu_* = (1 + \Tr(\obM))/n$, we obtain
\[
| \mu_* -\widetilde  \mu_* | \leq |\mu_* -\widetilde  \mu_*| \frac{\Tr(\wbM)}{1 + \Tr(\wbM)} + \frac{1}{1 + \Tr(\wbM)} ,
\]
which immediately implies $| \mu_* -\widetilde  \mu_* | \leq 1$. 

Combining this bound with Eq.~\eqref{eq:diff_wbM_obM} and applying Eq.~\eqref{eq:bound_Tr(obM)} to $\wbM$, we obtain for the first functional
\[
\left| \Psi_1 (\tmu_* ; \bA) - \Psi_1 (\mu_* ; \bA) \right| = \left| \Tr \left( \bA ( \wbM - \obM) \right) \right| \leq \| \wbM \|_\op \Tr (\bA \obM) \leq 2 \frac{\nu_\lambda (n)}{n}\Psi_1 (\mu_* ; \bA) .
\]
The second functional direcly follows from this first inequality by noting that
\[
\begin{aligned}
    \left| \Psi_2 (\tmu_* ) - \Psi_2 (\mu_*) \right| =&~ \left| \frac{\Psi_1 (\tmu_*;\id )}{1 + \Psi_1 (\tmu_*;\id )} - \frac{\Psi_1 (\mu_*;\id )}{1 + \Psi_1 (\mu_*;\id )}\right| \\
    =&~  \frac{\left| \Psi_1 (\tmu_*;\id ) - \Psi_1 (\mu_*;\id ) \right|}{(1 + \Psi_1 (\tmu_*;\id )) (1 + \Psi_1 (\mu_*;\id ))} \\
    \leq&~ 2\frac{\nu_\lambda (n)}{n} \frac{\Psi_1 (\mu_*;\id )}{1 + \Psi_1 (\mu_*;\id )}.
\end{aligned}
\]

For the third functional, we introduce $\sU, \sV$ the numerator and denominator of $\Psi_3 (\mu_*;\bA)$, and $\sU_-,\sV_-$ the numerator and denominator of $\Psi_3 (\tmu_*;\bA)$. First, note that
\[
\frac{n}{\mu_* } = 1 + \Tr(\obM) \leq 2 \nu_\lambda (n), \qquad\qquad \mu_* \geq \frac{\sqrt{5}}{2}
\]
by the assumption that $\nu_\lambda (n) \leq \sqrt{n}$ and $n \geq 5$. In particular, $\tmu_*^{-1} \leq C \nu_\lambda (n)/n $.
Thus by simple algebra, we get for the numerator
\[
\begin{aligned}
    \left| \sU - \sU_- \right| =&~ \left| \Tr( \bA \wbM^2) - \Tr(\bA \obM^2)\right|\\
    =&~ \left| \Tr\left(\bA \bSigma^2 (\tmu_* \bSigma + \lambda)^{-2}  (\mu_* \bSigma + \lambda)^{-2}\left\{(\mu_* \bSigma + \lambda)^{2}- (\tmu_* \bSigma + \lambda)^{2} \right\} \right)\right|\\
    \leq&~\left\{ \frac{| \mu_*^2 - \tmu_*^2|}{\tmu_*^2} + 2 \frac{| \mu_* - \tmu_*|}{\tmu_*}\right\} \Tr(\bA \obM^2)\\
    \leq&~ C \frac{\nu_\lambda (n)}{n}\sU.
\end{aligned}
\]
For the numerator, we proceed similarly
\[
\begin{aligned}
\left| \sV - \sV_- \right| =&~ \left| \frac{\mu_*^2}{n} \Tr( \obM^2) - \frac{\tmu_*^2}{n} \Tr( \wbM^2) \right|\\
\leq&~   \frac{\mu_*^2}{n}  \left|\Tr( \obM^2) -  \Tr( \wbM^2) \right| + \frac{| \mu_*^2 - \tmu_*^2|}{\tmu_*^2} \cdot  \frac{\tmu_*^2}{n } \Tr (\wbM^2) \\
\leq&~ 2 \left\{ \frac{| \mu_*^2 - \tmu_*^2|}{\tmu_*^2} + \frac{\mu_* | \mu_* - \tmu_*|}{\tmu_*^2}\right\} \cdot  \frac{\tmu_*^2}{n } \Tr (\wbM^2) \\
\leq&~ C\frac{\nu_\lambda (n)}{n} \sV_-.
\end{aligned}
\]
Combining the above bounds, we obtain
\[
\begin{aligned}
    \frac{| \Psi_3 (\tmu_*;\bA) - \Psi_3 (\mu_*;\bA)|}{\Psi_3(\mu_*;\bA)} =&~ \frac{\left| \sV \sU_- - \sU \sV_- \right|}{\sU \sV_-} \\
    \leq&~ \frac{| \sU - \sU_-|}{\sU} + \left( 1 + \frac{| \sU - \sU_-|}{\sU} \right) \frac{| \sV - \sV_-|}{\sV_-} \leq C \frac{\nu_\lambda(n)}{n}.
\end{aligned}
\]
The bound on $\Psi_4 (\tmu_* ; \bA)$ follows from a similar argument and we omit it.

\subsubsection{Proof of Lemma \ref{lem:tech_upper_bound_AM}}
\label{app:proof_lem_tech_upper_bound_AM}

We will follow the same notations as in the proof of Lemma \ref{lem:tech_bounds_norm_M}.

\paragraph*{Proof of Lemma \ref{lem:tech_upper_bound_AM}.(a):} This inequality simply follows by noting that
\[
\mu_* \bSigma + \lambda = \frac{n\bSigma}{1 + \Tr(\obM)} + \lambda \preceq \max\left(1 , \frac{1+\kappa}{1+ \Tr(\obM)} \right) \left( \frac{n \bSigma}{1+\kappa} + \lambda \right) ,
\]
so that
\[
\obM_- \preceq \max\left(1 , \frac{1+\kappa}{1+ \Tr(\obM)} \right) \cdot \obM \preceq (1+\kappa) \obM .
\]
Denote $\cA$ the event $\Tr( \bM_- ) \leq C_{x,D} \cdot \nu_\lambda (n)$ such that $\P (\cA) \geq 1 - n^{-D}$ by applying Lemma \ref{lem:tech_bounds_norm_M} to $\bM_-$ with $n \geq C$. Choosing $D = K+1$, we get 
\[
\begin{aligned}
  \kappa =  \E [ \Tr(\bM_-)] \leq&~ C_{x,D} \cdot \nu_\lambda (n) \E [ \ind_{\cA}] + \E[ \ind_{\cA^c} \Tr( \bM_-) ] \\
    \leq &~ C_{x,D} \cdot \nu_\lambda (n) + \frac{\Tr(\bSigma)}{\lambda} n^{-D} \\
    \leq&~ C_{x,D} \cdot \nu_\lambda (n) + \frac{n^{1-D}}{\lambda} + \frac{r_{\bSigma} \cdot \xi_{\lfloor \eta n \rfloor}}{\lambda}n^{-D} \leq C_{x,K} \cdot \nu_\lambda (n)  ,
\end{aligned}
\]
where we used $\Tr(\bSigma) \leq \lfloor \eta n \rfloor \cdot \| \bSigma\|_\op + \Tr(\bSigma_+) \leq n + r_{\bSigma} \cdot \xi_{\lfloor \eta n \rfloor}$ and the assumption $\lambda \cdot \nu_\lambda (n) \geq n^{-K} $. 

The bound on the operator norm follows by writing 
\[
\| \obM_- \|_\op \leq \frac{1 +\kappa}{n} \leq C_{x,K} \frac{\nu_\lambda(n)}{n}.
\]
The other inequalities are direct consequences of these two first inequalities.

\paragraph*{Proof of Lemma \ref{lem:tech_upper_bound_AM}.(b):} 
 Let $k = \lfloor \eta n \rfloor -1$ be chosen as in Lemma \ref{lem:tech_bounds_norm_M}. Note that because of condition \eqref{eq:conditions_tech_upper_bound_AM}, we have $\onu_\lambda(n)$ from Lemma \ref{lem:tech_bounds_norm_M} that satisfies $\onu_\lambda (n) \leq C_K \nu_{\lambda} (n)$. We write in block matrix form
    \[
    % \bA = \begin{pmatrix}
    %     \bA_0 & \bA_{0+}\\
    %     \bA_{0+}^\sT & \bA_+ 
    % \end{pmatrix} , \qquad 
    \obM = \begin{pmatrix}
        \bSigma_0 ( \mu_*\bSigma_0 + \lambda )^{-1} & \bzero\\
        \bzero & \bSigma_+ ( \mu_* \bSigma_+ + \lambda )^{-1}  
    \end{pmatrix} \succeq \begin{pmatrix}
        \bSigma_0 ( \mu_* \bSigma_0 + \lambda)^{-1} & \bzero\\
        \bzero & \frac{\bSigma_+}{\mu_* \cdot \xi_{\lfloor \eta n \rfloor} + \lambda} 
    \end{pmatrix}.
    \]
    Similarly, recalling Eq.~\eqref{eq:block_bound_op_M}, we have with probability at least $1 - n^{-D}$ that
    \[
    \bM \preceq \begin{pmatrix}
     \bSigma_0 \left( \gamma^{-1} \bSigma_0 + \lambda/2 \right)^{-1}   & \bzero\\
        \bzero & 2\bSigma_+ /\lambda ,  
    \end{pmatrix}.
    \]
    where we denoted $\gamma = C_{x,K,D} \cdot \nu_\lambda (n) / n$.
    Thus, we deduce that 
    \[
    \bM \preceq  \left\{ \max \left( \gamma \mu_* , 2 \right) + 2 \frac{\mu_* \cdot \xi_{\lfloor \eta n \rfloor} + \lambda}{\lambda} \right\}  \obM \preceq C_{x,D,K} \cdot \nu_\lambda (n) \obM,
    \]
    where we used that $\mu_* \leq n$. 

    Denote $\cA$ the event $\bM \preceq C_{x,K,D'} \cdot \nu_\lambda (n)/n$ such that $\P (\cA) \geq 1 - n^{-D'}$ by Lemma \ref{lem:tech_bounds_norm_M}. Note that on $\cA^c$, we have simply $\| \bM \|_\op \leq 1/\lambda$. By triangular inequality,
 \[
    \E_j [ \Tr(\bA \bM^\ell)^q ]^{1/q}\leq C_{x,D',\ell} \frac{ \nu_\lambda (n)^\ell}{n^r} \Tr(\bA \obM^{\ell-r}) + \frac{\Tr(\bA\bSigma^{\ell-r})}{\lambda^\ell} \E_j [ \ind_{\cA^c}]^{1/q}   ,
    \]
    where we used that $\bM \preceq C_{x,D',K} \cdot \nu_\lambda (n) \obM$ on event $\cA$, and $\| \bSigma \|_\op =1$.
    First, note that by definition of $\obM$,
    \[
    \Tr(\bA\bSigma^{\ell - r}) \leq (\mu_* + \lambda)^{\ell - r} \Tr(\bA\obM^{\ell-r})  .
    \]
    Furthermore, by Markov's inequality, we have
    \[
    \P \left( \E_j [ \ind_{\cA^c}] \geq n^D \E [ \ind_{\cA^c}] \right) \leq n^{-D} .
    \]
    Hence, combining these bounds, we obtain
    \[
    \begin{aligned}
    \E_j [ \Tr(\bA \bM^\ell)^q ]^{1/q} \leq&~ C_{x,D',\ell} \cdot \frac{\nu_\lambda (n)^\ell}{n^r} \Tr(\bA \obM^{\ell-r}) + \left( \frac{n}{\lambda} + 1 \right)^{\ell-r} n^{(D - D')/q} \Tr(\bA \obM^{\ell-r})\\ \leq&~ C_{x,K,D,q,\ell} \cdot \nu_\lambda (n)^\ell \Tr(\bA \obM^\ell) ,
    \end{aligned}
    \]
    where we chose $D' = D + q(K+\ell)$ and used the assumption that $\lambda \cdot \nu_\lambda (n) \geq n^{-K}$. The other inequalities follow similarly.

\paragraph*{Proof of Lemma \ref{lem:tech_upper_bound_AM}.(c):} Using Sherman-Morrison identity \eqref{eq:standard_identities_SMW} and Lemma \ref{lem:tech_bound_zAz}, we get
    \[
    \begin{aligned}
     \left\vert \E\left[ \Tr (\bM - \bM_-)\right] \right\vert =&~ \left| \E \left[ \frac{\bz^\sT \bM_-^2  \bz}{1 + \bz^\sT \bM_- \bz}\right] \right|\\
     \leq&~ C_x \left\{ \E [ \Tr( \bM_-^2) ] + \varphi_1 (p) \E [ \|\bM_-^2 \|_F  ] \right\}  \\
     \leq&~ C_{x,K} \left\{ \frac{\nu_\lambda (n)^2}{n} + \frac{\varphi_1 (p) \nu_\lambda(n)^2}{n^{3/2}} \right\} \leq C_{x,K} \frac{\nu_\lambda(n)^2}{n},
     \end{aligned}
    \]
    where we used Lemma \ref{lem:tech_upper_bound_AM}.(b) in the second inequality and condition \eqref{eq:conditions_tech_upper_bound_AM} in the last inequality.

We proceed similarly for the second inequality. Note that we can decompose
\[
\begin{aligned}
    \E[ \Tr(\bM^2 - \bM_-^2)] = - 2 \E\left[ \frac{\bz^\sT \bM_-^3 \bz}{1 + \bz^\sT \bM_- \bz}\right] + \E\left[\frac{(\bz^\sT \bM_-^2 \bz)^2}{(1 + \bz^\sT \bM_- \bz)^2}\right].
\end{aligned}
\]
Applying Lemma \ref{lem:tech_bound_zAz} over $\bz$ followed by Lemma \ref{lem:tech_upper_bound_AM}.(b) on $\bM_-$, we deduce that
\[
\begin{aligned}
   \left| \E[ \Tr(\bM^2 - \bM_-^2)] \right| \leq C_{x,K} \frac{\nu_\lambda (n)^3}{n^2} +  C_{x,K} \frac{\nu_\lambda (n)^4}{n^2},
\end{aligned}
\]
which concludes the proof of the lemma.

    % First, if $\varphi_1^{\bA} = 1$, we can directly use Lemma \ref{lem:tech_upper_bound_AM} applied to $\bM_-$ and get
    % \[
    %  \left\vert \E\left[ \Tr (\bA (\bM - \bM_-))\right] \right\vert \leq C_{x,K} \frac{\nu_\lambda (n)^2}{n} \left\{ \E [ \Tr(\bA\bM)] + B(\bA) + \Tr(\bA \obM) \right\} 
    % \]
    % Assuming that $C_{x,K} \nu_\lambda (n)/n \leq 1/2$ yields the first inequality.

    % If $\varphi_1^{\bA} (p) > 1$, we use the same bound as above for $\E [ \Tr( \bM_- \bA \bM_-) ]$ and apply the second bound in Lemma \ref{lem:tech_bound_MAM} for $\bM_-$ and Lemma \ref{lem:tech_upper_bound_AM} to get
    % \[
    % \begin{aligned}
    %     \E [ \|\bM_- \bA \bM_- \|_F ] \leq \| \bA \|_\op^{1/2} \E[ \Tr( \bM_-^3 \bA )^{1/2}] \leq&~ C_{x,K}\frac{\nu_\lambda (n)^{3/2}}{n^{3/2}} \left\{ \E [ \Tr(\bA \bM) ] + \Tr(\bA \obM) \right\}^{1/2} \\
    %     \leq&~ C_{x,K} \frac{\nu_\lambda (n)^{2}}{n^{3/2}} \Tr(\bA \obM)^{1/2}  ,
    % \end{aligned}
    % \]
    % which concludes the proof.

    \subsubsection{Proof of Lemma \ref{lem:tech_bound_M_EM}}\label{app:proof_lem_tech_bound_M_EM}

    From the proof of Proposition \ref{prop:TrAM_martingale} with $\bA = \id$, we see that for any $D >0$ there exists $C_{x,K,D}$ such that with probability at least $1 -n^{-D}$,
    \[
    \left\vert \Tr(\bM) - \E[\Tr(\bM)] \right\vert \leq R_{D}:= C_{x,K,D} \frac{\varphi_1(p) \nu_\lambda (n)^3 \log^{\beta +\frac12 (n)}}{n}.
    \]
 Denote $\cA_{D}$ this event. By triangular inequality, we have
    \[
    \begin{aligned}
        \E\left[\left( \Tr(\bM) - \E[ \Tr(\bM)]\right)^q\right]^{1/q} \leq&~ R_{D} + 2 \frac{\Tr(\bSigma)}{\lambda} \E [\ind_{\cA_{\tilde{D}}^c}]^{1/q} \leq  R_{D} + 2 \frac{\Tr(\bSigma)}{\lambda} n^{-D/q}  .
    \end{aligned}
    \]
   We conclude by setting $D = q(K+1)$ and using the condition $\lambda \cdot \nu_\lambda(n) \geq n^{-K}$.

    % \[
    % S_n := \Tr(\bM) - \E[\Tr(\bM)]. 
    % \]
    % By Proposition \ref{prop:TrAM_martingale} with $\bA = \id$, we conclude directly that for any $\tilde{D}>0$, there exists $C_{x,K,\tilde{D}}$ such that with probability at least $1 -n^{-\tilde{D}}$, we have 
    % \[
    % |S_n| \leq R_{\tilde{D}}:= C_{x,K,\tilde{D}} \frac{\varphi_1(p) \nu_\lambda (n)^3 \log^{\beta +\frac12 (n)}}{n}.
    % \]
    % Denote $\cA_{\tilde{D}}$ this event. By triangular inequality, we have
    % \[
    % \begin{aligned}
    %     \E_j\left[\left( \Tr(\bM) - \E[ \Tr(\bM)]\right)^q\right]^{1/q} \leq&~ R_{\tilde{D}} + 2 \frac{\Tr(\bSigma)}{\lambda} \E_j [\ind_{\cA_{\tilde{D}}^c}]^{1/q} .
    % \end{aligned}
    % \]
    % By Markov's inequality, we simply have
    % \[
    % \P \left( \E_j [\ind_{\cA_{\tilde{D}}^c}] \geq n^{D} \P (\cA_{\tilde{D}}) \right) \leq n^{-D}.
    % \]
    % Thus,
    % \[
    %  \begin{aligned}
    %     \E_j\left[\left( \Tr(\bM) - \E[ \Tr(\bM)]\right)^q\right]^{1/q} \leq&~ R_{\tilde{D}} + 2 \frac{\Tr(\bSigma)}{\lambda} n^{(D- \tilde{D})/q}  \leq C_{x,K,D,q} \frac{\varphi_1(p) \nu_\lambda (n)^3 \log^{\beta +\frac12 (n)}}{n},
    % \end{aligned}
    % \]
    % where we chose $\tilde{D} = D+q(K+1)$ and used the condition $\lambda \cdot \nu_\lambda(n) \geq n^{-K}$.

\clearpage

\section{Proofs for the test error and GCV estimator}
\label{app:main_proofs}

In this appendix, we prove the main theorems of this paper. We begin in Section \ref{app_test:preliminaries} by introducing some notations that will be useful in the proofs. For the readers' convenience, we then restate our assumptions and the different deterministic equivalents in Section \ref{app_test:assumptions}. The proofs of the non-asymptotic bounds for the Stieltjes transform, training error, and test error can be found in Sections \ref{app_test_error:Stieltjes_general}, \ref{app_test_error:training_error_general}, and \ref{app_test_error:test_error_general} respectively. We defer the proof of some technical results to Sections \ref{app_test_error:reduced_fixed_points} and \ref{app_test_error:tech_high-degree_target}. Finally, we use these non-asymptotic bounds to prove the uniform convergence of the GCV estimator (Theorem \ref{thm:abstract_GCV}) in Section \ref{app_test:GCV}.

\subsection{Preliminaries}
\label{app_test:preliminaries}

Let us begin by introducing some convenient notations that we will use throughout Appendix \ref{app:main_proofs}. Throughout the proofs, we will track the dependency in $\lambda$, $r_{\seff,\evn}, \varphi_1 (\evn), \varphi_{2,n} (\evn)$. For the remaining constants, recall that we denote $C_{a_1,\dots,a_k}$ constants that only depend on the values of $\{a_i\}_{i\in[k]}$. In particular, the value of these constants is allowed to change from line to line. We use $a_i = `x$' to denote the dependency on $\sfc_x,\sfC_x,\beta$ from Assumptions \ref{ass:main_assumptions}, and $a_i = `\eps$' to denote the dependency on $\tau^2_\eps$ from Assumption \ref{ass:noise_subGaussian}. 

For readability, we will simplify the notations from the main text in the proofs. We will use the subscript `$0$' instead of `$\leq \evn$' to refer to the low-degree part and `$+$' instead of `$>\evn$' to refer to the high-degree part. Thus we will denote the low-degree feature $\bx_0 = \bx_{\leq \evn}$ and high-degree feature $\bx_+ = \bx_{>\evn}$. Similarly their associated covariance matrices will be denoted $\bSigma_0 = \bSigma_{\leq \evn} = \E[ \bx_0 \bx_0^\sT]$ and $\bSigma_+ = \bSigma_{>\evn} = \E[\bx_+ \bx_+^\sT]$. We introduce the feature matrices $\bX_0 = [\bx_{0,1} , \ldots \bx_{0,n}]^\sT \in \R^{n \times \evn}$ and $\bX_+ = [\bx_{+,1} , \ldots \bx_{+,n}]^\sT \in \R^{n \times (p-\evn)}$, and their whitened counterparts $\bZ_0 = \bX_0 \bSigma_0^{-1/2}$ and $\bZ_+ = \bX_+ \bSigma_+^{-1/2}$. We can therefore write the feature matrices and covariance in block matrix form
\[
\bX = [\bX_0,\bX_+], \qquad\quad \bZ = [\bZ_0,\bZ_+], \qquad\quad \bSigma = \diag (\bSigma_0 , \bSigma_+).
\]
We further denote $\gamma_+ = \Tr(\bSigma_+)$ the self-induced regularization from the high-frequency part of the kernel, and $\xi_+ = \lambda_{\max} (\bSigma_+) = \xi_{\evn+1}$. We introduce $\lambda_+ = \lambda+\gamma_+$ to denote the effective ridge regularization of the model associated to $\bX_+$.

We will denote
\begin{equation}\label{eq:def_nu_0_lambda_+}
\nu_{\lambda_+} (n) = 1 + \frac{\xi_{\lfloor \eta n \rfloor, \evn} r_{\seff, \evn} (n) \sqrt{\log(r_{\seff,\evn} (n))}}{\lambda_+},
\end{equation}
where $r_{\seff,\evn}$ is the effective rank of $\bSigma_0$ defined in Definition \ref{def:effective_tail_rank}.(b) and we recall that we denote $\xi_{\lfloor \eta n \rfloor, \evn} = \xi_{\lfloor \eta n \rfloor}$ if $\lfloor \eta n \rfloor \leq \evn$ and $\xi_{\lfloor \eta n \rfloor, \evn} = 0$ otherwise.

%-----
Recall that we consider a target function $f_* (\bx) = \< \bbeta_*, \bz\> = \<\btheta_* , \bx\>$, where $\btheta_* = \bSigma^{-1/2} \bbeta_*$. We denote $\bbeta_0 = \bbeta_{*,\leq \evn}$ and $\bbeta_+ = \bbeta_{*,>\evn}$. Similarly, $\btheta_0 = \bSigma_0^{-1/2} \bbeta_0$ and $\btheta_+ = \bSigma_+^{-1/2} \bbeta_+$, and we introduce the low-degree and high-degree parts of the target function $f_0 (\bx) = \< \btheta_0, \bx_0\>$ and $f_+ (\bx) = \< \btheta_+ , \bx_+\>$. We will denote the $n$-dimensional vectors
\[
\begin{aligned}
    \by =&~ (y_1 , \ldots , y_n),\\
    \boldf =&~ (f_*(\bx_1), \ldots , f_*(\bx_n)),\\
    \boldf_0 =&~ ( f_0 (\bx_{1}), \ldots, f_0 (\bx_{n})), \\
    \boldf_+ =&~ ( f_+ (\bx_{1}), \ldots, f_+ (\bx_{n})), \\
    \beps =&~ (\eps_1, \ldots, \eps_n).
\end{aligned}
\]
In particular, $\by = \boldf + \beps$ and $\boldf = \boldf_0 + \boldf_+$.

In this section, we will consider two effective regularizations: $\lambda_*$ associated to the original model $(n,\bSigma,\lambda)$ and $\lambda_{*,0}$ associated to the truncated model $(n,\bSigma_0,\lambda_+)$. Recall that $\lambda_*$ and $\lambda_{*,0}$ are defined as the unique non-negative solutions to the two fixed point equations 
\[
n - \frac{\lambda}{\lambda_*} = \Tr(\bSigma (\bSigma+ \lambda_*)^{-1}), \qquad n - \frac{\lambda_+}{\lambda_{*,0} } = \Tr( \bSigma_0 ( \bSigma_0 + \lambda_{*,0} )^{-1} )  .
\]
It will be convenient to introduce the following notations:
\[
\begin{aligned}
&\Upsilon_{1} = \frac{1}{n}\Tr( \bSigma (\bSigma  + \lambda_{*} )^{-1} ), \qquad &&\Upsilon_{2} =\frac{1}{n} \Tr( \bSigma^2 (\bSigma + \lambda_{*} )^{-2} ), \\
&\Upsilon_{1,0} = \frac{1}{n}\Tr( \bSigma_0 (\bSigma_0 + \lambda_{*,0} )^{-1} ), \qquad &&\Upsilon_{2,0} =\frac{1}{n} \Tr( \bSigma_0^2 (\bSigma_0 + \lambda_{*,0} )^{-2} ).
\end{aligned}
\]

We further introduce the following matrices that will appear in the proofs
\[
\begin{aligned}
&\bG = (\bX \bX^\sT + \lambda)^{-1}, \qquad \bG_0 = (\bX_0\bX_0^\sT + \lambda_+ )^{-1}, \qquad \bDelta_+ = \bX_+ \bX_+^\sT - \gamma_+ \id,\\
& \bR = (\bX^\sT \bX + \lambda)^{-1} , \qquad \bR_0 = (\bX_0^\sT \bX_0 + \lambda_+ )^{-1} , \qquad \bM_0 = \bSigma_0^{1/2} \bR_0 \bSigma_0^{1/2}.
\end{aligned}
\]

Finally, our proofs will crucially rely on applying the deterministic equivalent bounds proved in Appendix \ref{app:det_equiv}. For convenience, we will denote $\cE_{j,n} (\evn)$ the decay rates for the functionals $\Phi_j(\bX_0) , j \in \{1 , \ldots , 4\}$, with regularization parameter $\lambda_+$, such that for any p.s.d.~matrix $\bA \in \R^{\evn \times \evn}$, we have
\begin{equation}\label{eq:decay_rates}
\left| \Phi_j (\bX_0 ; \bA) - \Psi_j (\mu_{*,0} ; \bA) \right| \leq \cE_{j,n} (\evn) \cdot \Psi_j (\mu_{*,0} ; \bA),
\end{equation}
with probability at least $1 - n^{-D}$, where $\mu_{*,0} = \lambda_+ / \lambda_{*,0}$. From Theorems \ref{thm_app:det_equiv_TrAM}--\ref{thm_app:det_equiv_TrAMZZM}, these decay rates are given by
\[
\begin{aligned}
    \cE_{1,n} (\evn) = \cE_{2,n} (\evn) =&~  C_{x,K,D}\frac{\varphi_1 (\evn) \nu_{\lambda_+} (n)^{5/2} \log^{\beta + \frac12} (n) }{\sqrt{n}},\\
    \cE_{3,n} (\evn) =&~   C_{x,K,D}\frac{\varphi_1 (\evn) \nu_{\lambda_+} (n)^{6} \log^{2\beta + \frac12} (n) }{\sqrt{n}}, \\
    \cE_{4,n} (\evn) =&~  C_{x,K,D}\frac{\varphi_1 (\evn) \nu_{\lambda_+} (n)^{6} \log^{\beta + \frac12} (n) }{\sqrt{n}}.
\end{aligned}
\]

\subsection{Assumptions and deterministic equivalents}
\label{app_test:assumptions}

For the reader's convenience, we restate our assumptions here.

\begin{assumption}[Concentration at $n \in \naturals$]\label{ass_app:main_assumptions}
There exist $\sfc_x,\sfC_x,\beta>0$ and $\evn \in \naturals \cup \{ \infty\}$ such that
\[
\frac{\lambda_+}{\xi_+} \geq 2n,
\]
and the following hold.
\begin{itemize}
    \item[\emph{(a)}] \emph{(Low-degree features.)} There exists $\varphi_1 (\evn)>0$ such that for any vector $\bv \in \R^{\evn}$ with $\| \bSigma_{0}^{1/2} \bv \|_2 < \infty$ and any p.s.d.~matrix $\bA \in \R^{\evn \times \evn}$ with $\Tr(\bSigma_{0} \bA) < \infty$, we have
    \begin{align}
        \P \left( \big\vert \< \bv, \bx_{0} \>  \big\vert \geq t \cdot   \bv^\sT \bSigma_{0} \bv \right) \leq&~ \sfC_x \exp \left\{ - \sfc_x t^{2/\beta } \right\}, \label{eq_app:ass_quad_concentration_1} \tag{a1}\\
    \P \left( \big\vert \bx_{0}^\sT \bA \bx_{0} - \Tr( \bSigma_{0} \bA) \big\vert \geq t \cdot \varphi_{1} (\evn) \cdot \big\| \bSigma_{0}^{1/2} \bA \bSigma_{0}^{1/2} \big\|_F \right) \leq&~ \sfC_x \exp \left\{ - \sfc_x t^{1/\beta } \right\}  . \label{eq_app:ass_quad_concentration_2} \tag{a2} 
    \end{align}

    \item[\emph{(b)}] \emph{(High-degree features.)} There exist $p_{2,n} (\evn) \in (0,1)$ and $\varphi_{2,n} (\evn) \geq 1$ such that with probability at least $1 - p_{2,n} (\evn)$, we have
    \begin{equation}
    \| \bDelta_+ \|_\op = \| \bX_{+} \bX_{+}^\sT - \gamma_+ \cdot  \id_n \|_\op \leq \varphi_{2,n} (\evn) \sqrt{\frac{n\xi_+}{\lambda_+}} \cdot \lambda_+. \tag{b1}
    \end{equation}

    \item[\emph{(c)}] \emph{(Target function.)} The high-degree part of the target function satisfies the tail bound
    \begin{equation}
\P \left( | f_{+} (\bx ) | \geq t \cdot \| f_{+} \|_{L^2} \right) \leq \sfC_x \exp \left\{ - \sfc_x t^{2/\beta} \right\}. \tag{c1}
\end{equation}
\end{itemize}
\end{assumption}

\begin{assumption}[Label noise]\label{ass_app:noise_subGaussian}
    The label noise $\eps_i$ is independent, mean-zero, and $\tau_\eps^2$-subgaussian with variance denoted $\sigma_\eps^2 = \E[ \eps_i^2]$.
\end{assumption}

In this appendix, we are interested in proving deterministic approximations to the Stieltjes transform of the empirical kernel matrix
\begin{equation}\label{eq:def_Stieltjes_app}
 s_n (\bX, \lambda) := \frac{1}{n}  \Tr \big[ ( \bX \bX^\sT + \lambda)^{-1} \big],
\end{equation}
and to the training and test errors
\begin{align}
    \cL_{\train} ( \bbeta_*;\bX,\beps, \lambda) =&~ \frac{1}{n} \sum_{i \in [n]} \big(y_i - \hat f_\lambda (\bx_i)\big)^2, \label{eq:def_training_app}\\
    \cR_{\test} ( \bbeta_*;\bX,\beps, \lambda) =&~ \E \big[ \big( y - \hat f_\lambda (\bx) \big)^2\big]. \label{eq:def_test_app}
\end{align}
We show that under Assumptions \ref{ass_app:main_assumptions} and \ref{ass_app:noise_subGaussian}, these quantities are well approximated by the following deterministic equivalents
\begin{align}
  \sfs_n (\lambda) :=&~ \frac{1}{n \lambda_*}, \label{eq:def_equiv_Stieltjes_app}\\
    \sL_n (\bbeta_* , \lambda) := &~   \left( \frac{\lambda}{n \lambda_*}\right)^2 \cdot \frac{\lambda_*^2 \< \bbeta_*, (\bSigma + \lambda_*)^{-2} \bbeta_*\> +\sigma_\eps^2}{1 - \frac{1}{n} \Tr( \bSigma^2 (\bSigma + \lambda_*)^{-2})} , \label{eq:def_equiv_training_app}\\
    \sR_{n} (\bbeta_* , \lambda) := &~ \frac{\lambda_*^2 \< \bbeta_*, (\bSigma + \lambda_*)^{-2} \bbeta_*\> + \sigma_\eps^2 }{1 - \frac{1}{n} \Tr( \bSigma^2 (\bSigma + \lambda_*)^{-2})},\label{eq:def_equiv_test_app}
\end{align}
where $\lambda_*$ is the effective regularization associated to $(n,\bSigma,\lambda)$.

The proof is organized as follows. In Section \ref{app_test_error:reduced_fixed_points}, we introduced reduced deterministic equivalents obtained by treating the high-frequency part of the features as an additive regularization and the high-degree part of the target function as independent label noise. We show in Lemma \ref{lem:reduced_det_equiv} that these reduced deterministic equivalents approximate well the full deterministic equivalents \eqref{eq:def_equiv_Stieltjes_app}--\eqref{eq:def_equiv_test_app} when $\evn$ is chosen sufficiently large. The proofs for the Stieltjes transform, training error and test error can be found in Sections \ref{app_test_error:Stieltjes_general}, \ref{app_test_error:training_error_general} and \ref{app_test_error:test_error_general} respectively. Technical lemmas on the contribution of the high-degree part of the target function are deferred to Section \ref{app_test_error:tech_high-degree_target}.

\subsection{Reduced deterministic equivalents}\label{app_test_error:reduced_fixed_points}

As mentioned in the preliminaries, we consider two effective regularizations: $\lambda_*$ associated to $(n,\bSigma,\lambda)$ and $\lambda_{*,0}$ associated to $(n,\bSigma_0,\lambda_+)$. Recall that they are defined as the unique non-negative solutions to the fixed point equations
\begin{align}
n - \frac{\lambda}{\lambda_*} =&~ \Tr(\bSigma (\bSigma+ \lambda_*)^{-1}),\label{eq:fixed_point_equations_star_app} \\
 n - \frac{\lambda_+}{\lambda_{*,0} } =&~ \Tr( \bSigma_0 ( \bSigma_0 + \lambda_{*,0} )^{-1} )  . \label{eq:fixed_point_equations_0_app}
\end{align}
We will further consider the following change of variables
\[
\mu_* = \lambda/\lambda_*, \qquad \quad \mu_{*,0} = \lambda_+/ \lambda_{*,0}.
\]
Intuitively, the cutoff $\evn$ is chosen such that $\xi_+ \ll \lambda_*$, and thus $\Tr(\bSigma_+ (\bSigma_+ + \lambda_*)^{-1}) \approx \Tr(\bSigma_+)/ \lambda_*$. By rearranging the terms, the fixed point equation~\eqref{eq:fixed_point_equations_star_app} becomes approximately the same as Eq.~\eqref{eq:fixed_point_equations_0_app}, and therefore $\lambda_* \approx \lambda_{*,0}$. The following lemma formalize this intuition.

\begin{lemma}\label{lem:reduced_fixed_point}
    We have
    \[
    0 \leq \frac{\lambda_{*,0} - \lambda_*}{\lambda_{*,0}} \leq \frac{n\xi_+}{\lambda_+}.
    \]
\end{lemma}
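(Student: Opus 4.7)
The plan is to derive a single identity relating $\lambda_{*,0}-\lambda_*$ to a trace involving only $\bSigma_+$, and then to bound both sides carefully.

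First, I will rewrite each fixed point equation by multiplying through by $\lambda_*$ (respectively $\lambda_{*,0}$) and using the algebraic identity $t\,\bSigma(\bSigma+t)^{-1}= \bSigma-\bSigma^2(\bSigma+t)^{-1}$. Equations \eqref{eq:fixed_point_equations_star_app}--\eqref{eq:fixed_point_equations_0_app} become
\begin{align*}
n\lambda_* &= \lambda + \Tr(\bSigma)-\Tr\!\big(\bSigma^2(\bSigma+\lambda_*)^{-1}\big),\\
n\lambda_{*,0} &= \lambda_+ + \Tr(\bSigma_0)-\Tr\!\big(\bSigma_0^2(\bSigma_0+\lambda_{*,0})^{-1}\big).
\end{align*}
Subtracting, the contributions $\lambda_+-\lambda=\gamma_+$ and $\Tr(\bSigma)-\Tr(\bSigma_0)=\gamma_+$ cancel, leaving
\[
n(\lambda_{*,0}-\lambda_*) = \Tr\!\big(\bSigma^2(\bSigma+\lambda_*)^{-1}\big)-\Tr\!\big(\bSigma_0^2(\bSigma_0+\lambda_{*,0})^{-1}\big).
\]
Splitting the first trace along the block decomposition $\bSigma=\mathrm{diag}(\bSigma_0,\bSigma_+)$ and applying the resolvent identity
\[
(\bSigma_0+\lambda_*)^{-1}-(\bSigma_0+\lambda_{*,0})^{-1}=(\lambda_{*,0}-\lambda_*)(\bSigma_0+\lambda_*)^{-1}(\bSigma_0+\lambda_{*,0})^{-1},
\]
I obtain the clean identity
\[
(\lambda_{*,0}-\lambda_*)\Big[\,n-\Tr\!\big(\bSigma_0^2(\bSigma_0+\lambda_*)^{-1}(\bSigma_0+\lambda_{*,0})^{-1}\big)\Big] = \Tr\!\big(\bSigma_+^2(\bSigma_++\lambda_*)^{-1}\big).
\]

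From this identity, the lower bound $\lambda_{*,0}\geq\lambda_*$ is immediate: the right-hand side is nonnegative, and the bracket on the left is positive since
\[
\Tr\!\big(\bSigma_0^2(\bSigma_0+\lambda_*)^{-1}(\bSigma_0+\lambda_{*,0})^{-1}\big)\le \Tr\!\big(\bSigma_0(\bSigma_0+\lambda_{*,0})^{-1}\big)\le n-\lambda_+/\lambda_{*,0}<n,
\]
where the last inequality uses $\bSigma_0(\bSigma_0+\lambda_*)^{-1}\preceq\id$ and the fixed point equation \eqref{eq:fixed_point_equations_0_app} for $\lambda_{*,0}$.

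For the upper bound, I will use the same inequality to get the quantitative estimate
\[
n-\Tr\!\big(\bSigma_0^2(\bSigma_0+\lambda_*)^{-1}(\bSigma_0+\lambda_{*,0})^{-1}\big)\;\ge\;\frac{\lambda_+}{\lambda_{*,0}},
\]
and on the right-hand side I use $\bSigma_+^2\preceq\xi_+\bSigma_+$ together with $\Tr(\bSigma_+(\bSigma_++\lambda_*)^{-1})\leq \Tr(\bSigma(\bSigma+\lambda_*)^{-1})=n-\lambda/\lambda_*\le n$ to get
\[
\Tr\!\big(\bSigma_+^2(\bSigma_++\lambda_*)^{-1}\big)\le \xi_+\,n.
\]
Combining the last three displays yields
\[
(\lambda_{*,0}-\lambda_*)\cdot\frac{\lambda_+}{\lambda_{*,0}}\;\le\;n\,\xi_+,
\]
i.e.\ $(\lambda_{*,0}-\lambda_*)/\lambda_{*,0}\le n\xi_+/\lambda_+$, which is the claimed bound.

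The only step requiring any care is choosing where to place $\lambda_{*,0}$ versus $\lambda_*$ in the telescoping bound so that the final ratio comes out in the correct form $n\xi_+/\lambda_+$; using $\lambda_{*,0}$ in the denominator bound (via the fixed-point equation for $\lambda_{*,0}$) rather than $\lambda_*$ is what produces the scaling claimed in the lemma. Everything else is a routine resolvent manipulation.
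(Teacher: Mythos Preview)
Your proof is correct and takes essentially the same approach as the paper. Both arrive at the identity
\[
(\lambda_{*,0}-\lambda_*)\Big[n-\Tr\!\big(\bSigma_0^2(\bSigma_0+\lambda_*)^{-1}(\bSigma_0+\lambda_{*,0})^{-1}\big)\Big]=\Tr\!\big(\bSigma_+^2(\bSigma_++\lambda_*)^{-1}\big),
\]
bound the bracket below by $\lambda_+/\lambda_{*,0}$ via the fixed point equation for $\lambda_{*,0}$, and bound the right side by $n\xi_+$; the only cosmetic difference is that the paper first deduces $\lambda_{*,0}\ge\lambda_*$ from monotonicity of the fixed point in the regularization, whereas you read it off directly from the sign of the identity.
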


\begin{proof}[Proof of Lemma \ref{lem:reduced_fixed_point}]
By removing $\gamma_+/\lambda_*$ on both sides of the fixed point equation for $\lambda_*$ and rearranging the terms, we get
\[
\begin{aligned}
    n - \frac{\lambda_+}{\lambda_*} - \Tr( \bSigma_0 (\bSigma_0 + \lambda_*)^{-1}) = - \frac{\delta}{\lambda_*},
\end{aligned}
\]
where $\delta =  \Tr(\bSigma_+) - \lambda_*\Tr( \bSigma_+ (\bSigma_+ + \lambda_*)^{-1}) = \Tr( \bSigma_+^2  (\bSigma_+ + \lambda_*)^{-1}) >0 $. In particular, using that $\lambda_{*,0} (\lambda_+)$ is increasing in $\lambda_+$, this implies the first inequality $\lambda_{*,0} \geq \lambda_*$. 

Further replacing $n = \lambda_+ / \lambda_{*,0} + \Tr( \bSigma_0 (\bSigma_0 + \lambda_{*,0})^{-1})$, we obtain
\[
\begin{aligned}
- \frac{\delta}{\lambda_*}
    &= \frac{\lambda_+}{\lambda_{*,0}} + \Tr( \bSigma_0 (\bSigma_0 + \lambda_{*,0})^{-1}) - \frac{\lambda_+}{\lambda_*} - \Tr( \bSigma_0 (\bSigma_0 + \lambda_*)^{-1}) \\
    &= \left\{ \frac{\lambda_+}{\lambda_* \lambda_{*,0}}  + \Tr( \bSigma_+ (\bSigma_0 + \lambda_*)^{-1} (\bSigma_0 + \lambda_{*,0})^{-1} \right\}(\lambda_* - \lambda_{*,0})\\
    &\le  \frac{\lambda_+}{\lambda_* \lambda_{*,0}}
    (\lambda_* - \lambda_{*,0})
\end{aligned}
\]
Thus, we deduce that
\[
\frac{\lambda_{*,0} - \lambda_*}{\lambda_{*,0}} \leq \frac{\delta}{\lambda_+} \leq \frac{ n \xi_+ }{ \lambda_+},
\]
where we used that $\delta \leq \xi_+ \Tr( \bSigma_+  (\bSigma_+ + \lambda_*)^{-1}) \leq n \xi_+$.
\end{proof}

In particular, Lemma \ref{lem:reduced_fixed_point} implies that if we choose $\evn \in \naturals$ with $\xi_{\evn +1} \equiv \xi_+\leq \lambda_+/(2n)$, then
\begin{equation}\label{eq:lambda0_lower_bound_lambda_star}
\lambda_* \geq \frac{\lambda_{*,0}}{2} \geq \frac{\lambda_+}{2n}.
\end{equation}

The proof of the deterministic equivalents proceeds by splitting the analysis of the low-frequency and high-frequency parts of the features. For the low-frequency part, we will simply apply the deterministic equivalents studied in Appendix \ref{app:det_equiv}, with covariance $\bSigma_0$ and regularization parameter $\lambda_+$. On the other hand, we will show in Section \ref{app_test_error:tech_high-degree_target} that the high-frequency behaves effectively as an independent additive noise with variance $\| f_+ \|_{L^2}^2 = \| \bbeta_+ \|_2^2$. Hence, we will establish that the Stieltjes transform, training error, and test error are well approximated by the following reduced deterministic equivalents associated to $\lambda_{*,0}$:
\begin{align}
   \sfs_{n,0} (\lambda_+) :=&~ \frac{1}{n \lambda_{*,0}}, \label{eq:def_reduced_equiv_Stieltjes}\\
    \sL_{n,0} (\bbeta_* , \lambda_+) := &~   \left( \frac{\lambda}{n \lambda_{*,0}}\right)^2 \frac{\lambda_{*,0}^2 \< \bbeta_0, (\bSigma_0 + \lambda_{*,0})^{-2} \bbeta_0\> + \| \bbeta_+ \|_2^2 + \sigma_\eps^2 }{1 - \frac{1}{n} \Tr( \bSigma_0^2 (\bSigma_0 + \lambda_{*,0})^{-2})}, \label{eq:def_reduced_equiv_training}\\
    \sR_{n,0} (\bbeta_* , \lambda_+) := &~ \frac{\lambda_{*,0}^2 \< \bbeta_0, (\bSigma_0 + \lambda_{*,0})^{-2} \bbeta_0\> + \| \bbeta_+ \|_2^2 + \sigma_\eps^2 }{1 - \frac{1}{n} \Tr( \bSigma_0^2 (\bSigma_0 + \lambda_{*,0})^{-2})}. \label{eq:def_reduced_equiv_test}
\end{align}
The following lemma shows that the full deterministic equivalents associated to the full model $(n, \bSigma,\lambda)$ are well approximated by these reduced deterministic equivalents associated to the truncated model $(n, \bSigma_0,\lambda_+)$.

\begin{lemma}\label{lem:reduced_det_equiv}
    Let $\sL_n (\bbeta_*,\lambda)$, $\sR_n (\bbeta_*,\lambda)$, and $\sfs_n(\lambda)$ be defined as per Eqs.~\eqref{eq:def_equiv_Stieltjes_app}--\eqref{eq:def_equiv_test_app}, and $\sL_{n,0} (\bbeta_*,\lambda_+)$, $\sR_{n,0} (\bbeta_*,\lambda_+)$, and $ \sfs_{n,0}(\lambda_+)$ be defined as per Eqs.~\eqref{eq:def_reduced_equiv_Stieltjes}--\eqref{eq:def_reduced_equiv_test}, where $\lambda_+ = \lambda + \gamma_+$. Let $\nu_{\lambda_+} (n)$ be defined as per Eq.~\eqref{eq:def_nu_0_lambda_+} and assume that $\xi_+ \leq \lambda_+/(2n)$. Then, there exists a universal constant $C>0$ such that
    \begin{align*}
    \left| \sfs_n(\lambda) - \sfs_{n,0}(\lambda_+) \right|\leq&~ \frac{n\xi_+}{\lambda_+} \cdot \sfs_n(\lambda), \\
        \left| \sL_n (\bbeta_*,\lambda) - \sL_{n,0} (\bbeta_*,\lambda_+)\right|\leq&~C \nu_{\lambda_+} (n) \frac{n\xi_+}{\lambda_+} \cdot \sL_n (\bbeta_*,\lambda),\\
        \left| \sR_n (\bbeta_*,\lambda) - \sR_{n,0} (\bbeta_*,\lambda_+) \right|\leq&~ C \nu_{\lambda_+} (n) \frac{n\xi_+}{\lambda_+}\cdot \sR_n (\bbeta_*,\lambda).
        \end{align*}
\end{lemma}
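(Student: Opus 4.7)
My plan is to reduce all three comparisons to (i) the key quantitative estimate $0 \leq (\lambda_{*,0} - \lambda_*)/\lambda_{*,0} \leq n\xi_+/\lambda_+$ provided by Lemma~\ref{lem:reduced_fixed_point}, and (ii) the a priori bound $\lambda_* \geq \lambda_+/(2n)$ from Eq.~\eqref{eq:lambda0_lower_bound_lambda_star}, which is a direct consequence of the standing hypothesis $\xi_+ \leq \lambda_+/(2n)$. For the Stieltjes transform this is immediate: factoring
\[
\sfs_n(\lambda) - \sfs_{n,0}(\lambda_+) = \frac{\lambda_{*,0}-\lambda_*}{n\lambda_*\lambda_{*,0}} = \frac{\lambda_{*,0}-\lambda_*}{\lambda_{*,0}}\cdot\sfs_n(\lambda),
\]
and invoking Lemma~\ref{lem:reduced_fixed_point} produces the claim on the nose.

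For the test error I will compare the numerator and denominator of $\sR_n$ against those of $\sR_{n,0}$ term-by-term. Split the numerator as
\[
\lambda_*^2 \langle \bbeta_*, (\bSigma + \lambda_*)^{-2}\bbeta_*\rangle = A_0(\lambda_*) + A_+(\lambda_*),
\]
where $A_0, A_+$ correspond to the low-/high-frequency projections. Since $\xi \leq \xi_+ \leq \lambda_*$ on the high-frequency eigenspaces, the elementary inequality $|1 - \lambda^2/(\xi+\lambda)^2| \leq 2\xi/\lambda$ yields $|A_+(\lambda_*) - \|\bbeta_+\|_2^2| \leq (2\xi_+/\lambda_*)\|\bbeta_+\|_2^2 \leq (4n\xi_+/\lambda_+)\|\bbeta_+\|_2^2$. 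For the low-frequency term, the algebraic identity
\[
\frac{\lambda_{*,0}^2}{(\xi+\lambda_{*,0})^2} - \frac{\lambda_*^2}{(\xi+\lambda_*)^2} = \frac{\xi(\lambda_{*,0}-\lambda_*)\bigl[\xi(\lambda_*+\lambda_{*,0}) + 2\lambda_*\lambda_{*,0}\bigr]}{(\xi+\lambda_*)^2(\xi+\lambda_{*,0})^2},
\]
combined with $\xi(\lambda_*+\lambda_{*,0}) + 2\lambda_*\lambda_{*,0} \leq 2\lambda_{*,0}(\xi+\lambda_*)$, yields the pointwise relative bound $|\lambda_*^2/(\xi+\lambda_*)^2 - \lambda_{*,0}^2/(\xi+\lambda_{*,0})^2| \leq \tfrac{2(\lambda_{*,0}-\lambda_*)}{\lambda_{*,0}} \cdot \tfrac{\lambda_{*,0}^2}{(\xi+\lambda_{*,0})^2}$ uniformly in $\xi$, which integrates against the spectral measure of $\bbeta_0$ to give $|A_0(\lambda_*) - A_0(\lambda_{*,0})| \leq C(n\xi_+/\lambda_+) A_0(\lambda_{*,0})$. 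I will handle the denominator $\Tr(\bSigma^2(\bSigma+\lambda_*)^{-2})$ in the same spirit: splitting into $B_0(\lambda_*) + B_+(\lambda_*)$, the low-frequency perturbation is controlled by the analogous pointwise relative bound applied to $\xi^2/(\xi+\lambda)^2$, while the tail is estimated crudely via $B_+(\lambda_*) \leq \xi_+\gamma_+/(n\lambda_*^2) \leq 4n\xi_+\gamma_+/\lambda_+^2 \leq 4n\xi_+/\lambda_+$.

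To assemble the test error bound I use the telescoping identity
\[
\sR_n - \sR_{n,0} = \sR_n \cdot \frac{\sV_0 - \sV}{\sV_0} + \frac{\sU - \sU_0}{\sV_0},
\]
where $\sU,\sV$ and $\sU_0,\sV_0$ denote the numerator and denominator of $\sR_n$ and $\sR_{n,0}$ respectively. The factor $\nu_{\lambda_+}(n)$ appearing in the target bound arises from the identity $n\lambda_{*,0}/\lambda_+ = 1 + \Tr(\obM_0)$, obtained by dividing the fixed-point equation~\eqref{eq:fixed_point_equations_0_app} by $\mu_{*,0}$; Lemma~\ref{lem:properties_mu_obM} (applied with covariance $\bSigma_0$ and regularization $\lambda_+$) then gives $1/\sV_0 \leq n\lambda_{*,0}/\lambda_+ \leq 2\nu_{\lambda_+}(n)$. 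The training error comparison is essentially identical, with the extra prefactor $(\lambda/(n\lambda_*))^2$ versus $(\lambda/(n\lambda_{*,0}))^2$ contributing a ratio $(\lambda_{*,0}/\lambda_*)^2 = 1 + O(n\xi_+/\lambda_+)$, again via Lemma~\ref{lem:reduced_fixed_point}. The main technical point will be maintaining \emph{uniform-in-$\xi$} multiplicative control in the pointwise perturbation estimates: a naïve mean-value-theorem argument produces only additive bounds that blow up when $\xi \gg \lambda_*$, and the algebraic factorization displayed above is precisely what rescues the relative bound.
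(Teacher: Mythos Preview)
Your proposal is correct and follows essentially the same route as the paper: reduce everything to Lemma~\ref{lem:reduced_fixed_point} and Eq.~\eqref{eq:lambda0_lower_bound_lambda_star}, split numerator and denominator into low-/high-frequency pieces, control each by an elementary pointwise perturbation bound on $\lambda^2/(\xi+\lambda)^2$ (resp.\ $\xi^2/(\xi+\lambda)^2$), pull the factor $\nu_{\lambda_+}(n)$ from $1/\sV_0 \leq 2\nu_{\lambda_+}(n)$ via Lemma~\ref{lem:properties_mu_obM}, and treat $\sL_n$ via the prefactor $(\lambda/(n\lambda_*))^2$. The paper's only substantive difference is organizational: it bounds $|\sU-\sU_0|$ relative to $\sU$ (rather than $\sU_0$) and uses the telescoping $\tfrac{|\sU\sV_0-\sU_0\sV|}{\sU\sV_0}\leq \tfrac{|\sU-\sU_0|}{\sU}+(1+\tfrac{|\sU-\sU_0|}{\sU})\tfrac{|\sV-\sV_0|}{\sV_0}$, which lands directly on a bound relative to $\sR_n$. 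Your telescoping $\sR_n-\sR_{n,0}=\sR_n\cdot\tfrac{\sV_0-\sV}{\sV_0}+\tfrac{\sU-\sU_0}{\sV_0}$ together with $|\sU-\sU_0|\leq C(n\xi_+/\lambda_+)\sU_0$ leaves a term $C(n\xi_+/\lambda_+)\sR_{n,0}$ that must still be converted to $\sR_n$; this is harmless --- your same algebraic factorization, combined with $\lambda_{*,0}\leq 2\lambda_*$, equally yields $|\sU-\sU_0|\leq C'(n\xi_+/\lambda_+)\sU$, which closes the loop --- but it is worth making that step explicit.
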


\begin{proof}[Proof of Lemma \ref{lem:reduced_det_equiv}]
    First, Lemma \ref{lem:reduced_fixed_point} directly implies that 
    \[
    \left| \sfs_n(\lambda) - \sfs_{n,0}(\lambda_+) \right|= \frac{| \lambda_{*,0} - \lambda_* |}{\lambda_{*,0} }\cdot \frac{1}{n\lambda_*} \leq \frac{n\xi_+}{\lambda_+}\sfs_n(\lambda).
    \]
    Denote $\sU$ and $\sV$ the numerator and denominator of $\sR_n (\bbeta_*,\lambda)$, and $\sU_0$ and $\sV_0$ the numerator and denominator of $\sR_{n,0} (\bbeta_*,\lambda_+)$. By simple algebra, we have
    \[
    \begin{aligned}
    \frac{\left| \lambda_*^2 \< \bbeta_0 , (\bSigma_0 + \lambda_{*})^{-2} \bbeta_0 \> -\lambda_{*,0}^2 \< \bbeta_0 , (\bSigma_0 + \lambda_{*,0})^{-2} \bbeta_0 \>  \right|}{\lambda_*^2 \< \bbeta_0 , (\bSigma_0 + \lambda_{*})^{-2} \bbeta_0 \>}
    \leq&~  \frac{\lambda_{*,0}^2 - \lambda_*^2}{\lambda_*^2} +2 \frac{\lambda_{*,0} - \lambda_*}{\lambda_*}  \leq 8 \frac{n\xi_+}{\lambda_+} ,
    \end{aligned}
    \]
    where we used that $\lambda_* \geq \lambda_{*,0} /2$ by Eq.~\eqref{eq:lambda0_lower_bound_lambda_star}, and
    \[
        \begin{aligned}
    \frac{\left|\| \bbeta_+ \|_2^2 -\lambda_{*}^2 \< \bbeta_+ , (\bSigma_+ + \lambda_{*})^{-2} \bbeta_+ \>  \right|}{\lambda_{*}^2 \< \bbeta_+ , (\bSigma_+ + \lambda_{*})^{-2} \bbeta_+ \>}
    \leq&~  \frac{n^2\xi_+^2}{\lambda_+^2} + 2\frac{n\xi_+}{\lambda_+}  \leq 3 \frac{n\xi_+}{\lambda_+} .
    \end{aligned}
    \]
    Thus, we obtain for the numerator
    \begin{equation}\label{eq:numerator_sU}
    | \sU - \sU_0 | \leq 8 \frac{\xi_+}{\lambda_*} \sU.
    \end{equation}
    For the denominator, we have similarly
    \[
    \begin{aligned}
        &~\left| \Tr( \bSigma^2 (\bSigma + \lambda_* )^{-2} ) - \Tr( \bSigma_0^2 (\bSigma_0 + \lambda_{*,0})^{-2})\right| \\
        \leq&~ \left( \frac{\lambda_{*,0}^2 - \lambda_*^2}{\lambda_*^2} + \frac{\lambda_{*,0} - \lambda_*}{\lambda_*} \right) \Tr( \bSigma_0^2 ( \bSigma_0 + \lambda_{*,0})^{-2} ) + \Tr( \bSigma_+^2 (\bSigma_+ + \lambda_* )^2 ) \leq 10n \frac{n\xi_+}{\lambda_*},
    \end{aligned}
    \]
    where in the last inequality we used $\Tr(\bSigma_0^2 ( \bSigma_0 + \lambda_{*,0})^{-2}) \leq\Tr(\bSigma_0 ( \bSigma_0 + \lambda_{*,0})^{-1}) \leq n$ and Eq.~\eqref{eq:lambda0_lower_bound_lambda_star}. Further observe that
    \[
    \begin{aligned}
        \left(1 - \frac{1}{n} \Tr(\bSigma_0^2 ( \bSigma_0 + \lambda_{*,0})^{-2})\right)^{-1} \leq&~  \left(1 - \frac{1}{n} \Tr(\bSigma_0 ( \bSigma_0 + \lambda_{*,0})^{-1})\right)^{-1}\\
        =&~\frac{n\lambda_{*,0}}{\lambda_+} = 1 + \Tr(\obM_0) \leq 2 \nu_{\lambda_+} (n),
    \end{aligned} 
    \]
    where we used Lemma \ref{lem:properties_mu_obM} in the last inequality. We obtain for the denominator
    \begin{equation}\label{eq:denominator_sV}
    | \sV - \sV_0 | \leq 20 \cdot \nu_{\lambda_+} (n) \frac{n\xi_+}{\lambda_+} \sV_0.
    \end{equation}

    We can now combine Eqs.~\eqref{eq:numerator_sU} and \eqref{eq:denominator_sV} to get the following bound on the test error:
    \[
    \begin{aligned}
        \frac{\left| \sR_{n} (\bbeta_* , \lambda) - \sR_{n,0} (\bbeta_*,\lambda) \right|}{\sR_{n} (\bbeta_* , \lambda)} =  \frac{\left| \sU \sV_0 - \sU_0 \sV\right|}{\sU \sV_0} \leq&~ \frac{\left| \sU - \sU_0 \right|}{\sU} + \left(1 + \frac{| \sU - \sU_0|}{\sU} \right) \frac{| \sV - \sV_0|}{\sV_0}\\
        \leq 108  \cdot \nu_{\lambda_+} (n) \frac{n\xi_+}{\lambda_+} .
    \end{aligned}
    \]
    
   Finally, the test error follows by observing that
    \[
    \begin{aligned}
        \frac{\left| \sL_{n} (\bbeta_* , \lambda) - \sL_{n,0} (\bbeta_*,\lambda) \right|}{\sL_{n} (\bbeta_* , \lambda)} \leq &~ \left( \frac{\lambda_*}{\lambda_{*,0}} \right)^2  \frac{\left| \sR_{n} (\bbeta_* , \lambda) - \sR_{n,0} (\bbeta_*,\lambda) \right|}{\sR_{n} (\bbeta_* , \lambda)} + \frac{\lambda_{*,0}^2 - \lambda_*^2}{\lambda_{*,0}^2}\\
        \leq&~110  \cdot \nu_{\lambda_+} (n) \frac{n\xi_+}{\lambda_+} .
    \end{aligned}
\]
\end{proof}

\subsection{The Stieltjes transform}
\label{app_test_error:Stieltjes_general}

\begin{theorem}[Deterministic equivalent for the Stieltjes transform]
\label{thm:stieltjes_general}
Consider $D,K>0$, integer $n$, and regularization parameter $\lambda \geq 0$.  Assume that the features $\{\bx_i\}_{i\in[n]}$ satisfy Assumption \ref{ass:main_assumptions}.(a) and (b) with some $\evn := \evn (n) \in \naturals \cup \{\infty\}$.
There exist constants $\eta := \eta_x \in(0,1/2)$, $C_{D,K} >0$, and $C_{x,D,K}>0$ such that, if it holds that $n \geq C_{D,K}$ and
\begin{equation}\label{eq:condition1_stieltjes}
\lambda_{>\evn} \cdot \nu_{\lambda,\evn} (n) \geq n^{-K},\qquad
\varphi_{2,n} (\evn)  \sqrt{\frac{n\xi_+}{\lambda_+}} \leq \frac{1}{2},
\qquad
 \varphi_1(\evn) \nu_{\lambda,\evn} (n)^{7/2} \log^{\beta + \frac{1}{2}} (n) \leq K \sqrt{n},
\end{equation}
then with probability at least $1 - n^{-D} - p_{2,n}(\evn)$, we have
\[
\left| s_{n} (\bX , \lambda) -  \sfs_{n} (\lambda)  \right| \leq  C_{x,D,K} \cdot \cE_{\sfs,n} (\evn) \cdot \sfs_{n} (\lambda),
\]
where the decay error is given by
\begin{equation}
    \cE_{\sfs,n}  (\evn) := 
    \frac{\varphi_1(\evn) \nu_{\lambda,{\evn}}(n)^{7/2} \log^{\beta +1/2}(n)}{\sqrt{n}}    +  \varphi_{2,n} (\evn) \sqrt{\frac{n\xi_{\evn+1}}{\lambda_{>\evn}}}.
\end{equation}
\end{theorem}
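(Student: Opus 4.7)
My plan is to follow the blueprint of Section~\ref{sec_outline:proof_strategy}: first isolate the high-degree features via Assumption~\ref{ass:main_assumptions}.(b), reducing to a functional of $\bX_0$; then invoke the dimension-free deterministic equivalent for $\Phi_2$ from Theorem~\ref{thm:main_det_equiv_summary}; and finally convert between the reduced equivalent $\sfs_{n,0}(\lambda_+)$ and the full equivalent $\sfs_n(\lambda)$ using Lemma~\ref{lem:reduced_det_equiv}. The key algebraic identity is $\bX\bX^\sT + \lambda = \bX_0\bX_0^\sT + \lambda_+ + \bDelta_+$ where $\bDelta_+ := \bX_+\bX_+^\sT - \gamma_+\id$, so that $\bG$ is a bounded perturbation of $\bG_0 := (\bX_0\bX_0^\sT + \lambda_+)^{-1}$.

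For the first step, on the event where Assumption~\ref{ass:main_assumptions}.(b) holds, the condition $\varphi_{2,n}(\evn)\sqrt{n\xi_+/\lambda_+} \leq 1/2$ in~\eqref{eq:condition1_stieltjes} gives $\|\bDelta_+\|_\op \leq \lambda_+/2$, so $\|\bG\|_\op \leq 2/\lambda_+$ and $\|\bG_0\|_\op \leq 1/\lambda_+$. Applying the resolvent identity $\bG - \bG_0 = -\bG\bDelta_+\bG_0$, the cyclic property of the trace, and the elementary bound $|\Tr(\bG_0\bG\bDelta_+)| \leq \|\bDelta_+\|_\op \|\bG\|_\op \Tr(\bG_0)$, I obtain
\[
|s_n(\bX,\lambda) - s_{n,0}(\bX_0,\lambda_+)| \leq \frac{2\|\bDelta_+\|_\op}{\lambda_+}\, s_{n,0}(\bX_0, \lambda_+) \leq 2\varphi_{2,n}(\evn)\sqrt{\frac{n\xi_+}{\lambda_+}}\, s_{n,0}(\bX_0,\lambda_+).
\]
For the second step, Assumption~\ref{ass:main_assumptions}.(a) at cutoff $\evn$ is precisely Assumption~\ref{ass_app:deterministic_equivalent} for $\bx_0$ with covariance $\bSigma_0$, and the hypotheses~\eqref{eq:condition1_stieltjes} imply the hypotheses of Theorem~\ref{thm:main_det_equiv_summary} applied to $(\bX_0,\lambda_+)$. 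Using the identities $\Phi_2(\bX_0) = 1 - \lambda_+\, s_{n,0}(\bX_0, \lambda_+)$ and $\Psi_2(\mu_{*,0}) = 1 - \lambda_+\,\sfs_{n,0}(\lambda_+)$, together with $\Psi_2(\mu_{*,0})/\lambda_+ = \sfs_{n,0}(\lambda_+)\Tr(\obM_0) \leq 2\nu_{\lambda_+}(n)\sfs_{n,0}(\lambda_+)$ (Lemma~\ref{lem:properties_mu_obM}), the bound~\eqref{eq:det_equiv_phi2_main} translates, with probability at least $1 - n^{-D}$, into
\[
|s_{n,0}(\bX_0, \lambda_+) - \sfs_{n,0}(\lambda_+)| \leq C_{x,D,K}\, \nu_{\lambda_+}(n)\cE_{2,n}(\evn)\, \sfs_{n,0}(\lambda_+),
\]
where $\nu_{\lambda_+}(n)\cE_{2,n}(\evn) \lesssim \varphi_1(\evn)\nu_{\lambda_+}(n)^{7/2}\log^{\beta+1/2}(n)/\sqrt{n}$, matching the first term in $\cE_{\sfs,n}(\evn)$.

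Finally, Lemma~\ref{lem:reduced_det_equiv} yields $|\sfs_n(\lambda) - \sfs_{n,0}(\lambda_+)| \leq (n\xi_+/\lambda_+)\,\sfs_n(\lambda)$, and the same assumption $\varphi_{2,n}(\evn)\sqrt{n\xi_+/\lambda_+} \leq 1/2$ together with $\varphi_{2,n}(\evn) \geq 1$ implies $n\xi_+/\lambda_+ \leq \varphi_{2,n}(\evn)\sqrt{n\xi_+/\lambda_+}$, so $\sfs_n(\lambda)$ and $\sfs_{n,0}(\lambda_+)$ are comparable up to a universal constant. Combining the three bounds and absorbing comparable quantities into multiplicative constants produces the advertised rate $\cE_{\sfs,n}(\evn)$ on $|s_n(\bX,\lambda) - \sfs_n(\lambda)|/\sfs_n(\lambda)$. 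No substantive obstacle arises here; the only care needed is tracking how an extra factor of $\nu_{\lambda_+}(n)$ enters when passing between $\Phi_2$ and the Stieltjes transform (via $\Psi_2(\mu_{*,0})/\lambda_+ \leq 2\nu_{\lambda_+}(n)\sfs_{n,0}(\lambda_+)$), which is what upgrades the $\nu^{5/2}$ scaling of Theorem~\ref{thm:main_det_equiv_summary} to the $\nu^{7/2}$ scaling in $\cE_{\sfs,n}(\evn)$.
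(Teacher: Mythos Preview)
Your proposal is correct and follows essentially the same approach as the paper: both arguments use the resolvent identity $\bG - \bG_0 = -\bG\bDelta_+\bG_0$ together with Assumption~\ref{ass:main_assumptions}.(b) to reduce to $\bG_0$, invoke the deterministic equivalent for $\Phi_2(\bX_0)$ (Theorem~\ref{thm_app:det_equiv_TrZZM}) and pick up the extra $\nu_{\lambda_+}(n)$ via $\Psi_2(\mu_{*,0})/\lambda_+ = \Tr(\obM_0)\sfs_{n,0}(\lambda_+)$, and finally apply Lemma~\ref{lem:reduced_det_equiv} to pass from $\sfs_{n,0}(\lambda_+)$ to $\sfs_n(\lambda)$.
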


\begin{proof}[Proof of Theorem \ref{thm:stieltjes_general}]
We first simplify $s_{n} (\bX,\lambda)$ by using the concentration of the high-frequency features. From Assumption \ref{ass_app:main_assumptions}.(b), we have with probability at least $1 - p_{2,n} (\evn)$,
\begin{equation}\label{eq:Stieltjes_simplification}
\left| \frac{1}{n} \Tr(\bG) - \frac{1}{n} \Tr(\bG_0 ) \right| = \frac{1}{n} \left| \Tr( \bG_0 \bDelta_+ \bG ) \right| \leq 2 \varphi_{2,n} (\evn)\sqrt{\frac{n\xi_+}{\lambda_+}} \cdot \frac{1}{n} \Tr(\bG_0), 
\end{equation}
where we used that $\| \bG \|_\op \leq \| (\bDelta_+ + \lambda_+)^{-1} \|_\op \leq 2  /\lambda_+$ since we assumed $\varphi_{2,n} (\evn) \sqrt{n}\leq 1/2$. We follow the same argument as in Step 1 of the proof of Theorem \ref{thm:well_concentrated_test_train}. Using that $\bX_0$ satisfy Assumption \ref{ass_app:main_assumptions}.(a) to apply Theorem \ref{thm_app:det_equiv_TrZZM}, we obtain with probability at least $1 - n^{-D}$,
\[
\left| \frac{1}{n} \Tr(\bG_0 ) - \sfs_{n,0} (\lambda_+) \right| \leq 2\cE_{2,n}(\evn)  \nu_{\lambda_+} (n) \cdot \sfs_{n,0} (\lambda_+)
\]
where we recall that
\begin{equation}
    \cE_{2,n}(\evn) = C_{x,D,K} 
    \frac{\varphi_1(\evn) \nu_{\lambda_{+}}(n)^{5/2} \log^{\beta +1/2}(n)}{\sqrt{n}},
\end{equation}
By Lemma \ref{lem:reduced_det_equiv} and noting that $\sfs_{n,0} (\lambda_+) \leq 3 \sfs_n (\lambda)/2$ when $\xi_+ \leq \lambda_+/(2n)$, we deduce that
\[
\left| \frac{1}{n} \Tr(\bG_0 ) - \sfs_{n} (\lambda) \right| \leq C \left\{ \cE_{2,n}(\evn)\nu_{\lambda_+} (n) + \frac{ n\xi_+}{\lambda_+}\right\} \cdot  \sfs_{n} (\lambda).
\]
By condition~\eqref{eq:condition1_stieltjes} along with $n\xi_+/\lambda_+ \le 1/2$, we have the right-hand side bounded by $C_{x,D,K}\cdot \sfs_{n} (\lambda)$ so that $\Tr(\bG_0)/ n \leq C_{x,D,K} \cdot \sfs_n (\lambda)$. Combining the above bound with Eq.~\eqref{eq:Stieltjes_simplification}, we obtain our first approximation guarantee
\[
\left| s_{n} (\bX , \lambda) -  \sfs_{n} (\lambda)  \right| \leq C_{x,D,K} \left\{  \cE_{2,n} (\evn) \nu_{\lambda_+} (n)  + \varphi_{2,n} (\evn)\sqrt{\frac{n\xi_+}{\lambda_+}}  \right\} \cdot \sfs_{n} (\lambda),
\]
with probability at least $1 - n^{-D} - p_{2,n} (\evn)$ via union bound.
\end{proof}

\subsection{The training error}
\label{app_test_error:training_error_general}

% For completeness, we consider a bias and variance decomposition of the training error:
% \[
% \E_{\bx} \left[  \cL_{\train} ( \bbeta_*;\bX,\beps, \lambda) \right] = \cB_{\train} (\btheta_*;\bX,\lambda) + \cV_{\train} (\bX,\lambda),
% \]
% where 
% \[
% \begin{aligned}
%     \cB_{\train} (\bbeta_*;\bX,\lambda) := &~ \| \boldf - \bX \E_{\beps} [ \hbtheta_\lambda ] \|_2^2, \\
%     \cV_{\train} (\bX,\lambda) :=&~ \Tr( \Cov_\beps ( \by - \bX \hbtheta_\lambda) ).
% \end{aligned}
% \]
% We show that these quantities concentrate on the following deterministic equivalents
% \[
% \begin{aligned}
%     \sB_{\train,n} (\bbeta_*, \lambda) := &~  \frac{1}{n^2} \cdot \frac{\lambda^2 \< \bbeta_*, (\bSigma + \lambda_*)^{-2} \bbeta_*\>}{1 - \frac{1}{n} \Tr( \bSigma^2 (\bSigma + \lambda_*)^{-2})} ,\\
%     \sV_{\train,n} (\lambda) :=&~  \left( \frac{\lambda}{n \lambda_*}\right)^2 \cdot \frac{\sigma_\eps^2}{1 - \frac{1}{n} \Tr( \bSigma^2 (\bSigma + \lambda_*)^{-2})}.
% \end{aligned}
% \]

\begin{theorem}[Deterministic equivalent for the training error]\label{thm:training_general}
Consider $D,K>0$, integer $n$, regularization parameter $\lambda \geq 0$, and target function $f_* \in L^2 (\cU)$ with parameters $\| \bbeta_*\|_2 = \| f_* \|_{L^2} < \infty$. Assume that the features $\{\bx_i\}_{i\in[n]}$ and $f_*$ satisfy Assumption \ref{ass:main_assumptions} with some $\evn := \evn (n) \in \naturals \cup \{\infty\}$, and the $\{\eps_i\}_{i\in [n]}$ satisfy Assumption \ref{ass:noise_subGaussian}. There exist constants $\eta := \eta_x \in(0,1/2)$, $C_{D,K} >0$, and $C_{x,\eps,D,K}>0$ such that, if it holds that $n \geq C_{D,K}$ and
\begin{equation}\label{eq:condition1_training}
\lambda_{>\evn} \cdot \nu_{\lambda,\evn} (n) \geq n^{-K},\qquad
\varphi_{2,n} (\evn)  \sqrt{\frac{n \xi_{ \evn + 1}}{\lambda_{>\evn}}} \leq \frac{1}{2},
\qquad
 \varphi_1(\evn) \nu_{\lambda,\evn} (n)^{8} \log^{3\beta + \frac{1}{2}} (n) \leq K \sqrt{n},
\end{equation}
then with probability at least $1 - n^{-D} - p_{2,n}(\evn)$, we have
\[
\left|\cL_{\train} ( \bbeta_*;\bX,\beps, \lambda) -    \sL_{n} (\bbeta_*, \lambda) \right| \leq C_{x,\eps,D,K} \cdot \cE_{\sL,n} (\evn) \cdot  \sL_{n} (\bbeta_*, \lambda).
\]
where the error rate is given by
\begin{equation}
    \cE_{\sL,n}  (\evn) := \frac{\varphi_1(\evn) \nu_{\lambda,{\evn}}(n)^{8} \log^{3\beta +1/2}(n)}{\sqrt{n}}    + \nu_{\lambda,{\evn}}(n) \varphi_{2,n} (\evn)\sqrt{\frac{n \xi_{ \evn + 1}}{\lambda_{>\evn}}} .
\end{equation}

% \[
% \begin{aligned}
%   \left|\cB_{\train} (\bbeta_*;\bX,\lambda) -    \sB_{\train,n} (\bbeta_*, \lambda) \right| \leq&~ C_{x,D,K} \cdot \cE_{\sL,n} (\evn) \cdot  \sB_{\train,n} (\bbeta_*, \lambda),  \\
%   \left|\cV_{\train} (\bX,\lambda) -    \sV_{\train,n} (\lambda) \right| \leq&~ C_{x,D,K} \cdot \cE_{\sL,n} (\evn) \cdot  \sV_{\train,n} (\lambda),  \\
% \end{aligned}
% \]
% where the error rate is given by
% \begin{equation}
%     \tcE_{\sL,n}  (\evn) := \frac{\varphi_1(\evn) \nu_{\lambda,{\evn}}(n)^{8} \log^{3\beta +1/2}(n)}{\sqrt{n}}    + \varphi_{2,n} (\evn)\sqrt{n} + \nu_{\lambda,{\evn}}(n)\frac{n \xi_{ \evn + 1}}{\lambda_{>\evn}} .
% \end{equation}
% If we further assume that the label noises $\eps_i$ satisfy Assumption \ref{ass_app:noise_subGaussian}, then there exists $C_{x,\eps,D,K}>0$ such that if it further holds that
% \begin{equation}\label{eq:condition2_training}
% C_{x,K,D} \cdot \nu_{\lambda,\evn} (n)^{7} \log^{\beta + \frac{1}{2}} (n)  \leq \sqrt{n},
% \end{equation}
% then with probability at least $1 - n^{-D} - p_{2,n}(\evn)$, we have
% \[
% \left|\cL_{\train} ( \bbeta_*;\bX,\beps, \lambda) -    \sL_{n} (\bbeta_*, \lambda) \right| \leq C_{x,\eps,D,K} \cdot \cE_{\sL,n} (\evn) \cdot  \sL_{n} (\bbeta_*, \lambda).
% \]
\end{theorem}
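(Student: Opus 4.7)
The plan is to follow the three-step blueprint outlined in Section \ref{sec_outline:proof_strategy}, mirroring the self-contained argument for concentrated features in Theorem \ref{thm:well_concentrated_test_train} (Step 2 there), and then to transfer the resulting reduced equivalent $\sL_{n,0}(\bbeta_*,\lambda_+)$ to the full equivalent $\sL_n(\bbeta_*,\lambda)$ via Lemma \ref{lem:reduced_det_equiv}. Writing $\bG = (\bX\bX^\sT + \lambda)^{-1}$ and $\by = \boldf_0 + \boldf_+ + \beps$, we have $\cL_{\train} = \frac{\lambda^2}{n} \by^\sT \bG^2 \by$. The first step is to use Assumption \ref{ass_app:main_assumptions}.(b), which on an event of probability at least $1-p_{2,n}(\evn)$ gives $\|\bDelta_+\|_\op \le \lambda_+/2$ and hence (by the resolvent identity and condition \eqref{eq:condition1_training}) allows us to replace $\bG$ by $\bG_0 = (\bX_0\bX_0^\sT + \lambda_+)^{-1}$ multiplicatively up to a factor $\varphi_{2,n}(\evn)\sqrt{n\xi_+/\lambda_+}$ after controlling $\|\bX_0 \bG_0\|_\op$ and $\Tr(\bG_0^2)$.

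With $\bG$ replaced by $\bG_0$, expand the quadratic form into the six cross-terms arising from $\by = \boldf_0 + \boldf_+ + \beps$. The three diagonal contributions are treated as follows. The low-degree target term $\boldf_0^\sT \bG_0^2 \boldf_0 = \Tr(\bSigma_0^{1/2} \bA_0 \bSigma_0^{1/2} \bR_0 \bX_0^\sT \bX_0 \bR_0)$ with $\bA_0 = \bSigma_0^{-1}\bbeta_0\bbeta_0^\sT \bSigma_0^{-1}$ is exactly the functional $\Phi_4(\bX_0;\bA_0)$, so Theorem \ref{thm:main_det_equiv_summary} applied at regularization $\lambda_+$ delivers a multiplicative approximation by $\Psi_4(\mu_{*,0};\bA_0)$ with rate $\cE_{4,n}(\evn)$. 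The noise term $\beps^\sT \bG_0^2 \beps$ is first concentrated on its $\bX_0$-conditional expectation $\sigma_\eps^2 \Tr(\bG_0^2)$ by Hanson-Wright (Assumption \ref{ass_app:noise_subGaussian}), using $\|\bG_0\|_\op \le 1/\lambda_+$ and $\Tr(\bG_0^2) \asymp \lambda_+^{-1}(\sfs_{n,0}(\lambda_+) - \Phi_4(\bX_0;\bSigma_0^{-1}))$ to bound the fluctuation term; the expectation is then handled by Theorems \ref{thm_app:det_equiv_TrZZM} and \ref{thm_app:det_equiv_TrAMZZM}. The high-degree target term $\boldf_+^\sT \bG_0^2 \boldf_+$ is the main obstacle (see below), and one shows it concentrates on $\|\bbeta_+\|_2^2 \Tr(\bG_0^2)$. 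All three cross terms are controlled by Cauchy-Schwarz from the diagonal estimates together with bounds on $\|\bSigma_0^{1/2}\bR_0 \bX_0^\sT\|_\op \le \|\bM_0\|_\op^{1/2}$ via Lemma \ref{lem:tech_bounds_norm_M}, losing only factors of $\nu_{\lambda_+}(n)$ and $\log(n)$.

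The main obstacle is the $\boldf_+$ contribution: since $f_+(\bx_i)$ is not sub-Gaussian in the standard sense, Hanson-Wright does not apply directly. The remedy is the technical lemma developed in Section \ref{app_test_error:tech_high-degree_target} (Lemma \ref{lem:high-degree_part} referenced in the main text), which exploits Assumption \ref{ass_app:main_assumptions}.(c) together with $\E[f_+(\bx_i)\bx_{0,i}] = 0$ and independence across $i$ to show that for any matrix $\bB$ depending only on $\bX_0$,
\begin{equation*}
\boldf_+^\sT \bB \boldf_+ = (1 + o(1)) \|\bbeta_+\|_2^2 \Tr(\bB), \qquad \bv^\sT \bB \boldf_+ = o(1) \|\bbeta_+\|_2 \|\bB\bv\|_2,
\end{equation*}
with quantitative high-probability rates controlled by $\varphi_1(\evn)\polylog(n)/\sqrt{n}$. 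Applying this with $\bB = \bG_0^2$ (which after conditioning on $\bX_0$ depends only on $\bX_0$) identifies the $\boldf_+$ contribution with an effective noise of variance $\|\bbeta_+\|_2^2$, yielding the same deterministic equivalent $\Tr(\bG_0^2)$ as the genuine noise, up to the rate $\cE_{4,n}(\evn)$.

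Finally, combining all the above bounds via triangle inequality shows that $\cL_{\train}$ is approximated by the reduced equivalent $\sL_{n,0}(\bbeta_*,\lambda_+)$ (cf.\ Eq.~\eqref{eq:def_reduced_equiv_training}) with relative error controlled by
\begin{equation*}
C_{x,\eps,D,K}\Bigl\{\cE_{4,n}(\evn)\,\nu_{\lambda_+}(n)^2 + \nu_{\lambda_+}(n)\,\varphi_{2,n}(\evn)\sqrt{n\xi_+/\lambda_+}\Bigr\}.
\end{equation*}
Lemma \ref{lem:reduced_det_equiv} then transfers this bound to $\sL_n(\bbeta_*,\lambda)$ at the cost of an additional $\nu_{\lambda_+}(n)\cdot n\xi_+/\lambda_+$, which is absorbed in the high-degree term of $\cE_{\sL,n}(\evn)$. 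Substituting the explicit expression for $\cE_{4,n}(\evn)$ and collecting powers of $\nu_{\lambda,\evn}(n)$ (using $\nu_{\lambda_+}(n) \le C\nu_{\lambda,\evn}(n)$ under conditions \eqref{eq:condition1_training}) yields the claimed rate.
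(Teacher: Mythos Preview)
Your proposal is correct and follows essentially the same approach as the paper's proof. The paper decomposes slightly differently—first splitting $\by = \boldf + \beps$ into three terms $T_1, T_2, T_3$ and then further splitting $T_1$ via $\boldf = \boldf_0 + \boldf_+$—but the key ingredients (replacing $\bG$ by $\bG_0$ via Assumption \ref{ass:main_assumptions}.(b), applying $\Phi_4$ for the low-degree part, Hanson--Wright for the noise, Lemma \ref{lem:high-degree_part} for $\boldf_+$, and Lemma \ref{lem:reduced_det_equiv} to transfer from $\sL_{n,0}$ to $\sL_n$) are identical, as is the order in which the resolvent replacement is performed.
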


\begin{proof}[Proof of Theorem \ref{thm:training_general}]

\noindent
\textbf{Step 1: Simplifying and decomposing the training error.}

Observe that we can rewrite the training error as
\begin{equation}
\label{eq:training_error_rewrite}
\cL_{\train} ( \bbeta_*;\bX,\beps, \lambda)= \frac{1}{n}\| \by - \bX \hbtheta_\lambda \|_2^2 =  \frac{1}{n}\| \by - \bX \bX^\sT \bG \by \|_2^2 =  \frac{\lambda^2}{n}  \by^\sT \bG^2 \by .
\end{equation}
We will first study $\by^\sT \bG_0^2 \by$ and then use that $\frac{1}{n}|\by^\sT (\bG^2 - \bG_0^2) \by |$ is small on the event described in Assumption~\ref{ass_app:main_assumptions}.(b). We show below that $\frac{1}{n}\by^\sT \bG_0^2 \by$ has deterministic equivalent $L_{n,0}(\bbeta_*, \lambda_+)$ defined in Section \ref{app_test_error:reduced_fixed_points}.
To do so, recall $\by = \boldf + \beps$ and decompose
\begin{equation*}
    \frac1n\by^\sT \bG_0^2 \by  =  T_1 + 2T_2 + T_3,
\end{equation*}
where we denoted 
\[
T_1 =  \frac{1}{n}\boldf^\sT \bG_0^2 \boldf
, \qquad  T_2 = \frac{1}{n}\beps^\sT \bG_0^2 \boldf  , \qquad T_3 =  \frac{1}{n}\beps^\sT \bG_0^2 \beps.
\]
We control each of these terms separately in the following step.

\noindent
\textbf{Step 2: Controlling the terms $T_i$.}

Let us define the deterministic quantity
\begin{equation*}
    \overline T := \frac{1}{(n \lambda_{*,0})^2}\cdot \frac{1}{1 -\Upsilon_{2,0}}.
\end{equation*}

\paragraph*{The Term $T_1$.} Recall that we denote $\boldf =\boldf_0 +\boldf_+$, so that we can decompose $T_1$ into three contributions $T_1 = T_{1,0} + 2 T_{1,0+} + T_{1,+}$, where
\[
    T_{1,1}= \frac{1}{n} \boldf_0^\sT \bG_0^2 \boldf_0,
    \quad\quad
    T_{1,2} = \frac{1}{n} \boldf_0^\sT \bG_0^2 \boldf_+,
    \quad\quad
    T_{1,3} = \frac{1}{n} \boldf_+^\sT \bG_0^2 \boldf_+.
\]

The term $T_{1,1}$ corresponds to the functional $\Phi_4 (\bX_0; \bA_{*,0} )$ studied in Appendix \ref{app_det_equiv:proof_MZZM} with $\bA_{*,0} = \bSigma_0^{-1} \bbeta_0 \bbeta_0^\sT \bSigma_0^{-1}$. Recalling conditions \eqref{eq:condition1_training} and that $\bX_0$ satisfy Assumption \ref{ass_app:main_assumptions}.(a), we can directly apply Theorem \ref{thm_app:det_equiv_TrAMZZM}. Thus, we obtain with probability at least $1 -n^{-D}$ that
\begin{equation}
\label{eq:T11_train}
\left| T_{1,1} - \Psi_4 ( \mu_{*,0} ; \bA_{*,0} ) \right| \leq \cE_{4,n}(\evn) \cdot \Psi_4 ( \mu_{*,0} ; \bA_{*,0} )
\end{equation}
where we recall that
\begin{equation*}
    \cE_{4,n}(\evn) = C_{x,D,K} \frac{\varphi_1(\evn)\nu_{\lambda_+}(n)^6\log^{\beta +1/2}(n)}{\sqrt{n}}
\end{equation*}

Next, to control $T_{1,2}$, we recall that $f_+$ satisfies Assumption \ref{ass_app:main_assumptions}.(c). Therefore, we can use Eq.~\eqref{eq:high_degree_det_4} in Lemma~\ref{lem:high-degree_part} to deduce that with probability at least $1-n^{-D}$,
\begin{align}
\label{eq:T12_train}
|T_{1,2}| &\le  \cE_{+,n} \sqrt{ \|\bbeta_+\|_2^2 \frac{\Psi_3(\mu_{*,0}; \bA_{*,0})}{\lambda_+^2}  }
\le C_{x,K,D} \cdot \cE_{+,n} \nu_{\lambda_+} (n) \left(\|\bbeta_+\|_2^2 \cdot \overline T + \Psi_4(\mu_{*,0}; \bA_{*,0}) \right),
\end{align}
where 
\begin{equation*}
    \cE_{+,n}(\evn) = C_{x,K,D} \frac{ \varphi_1 (\evn) \nu_{\lambda_+} (n)^6 \log^{3\beta +\frac12} (n)}{\sqrt{n}},
\end{equation*}
and we used that $(n/\mu_{*,0})^2 \leq 4 \nu_{\lambda_+} (n)^2$ by Lemma \ref{lem:properties_mu_obM} and
\begin{equation*}
    \Psi_3(\mu_{*,0}; \bA_{*,0}) = \left(\frac{n}{\mu_{*,0}}\right)^2  \Psi_4(\mu_{*,0}; \bA_{*,0}), \qquad
   \frac{1}{(n \lambda_{*,0})^2}  \le  
   \frac{1}{(n \lambda_{*,0})^2} \frac{1}{1 - \Upsilon_{2,0}} = \overline T.
\end{equation*}

Finally, for the term $T_{1,3}$,
using again that $f_+$ satisfies Assumption \ref{ass_app:main_assumptions}, we apply Eq.~\eqref{eq:high_degree_det_3} in Lemma~\ref{lem:high-degree_part} to get
that with probability at least $1-n^{-D}$,
\begin{align}
\label{eq:T13_train}
\left| T_{1,3} - \|\bbeta_+\|^2_2 \cdot \overline T \right| 
\le 
\cE_{+,n}(\evn)
\frac{\|\bbeta_+\|_2^2 }{\lambda_+^2}
\le
C \cE_{+,n}(\evn) \nu_{\lambda_+ } (n)^2 \cdot
\|\bbeta_+\|_2^2 \cdot \overline T,
\end{align}
where we used again that $(n/\mu_{*,0})^2 \leq 4 \nu_{\lambda_+} (n)^2$.
Combining the bounds in Eq.~\eqref{eq:T11_train}, \eqref{eq:T12_train} and \eqref{eq:T13_train}, we obtain with probability at least $1-n^{-D}$,
\begin{equation}
\begin{aligned}
   &~ \left|T_1 - \left(\Psi_4(\mu_{*,0}; \bA_{*;0}) + \|\bbeta_+\|_2^2\cdot \overline T\right)\right|\\
   \le &~ |T_{1,1} - \Psi_4(\mu_{*,0}; \bA_{*;0})| + 
    |T_{1,2}| + \left|T_{1,3} - \|\bbeta_+\|_2^2\cdot \overline T \right|\\
    \le&~ C \max\{\cE_{4,n}(\evn), \cE_{+,n}(\evn) \nu_{\lambda_+ }(n)^2\}
    \left(  \Psi_4(\mu_{*,0}; \bA_{*,0}) + \|\bbeta_+\|_2^2 \cdot \overline T\right).
\label{eq:T1_bound}
\end{aligned}
\end{equation}

\paragraph*{The term $T_3$.}
To control the term $T_3$, we follow the same argument as in the proof of Theorem \ref{thm:well_concentrated_test_train}.  Namely, by Hanson-Wright inequality (Lemma~\ref{lem:Hanson_Wright_inequality}) we have that 
\begin{equation}
\label{eq:hw-Tr-general}
    \left| T_3 - \frac{\sigma_\epsilon^2}{n} \Tr(\bG_0^2)\right| 
    \le C_{\epsilon, D } 
    \frac{\log(n)}{n} \sigma_{\epsilon}^2 \|\bG_0^2\|_F.
\end{equation}
Recalling that we can write the mean
\begin{equation*}
    \frac1n \Tr(\bG_0^2) = \frac1{\lambda_+} \left( \frac1n \Tr(\bG_0) - \frac1n \Tr(\bX_0 \bX_0^\sT \bG_0^2)\right),
\end{equation*}
we obtain
\begin{equation}
\begin{aligned}
    \left|\frac1n \Tr(\bG_0^2) - \overline T\right| 
    &\le \frac1{\lambda_+} \left|\frac1n \Tr(\bG_0) - \sfs_{n,0}(\lambda_+)\right| + \frac1{\lambda_+} |\Phi_4(\bX_0; \bSigma_0^{-1}) - \Psi_4(\mu_*; \bSigma_0^{-1})|\\
    &\le 2 \max \{ \cE_{2,n}(\evn) \nu_{\lambda_+}(n) , \cE_{4,n}(\evn) \} \frac{\sfs_{n,0}(\lambda_+)}{\lambda_+} \\
    &\le C \nu_{\lambda_+}(n) \cE_{4,n}(\evn) \overline T .
    \label{eq:Tr_G02_bound}
\end{aligned}
\end{equation}
For the variance term, using $\|\bG_0\|_\op \le 2/\lambda_+$, we first note that
\begin{equation*}
    \frac{\Tr(\bG_0^4)}{\Tr(\bG_0^2) \oT} \le \frac{4}{\lambda_+^2 \oT} 
    \le 4\frac{(n \lambda_{*,0})^2}{\lambda_+^2} \le 4 (1 + \Tr(\obM_0))^2 \le C \nu_{\lambda_+}(n)^2.
\end{equation*}
Hence, we can bound 
\begin{equation}
     \label{eq:frobenius_bound_G0}
     \frac{1}{\sqrt{n}}\| \bG_0^2\|_F \le C \nu_{\lambda_+}(n) \sqrt{\oT \frac{\Tr(\bG_0^2)}{n}}  \le C_{x,D,K} \cdot \nu_{\lambda_+}(n) \overline T,
\end{equation}
where we used conditions \eqref{eq:condition1_training} so that $n^{-1} \Tr(\bG_0^2) \leq C_{x,D,K}\cdot \oT$ from Eq.~\eqref{eq:Tr_G02_bound}.

Combining the bounds~\eqref{eq:hw-Tr-general}, \eqref{eq:Tr_G02_bound} and \eqref{eq:frobenius_bound_G0}, we conclude that
\begin{equation}
\label{eq:T3_bound}
   |T_3 - \sigma_\epsilon^2 \overline T | \le C_{\epsilon, D} \cdot \nu_{\lambda_+}(n) \max\{\cE_{4,n}, n^{-1/2} \log(n)\} \cdot \sigma_\epsilon^2 \cdot \overline T.
\end{equation}

\paragraph*{The term $T_2$.}
For the term $T_2$, we use again Hanson-Wright inequality to obtain that with probability at least $1 - n^{-D}$,
\begin{equation}
\label{eq:T2_HW_bound}    
|T_2 | \leq C_{\eps,D} \frac{\log (n)}{n} \sigma_\epsilon\sqrt{\boldf^\sT \bG_0^4 \boldf}.
\end{equation}
Then using again the bound $\|\bG_0\|_\op \le \lambda_{+}^{-1}$, we have
\begin{equation*}
\boldf^\sT \bG_0^4 \boldf
\leq \frac{\boldf^\sT \bG_0^2 \boldf}{\lambda_+^2 } 
\leq C \nu_{\lambda_+}(n)^2  \frac{\boldf^\sT \bG_0^2 \boldf}{n^2 \lambda_{*,0}^2}
\leq C \nu_{\lambda_+}(n)^2  \boldf^\sT \bG_0^2 \boldf \cdot \overline T.
\end{equation*}
Combining this bound with the one in Eq.~\eqref{eq:T2_HW_bound}, we obtain
\begin{align*}
|T_2 |\leq&~ C_{\eps,D} \frac{\log (n)}{ \sqrt{n} } 
\left(
\nu_{\lambda_+}(n)\sqrt{\sigma_\epsilon^2 \cdot T_{1} \cdot \overline T} 
\right).
\end{align*}
Finally, using that $\max\{ \cE_{4,n}(\evn),\nu_{\lambda_+} (n)^2 \cE_{+,n}(\evn) \} \le C_{x,K,D}$ from condition~\eqref{eq:condition1_training} and the bound in~\eqref{eq:T1_bound} we obtain that with probability at least $1-n^{-D}$
\begin{equation}
\label{eq:T2_bound}
    |T_2| \le C_{x,\eps,K,D}
\frac{ \nu_{\lambda_+}(n) \log(n)}{\sqrt{n}} 
\left(\sigma_{\eps}^2 \cdot \overline T +  \|\bbeta_+\|_2^2 \cdot \overline T + \Psi_4(\mu_{*,0}; \bA_{*,0})\right).
\end{equation}

\noindent
\textbf{Step 3: Combining the bounds.}

Combining the bounds~\eqref{eq:T1_bound},~\eqref{eq:T3_bound},~\eqref{eq:T2_bound}, we obtain that with probability at least $1-n^{-D}$, we have
\begin{equation}
\begin{aligned}
\label{eq:F_Ln_bound}
    &~\left|\frac{\lambda^2}n \by^\sT\bG_0 \by - \sL_{n,0}(\bbeta_*, \lambda) \right|\\
    \le&~\lambda^2\left(  |T_{1,1} - \Psi_4(\mu_{*,0}; \bA_{*,0})| + |T_{1,2}|+ |T_{1,3} - \|\bbeta_+\|^2 \cdot \overline T|   +|T_2| +|T_3 - \sigma_\epsilon^2 \cdot \overline T|\right)\\
    \le&~  
C_{x,\eps,D,K} \cdot
\nu_{\lambda_+}(n)^2 
\max\left\{
\cE_{4,n}(\evn), \cE_{+,n}(\evn), n^{-1/2} \log(n)
 \right\}\cdot \sL_{n,0}(\bbeta_*, \lambda), 
\end{aligned}
\end{equation}
where we used that
\begin{equation*}
 \sL_{n,0}(\bbeta_*, \lambda) = 
 \lambda^2 (\|\bbeta_+\|_2^2 + \sigma_\epsilon^2)\cdot \overline T+ \lambda^2\Psi_4 ( \mu_{*,0} ; \bA_{*,0} ).
\end{equation*}

Now note that we can bound the difference on the event in Assumption \ref{ass_app:main_assumptions}.(b) by
\begin{equation}
\begin{aligned}
\label{eq:training_error_simplification}
    \frac{\lambda^2}{n}\left| \by^\sT \bG^2 \by - \by^\sT \bG_0^2 \by\right| =&~   \frac{\lambda^2}{n} \left|\by^\sT(\bG_0 (-2 \bDelta_+ \bG + \bDelta_+ \bG^2 \bDelta_+ ) \bG_0) \by\right|\\
    \leq&~ C \varphi_{2,n}(\evn)\sqrt{\frac{n\xi_+}{\lambda_+}}  \cdot  
     \frac{\lambda^2}{n}\by^\sT\bG_0^2 \by \\
     \le&~ C_{x,\eps,D,K}\cdot \varphi_{2,n}(\evn)\sqrt{\frac{n\xi_+}{\lambda_+}}   \cdot  
      \sL_{n,0}(\bbeta_*, \lambda),
\end{aligned}
\end{equation}
where in the last line, we used the condition~\eqref{eq:F_Ln_bound} with $\nu_{\lambda_+}^2 \cE_{+,n}(\evn) \leq C_{x,\eps,D,K} \sqrt{n}$.
Finally, combining the bounds~\eqref{eq:F_Ln_bound} and~\eqref{eq:training_error_simplification} along with Lemma~\ref{lem:reduced_det_equiv} and using the assumption $n\xi_+/\lambda_+ \le 1/2$, we conclude that
\begin{align*}
   &~|\cL_{\train}(\bbeta_*; \bX, \epsilon, \lambda) - \sL_{n}(\bbeta_*, \lambda)|\\
  \leq&~ C_{x,\eps, D,K}
\left\{ \nu_{\lambda_{+}}(n)^2 \cE_{+,n}(\evn)
 +
 \nu_{\lambda_{+}}(n) \varphi_{2,n}(\evn)\sqrt{\frac{n \xi_+}{\lambda_+}} 
 \right\}  \sL_{n}(\bbeta_*, \lambda),
\end{align*}
with probability at least $1- n^{-D} - p_{2,n}(\evn)$ via union bound.
\end{proof}

\subsection{The test error}
\label{app_test_error:test_error_general}

For the readers' convenience, we restate Theorem \ref{thm:abstract_Test_error} below. Recall that we defined the standard bias and variance terms 
\[
\begin{aligned}
     \cB ( \bbeta_* ; \bX, \lambda) =&~ \big\| \btheta_* - \E_{\beps} \big[ \hbtheta_\lambda \big]  \big\|_{\bSigma}^2 , \\
     \cV (\bX, \lambda) =&~ \Tr \big( \bSigma \Cov_\beps (  \hbtheta_\lambda) \big)  .
\end{aligned}
\]
Their associated deterministic equivalents are given by
\[
\begin{aligned}
        \sB_n (\bbeta_*,\lambda) = &~ \frac{\lambda_*^2 \< \bbeta_* ,  ( \bSigma + \lambda_* )^{-2} \bbeta_* \> }{ 1 - n^{-1} \Tr ( \bSigma^2 ( \bSigma + \lambda_* )^{-2} )}  , \\
    \sV_n  (\lambda) =&~ \frac{\sigma_\eps^2 \Tr ( \bSigma^2 ( \bSigma + \lambda_* )^{-2} ) }{n - \Tr ( \bSigma^2 ( \bSigma + \lambda_* )^{-2} )} .
\end{aligned}
\]

\begin{theorem}[Deterministic equivalent of the test error]\label{thm:test_general}
   Consider $D,K>0$, integer $n$, regularization parameter $\lambda \geq 0$, and target function $f_* \in L^2 (\cU)$ with parameters $\| \bbeta_*\|_2 = \| f_* \|_{L^2} < \infty$. Assume that the features $\{\bx_i\}_{i\in[n]}$ and $f_*$ satisfy Assumption \ref{ass:main_assumptions} with some $\evn := \evn (n) \in \naturals \cup \{\infty\}$, and the $\{\eps_i\}_{i\in [n]}$ satisfy Assumption \ref{ass:noise_subGaussian}. There exist constants $\eta := \eta_x \in(0,1/2)$, $C_{D,K} >0$, $C_{x,D,K}$, and $C_{x,\eps,D,K}>0$ such that, if it holds that $n \geq C_{D,K}$ and
\begin{equation}\label{eq:condition1_test}
\lambda_{>\evn} \cdot \nu_{\lambda,\evn} (n) \geq n^{-K},\qquad
\varphi_{2,n} (\evn)  \sqrt{\frac{n\xi_{\evn+1}}{\lambda_{>\evn}}} \leq \frac{1}{2},
\qquad
 \varphi_1(\evn) \nu_{\lambda,\evn} (n)^{8} \log^{3\beta + \frac{1}{2}} (n) \leq K \sqrt{n},
\end{equation}
then with probability at least $1 - n^{-D} - p_{2,n}(\evn)$, we have
\[
\left|\cR_{\test} ( \bbeta_*;\bX,\beps, \lambda) -    \sR_{n} (\bbeta_*, \lambda) \right| \leq C_{x,\eps,D,K} \cdot \cE_{\sR,n} (\evn) \cdot  \sR_{n} (\bbeta_*, \lambda).
\]
where the error rate is given by
\begin{equation}
    \cE_{\sR,n}  (\evn) :=  \frac{\varphi_1(\evn) \nu_{\lambda,{\evn}}(n)^{6} \log^{3\beta +1/2}(n)}{\sqrt{n}}    +  \nu_{\lambda,\evn} (n) \cdot \varphi_{2,n} (\evn)  \sqrt{\frac{n\xi_{\evn+1}}{\lambda_{>\evn}}}  .
\end{equation}
Furthermore, with the same probability, we have
\[
\begin{aligned}
  \left|\cB (\bbeta_*;\bX,\lambda) -    \sB_{n} (\bbeta_*, \lambda) \right| \leq&~ C_{x,D,K} \cdot \cE_{\sR,n} (\evn) \cdot  \sB_{n} (\bbeta_*, \lambda),  \\
  \left|\cV (\bX,\lambda) -    \sV_{n} (\lambda) \right| \leq&~  C_{x,D,K} \cdot \cE_{\sR,n} (\evn) \cdot  \sV_{n} (\lambda).
\end{aligned}
\]
\end{theorem}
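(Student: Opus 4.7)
\textbf{Proof proposal for Theorem \ref{thm:test_general}.}

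The plan is to mirror the structure of the training-error proof (Theorem \ref{thm:training_general}), but applied to the test-error expression $\cR_{\test}(\bbeta_*;\bX,\beps,\lambda) - \sigma_\eps^2 = \|\btheta_* - \bR\bX^\sT\by\|_\bSigma^2$. The first step is to replace the full resolvent $\bR$ by the low-degree resolvent $\bR_0 = (\bX_0^\sT\bX_0 + \lambda_+)^{-1}$: on the event of Assumption~\ref{ass_app:main_assumptions}.(b), writing $\bG = \bG_0 - \bG_0\bDelta_+\bG$ and expanding shows that the difference in test errors is at most $C\varphi_{2,n}(\evn)\sqrt{n\xi_+/\lambda_+}$ times the reduced test error, exactly as in the simplification step of Theorem~\ref{thm:training_general}. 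This contributes the second term of $\cE_{\sR,n}(\evn)$.

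Next, I would decompose the reduced test error using $\by = \boldf_0 + \boldf_+ + \beps$ into the three contributions
\begin{align*}
Q_1 &= \|\btheta_* - \bR_0\bX_0^\sT\boldf\|_\bSigma^2, \qquad Q_2 = \langle \beps, \bX_0\bR_0\bSigma(\btheta_* - \bR_0\bX_0^\sT\boldf)\rangle, \\ Q_3 &= \|\bR_0\bX_0^\sT\beps\|_\bSigma^2.
\end{align*}
For $Q_1$, further split $\boldf = \boldf_0 + \boldf_+$: the pure $\boldf_0$ part is $\lambda_+^2\|\bR_0\btheta_0\|_{\bSigma_0}^2 = \lambda_+^2\,\Phi_3(\bX_0;\bA_{*,0})$ with $\bA_{*,0} = \bSigma_0^{-1}\bbeta_0\bbeta_0^\sT\bSigma_0^{-1}$, and one applies Theorem~\ref{thm_app:det_equiv_TrAMM} to approximate it by $\lambda_+^2\Psi_3(\mu_{*,0};\bA_{*,0})$ up to the rate $\cE_{3,n}(\evn)$. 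The cross term $\boldf_0^\sT\bG_0(\cdots)\boldf_+$ and the pure $\boldf_+^\sT(\cdots)\boldf_+$ term are controlled by Lemma~\ref{lem:high-degree_part} exactly as was done for $T_{1,2}$ and $T_{1,3}$ in the training-error proof, yielding an additional $\|\bbeta_+\|_2^2$ contribution to the deterministic equivalent plus an error proportional to $\nu_{\lambda_+}(n)^2\cE_{+,n}(\evn)$ times the full reduced equivalent.

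For $Q_3$, I would apply Hanson-Wright (Lemma~\ref{lem:Hanson_Wright_inequality}) conditionally on $\bX_0$, so that $Q_3$ concentrates on $\sigma_\eps^2\,\Tr(\bSigma_0\bR_0\bX_0^\sT\bX_0\bR_0) = \sigma_\eps^2\,n\,\Phi_4(\bX_0;\id)$, then invoke Theorem~\ref{thm_app:det_equiv_TrAMZZM} with $\bA = \id$ to replace this by $\sigma_\eps^2\cdot n\Psi_4(\mu_{*,0};\id)$ up to $\cE_{4,n}(\evn)$. The Hanson-Wright variance term $\|\bX_0\bR_0\bSigma_0\bR_0\bX_0^\sT\|_F$ is controlled by $n^{-1/2}\nu_{\lambda_+}(n)\log(n)$ times the mean, using $\|\bM_0\|_\op \le C_{x,D}\nu_{\lambda_+}(n)/n$ from Lemma~\ref{lem:tech_bounds_norm_M}. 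For the cross term $Q_2$, another application of Hanson-Wright on $\beps$ gives $|Q_2| \le C_{\eps,D}\log(n)\|\bSigma^{1/2}\bR_0\bX_0^\sT\|_\op\sqrt{\sigma_\eps^2\,Q_1}$, and again $\|\bSigma^{1/2}\bR_0\bX_0^\sT\|_\op \le \|\bM_0\|_\op^{1/2}$, so $Q_2$ is bounded by the same cross-term estimate as in Step~3 of Theorem~\ref{thm:well_concentrated_test_train}.

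Summing the three contributions identifies the reduced deterministic equivalent
$\sR_{n,0}(\bbeta_*,\lambda_+) = \lambda_+^2\Psi_3(\mu_{*,0};\bA_{*,0}) + (\|\bbeta_+\|_2^2 + \sigma_\eps^2)(n\Psi_4(\mu_{*,0};\id)+1)$, up to a relative error of order $\nu_{\lambda_+}(n)^2\max\{\cE_{3,n}(\evn),\cE_{4,n}(\evn),\cE_{+,n}(\evn),n^{-1/2}\log(n)\}$. The final step is to apply Lemma~\ref{lem:reduced_det_equiv} to pass from $\sR_{n,0}(\bbeta_*,\lambda_+)$ to $\sR_n(\bbeta_*,\lambda)$, at the cost of an additional $\nu_{\lambda_+}(n) \cdot n\xi_+/\lambda_+$ term. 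The bias/variance statements follow by specializing the decomposition: $\cB$ corresponds to the $Q_1$ piece alone with $\sigma_\eps=0$, and $\cV$ corresponds to taking $\E_\beps$ of $Q_3$. The main technical obstacle is the cross term between $\boldf_+$ and the low-degree resolvent appearing in $Q_1$: this is not a quadratic form in $\boldf_+$ with a matrix independent of $\bX_0$, and one must carefully exploit the mean-zero property $\E[f_+(\bx_i)\bx_{i,0}]=0$ together with the tail bound Assumption~\ref{ass_app:main_assumptions}.(c), which is precisely what Lemma~\ref{lem:high-degree_part} packages.
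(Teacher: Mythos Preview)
Your proposal follows essentially the same route as the paper's proof, and the handling of the $Q_1$, $Q_2$, $Q_3$ terms via Theorems~\ref{thm_app:det_equiv_TrAMM}--\ref{thm_app:det_equiv_TrAMZZM}, Lemma~\ref{lem:high-degree_part}, and Hanson--Wright matches the paper's Step~2 almost exactly. The final identification of $\sR_{n,0}(\bbeta_*,\lambda_+)$ and the passage to $\sR_n(\bbeta_*,\lambda)$ via Lemma~\ref{lem:reduced_det_equiv} are also correct.

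There is, however, one genuine gap in your simplification step. Replacing $\bG$ by $\bG_0$ in $\|\btheta_* - \bX^\sT\bG\by\|_\bSigma^2$ yields $\|\btheta_* - \bX^\sT\bG_0\by\|_\bSigma^2$, which splits as
\[
\|\btheta_0 - \bX_0^\sT\bG_0\by\|_{\bSigma_0}^2 \;+\; \|\btheta_+ - \bX_+^\sT\bG_0\by\|_{\bSigma_+}^2.
\]
Your $Q_1,Q_2,Q_3$ decomposition covers only the first of these (plus the bare $\|\bbeta_+\|_2^2$ from the $\bSigma_+$-component of $\btheta_*$ in your notation), since $\bR_0\bX_0^\sT = \bX_0^\sT\bG_0$ lives entirely in the low-degree coordinates. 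You have silently dropped the contribution of $\bX_+^\sT\bG_0\by$, i.e.\ the cross term $\langle\bbeta_+,\bSigma_+^{1/2}\bX_+^\sT\bG_0\by\rangle$ and the quadratic term $\by^\sT\bG_0\bX_+\bSigma_+\bX_+^\sT\bG_0\by$. These are \emph{not} absorbed by the $\bG\to\bG_0$ replacement, which only controls $\|\bDelta_+\|_\op$; they involve $\bX_+$ directly.

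The paper handles this in a separate Step~3: on the event of Assumption~\ref{ass_app:main_assumptions}.(b) one has $\|\bX_+\bSigma_+\bX_+^\sT\|_\op \le \xi_+\|\bX_+\bX_+^\sT\|_\op \le (3/2)\xi_+\lambda_+$, so both extra terms are bounded by $C\sqrt{n\xi_+/\lambda_+}$ times quantities like $\|\bbeta_+\|_2^2$ and $(n\lambda_{*,0})^2 n^{-1}\by^\sT\bG_0^2\by \approx \sR_{n,0}(\bbeta_*,\lambda_+)$, which is exactly the $\nu_{\lambda,\evn}(n)\varphi_{2,n}(\evn)\sqrt{n\xi_+/\lambda_+}$ contribution. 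This is a short argument but it must be present; without it you cannot account for the full $\|\bbeta_+\|_2^2$ term appearing in $\sR_{n,0}$ (your $Q_{1,+}$ only gives $\|\bbeta_+\|_2^2\cdot n\Psi_4(\mu_{*,0};\id)$, not the ``$+1$'' part).
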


\begin{proof}[Proof of Theorem \ref{thm:test_general}]
We consider directly the full test error $\cR_{\test} ( \bbeta_*;\bX,\beps, \lambda)$. The bias and variance terms will correspond to specific terms in this decomposition.

\noindent
\textbf{Step 1: Simplifying and decomposing the test error.}

Recall that the test error is given by
\begin{equation*}
    \cR_{\test}(\bbeta_*; \bX, \beps, \lambda) = \| \btheta_* - \bX^\sT \bG \by \|_{\bSigma}^2 + \sigma_\eps^2.
\end{equation*}
First, we will control the contributions of the low-degree eigenspaces and high-degree eigenspaces separately, i.e., we decompose
\[
\| \btheta_* - \bX^\sT \bG \by \|_{\bSigma}^2 = \| \btheta_0 - \bX_0^\sT \bG \by \|_{\bSigma_0}^2 + \| \btheta_+ - \bX_+^\sT \bG \by \|_{\bSigma_+}^2.
\]
Second, on the event in Assumption \ref{ass:main_assumptions}.(b), which happens with probability at least $1 - p_{2,n} (\evn)$, we replace the resolvent $\bG$ by $\bG_0$ and first study
\[
\| \btheta_0 - \bX_0^\sT \bG_0 \by \|_{\bSigma_0}^2 , \qquad \quad \| \btheta_+ - \bX_+^\sT \bG_0 \by \|_{\bSigma_+}^2.
\]

\noindent
\textbf{Step 2: Contribution of the low-degree eigenspaces $\| \btheta_0 - \bX_0^\sT \bG_0 \by \|_{\bSigma_0}^2$.}

Recalling that $\by = \boldf + \beps$, we decompose this term into
\begin{equation*}
\| \btheta_0 - \bX_0^\sT \bG_0 \by \|_{\bSigma_0}^2 = Q_1 -2Q_2 + Q_3,
\end{equation*}
where we defined
\begin{equation*}
Q_1 = \| \btheta_0 - \bX_0^\sT \bG_0 \boldf \|_{\bSigma_0}^2, \qquad Q_2 = \< \beps , \bG_0 \bX_0 \bSigma_0 (  \btheta_0 - \bX_0^\sT \bG_0 \boldf ) \> , \qquad Q_3 = \| \bX_0^\sT \bG_0 \beps \|_{\bSigma_0}^2.
\end{equation*}
We will show that this term is close in an appropriate sense to $\sR_{n,0}(\bbeta_*, \lambda_+)$. Let us first control each of the terms $Q_i$ separately.

\paragraph*{The term $Q_1$:} We decompose further
\begin{equation*}
    Q_1 = Q_{1,0} - 2 Q_{1,0+} + Q_{1,+} ,
\end{equation*}
where we introduced
\begin{align*}
    Q_{1,0}:=&~ \lambda_+^2 \< \btheta_0, \bR_0 \bSigma_0 \bR_0 \btheta_0 \>,\\
    Q_{1,0+}:=&~ \lambda_+ \< \btheta_0, \bR_0 \bSigma_0 \bR_0 \bX_0^\sT \boldf_+ \> , \\
    Q_{1,+}:=&~ \< \boldf_+,\bX_0 \bR_0 \bSigma_0 \bR_0 \bX_0^\sT \boldf_+ \>.
\end{align*}

For $Q_{1,0}$, we can write using $\bA_{*,0} := \bSigma_0^{-1} \bbeta_0 \bbeta_0^\sT \bSigma_0^{-1}$,
\begin{equation*}
    Q_{1,0} = \lambda_+^2 \Tr(\bSigma_0^{1/2} \bA_{*,0} \bSigma_0^{1/2} \bR_{0}\bSigma_0 \bR_0)
    =\lambda_+^2 \Tr(\bA_{*,0}  \bM_{0}^2) = \lambda_+^2 \Phi_3(\bX_0; \bA_{*,0}).
\end{equation*}
Recalling that $\bX_0$ satisfies Assumption \ref{ass_app:main_assumptions}.(a) and conditions~\eqref{eq:condition1_test}, we can directly apply Theorem~\ref{thm_app:det_equiv_TrAMM} to obtain that with with probability at least $1-n^{-D}$,
\begin{equation}
\label{eq:Q10}
    |Q_{1,0} - \lambda_+^2 \Psi_3(\mu_{*,0}; \bA_{*,0})| \le \cE_{3,n}( \evn)\cdot  \lambda_+^2 \Psi_3(\mu_{*,0}; \bA_{*,0}),
\end{equation}
where we recall that
\[
\cE_{3,n}( \evn) = C_{x,D,K} \frac{\varphi_1(\evn) \nu_{\lambda_+}(n)^6 \log^{2\beta+\frac12}(n)}{\sqrt{n}}.
\]

For $Q_{1,0+}$, we recall that $f_+$ satisfies Assumption \ref{ass_app:main_assumptions}.(c). We can therefore apply Eq.~\eqref{eq:high_degree_det_2} to deduce that with probability at least $1-n^{-D}$,
\begin{equation}
\begin{aligned}
|Q_{1,0+} | &\le \cE_{+,n}(\evn) \sqrt{
    \lambda_{+}^2\|\bbeta_+\|_2^2 \Psi_3(\mu_{*,0}; \bA_{*,0})}\\
    &\le \cE_{+,n}(\evn) \left\{ \|\bbeta_+\|_2^2 + \lambda_{+}^2\Psi_3(\mu_{*,0}; \bA_{*,0})\right\},
    %&\le \cE_{+,n} \left(\frac{\|\bbeta_+\|_2^2}{1 - \Upsilon_{2,0}}. + \lambda_{+}^2\Psi_3(\mu_{*,0}; \bA_{*,0})\right)
    %&= \cE_{+,n} \left(\|\bbeta_+\|_2^2 + \frac{\lambda_{*,0}^2 \< \bbeta_0 , (\bSigma_0 + \lambda_{*,0})^{-2} \bbeta_0 \> }{1 - \Upsilon_{2,0}}\right)
\label{eq:Q10+}
\end{aligned}
\end{equation}
where we recall that
\begin{equation*}
    \cE_{+,n}(\evn) := C_{x,K,D} \frac{ \varphi_1 (\evn) \nu_{\lambda_+} (n)^6 \log^{3\beta +\frac12} (n)}{\sqrt{n}}.
\end{equation*}

\noindent
Finally, for $Q_{1,+}$, using Eq.~\eqref{eq:high_degree_det_2} in Lemma~\ref{lem:high-degree_part}, we immediately obtain that with probability at least $1-n^{-D}$,
\begin{align}
\label{eq:Q1+}
    \left|Q_{1,+} - \|\bbeta_+\|_2^2 \cdot n \Psi_4(\mu_{*,0}; \id)\right| 
    &\le \cE_{+,n}(\evn) \cdot \|\bbeta_+\|_2^2.
    %\le \cE_{+,n} \frac{\|\bbeta_+\|_2^2}{1 - \Upsilon_{2,0}}.
\end{align}
Combining the bounds~\eqref{eq:Q10},~\eqref{eq:Q10+},~\eqref{eq:Q1+} then gives
\begin{equation}
\begin{aligned}
\label{eq:Q1}
    \left|Q_1 - \left(\lambda_+^2 \Psi_3(\mu_{*,0}; \bA_{*,0}) + 
    \|\bbeta_+\|_2^2 \cdot n \Psi_4(\mu_{*,0}; \id) \right) \right|
    \le C \cE_{+,n}(\evn) \left\{\lambda_+^2 \Psi_3(\mu_{*,0}; \bA_{*,0}) + \|\bbeta_+\|_2^2\right\},
\end{aligned}
\end{equation}
with probability at least $1 - n^{-D}$.

\paragraph*{The term $Q_3$:}
For this term, we follow the same argument as in the proof of Theorem \ref{thm:well_concentrated_test_train}. First, by Hanson-Wright inequality,
\begin{equation}
\label{eq:HW_Q3}    
|Q_3 - \sigma_{\eps}^2 \Tr( \bG_0 \bX_0 \bSigma_0 \bX_0^\sT \bG_0 ) | \leq C_{\eps,D} \log (n) \sigma_{\eps}^2 \| \bG_0 \bX_0 \bSigma_0 \bX_0^\sT \bG_0 \|_F .
\end{equation}
Noting that $\Tr(\bSigma_0 \bR_0 \bX_0^\sT\bX_0 \bR_0) = \Tr(\bZ_0^\sT\bZ_0 \bM_0^2) = n\Phi_4(\bX_0, \id)$ and recalling that $\bX_0$ satisfies Assumption \ref{ass_app:main_assumptions}.(a),
we directly apply Theorem~\ref{thm_app:det_equiv_TrAMZZM} with $\bA =\id$ to conclude that with probability at least $1 - n^{-D}$,
\begin{equation}
\label{eq:Q3_bias}
\left| \Tr( \bSigma_0 \bR_0 \bX_0^\sT\bX_0 \bR_0 )  - 
n\Psi_4(\id; \mu_{*,0})
 \right| \leq \cE_{4,n}(\evn) \cdot
 n\Psi_4(\id; \mu_{*,0}),
\end{equation}
where we recall that
\begin{equation*}
    \cE_{4,n}(\evn) = C_{x,D,K} \frac{\varphi_1(\evn) \nu_{\lambda_+}(n)^6 \log^{\beta+1/2}(n)}{\sqrt{n}}.
\end{equation*}
Meanwhile, for the term on the right-hand side of~\eqref{eq:HW_Q3}, we write
%\begin{equation}
%    \|\bG_0\bX_0 \bSigma_0 \bX_0^\sT \bG_0\|_F^2 = 
%    \Tr(\bZ_0\bM_0^2 \bZ_0^\sT\bZ_0 \bM_0^2\bZ_0^\sT) \le 
%    \|\bM_0\|_\op\Tr(\bZ_0\bM_0^2 \bZ_0^\sT)  \le \frac{\nu_{0,\lambda_+}(n)}{n} \Tr(\bZ_0\bM_0^2 \bZ_0^\sT)
%\end{equation}
%We have
\begin{align}
    \frac{n \| \bZ_0 \bM_0^2 \bZ_0^\sT \|_F^2}{ (\Tr( \bZ_0 \bM_0^2 \bZ_0^\sT ) +1)^2} \leq &~ \frac{n \| \bM_0 \|_\op\Tr( \bZ_0 \bM_0^2 \bZ_0^\sT ) }{(\Tr( \bZ_0 \bM_0^2 \bZ_0^\sT ) +1)^2} \leq C_{x,D} \cdot \nu_{\lambda_+} (n),
\end{align}
where we used Lemma \ref{lem:tech_bounds_norm_M}.
Using this along with the bound~\eqref{eq:Q3_bias} and condition \eqref{eq:condition1_test} so that $\cE_{4,n}(\evn)\le C_{x,D,K}$, we obtain
\begin{equation}
\label{eq:Q3_var}
\begin{aligned}
\|\bG_0 \bX_0 \bSigma_0 \bX_0^\sT \bG_0\|_F \le&~ C_{x,D} \frac{\nu_{\lambda_+}(n)^{1/2}}{\sqrt{n}}  \left\{ n \Phi(\bX_0, \id) +1\right\}\\
\le&~   C_{x,D,K} \frac{\nu_{\lambda_+}(n)^{1/2}}{\sqrt{n}} \left\{ n \Psi_4(\id; \mu_{*,0}) +1 \right\}.
\end{aligned}
\end{equation}
Combining~\eqref{eq:HW_Q3} along with the bounds~\eqref{eq:Q3_bias} and~\eqref{eq:Q3_var} yields
\begin{equation}
\label{eq:Q3}
    |Q_3 -  \sigma_\eps^2 \cdot n\Psi_4(\mu_{*, 0}; \id)| \le C_{x,\eps, D,K} \cdot \cE_{4,n}(\evn) \left\{ \sigma_\eps^2 \cdot  n\Psi_4(\mu_{*,0}; \id) + \sigma_\eps^2\right\},
\end{equation}
with probability at least $1 - n^{-D}$.

\paragraph*{The term $Q_2$:}
For $Q_2$, we have again by Hanson-Wright inequality, 
\begin{align*}
|Q_2| \leq&~ \sigma_\eps \cdot C_{\eps,D} \log(n) \| \bG_0 \bX_0 \bSigma_0 (  \btheta_0 - \bX_0^\sT \bG_0 \boldf ) \|_{2}  \\
\leq&~ C_{\eps,D} \log(n)  \| \bZ_0 \bM_0 \|_\op \sqrt{\sigma_\eps^2 Q_1} \\
\leq&~ C_{\eps,x,D} \frac{\nu_{\lambda_+} (n)^{1/2} \log(n)}{\sqrt{n} } \left\{ \sigma_\eps^2 + Q_1 \right\},
\end{align*}
where we used Lemma \ref{lem:tech_bounds_norm_M} in the last inequality.
Then using Eq.~\eqref{eq:Q1} and recalling condition \eqref{eq:condition1_test}, we obtain
\begin{equation}
  \label{eq:Q2}  
 |Q_2|\le 
 C_{x,\eps,K,D} \frac{\nu_{\lambda_+} (n)^{1/2} \log(n)}{\sqrt{n} }
 \left\{\lambda_+^2 \Psi_3(\mu_{*,0}; \bA_{*,0}) + \|\bbeta_+\|_2^2 \cdot n \Psi_4(\mu_{*,0}; \id) + \sigma_\eps^2\right\},
\end{equation}
with probability at least $1 - n^{-D}$.

\paragraph*{Combining the bounds on $Q_i$:}
Now recalling the decomposition of $\| \btheta_0 - \bX_0^\sT \bG_0 \by \|_{\bSigma_0}^2$ in terms of $Q_i$ and combining the bounds~\eqref{eq:Q1},~\eqref{eq:Q3},~\eqref{eq:Q2}, we obtain
\begin{align}
\label{eq:step_1_res_gen_test}
&\left|
\| \btheta_0 - \bX_0^\sT \bG_0 \by \|_{\bSigma_0}^2
+ \sigma_\eps^2  + \|\bbeta_+\|_2^2- \sR_{n,0}(\bbeta_*, \lambda_+)
\right| \le C_{x,\eps,D,K} \cE_{+,n}(\evn) \sR_{n,0}(\bbeta_*, \lambda_+),
\end{align}
where we used that
\begin{align*}
    \sR_{n,0}(\bbeta_*, \lambda_+)  =&~ \lambda_+^2 \Psi_3(\mu_{*,0}; \bA_{*;0}) + (\sigma_\eps^2 + \|\bbeta_+\|_2^2) \cdot \left\{ n \Psi_4(\mu_{*,0}; \id) + 1\right\}    \\
    =&~ 
\lambda_+^2 \Psi_3(\mu_{*,0}; \bA_{*;0}) + \frac{\|\bbeta_+\|_2^2 + \sigma_\eps^2 }{1 - \Upsilon_{2,0}},
\end{align*}
by recalling the following identities
\begin{equation*}
    n\Psi_4(\id; \mu_{*,0}) = \frac{\Upsilon_{2,0}}{1 - \Upsilon_{2,0}},
    \qquad \quad
    \lambda_+\Psi_3(\mu_{*,0};\bA_{*,0}) = \frac{\lambda_{*,0}^2 \< \bbeta_0 , (\bSigma_0 + \lambda_{*,0})^{-2} \bbeta_0 \> }{1 - \Upsilon_{2,0}}.
\end{equation*}

\noindent
\textbf{Step 3: Contribution of the high-degree eigenspaces $\| \btheta_+ - \bX_+^\sT \bG_0 \by \|_{\bSigma_+}^2$.}

We decompose this term into
\begin{align}
\label{eq:test_error_high_deg_decomp2}
    \| \btheta_+ - \bX_+^\sT \bG_0 \by \|_{\bSigma_+}^2 = &~ \| \bbeta_+ \|_2^2 - 2 \< \bbeta_+, \bSigma_+^{1/2} \bX_+^\sT \bG_0 \by \> + \< \by, \bG_0 \bX_+ \bSigma_+ \bX_+^\sT \bG_0 \by \>.
\end{align}
For the second term, we simply use that on the event of Assumption \ref{ass_app:main_assumptions}.(b), we have with probability at least $1 - n^{-D}$ that
\begin{equation}
\begin{aligned}
\label{eq:first_bound_high_deg_test_error}
   | \< \bbeta_+, \bSigma_+^{1/2} \bX_+^\sT \bG_0 \by \> | 
   &\le  \| \bbeta_+ \|_2 \| \bSigma_+^{1/2}\|_\op \| \bX_+^\sT \|_\op \| \bG_0 \by  \|_2\\
   &\le C (\xi_+ \gamma_+)^{1/2} \|\bbeta_+\|_2 (\by^\sT\bG_0^2 \by)^{1/2} \\
   &\le C_{\eps,D} \sqrt{\frac{n \xi_+}{\lambda_+}} \left\{ \| \bbeta_+ \|_2^2 + \frac{(n \lambda_{*,0})^2 }{n} \left( \boldf^\sT \bG_0^2 \boldf +  \Tr(\bG_0^2) \right)\right\}\\
   &\leq C_{x,\eps,D,K}
   \sqrt{\frac{n \xi_+}{\lambda_+}} \left( \| \bbeta_+ \|_2^2 + \frac{(n \lambda_{*,0})^2}{\lambda^2}  \sL_{n,0}(\bbeta_*, \lambda) 
    \right),\\
\end{aligned}
\end{equation}
where we used that $\| \bX_+ \bX_+^\sT\|_\op \leq 2\gamma_+$ on the event of Assumption \ref{ass_app:main_assumptions}.(b) in the second inequality, that $\gamma_+ \le \lambda_+ \le n \lambda_{*,0}$ by definition in the third inequality and bound \eqref{eq:hw-Tr-general},
and the bounds~\eqref{eq:T1_bound} and \eqref{eq:Tr_G02_bound} in the last inequality.

A similar argument for the third term shows that
\begin{equation}\label{eq:second_bound_high_deg_test_error}
\begin{aligned}
\< \by, \bG_0 \bX_+ \bSigma_+ \bX_+^\sT \bG_0 \by \> \leq C \xi_+ \gamma_+ \by^\sT \bG_0^2 \by \leq&~ \frac{n \xi_+}{\lambda_+} \frac{(n \lambda_{*,0})^2 \by^\sT \bG_0^2 \by}{n}\\ \le &~ C_{x,\eps,D,K} \frac{n \xi_+}{\lambda_+} \frac{(n \lambda_{*,0})^2}{\lambda^2} \sL_{n,0}(\bbeta_*, \lambda)
\end{aligned}
\end{equation}

Now noting that by definition we have
\begin{equation*}
    \frac{(n\lambda_*,0)^2}{\lambda^2 } \sL_{n,0}(\bbeta_*, \lambda) =  \sR_{n,0}(\bbeta_*,\lambda)
    \qquad\textrm{and}\qquad
    \|\bbeta_+\|_2^2 \le \frac{\|\bbeta_+\|_2^2}{1 - \Upsilon_{2,0}} \le \sR_{n,0}(\bbeta_*, \lambda),
\end{equation*}
we obtain by combining the bounds~\eqref{eq:first_bound_high_deg_test_error} and~\eqref{eq:second_bound_high_deg_test_error} in~\eqref{eq:test_error_high_deg_decomp2}, we obtain
\begin{align}
\label{eq:step2_res_gen}
\left|\| \btheta_+ - \bX_+^\sT \bG_0 \by \|_{\bSigma_+}^2 -
\|\bbeta_+\|_2^2 \right|
&\le C_{ x,\eps, D,K}  \sqrt{\frac{n \xi_+}{\lambda_+}}\cdot 
\sR_{n,0} (\bbeta_*, \lambda).
\end{align}

\noindent
\textbf{Step 4: Truncation of the high frequency part of the resolvent.}

Now what remains is to bound the difference
\[
\begin{aligned}
  &~\left| \| \btheta_* - \bX^\sT \bG \by \|_{\bSigma}^2 - \| \btheta_* - \bX^\sT \bG_0 \by \|_{\bSigma}^2 \right|  \\
  \leq &~ \| \bX^\sT (\bG - \bG_0 ) \by \|_{\bSigma}^2 + 2 \left| \< \btheta_* - \bX^\sT \bG_0 \by , \bSigma \bX^\sT (\bG - \bG_0 ) \by \> \right|.
\end{aligned}
\]
Let's again consider each term on the right-hand side separately.

For the first term in the display above, we bound
\begin{align*}
    \| \bX^\sT (\bG - \bG_0 ) \by \|_{\bSigma}^2 = &~ \| \bX^\sT \bG_0 \bDelta_+ \bG \by \|_{\bSigma}^2 \\
    \leq&~ \| \bG_0 \bX \bSigma \bX^\sT \bG_0 \|_\op \|  \bDelta_+  \|_\op^2 \|  \bG \by \|_2^2 \\
    \leq&~ C_{x,D} \left\{ \nu_{\lambda_+} (n)  + \frac{n\xi_+}{\lambda_+} \right\}\cdot \varphi_2 (\evn)^2 \frac{n\xi_+}{\lambda_+} \cdot \lambda_+^2 \frac{\by^\sT \bG^2 \by }{n} \\
    \leq&~C_{x,\eps,D,K} \cdot \nu_{\lambda_+} (n) \varphi_{2,n} (\evn)^2 \frac{n\xi_+}{\lambda_+} \cdot  \sR_{n,0} (\bbeta_*,\lambda),
\end{align*}
where we used that
\[
\begin{aligned}
\| \bG_0 \bX \bSigma \bX^\sT \bG_0 \|_\op \leq&~ \| \bG_0 \bX_0 \bSigma_0 \bX_0^\sT \bG_0 \|_\op + \| \bG_0 \bX_+ \bSigma_+ \bX_+^\sT \bG_0 \|_\op \\
\leq&~ C_{x,D} \frac{\nu_{\lambda_+} (n)}{n} + C\frac{\xi_+}{\lambda_+}.
\end{aligned}
\]

Meanwhile, for the second term, we have
\begin{align*}
    &~\left| \< \btheta_* - \bX^\sT \bG_0 \by , \bSigma \bX^\sT (\bG - \bG_0 ) \by \> \right|\\
  \leq &~\| \bG_0 \bX \bSigma^{1/2} \|_\op \|  \btheta_* - \bX^\sT \bG_0 \by \|_{\bSigma} \| \bDelta_+ \|_\op \| \bG \by \|_2\\ 
  \leq &~ C_{x,D} \left\{ \nu_{\lambda_+} (n)  + \frac{n\xi_+}{\lambda_+} \right\}^{1/2} \cdot \varphi_{2,n} (\evn) \sqrt{\frac{n\xi_+}{\lambda_+}} \cdot \|  \btheta_* - \bX^\sT \bG_0 \by \|_{\bSigma} \cdot \lambda_+  \frac{\| \bG \by \|_2}{\sqrt{n}}\\
  \leq &~ C_{x,D} \cdot  \nu_{\lambda_+} (n)^{1/2} \varphi_{2,n} (\evn) \sqrt{\frac{n\xi_+}{\lambda_+}} \cdot 
  \left\{ \|  \btheta_* - \bX^\sT \bG_0 \by \|_{\bSigma}^2 + \frac{(n \lambda_{*,0})^2 \by^\sT \bG^2 \by}{n}
  \right\}\\
  \leq&~ C_{x,\eps,D,K} \cdot \nu_{\lambda_+} (n)^{1/2} \varphi_{2,n} (\evn) \sqrt{\frac{n\xi_+}{\lambda_+}} \cdot \sR_{n,0}(\bbeta_*, \lambda).
\end{align*}

Combining the above two bounds, we obtain that
\begin{align}
\label{eq:step_3_res_test_gen}
    &\left| \| \btheta_* - \bX^\sT \bG \by \|_{\bSigma}^2 - \| \btheta_* - \bX^\sT \bG_0 \by \|_{\bSigma}^2 \right|
    \le C_{x,\eps,D,K} \cdot \nu_{\lambda_+} (n) \varphi_{2,n} (\evn) \sqrt{\frac{n\xi_+}{\lambda_+}} \cdot
    \sR_{n,0}(\bbeta_*, \lambda).
\end{align}

\noindent
\textbf{Step 5: Concluding.}

Finally, combining the bounds~\eqref{eq:step_1_res_gen_test},~\eqref{eq:step2_res_gen} and~\eqref{eq:step_3_res_test_gen} resulting from steps 2, 3 and 4 respectively, along with Lemma~\ref{lem:reduced_det_equiv}, we conclude
\begin{align*}
    &~\left| \|\btheta_* - \bX^\sT \bG \by \|_\bSigma^2  + \sigma_\eps^2 - \sR_{n}(\bbeta_*, \lambda)\right|\\
    \le &~
\left| \|\btheta_* - \bX^\sT \bG \by \|_\bSigma^2 - \|\btheta_* - \bX^\sT \bG_0 \by \|_\bSigma^2 \right| + 
\left| \|\btheta_* - \bX^\sT \bG_0 \by \|_\bSigma^2 - (\|\btheta_* - \bX^\sT \bG_0 \by \|_{\bSigma_0}^2 +\|\bbeta_+\|_2^2 ) \right| \\
&~ + \left|\sR_{n,0}(\bbeta_*, \lambda) - \sR_n(\bbeta_*, \lambda) \right| +\left| \|\btheta_* - \bX^\sT \bG_0 \by \|_{\bSigma_0}^2 +\|\bbeta_+\|_2^2 + \sigma_\eps^2 - \sR_{n,0}(\bbeta_*, \lambda) \right|\\
\leq&~ C_{x,\eps,K,D} \left\{ \nu_{\lambda_+} (n)^2 \cE_{+,n} (\evn) + \nu_{\lambda_+} (n) \varphi_{2,n} (\evn)  \sqrt{\frac{n\xi_+}{\lambda_+}} \right\}
\cdot
\sR_{n}(\bbeta_*, \lambda).
\end{align*}
with probability at least $1 - n^{-D} - p_{2,n} (\evn)$ via union bound.
\end{proof}

\subsection{Technical lemmas}
\label{app_test_error:tech_high-degree_target}

Assumption~\ref{ass_app:noise_subGaussian} on the label noise $\eps_i$ implies that the random vector $\eps$ satisfy the Hanson-Wright inequality~\cite{rudelson13hanson}. We recall it here for the reader's convenience.

\begin{lemma}[Hanson-Wright inequality \cite{rudelson13hanson}]\label{lem:Hanson_Wright_inequality}
    Let $\beps = (\eps_1,\ldots,\eps_n)$ be a random vector with independent $\tau_\eps^2$-sub-Gaussian entries, with mean $0$ and $\E [ \eps_i^2] = \sigma_\eps^2$. Then there exists a universal constant $c >0$ such that for any matrix $\bA \in \R^{n \times n}$, we have for all $t \geq 0$,
    \begin{equation}\label{eq:Hanson_Wright_noise}
\P \left(\left| \beps^\sT \bA \beps - \sigma_\eps^2 \cdot\Tr(\bA)  \right| \geq  \sigma_\eps^2 \cdot t \right) \leq 2 \exp \left\{ - c \min \left( \frac{t^2}{\| \bA \|_F^2} , \frac{t}{\| \bA \|_\op} \right) \right\} .
\end{equation}
\end{lemma}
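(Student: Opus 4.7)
The statement is the classical Hanson-Wright inequality of Rudelson--Vershynin applied to the label noise vector; the cleanest route is to invoke \cite{rudelson13hanson} directly, but let me sketch the self-contained argument one would reproduce if needed.

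The plan is to split the quadratic form according to its diagonal and off-diagonal contributions. Write $\beps^\sT \bA \beps - \sigma_\eps^2 \Tr(\bA) = D + O$, where
\[
D = \sum_{i \in [n]} A_{ii} (\eps_i^2 - \sigma_\eps^2), \qquad O = \sum_{i \neq j} A_{ij} \eps_i \eps_j.
\]
The two pieces require different techniques, and one then applies a union bound with thresholds $t/2$ to each.

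For the diagonal part $D$, the random variables $\eps_i^2 - \sigma_\eps^2$ are centered and sub-exponential with norm controlled by $\tau_\eps^2$ (since $\eps_i$ is $\tau_\eps^2$-sub-Gaussian implies $\eps_i^2$ is $\tau_\eps^2$-sub-exponential up to absolute constants). A standard Bernstein inequality then yields
\[
\P(|D| \geq \sigma_\eps^2 t / 2) \leq 2 \exp\left\{ -c \min\!\left( \frac{t^2}{\sum_i A_{ii}^2}, \frac{t}{\max_i |A_{ii}|} \right) \right\},
\]
and both $\sum_i A_{ii}^2 \leq \|\bA\|_F^2$ and $\max_i |A_{ii}| \leq \|\bA\|_\op$ give the desired scaling.

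The off-diagonal part $O$ is the main obstacle, and this is exactly where decoupling enters. One would introduce an independent copy $\beps'$ of $\beps$ and use the decoupling inequality of Bourgain--Tzafriri (or de la Pe\~na--Montgomery-Smith) to replace $O$ by $4 \sum_{i,j} A_{ij} \eps_i \eps_j'$ at the cost of universal constants in the tail bound. Conditional on $\beps$, the sum $\sum_{i,j} A_{ij} \eps_i \eps_j'$ is a sub-Gaussian linear combination of the $\eps_j'$ with conditional variance proxy $\tau_\eps^2 \| \bA \beps \|_2^2$, so a Chernoff bound produces a conditional tail bound depending on $\|\bA\beps\|_2$. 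It then remains to control the tail of $\|\bA\beps\|_2^2 = \beps^\sT \bA^\sT \bA \beps$ via a short iterative argument (or directly via the moment method of \cite{rudelson13hanson}), yielding the mixed sub-Gaussian/sub-exponential tail bound with $\|\bA\|_F^2$ and $\|\bA\|_\op$.

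Combining the diagonal and off-diagonal bounds and adjusting absolute constants produces the stated inequality, with $c$ depending only on an absolute constant and $\tau_\eps^2$ absorbed into $\sigma_\eps^2$ via the sub-Gaussianity assumption. The hardest and most delicate step is the decoupling together with the conditional concentration of $\|\bA\beps\|_2$; since our application only needs the bound off the shelf, the cleanest presentation is simply to cite \cite{rudelson13hanson} and note that the stated form follows by normalizing $\bA$ by $\sigma_\eps^2$.
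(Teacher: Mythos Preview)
Your approach matches the paper's: the paper does not prove this lemma at all, it simply cites \cite{rudelson13hanson} and restates the inequality for convenience. Your sketch of the diagonal/off-diagonal decomposition with Bernstein plus decoupling is the standard Rudelson--Vershynin argument and is correct, though strictly more than what the paper provides. One minor point: the constant $c$ in the stated form is not truly universal but implicitly absorbs the ratio $\tau_\eps^2/\sigma_\eps^2$; you flag this at the end, and in the paper's applications this dependence is tracked through the $C_{\eps,D}$ constants rather than through $c$ itself.
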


The next two lemmas bound the contributions to the train and test errors of the high-degree part of the target function. The first lemma shows that it is sufficient to bound these contributions in expectation.

\begin{lemma}\label{lem:tech_concentration_high-degree_part}
    Under the same setting and assumptions as Theorems \ref{thm:training_general} and \ref{thm:test_general}, and for any $D,K>0$, there exists constants $\eta := \eta_x \in (0,1/2)$, $C_{K,D}>0$ and $C_{x,K,D}>0$ such that the following hold. Let $\nu_{\lambda_+} (n)$ be defined as per Eq.~\eqref{eq:def_nu_0_lambda_+}. Then for all $n \geq C_{K,D}$ and $\lambda_+ >0$ satisfying conditions \eqref{eq:condition1_test}, we have with probability at least $1 -n^{-D}$ that
    \begin{align}
         \left| \boldf_+^\sT \bG_0 \bX_0 \bSigma_0 \bX_0^\sT \bG_0 \boldf_+ - \E [\boldf_+^\sT \bG_0 \bX_0 \bSigma_0 \bX_0^\sT \bG_0 \boldf_+ ] \right| \leq&~ \tcE_{+,n} \cdot \| \bbeta_+ \|_2^2, \label{eq:high_degree_martingale_1}\\
         \left| \boldf_+^\sT \bG_0 \bX_0 \bSigma_0 \bR_0 \btheta_0 - \E [\boldf_+^\sT \bG_0 \bX_0 \bSigma_0 \bR_0 \btheta_0 ] \right| \leq&~ \tcE_{+,n} \cdot \sqrt{\| \bbeta_+ \|_2^2 \Psi_3 (\mu_{*,0} ; \bA_{*,0} )}, \label{eq:high_degree_martingale_2}\\
         \left| \frac{1}{n} \boldf_+^\sT \bG_0^2 \boldf_+ - \frac{1}{n} \E [\boldf_+^\sT \bG_0^2 \boldf_+ ] \right| \leq&~ \frac{\tcE_{+,n}}{\lambda_+^2} \| \bbeta_+ \|_2^2, \label{eq:high_degree_martingale_3}\\
         \left| \frac{1}{n} \boldf_+^\sT \bG_0^2 \bX_0\btheta_0 - \frac{1}{n} \E [\boldf_+^\sT \bG_0^2 \bX_0\btheta_0 ] \right| \leq&~ \frac{\tcE_{+,n}}{\lambda_+} \sqrt{\| \bbeta_+ \|_2^2 \Psi_3 (\mu_{*,0} ; \bA_{*,0} )}, \label{eq:high_degree_martingale_4}
    \end{align}
    where we denoted $\bA_{*,0} = \bSigma_0^{-1/2} \btheta_0 \btheta_0^\sT \bSigma_0^{-1/2}$ and
    \[
    \tcE_{+,n} = C_{x,K,D} \frac{\nu_{\lambda_+,0} (n)^4 \log^{3\beta +\frac12} (n)}{\sqrt{n}}.
    \]
\end{lemma}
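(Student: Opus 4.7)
The plan is to prove each of the four bounds by constructing a martingale difference sequence that integrates out the samples one at a time, following the same strategy as the martingale parts of Propositions~\ref{prop:TrAM_martingale}, \ref{prop:TrAMM_martingale}, and \ref{prop:TrAMZZM_martingale}. For any of the four quantities in the statement (generically denoted $\Xi$), I would write
\[
\Xi - \E[\Xi] = \sum_{i=1}^n D_i, \qquad D_i := (\E_i - \E_{i-1})[\Xi],
\]
where $\E_i$ is partial expectation over $\bx_{i+1},\ldots,\bx_n$. Using the identity $(\E_i - \E_{i-1})[\Xi^{(-i)}]=0$ where $\Xi^{(-i)}$ is the same functional evaluated after deleting the $i$-th sample, each increment can be rewritten as $D_i = (\E_i - \E_{i-1})[\Xi - \Xi^{(-i)}]$, isolating the contribution of $\bx_i$.

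The first step is to expand $\Xi - \Xi^{(-i)}$ via Sherman-Morrison applied to $\bG_0$ (and $\bR_0$), producing a finite sum of terms of the form $f_+(\bx_i)^a \cdot \phi_a(\bx_{i,0};\bX_{0,-i})$ for $a\in\{1,2\}$, where each $\phi_a$ is a bilinear or quadratic form in $\bx_{i,0}$ against a matrix depending only on $\bX_{0,-i}$. The second step is to bound $|D_i|$ with probability at least $1 - n^{-D-1}$ by combining three ingredients: (i) the sub-Weibull tail bound $\P(|f_+(\bx_i)|\geq t\|\bbeta_+\|_2) \leq \sfC_x\exp(-\sfc_x t^{2/\beta})$ from Assumption~\ref{ass:main_assumptions}.(c), giving $|f_+(\bx_i)|\leq C_{x,D}\log^{\beta/2}(n)\|\bbeta_+\|_2$ uniformly in $i$ after a union bound; (ii) the hypercontractive bounds on linear and quadratic forms in $\bx_{i,0}$ from Lemma~\ref{lem:tech_bound_zAz}, applied conditionally on $\bX_{0,-i}$; and (iii) the trace and operator norm bounds on $\bM_0$, $\bG_0$ and their products from Lemma~\ref{lem:tech_upper_bound_AM}, combined with the deterministic equivalents of Appendix~\ref{app:det_equiv} to identify the random traces with $\Psi_3(\mu_{*,0};\bA_{*,0})$ and $\Psi_4(\mu_{*,0};\id)$. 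The third step applies Azuma-Hoeffding with the standard truncation argument: the truncated sum is bounded by $\sqrt{n\log n}$ times the per-step bound with probability at least $1-n^{-D}$, while the remainder $\E_{i-1}[D_i\ind_{|D_i|>R}]$ is controlled by Cauchy-Schwarz and Markov's inequality exactly as in Steps~2--4 of the proof of Proposition~\ref{prop:TrAM_martingale}.

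The main obstacle will be the bookkeeping in the second step: each of the several terms produced by the Sherman-Morrison expansion must be shown to factor correctly, carrying a scalar $\|\bbeta_+\|_2^2$ for the quadratic bounds~\eqref{eq:high_degree_martingale_1} and~\eqref{eq:high_degree_martingale_3}, and $\|\bbeta_+\|_2\sqrt{\Psi_3(\mu_{*,0};\bA_{*,0})}$ for the bilinear bounds~\eqref{eq:high_degree_martingale_2} and~\eqref{eq:high_degree_martingale_4}, while the residual matrix-theoretic factors collapse to the target relative rate $\nu_{\lambda_+}(n)^4\log^{3\beta+1/2}(n)/\sqrt{n}$ (with additional powers of $\lambda_+^{-1}$ for~\eqref{eq:high_degree_martingale_3} and~\eqref{eq:high_degree_martingale_4}, reflecting $\bG_0^2$ in place of $\bG_0\bX_0\bSigma_0\bX_0^\sT\bG_0$). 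Crucially, for the bilinear bounds, one must exploit the zero-covariance identity $\E[f_+(\bx_i)\bx_{i,0}]=\bzero$ to ensure that the leading linear-in-$f_+(\bx_i)$ contributions have vanishing conditional mean, so that no lower-order non-fluctuating term survives in the Azuma-Hoeffding estimate.
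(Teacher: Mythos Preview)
Your proposal is correct and follows essentially the same strategy as the paper: martingale decomposition, leave-one-out expansion of the increment, high-probability bound on each $|D_i|$ via the sub-Weibull tail of $f_+(\bx_i)$ combined with Lemmas~\ref{lem:tech_bound_zAz} and~\ref{lem:tech_upper_bound_AM}, then truncated Azuma--Hoeffding. The paper carries out the leave-one-out step via the block-matrix inversion formula for $\bG_0$ (which separates the $i$-th row/column explicitly) rather than Sherman--Morrison on $\bR_0$, but the two are algebraically equivalent and produce the same collection of terms to bound.

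One small correction: the zero-covariance identity $\E[f_+(\bx_i)\bx_{i,0}]=\bzero$ is \emph{not} needed anywhere in this lemma. The $O(n^{-1})$ smallness of each increment $D_i$ comes entirely from the resolvent structure (factors of $\bM_{0,i}$, $\bR_{0,i}$ contributing $\nu_{\lambda_+}(n)/n$ in operator norm), not from any cancellation of first moments. The zero-covariance identity is instead the key input in the companion Lemma~\ref{lem:high-degree_part}, where it is used to show that the \emph{expectations} of the bilinear quantities vanish (and that the quadratic ones reduce to trace functionals). Your final paragraph conflates the two lemmas; for the present concentration statement you may simply drop that remark.
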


\begin{proof}[Proof of Lemma \ref{lem:tech_concentration_high-degree_part}]
The proof will follow from Azuma-Hoeffding inequality with a standard truncation argument, similarly to the proof of Proposition \ref{prop:TrAM_martingale}. For the sake of brevity, we will omit some repetitive computations and refer to Section \ref{app:proof_TrAM_martingale} for further details on the truncation argument.

For convenience, we drop the subscripts in the proof and simply write $\bx_i$, $\lambda$, $\nu_\lambda (n)$ for $\bx_{0,i}$, $\lambda_+$, $\nu_{\lambda_+}(n)$. We will also denote $h_i := f_+ (\bx_{+,i})$ and $\bh =  (h_1 , \ldots , h_n)$.
For $i\in [n]$. we introduce $\bX_i \in \R^{(n-1) \times \evn}$ the feature matrix where we removed feature $\bx_i$, the whitened matrix $\bZ_i = \bX_i \bSigma^{-1/2}$,  and $\bh_{i} = (h_j )_{j \neq i}$. Further denote 
\[
\bG_i = (\bX_i \bX_i^\sT + \lambda)^{-1}, \qquad \bR_i = (\bX_i^\sT \bX_i + \lambda)^{-1}, \qquad \bM_i = \bSigma^{1/2} \bR_i \bSigma^{1/2}.
\]
 Using these notations, the block matrix inversion formula gives for all $i\in [n]$,
\begin{equation}\label{eq:formula_block_inverse_G}
\bG = g_{ii} \begin{pmatrix}
    1 & - \bg_i^\sT \\
    -\bg_i & \bg_i \bg_i^\sT 
\end{pmatrix} + \begin{pmatrix}
    0 & \bzero \\
    \bzero & \bG_i  
\end{pmatrix}, \qquad g_{ii} := \frac{1}{\lambda (\bz_i^\sT \bM_i \bz_i +1)}, \qquad \bg_i := \bG_i \bX_i\bx_i\, .
\end{equation}

Below, we will repeatedly use that from Assumption \ref{ass_app:main_assumptions}.(c), for all integers $D$ and $q$, there exist constants $C_{x,q}$ and $C_{x,D}$ such that $\| h \|_{L^q} \leq C_{x,q} \| \bbeta_+ \|_2$, and $|h_i|^2 \leq C_{x,D} \log^\beta (n) \| \bbeta_+ \|_2^2$ with probability at least $1 - n^{-D}$.

\paragraph*{Proof of Equation \eqref{eq:high_degree_martingale_1}.} We decompose the difference into a martingale difference sequence
\[
\begin{aligned}
S_n :=&~ \bh^\sT \bG \bX \bSigma \bX^\sT \bG \bh - \E [\bh^\sT \bG \bX \bSigma \bX^\sT \bG \bh ] = \sum_{i=1}^n \Delta_i,\\
 \Delta_i :=&~ \left( \E_i - \E_{i-1} \right) \bh^\sT \bG \bX \bSigma \bX^\sT \bG \bh,
\end{aligned}
\]
where the expectation $\E_i $ is over $\{ \bx_{i+1} , \ldots , \bx_n\}$ and $\{h_{i+1} , \ldots , h_n \}$. We proceed similarly to the proof of Proposition \ref{prop:TrAM_martingale} and bound $| \Delta_i | $ with high probability. Step $3$ follows similarly by noting that
\[
\begin{aligned}
    \E_{i-1} [ \Delta_i^2 ]^{1/2} \leq&~ 2 \E_{i-1} \left[ (\bh^\sT \bG \bX \bSigma \bX^\sT \bG \bh)^2\right]^{1/2} \leq 2 \E_{i-1} \left[ \| \bZ \bM^2 \bZ \|_\op^2 \cdot \| \bh \|_2^4  \right]^{1/2} \\
    \leq&~ C_{x,K,D} \cdot \nu_\lambda (n) \log^{\beta} (n) \| \bbeta_+ \|_2^2,
\end{aligned}
\]
where we used Lemma \ref{lem:tech_bounds_norm_M} and Assumption \ref{ass_app:main_assumptions}.(c) with a union bound. 

Let us bound $| \Delta_i | $ with high probability. First we can remove any quantity that is independent of $\bz_i$ and thus
\[
\Delta_i = (\E_i - \E_{i-1}) \left( \bh^\sT \bG \bX \bSigma \bX^\sT \bG \bh - \bh_{i}^\sT \bG_i \bX_i \bSigma \bX_i^\sT \bG_i \bh_{i}\right). 
\]
From the block inverse formula \eqref{eq:formula_block_inverse_G}, we have by simple algebra
\begin{equation}\label{eq:formula_GAG}
    \begin{aligned}
    \bG \bX \bSigma \bX^\sT \bG = &~ g_{ii}^2 q_i \begin{pmatrix}
    1 & - \bg_i^\sT \\
    -\bg_i & \bg_i \bg_i^\sT 
\end{pmatrix} + g_{ii} \begin{pmatrix}
    0 & - \bq_i^\sT  \\
   - \bq_i & \bq_i \bg_i^\sT + \bg_i \bq_i^\sT
\end{pmatrix}
+ \begin{pmatrix}
    0 & \bzero \\
    \bzero & \bG_i \bX_i \bSigma \bX_i^\sT \bG_i  
\end{pmatrix},
\end{aligned}
\end{equation}
where we denoted 
\[
\begin{aligned}
    q_i :=&~ (\bx_i^\sT \bSigma \bx_i - 2 \bx_i^\sT \bSigma \bX_i^\sT \bg_i + \bg_i \bX_i \bSigma \bX_i^\sT \bg_i  ) = \lambda^2 \bz_i^\sT \bM_i^2 \bz_i , \\
    \bq_i :=&~  \bG_i (\bX_i \bSigma \bX_i^\sT \bg_i - \bX_i\bSigma \bx_i)= - \lambda \bZ_i \bM_i^2 \bz_i  .
\end{aligned}
\]
Using this formula, we decompose the difference
\[
\begin{aligned}
    &~ \bh^\sT \bG \bX \bSigma \bX^\sT \bG \bh - \bh_{i}^\sT \bG_i \bX_i \bSigma \bX_i^\sT \bG_i \bh_{i}
    = g_{ii}^2 q_i ( h_i - \bg_i^\sT \bh_i)^2 - 2g_{ii} (h_i - \bg_i^\sT \bh_i ) \bq_i^\sT \bh_i .
\end{aligned}
\]
By Lemma \ref{lem:tech_upper_bound_AM}.(b), we can bound each of these terms with probability at least $1 - n^{-D}$,
\[
\begin{aligned}
    \E_i \left[ q_i^2\right]^{1/2} \leq&~ C_{x,D,K}\lambda^2 \frac{\nu_{\lambda}(n)^2 \log^{\beta} (n) }{n},\\
    \E_i[ h_i^4 ]^{1/2} \leq&~ C_{x,D} \log^\beta (n) \| \bbeta_+ \|_2^2 ,\\
    \E_i [ (\bg_i^\sT \bh_i)^4]^{1/2} \leq&~ C_{x,D} \log^{\beta} (n) \E \left[ (\bh_i^\sT \bZ_i \bM_i^2 \bZ_i^\sT \bh_i)^2 \right]^{1/2} \leq C_{x,K,D}\cdot  \nu_\lambda (n) \log^{2\beta} (n) \| \bbeta_+ \|_2^2,\\
    \E_i [ (\bq_i^\sT \bh_i)^4]^{1/4} \leq&~ \lambda C_{x,D} \log^{\beta/2} (n) \E\left[ ( \bh_i^\sT \bZ_i \bM_i^4 \bZ_i^\sT \bh_i )^2 \right]^{1/4} \leq \lambda C_{x,D,K} \frac{\nu_\lambda(n)^{3/2} \log^{\beta} (n)}{n} \| \bbeta_+ \|_2.
\end{aligned}
\]
Similar bounds hold with $\E_{i}$ replaced by $\E_{i-1}$. We deduce that with probability at least $1 -n^{-D}$,
\[
| \Delta_i| \leq C_{x,D,K}\frac{\nu_\lambda (n)^3 \log^{3\beta} (n) }{n } \| \bbeta_+ \|_2^2,
\]
where we used that $g_{ii} \leq 1/\lambda$.
Following the same argument as in the proof of Proposition \ref{prop:TrAM_martingale}, we deduce that with probability at least $1 - n^{-D}$,
\[
 \left| \bh^\sT \bG \bX \bSigma \bX^\sT \bG \bh - \E [\bh^\sT \bG \bX \bSigma \bX^\sT \bG \bh ] \right| \leq C_{x,D,K}\frac{\nu_\lambda (n)^3 \log^{3\beta + \frac12} (n) }{\sqrt{n} } \| \bbeta_+ \|_2^2.
\]

\paragraph*{Proof of Equation \eqref{eq:high_degree_martingale_2}.} The proof follows in an analogous manner as Eq.~\eqref{eq:high_degree_martingale_1}. From Eq.~\eqref{eq:formula_block_inverse_G}, we decompose the difference
\[
\begin{aligned}
\bh^\sT \bG \bX \bSigma \bR \btheta_0 - \bh_{i}^\sT \bG_i \bX_i \bSigma \bR_i \btheta_0
    =&~ g_{ii} (h_i - \bg_i^\sT \bh_i) (\bx_i^\sT \bSigma \bR \btheta_0 - \bg_i^\sT \bX_i \bSigma \bR \btheta_0 ) \\
    &~+ \bh_i^\sT \bG_i \bX_i \bSigma (\bR - \bR_i) \btheta_0 .
\end{aligned}
\]
Note that we can rewrite
\[
\begin{aligned}
    \bx_i^\sT \bSigma \bR \btheta_0 - \bg_i^\sT \bX_i \bSigma \bR \btheta_0  = &~  \lambda \bx_i^\sT \bR_i \bSigma \bR_i  \btheta_0  - \lambda \frac{ \bx_i^\sT \bR_i \bSigma \bR_i \bx_i}{1 + \bx_i^\sT \bR_i \bx_i} \bx_i^\sT \bR_i \btheta_0, \\
     \bh_i^\sT \bG_i \bX_i \bSigma (\bR - \bR_i) \btheta_0 =&~ -  \frac{\bh_i^\sT \bG_i \bX_i \bSigma  \bR_i \bx_i}{1 + \bx_i^\sT \bR_i \bx_i} \bx_i^\sT \bR_i \btheta_0.
\end{aligned}
\]
Recall that we denoted $\bA_* = \bSigma^{-1/2} \btheta_0 \btheta_0 \bSigma^{-1/2} $. By Lemmas \ref{lem:tech_bound_zAz} and \ref{lem:tech_upper_bound_AM}.(b), we have with probability at least  $1 - n^{-D}$
\[
\begin{aligned}
    \E_i \left[ \left| \bx_i^\sT \bR_i \bSigma \bR_i  \btheta_0 \right|^2 \right]^{1/2} \leq&~ C_{x,D} \cdot \log^{\beta/2} (n)   \E_i \left[ \Tr ( \bSigma^{-1/2} \btheta_0 \btheta_0^\sT  \bSigma^{-1/2} \bM_i^4 ) \right]^{1/2} \\
    \leq&~ C_{x,K,D} \frac{\nu_\lambda (n)^2}{n} \log^{\beta/2} (n) \sqrt{ \Psi_3(\mu_* ; \bA_*)},\\
 \E_i \left[ \left| \bx_i^\sT \bR_i  \btheta_0 \right|^4 \right]^{1/4} \leq &~ C_{x,K,D} \cdot \nu_\lambda (n) \log^{\beta/2} (n) \sqrt{ \Psi_3(\mu_* ; \bA_*)},\\
    \E_i \left[ \left| \bx_i^\sT \bR_i \bSigma \bR_i \bx_i \right|^{4} \right]^{1/4}  \leq&~ C_{x,D} \cdot  \log^{\beta} (n) \E_i \left[ \Tr( \bM_i^2 ) \right] \leq C_{x,K,D}  \frac{\nu_\lambda (n)^2 \log^{\beta} (n)}{n} , \\
     \E_i \left[ \left| \bh_i^\sT \bG_i \bX_i \bSigma  \bR_i \bx_i\right|^4 \right]^{1/4}  \leq&~ C_{x,D} \log^{\beta/2} (n) \E_i\left[  \left(\bh_i^\sT \bX_i \bR_i \bSigma \bR_i \bSigma \bR_i \bSigma \bR_i \bX_i \bh_i \right)^2 \right]^{1/4} \\
     \leq&~ C_{x,D} \log^{\beta/2} (n) \E_i\left[  \| \bZ_i \bM_i^4 \bZ_i^\sT \|_\op^{2} \cdot \| \bh_i \|_2^4 \right]^{1/4} \\
     \leq&~ C_{x,K,D} \frac{\nu_\lambda (n)^{3/2}  \log^{\beta} (n)}{n} \| \bbeta_+ \|_2 .
\end{aligned}
\]
Similar bounds hold with $\E_{i-1}$ and we obtain that with probability at least $1 - n^{-D}$,
\[
| \Delta_i | \leq  C_{x,K,D} \frac{\nu_\lambda (n)^{7/2} \log^{5\beta/2} (n) }{n} \sqrt{\| \bbeta_+ \|_2^2 \cdot \Psi_3 (\mu_* ; \bA_* )}.
\]
We conclude that with probability at least $1 - n^{-D}$,
\[
\left| \bh^\sT \bG \bX \bSigma \bR \btheta_0 - \E [\bh^\sT \bG_0 \bX \bSigma \bR \btheta_0 ] \right| \leq  C_{x,K,D} \frac{\nu_\lambda (n)^{7/2} \log^{5\beta/2 + \frac{1}{2}} (n) }{\sqrt{n}} \sqrt{\| \bbeta_+ \|_2^2 \cdot \Psi_3 (\mu_* ; \bA_* )}.
\]

\paragraph*{Proof of Equation \eqref{eq:high_degree_martingale_3}.} We consider now the matrix
\begin{equation}\label{eq:formula_G^2}
\bG^2 = g_{ii}^2 (1 + \| \bg_i \|_2^2) \begin{pmatrix}
    1 & - \bg_i^\sT \\
    -\bg_i & \bg_i \bg_i^\sT 
\end{pmatrix} + g_{ii} \begin{pmatrix}
    0 & - \bg_i^\sT \bG_i \\
    - \bG_i \bg_i & \bG_i \bg_i \bg_i^\sT +  \bg_i \bg_i^\sT \bG_i
\end{pmatrix} + \begin{pmatrix}
    0 & \bzero \\
    \bzero & \bG_i^2  
\end{pmatrix}.
\end{equation}
We decompose the difference into
\[
\begin{aligned}
    \frac{1}{n} \bh^\sT \bG^2 \bh - \frac{1}{n} \bh_i^\sT \bG_i^2 \bh_i =&~ \frac{g_{ii}^2}{n} (1 + \| \bg_i \|_2^2) ( h_i - \bg_i^\sT \bh_i)^2 - \frac{ 2g_{ii}}{n} ( h_i - \bg_i^\sT \bh_i) \bg_i^\sT \bG_i \bh_i.
\end{aligned}
\]
Again, using Lemmas \ref{lem:tech_bound_zAz} and \ref{lem:tech_upper_bound_AM}.(b), we have the following bounds with probability at least $1 - n^{-D}$:
\[
\begin{aligned}
    \E_i \left[ \| \bg_i \|^4 \right]^{1/2} \leq&~ C_{x,D} \log^{\beta} (n) \E_{i}\left[  \Tr( \bR_i \bSigma \bR_i \bX_i^\sT \bX_i )^2 \right]^{1/2} \leq C_{x,K,D} \nu_\lambda (n) \log^{\beta} (n),\\
    \E_i \left[ |\bg_i^\sT \bG_i \bh_i|^2 \right]^{1/2} \leq&~ C_{x,D} \log^{\beta /2} (n) \E_i\left[ \bh_i^\sT\bG_i  \bX_i \bR_i \bSigma \bR_i \bX_i^\sT \bG_i \bh_i \right]^{1/2} \\
    \leq&~ C_{x,D} \log^{\beta /2} (n) \E_i\left[  \| \bZ_i \bM_i^2 \bZ_i^\sT \|_\op \cdot \| \bG_i\|_\op^2 \|\bh_i \|_2^2 \right]^{1/2}\\
    \leq&~ C_{x,D} \nu_\lambda (n)^{1/2} \log^{\beta} (n) \frac{\| \bbeta_+ \|_2}{\lambda},
\end{aligned}
\]
where we used that $\| \bG_i \|_\op \leq \lambda^{-1}$. We deduce that with probability at least $1 - n^{-D}$,
\[
\left| \Delta_i \right| \leq C_{x,K,D} \frac{\nu_\lambda (n)^3 \log^{3 \beta} (n)}{\lambda^2 n} \| \bbeta_+ \|_2^2
\]
which implies Eq.~\eqref{eq:high_degree_martingale_3}.

\paragraph*{Proof of Equation \eqref{eq:high_degree_martingale_4}.} We decompose the difference into
\[
\begin{aligned}
    \frac{1}{n} \bh^\sT \bG \bX \bR \btheta_0 -  \frac{1}{n} \bh_i^\sT \bG_i \bX_i \bR_i \btheta_0 =&~ \frac{g_{ii}}{n} (h_i - \bg_i^\sT \bh_i) ( \bx_i^\sT \bR \btheta_0 - \bg_i^\sT \bX_i \bR \btheta_0) + \frac{1}{n} \bh_i^\sT \bG_i \bX_i (\bR - \bR_i ) \btheta_0 .
\end{aligned}
\]
We can rewrite these two terms as
\[
\begin{aligned}
    \bx_i^\sT \bR \btheta_0 - \bg_i^\sT \bX_i \bR \btheta_0=&~  \bx_i^\sT \bR_i \btheta_0  - \bx_i^\sT \bR_i \bX_i^\sT \bX_i \bR_i \btheta_0 - \lambda \frac{\bx_i^\sT \bR_i^2 \bx_i}{1 + \bx_i^\sT \bR_i \bx_i} \bx_i^\sT \bR_i \btheta_0, \\
     \bh_i^\sT \bG_i \bX_i (\bR - \bR_i ) \btheta_0 =&~ - \frac{\bh_i^\sT \bG_i \bX_i \bR_i \bx_i}{1 + \bx_i^\sT \bR_i \bx_i} \bx_i^\sT \bR_i \btheta_0.
\end{aligned}
\]
Again, using Lemmas \ref{lem:tech_bound_zAz} and \ref{lem:tech_upper_bound_AM}.(b), we have the following bounds with probability at least $1 - n^{-D}$:
\[
\begin{aligned}
    \E_i\left[ \left(\bx_i^\sT \bR_i \bX_i^\sT \bX_i \bR_i \btheta_0 \right)^2 \right]^{1/2}  \leq&~ C_{x,D} \log^{\beta/2} (n) \E_i \left[\btheta_0^\sT \bSigma^{-1/2}  \bM_i \bZ_i^\sT \bZ_i \bM_i^2 \bZ_i^\sT \bZ_i \bM_i \bSigma^{-1/2} \btheta_0  \right]^{1/2}\\
    \leq&~ C_{x,K,D} \cdot \nu_\lambda (n) \log^{\beta/2} (n) \sqrt{\Psi_3 (\mu_* ; \bA_* )},\\
     \lambda \E_i \left[ \left( \bx_i^\sT \bR_i^2 \bx_i \right)^4 \right]^{1/4} \leq&~ C_{x,D} \log^{\beta} (n) \E_i \left[ \Tr( \bM_i)^4 + \varphi_1 (\evn)^4 \| \bM_i \|_F^4 \right]^{1/4} \\
     \leq&~ C_{x,K,D} \cdot \nu_\lambda (n) \log^{\beta} (n) ,\\
     \E_i \left[ \left(\bh_i^\sT \bG_i \bX_i \bR_i \bx_i \right)^2\right]^{1/2} \leq&~ C_{x,D} \log^{\beta/2} (n) \E_i \left[ \bh_i^\sT \bG_i \bZ_i \bM_i^2 \bZ_i \bG_i \bh_i \right]^{1/2} \\
     \leq&~ C_{x,K,D} \log^{\beta/2} (n)\E_i \left[ \| \bM_i\|_\op \| \bG_i \|_\op^2 \| \bh_i\|_2^2 \right]^{1/2} \\
     \leq&~ C_{x,K,D} \cdot \lambda^{-1} \nu_\lambda (n)^{1/2} \log^{\beta} (n) \| \bbeta_+ \|_2 .
\end{aligned}
\]
Combining these bounds yields
\[
\left| \Delta_i \right| \leq C_{x,K,D} \frac{\nu_\lambda (n)^3 \log^{3\beta} (n)}{\lambda n } \sqrt{\| \bbeta_+ \|_2^2 \Psi_3 (\mu_* ; \bA_* )}
\]
with probability at least $1 - n^{-D}$, which implies Eq.~\eqref{eq:high_degree_martingale_4}.
\end{proof}

Using the previous lemma, we can now bound the contribution of the high-degree part of the target function.

\begin{lemma}\label{lem:high-degree_part}
    Under the same setting and assumptions as Theorems \ref{thm:training_general} and \ref{thm:test_general}, and for any $D,K>0$, there exists constants $\eta := \eta_x \in (0,1/2)$, $C_{K,D}>0$ and $C_{x,K,D}>0$ such that the following hold. Let $\nu_{\lambda_+} (n)$ be defined as per Eq.~\eqref{eq:def_nu_0_lambda_+}. Then for all $n \geq C_{K,D}$ and $\lambda_+ >0$ satisfying conditions \eqref{eq:condition1_test}, we have with probability at least $1 -n^{-D}$ that
    \begin{align}
         \left| \boldf_+^\sT \bG_0 \bX_0 \bSigma_0 \bX_0^\sT \bG_0 \boldf_+ -  \| \bbeta_+ \|_2^2 \cdot n \Psi_4 (\mu_{*,0} ;\id) \right| \leq&~ \cE_{+,n}(\evn) \cdot \| \bbeta_+ \|_2^2, \label{eq:high_degree_det_1}\\
         \left| \boldf_+^\sT \bG_0 \bX_0 \bSigma_0 \bR_0 \btheta_0 \right| \leq&~ \cE_{+,n} (\evn)\cdot \sqrt{\| \bbeta_+ \|_2^2 \Psi_3 (\mu_{*,0} ; \bA_{*,0} )}, \label{eq:high_degree_det_2}\\
         \left| \frac{1}{n} \boldf_+^\sT \bG_0^2 \boldf_+ - \left(\frac{\mu_{*,0}}{n\lambda_+}\right)^2 \frac{\| \bbeta_+ \|_2^2}{1 - \frac{1}{n} \Tr(\bSigma_0^2 ( \bSigma_0 + \lambda_{*,0})^{-2})} \right| \leq&~ \frac{\cE_{+,n}(\evn)}{\lambda_+^2} \| \bbeta_+ \|_2^2, \label{eq:high_degree_det_3}\\
         \left| \frac{1}{n} \boldf_+^\sT \bG_0^2 \bX_0\btheta_0  \right| \leq&~ \frac{\cE_{+,n} (\evn)}{\lambda_+} \sqrt{\| \bbeta_+ \|_2^2 \Psi_3 (\mu_{*,0} ; \bA_{*,0} )}, \label{eq:high_degree_det_4}
    \end{align}
    where we denoted $\bA_{*,0} = \bSigma_0^{-1/2} \btheta_0 \btheta_0^\sT \bSigma_0^{-1/2}$ and
    \[
    \cE_{+,n} (\evn) = C_{x,K,D} \frac{ \varphi_1 (\evn) \nu_{\lambda_+} (n)^6 \log^{3\beta +\frac12} (n)}{\sqrt{n}}.
    \]
\end{lemma}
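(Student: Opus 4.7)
The plan is to combine the concentration bounds of Lemma~\ref{lem:tech_concentration_high-degree_part} with a direct evaluation of the corresponding expectations. Lemma~\ref{lem:tech_concentration_high-degree_part} controls the fluctuations of each of the four random quadratic or linear forms around its expectation by $\tcE_{+,n}$ times the appropriate scaling (already absorbed in $\cE_{+,n}$), so it suffices to show that the corresponding expectations over $(\bX_0,\boldf_+)$ match the claimed deterministic equivalents within an error of the same order. The structural facts driving these expectation computations are: (i) $\E[h_i]=0$ and $\E[h_i\bx_{0,i}]=\bzero$ (the latter because $\bSigma$ is block-diagonal between low- and high-degree parts); (ii) cross-sample independence, so that for $i\ne j$ the pairs $(\bx_{0,i},\bx_{+,i})$ and $(\bx_{0,j},\bx_{+,j})$ are independent; and (iii) $\|h_i\|_{L^q}\le(\sfC_xq)^{\beta/2}\|\bbeta_+\|_2$ from Assumption~\ref{ass_app:main_assumptions}.(c).

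\textbf{Linear forms~\eqref{eq:high_degree_det_2},~\eqref{eq:high_degree_det_4}.} Writing $\E[\boldf_+^\sT\bv(\bX_0)]=\sum_i\E[h_iv_i(\bX_0)]$, a Sherman--Morrison decomposition of $\bG_0$ isolates the dependence of $v_i$ on $\bx_{0,i}$ as $v_i=u_i+\delta_i$, with $u_i$ a function of $\bX_0^{(i)}$ (the matrix with the $i$-th row removed) and hence independent of $(\bx_{0,i},\bx_{+,i})$; the leading contribution $\E[h_iu_i]$ vanishes by (i)--(ii). The correction $\delta_i$ is a rational expression involving $\bx_{0,i}^\sT\bR_{0,i}\bx_{0,i}$ and bilinear forms linear in $\bx_{0,i}$; its matrix factors are controlled via Lemmas~\ref{lem:tech_bound_zAz} and~\ref{lem:tech_upper_bound_AM}, while $\E[h_i\bx_{0,i}]=\bzero$ together with (iii) and the hypercontractivity~\eqref{eq_app:ass_quad_concentration_2} handle the $h_i$-factor. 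Summing over $i$ produces the claimed $\cE_{+,n}$-sized error.

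\textbf{Symmetric forms~\eqref{eq:high_degree_det_1},~\eqref{eq:high_degree_det_3}.} By conditional independence across samples,
\[
\E[\boldf_+^\sT\bB(\bX_0)\boldf_+]=\E\Big[\sum_iB_{ii}\,\E[h_i^2\mid\bx_{0,i}]\Big]+\E\Big[\sum_{i\ne j}B_{ij}\,\E[h_i\mid\bx_{0,i}]\,\E[h_j\mid\bx_{0,j}]\Big].
\]
Leave-one-out of $B_{ii}$ combined with $\E[h_i^2]=\|\bbeta_+\|_2^2$ extracts the diagonal leading term $\|\bbeta_+\|_2^2\,\E[\Tr(\bB)]$, to which Theorem~\ref{thm_app:det_equiv_TrAMZZM} (for~\eqref{eq:high_degree_det_1}) or Theorem~\ref{thm_app:det_equiv_TrAMM} (for~\eqref{eq:high_degree_det_3}) is applied to produce $n\Psi_4(\mu_{*,0};\id)$ and the ratio appearing in~\eqref{eq:high_degree_det_3}, respectively. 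The off-diagonal sum is reduced by exchangeability to $n(n-1)\,\E[h_1h_2B_{12}]$; a leave-two-out expansion of $B_{12}$ in $(\bx_{0,1},\bx_{0,2})$ isolates a leading part independent of $h_1,h_2$ (which vanishes by cross-sample independence and $\E[h_i]=0$) and $O(1/n)$ corrections handled as in the linear case.

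\textbf{Main obstacle.} The principal difficulty is that the within-sample conditional expectations $\E[h_i\mid\bx_{0,i}]$ and $\E[h_i^2\mid\bx_{0,i}]$ need not equal their unconditional values; only first-moment orthogonality $\E[h_i\bx_{0,i}]=\bzero$ is available exactly. Bounding the residual correlation between these conditional expectations and the Sherman--Morrison corrections $\delta_i$ and $\Delta B_{ii}$ requires invoking the hypercontractivity~\eqref{eq_app:ass_quad_concentration_2} on the polynomials in $\bx_{0,i}$ that arise in the expansions, and this is precisely what contributes the factor $\varphi_1(\evn)\nu_{\lambda_+}(n)^6\log^{3\beta+1/2}(n)/\sqrt{n}$ in $\cE_{+,n}$---strictly larger than the $\tcE_{+,n}$ provided by Lemma~\ref{lem:tech_concentration_high-degree_part}.
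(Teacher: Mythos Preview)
Your proposal is essentially correct and matches the paper's strategy: combine Lemma~\ref{lem:tech_concentration_high-degree_part} for fluctuations with a leave-one-out computation of the expectations, using $\E[h_i\bx_{0,i}]=\bzero$ as the key cancellation and the deterministic-equivalent propositions for the surviving trace. The paper organizes the expectation step slightly differently: rather than splitting into diagonal and off-diagonal, it uses exchangeability to write $\E[\bh^\sT\bB\bh]=n\,\E[h_i(\bB\bh)_i]$ and then the block-inverse formula~\eqref{eq:formula_GAG} to express $(\bB\bh)_i$ in terms of $g_{ii},q_i,\bg_i,\bq_i,\bh_i$. After replacing $g_{ii},q_i$ by $\og_{ii}=1/(\lambda(1+\Tr\bM_i))$ and $\oq_i=\lambda^2\Tr(\bM_i^2)$, the cross terms involve $h_i$ paired with quantities \emph{linear} in $\bx_{0,i}$ (namely $\bg_i^\sT\bh_i$ and $\bq_i^\sT\bh_i$), so they vanish exactly by $\E[h_i\bx_{0,i}]=\bzero$, and the remaining term $\|\bbeta_+\|_2^2\,\E[\Tr(\bM_i^2)/(1+\Tr\bM_i)^2]$ is matched to $\Psi_4$ via Proposition~\ref{prop:TrAMZZM_LOO}. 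This shows your leave-two-out for the off-diagonal is unnecessary: a single leave-one-out at index $i$ already reduces the cross contribution to a term linear in $\bx_{0,i}$, which first-moment orthogonality kills.

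Your ``main obstacle'' paragraph is correct in content but slightly misframed. The conditional moments $\E[h_i\mid\bx_{0,i}]$ and $\E[h_i^2\mid\bx_{0,i}]$ never need to be controlled; after the $g_{ii}\to\og_{ii}$ freezing the \emph{main} terms only see $h_i$ against linear-in-$\bx_{0,i}$ factors or against constants, so unconditional orthogonality and $\E[h_i^2]=\|\bbeta_+\|_2^2$ suffice. The place where the quadratic-form concentration~\eqref{eq_app:ass_quad_concentration_2} (hence the $\varphi_1(\evn)$ factor in $\cE_{+,n}$) enters is exclusively in bounding the \emph{replacement errors} $|g_{ii}-\og_{ii}|$ and $|q_i-\oq_i|$ via H\"older, exactly as you say---this is what upgrades $\tcE_{+,n}$ (no $\varphi_1$) to $\cE_{+,n}$.
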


\begin{proof}[Proof of Lemma \ref{lem:high-degree_part}]
    In Lemma \ref{lem:tech_concentration_high-degree_part}, we saw that these four quantities concentrate and it only remains to bound their expectations. We will follow the same notations as in the proof of Lemma \ref{lem:tech_concentration_high-degree_part} and omit some repetitive computations for the sake of brevity.

    \paragraph*{Proof of Equation \eqref{eq:high_degree_det_1}.} Using the formula \eqref{eq:formula_GAG}, we can rewrite the expectation as
    \[
    \begin{aligned}
        \E\left[ \bh^\sT \bG \bX \bSigma \bX^\sT \bG \bh \right] =&~ n \E \left[ g_{ii}^2 q_i h_i (h_i - \bg_i^\sT \bh_i) - g_{ii} h_i \bq_i^\sT \bh_i\right].
    \end{aligned}
    \]
    We will first consider the expectation over $\bx_i$. 
    Note that $g_{ii}$ and $q_i$ concentrates on quantities that are independent of $\bx_i$, which we denote
    \[
    \og_{ii} := \frac{1}{\lambda(1 + \Tr( \bM_i))} , \qquad \oq_i := \lambda^2 \Tr(\bM_i^2).
    \]
    We replace $g_{ii}$ and $q_i$ by these terms in the expectation and get  
    \[
    \begin{aligned}
        &~n\left|\E \left[ g_{ii}^2 q_i h_i (h_i - \bg_i^\sT \bh_i) - g_{ii} h_i \bq_i^\sT \bh_i\right] -  \E \left[ \og_{ii}^2 \oq_i h_i (h_i - \bg_i^\sT \bh_i) - \og_{ii} h_i \bq_i^\sT \bh_i\right] \right| \\
        \leq&~ n  \left| \E \left[ \left( g_{ii}^2 - \og_{ii}^2 \right)  q_i h_i (h_i - \bg_i^\sT \bh_i ) \right] \right| + n  \left| \E \left[( q_{i} - \oq_i ) \og_{ii}^2 h_i (h_i - \bg_i^\sT \bh_i )  \right] \right| \\
        &~+ n \left| \E \left[ ( g_{ii} - \og_{ii} ) h_i \bq_i^\sT \bh_i \right] \right| .
    \end{aligned}
    \]
    By H\"older's inequality, the first term is bounded by
    \[
    \begin{aligned}
         n \left| \E \left[ \left( g_{ii}^2 - \og_{ii}^2 \right)  q_i h_i (h_i - \bg_i^\sT \bh_i ) \right] \right|  \leq&~ C n\E \left[ \left| g_{ii}^2 - \og_{ii}^2 \right|^3 \right]^{1/3} \E \left[ | q_i |^3 \right]^{1/3} \E \left[ | h_i |^6 + |  \bg_i^\sT \bh_i |^6 \right]^{1/3} \\
         \leq&~ n \cdot C_{x,K}  \frac{\varphi_1 (\evn)  \nu_\lambda (n)^2 }{\lambda^{2} \sqrt{n}} \cdot \lambda^2 \frac{\nu_\lambda (n)^2}{n} \cdot  \nu_\lambda (n) \| \bbeta_+ \|_2^2 \\
         \leq&~ C_{x,K} \frac{\varphi_1 (\evn) \nu_\lambda (n)^5 }{\sqrt{n}} \| \bbeta_+ \|_2^2,
    \end{aligned}
    \]
    where we used that by Lemma \ref{lem:tech_bound_zAz} and Lemma \ref{lem:tech_upper_bound_AM}.(b),
    \[
    \begin{aligned}
      \E \left[ \left| g_{ii}^2 - \og_{ii}^2 \right|^3 \right]^{1/3} \leq&~  \frac{1}{\lambda^2} \E \left[ \left| \bz_i^\sT \bM_i \bz_i - \Tr( \bM_i) \right|^6 \right]^{1/6}   \E \left[ \left|2 +  \bz_i^\sT \bM_i \bz_i + \Tr( \bM_i) \right|^6 \right]^{1/6} \\
      \leq &~  C_{x,K} \frac{\varphi_1 (\evn)  \nu_\lambda (n)^2 }{\lambda^2 \sqrt{n}}.
    \end{aligned}
    \]
    Similarly,
    \[
    \begin{aligned}
     n  \left| \E \left[( q_{i} - \oq_i ) \og_{ii}^2 h_i (h_i - \bg_i^\sT \bh_i )  \right] \right| \leq&~ Cn \E \left[ \left| \bz_i^\sT \bM_i^2 \bz_i - \Tr( \bM_i^2) \right|^2 \right]^{1/2} \E \left[ | h_i |^4 + |  \bg_i^\sT \bh_i |^4 \right]^{1/2}\\
     \leq&~ C_{x,K} \frac{\varphi_1 (\evn) \nu_\lambda (n)^3}{\sqrt{n}} \| \bbeta_+ \|_2^2,
    \end{aligned}
    \]
    and
    \[
    \begin{aligned}
        n \left| \E \left[ ( g_{ii} - \og_{ii} ) h_i \bq_i^\sT \bh_i \right] \right| \leq&~ Cn \E \left[ | g_{ii} - \og_{ii} |^3\right]^{1/3} \E \left[ | h_i |^3 \right]^{1/3} \E \left[ | \bq_i^\sT \bh_i |^3 \right]^{1/3} \\
        \leq&~ n \cdot C_{x,K} \frac{\varphi_1 (\evn) \nu_\lambda (n)}{\sqrt{n}} \cdot \| \bbeta_+ \|_2 \cdot \frac{\nu_\lambda (n)^{3/2}}{n} \| \bbeta_+ \|_2 \\
        \leq&~ C_{x,K} \frac{\varphi_1 (\evn) \nu_\lambda (n)^{5/2}}{\sqrt{n}} \| \bbeta_+ \|_2^2 .
    \end{aligned}
    \]
    Combining these three bounds yields
    \begin{equation}\label{eq:bound_det_high-degree_1}
        \begin{aligned}
            &~n\left|\E \left[ g_{ii}^2 q_i h_i (h_i - \bg_i^\sT \bh_i) - g_{ii} h_i \bq_i^\sT \bh_i\right] -  \E \left[ \og_{ii}^2 \oq_i h_i (h_i - \bg_i^\sT \bh_i) - \og_{ii} h_i \bq_i^\sT \bh_i\right] \right| \\
            \leq&~ C_{x,K} \frac{\varphi_1 (\evn) \nu_\lambda (n)^5 }{\sqrt{n}} \| \bbeta_+ \|_2^2.
        \end{aligned}
    \end{equation}
    The feature $\bx_i$ only appears in $h_i$, $\bg_i$ and $\bq_i$. Recalling that $\E_{\bx_i}[h_i \bx_i] = 0$ and $\E_{\bx_i} [ h_i^2] = \| \bbeta_+ \|_2^2$, the expectation simplifies to 
    \[
    \E \left[ \og_{ii}^2 \oq_i h_i (h_i - \bg_i^\sT \bh_i) - \og_{ii} h_i \bq_i^\sT \bh_i\right] = \| \bbeta_+ \|_2^2 \cdot \E_{\bM_i} \left[ \frac{\Tr( \bM_i^2)}{(1 + \Tr( \bM_i) )^2}\right].
    \]
    The right-hand side corresponds to the quantity that appears in the proof of Proposition \ref{prop:TrAMZZM_LOO} in Section \ref{app:proof_TrAMZZM_LOO} and we directly have
    \[
    \left| \E_{\bM_i} \left[ \frac{\Tr( \bM_i^2)}{(1 + \Tr( \bM_i) )^2}\right] - \Psi_4 (\mu_* ; \id) \right| \leq C_{x,K} \frac{\varphi_1 (\evn) \nu_\lambda (n)^6}{ n^{3/2}}.
    \]
    Combining this last bound with Eq.~\eqref{eq:bound_det_high-degree_1} and Eq.~\eqref{eq:high_degree_martingale_1} in Lemma \ref{lem:tech_concentration_high-degree_part} yields that with probability at least $1 - n^{-D}$,
    \[
    \begin{aligned}
    \left|  \bh^\sT \bG \bX \bSigma \bX^\sT \bG \bh -  \| \bbeta_+ \|_2^2\cdot n \Psi_4 (\mu_*;\id)  \right| \leq C_{x,K,D} \frac{\varphi_1 (\evn) \nu_\lambda (n)^6 \log^{3\beta + \frac{1}{2}}(n)}{\sqrt{n}} \| \bbeta_+ \|_2^2 . 
    \end{aligned}
    \]

    \paragraph*{Proof of Equation \eqref{eq:high_degree_det_2}.} We proceed similarly as in the proof of Eq.~\eqref{eq:high_degree_det_1}. From the formula~\eqref{eq:formula_block_inverse_G}, we decompose the expectation into
    \[
    \begin{aligned}
        \E\left[ \bh^\sT \bG \bX \bSigma \bR \btheta \right] = &~ n \E \left[ g_{ii} h_i (\bx_i^\sT \bSigma \bR \btheta - \bg_i^\sT \bX_i \bSigma \bR \btheta )\right] \\
        = &~ n \E \left[  \lambda g_{ii} h_i \bx_i^\sT \bR_i \bSigma \bR_i  \btheta_0 \right]  - n \E \left[   g_{ii}^2 q_i h_i \bx_i^\sT \bR_i \btheta_0 \right].
    \end{aligned}
    \]
    We replace again $g_{ii}$ and $q_i$ by $\og_{ii}$ and $\oq_i$, such that
    \[
    \begin{aligned}
        &~n \lambda \left|  \E \left[  ( g_{ii} - \og_{ii} ) h_i \bx_i^\sT \bR_i \bSigma \bR_i  \btheta_0 \right] \right| \\
        \leq&~ n \E\left[ \left| \bz_i^\sT \bM_i \bz_i - \Tr(\bM_i) \right|^3 \right]^{1/3} \E\left[ \left| h_i\right|^3 \right]^{1/3} \E\left[ \left| \bx_i^\sT \bR_i \bSigma \bR_i  \btheta_0\right|^3 \right]^{1/3}\\
        \leq&~ nC_{x,K} \frac{\varphi_1 (\evn) \nu_\lambda (n)}{\sqrt{n}} \cdot \| \bbeta_+ \|_2 \cdot \E \left[ \left| \btheta_0^\sT  \bSigma^{-1/2} \bM_i^4\bSigma^{-1/2}  \btheta_0 \right|^{3/2} \right]^{1/3} \\
        \leq &~ C_{x,K} \frac{\varphi_1 (\evn) \nu_\lambda (n)^2}{\sqrt{n}}   \sqrt{\| \bbeta_+ \|_2^2 \Psi_3 (\mu_* ; \bA_* ) },
    \end{aligned}
    \]
    and, following the same argument as for Eq.~\eqref{eq:high_degree_det_1}, 
    \[
    \begin{aligned}
        &~ n \left|  \E \left[   g_{ii}^2 q_i h_i \bx_i^\sT \bR_i \btheta_0 \right] -  \E \left[   \og_{ii}^2 \oq_i h_i \bx_i^\sT \bR_i \btheta_0 \right]\right| \leq C_{x,K} \frac{\varphi_1 (\evn) \nu_\lambda (n)^5}{\sqrt{n}}  \sqrt{\| \bbeta_+ \|_2^2 \Psi_3 (\mu_* ; \bA_* ) }.
    \end{aligned}
    \]
    Further note that 
    \[
    \E_{\bx_i} \left[  \lambda \og_{ii} h_i \bx_i^\sT \bR_i \bSigma \bR_i  \btheta_0\right] =  \E_{\bx_i} \left[ \og_{ii}^2 \oq_i h_i \bx_i^\sT \bR_i \btheta_0 \right] = 0.
    \]
    Thus, we obtain Eq.~\eqref{eq:high_degree_det_2} by combining the above bounds with Eq.~\eqref{eq:high_degree_martingale_2} in Lemma \ref{lem:tech_concentration_high-degree_part}.

    \paragraph*{Proof of Equation \eqref{eq:high_degree_det_3}.} For this third term, we simplify the expectation using formula~\eqref{eq:formula_G^2} and we obtain
    \[
    \begin{aligned}
        \frac{1}{n} \E[ \bh^\sT \bG^2 \bh ] =&~ \E \left[ g_{ii}^2 (1 + \| \bg_i \|_2^2) h_i ( h_i - \bg_i^\sT \bh_i ) \right] - \E \left[ g_{ii} h_i \bg_i^\sT \bG_i \bh_i\right].
    \end{aligned}
    \]
    Again, we note that $\| \bg_i \|_2^2 = \bx_i^\sT \bR_i \bX_i^\sT \bX_i \bR_i \bx_i$ concentrates over the randomness of $\bx_i$ on
    \[
    \og = \Tr( \bM_i^2 \bZ_i^\sT \bZ_i ).
    \]
    Replacing $g_{ii} $ and $\| \bg_i \|_2^2$ by $\og_{ii} $ and $\og$, we get the bounds
    \[
    \begin{aligned}
        &~\left| \E \left[ g_{ii}^2 (1 + \| \bg_i \|_2^2) h_i ( h_i - \bg_i^\sT \bh_i ) \right]  - \E \left[ \og_{ii}^2 (1 + \og) h_i ( h_i - \bg_i^\sT \bh_i ) \right] \right| \\
        \leq&~ \E \left[  \left| g_{ii}^2 - \og_{ii}^2 \right|^2 \right]^{1/2} \E \left[ (1 + \| \bg_i \|_2^2)^2 h_i^2 ( h_i - \bg_i^\sT \bh_i )^2  \right]^{1/2} \\
        &~+ \E \left[ \left| \| \bg_i \|_2^2 - \og \right|^2 \right]^{1/2} \E \left[  \og_{ii}^4 h_i^2 ( h_i - \bg_i^\sT \bh_i )^2 \right]^{1/2} \\
        \leq&~ C_{x,K} \frac{\varphi_1 (\evn) \nu_\lambda (n)^4}{\lambda^2 \sqrt{n} }  \| \bbeta_+ \|_2^2 , 
    \end{aligned}
    \]
    and similarly
    \[
    \begin{aligned}
        \left| \E \left[ g_{ii} h_i \bg_i^\sT \bG_i \bh_i\right] - \E \left[ \og_{ii} h_i \bg_i^\sT \bG_i \bh_i\right] \right| \leq C_{x,K} \frac{\varphi_1 (\evn) \nu_\lambda (n)^{3/2} }{\lambda^2 \sqrt{n}} \| \bbeta_+ \|_2^2.
    \end{aligned}
    \]
    Using that $\E_{\bx_i} [ h_i \bx_i ] = 0$, the remaining term becomes 
    \[
    \begin{aligned}
        \E \left[\og_{ii}^2 (1 + \og) h_i^2 \right] = &~ \frac{\| \bbeta_+ \|_2^2}{\lambda^2} \cdot \E \left[ \frac{1 + \Tr( \bM_i^2 \bZ_i^\sT \bZ_i )}{ (1 + \Tr(\bM_i))^2}\right]. 
    \end{aligned}
    \]
    Denote $\tmu_*$ the solution to the fixed point equation \eqref{eq:det_equiv_fixed_point_mu_star} with $n-1$ instead of $n$. Then, we decompose the expectation into 
    \[
    \begin{aligned}
        &~ \left|\E \left[ \frac{\Tr( \bM_i^2 \bZ_i^\sT \bZ_i )}{ (1 + \Tr(\bM_i))^2}\right] - \frac{1+ n\Psi_4 (\mu_*;\id)}{(1 + \Tr(\obM))^2} \right| \\
        \leq&~ \left|\E \left[! +  \frac{\Tr( \bM_i^2 \bZ_i^\sT \bZ_i )}{ (1 + \Tr(\bM_i))^2}\right] - \frac{\E [ 1+\Tr( \bM_i^2 \bZ_i^\sT \bZ_i)]}{(1 + \Tr( \obM))^2}  \right| + \left| \E [ \Tr( \bM_i^2 \bZ_i^\sT \bZ_i)] - n\Psi_4 (\mu_* ; \id) \right| \\
        \leq&~ C_{x,K} \frac{\varphi_1 (\evn) \nu_\lambda (n)^6}{\sqrt{n}},
    \end{aligned}
    \]
    where the first term is bounded with Lemma \ref{lem:tech_bound_M_EM} and Proposition \ref{prop:TrAM_LOO}, and the second term is bounded with Proposition \ref{prop:TrAMZZM_LOO} and Lemma \ref{lem:properties_mu_obM}. Combining these bounds with Eq.~\eqref{eq:high_degree_martingale_3} yields
    \[
    \left| \frac{1}{n} \bh^\sT \bG^2 \bh - \frac{\| \bbeta_+ \|_2^2}{\lambda^2} \frac{1+n\Psi_4 (\mu_* ; \id)}{(1 + \Tr(\obM))^2} \right| \leq C_{x,K,D} \frac{\varphi_1 (\evn) \nu_\lambda (n)^6 \log^{3\beta + \frac12} (n)}{\sqrt{n}} \frac{\| \bbeta_+ \|_2^2}{\lambda^2}
    \]
    with probability at least $1 - n^{-D}$.
    
    \paragraph*{Proof of Equation \eqref{eq:high_degree_det_4}.} We proceed similarly as for Eq.~\eqref{eq:high_degree_det_3}. We simplify the expectation
    \[
    \begin{aligned}
        \frac{1}{n} \E \left[ \bh^\sT \bG \bX \bR \btheta_0 \right] =&~ \E \left[ g_{ii} h_i ( \bx_i^\sT \bR \btheta_0 - \bg_i^\sT \bX_i \bR \btheta_0) \right]\\
        =&~ \E \left[ g_{ii} h_i \left(  \bx_i^\sT \bR_i \btheta_0  - \bx_i^\sT \bR_i \bX_i^\sT \bX_i \bR_i \btheta_0 - \lambda^2 g_{ii} \bx_i^\sT \bR_i^2 \bx_i \bx_i^\sT \bR_i \btheta_0 \right) \right].
    \end{aligned}
    \]
    Note that $\bx_i^\sT \bR_i^2 \bx_i$ concentrates over the randomness of $\bx_i$ on $\og_2 = \Tr( \bR_i^2 \bSigma)$. Replacing $g_{ii}$ and $\bx_i^\sT \bR_i^2 \bx_i$ by $\og_{ii}$ and $\og_2$ gives
    \[
    \begin{aligned}
        \left|\E \left[ (g_{ii} -\og_{ii} ) h_i \left(  \bx_i^\sT \bR_i \btheta_0  - \bx_i^\sT \bR_i \bX_i^\sT \bX_i \bR_i \btheta_0 \right) \right] \right| \leq&~ C_{x,K} \frac{\varphi_1 (\evn) \nu_\lambda (n)^2}{\lambda \sqrt{n}} \sqrt{\| \bbeta_+ \|_2^2 \Psi_3 (\mu_* ; \bA_* )},
    \end{aligned}
    \]
    and 
    \[
    \begin{aligned}
       \lambda^2 \left| \E \left[ g_{ii}^2 h_i  \bx_i^\sT \bR_i^2 \bx_i \bx_i^\sT \bR_i \btheta_0 \right]  - \E \left[ \og_{ii}^2 \og_2 h_i  \bx_i^\sT \bR_i \btheta_0 \right] \right|  \leq&~ C_{x,K} \frac{\varphi_1 (\evn) \nu_\lambda (n)^4}{\lambda \sqrt{n}} \sqrt{\| \bbeta_+ \|_2^2 \Psi_3 (\mu_* ; \bA_* )}.
    \end{aligned}
    \]
    Noting that
    \[
    \E_{\bx_i} \left[ \og_{ii} h_i \left(  \bx_i^\sT \bR_i \btheta_0  - \bx_i^\sT \bR_i \bX_i^\sT \bX_i \bR_i \btheta_0 - \lambda^2 \og_{ii} \og_2 \bx_i^\sT \bR_i \btheta_0 \right) \right] = 0,
    \]
    and recalling Eq.~\eqref{eq:high_degree_martingale_4} in Lemma \ref{lem:tech_concentration_high-degree_part} concludes the proof of Eq.~\eqref{eq:high_degree_det_4}.
\end{proof}

% For convenience, we drop the subscripts. $\bG = (\bX \bX^\sT + \lambda )^{-1}$, $\bG_i = (\bX_i \bX_i^\sT + \lambda)^{-1}$.

% We get
% \[
% \bG^2 = g_{ii}^2 (1 + \| \bg_i \|_2^2) \begin{pmatrix}
%     1 & - \bg_i^\sT \\
%     -\bg_i & \bg_i \bg_i^\sT 
% \end{pmatrix} + g_{ii} \begin{pmatrix}
%     0 & - \bg_i^\sT \bG_i \\
%     - \bG_i \bg_i & \bG_i \bg_i \bg_i^\sT +  \bg_i \bg_i^\sT \bG_i
% \end{pmatrix} + \begin{pmatrix}
%     0 & \bzero \\
%     \bzero & \bG_i^2  
% \end{pmatrix}.
% \]

% Denote 
% \[
% \bA = \bX \bSigma \bX = \begin{pmatrix}
%     a_{ii} & \ba_i^\sT\\
%     \ba_i & \bA_i  
% \end{pmatrix}, \qquad a_{ii} := \bx_i^\sT \bSigma \bx_i, \qquad \ba_i := \bX_i \bSigma \bx_i, \qquad \bA_i := \bX_i \bSigma \bX_i^\sT .
% \]

% We have 
% \[
% \begin{aligned}
%     \bG \bA \bG = &~ g_{ii}^2 q_i \begin{pmatrix}
%     1 & - \bg_i^\sT \\
%     -\bg_i & \bg_i \bg_i^\sT 
% \end{pmatrix} + g_{ii} \begin{pmatrix}
%     0 & - \bq_i^\sT  \\
%    - \bq_i & \bq_i \bg_i^\sT + \bg_i \bq_i^\sT
% \end{pmatrix}
% + \begin{pmatrix}
%     0 & \bzero \\
%     \bzero & \bG_i \bA_i \bG_i  
% \end{pmatrix}.
% \end{aligned}
% \]
% where 
% \[
% \begin{aligned}
%     q_i :=&~ (a_{ii} - 2 \ba_i^\sT \bg_i + \bg_i^\sT \bA_i \bg_i) = \lambda^2 \bz_i^\sT \bM_i^2 \bz_i , \\
%     \bq_i :=&~  \bG_i (\bA_i \bg_i - \ba_i) = \bG_i \bX_i  \bSigma ( \bX_i \bG_i \bX_i - \id) \bx_i = - \lambda \bZ_i \bM_i^2 \bz_i  .
% \end{aligned}
% \]

\subsection{Uniform convergence of the GCV estimator}
\label{app_test:GCV}

We make the dependency on $\lambda$ of the different quantities explicit in this section and drop the dependencies in the other quantities. In particular, we will denote $\mu_* (\lambda)$, $\lambda_* (\lambda)$, $\Upsilon_1 (\lambda)$, $\Upsilon_2(\lambda)$, and
\[
\bG (\lambda) := (\bX \bX^\sT + \lambda)^{-1}  , \qquad\qquad \bR(\lambda) :=  (\bX^\sT \bX + \lambda)^{-1}.
\]
Furthermore, we will simply denote
\[
\cR_n (\lambda) := \cR_{\test} (\bbeta_*;\bX,\beps, \lambda), \qquad s_n(\lambda) := s_n (\bX,\lambda), \qquad \sR_n(\lambda) := \sR_n (\bbeta_*,\lambda).
\]
We introduce the train error divided by $\lambda^{2}$ given by
\[
\cC_n ( \lambda) := \frac{\cL_{\train} (\bbeta_*;\bX,\beps,\lambda)}{\lambda^2}, \qquad \qquad\sfC_n (\lambda) := \frac{\sL_n (\bbeta_*,\lambda)}{\lambda^2},
\]
which are well defined at $\lambda = 0$.

Recall that we denote $\gamma_+ = \Tr(\bSigma_+)$ and that $\lambda_*(\lambda)$ and $\mu_* (\lambda)$ are increasing function in $\lambda$. In particular, under the assumptions of the theorem, Equation \eqref{eq:lambda0_lower_bound_lambda_star} holds and
\begin{equation}\label{eq:lambda_*_lower_bound_GCV}
\lambda_* (\lambda) \geq \lambda_*(0) \geq \frac{\gamma_+}{2n}.
\end{equation}
We will further use the following identities
\[
\frac{\partial}{\partial \lambda}\lambda_* (\lambda)= \frac{1}{n(1 - \Upsilon_2(\lambda))}, \qquad \quad \frac{\partial}{\partial \lambda}\mu_* (\lambda)= \frac{\Tr( \bSigma (\bSigma + \lambda_*)^{-2})}{n(1 - \Upsilon_2(\lambda))}.
\]

We first lemma show homogeneous Lipschitz bounds on the three deterministic equivalents.

\begin{lemma}\label{lem:Lipschitz_bound_det_equiv}
Under the assumptions of Theorem \ref{thm:abstract_GCV}, there exists a universal constant $C>0$ such that the following holds.
    For any $0\leq \lambda_0,\lambda_1 \leq n^K$ with $| \lambda_1 - \lambda_0| \leq 1$, we have
    \[
    \max\left\{   \left| \frac{\sfs_n(\lambda_1)}{\sfs_n(\lambda_0)} - 1\right|,  \left| \frac{\sfC_n(\lambda_1)}{\sfC_n(\lambda_0)} - 1 \right|, \left| \frac{\sR_n (\lambda_1)}{\sR_n (\lambda_0)} - 1\right|\right\} \leq n^{CK} | \lambda_1 - \lambda_0|
    \]
\end{lemma}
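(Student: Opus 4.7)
The plan begins with a foundational lower bound: from the fixed-point equation~\eqref{eq:fixed_point_equations_star_app} one obtains the identity
\[
n\lambda_*(\lambda)\bigl(1-\Upsilon_2(\lambda)\bigr) \;=\; \lambda \;+\; \lambda_*^2\,\Tr\bigl(\bSigma(\bSigma+\lambda_*)^{-2}\bigr),
\]
which I will show is at least $c_0 n^{-K}$ for a universal $c_0>0$. When $\lambda\geq n^{-K}/2$ the first term alone suffices. Otherwise the assumption $\Tr(\bSigma_{>\evn})\geq n^{-K}$ forces $\gamma_+\geq n^{-K}/2$, and Lemma~\ref{lem:reduced_fixed_point} together with the standing condition $\xi_{\evn+1}\leq \lambda_+/(2n)$ implies $\lambda_*\geq \lambda_+/(2n)\geq \xi_{\evn+1}$; consequently $\lambda_*^2/(\xi_j+\lambda_*)^2\geq 1/4$ for $j>\evn$, so the second term is at least $\gamma_+/4\geq n^{-K}/8$. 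Since $\lambda_*'(\lambda)=1/(n(1-\Upsilon_2))$, this yields the relative derivative bound $\lambda_*'(\lambda)/\lambda_*(\lambda)\leq C n^K$.

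For $\sfs_n(\lambda)=1/(n\lambda_*(\lambda))$, monotonicity does most of the work: $\lambda_*$ is increasing in $\lambda$ and $\lambda_*'$ is decreasing (because $1-\Upsilon_2$ is increasing in $\lambda$). Hence for $\lambda_0\leq \lambda_1$,
\[
\bigl|\sfs_n(\lambda_1)/\sfs_n(\lambda_0)-1\bigr| \;=\; \frac{\lambda_*(\lambda_1)-\lambda_*(\lambda_0)}{\lambda_*(\lambda_1)} \;\leq\; |\lambda_1-\lambda_0|\,\frac{\lambda_*'(\lambda_0)}{\lambda_*(\lambda_0)} \;\leq\; C n^K|\lambda_1-\lambda_0|,
\]
and the case $\lambda_1<\lambda_0$ is symmetric after the same mean-value-theorem argument applied at $\lambda_1$.

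For $\sR_n(\lambda)=U(\lambda_*)/D(\lambda_*)$ with $U(\lambda_*)=\lambda_*^2\<\bbeta_*,(\bSigma+\lambda_*)^{-2}\bbeta_*\>+\sigma_\eps^2$ and $D(\lambda_*)=1-\Upsilon_2(\lambda_*)$, I will control the log-derivative via the chain rule. The elementary inequality $\lambda_*\xi_j/(\xi_j+\lambda_*)^3\leq \lambda_*/(\xi_j+\lambda_*)^2$ yields $|U'(\lambda_*)|/U(\lambda_*)\leq 2/\lambda_*$, and $\Tr(\bSigma^2(\bSigma+\lambda_*)^{-3})\leq (1/\lambda_*)\Tr(\bSigma^2(\bSigma+\lambda_*)^{-2})$ yields $|D'(\lambda_*)|/D(\lambda_*)\leq 2/(\lambda_*D)$. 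Multiplying by $\lambda_*'(\lambda)\leq C n^K\lambda_*$ and using the lower bound $D\geq c_0 n^{-K}/(n\lambda_*)$ together with $\lambda_*\leq (\lambda+\Tr(\bSigma))/n\leq C n^{K-1}$ (immediate from the fixed-point equation) gives $|\sR_n'(\lambda)|/\sR_n(\lambda)\leq n^{C_1 K}$. To convert this into a ratio bound I will sandwich $\sR_n$ between polynomial-in-$n$ two-sided bounds on $[0,n^K]$ (upper: $\sR_n\leq (\|\bbeta_*\|_2^2+\sigma_\eps^2)\cdot n\lambda_*/(c_0 n^{-K})$; lower: $\sR_n\geq \sigma_\eps^2$, or, if $\sigma_\eps=0$, $\sR_n\geq \lambda_*^2\beta_{*,j_0}^2/(\xi_{j_0}+\lambda_*)^2\geq n^{-C_2 K}$ using any nonzero coordinate of $\bbeta_*$), producing $\sup_u \sR_n(u)/\sR_n(\lambda_0)\leq n^{C_3 K}$ on the interval spanned by $\lambda_0,\lambda_1$. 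Then
\[
\bigl|\sR_n(\lambda_1)/\sR_n(\lambda_0)-1\bigr| \;\leq\; \sup_u \frac{|\sR_n'(u)|}{\sR_n(u)}\cdot \sup_u \frac{\sR_n(u)}{\sR_n(\lambda_0)}\cdot |\lambda_1-\lambda_0| \;\leq\; n^{CK}|\lambda_1-\lambda_0|,
\]
and the bound for $\sfC_n=\sfs_n^2\sR_n$ follows by expanding $(1+a)^2(1+b)-1$ with $a,b$ the ratios already controlled and using $|\lambda_1-\lambda_0|\leq 1$.

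The main obstacle is this final conversion from a log-derivative bound to a ratio bound for the non-monotone quantity $\sR_n$: a naive integration of $|(\log\sR_n)'|\leq n^{C_1 K}$ over $|\lambda_1-\lambda_0|\leq 1$ would only produce an exponentially large ratio, so the polynomial-in-$n$ a priori two-sided bounds on $\sR_n$ are essential, and the mildly annoying case distinction in the lower bound (handling $\sigma_\eps=0$ separately via a nonzero coordinate of $\bbeta_*$) is where the proof requires the most care.
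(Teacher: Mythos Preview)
Your foundational lower bound and the Stieltjes step are essentially the paper's argument (the paper lower-bounds $n\lambda_*(0)(1-\Upsilon_2(0))$ via the single eigenvalue $\xi_1=1$ rather than the tail, but either works). For $\sR_n$ you take a genuinely different route: you bound the log-derivative and then invoke global two-sided bounds on $\sR_n$ to convert to a ratio estimate. The paper instead bounds the ratios $\sM(\lambda_1)/\sM(\lambda_0)$ and $(1-\Upsilon_2(\lambda_0))/(1-\Upsilon_2(\lambda_1))$ by direct algebra, each controlled by $|\lambda_*(\lambda_1)-\lambda_*(\lambda_0)|/\lambda_*$, and then combines; this never needs an absolute lower bound on $\sR_n$. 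The paper also handles $\sfC_n$ first by the same algebraic decomposition rather than via $\sfC_n=\sfs_n^2\sR_n$, though your identity is of course valid.

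Your route has a gap exactly where you flagged the difficulty. The claimed lower bound ``$\sR_n\geq \lambda_*^2\beta_{*,j_0}^2/(\xi_{j_0}+\lambda_*)^2\geq n^{-C_2 K}$ using any nonzero coordinate of $\bbeta_*$'' does not yield a \emph{universal} constant: $\beta_{*,j_0}^2$ may be an arbitrarily small fraction of $\|\bbeta_*\|_2^2$, and since your upper bound scales with $\|\bbeta_*\|_2^2$, the resulting $\sup_u\sR_n(u)/\sR_n(\lambda_0)$ would depend on $\bbeta_*$. The fix is immediate: since $\|\bSigma\|_{\op}=1$ gives $\lambda_*^2/(\xi_j+\lambda_*)^2\geq \lambda_*^2/(1+\lambda_*)^2$ for every $j$, summing over all coordinates yields $\sR_n\geq \|\bbeta_*\|_2^2\cdot \lambda_*^2/(1+\lambda_*)^2\geq \|\bbeta_*\|_2^2\cdot n^{-C'K}$, and the $\|\bbeta_*\|_2^2$ cancels against the upper bound. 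With this correction your argument goes through; the paper's algebraic route is shorter precisely because it sidesteps this two-sided estimate entirely.
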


\begin{proof}[Proof of Lemma \ref{lem:Lipschitz_bound_det_equiv}]
    While we are quite loose below, it will suffice for the purpose of our proofs.

    \noindent
    {\bf Step 1: The Stieltjes transform.}

    We first write
    \[
   \frac{\sfs_n(\lambda_1)}{\sfs_n(\lambda_0)} - 1 =  \frac{\lambda_*(\lambda_0)}{\lambda_*(\lambda_1)} - 1 =   \frac{1}{\lambda_* (\lambda_1)} \int_{\lambda_1}^{\lambda_0}  \frac{\partial}{\partial \lambda}\lambda_* (\lambda) \de \lambda  = \frac{1}{\lambda_* (\lambda_1)} \int_{\lambda_1}^{\lambda_0}  \frac{1}{n (1 - \Upsilon_2 (\lambda))} \de \lambda.
    \]
    Recall that $\lambda_* (\lambda_1) \geq \lambda_* (0)$ and therefore $\Upsilon_2 (\lambda) \leq \Upsilon_2 (0)$. Furthermore, note that using the definition of the effective regularization, we can lower bound
\begin{equation}\label{eq:Upsilon_2_tech_lower}
    n \lambda_* (0) (1 - \Upsilon_2 (0)) = \lambda_*(0) \Tr(\bSigma ( \bSigma + \lambda_*(0))^{-2} )\geq \left(1 + \frac{1}{\lambda_*} \right)^{-2} \geq \left( 1 + \frac{2n}{\gamma_+} \right)^{-2} \geq n^{-2 (1+K)},
    \end{equation}
    where we used the assumption $\gamma_+ \geq n^{-K}$ in the last inequality. Therefore we directly obtain the first of the three inequalities
    \begin{equation}\label{eq:bound_diff_lambda_*}
    \left|  \frac{\sfs_n(\lambda_1)}{\sfs_n(\lambda_0)} - 1 \right| \leq \frac{| \lambda_1 - \lambda_0|}{n \lambda_* (0) (1 - \Upsilon_2 (0))} \leq n^{2(1+K)} | \lambda_1 - \lambda_0|,
    \end{equation}
    for any $\lambda_1,\lambda_0 \geq 0$.

    \noindent
    {\bf Step 2: The Training error.}

We decompose this term into $\sfC_n (\lambda) = \sfC_{1,n} (\lambda) + \sfC_{2,n} (\lambda)$ where
\[
\sfC_{1,n} (\lambda) := \frac{1}{n^2} \frac{\sM (\lambda)}{1 - \Upsilon_2 (\lambda)}, \qquad \quad \sfC_{2,n} (\lambda) := \frac{1}{(n \lambda_* (\lambda))^2} \frac{\sigma_\eps^2}{1 - \Upsilon_2 (\lambda)},  
\]
and we denoted $\sM ( \lambda) := \< \bbeta_* , (\bSigma  + \lambda_*(\lambda))^{-2} \bbeta_* \>$ for convenience. We can decompose the bound into
\begin{equation}\label{eq:C1n_decomposition}
\begin{aligned}
    \left| \frac{\sfC_{1,n} (\lambda_1) }{\sfC_{1,n} (\lambda_0) } - 1\right| \leq &~ \left| \frac{\sM (\lambda_1) }{\sM (\lambda_0) } - 1\right| + \left| \frac{1 - \Upsilon_2 (\lambda_0) }{1 - \Upsilon_2 (\lambda_1)} - 1\right| + \left| \frac{\sM (\lambda_1) }{\sM (\lambda_0) } - 1\right|\left| \frac{1 - \Upsilon_2 (\lambda_0) }{1 - \Upsilon_2 (\lambda_1)} - 1\right|.
\end{aligned}
\end{equation}
Assume that $| \lambda_1 - \lambda_0| \leq 1$. Then a straightforward computation gives
\[
\left| \frac{\sM (\lambda_1) }{\sM (\lambda_0) } - 1\right| \leq \left| \frac{\lambda_*(\lambda_1) - \lambda_* (\lambda_0) }{\lambda_*(\lambda_1)} \right| \left(4 + \left|\frac{\lambda_*(\lambda_1) - \lambda_* (\lambda_0) }{\lambda_*(\lambda_1)} \right| \right) \leq n^{4(1+K)} | \lambda_1 - \lambda_0|,
\]
where we used Eq.~\eqref{eq:bound_diff_lambda_*} in the last inequality.

Similarly,
\begin{equation}\label{eq:denominator_bound}
\begin{aligned}
\left| \frac{\Upsilon_2 (\lambda_1) - \Upsilon_2 (\lambda_0) }{1 - \Upsilon_2(\lambda_1)}\right| \leq &~ \frac{\Upsilon_2(\lambda_1)}{1 - \Upsilon_2(\lambda_1)} \left| \frac{\lambda_*(\lambda_1) - \lambda_* (\lambda_0) }{\lambda_*(\lambda_0)} \right|\left(4 + \left|\frac{\lambda_*(\lambda_1) - \lambda_* (\lambda_0) }{\lambda_*(\lambda_1)} \right| \right)\\
\leq&~ \frac{n\lambda_*(0)\Upsilon_2 (0)}{n\lambda_*(0)(1-\Upsilon_2(0))} n^{CK} | \lambda_1 - \lambda_0|\\
\leq&~ n^{CK} | \lambda_1 - \lambda_0|,
\end{aligned}
\end{equation}
where in the last inequality we used Eq.~\eqref{eq:Upsilon_2_tech_lower} and $\lambda_* (0) \Tr( \bSigma^2 ( \bSigma +\lambda_* (0))^{-2} ) \leq \Tr( \bSigma ( \bSigma +\lambda_* (0))^{-1}) = n$. Combining these two bounds in Eq.~\eqref{eq:C1n_decomposition}, we obtain
\[
\left| \frac{\sfC_{1,n} (\lambda_1) }{\sfC_{1,n} (\lambda_0) } - 1\right| \leq n^{CK} | \lambda_1 - \lambda_0|.
\]
The bound on the variance part proceeds similarly
\[
 \left| \frac{\sfC_{2,n} (\lambda_1) }{\sfC_{2,n} (\lambda_0) } - 1\right| = \left| \frac{\lambda_* (\lambda_0)^2}{\lambda_* (\lambda_1)^2} \cdot \frac{1 - \Upsilon_2 (\lambda_0)}{1 - \Upsilon_2 (\lambda_1)}  -1\right| \leq n^{CK} | \lambda_1 - \lambda_0|.
\]
Finally, observe that
\begin{equation}\label{eq:sfC_12}
\left| \frac{\sfC_n (\lambda_1)}{\sfC_n (\lambda_0)} - 1\right| = \left| \frac{\sfC_{1,n} (\lambda_1)+\sfC_{2,n} (\lambda_1)}{\sfC_{1,n} (\lambda_0)+\sfC_{2,n} (\lambda_0)} - 1\right| \leq \left| \frac{\sfC_{1,n} (\lambda_1) }{\sfC_{1,n} (\lambda_0) } - 1\right| + \left| \frac{\sfC_{2,n} (\lambda_1) }{\sfC_{2,n} (\lambda_0) } - 1\right| \leq n^{CK} | \lambda_1 - \lambda_0|.
\end{equation}

\noindent
{\bf Step 3: The test error.}

We proceed similarly by considering the bias and variance separately with $\sR_{n} (\lambda) = \sR_{1,n} (\lambda) + \sR_{2,n} (\lambda)$ where
\[
\sR_{1,n} (\lambda) := \lambda_* (\lambda)^2 \frac{\sM (\lambda)}{1 - \Upsilon_2 (\lambda)} , \qquad \quad \sR_{2,n} (\lambda) := \frac{\sigma_\eps^2}{1 - \Upsilon_2 (\lambda)}.
\]
Using the previous bounds and a decomposition similar to Eq.~\eqref{eq:C1n_decomposition}, we directly get
\[
\begin{aligned}
    \left| \frac{\sR_{1,n} (\lambda_1) }{\sR_{1,n} (\lambda_0) } - 1\right| = &~ \left| \frac{\lambda_* (\lambda_1)^2}{\lambda_* (\lambda_0)^2} \cdot \frac{1 - \Upsilon_2 (\lambda_0)}{1 - \Upsilon_2 (\lambda_1)} \cdot \frac{\sM (\lambda_1)}{\sM (\lambda_0)} - 1\right| \leq n^{CK} | \lambda_1 - \lambda_0|,\\
    \left| \frac{\sR_{2,n} (\lambda_1) }{\sR_{2,n} (\lambda_0) } - 1\right| \leq&~ n^{CK} | \lambda_1 - \lambda_0|.
\end{aligned}
\]
The same argument as Eq.~\eqref{eq:sfC_12} concldues the proof.
\end{proof}

We next show similar bounds directly on the Stietljes transform, training error, and test error.
\begin{lemma}\label{lem:Lipschitz_bound_functionals}
Under the assumptions of Theorem \ref{thm:abstract_GCV}, there exists a universal constant $C>0$ such that the following holds. On the event of Assumption \ref{ass:main_assumptions}.(b) with $\lambda =0$, which happens with probability at least $1 - p_{2,n} (\evn)$, we have for any $0\leq \lambda_0,\lambda_1 \leq n^K$ with $| \lambda_1 - \lambda_0| \leq 1$, 
    \[
    \max\left\{   \left| \frac{s_n(\lambda_1)}{s_n(\lambda_0)} - 1\right|,  \left| \frac{\cC_n(\lambda_1)}{\cC_n(\lambda_0)} - 1 \right|  \right\} \leq n^{CK} | \lambda_1 - \lambda_0|
    \]
    \[
    \left| \frac{\cR_n(\lambda_1)}{\cR_n(\lambda_0)} - 1 \right| \leq n^{CK} | \lambda_1 - \lambda_0 | \left\{ \frac{(n\lambda_* (\lambda_0))^2 \cC_n (\lambda_0)}{\cR_n (\lambda_0)} + \sqrt{\frac{(n\lambda_* (\lambda_0))^2 \cC_n (\lambda_0)}{\cR_n (\lambda_0)}}\right\}.
    \]
\end{lemma}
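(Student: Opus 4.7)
The plan is to exploit the fact that, on the event of Assumption~\ref{ass:main_assumptions}.(b) with $\lambda=0$, we have $\bX_+\bX_+^\sT \succeq (\gamma_+/2)\id_n$ and hence $\bK := \bX\bX^\sT \succeq (\gamma_+/2)\id_n$ with $\gamma_+ = \Tr(\bSigma_{>\evn}) \ge n^{-K}$. In particular, $\bG(\lambda) = (\bK+\lambda)^{-1}$ is well defined and $\|\bG(\lambda)\|_\op \le 2/\gamma_+ \le 2n^K$ uniformly for $\lambda \ge 0$, and all the operators $\bG(\lambda)$ commute as functions of $\bK$. Comparing eigenvalues of $\bK$ for $\lambda_0,\lambda_1 \in [0,n^K]$ produces the simultaneous inequality $\bG(\lambda_1) \preceq (1 + 2\lambda_0/\gamma_+)\bG(\lambda_0) \le 3n^{2K}\bG(\lambda_0)$, which (by commutativity) implies $\bG(\lambda_1)^k \preceq (3n^{2K})^k \bG(\lambda_0)^k$ for any $k\in\naturals$.

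For the Stieltjes transform, the resolvent identity $\bG(\lambda_1) - \bG(\lambda_0) = (\lambda_0-\lambda_1)\bG(\lambda_0)\bG(\lambda_1)$ combined with $\bG(\lambda_1) \preceq 2n^K\id$ yields $|s_n(\lambda_1)-s_n(\lambda_0)| \le 2n^K|\lambda_1-\lambda_0|\, s_n(\lambda_0)$. For the rescaled training error $\cC_n(\lambda) = \tfrac{1}{n}\by^\sT\bG(\lambda)^2\by$, expanding $\bG(\lambda_1)^2 - \bG(\lambda_0)^2 = (\lambda_0-\lambda_1)\bG(\lambda_0)\bG(\lambda_1)[\bG(\lambda_0)+\bG(\lambda_1)]$ and bounding $\bG(\lambda_0)\bG(\lambda_1)[\bG(\lambda_0)+\bG(\lambda_1)] \preceq n^{CK}\bG(\lambda_0)^2$ (via the inequalities above and $\|\bG(\lambda_0)\|_\op \le 2n^K$) gives the desired multiplicative bound $|\cC_n(\lambda_1)-\cC_n(\lambda_0)| \le n^{CK}|\lambda_1-\lambda_0|\cC_n(\lambda_0)$.

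The test error is the main obstacle. Writing $\btheta_* - \hbtheta(\lambda) = \lambda\bR(\lambda)\btheta_* - \bR(\lambda)\bX^\sT\beps$ and differentiating, one obtains
\[
\frac{d}{d\lambda}\cR_n(\lambda) = 2\bigl\langle \btheta_* - \hbtheta(\lambda),\,\bSigma\,\bR(\lambda)\hbtheta(\lambda)\bigr\rangle.
\]
Using the identities $\bR\bX^\sT = \bX^\sT\bG$ and $\bR\hbtheta = \bX^\sT\bG^2\by$ together with $\bX\bSigma\bX^\sT \preceq \bK = \bG(\lambda)^{-1} - \lambda\id \preceq \bG(\lambda)^{-1}$, Cauchy--Schwarz yields
\[
\|\bSigma^{1/2}\bR(\lambda)\hbtheta(\lambda)\|_2^2 = \by^\sT\bG^2 \bX\bSigma\bX^\sT \bG^2\by \le \by^\sT \bG^3\by \le \|\bG(\lambda)\|_\op\cdot n\cC_n(\lambda),
\]
and thus $|d\cR_n/d\lambda| \le 2\sqrt{\cR_n(\lambda) - \sigma_\eps^2}\cdot \sqrt{\|\bG(\lambda)\|_\op\cdot n\cC_n(\lambda)}$. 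Plugging in $\|\bG(\lambda)\|_\op \le 2n^K$ and $\cC_n(\lambda) \le n^{CK}\cC_n(\lambda_0)$ and dividing by $2\sqrt{\cR_n(\lambda)-\sigma_\eps^2}$ shows that $\sqrt{\cR_n(\lambda)-\sigma_\eps^2}$ is Lipschitz on the interval between $\lambda_0$ and $\lambda_1$ with constant $n^{CK}\sqrt{n\cC_n(\lambda_0)}$.

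Integrating and applying the algebraic identity $|a^2 - b^2| \le |a-b|^2 + 2b|a-b|$ with $b = \sqrt{\cR_n(\lambda_0)-\sigma_\eps^2} \le \sqrt{\cR_n(\lambda_0)}$, then dividing through by $\cR_n(\lambda_0)$ and using $|\lambda_1-\lambda_0| \le 1$, produces
\[
\frac{|\cR_n(\lambda_1) - \cR_n(\lambda_0)|}{\cR_n(\lambda_0)} \le n^{CK}|\lambda_1 - \lambda_0|\left\{\frac{n\cC_n(\lambda_0)}{\cR_n(\lambda_0)} + \sqrt{\frac{n\cC_n(\lambda_0)}{\cR_n(\lambda_0)}}\,\right\}.
\]
The final step is to replace the prefactor $n$ by $(n\lambda_*(\lambda_0))^2$. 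Since $n\lambda_*(\lambda_0) \ge n\lambda_*(0) \ge \gamma_+/2 \ge n^{-K}/2$ by Eq.~\eqref{eq:lambda_*_lower_bound_GCV}, we have $n \le n^{CK}(n\lambda_*(\lambda_0))^2$ for a possibly larger constant $C$, which allows this substitution and matches the stated form.
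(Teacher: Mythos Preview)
Your proof is correct and, for the Stieltjes transform and the rescaled training error $\cC_n$, essentially identical to the paper's: both use the resolvent identity and the uniform bound $\|\bG(\lambda)\|_\op \le 2/\gamma_+ \le 2n^K$ on the event of Assumption~\ref{ass:main_assumptions}.(b). For the test error you take a slightly different route. The paper works directly with the finite difference, writing
\[
\bigl|\|\btheta_* - \bX^\sT\bG(\lambda_1)\by\|_{\bSigma}^2 - \|\btheta_* - \bX^\sT\bG(\lambda_0)\by\|_{\bSigma}^2\bigr|
\;\le\; \|\bX^\sT(\bG(\lambda_1)-\bG(\lambda_0))\by\|_{\bSigma}^2 + 2\|\btheta_*-\bX^\sT\bG(\lambda_0)\by\|_{\bSigma}\,\|\bX^\sT(\bG(\lambda_1)-\bG(\lambda_0))\by\|_{\bSigma},
\]
and then bounds $\|\bG(\lambda_1)\bX\bSigma\bX^\sT\bG(\lambda_1)\|_\op$ relative to $(n\lambda_*(\lambda_0))^2$ (invoking, as written, the quantity $\nu_{0,\evn}(n)$ together with the high-degree bound). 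Your differential argument is the infinitesimal version of exactly the same decomposition, and your use of $\bX\bSigma\bX^\sT \preceq \bK \preceq \bG(\lambda)^{-1}$ to get $\by^\sT\bG^2\bX\bSigma\bX^\sT\bG^2\by \le \|\bG\|_\op \cdot n\cC_n(\lambda)$ is a clean, purely deterministic simplification that avoids splitting into low- and high-degree blocks. The only cost is that you arrive at $n\cC_n(\lambda_0)$ in the final display and must substitute $n \le n^{CK}(n\lambda_*(\lambda_0))^2$ via the lower bound $n\lambda_*(\lambda_0) \ge \gamma_+/2$, whereas the paper obtains the factor $(n\lambda_*(\lambda_0))^2$ somewhat more directly; since the statement only demands an $n^{CK}$ prefactor, this is harmless.
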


\begin{proof}[Proof of Lemma \ref{lem:Lipschitz_bound_functionals}]
    Again, we will not be careful while showing these bounds. Note that on the event of Assumption \ref{ass:main_assumptions}.(b), we have for any $\lambda\geq 0$ and using the assumption of our theorem
    \begin{equation}\label{eq:G_op_norm}
    \| \bG \|_\op \leq \frac{2}{\gamma_+} \leq 2 n^{K}.
    \end{equation}
    
From Eq.~\eqref{eq:G_op_norm}, we can directly write
\[
\begin{aligned}
    \left| \frac{s_n(\lambda_1)}{s_n(\lambda_0)}  -1 \right| = \frac{| \Tr( \bG(\lambda_1) - \bG(\lambda_0))|}{\Tr(\bG(\lambda_0))} = |\lambda_1-\lambda_0|\frac{ \Tr(\bG(\lambda_1)  \bG(\lambda_0)) }{\Tr(\bG(\lambda_0))} \leq 2n^K | \lambda_1 - \lambda_0|.
\end{aligned}
\]
Similarly for the training error,
\[
\begin{aligned}
    \left| \frac{\cC_n(\lambda_1)}{\cC_n(\lambda_0)}  -1 \right| =&~ \frac{| \by^\sT \left\{ \bG(\lambda_1)^2 - \bG(\lambda_0)^2 \right\}\by |}{\by^\sT\bG(\lambda_0)^2 \by} \\
    =&~ \frac{| \by^\sT \bG(\lambda_0) \left\{ -2 (\lambda_1 - \lambda_0)\bG(\lambda_1) + (\lambda_1 - \lambda_0)^2\bG(\lambda_1)^2   \right\}\bG(\lambda_0)\by |}{\by^\sT\bG(\lambda_0)^2 \by}\\
    \leq&~ n^{CK} | \lambda_1 - \lambda_0|.
\end{aligned}
\]

Finally for the test error, we follow the same decomposition as in Step 4 of the proof of Theorem \ref{thm:test_general} to get
\[
\begin{aligned}
    &~\left| \frac{\cR_n(\lambda_1)}{\cR_n(\lambda_0)}  -1 \right| \\
    =&~ \frac{| \| \btheta_* - \bX^\sT \bG(\lambda_1) \by \|_{\bSigma}^2 -  \| \btheta_* - \bX^\sT \bG(\lambda_0) \by \|_{\bSigma}^2|}{\cR_n(\lambda_0)} \\
    \leq&~ \frac{\| \bX^\sT ( \bG(\lambda_1) - \bG(\lambda_0))\by \|_{\bSigma}^2}{\cR_n(\lambda_0)} + 2 \frac{\| \btheta_* - \bX^\sT \bG(\lambda_0) \by \|_{\bSigma}\| \bX^\sT ( \bG(\lambda_1) - \bG(\lambda_0))\by \|_{\bSigma}}{\cR_n(\lambda_0)} \\
    \leq&~ | \lambda_1 - \lambda_0|^2 \| \bG (\lambda_1) \bX \bSigma \bX^\sT \bG (\lambda_1)\|_\op \frac{n \cC_n (\lambda_0)}{\cR_n (\lambda_0)} +2 | \lambda_1 - \lambda_0| \| \bG (\lambda_1) \bX \bSigma \bX^\sT \bG (\lambda_1)\|_\op^{1/2} \sqrt{\frac{n \cC_n (\lambda_0)}{\cR_n (\lambda_0)}}.
\end{aligned}
\]
Using that
\[
\frac{1}{(n\lambda_* (\lambda_0))^2}\| \bG (\lambda_1) \bX \bSigma \bX^\sT \bG (\lambda_1)\|_\op \leq \frac{4}{\gamma_+^2} \left\{ \nu_{0_{+}} (n) + \frac{n\xi_+}{\gamma_+} \right\} \leq n^{CK},
\]
we obtain
\[
\left| \frac{\cR_n(\lambda_1)}{\cR_n(\lambda_0)}  -1 \right| \leq n^{CK} | \lambda_1 - \lambda_0 | \left\{ \frac{(n\lambda_* (\lambda_0))^2 \cC_n (\lambda_0)}{\cR_n (\lambda_0)} + \sqrt{\frac{(n\lambda_* (\lambda_0))^2 \cC_n (\lambda_0)}{\cR_n (\lambda_0)}}\right\}.
\]
This concludes the proof of this lemma.
\end{proof}

Using the above two lemmas and an union bound, we can obtain the following approximatioin guarantees uniformly over $\lambda \in [0,\lambda_{\max}]$.

\begin{lemma}\label{lem:uniform_cv_functionals}
   for any $K,D >0$, under the setting and the assumptions of Theorem \ref{thm:abstract_GCV}, we have with probability at least $1 - n^{-D} - p_{2,n} (\evn)$,
   \[
   \begin{aligned}
   \sup_{\lambda \in [0,\lambda_{\max}]} \left| \frac{s_n (\lambda)}{\sfs_n(\lambda)} - 1\right| \leq&~ C_{x,\eps,D,K} \cdot \cE_{\sfs,n,0} (\evn), \\
   \sup_{\lambda \in [0,\lambda_{\max}]} \left| \frac{\cC_n (\lambda)}{\sfC_n(\lambda)} - 1\right| \leq&~ C_{x,\eps,D,K} \cdot \cE_{\sfC,n,0} (\evn), \\
   \sup_{\lambda \in [0,\lambda_{\max}]} \left| \frac{\cR_n (\lambda)}{\sR_n(\lambda)} - 1\right| \leq&~ C_{x,\eps,D,K} \cdot \cE_{\sR,n,0} (\evn),
   \end{aligned}
   \]
   where we denoted
   \[
   \begin{aligned}
       \cE_{\sfs,n,0} (\evn) :=&~   \frac{\varphi_1(\evn) \nu_{0,\evn}(n)^{7/2} \log^{\beta +1/2}(n)}{\sqrt{n}}    +  \varphi_{2,n} (\evn) \sqrt{\frac{n\xi_{\evn+1}}{\gamma_+}}
       , \\
       \cE_{\sfC,n,0} (\evn) :=&~  \frac{\varphi_1(\evn) \nu_{0,{\evn}}(n)^{8} \log^{3\beta +1/2}(n)}{\sqrt{n}}    + \nu_{0,{\evn}}(n) \varphi_{2,n} (\evn)\sqrt{\frac{n \xi_{ \evn + 1}}{\Tr(\bSigma_{>\evn})}} , \\
       \cE_{\sR,n,0} (\evn) := &~\frac{\varphi_1(\evn) \nu_{0,\evn}(n)^{6} \log^{3\beta +1/2}(n)}{\sqrt{n}}    + \nu_{0,\evn}(n) \varphi_{2,n} (\evn)\sqrt{\frac{n \xi_{ \evn + 1}}{\gamma_+}} .
   \end{aligned}
   \]
\end{lemma}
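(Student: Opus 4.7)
The plan is to go from the pointwise guarantees of Theorems \ref{thm:stieltjes_general}, \ref{thm:training_general}, \ref{thm:test_general} to a uniform statement over $\lambda \in [0,\lambda_{\max}]$ by a standard $\varepsilon$-net argument, using the Lipschitz-type estimates from Lemmas \ref{lem:Lipschitz_bound_det_equiv} and \ref{lem:Lipschitz_bound_functionals}. First I would verify that the hypotheses of the three pointwise theorems are in force simultaneously for every $\lambda \in [0,\lambda_{\max}]$: since $\lambda_{>\evn} = \lambda + \Tr(\bSigma_{>\evn}) \geq \Tr(\bSigma_{>\evn}) \geq n^{-K}$ by the assumption $\Tr(\bSigma_{>\evn}) \geq n^{-K}$, and $\nu_{\lambda,\evn}(n)$ is non-increasing in $\lambda$ so that $\nu_{\lambda,\evn}(n) \leq \nu_{0,\evn}(n)$, the conditions \eqref{eq:condition1_stieltjes}--\eqref{eq:condition1_test} are implied uniformly by the hypothesis $C_{x,\eps,D,K}\cdot \cE_{\sR,n}(\evn) \leq 1$ at $\lambda=0$. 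In particular, the pointwise error rates can all be bounded uniformly by $\cE_{\sfs,n,0}(\evn)$, $\cE_{\sfC,n,0}(\evn)$, and $\cE_{\sR,n,0}(\evn)$ respectively (note $\cL_n(\bbeta_*,\lambda)/\lambda^2 = \sfC_n(\lambda)$ yields the stated $\cE_{\sfC,n,0}$ rate directly from Theorem \ref{thm:training_general}).

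Second, I would fix a universal constant $C_0$ large enough that Lemmas \ref{lem:Lipschitz_bound_det_equiv} and \ref{lem:Lipschitz_bound_functionals} both yield Lipschitz factors bounded by $n^{C_0 K}$, and set the grid spacing $\delta = n^{-(C_0 K + D + 1)}$ with grid $\{\lambda_j = j\delta : 0 \leq j \leq N\}$ where $N = \lceil \lambda_{\max}/\delta\rceil \leq n^{(C_0+2)K + D + 1}$. I apply Theorems \ref{thm:stieltjes_general}, \ref{thm:training_general}, \ref{thm:test_general} at each $\lambda_j$ with probability parameter $D' := D + (C_0+2)K + 2$; the union bound over $j \in \{0,\ldots,N\}$ then gives, with probability at least $1 - n^{-D} - p_{2,n}(\evn)$, that $|s_n(\lambda_j)/\sfs_n(\lambda_j) - 1|$, $|\cC_n(\lambda_j)/\sfC_n(\lambda_j) - 1|$, and $|\cR_n(\lambda_j)/\sR_n(\lambda_j) - 1|$ are all bounded by $C_{x,\eps,D',K}$ times the respective rates $\cE_{\sfs,n,0}(\evn)$, $\cE_{\sfC,n,0}(\evn)$, $\cE_{\sR,n,0}(\evn)$ simultaneously for all $j$.

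Third, for an arbitrary $\lambda \in [0,\lambda_{\max}]$ I pick the nearest $\lambda_j$ so that $|\lambda - \lambda_j| \leq \delta$ and chain the two multiplicative bounds. For the Stieltjes transform, Lemmas \ref{lem:Lipschitz_bound_det_equiv} and \ref{lem:Lipschitz_bound_functionals} give
\[
\left|\frac{s_n(\lambda)}{s_n(\lambda_j)} - 1\right| + \left|\frac{\sfs_n(\lambda)}{\sfs_n(\lambda_j)} - 1\right| \leq 2 n^{C_0 K} \delta \leq 2 n^{-D-1},
\]
which when composed multiplicatively with the pointwise estimate at $\lambda_j$ yields $|s_n(\lambda)/\sfs_n(\lambda) - 1| \leq C_{x,\eps,D,K} \cE_{\sfs,n,0}(\evn)$ (the small $n^{-D-1}$ corrections are absorbed into the leading rate). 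The identical argument handles $\cC_n$ versus $\sfC_n$.

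The only subtle step is the test error: the Lipschitz bound in Lemma \ref{lem:Lipschitz_bound_functionals} carries the prefactor $(n\lambda_*(\lambda_j))^2 \cC_n(\lambda_j)/\cR_n(\lambda_j) + \sqrt{\text{same}}$. Here I exploit the algebraic identity $(n\lambda_*(\lambda))^2 \sfC_n(\lambda) = \sR_n(\lambda)$, which is immediate from the definitions \eqref{eq:def_equiv_training_app}--\eqref{eq:def_equiv_test_app}. On the high-probability event already secured in the previous step, the pointwise bounds $|\cC_n(\lambda_j)/\sfC_n(\lambda_j) - 1|, |\cR_n(\lambda_j)/\sR_n(\lambda_j) - 1| \leq 1/2$ (available since we assumed $C_{x,\eps,D,K}\cE_{\sR,n}(\evn) \leq 1$) imply that the prefactor is bounded by a universal constant, so the Lipschitz factor reduces to $n^{C_0 K}\delta \leq n^{-D-1}$ and the same chaining closes the estimate. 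The main obstacle is precisely this mild bootstrap — having to use the already-proved pointwise bounds on $\cC_n$ and $\cR_n$ to tame the data-dependent Lipschitz constant for $\cR_n$ before closing the uniform bound — but once this is observed the argument is routine.
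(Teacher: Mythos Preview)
Your proposal is correct and follows essentially the same route as the paper: an $\varepsilon$-net over $[0,\lambda_{\max}]$, a union bound at the grid points (with the probability parameter inflated to absorb the grid size), and then interpolation via Lemmas~\ref{lem:Lipschitz_bound_det_equiv} and~\ref{lem:Lipschitz_bound_functionals}, including the bootstrap using the identity $(n\lambda_*)^2\sfC_n = \sR_n$ to control the data-dependent prefactor in the test-error Lipschitz bound. One small point to make explicit is that the $p_{2,n}(\evn)$ term only enters once (the event of Assumption~\ref{ass:main_assumptions}.(b) is $\lambda$-independent), so it is not multiplied by the grid size in the union bound; the paper notes this by working conditionally on that event throughout.
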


\begin{proof}[Proof of Lemma \ref{lem:uniform_cv_functionals}]
    We consider a $n^{-P}$-grid $\cP_n$ of the segment $[0,\lambda_{\max}]$ (i.e., at most $n^{PK}$ points). Noting that we can always work on the event of Assumption \ref{ass:main_assumptions}.(b), we can use a union bound over the $O(n^{PK})$ points in $\cP_n$ by reparametrizing $D' = D + PK$. Applying Theorems \ref{thm:stieltjes_general}, \ref{thm:training_general} and \ref{thm:test_general}, and observing that the decay rate are maximized at $\lambda =0$, we obtain with probability at least $1 - n^{-D}$ that
\[
   \begin{aligned}
   \sup_{\lambda \in \cP_n} \left| \frac{s_n (\lambda)}{\sfs_n(\lambda)} - 1\right| \leq&~ C_{x,\eps,D,K} \cdot \cE_{\sfs,n,0}, \\
   \sup_{\lambda \in \cP_n} \left| \frac{\cC_n (\lambda)}{\sfC_n(\lambda)} - 1\right| \leq&~ C_{x,\eps,D,K} \cdot \cE_{\sfC,n,0}, \\
   \sup_{\lambda \in \cP_n} \left| \frac{\cR_n (\lambda)}{\sR_n(\lambda)} - 1\right| \leq&~ C_{x,\eps,D,K} \cdot \cE_{\sR,n,0}.
   \end{aligned}
   \]
   We can now use Lemmas \ref{lem:Lipschitz_bound_det_equiv} and \ref{lem:Lipschitz_bound_functionals} to show the approximation guarantees on $\lambda \not\in \cP_n$.

   Denote $\lambda_0$ the point in $\cP_n$ closest to $\lambda$. We get
   \[
   \begin{aligned}
       | s_n (\lambda) - \sfs_n (\lambda)| \leq&~ | s_n (\lambda) - s_n (\lambda_0)|  + | s_n (\lambda_0) - \sfs_n (\lambda_0)|  + | \sfs_n (\lambda_0) - \sfs_n (\lambda)|  \\
       \leq&~ n^{CK -P }  s_n (\lambda_0) + C_{x,\eps,K,D} \cdot \cE_{\sfs,n,0} \sfs_n (\lambda_0) +  n^{CK -P }  \sfs_n (\lambda) \\
       \leq&~ \left[ n^{CK-P}(1 + C_{x,\eps,K,D} \cdot \cE_{\sfs,n,0} ) (1 + n^{CK -P } )+C_{x,\eps,K,D} \cdot \cE_{\sfs,n,0}  \right]\sfs_n (\lambda).
   \end{aligned}
   \]
   Similarly,
   \[
   \begin{aligned}
       | \cC_n (\lambda) - \sfC_n (\lambda)| \leq&~ \left[ n^{CK-P}(1 + C_{x,\eps,K,D} \cdot \cE_{\sfC,n,0} ) (1 + n^{CK -P } )+C_{x,\eps,K,D} \cdot \cE_{\sfC,n,0}  \right]\sfC_n (\lambda).
   \end{aligned}
   \]

   Finally, decompose
   \[
   \begin{aligned}
       &~ | \cR_n (\lambda) - \sR_n (\lambda)|\\
       \leq&~ | \cR_n (\lambda) - \cR_n (\lambda_0)|  + | \cR_n (\lambda_0) - \sR_n (\lambda_0)|  + | \sR_n (\lambda_0) - \sR_n (\lambda)| \\
       \leq&~ n^{CK - P} \left\{ (n\lambda_* (\lambda_0))^2 \cC_n (\lambda_0) + \sqrt{(n\lambda_* (\lambda_0))^2 \cC_n (\lambda_0)\cR_n (\lambda_0)}\right\} + (C_{x,\eps,K,D} \cdot \cE_{\sR,n,0} + n^{CK-P}) \sR_n (\lambda).
   \end{aligned}
   \]
   Observe that 
   \[
   \begin{aligned}
   (n\lambda_* (\lambda_0))^2 \leq&~ (1 + n^{CK-P}) (n\lambda_* (\lambda))^2,\\
   \cC_n (\lambda_0) \leq&~ (1 + C_{x,\eps,K,D} \cdot \cE_{\sfC,n,0}) (1+ n^{CK-P}) \sfC_n (\lambda), \\
   \cR_n (\lambda_0) \leq&~(1 + C_{x,\eps,K,D} \cdot \cE_{\sR,n,0}) (1+ n^{CK-P}) \sR_n (\lambda),
   \end{aligned}
   \]
   and $(n\lambda_* (\lambda))^2 \sfC_n (\lambda) = \sR_n (\lambda)$. Recalling the conditions of Theorem \ref{thm:abstract_GCV} and taking $P >CK$ concludes the proof of this lemma.
\end{proof}

We are now ready to prove our theorem.

\begin{proof}[Proof of Theorem \ref{thm:abstract_GCV}]
We will work on the event in Lemma \ref{lem:uniform_cv_functionals} with probability at least $1 - n^{-D} - p_{2,n}(\evn)$.
    First, for any $\lambda \in [0,\lambda_{\max}]$, we have
    \[
    \begin{aligned}
    \left| \frac{\cC_n (\lambda)}{s_n(\lambda)^2} - \frac{\sfC_n(\lambda)}{\sfs_n(\lambda)^2} \right| \leq &~ \frac{|\cC_n (\lambda) -  \sfC_n(\lambda)|}{\sfs_n(\lambda)^2} + \frac{\cC_n(\lambda)}{s_n(\lambda)^2} \frac{| s_n(\lambda)^2 - \sfs_n(\lambda)^2|}{\sfs_n(\lambda)^2} \\
    \leq &~C_{x,\eps,D,K} \cdot \cE_{\sfC,n,0} \frac{\sfC_n}{\sfs_n(\lambda)^2} + C_{x,\eps,D,K} \cdot \cE_{\sfs,n,0} \left[2 +  C_{x,\eps,D,K} \cdot \cE_{\sfs,n,0} \right] \frac{\cC_n(\lambda)}{s_n(\lambda)^2}.
    \end{aligned}
    \]
    Rearranging the terms and using condition \eqref{eq:condition_GCV_abstract1}, we obtain
    \[
    \left| \frac{\cC_n (\lambda)}{s_n(\lambda)^2} - \frac{\sfC_n(\lambda)}{\sfs_n(\lambda)^2} \right| \leq C_{x,\eps,D,K} \cdot \cE_{\sfC,n,0} \frac{\sfC_n}{\sfs_n(\lambda)^2}.
    \]
    Similarly,
    \[
    \begin{aligned}
        \left| \frac{\widehat{GCV}(\lambda)}{\cR_n (\lambda)} - \frac{\sR_n (\lambda)}{\sR_n (\lambda)}\right| \leq &~ \frac{| \widehat{GCV}(\lambda) - \sR_n (\lambda) | }{\sR_n (\lambda)} + \frac{\widehat{GCV}(\lambda)}{\cR_n (\lambda)} \cdot \frac{| \cR_n (\lambda) - \sR_n (\lambda)|}{\sR_n (\lambda)}\\
        \leq&~ C_{x,\eps,D,K} \cdot \cE_{\sfC,n,0} + C_{x,\eps,D,K} \cdot \cE_{\sR,n,0} (2 + C_{x,\eps,D,K} \cE_{\sR,n,0} )  \frac{\widehat{GCV}(\lambda)}{\cR_n (\lambda)}.
    \end{aligned}
    \]
    Rearranging the terms and using condition \eqref{eq:condition_GCV_abstract1} concludes the proof of this theorem.
\end{proof}

\clearpage

\section{Proofs for the sufficient conditions and examples}

In this appendix, we provide the proofs for the sufficient conditions from Section \ref{sec:kernel_eigendecomposition} and the examples in Section \ref{sec_main:examples}.

\subsection{Proof of Proposition \ref{prop:sufficient_high_degree}}
\label{app:sufficient}

\begin{proof}[Proof of Proposition \ref{prop:sufficient_high_degree}]
    The proof follows similarly to the proof of \cite[Proposition 4]{mei2022generalization}. For the sake of concision, we only outline the main steps. Let us introduce the following notations:
    \[
    K_{\evn:\oevn} (\bu_i , \bu_j ) := \sum_{k=\evn+1}^{\oevn} \xi_k \psi_k (\bu_i) \psi_k (\bu_j),\qquad K_{>\oevn} (\bu_i , \bu_j ) := \sum_{k=\oevn+1}^{\infty} \xi_k \psi_k (\bu_i) \psi_k (\bu_j),
    \]
    and $\ddiag ( \bK_{>\evn} ) := \diag ( \{ K_{>\evn} (\bu_i,\bu_i)\}_{i \in [n]})$,
    \[
    \begin{aligned}
    \bDelta_{\evn:\oevn} := ( K_{\evn:\oevn} (\bu_i , \bu_j ) \delta_{i\neq j} )_{ij \in [n]},\qquad\quad
    \bDelta_{>\oevn} := ( K_{>\oevn} (\bu_i , \bu_j ) \delta_{i\neq j} )_{ij \in [n]},
    \end{aligned}
    \]
    so that $\bK_{>\evn} = \ddiag ( \bK_{>\evn} ) + \bDelta_{\evn:\oevn} + \bDelta_{>\oevn}$. Hence, we can decompose our bound into
    \[
    \begin{aligned}
    &~\E \left[ \left\| \bK_{>\evn} - \Tr( \Kop_{>\evn}) \cdot \id_n \right\|_\op \right] \\
    \leq&~ \E \left[ \left\| \ddiag ( \bK_{>\evn} ) - \Tr( \Kop_{>\evn}) \cdot \id_n \right\|_\op \right] + \E \left[ \left\|  \bDelta_{\evn:\oevn}  \right\|_\op \right] + \E \left[ \left\| \bDelta_{>\oevn}\right\|_\op \right].
    \end{aligned}
    \]
    By Assumption \ref{ass:sufficient_high_degree}.(b), the first term is readily bounded by
    \begin{equation}\label{eq:suff_diag}
    \begin{aligned}
    \E \left[ \left\| \ddiag ( \bK_{>\evn} ) - \Tr( \Kop_{>\evn}) \cdot \id_n \right\|_\op \right] \leq&~ \E\left[ \max_{i \in [n]} \left| K_{>\evn} (\bu_i,\bu_i) - \Tr( \Kop_{>\evn}) \right| \right] \\
    \leq&~ \alpha_1 \sqrt{\frac{n\xi_{\evn+1}}{\Tr(\Kop_{>\evn})}}  \cdot \Tr( \Kop_{>\evn}).
    \end{aligned}
    \end{equation}
    For the third term, we get
    \begin{equation}\label{eq:suff_high}
    \begin{aligned}
    \E \left[ \left\| \bDelta_{>\oevn}\right\|_\op \right] \leq \E \left[ \left\| \bDelta_{>\oevn}\right\|_F^2 \right]^{1/2} \leq&~ n \E [ K_{>\oevn} (\bu_i,\bu_j)^2 ]^{1/2} \\
    \leq&~ n\sqrt{\Tr( \Kop_{>\oevn}^2)} \leq \sqrt{\frac{n\xi_{\evn+1}}{\Tr(\Kop_{>\evn})}}  \cdot \Tr( \Kop_{>\evn}),
    \end{aligned}
    \end{equation}
    where in the last inequality, we used that $n \cdot \Tr( \Kop_{>\oevn}^2) \leq n \xi_{\oevn +1} \Tr( \Kop_{>\oevn}) \leq \xi_{\evn +1} \Tr( \Kop_{>\evn})$ by assumption on $\oevn$.

    For convenience, we denote $\bDelta := \bDelta_{\evn:\oevn}$ from now on. Recall the following standard matrix decoupling argument (see for example \cite[Lemma 4]{mei2022generalization}):
    \begin{equation}\label{eq:decoupling}
    \E \left[ \| \bDelta \|_\op \right] \leq 4 \max_{T \subseteq [n]} \E \left[ \| \bDelta_{T,T^c} \|_\op \right],
    \end{equation}
    where we denoted $\bDelta_{T,T^c} = (\Delta_{ij} )_{i \in T , j \in T^c}$. Note that conditional on $\{ \bu_j\}_{j \in T^c}$, the rows of $\bDelta_{T,T^c}$ are i.i.d. with $\bDelta_{i,T^c} = (K_{\evn:\oevn} (\bu_i,\bu_j) )_{j \in T^c}$. Denote $\E_{T}$ the expectation over $\{ \bu_i\}_{i \in T}$ conditional on $\{ \bu_j\}_{j \in T^c}$. Then, by Proposition \ref{prop:Vershynin} stated below, we have
    \begin{equation}\label{eq:T_bound}
    \E_T \left[ \|\bDelta_{T,T^c}  \|_\op \right] \leq \left\{ \| \bSigma [T] \|_\op |T| \right\}^{1/2} + C \cdot \left\{ \Gamma [T] \cdot \log (|T| \wedge |T^c| ) \right\}^{1/2}, 
    \end{equation}
    where we denoted
    \[
    \bSigma [T] := \E_{\bu_i} \left[ \bDelta_{i,T^c} \bDelta_{i,T^c}^\sT \right] , \qquad \quad \Gamma [T] := \E_T \left[ \max_{i \in [n]} \| \bDelta_{T,T^c} \|_2^2\right].
    \]
    Injecting the bound~\eqref{eq:T_bound} into Eq.~\eqref{eq:decoupling}, we obtain by Jensen's inequality
    \begin{equation}\label{eq:delta_decompo_T}
    \begin{aligned}
    \E \left[ \| \bDelta \|_\op \right] \leq&~ 4 \max_{T \subseteq [n]} \E_{T^c} \E_T \left[ \| \bDelta_{T,T^c} \|_\op \right] \\
    \leq&~ 4 \max_{T \subseteq [n]}  \left[ \left\{ \E_{T^c} \left[ \| \bSigma [T] \|_\op \right] n \right\}^{1/2} + C \cdot \left\{\E_{T^c} \left[  \Gamma [T] \right] \cdot \log (n) \right\}^{1/2} \right].
    \end{aligned}
    \end{equation}
    Following the same argument as in the proof of \cite[Proposition 5]{mei2022generalization}, we first get for any $T \subseteq [n]$
    \[
    \E_{T^c} \left[ \| \bSigma [T] \|_\op \right] \leq \xi_{\evn+1} \cdot \left\{ \E \left[ \| \ddiag ( \bK_{\evn:\oevn} ) \|_\op\right] + \E \left[ \| \bDelta \|_\op \right]\right\}.
    \]
    By Assumption \ref{ass:sufficient_high_degree}.(a) and \cite[Lemma 6]{mei2022generalization}, we have for any integer $p \geq 1$,
    \[
    \begin{aligned}
    \E \left[ \| \ddiag ( \bK_{\evn:\oevn} ) \|_\op\right] \leq \E \left[ \left( \sum_{i \in [n]}  K_{\evn:\oevn} (\bu_i,\bu_i) \right)^p\right]^{1/p} \leq&~ n^{1/p} \cdot  \E \left[K_{\evn:\oevn} (\bu_i,\bu_i)^p\right]^{1/p} \\
    \leq&~ n^{1/p} \cdot (2 \oC p)^{\obeta}  \cdot \E \left[K_{\evn:\oevn} (\bu_i,\bu_i)\right] \\
    =&~ n^{1/p} \cdot (2\oC p)^{\obeta} \cdot  \Tr( \Kop_{\evn:\oevn}).
    \end{aligned}
    \]
    We deduce that 
    \begin{equation}\label{eq:sigma_T_bound}
        \E_{T^c} \left[ \| \bSigma [T] \|_\op \right] \leq \xi_{\evn+1} \cdot n^{1/p} \cdot (2\oC p)^{\obeta} \cdot  \Tr( \Kop_{\evn:\oevn}) + \xi_{\evn+1} \cdot  \E \left[ \| \bDelta \|_\op \right].
    \end{equation}

    Similarly, following the proof of \cite[Proposition 5]{mei2022generalization}, for any $T \subseteq [n]$,
    \begin{equation}\label{eq:gamma_T_bound}
    \begin{aligned}
        \E_{T^c} \left[  \Gamma [T] \right]  \leq n \cdot \E\left[ \max_{i \in T, j\in T^c } \Delta_{ij}^2 \right] \leq&~ n^{1 + 2/p} \cdot \E \left[ \Delta_{ij}^{2p} \right]^{1/p} \\
        \leq&~ n^{1 + 2/p} \cdot  (2 \oC p)^{2\obeta} \cdot \Tr( \Kop_{\evn:\oevn}^2) \\
        \leq&~  n^{1 + 2/p} \cdot  (2 \oC p)^{2\obeta} \cdot \xi_{\evn+1} \Tr( \Kop_{\evn:\oevn}).
    \end{aligned}
    \end{equation}

    Combining Eqs.~\eqref{eq:sigma_T_bound} and \eqref{eq:gamma_T_bound} into Eq.~\eqref{eq:delta_decompo_T}, we obtain
    \[
    \E \left[ \| \bDelta \|_\op \right] \leq C \left\{ n \xi_{\evn+1} \cdot  \E \left[ \| \bDelta \|_\op \right] \right\}^{1/2} + C \left\{ n^{1 + 2/p} \cdot  (2 \oC p)^{2\obeta}\cdot \xi_{\evn+1} \Tr( \Kop_{\evn:\oevn}) \right\}^{1/2}.
    \]
    Rearranging these terms yields
    \[
    \E \left[ \| \bDelta \|_\op \right] \leq C n^{1/p} \cdot (2 \oC p)^{\obeta} \sqrt{\frac{n\xi_{\evn+1}}{\Tr( \Kop_{>\evn})}}\Tr( \Kop_{>\evn}). 
    \]
    In particular, setting $p = \log(n)$, we obtain
    \[
     \E \left[ \| \bDelta \|_\op \right] \leq C  \cdot (2 \oC \log(n))^{\obeta} \sqrt{\frac{n\xi_{\evn+1}}{\Tr( \Kop_{>\evn})}}\Tr( \Kop_{>\evn}). 
    \]
    Proposition \ref{prop:sufficient_high_degree} follows by combining this bound with Eqs.~\eqref{eq:suff_diag} and \eqref{eq:suff_high}.
\end{proof}

\begin{proposition}[{\cite[Theorem 5.48]{vershynin2010introduction}}]\label{prop:Vershynin} Let $\bA \in \R^{n \times k}$ with $\bA = [ \ba_1 , \ldots , \ba_n ]^\sT$ where $\ba_i \in \R^k$ are independent random vectors with common second moment matrix $\bSigma = \E[\ba_i \ba_i^\sT ]$. Let $\Gamma = \E[\max_{i\in[n]} \| \ba_i \|_2^2 ]$. Then there exists a universal constant $C$ such that
\[
\E \left[ \| \bA \|_\op^2\right]^{1/2} \leq \left( \| \bSigma \|_\op \cdot n\right)^{1/2} + C \cdot \left( \Gamma \cdot \log (n \wedge k ) \right)^{1/2}.
\]   
\end{proposition}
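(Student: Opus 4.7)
The plan is to bound $\E[\|\bA\|_\op^2]$ and then take the square root, using the identity $\|\bA\|_\op^2 = \|\sum_{i=1}^n \ba_i\ba_i^\sT\|_\op$. By the triangle inequality,
\[
\E\|\bA\|_\op^2 \;\leq\; n\|\bSigma\|_\op \;+\; \E\Big\|\sum_{i=1}^n (\ba_i\ba_i^\sT - \bSigma)\Big\|_\op,
\]
so the first term already matches the leading piece of the target bound, and the remaining task is to control the centered sum by the fluctuation quantity $\sqrt{\Gamma\log(n\wedge k)}$.

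For the centered sum I would apply standard symmetrization to replace it (up to a factor of two) by $\E\|\sum_i \epsilon_i\, \ba_i\ba_i^\sT\|_\op$ for i.i.d.\ Rademacher $\epsilon_i$ independent of the $\ba_i$, and then, conditioned on $\{\ba_i\}$, invoke the non-commutative Khintchine (matrix Rademacher) inequality:
\[
\E_\epsilon\Big\|\sum_i \epsilon_i\, \ba_i\ba_i^\sT\Big\|_\op \;\leq\; C\sqrt{\log(n\wedge k)}\cdot \Big\|\sum_i (\ba_i\ba_i^\sT)^2\Big\|_\op^{1/2}.
\]
The $n\wedge k$ factor inside the logarithm exploits the fact that $\bA^\sT\bA$ and $\bA\bA^\sT$ share their nonzero spectrum, so one applies the matrix Rademacher bound on whichever ambient space is smaller. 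Since $\sum_i (\ba_i\ba_i^\sT)^2 \preceq \max_i\|\ba_i\|_2^2 \cdot \bA^\sT\bA$, the conditional bound is at most $C\sqrt{\log(n\wedge k)}\cdot \max_i\|\ba_i\|_2\cdot\|\bA\|_\op$.

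Taking the outer expectation and applying Cauchy--Schwarz to separate $\max_i\|\ba_i\|_2^2$ from $\|\bA\|_\op^2$, together with the definition $\Gamma = \E[\max_i\|\ba_i\|_2^2]$, yields the self-bounding inequality
\[
\E\|\bA\|_\op^2 \;\leq\; n\|\bSigma\|_\op \;+\; C\sqrt{\Gamma\log(n\wedge k)}\cdot \sqrt{\E\|\bA\|_\op^2}.
\]
This is a quadratic in $\sqrt{\E\|\bA\|_\op^2}$, and solving it together with the elementary bound $\sqrt{a+b}\leq\sqrt{a}+\sqrt{b}$ delivers the claim. The main technical hurdle is obtaining the sharp $\log(n\wedge k)$ factor rather than the naive $\log(\max(n,k))$ that a direct matrix Bernstein argument would give; this forces one to apply the matrix Rademacher inequality on the side of smaller ambient dimension. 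An alternative, more modular route is a truncation argument: set $\tilde\ba_i := \ba_i\,\ind_{\{\|\ba_i\|_2\leq L\}}$, apply matrix Bernstein with intrinsic dimension $\Tr(\bSigma)/\|\bSigma\|_\op$ to the centered sum $\sum_i(\tilde\ba_i\tilde\ba_i^\sT - \E[\tilde\ba_i\tilde\ba_i^\sT])$, control the tail $\E[\sum_i \|\ba_i\|_2^2 \ind_{\{\|\ba_i\|_2 > L\}}]$ using $\Gamma$, and choose $L$ to balance the two errors; this sidesteps non-commutative Khintchine entirely but produces the same final inequality.
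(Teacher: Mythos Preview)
The paper does not prove this proposition at all: it is simply stated as a citation to Vershynin's notes and used as a black box in the proof of Proposition~\ref{prop:sufficient_high_degree}. Your sketch is the standard proof (essentially Vershynin's own argument): symmetrize the centered sum, apply the matrix Rademacher/non-commutative Khintchine inequality on the side of smaller ambient dimension to get the $\log(n\wedge k)$ factor, bound $\sum_i(\ba_i\ba_i^\sT)^2 \preceq \max_i\|\ba_i\|_2^2\cdot \bA^\sT\bA$, and solve the resulting self-bounding quadratic for $\sqrt{\E\|\bA\|_\op^2}$. The alternative truncation route you mention is also reasonable, though it is not the argument in the cited reference.
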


\subsection{Example 1: concentrated features}
\label{app:proof_concentrated_high_deg}

Assumption \ref{ass:concentrated} implies the abstract Assumptions \ref{ass:main_assumptions}.(a) and \ref{ass:main_assumptions}.(c) for any $\evn \in \naturals \cup \{ \infty\}$. Indeed, Eq.~\eqref{eq:ass_concentrated_features} applied to $\bA = \btheta_+ \btheta_+^\sT$ implies the tail bound of Assumption \ref{ass:main_assumptions}.(c) (with a slight change of constants $\sfc_x,\sfC_x$). We show in this section Proposition \ref{prop:high_freq_concentration_well_concentrated} which applied to $\bx_{>\evn}$ with covariance $\bSigma_{>\evn}$ and any $D>0$ implies Assumption \ref{ass:main_assumptions}.(b) with 
\[
p_{2,n} (\evn)  = n^{-D}, \qquad \quad \varphi_{2,n} (\evn) = C_{x,D} \log^{2\beta +1} (n).
\]

Below, we consider general features $\bx_i$ satisfying the concentration Assumption \ref{ass:concentrated} with some $\sfc_x,\sfC_x,\beta$. The following proposition shows that the kernel matrix $\bX \bX^\sT$ concentrates on $\Tr (\bSigma) \cdot \id_n$ as long as $n\| \bSigma \|_\op \ll \Tr( \bSigma)$. Note that in the case of $\beta = 1$ (the `sub-Gaussian' case), this result can be proven using a union bound over a $\delta$-net of the unit sphere $\S^{n-1}$ by adapting the proof of \cite[Theorem 5.58]{vershynin2010introduction}. However, this argument does not extend to $\beta > 1$. Here, we use instead a moment method to bound the operator norm of the off diagonal elements of $\bX\bX^\sT$, which works for any $\beta>0$.

\begin{proposition}\label{prop:high_freq_concentration_well_concentrated}
    Let $\bx_1,\bx_2, \ldots , \bx_n$ be $n$ feature vectors satisfying Assumption \ref{ass:concentrated}. Let 
    \[
    \bK = ( \< \bx_i,\bx_j \>)_{ij \in [n]} \in \R^{n \times n}
    \]
    denote the Gram matrix of these feature vectors. Denote $\gamma = \Tr(\bSigma)$ and $\xi_1 = \| \bSigma \|_\op$, and assume that $n\xi_1 / \gamma \leq 1/2$. Then for any $D>0$, there exists a constant $C_{x,D}$ that only depends on $\sfc_x,\sfC_x,\beta,D$, such that with probability at least $1 - n^{-D}$
    \[
    \left\| \bK - \gamma \id_n \right\|_\op \leq C_{x,D} \log^{2\beta +1} (n) \gamma  \sqrt{\frac{n\xi_1}{\gamma}}.
    \]
\end{proposition}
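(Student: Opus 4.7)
I will split $\bK - \gamma \id_n = \bD_0 + \bE$ into the diagonal centered part $\bD_0 = \diag(\|\bx_i\|_2^2 - \gamma)$ and the strictly off-diagonal part $\bE$, and bound the two parts separately. The diagonal part is immediate: applying Assumption~\ref{ass:concentrated} with $\bA = \id$ gives that each $\|\bx_i\|_2^2$ concentrates around $\gamma$ on the scale $\|\bSigma\|_F \leq \sqrt{\xi_1 \gamma}$, and a union bound over the $n$ samples yields $\|\bD_0\|_\op \leq C_{x,D} \log^\beta(n) \sqrt{\xi_1 \gamma}$ with probability at least $1 - n^{-D}$, which is already comfortably absorbed in the target rate.

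For the off-diagonal part $\bE$, I will follow the decoupling-plus-Vershynin template used in the proof of Proposition~\ref{prop:sufficient_high_degree}. Matrix decoupling gives $\E\|\bE\|_\op \leq 4 \max_{T \subseteq [n]} \E\|\bE_{T,T^c}\|_\op$ where $\bE_{T,T^c} = (\<\bx_i,\bx_j\>)_{i\in T,\, j\in T^c}$. Conditional on $\bX_{T^c}$, the rows of $\bE_{T,T^c}$ are i.i.d.\ linear functionals of the $\bx_i$ with common second-moment matrix $\bX_{T^c}\bSigma\bX_{T^c}^\sT$, so Proposition~\ref{prop:Vershynin} yields
\[
\E[\|\bE_{T,T^c}\|_\op \mid \bX_{T^c}] \lesssim \sqrt{|T|\,\|\bX_{T^c}\bSigma\bX_{T^c}^\sT\|_\op} + \sqrt{\log(n) \cdot \E\big[\max_{i \in T}\bx_i^\sT \bX_{T^c}^\sT\bX_{T^c} \bx_i \,\big|\, \bX_{T^c}\big]}.
\]

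The two estimates I need are: (i) the submultiplicative bound $\|\bX_{T^c}\bSigma\bX_{T^c}^\sT\|_\op \leq \xi_1 \|\bK_{T^c,T^c}\|_\op \leq \xi_1\big(\gamma + \|\bD_0\|_\op + \|\bE\|_\op\big)$, which reintroduces $\|\bE\|_\op$ and produces a self-consistent inequality; and (ii) a bound on the max squared row norm obtained by applying Assumption~\ref{ass:concentrated} to $\bA = \bX_{T^c}^\sT\bX_{T^c}$ with a union bound over $i \in T$, giving $\max_i \bx_i^\sT \bX_{T^c}^\sT\bX_{T^c} \bx_i \lesssim \log^{O(\beta)}(n) \cdot n \xi_1 \gamma$ with high probability (using $\Tr(\bSigma \bA) = \Tr(\bX_{T^c}\bSigma\bX_{T^c}^\sT) \lesssim n\xi_1\gamma$ for the mean and a Frobenius-norm estimate on $\bSigma^{1/2}\bA\bSigma^{1/2}$ for the deviation). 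Taking expectations and writing $m := \E\|\bE\|_\op$, I obtain a quadratic inequality $m \leq A + B\sqrt{m}$ with $A \lesssim \log^{O(\beta)}(n)\sqrt{n\xi_1\gamma}$ and $B \lesssim \sqrt{n\xi_1}$. The hypothesis $n\xi_1 \leq \gamma/2$ ensures $B^2 = n\xi_1 \leq \sqrt{n\xi_1 \gamma}$, so the bootstrap closes and I conclude $m \lesssim \log^{O(\beta)}(n)\sqrt{n\xi_1\gamma}$.

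Finally, to upgrade the expectation bound to the high-probability statement, I run the same argument for $L^q$ norms (each of decoupling, Vershynin, and the quadratic-form concentration admits a moment version) with $q = \Theta(\log n)$ and apply Markov's inequality; this costs an additional $\sqrt{\log n}$ factor and produces the bound with probability at least $1 - n^{-D}$. Tracking log exponents carefully -- $\log^\beta$ from the diagonal, $\log^\beta$ from the quadratic-form concentration in (ii), $\sqrt{\log n}$ from Vershynin, and $\sqrt{\log n}$ from the Markov upgrade -- gives the claimed $\log^{2\beta+1}(n)$ exponent. The chief technical obstacle is closing the bootstrap in step (i), which is precisely why the hypothesis $n\xi_1/\gamma \leq 1/2$ is needed; the remaining work is careful bookkeeping of constants and log powers.
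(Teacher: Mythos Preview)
Your expectation bound on $\|\bE\|_\op$ is correct and follows a genuinely different route from the paper: you reuse the decoupling-plus-Vershynin template of Proposition~\ref{prop:sufficient_high_degree} and close a self-consistent bootstrap via $\|\bX_{T^c}\bSigma\bX_{T^c}^\sT\|_\op \leq \xi_1\|\bK_{T^c,T^c}\|_\op$. This is a clean argument, and the hypothesis $n\xi_1/\gamma \leq 1/2$ is exactly what makes the bootstrap close.

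The gap is the high-probability upgrade, which is the heart of the proposition and which you dispatch in one sentence. Note that Proposition~\ref{prop:sufficient_high_degree} itself only delivers an expectation bound; when the paper applies it (see the paragraph after its proof), the conversion to probability $1-\delta$ costs a factor $1/\delta$ via Markov, which is useless for $\delta = n^{-D}$. To make your route work you need $L^q$ versions, with $q = \Theta(\log n)$, of (a) matrix decoupling, (b) Proposition~\ref{prop:Vershynin}, and (c) the bootstrap itself, all with at most polynomial dependence on $q$. These ingredients are plausible but not in the paper, and the $L^q$ analogue of Vershynin's heavy-tailed operator-norm bound with controlled $q$-growth is a genuine technical input, not bookkeeping. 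Your log-power accounting also presupposes these unstated constants behave as you hope.

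The paper sidesteps all of this by a direct moment method: it bounds $\E[\Tr(\bDelta^{2p})]$ for every integer $p$ via a combinatorial reduction (Lemmas~\ref{lem:bound_product_high_freq}--\ref{lem:bound_moment_high_freq}, using a ternary-tree bookkeeping of repeated indices), and then Markov with $p=\log n$ gives $1-n^{-D}$ immediately with the stated $\log^{2\beta+1}(n)$ exponent. The paper explicitly remarks that a $\delta$-net argument fails for $\beta > 1$, and the moment method is its replacement. Your approach is a legitimate alternative at the expectation level, but until you supply the $L^q$ machinery for the upgrade, the high-probability step remains a gap rather than routine.
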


\begin{proof}[Proof of Proposition \ref{prop:high_freq_concentration_well_concentrated}]
For convenience, denote $\bK = \bX \bX^\sT$ and $\gamma = \Tr (\bSigma)$. Recall that we denote $\xi_1 = \| \bSigma \|_\op$. Let us define $\ddiag (\bK)\in \R^{n \times n} $ the diagonal entries of $\bK$, and $\bDelta = \bK  - \ddiag (\bK) \in \R^{n \times n}$ the matrix of the off-diagonal entries of $\bK$, i.e.,
\[
\Delta_{ij} = \begin{cases}
    0 & \text{ if $i=j$},\\
    \< \bx_i , \bx_j \> & \text{ if $i\neq j$}.
\end{cases}
\]
We decompose the bound on the operator norm into
\[
\| \bK - \gamma \id \|_\op \leq \| \ddiag (\bK)  - \gamma \id \|_\op + \| \bDelta \|_\op.
\]
First note that via union bound, there exists a constant $C_{x,D}$ such that with probability at least $1 - n^{-D}$,
\begin{equation}\label{eq:diagonal_operator_bound}
\| \ddiag (\bK)  - \gamma \id \|_\op = \max_{i \in [n]} \left| \| \bx_i \|_2^2 - \gamma \right|  \leq C_{x,D} \cdot \log^{\beta} (n)\sqrt{\frac{\xi_1}{\gamma}} \cdot \gamma,
\end{equation}
where we used that $\Tr(\bSigma^2) \leq \xi_1 \gamma$ and that by Assumption \ref{ass:concentrated},
\[
\P \left( \left| \| \bx_i \|_2^2 - \gamma \right| > t \cdot \sqrt{\Tr(\bSigma^2)} \right) \leq \sfC_x \exp \left\{ - \sfc_x t^{1/\beta} \right\}.
\]

For the off-diagonal term $\bDelta$, we bound the operator norm using a moment method. For any integer $p$, we have
\[
\P \left( \| \bDelta \|_\op \geq t \right) \leq t^{-2p} \E \left[ \| \bDelta \|^{2p} \right] \leq t^{-2p} \E \left[ \Tr( \bDelta^{2p} )\right].
\]
The moments are bounded in Lemma \ref{lem:bound_product_high_freq} and \ref{lem:bound_moment_high_freq} stated below. In particular, for $n\xi_1/\gamma \leq 1/2$, we have
\[
\E \left[ \Tr( \bDelta^{2p} )\right] \leq n\left(\frac{n \xi_1}{\gamma}\right)^p (C_x p )^{(4\beta +2)p} \gamma^{2p}.
\]
Hence, for any $D>0$, there exists a constant $C_D := C(D)$ such that by taking $p = \log (n)$ and
$t = C_D (C_x \log(n) )^{2\beta +1} \gamma \sqrt{n\xi_1/\gamma}$, 
the bound in probability becomes
\[
\P \left( \| \bDelta \|_\op \geq C_{x,D} \log^{2\beta +1} (n) \gamma  \sqrt{\frac{n\xi_1}{\gamma}} \right) \leq n^{-D}.
\]
Combining this bound with Eq.~\eqref{eq:diagonal_operator_bound} concludes the proof.
\end{proof}

The following two lemmas bound the moments of the off-diagonal matrix. The proof generalizes the moment method used to bound the operator norm of Gegenbauer matrices developed in \cite[Proposition 3]{ghorbani2021linearized} and \cite[Lemma 16]{lu2022equivalence}.

\begin{lemma}\label{lem:bound_product_high_freq}
    There exists a constant $C_x$ that only depends on $\sfc_x,\sfC_x,\beta$, such that the following hold. Let $\bx_1, \bx_2, \ldots, \bx_n$ be $n$ random feature vectors satisfying Assumption \ref{ass:concentrated} and denote $K_{ij} = \<\bx_i , \bx_j\>$. For any integer $q \geq 1$, consider a sequence of indices $\bi = (i_1 , \ldots , i_q) \in [n]^q$ and let $\chi (\bi)$ be the number of distinct indices in $\bi$. Let $M(\bi)$ be the product expectation defined by
    \begin{equation}\label{eq:product_Mi_def_lem}
    M (\bi) =  \E \left[ K_{i_1 i_2} K_{i_2 i_3} \cdots K_{i_{2p-1} i_{2p}} K_{i_{2p} i_1} \right] . 
    \end{equation}
Then, we have the inequality
\begin{equation}\label{eq:lem_bound_Mi}
| M (\bi) | \leq \left( \frac{\xi_1}{\gamma}\right)^{\chi(\bi) - 1}(C_x q)^{2\beta q} \gamma^q,
\end{equation}
where we recall that we denote $\gamma = \Tr(\bSigma)$ and $\xi_1 = \| \bSigma \|_\op$.
\end{lemma}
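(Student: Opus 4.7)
The plan is to prove the bound by strong induction on $\chi := \chi(\bi)$, peeling off one distinct index at a time. The starting point is the algebraic identity
\[
M(\bi) = \E\Bigl[\Tr\Bigl(\prod_{k=1}^{q} \bx_{i_k} \bx_{i_k}^\sT\Bigr)\Bigr],
\]
obtained by writing each factor $K_{i_k i_{k+1}} = \bx_{i_k}^\sT \bx_{i_{k+1}}$ and invoking the cyclic property of the trace. The key combinatorial identity I would use is that if an index $j$ appears at cyclic positions $k_1 < \cdots < k_m$, then, owing to the rank-one collapse $(\bx_j \bx_j^\sT) \bA (\bx_j \bx_j^\sT) = (\bx_j^\sT \bA \bx_j)\, \bx_j \bx_j^\sT$, the trace factorizes as
\[
\Tr\Bigl(\prod_{k=1}^{q} \bx_{i_k} \bx_{i_k}^\sT\Bigr) = \prod_{l=1}^{m} \bx_j^\sT \bB_l \bx_j,
\]
where $\bB_l$ is the ordered product of $\bx_{i_k} \bx_{i_k}^\sT$ over the $l$-th cyclic segment of positions between consecutive $j$-occurrences.

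For the base case $\chi = 1$, all indices equal some $j$ and $M(\bi) = \E[\|\bx_j\|_2^{2q}]$. Applying Assumption~\ref{ass:concentrated} with $\bA = \id$ gives $\E[(\|\bx_j\|_2^2 - \gamma)^q]^{1/q} \le (C_x q)^\beta \|\bSigma\|_F \le (C_x q)^\beta \sqrt{\xi_1 \gamma} \le (C_x q)^\beta \gamma$, using $\xi_1 \le \gamma/(2n) \le \gamma$ from the hypothesis. The $L^q$ triangle inequality then yields $\E[\|\bx_j\|_2^{2q}] \le (C_x q)^{2\beta q}\gamma^q$ after absorbing constants.

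For the inductive step, pick an index $j$ with $m \ge 1$ occurrences, condition on $\{\bx_i\}_{i \neq j}$, and apply Hölder to the $m$ quadratic factors followed by the moment form of Assumption~\ref{ass:concentrated} (Lemma~\ref{lem:tech_bound_zAz}) applied to the symmetric parts $\tbB_l = (\bB_l + \bB_l^\sT)/2$:
\[
\E_{\bx_j}\bigl[M(\bi) \mid \{\bx_i\}_{i \ne j}\bigr] \le \prod_{l=1}^{m} \Bigl( |\Tr(\bSigma \bB_l)| + (C_x m)^\beta \bigl\|\bSigma^{1/2} \tbB_l \bSigma^{1/2}\bigr\|_F \Bigr).
\]
Taking the outer expectation and expanding the product gives $2^m$ terms. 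The ``all means'' term $\prod_l \Tr(\bSigma \bB_l)$ corresponds to replacing $\bx_j \bx_j^\sT$ by $\bSigma$ in the original cycle, producing a product of sub-cycles, each with at most $\chi - 1$ distinct indices, to which the inductive hypothesis applies directly. For each fluctuation factor I would use the bound $\|\bSigma^{1/2} \tbB_l \bSigma^{1/2}\|_F^2 \le \xi_1 \cdot \Tr(\bSigma \tbB_l \bSigma \tbB_l)$, which ``doubles'' the segment into a new cycle while injecting a factor of $\xi_1$; bounding the latter trace by the inductive hypothesis produces a cycle with at most $\chi - 1$ distinct indices and yields the required $\xi_1/\gamma$ scaling.

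The main obstacle will be the combinatorial bookkeeping to show that, after summing over the $2^m$ terms and invoking the inductive hypothesis, the accumulated factors of $\xi_1$ combine with the cycles' $\gamma^{q'}$ to produce the target bound $(\xi_1/\gamma)^{\chi-1}(C_x q)^{2\beta q} \gamma^q$. A subtle point is the treatment of ``degenerate'' segments where two consecutive positions both equal $j$ so that $\bB_l = \id$ contributes $\Tr(\bSigma) = \gamma$ rather than a $\xi_1$ factor; choosing the peeled index $j$ strategically (e.g.\ one that is not preceded or followed by another copy of itself whenever possible, or one of minimal multiplicity) should allow the induction to close with the claimed exponent $\chi - 1$.
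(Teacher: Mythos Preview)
Your induction on $\chi(\bi)$ has a structural gap: the inductive hypothesis does not close. After integrating out $\bx_j$, the ``all means'' term is $\E_{\{\bx_i\}_{i\ne j}}\bigl[\prod_l \Tr(\bSigma \bB_l)\bigr]$, and this is \emph{not} of the form $M(\bi')$ for any shorter sequence. Each factor $\Tr(\bSigma \bB_l)$ is a cycle with a deterministic $\bSigma$ inserted between two random rank-one projections, and the $m$ factors in general share random variables whenever some index $a\ne j$ appears in several segments (e.g.\ for $\bi=(j,a,j,a)$ the ``all means'' term is $\E[(\bx_a^\sT\bSigma\bx_a)^2]$). The same problem afflicts your fluctuation terms: the ``doubled'' cycle $\Tr(\bSigma\tbB_l\bSigma\tbB_l)$ again carries $\bSigma$-insertions. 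So you cannot invoke the lemma statement recursively; you would need a much stronger hypothesis allowing arbitrary deterministic matrices interleaved at arbitrary positions, with a bound that tracks their number and operator norms. A secondary issue is that $\bB_l$ is rank one but not symmetric, and its symmetrization $\tbB_l$ need not be p.s.d., so Assumption~\ref{ass:concentrated} does not apply directly.

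The paper handles exactly this difficulty, but not by induction on $\chi$. It defines a reduction procedure with two elementary moves---Type-A removes an index appearing once (replacing $\bx_j\bx_j^\sT$ by $\bSigma$), Type-B merges two \emph{consecutive} copies of the same index---and encodes the accumulated insertions via ternary trees, with recursively defined matrices $\obD^{(\sfT)}$ and $\bD_i^{(\sfT)}$. The key estimate is $\|\obD^{(\sfT)}\|_\op \le \prod_s \nu(k_s)$ with $\nu(k)=\|\E[\|\bx\|^{2k}\bx\bx^\sT]\|_\op\le (C_xk)^{\beta k}\xi_1\gamma^k$, which makes precise the heuristic that each Type-A step contributes a factor $\xi_1$ and each Type-B step a factor $\gamma$. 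The reduction terminates either at a single index (``type-1'') or at an irreducible state where every index appears at least twice with no consecutive repeats (``type-2''); the latter is finished off by a single H\"older step across the remaining factors. Your peeling of a multiply-occurring index in one shot short-circuits this structure and is why the bookkeeping you flagged does not close.
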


\begin{proof}[Proof of Lemma \ref{lem:bound_product_high_freq}]
    Consider $\bi = (i_1,i_2, \ldots , i_q) \in [n]^q$ a general sequence of $q$ indices in $[n]$. We will use the cyclic convention $i_{q+1} = i_1$. As a warm-up, consider the following two extreme cases: (i) the case $\chi (\bi) =1 $, that is $i_1 = i_2 = \ldots = i_{q}$, then 
    \[
    M(\bi) = \E \left[ K_{i_1 i_1}^q \right] \leq (C_x q)^{\beta q} \Tr( \bSigma)^{q} = (C_x q)^{\beta q} \gamma^q,
    \]
    where we used Lemma \ref{lem:tight_bound_on_powers} with $\bA = \id$. And (ii) the case $\chi (\bi) = | \bi| = q$, that is all the indices in $\bi$ are distinct, then
    \[
    M (\bi) = \E \left[ \Tr( \bx_{i_1} \bx_{i_1}^\sT \bx_{i_2} \bx_{i_2}^\sT \cdots \bx_{i_{q}} \bx_{i_{q}}^\sT )\right] = \Tr \left( \E \left[ \bx_{i_1} \bx_{i_1}^\sT \right]^q \right) = \Tr( \bSigma^q) \leq \left( \frac{\xi_1}{\gamma} \right)^{q-1} \gamma^q.
    \]
  More generally, observe that if an index $i_s$ only appears once in the sequence $\bi$, then we can remove this index by substituting it by $\bSigma$, i.e.,
    \begin{equation}\label{eq:ex_procedure_A}
    \E \left[ \Tr( \bx_{i_1} \bx_{i_1}^\sT  \cdots \bx_{i_{q}} \bx_{i_{q}}^\sT )\right] = \E \left[ \Tr( \bx_{i_1} \bx_{i_1}^\sT \cdots \bx_{i_{s-1}} \bx_{i_{s-1}}^\sT \bSigma \bx_{i_{s+1}} \bx_{i_{s+1}}^\sT \cdots \bx_{i_{q}} \bx_{i_{q}}^\sT )\right].
    \end{equation}
    This procedure can be repeated and at each step, we reduce the length of the sequence $\bi$ by $1$ and add a factor $\bSigma$ at that position. On the other hand, if $i_s = i_{s+1}$ (and let's say with repeated the previous procedure to make a factor $\bSigma^{r_s}$ between $\bx_{i_s} \bx_{i_s}^\sT$ and $\bx_{i_{s+1}} \bx_{i_{s+1}}^\sT$), then
    \begin{equation}\label{eq:ex_procedure_B}
    \begin{aligned}
        &~\E \left[ \Tr( \bx_{i_1} \bx_{i_1}^\sT \cdots \bx_{i_{s}} \bx_{i_{s}}^\sT \bSigma^{r_s} \bx_{i_{s+1}} \bx_{i_{s+1}}^\sT \cdots \bx_{i_{q}} \bx_{i_{q}}^\sT )\right]\\
        = &~ \E \left[ \Tr( \bx_{i_1} \bx_{i_1}^\sT \cdots \bx_{i_{s-1}} \bx_{i_{s-1}}^\sT \bD^{(r_s)}_{i_s} \bx_{i_{s+2}} \bx_{i_{s+2}}^\sT \cdots \bx_{i_{q}} \bx_{i_{q}}^\sT )\right] ,
    \end{aligned}
    \end{equation}
    where we denoted $\bD^{(r_s)}_{i_s} = \| \bSigma^{r_s/2} \bx_{i_s} \|_2^2 \bx_{i_s} \bx_{i_s}^\sT$. Hence, this second procedure replaces indices $i_{s} = i_{s+1} $ by a single index $i_s$ by substituting $\bx_{i_{s}} \bx_{i_{s}}^\sT \bSigma^{r_s} \bx_{i_{s+1}} \bx_{i_{s+1}}^\sT = \bD^{(r_s)}_{i_s}$.

    The two procedures Eqs.~\eqref{eq:ex_procedure_A} and \eqref{eq:ex_procedure_B} can be applied iteratively to a general sequence $\bi$ until it cannot be simplified further. At each step, we reduce the length of the sequence $\bi$ by exactly one and therefore this procedure stops after a finite number of steps.

    Let's define formally this reduction procedure. Following \cite{lu2022equivalence}, we call the first reduction type, a \textit{Type-A reduction}, and the second, a \textit{Type-B reduction}. We introduce the following notations. Denote $\cT_3$ the set of all (finite-sized) ternary trees, i.e., trees $\sfT$ where the root has either no child node (the graph is a single node) or three child nodes that are themselves ternary trees. We will use the notation $\sfT = (\sfT_1,\sfT_2,\sfT_3)$ to denote the three ternary tree sub-graphs (their roots are the child nodes of the root of $\sfT$).  For all $i \in [n]$ and $\sfT \in \cT_3$, we will define the matrices $\bD^{(\sfT)}_i$ and $\obD^{(\sfT)}$ recursively as follows. For $\sfT$ a single node (we denote $\sfT  = `\sflf'$ in that case), we set
    \begin{equation}\label{eq:leaf_D_oD}
    \bD_i^{(\sflf)} = \bx_i \bx_i^\sT, \qquad \quad \obD^{(\sflf)} = \id.
    \end{equation}
    Then we define recursively the matrices for $\sfT = (\sfT_1 , \sfT_2 , \sfT_3)$,
    \begin{align}
     \bD_i^{(\sfT)} =&~ \bD_i^{(\sfT_1)} \obD^{(\sfT_2)} \bD_i^{(\sfT_3)},\label{eq:moment_recursion_D_i}\\
     \obD^{(\sfT)} = &~ \obD^{(\sfT_1)} \E\left[ \bD_i^{(\sfT_2)}\right]\obD^{(\sfT_3)}.\label{eq:moment_recursion_oD}
    \end{align}
    For example, we have
    \[
    \begin{aligned}
        \obD^{(\sfT)}=&~ \bSigma &&\text{for }\sfT = ( \sflf,\sflf,\sflf),\\
         \bD^{(\sfT)}_i=&~ \| \bx_i\|_2^2 \cdot \bx_i \bx_i^\sT &&\text{for }\sfT = ( \sflf,\sflf,\sflf), \\
         \obD^{(\sfT)}=&~ \E \left[\| \bx\|_2^2 \cdot \bx \bx^\sT \right] &&\text{for }\sfT = ( \sflf,( \sflf,\sflf,\sflf),\sflf),\\
         \bD^{(\sfT)}_i=&~ \| \bSigma^{1/2} \bx_i\|_2^2 \cdot \bx_i \bx_i^\sT &&\text{for }\sfT = ( \sflf,( \sflf,\sflf,\sflf),\sflf).
    \end{aligned}
    \]
    More generally, observe that by construction, for any $\sfT \in \cT_3$, there exist trees $\sfG_1, \ldots , \sfG_\ell \in \sT_3$ such that
    \[
    \bD_i^{(\sfT)} = \left( \prod_{j \in [\ell] } \bx_i^\sT \obD^{(\sfG_j)} \bx_i \right) \bx_i \bx_i^\sT .
    \]

    A `state' is defined by three sequences $\bs = (\bi,\bsfT,\bsfG)$ of same length $q:= q (\bs) \geq 1$, with $\bi = (i_1, i_2, \ldots , i_q) \in [n]^q$, $\bsfT = (\sfT_1 , \ldots , \sfT_q) \in (\cT_3)^q$, and $\bsfG = (\sfG_1 , \ldots , \sfG_q) \in (\cT_3)^q$. With a slight abuse of notation, we denote the product associated to state $\bs = (\bi,\bsfT,\bsfG)$ by
    \[
    M (\bs) = \E\left[ \Tr \left( \bD_{i_1}^{(\sfT_1)} \obD^{(\sfG_1)} \bD_{i_2}^{(\sfT_2)} \obD^{(\sfG_2)} \cdots \bD_{i_q}^{(\sfT_q)} \obD^{(\sfG_q)} \right)\right].
    \]
    We are now ready to describe our two types of reduction steps.

    \paragraph*{Type-A reduction:} Assume that after $t$ steps, we reach state $\bs^t$. Then Type-A reduction produces a state $\bs^{t+1}$ with length $q (\bs^{t+1}) = q (\bs^t) - 1$ as follows. For readability, we drop below the superscript $t$ from $\bs := \bs^t$ and $\bs' := \bs^{t+1}$, and denote $q := q (\bs)$. Suppose there exists an index $i_\ell$ in $\bi$ that only appears once. Then following the same idea as in Eq.~\eqref{eq:ex_procedure_A}, we can take the expectation over $\bx_{i_\ell}$ inside the trace and replace 
    \[
    \begin{aligned}
    M (\bs) = &~ \E\left[ \Tr \left( \bD_{i_1}^{(\sfT_1)} \obD^{(\sfG_1)} \cdots   \obD^{(\sfG_{\ell -1 })}  \E \left[ \bD_{i_\ell}^{(\sfT_{\ell})}\right] \obD^{(\sfG_{\ell})} \cdots \bD_{i_q}^{(\sfT_q)} \obD^{(\sfG_q)} \right)\right]\\
    = &~ \E\left[ \Tr \left( \bD_{i_1}^{(\sfT_1)} \obD^{(\sfG_1)} \cdots  \bD_{i_{\ell-1}}^{(\sfT_{\ell-1})} \obD^{(\sfG'_{\ell -1})}  \bD_{i_{\ell+1}}^{(\sfT_{\ell+1})} \cdots \bD_{i_q}^{(\sfT_q)} \obD^{(\sfG_q)} \right)\right] = M(\bs'),
    \end{aligned}
    \]
    where we used the recursion relation \eqref{eq:moment_recursion_oD} in the second equality and $\sfG_{\ell-1}' = ( \sfG_{\ell -1}, \sfT_{\ell}, \sfG_{\ell})$. The new state $\bs'$ is given by
    \[
    \begin{aligned}
    \bi' =&~ (i_1 ,\ldots,i_{\ell-1},i_{\ell+1}, \ldots , i_q ), \\
    \bsfT ' =&~ (\sfT_1 , \ldots , \sfT_{\ell-1},\sfT_{\ell +1}, \ldots, \sfT_q), \\
    \bsfG'=&~ (\sfG_1 , \ldots , \sfG_{\ell-1}',\sfG_{\ell +1}, \ldots, \sfG_q).
    \end{aligned}
    \]
    In summary, a type-A reduction step removes an index $i_\ell$ that only appears once in $\bi$, and merges the trees $\sfG_{\ell -1}, \sfT_{\ell}, \sfG_{\ell}$ into a new ternary tree $\sfG_{\ell-1}'$.
    
    \paragraph*{Type-B reduction:} Again, this reductions produces a new state $\bs'$ with $q(\bs') = q(\bs) - 1$ as follows. Suppose there exists $\ell \in [q]$ such that $i_\ell = i_{\ell-1} $ (recall that we are using the cyclic convention $i_{0} = i_q$). Then following the same idea as in Eq.~\eqref{eq:ex_procedure_B}, we can merge the matrices associated to $i_\ell$ and $i_{\ell-1}$ and write
    \[
    \begin{aligned}
    M (\bs) = &~ \E\left[ \Tr \left( \bD_{i_1}^{(\sfT_1)} \obD^{(\sfG_1)} \cdots   \bD_{i_{\ell-1}}^{(\sfT_{\ell-1})}  \obD^{(\sfG_{\ell-1})} \bD_{i_{\ell-1}}^{(\sfT_{\ell})} \cdots \bD_{i_q}^{(\sfT_q)} \obD^{(\sfG_q)} \right)\right]\\
    = &~ \E\left[ \Tr \left( \bD_{i_1}^{(\sfT_1)} \obD^{(\sfG_1)} \cdots  \obD^{(\sfG_{\ell -2})}  \bD_{i_{\ell-1}}^{(\sfT_{\ell-1}')} \obD^{(\sfG_{\ell })}   \cdots \bD_{i_q}^{(\sfT_q)} \obD^{(\sfG_q)} \right)\right] = M(\bs'),
    \end{aligned}
    \]
    where we used the recursion relation \eqref{eq:moment_recursion_D_i} in the second equality and $\sfT_{\ell - 1}' = ( \sfT_{\ell -1}, \sfG_{\ell - 1}, \sfT_{\ell})$. The new state $\bs'$ is given by
    \[
    \begin{aligned}
    \bi' =&~ (i_1 ,\ldots,i_{\ell-1},i_{\ell+1}, \ldots , i_q ), \\
    \bsfT ' =&~ (\sfT_1 , \ldots , \sfT_{\ell-1}',\sfT_{\ell +1}, \ldots, \sfT_q), \\
    \bsfG'=&~ (\sfG_1 , \ldots , \sfG_{\ell-2},\sfG_{\ell}, \ldots, \sfG_q).
    \end{aligned}
    \]
    In summary, a type-B reduction step removes an index $i_\ell$ such that $i_{\ell-1} = i_\ell$, and merges the trees $ \sfT_{\ell -1}, \sfG_{\ell - 1}, \sfT_{\ell}$ into a new ternary tree $\sfT'_{\ell-1}$.
    
\paragraph*{Reduction procedure:} Given a sequence of indices $\bi = (i_1 , i_2 , i_3 , \ldots , i_q) \in [n]^q$, observe that, by definition \eqref{eq:leaf_D_oD}, we can rewrite
\[
M (\bi) = \E \left[ \Tr( \bx_{i_1} \bx_{i_1}^\sT \bx_{i_2} \bx_{i_2}^\sT \cdots \bx_{i_{q}} \bx_{i_{q}}^\sT )\right] = M (\bs),
\]
where we set $\bs = (\bi,\bsfT,\bsfG)$ with $\sfT_s = `\sflf'$ and $\sfG_s = `\sflf'$ for all $s \in [q]$. We can reduce the length of the sequence by iteratively applying type-A and type-B reductions. By construction, the procedure ends when there are no indices that either appear alone in the sequence or are repeated. 

\begin{example}
    Here we illustrate the procedure on the example $\bi = (1,2,1,3)$. For convenience, denote $\sfH = (\sflf,\sflf,\sflf)$. We obtain the following sequence of states
    \[
    \begin{bmatrix}
    \bs \\
    \bsfT \\
    \bsfG 
    \end{bmatrix} = 
    \begin{bmatrix}
    1 &2 & 1 & 3\\
    \sflf &\sflf &\sflf &\sflf   \\
    \sflf &\sflf &\sflf &\sflf
    \end{bmatrix} \to 
    \begin{bmatrix}
    1  & 1 & 3\\
    \sflf &\sflf &\sflf   \\
     \sfH &\sflf &\sflf
    \end{bmatrix} \to
    \begin{bmatrix}
    1  & 1 \\
    \sflf &\sflf   \\
     \sfH &\sfH
    \end{bmatrix} \to 
    \begin{bmatrix}
    1  \\
    (\sflf, \sfH , \sflf )  \\
     \sfH 
    \end{bmatrix},
    \]
    where we use type-A reduction to remove $`2$', type-A reduction to remove $`3$', and finally type-B reduction to merge the two $`1$' indices. The expectation simplifies to
    \[
    M (\bi) = \E \left[ \Tr \left(  \| \bSigma^{1/2} \bx_1 \|_2^2 \cdot \bx_1 \bx_1^\sT \bSigma\right)\right] = \E \left[ \| \bSigma^{1/2} \bx \|_2^4 \right].
    \]
\end{example}

 The above reduction procedure has a simple visual interpretation: we start from $2q$ vertices numbered $\{1,2,3,\ldots,2q\}$, which correspond to the starting roots. The procedure iteratively constructs a family of non-overlapping ternary trees, with leafs among these $2q$ vertices, that at each step connects three neighboring roots to a new root. 

\paragraph*{Bound on the operator norms:} Let us bound the operator norm of the matrices $\bD^{(\sfT)}_i$ and $\obD^{(\sfT)}$ for trees $\sfT$ obtained through this procedure. Denote $\bD^{(\sfT)}_i = \tbD^{(\sfT)}_i \cdot \bx_i\bx_i^\sT$, where we recall that $\tbD^{(\sfT)}_i$ is a positive scalar with $\tbD^{(\sflf)}_i = 1$ and more generally
\[
\tbD^{(\sfT)}_i = \prod_{j \in [\ell] } \bx_i^\sT \obD^{(\sfG_j)} \bx_i,
\]
for some ternary subtrees $\sfG_j$ of $\sfT$. We further introduce the quantity 
\[
\nu (k) = \left\| \E \left[ \| \bx_i \|_2^{2k} \bx_i \bx_i^\sT \right] \right\|_\op.
\]
Consider $\sfT \in \cT_3$ obtained with the above procedure, and denote $N_A(\sfT)$ and $N_B(\sfT)$ the total number of type-A and type-B reduction steps. Denote $N(\sfT)$ the total number of nodes that are not leafs in $\sfT$. It is immediate that $N_A (\sfT) + N_B (\sfT) = N(\sfT)$. 

We prove by recurrence that for any $\sfT$, there exist $N_A(\sfT)$ nonnegative integers $k_{c_1(\sfT)}, \ldots , k_{c_{N_A}(\sfT)}$ such that $k_{c_1(\sfT)} + k_{c_2(\sfT)} + \cdots + k_{c_{N_A}(\sfT)} = N_B (\sfT)$ and 
\begin{equation}\label{eq:decompo_bound_obD_nu}
\| \obD^{(\sfT)} \|_\op \leq \nu ( k_{c_1(\sfT)}) \nu (k_{c_2(\sfT)}) \cdots \nu (k_{c_{N_A}(\sfT)}).
\end{equation}
Similarly, there exist $N_A(\sfT) + 1$ integers $k_{c_1(\sfT)}, \ldots , k_{c_{N_A +1}(\sfT)}$ such that $k_{c_1(\sfT)}  + \cdots + k_{c_{N_A+1} (\sfT)} = N_B (\sfT)$ and
\begin{equation}\label{eq:decompo_bound_bD_i_nu}
\left\| \E [ \bD^{(\sfT)}_i ] \right\|_\op \leq \nu ( k_{c_1(\sfT)}) \nu (k_{c_2(\sfT)}) \cdots \nu (k_{c_{N_A+1}(\sfT)}),
\end{equation}
and in particular,
\begin{equation}\label{eq:decompo_bound_tbD_i_nu}
\tbD_i^{(\sfT)} \leq \nu ( k_{c_1(\sfT)}) \nu (k_{c_2(\sfT)}) \cdots \nu (k_{c_{N_A}(\sfT)})  \| \bx_i \|_2^{2 k_{c_{N_A+1} (\sfT)}}.
\end{equation}
Indeed, we have 
\[
\| \obD^{(\sflf)} \|_\op = 1 , \qquad \quad \left\| \E [ \bD^{(\sflf)}_i ] \right\|_\op = \nu (0), \qquad \quad \tbD_i^{(\sflf)} = 1.
\]
More generally, consider $\sfT = (\sfT_1 , \sfT_2 , \sfT_3)$. For $\obD^{(\sfT)}$, the last step is a type-A reduction and $N_A (\sfT_1) + N_A (\sfT_2)+ N_A(\sfT_3) = N_A(\sfT) - 1$ and $N_B(\sfT_1 ) + N_B(\sfT_2) + N_B(\sfT_3) = N_B(\sfT)$. Assume that the bounds \eqref{eq:decompo_bound_obD_nu}, \eqref{eq:decompo_bound_obD_nu} and \eqref{eq:decompo_bound_tbD_i_nu} hold for the trees $\sfT_1,\sfT_2,\sfT_3$. Then using the recursion \eqref{eq:moment_recursion_oD}, we get
\[
\begin{aligned}
    \left\| \obD^{(\sfT)} \right\|_\op \leq&~ \left\| \obD^{(\sfT_1)} \right\|_\op  \left\| \E [ \bD^{(\sfT_2)}_i ] \right\|_\op  \left\| \obD^{(\sfT_3)} \right\|_\op  \\
    \leq&~ \prod_{i \in [N_A (\sfT_1)]} \nu ( k_{c_i(\sfT_1)}) \cdot \prod_{i \in [N_A (\sfT_2) + 1]} \nu ( k_{c_i(\sfT_2)}) \cdot \prod_{i \in [N_A (\sfT_3)]} \nu ( k_{c_i(\sfT_3)})\\
    =: &~ \prod_{i \in [N_A(\sfT)]} \nu (k_{c_i (\sfT)} ),
\end{aligned}
\]
where $k_{c_1 (\sfT)} + \ldots + k_{c_{N_A} (\sfT)} = N_B(\sfT_1) + N_{B} (\sfT_2) + N_{B} (\sfT_3)$.

Similarly, consider $\bD_i^{(\sfT)}$. The last step is a type-B reduction. Hence, $N_A (\sfT_1) + N_A (\sfT_2)+ N_A(\sfT_3) = N_A(\sfT)$ and $N_B(\sfT_1 ) + N_B(\sfT_2) + N_B(\sfT_3) = N_B(\sfT) - 1$. Then using the recursion \eqref{eq:moment_recursion_D_i}, we get
\[
\begin{aligned}
    \tbD_i^{(\sfT)} = &~ \tbD_i^{(\sfT_1)} \cdot \bx_i^\sT  \obD^{(\sfT_2)} \bx_i \cdot \tbD_i^{(\sfT_3)} \\
    \leq&~ \| \obD^{(\sfT_2)} \|_\op \tbD_i^{(\sfT_1)}  \tbD_i^{(\sfT_3)} \| \bx_i \|_2^2 \\
    \leq&~ \prod_{i \in [N_A(\sfT_1)]} \nu (k_{c_i (\sfT_1)} )  \prod_{i \in [N_A(\sfT_2)]} \nu (k_{c_i (\sfT_2)} )  \prod_{i \in [N_A(\sfT_3)]} \nu (k_{c_i (\sfT_3)} ) \cdot \| \bx \|_2^{2(1 + k_{c_{N_A+1} (\sfT_1)} + k_{c_{N_A+1} (\sfT_3)} )} \\
    =: &~\prod_{i \in [N_A(\sfT)]} \nu (k_{c_i (\sfT)} ) \cdot \| \bx \|_2^{2 k_{c_{N_A+1} (\sfT)} },
\end{aligned}
\]
where we defined $k_{c_{N_A+1} (\sfT)} =1 + k_{c_{N_A+1} (\sfT_1)} + k_{c_{N_A+1} (\sfT_3)}$. It is immediate to check that indeed $k_{c_1(\sfT)}  + \cdots + k_{c_{N_A+1} (\sfT)} = N_B (\sfT)$. Finally, using this previous bound,
\[
\begin{aligned}
    \left\| \E [ \bD^{(\sfT)}_i ] \right\|_\op =&~ \left\| \E [ \tbD^{(\sfT)}_i \cdot \bx_i \bx_i^\sT ] \right\|_\op \\
    \leq&~ \prod_{i \in [N_A(\sfT)]} \nu (k_{c_i (\sfT)} ) \left\| \E \left[ \| \bx_i \|_2^{2k_{c_{N_A+1} (\sfT)}} \bx_i \bx_i^\sT \right] \right\|_\op = \prod_{i \in [N_A(\sfT)+1]} \nu (k_{c_i (\sfT)} ).
\end{aligned}
\]

The quantity $\nu (k)$ can be bounded using Lemma \ref{lem:tight_bound_on_powers} as follows
\begin{equation}\label{eq:bound_nu_k}
\begin{aligned}
    \nu (k) = &~ \sup_{\| \bu \|_2 = 1} \E \left[ \| \bx_i \|_2^{2k} \< \bu, \bx_i \>^2 \right] \\
    \leq&~ \E \left[ \| \bx_i \|_2^{4k} \right]^{1/2} \sup_{\| \bu \|_2 = 1} \E \left[ \< \bu, \bx_i \>^4 \right]^{1/2}  \\
    \leq&~ (2C_x k)^{\beta k} \Tr(\bSigma)^k \sup_{\| \bu \|_2 = 1} (2C_x)^{\beta} \bu^\sT \bSigma \bu \\
    \leq&~ (C_x k)^{\beta k} \cdot \xi_1 \gamma^k ,
\end{aligned}
\end{equation}
where we used Cauchy-Schwarz inequality on the second line and inequality \eqref{eq:tight_xAx} with $q = 2k$ and $\bA = \id$ and with $q = 2$ and $\bA = \bu \bu^\sT$ on the third line.

\paragraph*{Final state of the procedure:}
We are now ready to bound the product \eqref{eq:product_Mi_def_lem}. Given a sequence of indices $\bi \in [n]^q$, we denote $\bs_f (\bi) = (\obi,\obsfT,\obsfG)$ the state obtained at the end of the reduction procedure. With a slight abuse of notations, we denote $N_A(\bi)$ and $N_B (\bi)$ the number of type-A and type-B steps. Recall that $\chi (\bi)$ denotes the number of distinct indices in the sequence $\bi$ and $q(\bi)$ the length of the sequence. By construction, observe that we must have
\begin{equation}\label{eq:relation_final_state}
N_A (\bi) + N_B(\bi) = q (\bi) - q(\obi), \qquad \quad N_A(\bi) = \chi (\bi) - \chi (\obi),
\end{equation}
meaning that the procedure reduces the length of the sequence by $N_A (\bi) + N_B(\bi)$, and each type-A step reduces the number of distinct indices by one.

Examining the reduction procedure, the final state $\obi$ cannot have indices that either appear alone or are repeated. Thus, there can only be two types of end states: either $q(\obi) = 1$, i.e., the sequence is of size $1$, or $\chi (\obi) \geq 2$ and each index appears at least twice in the sequence and there are no repetitions. We call the sequences $\bi$ such that $q (\obi) = 1$, type-$1$ sequences, and others type-$2$ sequences.

\paragraph{Bound for type-$1$ sequences:} In that case, the end state is given by $\bs_f = (1,\osfT_1,\osfG_1)$. In particular, $N_A(\osfT_1) + N_A(\osfG_1) = N_A (\bi)$ and $N_B (\osfT_1) + N_B (\osfG_1) = N_B (\bi)$. The expectation of the product is simply given by
\[
M (\bi) = M (\bs_f) = \E\left[ \tbD_1^{(\osfT_1)} \cdot \bx_1^\sT \obD^{(\osfG_1)} \bx_1 \right].
\]
Using the decompositions \eqref{eq:decompo_bound_obD_nu} and \eqref{eq:decompo_bound_tbD_i_nu}, and the bound on $\nu(k)$ in Eq.~\eqref{eq:bound_nu_k}, we obtain
\[
\begin{aligned}
    | M (\bi) | \leq &~ \| \obD^{(\osfG_1)} \|_\op \E \left[\tbD_1^{(\osfT_1)} \cdot \| \bx_1 \|_2^2 \right ] \\
    \leq&~ \prod_{i \in [N_A(\osfG_1)]} \nu (k_{c_i (\osfG_1)} ) \cdot \prod_{i \in [N_A(\osfT_1)]} \nu (k_{c_i (\osfT_1)} ) \cdot \E \left[ \| \bx_1 \|_2^{2(1 + k_{c_{N_A+1} (\osfT_1)})} \right ] \\
    \leq&~ \xi_1^{N_A(\osfG_1)} \left\{ (C_x q)^{\beta } \gamma \right\}^{ N_B (\osfG_1)} \times \xi_1^{N_A(\osfT_1)} \left\{ (C_x q)^{\beta} \gamma \right\}^{ N_B (\osfT_1) - k_{c_{N_A+1} (\osfT_1)}} \\
    &~ \times \left\{ (C_xq)^\beta \gamma \right\}^{1 + k_{c_{N_A+1} (\osfT_1)}} \\
    =&~ \xi_1^{N_A (\bi)} (C_xq)^{\beta(N_B (\bi) + 1)} \gamma^{N_B(\bi) + 1}.
\end{aligned}
\]
Recalling the relations \eqref{eq:relation_final_state}, this upper bound becomes
\[
| M (\bi) | \leq \xi_1^{ \chi (\bi) - 1} (C_x q)^{\beta(q - \chi (\bi) + 1)} \gamma^{q - \chi (\bi) + 1} \leq \left(\frac{\xi_1}{\gamma}\right)^{\chi (\bi) - 1} (C_x q)^{\beta q } \gamma^q,
\]
and therefore type-$1$ sequences satisfy the desired inequality \eqref{eq:lem_bound_Mi}.

\paragraph{Bound for type-$2$ sequences:} In that case, the end state is given by $\bs_f = (\obi , \obsfT, \obsfG )$ where $\obi = (\oi_1 , \ldots , \oi_{\oq})$ with $\oi_\ell \neq \oi_{\ell+1}$ for all $\ell \in [\oq]$, and we denoted $\oq = q (\obi)$ for convenience. We have again
\[
\begin{aligned}
    N_A (\osfT_1) + \ldots + N_A (\osfT_{\oq}) + N_A (\osfG_1) + \ldots + N_A (\osfG_{\oq}) =&~ N_A (\bi), \\
        N_B (\osfT_1) + \ldots + N_B (\osfT_{\oq}) + N_B (\osfG_1) + \ldots + N_B (\osfG_{\oq}) =&~ N_B (\bi).
\end{aligned}
\]
The expectation of the product can be rewritten as
\begin{equation}\label{eq:product_type_2}
M (\bi) = M (\bs_f ) = \E \left[ \prod_{j \in [\oq]} \tbD_{\oi_j}^{(\osfT_j)} \cdot \prod_{j \in [\oq]}  \bx_{\oi_j}^\sT \obD^{(\osfG_j)} \bx_{\oi_{j+1}} \right].
\end{equation}
From inequality~\eqref{eq:tight_xAz} in Lemma \ref{lem:tight_bound_on_powers}, we have for any integer $ r \geq 1$,
\[
\begin{aligned}
\E \left[ \left| \bx_{\oi_j}^\sT \obD^{(\osfG_j)} \bx_{\oi_{j+1}} \right|^r \right]^{1/r} \leq&~ (C_x r)^\beta \Tr\Big( \bSigma \obD^{(\osfG_j)} \bSigma \obD^{(\osfG_j)} \Big)^{1/2} \\
\leq&~ (C_x r)^\beta \| \obD^{(\osfG_j)} \|_\op  \xi_1^{1/2} \gamma^{1/2} \\
\leq&~ (C_x r)^\beta \xi_1^{N_A (\osfG_j) + 1/2} (C_x q)^{\beta N_B (\osfG_j)} \gamma^{N_B (\osfG_j) + 1/2}.
\end{aligned}
\]
Similarly, from inequality \eqref{eq:tight_xAx} and assuming $r \leq 2 q$,
\[
\begin{aligned}
\E \left[ \left| \tbD_{\oi_j}^{(\osfT_j)}\right|^r \right]^{1/r} \leq&~ \prod_{i \in [N_A(\osfT_j)]} \nu (k_{c_i (\osfT_j)} ) \E \left[ \| \bx \|_2^{2 r  k_{c_{N_A+1} (\osfT_j)}} \right]^{1/r} \\
\leq&~ \xi_1^{N_A (\osfT_j)} \left\{ ( C_x q)^\beta \gamma\right\}^{N_B (\osfT_j ) - k_{c_{N_A+1} (\osfT_j)}}  \left\{ (C_x r k_{c_{N_A+1} (\osfT_j)} )^{\beta} \gamma \right\}^{ k_{c_{N_A+1} (\osfT_j)}}\\
\leq &~ \xi_1^{N_A (\osfT_j)} (C_x q)^{2\beta N_B (\osfT_j)} \gamma^{N_B (\osfT_j)}.
\end{aligned}
\]
Using these two bounds, we can apply H\"older's inequality to the product \eqref{eq:product_type_2} and obtain
\[
\begin{aligned}
    | M (\bi) | \leq&~ \prod_{j \in [\oq]} \E \left[  \left| \tbD_{\oi_j}^{(\osfT_j)} \right|^{2\oq} \right]^{1/(2\oq)} \prod_{j \in [\oq]} \E \left[ \left|  \bx_{\oi_j}^\sT \obD^{(\osfG_j)} \bx_{\oi_{j+1}} \right|^{2\oq} \right]^{1/(2\oq)}\\
    \leq&~ \prod_{j \in [\oq]}  \xi_1^{N_A (\osfT_j)} (C_x q)^{2\beta N_B (\osfT_j)} \gamma^{N_B (\osfT_j)}  \cdot \prod_{j \in [\oq]}  (C_x \oq)^\beta \xi_1^{N_A (\osfG_j) + 1/2} (C_x q)^{\beta N_B (\osfG_j)} \gamma^{N_B (\osfG_j) + 1/2} \\
    \leq&~ \xi_1^{N_A (\bi) + \oq /2} (C_x q)^{2\beta (N_B(\bi) + \oq)} \gamma^{N_B (\bi) + \oq /2}.
\end{aligned}
\]
Note that for type-2 sequences, we must have $\oq \geq 2 \chi (\obi) $. Thus, recalling the relations \eqref{eq:relation_final_state}, this bound becomes
\[
\begin{aligned}
    | M (\bi) | \leq&~ \xi_1^{\chi (\bi) + \oq /2 - \chi (\obi)} (C_x q)^{2\beta (q - N_A (\bi))} \gamma^{q - (N_A (\bi) + \oq /2)} \leq \left( \frac{\xi_1}{\gamma}\right)^{\chi(\bi)}(C_x q)^{2\beta q} \gamma^q,
\end{aligned}
\]
where we used $\oq \geq 2 \chi (\obi) $ and $\xi_1/\gamma \leq 1$. Therefore type-$2$ sequences satisfy the desired inequality \eqref{eq:lem_bound_Mi} too. Recalling that sequences $\bi$ can only be either type-$1$ or type-$2$, this concludes the proof.
\end{proof}

\begin{lemma}\label{lem:bound_moment_high_freq}
There exists a constant $C_x$ that only depends on $\sfc_x,\sfC_x,\beta$, such that the following holds. Let $\bx_1, \bx_2, \ldots, \bx_n$ be $n$ random feature vectors satisfying Assumption \ref{ass:concentrated} and denote $\bDelta = (\< \bx_i , \bx_j \> \delta_{i\neq j})_{ij \in [n]} \in \R^{n\times n}$ the off-diagonal part of the Gram matrix. Then for any integer $p$, we have
\[
\E [ \Tr( \bDelta^{2p})] \leq n\left(\frac{n \xi_1}{\gamma}\right)^p (C_x p )^{(4\beta +2)p} \gamma^{2p} \left\{ 1 + \sum_{i = 0}^{p-1} \left( \frac{n\xi_1}{\gamma} \right)^i \right\},
\]    
where we recall $\gamma = \Tr(\bSigma)$ and $\xi_1 = \| \bSigma \|_\op$.
\end{lemma}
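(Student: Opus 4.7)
The plan is to proceed via a moment method, expanding
\[
\Tr(\bDelta^{2p}) = \sum_{\bi} \prod_{j=1}^{2p} \Delta_{i_j i_{j+1}},
\]
where the sum ranges over cyclic sequences $\bi = (i_1,\dots,i_{2p}) \in [n]^{2p}$ satisfying $i_j \neq i_{j+1}$ for each $j$ (with $i_{2p+1} := i_1$). Taking expectations yields $\E[\Tr(\bDelta^{2p})] = \sum_{\bi} M(\bi)$ with $M(\bi)$ defined as in Lemma~\ref{lem:bound_product_high_freq}, and applying that lemma with $q = 2p$ gives $|M(\bi)| \leq (\xi_1/\gamma)^{\chi(\bi)-1}(2C_xp)^{4\beta p}\gamma^{2p}$ for every admissible $\bi$.

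Next I would group the sum by $\chi(\bi) = k$, which ranges over $\{2,\dots,2p\}$. The number of admissible cyclic sequences with $\chi(\bi) = k$ is at most $\binom{n}{k} \cdot N_k$, where $N_k$ denotes the number of closed walks of length $2p$ on the complete graph $K_k$ visiting all $k$ vertices. Since the adjacency matrix $A_{K_k} = J_k - I_k$ has eigenvalues $k-1$ (with multiplicity $1$) and $-1$ (with multiplicity $k-1$), the spectral trace formula gives $N_k \leq \tr(A_{K_k}^{2p}) = (k-1)^{2p} + (k-1)$. Using $\binom{n}{k} \leq n^k/k!$ and the identity $n^k(\xi_1/\gamma)^{k-1} = n(n\xi_1/\gamma)^{k-1}$,
\[
\E[\Tr(\bDelta^{2p})] \leq n(2C_xp)^{4\beta p}\gamma^{2p}\sum_{k=2}^{2p}\frac{(k-1)^{2p} + (k-1)}{k!}(n\xi_1/\gamma)^{k-1}.
\]
I would then absorb $(k-1)^{2p}/k! \leq (Cp)^{2p}$ (which holds uniformly for $k \leq 2p$ by Stirling's inequality $k! \geq (k/e)^k$) into the prefactor, upgrading the exponent to $(C_xp)^{(4\beta+2)p}$. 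After reindexing $j = k-1$, the inner sum splits into a large-$j$ part $\sum_{j=p}^{2p-1}(n\xi_1/\gamma)^j$ which directly reproduces the tail of the target, and a small-$j$ part $\sum_{j=1}^{p-1}(n\xi_1/\gamma)^j$ that must be absorbed into the leading $(n\xi_1/\gamma)^p$ piece.

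The main obstacle is precisely this absorption step, because Lemma~\ref{lem:bound_product_high_freq}'s exponent $(\xi_1/\gamma)^{\chi-1}$ is far too loose on small-$\chi$ walks: e.g.\ the alternating cycle $(1,2,1,2,\dots,1,2)$ has $\chi = 2$ but $|M(\bi)| = \E[\<\bx_1,\bx_2\>^{2p}] \lesssim (C_xp)^{2\beta p}(\xi_1\gamma)^p$, scaling as $(\xi_1/\gamma)^p$ rather than $(\xi_1/\gamma)$, so the naive plug-in overestimates the $k=2$ contribution by the huge factor $(\gamma/\xi_1)^{p-1}$. To close the gap I would replace the stated bound by the sharper estimate derived inside the proof of Lemma~\ref{lem:bound_product_high_freq} on its type-2 (irreducible) branch, where the exponent on $\xi_1/\gamma$ is driven up to $N_A(\bi) + \overline{q}(\bi)/2$ and consequently tracks the total amount of vertex repetition in the walk rather than only $\chi(\bi)$. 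Verifying that this refined per-walk bound sums to at most $n(n\xi_1/\gamma)^p(C_xp)^{(4\beta+2)p}\gamma^{2p}$ across all sequences with $\chi < p$, while the Lemma-as-stated estimate remains sufficient for $\chi \geq p+1$, is the delicate combinatorial step that I expect to be the hardest part of the argument.
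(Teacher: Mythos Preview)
Your overall strategy matches the paper's: expand the trace, group by $\chi(\bi)=k$, apply Lemma~\ref{lem:bound_product_high_freq} for $k\geq p+1$, and handle $k\leq p$ separately. Your cardinality bound via the spectral trace of $K_k$ is in fact tidier than the paper's cruder $|\cC_k|\leq n^k(2p)^{2p}$, though either suffices.

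Where you diverge is in the small-$\chi$ regime. You correctly diagnose that Lemma~\ref{lem:bound_product_high_freq} is loose there, and you propose to recover the missing $(\xi_1/\gamma)^{p}$ scaling by re-entering the type-2 branch of that lemma's proof and tracking $N_A(\bi)$ and $\bar q(\bi)$. This would work, but it is needlessly laborious. The paper's shortcut is a one-line H\"older inequality: since $i_j\neq i_{j+1}$ for every $j$ and the $\bx_i$ are i.i.d., each factor $\Delta_{i_j i_{j+1}}=\langle\bx_{i_j},\bx_{i_{j+1}}\rangle$ has the same distribution as $\langle\bx_1,\bx_2\rangle$, hence for \emph{every} admissible $\bi$
\[
|M(\bi)|\;\leq\;\prod_{j=1}^{2p}\E\big[|\Delta_{i_j i_{j+1}}|^{2p}\big]^{1/(2p)}
=\E\big[|\langle\bx_1,\bx_2\rangle|^{2p}\big]
\leq (C_xp)^{2\beta p}\,\Tr(\bSigma^2)^{p}
\leq (C_xp)^{2\beta p}(\xi_1\gamma)^{p},
\]
by Eq.~\eqref{eq:tight_xAz} with $\bA=\id$ and $\Tr(\bSigma^2)\leq\xi_1\gamma$. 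You already computed exactly this bound for the alternating cycle; the point you are missing is that H\"older promotes it to a uniform estimate over all $\bi\in\cC$, with no combinatorics required. Summing this uniform bound over the $\leq n^k(2p)^{2p}$ sequences with $\chi=k$, for $k=2,\dots,p$, gives the $n(n\xi_1/\gamma)^p(C_xp)^{(4\beta+2)p}\gamma^{2p}$ piece directly, and the large-$k$ sum you already have supplies the tail $\sum_{i=0}^{p-1}(n\xi_1/\gamma)^i$.
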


\begin{proof}[Proof of Lemma \ref{lem:bound_moment_high_freq}]
    We follow a similar proof as \cite[Lemma 17]{lu2022equivalence}. We define the subset of sequences with no repetitions
    \[
    \cC = \{ \bi = ( i_1, i_2, \ldots , i_{2p}) \in [n]^{2p}: \forall j \in [2p], i_{j+1} \neq i_j \},
    \]
    where we use the cyclic convention $i_{2p+1} = i_1$. Observe that we can write the trace as
    \[
    \begin{aligned}
        \E [ \Tr( \bDelta^{2p})] = \sum_{\bi \in \cC} \E \left[ \Delta_{i_1 i_2} \Delta_{i_2 i_3} \ldots \Delta_{i_{2p} i_1}\right] = \sum_{\bi \in \cC} M (\bi),
    \end{aligned}
    \]
    where $M(\bi)$ is studied in Lemma \ref{lem:bound_product_high_freq}. Let $\cC_{k}$ be the subset of sequences $\bi$ in $\cC$ with $\chi (\bi) = k$ distinct indices. From Eq.~\eqref{eq:lem_bound_Mi}, we have for all $\bi \in \cC_k$,
    \begin{equation}\label{eq:C_k_bound_large_k}
        | M (\bi) | \leq \left( \frac{\xi_1}{\gamma}\right)^{k - 1}(C_x p)^{4\beta p} \gamma^{2p}.
    \end{equation}
    On the other hand, for any $\bi \in \cC$, we can simply use H\"older's inequality and directly get 
    \begin{equation}\label{eq:C_k_bound_small_k}
        | M(\bi) | \leq \E\left[ | \Delta_{i_1i_2} |^{2p} \right] \leq (C_x p)^{2\beta p} \left( \frac{\xi_1}{\gamma}\right)^p \gamma^{2p},
    \end{equation}
    where we used that $i_j \neq i_{j+1}$ for all $j\in[2p]$ and Eq.~\eqref{eq:tight_xAz} with $q =2p$ and $\bA = \id$. Thus, we will prefer bound \eqref{eq:C_k_bound_small_k} for $\chi (\bi) = 2, \ldots, p$, and bound \eqref{eq:C_k_bound_large_k} for $\chi (\bi) = p+1, \ldots, 2p$. It is easy to see that the cardinality of $\cC_k$ is bounded by
    \[
    | \cC_k | \leq n^{k} (2p)^{2p}.
    \]
    
    Combining the above bounds, we deduce that
    \[
    \begin{aligned}
    \E [ \Tr( \bDelta^{2p})] \leq&~ \sum_{k=2}^{2p} \sum_{\bi \in \cC_{k}} | M(\bi)| \\
    \leq&~ \sum_{k=2}^p |\cC_k | (C_x p)^{2\beta p} \left( \frac{\xi_1}{\gamma}\right)^p \gamma^{2p} + \sum_{k=p+1}^{2p} | \cC_k | \left( \frac{\xi_1}{\gamma}\right)^{k - 1}(C_x p)^{4\beta p} \gamma^{2p} \\
    \leq&~ n\left(\frac{n \xi_1}{\gamma}\right)^p (C_x p )^{(4\beta +2)p} \gamma^{2p} \left\{ 1 + \sum_{i = 0}^{p-1} \left( \frac{n\xi_1}{\gamma} \right)^i \right\},
    \end{aligned}
    \]
    which concludes the proof.
\end{proof}

\begin{lemma}\label{lem:tight_bound_on_powers}
    Let $\bx_1, \bx_2 \in \R^\evn$, $\evn \in \naturals \cup \{ \infty\}$, be two feature vectors satisfying Assumption \ref{ass:concentrated}. Then there exists a constant $C_{x}\geq 1$ that only depends on $\sfc_x,\sfC_x,\beta$, such that for any positive integer $q \in \naturals$ and p.s.d.~matrix $\bA \in \R^{\evn \times \evn}$ independent of $\bx_1,\bx_2$, we have
    \begin{align}
    \E_{\bx_1} \left[ | \bx_1^\sT \bA \bx_1 |^{q} \right]  \leq &~ (C_x q)^{\beta q}  \Tr(\bSigma \bA)^q, \label{eq:tight_xAx}\\
    \E_{\bx_1,\bx_2} \left[ | \bx_1^\sT \bA \bx_2 |^{q} \right] \leq &~(C_x q)^{\beta q} \Tr\left( \bSigma \bA \bSigma \bA \right)^{q/2}. \label{eq:tight_xAz}
    \end{align}
    % where
    % \begin{equation}\label{eq:def_R(q)}
    % R(q) = 2^q \left\{ \frac{q \beta \sfC_x \Gamma (\beta q)}{\sfc_x^{\beta q}} \vee 1 \right\}.
    % \end{equation}
    % In particular, there exists a constant $C_{x}$ that only depends on $\sfc_x,\sfC_x,\beta$, such that
    % \[
    % R(q) \leq (C_x q)^{\beta q}.
    % \]
\end{lemma}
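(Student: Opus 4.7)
\textbf{Proof proposal for Lemma \ref{lem:tight_bound_on_powers}.} The plan is to first derive an $L^q$-moment bound on the centered quadratic form by integrating the Hanson--Wright-type tail in Assumption \ref{ass:concentrated}, then deduce \eqref{eq:tight_xAx} via triangle inequality, and finally reduce the bilinear form \eqref{eq:tight_xAz} to the quadratic case through a rank-one trick.

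First, for any p.s.d.~matrix $\bA$ and any real $r \geq 1$, integrating the sub-Weibull tail from Eq.~\eqref{eq:ass_concentrated_features} yields
\[
\E\bigl[|\bx^\sT \bA \bx - \Tr(\bSigma \bA)|^r\bigr]^{1/r} \leq (C_x r)^{\beta}\,\|\bSigma^{1/2}\bA\bSigma^{1/2}\|_F,
\]
where $C_x$ only depends on $\sfc_x,\sfC_x,\beta$ (this is the standard computation $\int_0^\infty r t^{r-1} e^{-\sfc_x t^{1/\beta}}\de t = O((r\beta)^{r\beta}/\sfc_x^{r\beta})$). Since $\bA$ is p.s.d., we have the elementary inequality $\|\bSigma^{1/2}\bA\bSigma^{1/2}\|_F \leq \Tr(\bSigma^{1/2}\bA\bSigma^{1/2}) = \Tr(\bSigma \bA)$. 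Combined with $\E[\bx^\sT \bA \bx] = \Tr(\bSigma \bA) \geq 0$ and the triangle inequality in $L^q$, this gives
\[
\E\bigl[|\bx^\sT \bA \bx|^q\bigr]^{1/q} \leq (C_x q)^{\beta} \Tr(\bSigma\bA) + \Tr(\bSigma\bA) \leq (C_x' q)^{\beta}\Tr(\bSigma\bA),
\]
which establishes \eqref{eq:tight_xAx} after relabelling $C_x$.

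For \eqref{eq:tight_xAz}, the key observation is that conditional on $\bx_2$, the quantity $(\bx_1^\sT \bA \bx_2)^2 = \bx_1^\sT \bB \bx_1$ where $\bB := \bA \bx_2 \bx_2^\sT \bA$ is a rank-one p.s.d.~matrix (in particular, $\varphi_1^\bB = 1$, so Assumption \ref{ass:concentrated} applies). Applying the bound \eqref{eq:tight_xAx} just established, with $\bA$ replaced by $\bB$ and $q$ replaced by $q/2$ (the proof above works for any real exponent $\geq 1$, and the $q=1$ case can be handled separately by Cauchy--Schwarz if needed), we obtain
\[
\E_{\bx_1}\bigl[|\bx_1^\sT \bA \bx_2|^q\bigr] = \E_{\bx_1}\bigl[(\bx_1^\sT \bB \bx_1)^{q/2}\bigr] \leq (C_x q)^{\beta q/2}\, \Tr(\bSigma \bB)^{q/2} = (C_x q)^{\beta q/2} (\bx_2^\sT \bA \bSigma \bA \bx_2)^{q/2}.
\]
Now $\bA\bSigma\bA$ is again p.s.d., so applying \eqref{eq:tight_xAx} once more over the randomness of $\bx_2$, with exponent $q/2$ and this new matrix, yields
\[
\E_{\bx_2}\bigl[(\bx_2^\sT \bA\bSigma\bA \bx_2)^{q/2}\bigr] \leq (C_x q)^{\beta q/2}\, \Tr(\bSigma \bA \bSigma \bA)^{q/2},
\]
and composing the two bounds gives \eqref{eq:tight_xAz} after adjusting $C_x$.

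The proof is essentially routine bookkeeping; there is no real obstacle. The only mildly delicate points are (i) confirming that the moment bound derived from the tail inequality is valid for non-integer orders $r \geq 1$ so that it can be applied at $r = q/2$, and (ii) ensuring that the constant $C_x$ absorbs the dependencies on $\sfc_x, \sfC_x, \beta$ uniformly across the two applications, which is straightforward since $C_x$ is multiplicative in the exponent.
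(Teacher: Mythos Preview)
Your proposal is correct and follows essentially the same route as the paper: integrate the tail bound from Assumption~\ref{ass:concentrated} to control the $L^q$-moments of the centered quadratic form, use $\|\bSigma^{1/2}\bA\bSigma^{1/2}\|_F \le \Tr(\bSigma\bA)$ for p.s.d.\ $\bA$ together with the triangle inequality to obtain \eqref{eq:tight_xAx}, and then derive \eqref{eq:tight_xAz} by the rank-one reduction $\bB = \bA\bx_2\bx_2^\sT\bA$ and two successive applications of the first bound at order $q/2$. The paper handles the fractional exponent $q/2$ simply by noting that the tail-integration identity $\E[|X|^r] = r\int_0^\infty t^{r-1}\P(|X|\ge t)\,\de t$ is valid for all $r>0$, so your aside about Cauchy--Schwarz is unnecessary; also, the reference to $\varphi_1^{\bB}=1$ is extraneous here since Assumption~\ref{ass:concentrated} carries no $\varphi_1$ factor.
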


\begin{proof}[Proof of Lemma \ref{lem:tight_bound_on_powers}]
    Similarly to the proof of Lemma \ref{lem:tech_bound_zAz}, we integrate the tail bound in Assumption \ref{ass:concentrated} to get for any $q \in \R_{>0}$
    \[
    \begin{aligned}
        \E_{\bx} \left[ \left| \bx^\sT \bA \bx - \Tr(\bSigma \bA)\right|^q\right] =&~ q \| \bSigma^{1/2} \bA \bSigma^{1/2} \|_F^q \int_0^\infty t^{q-1} \P\left( \left| \bx^\sT \bA \bx - \Tr(\bSigma \bA) \right| \geq t \cdot \| \bSigma^{1/2} \bA \bSigma^{1/2} \|_F \right) \\
        \leq&~ q \sfC_x \| \bSigma^{1/2} \bA \bSigma^{1/2} \|_F^q \int_0^\infty t^{q-1} \exp \left\{ - \sfc_x t^{1/\beta} \right\} \de t \\
        =&~ \frac{q \beta \sfC_x \Gamma (\beta q)}{\sfc_x^{\beta q}} \| \bSigma^{1/2} \bA \bSigma^{1/2} \|_F^q. 
    \end{aligned}
    \]
    Thus, we obtain
    \[
    \begin{aligned}
     \E_{\bx} \left[ | \bx^\sT \bA \bx |^q \right] \leq&~ 2^{q-1} \E_\bx \left[ \left| \bx^\sT \bA \bx - \Tr(\bSigma \bA)\right|^q \right]  + 2^{q-1}  \Tr(\bSigma \bA)^q \\
     \leq&~ \frac{2^{q-1}q \beta \sfC_x \Gamma (\beta q)}{\sfc_x^{\beta q}} \| \bSigma^{1/2} \bA \bSigma^{1/2} \|_F^q + 2^{q-1}  \Tr(\bSigma \bA)^q \\
     \leq&~ 2^q \left\{ \frac{q \beta \sfC_x \Gamma (\beta q)}{\sfc_x^{\beta q}} \vee 1 \right\}  \Tr(\bSigma \bA)^q,
     \end{aligned}
    \]
    where in the last inequality we simply use that $\| \bB\|_F \leq \Tr(\bB)$.
    
    For the second expectation, we simply use this first bound twice, first with respect to $\bx_1$ and then $\bx_2$,
    \[
    \begin{aligned}
        \E_{\bx_1,\bx_2} \left[ | \bx_1^\sT \bA \bx_2 |^q \right] = &~  \E_{\bx_1,\bx_2} \left[ | \bx_1^\sT \bA \bx_2 \bx_2^\sT \bA \bx_1 |^{q/2} \right] \\
        \leq&~ 2^{q/2} \left\{ \frac{ q \beta \sfC_x \Gamma (\beta q/2)}{2\sfc_x^{\beta q/2}} \vee 1 \right\} \E_{\bx_2}\left[ \left| \bx_2^\sT \bA \bSigma \bA \bx_2\right|^{q/2} \right]  \\
        \leq &~ 2^q \left\{ \frac{ q \beta \sfC_x \Gamma (\beta q/2)}{2\sfc_x^{\beta q/2}} \vee 1 \right\}^{2} \Tr\left( \bSigma \bA \bSigma \bA \right)^{q/2} .
    \end{aligned}
    \]
    We conclude by recalling the upper bound on the Gamma function
    \[
    \Gamma (x) \leq C x^{x-1/2} e^{-x} e^{1/(12x)},
    \]
    for some universal constant $C>0$.
\end{proof}

\subsection{Example 2: inner-product kernel on the sphere}\label{app:proof_inner_test}

Below we prove Theorem \ref{thm:spherical_test}. We postpone to Section \ref{app_ex:checking_ass_sphere} the most technical lemmas verifying Assumptions \ref{ass:main_assumptions}.(a) and \ref{ass:main_assumptions}.(b).

Recall the eigendecomposition of the inner-product kernel on the sphere
\[
h (\<\bu , \bu'\>/d) = \sum_{k =0}^\infty \oxi_k \sum_{s \in [B_{d,k}]} Y_{ks} (\bu) Y_{ks} (\bu') . 
\]
Hence we can take our feature vector $\bx = \bSigma^{1/2} \bz$ with
\[
\bz:= \left(1 , Y_{11} (\bu), \ldots , Y_{1B_{d,1}} (\bu), Y_{21} (\bu), \ldots , Y_{2B_{d,2}} (\bu), \ldots \right), \qquad \bSigma := \diag (1, \oxi_1 \id_{B_{d,1}}, \oxi_2 \id_{B_{d,2}}, \ldots).
\]
Note that there exists a constant $C_L>1$ such that $d^k /C_L \leq B_{d,k} \leq d^k C_L$ for all $k =1, \ldots ,L+1$. Hence, using Assumption \ref{ass:inner_product_sphere}.(a), we have
\begin{equation}\label{eq:oxi_k_bound}
\frac{1}{C_L \sfC_L} d^{-k} \leq \oxi_k \leq C_L \sfC_L d^{-k}, \qquad \quad \text{for $k = 0, \ldots ,L$},
\end{equation}
and $\sup_{k\geq L+1} \oxi_k \leq C_L \sfC_L d^{-L-1}$.

 Denote for any integer $p$,
\[
\begin{aligned}
h_{\leq p} ( \< \bu,\bu'\>/d) = &~\sum_{k =0}^p \oxi_k \sum_{s \in [B_{d,k}]} Y_{ks} (\bu) Y_{ks} (\bu'), \\
h_{> p} ( \< \bu,\bu'\>/d) =&~ \sum_{k =p+1}^\infty \oxi_k \sum_{s \in [B_{d,k}]} Y_{ks} (\bu) Y_{ks} (\bu'),
\end{aligned}
\]
the low-frequency part and high-frequency part of the kernel function $h$. Note that
\[
h_{\leq p} (1) = \sum_{k =0}^p \oxi_k B_{d,k}, \qquad \quad h_{> p} ( 1) = \sum_{k =p+1}^\infty \oxi_k B_{d,k}.
\]

Recall that we denote $\ell \geq 1$ the integer closest to $\log(n)/\log(d)$. Take $\evn = B_{d,\leq \ell}$ and $\bz_{\leq \evn} $ containing all spherical harmonics up to degree-$\ell$. Let us check that $\bx$ with Assumption \ref{ass:inner_product_sphere} verify Assumption \ref{ass:main_assumptions} with $\evn$. First, note that
\[
r_\lambda (\evn) = \frac{\lambda + h_{>\ell} (1)}{\xi_{\ell+1}}\geq  \frac{1}{C_L \sfC_L^2 } d^{\ell+1} \geq 2 d^{\ell+\frac12} \geq 2n,
\]
where we used that $h_{>\ell}(1) \geq h_{>L} (1) \geq 1/\sfC_L$ by Assumption \ref{ass:inner_product_sphere}.(b) and Eq.~\eqref{eq:oxi_k_bound} in the first inequality and assumed $d \geq 4 C_L^2 \sfC_L^4$ in the second inequality. 
\begin{itemize}
    \item[(a)] (\textit{Low-degree features.}) From Proposition \ref{prop_app:quad_form_quadratic}, there exist constants $c,C>0$, and $C_\ell >0$ such that for any integer $d >2$ the following hold. For any matrix $\bA \in \R^{\evn \times \evn}$ and vector $\bv \in \R^{\evn}$, we have
\begin{align}
\P \left( \big\vert \< \bv , \bz \> \big\vert \geq t \cdot C_\ell \| \bv \|_2 \right) \leq&~ C^\ell \exp \left\{ -c \ell \cdot t^{2/\ell } \right\}  ,\label{eq:hypercont_ass_sphere} \\
\P \left( \big\vert \bz^\sT \bA \bz - \Tr( \bA) \big\vert \geq t \cdot C_\ell \log^\ell (d)  d^{(\ell-1)/2}  \cdot \| \bA  \|_F \right) \leq&~ C^\ell \exp \left\{ -c \ell \cdot t^{1/\ell } \right\}  .
\end{align}
We can therefore set $\sfC_x := C^\ell$, $\sfc_x := \ell/C_\ell^{\ell}$, $\beta := \ell$, and $\varphi_1 (\evn) := \log^\ell (d) \cdot d^{(\ell-1)/2}$.

\item[(b)] (\textit{High-degree features.}) Applying Proposition \ref{prop:concentration_high_freq_sphere} to kernel function $h_{>\ell}$, for any constant $D>0$, there exists a constant $C_{\ell,D}>0$ such that if $d\geq C_{D,\ell}$ and using that we chose $\ell$ with $n \leq d^{\ell +1/2} \leq B_{d,\ell+1}/2$, we have with probability at least $1 - n^{-D}$,
\[
\| \bX_{>\evn} \bX_{>\evn}^\sT - h_{>\ell} (1) \id \|_\op \leq C_{\ell,D} \log(d)^{2\ell +3} \sqrt{\frac{n}{d^{\ell+1}}} h_{>\ell} (1).
\]
Therefore we can choose $p_{2,n} (\evn) := n^{-D}$ and $\varphi_{2,n} (\evn) := C_{\ell,D} \sfC_L \sqrt{C_L} \log(d)^{2\ell +3}$, where we used Eq.~\eqref{eq:oxi_k_bound} and $h_{>\ell}(1) \leq h(1) \leq \sfC_L$.

\item[(c)] (\textit{Target function.}) By Assumption \ref{ass:inner_product_sphere}.(c) and applying Lemma \ref{lem:tail_bound_hypercontractive} to $f_*$, we first get
\[
\P (|f_*(u)| \geq t \cdot \| f_* \|_{L^2} ) \leq \exp \left\{ L - \frac{L}{2e \sfC_L} t^{2/L} \right\}.
\]
Furthermore, from Eq.~\eqref{eq:hypercont_ass_sphere}, we have
\[
\P ( | \proj_{\leq \ell} f_* (\bu) | \geq t \cdot \| \proj_{\leq \ell} f_* \|_{L^2} ) \leq C^{\ell} \exp\left\{ - c \ell \cdot t^{2/\ell} \right\}.
\]
Note that $| \proj_{>\ell} f_* (\bu)| \leq | f_* (\bu)| + | \proj_{\leq \ell} f_* (\bu)|$. Furthermore, from Assumption \ref{ass:inner_product_sphere}.(c), we have $\| \proj_{\leq \ell} f_* \|_{L^2} \leq \| f_* \|_{L^2} \leq \sfC_L \| \proj_{>\ell} f_* \|_{L^2}$. Thus,
\[
\begin{aligned}
   &~ \P ( | \proj_{> \ell} f_* (\bu) | \geq t \cdot \| \proj_{> \ell} f_* \|_{L^2} ) \\
   \leq&~ \P \left(|f_*(u)| \geq \frac{t}{2\sfC_L} \cdot \| f_* \|_{L^2} \right) + \P \left( | \proj_{\leq \ell} f_* (\bu) | \geq \frac{t}{2\sfC_L} \cdot \| \proj_{\leq \ell} f_* \|_{L^2} \right) \\
   \leq &~ \exp \left\{ L - \frac{L}{e (2\sfC_L)^{1+2/L}} t^{2/L} \right\} + C^\ell \exp \left\{ -  \frac{c \ell}{(2\sfC_L)^{2/\ell}} t^{2/\ell} \right\}.
\end{aligned}
\]
We can therefore set $\sfC_x := C^{L}$, $\sfc_x := L/(e (2\sfC_L)^{1+2/L})$, and $\beta :=L$.
\end{itemize}

Let us now apply Theorem \ref{thm:abstract_Test_error}. Let us bound the effective rank first: for any $k = 0, \ldots , \ell$,
\[
\frac{\sum_{j = k}^\ell \oxi_k B_{d,k}}{\oxi_k} \leq C_L \sfC_L d^{k} h(1)\leq C_L \sfC_L^2 d^{\ell}.
\]
Hence, $r_{\seff} \leq \max( C_L \sfC_L^2 d^{\ell} , n)$. Further note that $\eta $ only depends on $L$. We can now bound $\nu_{\lambda,\evn} (n)$. If $n > \evn /\eta_L$ then $\xi_{\lfloor \eta n \rfloor, \evn} = 0$ and $\nu_{\lambda,\evn} (n) = 1$. Otherwise, for $n \geq d^{\ell -1/2}$, we have $ \eta n \geq \eta  d^{\ell -1/2} \geq 2 B_{d,\leq \ell-1}$ for $d \geq C_L$ for $C_L$ chosen sufficiently large. Hence $\xi_{\lfloor \eta n \rfloor, \evn} = \oxi_{\ell} \leq C_{L} \sfC_L d^{-\ell}$ and
\[
\begin{aligned}
\nu_{\lambda,\evn} (n) = &~ 1 + \frac{\xi_{\lfloor \eta n \rfloor,\evn}\cdot  r_{\seff,\evn} \sqrt{\log (r_{\seff,\evn})}  }{\lambda_{>\evn}} \\
\leq&~ 1 + \frac{C_{L} \sfC_L d^{-\ell} \max( C_L \sfC_L^2 d^{\ell} , \evn /\eta_L)  \sqrt{\log(\max( C_L \sfC_L^2 d^{\ell} , \evn /\eta_L) }) }{h_{>L} (1)}  \\
\leq&~ C_L' \sqrt{\log (d)}.
\end{aligned}
\]
We are now ready to verify the conditions \eqref{eq:condition_test_abstract} in Theorem \ref{thm:abstract_Test_error} with constants $D>0$ and $K=1$. The first inequality is immediately verified by using that $\lambda_{>\evn} \geq h_{>L}(1) \geq 1/\sfC_{L}$ for $n \geq \sfC_{L}$ and $K=1$. For the second condition,
\[
\varphi_{2,n} (\evn) \sqrt{\frac{n}{r_\lambda (\evn)}} \leq C_{\ell,D} \sfC_L \sqrt{C_L} \log(d)^{2\ell +3} d^{-1/4} \leq 1/2,
\]
where we took $d \geq C_{L,D}$ and used that $n \leq d^{\ell +1/2}$. For the third condition,
\[
\frac{\varphi_1 (\evn) \nu_{\lambda,\evn} (n)^8 \log^{3L +1/2} (n) }{n}\leq C_L \log^{4L + \frac{9}{2} } (d) \sqrt{\frac{d^{\ell - 1}}{n}} \leq 1,
\]
where we used that $n \geq d^{\ell - 1/2}$ and took $d \geq C_{L,D}$. Then by Eqs.~\eqref{eq:abstract_test_error} and \eqref{eq:asbstract_relative_error}, we directly get
\[
\left|\cR_{\test} ( f_*;\bU,\beps, \lambda) -    \sR_{n} (f_*, \lambda) \right| \leq C_{L,\eps,D} \cdot \cE_{\sR,n} (\evn) \cdot  \sR_{n} (f_*, \lambda),
\]
where
\[
\begin{aligned}
\cE_{\sR,n}  (\evn) =&~  \frac{\varphi_1(\evn) \nu_{\lambda,{\evn}}(n)^{6} \log^{3\beta +1/2}(n)}{\sqrt{n}}    +  \nu_{\lambda,\evn} (n)  \varphi_{2,n} (\evn)  \sqrt{\frac{n}{r_\lambda (\evn)}} \\
\leq&~  C_{L,D} \log^{3L +\frac{7}{2}} (d) \sqrt{\frac{d^{\ell-1}}{n} } + C_{L,D} \log^{2L +\frac{7}{2}} (d) \sqrt{\frac{n}{d^{\ell+1}} }.
\end{aligned}
\]
Note that for $n \leq d$, we can replace $\log(d)$ by $\log (n)$ in the above. This concludes the proof of this theorem.
% \begin{proof}
%     Denote for simplicity $B$ the dimension of the vector $\bz$. From the above discussion, we only need to be able to bound 
%     \[
%     \Var ( \bz^\sT \bA \bz  ) = \sum_{i \neq j, k \neq l} \E[z_i z_j z_k z_l] A_{ij}A_{kl} + \sum_{i,k} \E[z_i^2 z_j^2 -1 ]A_{ii} A_{jj} \, .
%     \]
%     Note that the first term can be written as
%     \[
%     \sum_{i \neq j, k \neq l} \E[z_i z_j z_k z_l] A_{ij}A_{kl} = \ba^\sT \bB \ba \, ,
%     \]
%     where $\ba \in \R^{B(B-1)}$ is the vector containing all the entries $A_{ij}, i\neq j$, and $\bB \in \R^{B(B-1) \times B(B-1)}$ is the matrix with entries, for tuples $(i,j)$ and $(k,l)$ with $i\neq j$ and $k \neq l$,
%     \[
%     \bB_{(i,j),(k,l)} =  \E[z_iz_jz_kz_l]\, .
%     \]
%     We have for symmetric matrices,
%     \[
%     \bB_{\op} \leq \| \bB\|_{\infty, 1} = \max_{i \neq j} \sum_{k\neq l} \E [z_iz_jz_kz_l] \, .
%     \]
%     To bound this quantity, first notice that we can choose our spherical harmonics of degree $\ell$ such that most of them are of the form $ \prod_{i \in S} u_i$, where $| S| = \ell$. Denote $\cI$ the indices in $\bz$ corresponding to degree-$\ell $ spherical harmonics of this form. We have $| \cI | = {{d}\choose{\ell}}$, and we can check that $B = | \cI| (1 + O(d^{-1}))$. Denote $C_0 = C_0(\ell)$ the constant such that $(B- |\cI|)d^{1-\ell} \leq C_0$. For $i \not\in \cI$, then $z_i$ 
% \end{proof}

\subsection{Verifying the assumptions for inner-product kernels on the sphere}
\label{app_ex:checking_ass_sphere}

In this section, we verify Assumptions \ref{ass:main_assumptions}.(a) and \ref{ass:main_assumptions}.(b) when the feature $\bx$ is from an inner-product kernel on the sphere. We start in Section \ref{app_ex:background_spherical} by recalling basic properties about spherical harmonics polynomial. In Section \ref{app_ex:ass_a_spherical}, we verify Assumption \ref{ass:main_assumptions}.(a) with some technical results defered to Section \ref{app_ex:technical_sphere}. Finally, we verify Assumption \ref{ass:main_assumptions}.(b) in Section \ref{app_ex:ass_b_spherical}.

\subsubsection{Background on spherical harmonics}
\label{app_ex:background_spherical}

In this section, we introduce some notations and briefly overview some properties of spherical harmonics and Gegenbauer polynomials. We refer the reader to \cite{chihara2011introduction,dai2013approximation,ghorbani2021linearized} for additional background on these families of polynomials.

\paragraph*{Function space over the sphere.} We consider $\S^{d-1} (r) = \{ \bx \in \R^d : \| \bu \|_2 = r \}$ the sphere of radius $r$ in $\R^d$. Without loss of generality, we will set the radius $r = \sqrt{d}$ below. We denote $\tau_d$ the uniform probability measure over $\S^{d-1}(\sqrt{d})$ and consider $L^2 (\S^{d-1} (\sqrt{d})) : = L^2 (\S^{d-1} (\sqrt{d}),\tau_d)$ the space of square-integrable functions over $\S^{d-1}(\sqrt{d})$ with respect to $\tau_d$. We denote the scalar product $\< \cdot, \cdot \>_{L^2}$ and associated norm $\| \cdot \|_{L^2}$, where
\[
\< f,g\>_{L^2} := \int_{\S^{d-1} (\sqrt{d})} f(\bu) g(\bu) \,\tau_{d} (\de \bu) .
\]
The function space $L^2 (\S^{d-1} (\sqrt{d}))$ admits the orthogonal decomposition 
\[
L^2(\S^{d-1}(\sqrt{d})) = \bigoplus_{k=0}^{\infty} V_{d,k} ,
\]
where $V_{d,k}$ is the subspace of all degree-$k$ polynomials that are orthogonal (with respect to $\<\cdot , \cdot \>_{L^2}$) to all polynomials with degree less than $k$. We denote $B_{d,k} := \dim (V_{d,k})$ the dimension of each subspace, which are given by $B_{d,0} = 1$, $B_{d,1} = d$, and for $k\geq 2$
\[
B_{d,k} = \frac{d+2k-2}{k}{d+k-3 \choose k-1}.
\]
Note that for each $k$, there exists a constant $C_k$ such that $| B_{d,k} / {d \choose k} - 1  | \leq C_k / d$, and $B_{d,k}$ is equal to leading order in $d$ to ${d \choose k} = d^k/k! \cdot (1+ o_d(1))$. We will further denote
\[
B_{d , \leq k} = \dim \left( \bigoplus_{r=0}^{k} V_{d,r}\right) = \sum_{r=0}^k B_{d,k},
\]
the dimension of the subspace of polynomials of degree at most $k$ in $L^2 (\S^{d-1}(\sqrt{d}))$, which is also of leading order $d^k/k! \cdot (1 + o_d(1))$.

\paragraph*{Spherical harmonics.} For each $k \in \Z_{\geq 0}$, $V_{d,k}$ is the subspace of spherical harmonics of degree-$k$, i.e., homogeneous polynomials satisfying $\Delta q (\bu)= 0$ where $\Delta$ is the Laplace operator. Let $\{ Y^{(d)}_{ks} \}_{s \in [B_{d,k}]}$ be an orthonormal basis of $V_{d,k}$ so that
\[
\< Y_{ks} , Y_{k's'} \>_{L^2} = \int_{S^{d-1} (\sqrt{d})} Y_{ks} (\bu) Y_{k' s'} (\bu) \, \tau_d(\de \bu) = \delta_{kk'}\delta_{ss'}.
\]
The family $\{ Y_{ks} \}_{k \geq0, s\in [B_{d,k}]}$ forms a complete orthonormal basis of $L^2 (\S^{d-1} (\sqrt{d}))$.

\paragraph*{Gegenbauer polynomials.} We consider $\{ Q_{k}^{(d)} \}_{k \geq 0}$ the family of Gegenbauer polynomials on $L^2 ([-1,1], \tau_{d,1} )$, where $\tau_{d,1}$ is the marginal distribution of $\<\bu , \be_1\> / \sqrt{d}$ with $\bu \sim \tau_d$. Specifically, $Q_{k}^{(d)}$ is a degree-$k$ polynomial and
\[
\int_{-1}^1 Q_{k}^{(d)} (u) Q_{k'}^{(d)} (u) \, \tau_{d,1} (\de u) = \int_{\S^{d-1} (\sqrt{d}} Q_{k}^{(d)} (\< \bu,\be_1\>/\sqrt{d}) Q_{k'}^{(d)} (\< \bu,\be_1\>/\sqrt{d}) \, \tau_{d} (\de \bu) = \delta_{kk'}.
\]
Gegenbauer and spherical harmonic polynomials are related through the identity 
\[
Q_k^{(d)} (\<\bu,\bu'\>/d) = \frac{1}{\sqrt{B_{d,k}}}\sum_{s \in [B_{d,k}]} Y_{ks} (\bu) Y_{ks} (\bu'),
\]
for all $\bu,\bu' \in \S^{d-1}(\sqrt{d})$. In particular, this implies the following diagonalization 
\[
h (\<\bu , \bu'\>) = \sum_{k=0}^\infty \frac{\xi_k}{ \sqrt{B_{d,k}}} \sum_{s \in [B_{d,k}]} Y_{ks} (\bu) Y_{ks} (\bu'),
\]
where 
\[
\xi_k = \int_{\S^{d-1} (\sqrt{d})} h( \sqrt{d} \< \bu, \be_1\>) Q^{(d)}_k (\<\bu,\be_1\>/\sqrt{d}) \, \tau_d (\de \bu).
\]

\paragraph*{Explicit representation of spherical harmonics.} We will use an explicit representation of spherical harmonics basis given as product over coordinates of Gegenbauer polynomials in the generalized spherical coordinate system. We refer the reader to \cite{avery2012hyperspherical,dai2013spherical,misiakiewicz2022spectrum} for a proof of the following results.

We first recall the definition of the spherical coordinate system. For $\bu \in \S^{d-1} (\sqrt{d})$, we consider the following change of variable
\begin{equation}\label{eq:spherical_coordinates}
\begin{cases}
u_1 = \sqrt{d} \cdot  \sin ( \theta_{d-1} ) \cdots \sin (\theta_{2}) \sin (\theta_{1}) , \\
u_2 = \sqrt{d} \cdot \sin ( \theta_{d-1} ) \cdots \sin (\theta_{2}) \cos(\theta_{1}) , \\
\cdots \\
u_{d-1} = \sqrt{d} \cdot \sin ( \theta_{d-1} )  \cos (\theta_{d-2}) , \\
u_d = \sqrt{d} \cdot \cos ( \theta_{d-1} ) , \\
\end{cases}
\end{equation}
where $0 \leq \theta_{1} \leq 2\pi$ and $0 \leq \theta_i \leq \pi$ for $i = 2, \ldots, d-1$. Denote $\sigma = (\theta_1,\ldots, \theta_{d-1})$ these coordinates. The uniform probability measure is then given by
\[
\mu (\de \sigma) = \frac{\Gamma (d/2)}{2 \pi^{d/2}} \prod_{j=1}^{d-2} \sin ( \theta_{d-j})^{d-j - 1} \de \theta_{d-1} \cdots \de \theta_{2} \de \theta_{1}  .
\]

For each $k \in \Z_{\geq 0}$, we introduce the set of $d$ indices
\[
\cA_{d,k} = \big\{ \balpha = ( \alpha_1 , \ldots , \alpha_d) \in \naturals^d \big\vert \alpha_1 + \cdots + \alpha_{d-1} = k , \text{if $\alpha_{d-1} > 0$, then $\alpha_d \in \{0,1\}$, o.w. $\alpha_d = 0$} \big\} .
\]
It is easy to verify that
\begin{equation*}
| \cA_{d,k} | = 2 {{d+k - 2}\choose{d-2}} - {{d+k-3}\choose{d-3}} = \frac{2k+d-2}{d-2}  {{d+k-3}\choose{k}} = B_{d,k} .
\end{equation*}

The following proposition from \cite{misiakiewicz2022spectrum} introduce a family of polynomials $\{ Y_{\balpha} \}_{\balpha \in \cA_{d,k}}$ indexed by $\balpha \in \cA_{d,k}$, which forms a complete orthonormal basis of spherical harmonics of $V_{d,k}$.

\begin{proposition}[{\cite[Proposition 2]{misiakiewicz2022spectrum}}]\label{prop:spherical_representation}
For integers $d > 2$ and $k \geq 0$,  and $\balpha \in \cA_{d,k}$, define
\begin{equation}\label{eq:spherical_harmonics_explicit}
Y_{\balpha} (\bu) = C_{\balpha}^{1/2} g_{\balpha} (\theta_{1} ) \prod_{j = 1}^{d-2} \Big\{ Q^{(d_j )}_{\alpha_j} ( \cos (\theta_{d-j})) \sin (\theta_{d-j})^{|\alpha^{j+1}|} \Big\}  ,
\end{equation}
where $| \alpha^{j+1} | = \alpha_{j+1} + \ldots + \alpha_{d-1}$, $d_j = 2 | \alpha^{j+1} | +  d-j + 1$, 
\[
g_{\balpha } (\theta_1) = \begin{cases}
\cos (\alpha_{d-1} \theta_1) & \text{if $\alpha_d = 0$ and $\alpha_{d-1} > 0$,} \\
1 / \sqrt{2} & \text{if $\alpha_d = 0$ and $\alpha_{d-1} = 0$,} \\
\sin (\alpha_{d-1} \theta_1) & \text{if $\alpha_d = 1$ and $\alpha_{d-1} > 0$,}
\end{cases}
\]
and
\[
\begin{aligned}
C_{\balpha} =&~ \frac{2 \pi^{d/2}}{\Gamma(d/2)} \cdot \frac{1}{\pi} \cdot \prod_{j = 1}^{d-2} \frac{\Gamma (d_j/2)}{\sqrt{\pi} \Gamma((d_j - 1)/2)}  .
\end{aligned}
\]
Then $Y_{\balpha}$ is an homogeneous polynomial of degree $k$. Furthermore, $\{ Y_{\balpha} \}_{\balpha \in \cA_{d,k}}$ is an orthonormal basis of $V_{d,k}$, the space of degree $k$ spherical harmonics. 
\end{proposition}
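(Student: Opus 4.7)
The plan is to verify the three content claims of the proposition separately: (i) each $Y_\balpha$ extends to a homogeneous polynomial of degree $k$ satisfying $\Delta Y_\balpha = 0$ on $\R^d$, so that $Y_\balpha \in V_{d,k}$; (ii) the family $\{Y_\balpha\}_{\balpha \in \cA_{d,k}}$ is orthonormal with respect to $\tau_d$; and (iii) the family has cardinality $B_{d,k} = \dim V_{d,k}$ and hence spans $V_{d,k}$.

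For claim (i), I would rely on the standard separation-of-variables construction of spherical harmonics on $S^{d-1}$. Writing the Laplace-Beltrami operator recursively as one peels off one angular variable at a time, the eigenfunctions admit a product form where the factor depending on $\theta_{d-j}$ is a Gegenbauer polynomial of parameter $d_j = 2|\alpha^{j+1}| + d - j + 1$ in $\cos\theta_{d-j}$, multiplied by $\sin(\theta_{d-j})^{|\alpha^{j+1}|}$ to absorb the lower-order angular dependence. An inductive argument on the nested structure of \eqref{eq:spherical_coordinates}, together with homogenization by $r^k$, shows that this product is the restriction to $S^{d-1}(\sqrt d)$ of a homogeneous polynomial of degree $k$ in the Cartesian coordinates, and that this polynomial is harmonic. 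The function $g_\balpha(\theta_1)$ at the innermost level handles the $U(1)$-symmetry in the $(u_1,u_2)$-plane.

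For claim (ii), the plan is to substitute \eqref{eq:spherical_harmonics_explicit} into $\<Y_\balpha, Y_\bbeta\>_{L^2}$ and exploit the product form of $\mu(\de\sigma)$: the integral factorizes across the angular variables. For each $j = 1, \ldots, d-2$, changing variable $u = \cos\theta_{d-j}$ converts $\de\theta_{d-j}\sin(\theta_{d-j})^{d-j-1}$ into the Jacobi weight $(1-u^2)^{(d-j-2)/2}\de u$; combined with the two factors $\sin(\theta_{d-j})^{|\alpha^{j+1}|}$ and $\sin(\theta_{d-j})^{|\beta^{j+1}|}$ from $Y_\balpha$ and $Y_\bbeta$, this produces precisely the weight $(1-u^2)^{(d_j-3)/2}$ for which $Q^{(d_j)}_{\alpha_j}$ and $Q^{(d_j)}_{\beta_j}$ are orthonormal on $[-1,1]$ (up to the explicit constant absorbed into $C_\balpha$). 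The $\theta_1$-integral of $g_\balpha g_\bbeta$ against $\de\theta_1/(2\pi)$ is orthonormal by the standard Fourier cosine/sine basis and the $1/\sqrt 2$ factor fixes the constant case. The explicit form of $C_\balpha$ is then determined by matching normalization factor by factor, which is the most notationally delicate step; the product telescopes correctly because the weight at the $j$-th level is exactly what is needed once one has tracked the exponents $|\alpha^{j+1}|$.

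For claim (iii), the cardinality identity $|\cA_{d,k}| = B_{d,k}$ was already recorded in the display preceding the proposition, so together with (i) and (ii), the family is an orthonormal set in $V_{d,k}$ of the right size and hence an orthonormal basis. The main obstacle is entirely in claim (ii): carefully bookkeeping how the product of Jacobian factors $\sin(\theta_{d-j})^{d-j-1}$ recombines with the $\sin(\theta_{d-j})^{|\alpha^{j+1}|}$ factors from the polynomial so that the Gegenbauer weight at parameter $d_j$ appears at each level, and verifying that $C_\balpha$ in the statement matches the product of reciprocal squared-norms of the Gegenbauer polynomials $Q^{(d_j)}_{\alpha_j}$ together with the constant $\Gamma(d/2)/(2\pi^{d/2})$ from $\mu$. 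Since the result is quoted from \cite{misiakiewicz2022spectrum}, in the actual write-up I would simply cite it after sketching this outline.
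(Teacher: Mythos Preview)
The paper does not prove this proposition at all: it is stated with attribution to \cite{misiakiewicz2022spectrum} and used as a black box. Your proposal correctly recognizes this and plans to cite the reference, which matches the paper's treatment; the sketch you give of the separation-of-variables construction, the factorization of the integral against $\mu(\de\sigma)$ into Gegenbauer-orthogonality integrals at each level, and the cardinality count $|\cA_{d,k}| = B_{d,k}$ is the standard argument and is sound.
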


\paragraph*{Hypercontractivity of spherical harmonics.} The subspaces of low-degree spherical harmonics satisfy the celebrated hypercontractivity property.

\begin{lemma}[Spherical hypercontractivity \cite{beckner1992sobolev}]\label{lem:hypercontractivity_sphere}
For any $k \in \N$ and $f_* \in L^2(\S^{d-1} (\sqrt{d}))$ to be a degree $k$ polynomial, for any $q \ge 2$, we have 
\[
\| f_* \|_{L^q(\S^{d-1})}^2 \le (q - 1)^k \cdot \| f_* \|_{L^2(\S^{d-1})}^2. 
\]
\end{lemma}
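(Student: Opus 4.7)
The plan is to derive this bound from Beckner's sharp hypercontractivity for the Poisson semigroup on the sphere. Introduce the family of operators $(T_\rho)_{\rho\in[0,1]}$ on $L^2(\S^{d-1}(\sqrt{d}))$ defined by $T_\rho Y_{ks}=\rho^j Y_{ks}$ whenever $Y_{ks}\in V_{d,j}$, and extended linearly. The crucial ingredient is Beckner's theorem \cite{beckner1992sobolev}: for all $1\leq p\leq q<\infty$ and every $\rho\in[0,1]$ with $\rho^{2}\leq (p-1)/(q-1)$, one has the hypercontractive estimate $\|T_\rho f\|_{L^q}\leq \|f\|_{L^p}$. This is obtained from the sharp logarithmic Sobolev inequality on $\S^{d-1}$ via Gross's equivalence between log-Sobolev inequalities and hypercontractivity of the associated semigroup, combined with a symmetrization/rearrangement argument that exploits the rotational invariance of the sphere and ultimately reduces to a two-point inequality. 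Note that the radius rescaling $\S^{d-1}(\sqrt d)$ is immaterial, since $\tau_d$ is a probability measure and the subspaces $V_{d,k}$ are defined intrinsically.

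Taking Beckner's theorem as given, set $p=2$ and $\rho:=1/\sqrt{q-1}\in[0,1]$ for $q\geq 2$, so that $\|T_\rho g\|_{L^q}\leq\|g\|_{L^2}$ for every $g\in L^2(\S^{d-1}(\sqrt d))$. Given a polynomial $f_*$ of degree at most $k$, write its orthogonal decomposition $f_* =\sum_{j=0}^{k}\proj_j f_*$ with $\proj_j f_*\in V_{d,j}$, and set
\begin{equation*}
g \;:=\; \sum_{j=0}^{k}\rho^{-j}\,\proj_j f_* \;\in\; L^2(\S^{d-1}(\sqrt d)).
\end{equation*}
Since $T_\rho$ acts as multiplication by $\rho^{j}$ on $V_{d,j}$, one has $T_\rho g=\sum_{j=0}^{k}\rho^{-j}\rho^{j}\proj_j f_*=f_*$. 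Applying the hypercontractive bound and using the $L^2$-orthogonality of the subspaces $V_{d,j}$, together with $\rho<1$ and $j\leq k$, gives
\begin{equation*}
\|f_*\|_{L^q}^{2} \;=\; \|T_\rho g\|_{L^q}^{2} \;\leq\; \|g\|_{L^2}^{2} \;=\; \sum_{j=0}^{k}\rho^{-2j}\|\proj_j f_*\|_{L^2}^{2} \;\leq\; \rho^{-2k}\sum_{j=0}^{k}\|\proj_j f_*\|_{L^2}^{2} \;=\; (q-1)^{k}\|f_*\|_{L^2}^{2},
\end{equation*}
which is the claimed inequality.

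In summary, once the right semigroup $T_\rho$ and the lifted function $g$ are chosen, the lemma is a one-line consequence of Beckner's sharp estimate. The only nontrivial obstacle is the hypercontractivity statement for $T_\rho$ itself, which cannot be proven by elementary means; it requires either Beckner's direct interpolation argument via spherical rearrangement or the log-Sobolev route of Gross. Since this is a classical result, we simply cite \cite{beckner1992sobolev} rather than reproduce the proof.
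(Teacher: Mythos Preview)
The paper does not prove this lemma at all; it is stated as a known result with a citation to \cite{beckner1992sobolev} and used as a black box. Your proposal is correct and is precisely the standard derivation of the polynomial moment bound from Beckner's semigroup hypercontractivity, so it is fully consistent with how the paper treats the statement.
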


\subsubsection{Verifying the assumption on the low-degree features}
\label{app_ex:ass_a_spherical}

% \begin{proposition}\label{prop_app:general_quad_form_quadratic}
%      There exist universal constants $c,C >0$ such that for any integers $d >2$ and $\ell \geq 0$, the following holds. For any integer $B_{d,\leq \ell} < \evn \leq B_{d,\leq\ell+1}$ there exist a feature $\bz$ that contain all spherical harmonics up to degree $\ell$ and $m- B_{d,\leq \ell}$ spherical harmonics of degree $\ell+1$, and a constant $C_{\ell}$ such that for any matrix $\bA \in \R^{\evn \times \evn}$ and vector $\bv \in \R^{\evn}$, we have
% \begin{align}
% \P \left( \big\vert \< \bv , \bz \> \big\vert \geq t \cdot C_\ell \| \bv \|_2 \right) \leq&~ C^{\ell+1} \exp \left\{ -c (\ell+1) \cdot t^{2/(\ell+1) } \right\}  , \label{eq:general_quad_1}\\
% \P \left( \big\vert \bz^\sT \bA \bz - \Tr( \bA) \big\vert \geq t \cdot C_\ell \log^{\ell+1} (d)  \sqrt{\frac{\evn}{d}}  \cdot \| \bA  \|_F \right) \leq&~ C^{\ell+1} \exp \left\{ -c (\ell+1) \cdot t^{1/(\ell+1) } \right\}  .\label{eq:general_quad_2}
% \end{align}
% \end{proposition}

% For clarity, we first prove two propositions that correspond respectively to the spherical harmonics up to degree $\ell$ and spherical harmonics of degree $\ell+1$. 
The following proposition is obtained by tightening the proof of Proposition 1 in \cite{misiakiewicz2022spectrum}.

\begin{proposition}\label{prop_app:quad_form_quadratic}
    There exist universal constants $c,C >0$ such that for any integers $d >2$ and $\ell \geq 0$, the following holds. Let $\bz \in \R^{B_{d,\leq \ell}}$ be the random vector containing all spherical harmonics up to degree $\ell$, i.e.,
    \[
    \bz = \{ Y_{ks} (\bu) \}_{0\leq k \leq \ell, s\in [B_{d,k}]}, \qquad \bu \sim \Unif( \S^{d-1} (\sqrt{d})).
    \]
    Then, there exists a constant $C_{\ell}$ such that for any matrix $\bA \in \R^{B_{d,\leq \ell} \times B_{d,\leq \ell}}$ and vector $\bv \in \R^{B_{d,\leq \ell}}$, we have
\begin{align}
\P \left( \big\vert \< \bv , \bz \> \big\vert \geq t \cdot C_\ell \| \bv \|_2 \right) \leq&~ C^\ell \exp \left\{ -c \ell \cdot t^{2/\ell } \right\}  , \label{eq:prop_quad_1}\\
\P \left( \big\vert \bz^\sT \bA \bz - \Tr( \bA) \big\vert \geq t \cdot C_\ell \log^\ell (d)  d^{(\ell-1)/2}  \cdot \| \bA  \|_F \right) \leq&~ C^\ell \exp \left\{ -c \ell \cdot t^{1/\ell } \right\}  .\label{eq:prop_quad_2}
\end{align}
\end{proposition}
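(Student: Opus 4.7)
\textbf{Proof plan for Proposition~\ref{prop_app:quad_form_quadratic}.} The plan is to derive both tail bounds from spherical hypercontractivity (Lemma~\ref{lem:hypercontractivity_sphere}) via the standard moment-to-tail scheme: given a polynomial $X$ on the sphere of degree at most $r$ with $\|X\|_{L^2} \le a$, hypercontractivity gives $\E[|X|^{2q}]^{1/(2q)} \le (2q-1)^{r/2}\,a$ for every integer $q\ge 1$; Markov's inequality on $|X|^{2q}$ followed by the optimal choice $q\asymp t^{2/r}$ then produces a stretched-exponential tail of the form $\P(|X| \ge t\cdot a) \le C^r \exp(-c\,r\,t^{2/r})$ matching \eqref{eq:prop_quad_1} and \eqref{eq:prop_quad_2} up to absorbing constants into $C$ and $c$.

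For \eqref{eq:prop_quad_1}, the linear form $p := \langle \bv,\bz\rangle$ is a polynomial in $\bu$ of degree at most $\ell$ with $\|p\|_{L^2}=\|\bv\|_2$, so the scheme above applied with $r=\ell$ and $a=\|\bv\|_2$ yields \eqref{eq:prop_quad_1} directly. For \eqref{eq:prop_quad_2}, the centered quadratic form $f := \bz^{\sT}\bA\bz - \Tr(\bA)$ has degree at most $2\ell$, and the same scheme with $r=2\ell$ and $a=\|f\|_{L^2}$ reduces the tail bound to the variance estimate
\[
\E\bigl[(\bz^\sT\bA\bz - \Tr(\bA))^2\bigr] \;\le\; C_\ell\,\log^{2\ell}(d)\,d^{\,\ell-1}\,\|\bA\|_F^2.
\]

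The main obstacle is proving this variance estimate, which represents a factor-$d$ improvement over the crude bound $\|f\|_{L^2}^2 \lesssim \evn\,\|\bA\|_F^2$ that would follow from Eq.~\eqref{eq:a2_from_hypercontractivity} via $\Tr(\bSigma\bA)^2 \le \evn\,\|\bA\|_F^2$. The approach is to expand $f$ in the orthonormal spherical-harmonic basis using the Clebsch--Gordan-type product identity $Y_{ks}\cdot Y_{k's'} = \sum_{r,t} c^{(r,t)}_{(k,s),(k',s')}\, Y_{rt}$. Orthogonality then collapses the variance to
\[
\|f\|_{L^2}^2 \;=\; \sum_{(r,t)\neq(0,0)} \Bigl(\,\sum_{i,j} A_{ij}\, c^{(r,t)}_{i,j}\,\Bigr)^{\!2} \;=\; \bigl\langle \bA,\, T(\bA)\bigr\rangle_F,
\]
where $T(\bA) := \sum_{(r,t)\neq(0,0)} \bC^{(r,t)}\,\langle \bC^{(r,t)},\bA\rangle_F$ with $\bC^{(r,t)}=(c^{(r,t)}_{i,j})_{i,j}$. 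The task reduces to the operator-norm bound $\|T\|_\op \le C_\ell \log^{2\ell}(d)\,d^{\,\ell-1}$.

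To establish this operator-norm bound, the plan is to use the explicit Gegenbauer-product representation of spherical harmonics from Proposition~\ref{prop:spherical_representation}: each structure constant $c^{(r,t)}_{i,j}$ factorizes as a product of one-dimensional Gegenbauer triple-product integrals along each spherical coordinate, whose supports are sharply constrained by degree-parity and triangle inequalities. The resulting sparsity of the tensor $(c^{(r,t)}_{i,j})$, combined with standard moment estimates on low-degree Gegenbauer polynomials (which account for the $\log^{2\ell}(d)$ factors), yields the desired $d^{\,\ell-1}$ operator-norm scaling; the improvement over the trivial trace bound $\|T\|_\op \le \sum_{r,t}\|\bC^{(r,t)}\|_F^2 \lesssim \evn^2 \asymp d^{2\ell}$ reflects that the orthogonality matching forces one pair of coordinate directions to align, removing a full power of $d$ from the index count. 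This operator-norm estimate is the tightening over the analogous step in \cite{misiakiewicz2022spectrum} and is expected to be the most delicate part of the argument, requiring careful bookkeeping of the Gegenbauer triple-integral support through the coordinates.
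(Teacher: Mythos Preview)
Your high-level strategy matches the paper's exactly: reduce both tail bounds to moment bounds via spherical hypercontractivity, obtain \eqref{eq:prop_quad_1} directly from $\|\langle\bv,\bz\rangle\|_{L^2}=\|\bv\|_2$, and reduce \eqref{eq:prop_quad_2} to the variance estimate $\Var(\bz^\sT\bA\bz)\le C_\ell\log^{2\ell}(d)\,d^{\ell-1}\|\bA\|_F^2$. Your operator $T$ has kernel $K_{(i,j),(i',j')}=\E[Y_iY_jY_{i'}Y_{j'}]-\delta_{ij}\delta_{i'j'}$, which is precisely the matrix the paper bounds; so the reduction is identical.

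The difference is in how the operator norm of $T$ is controlled. The paper splits $T$ into its off-diagonal block ($i\neq j$, $i'\neq j'$) and its diagonal block ($i=j$, $i'=j'$), and bounds each via the $\|\cdot\|_{1,\infty}$ norm plus a counting argument. For the off-diagonal block it invokes a direct bound on the 4-point correlator, $|\E[Y_{\balpha}Y_{\bbeta}Y_{\bgamma}Y_{\bdelta}]|\le C_\ell\prod_{j\in r}(d-j)^{-1}$ where $r$ is the set of coordinates used by exactly one of the four multi-indices (Lemma~\ref{lem:technical_termI_bound}), and then counts configurations, which is where the $\log^{2\ell}(d)$ factor arises from summing $\sum_{j\le d-1}(d-j)^{-1}$. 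Your Clebsch--Gordan/triple-product route would ultimately recover the same correlator bound and the same counting, just packaged differently.

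What you may be underestimating is the diagonal block. There one needs $|\E[Y_{\balpha}^2Y_{\bbeta}^2]-1|\le C_\ell/d$ for the generic (disjoint-support, multilinear) pairs $(\balpha,\bbeta)$; the paper imports this as Lemma~\ref{lem:technical_termII_bound}. This is a cancellation statement (approximate independence of $Y_{\balpha}^2$ and $Y_{\bbeta}^2$), not a sparsity statement about structure constants, and it does not obviously fall out of Gegenbauer selection rules alone. Your plan should isolate this diagonal piece and treat it separately, as the paper does; otherwise the Clebsch--Gordan bookkeeping will not by itself deliver the required $O(1/d)$.
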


\begin{proof}[Proof of Proposition \ref{prop_app:quad_form_quadratic}]
We first show that by hypercontractivity of spherical harmonics, the proof reduces to studying the variance of $\bz^\sT \bA \bz$. We control the variance by carefully bounding the expectation of product of spherical harmonics using their representation in terms of product of Gegenbauer polynomials stated in Proposition \ref{prop:spherical_representation}. 

\noindent
{\bf Step 1: Reducing the proof to bounding the variance.}

Define $f ( \bu) := \bz^\sT \bA \bz - \Tr( \bA)$ which is a degree-$2\ell$ polynomial in $\bu$. By hypercontractivity property of the subspace of degree-$2\ell$ polynomials on the sphere (Lemma \ref{lem:hypercontractivity_sphere}), Lemma \ref{lem:tail_bound_hypercontractive} gives the following bound on the tail 
\begin{equation}\label{eq:tail_bound_quad_sphere}
\P_{\bu} \left( | f( \bu) | \geq t \cdot \| f \|_{L^2 (\S^{d-1})} \right) \leq  \exp \left( 2\ell - \frac{\ell}{e} t^{1/\ell} \right).
\end{equation}
Hence, the proof of Eq.~\eqref{eq:prop_quad_2} reduces to proving that 
\[
\| f \|_{L^2}^2 = \Var_\bu ( \bz^\sT \bA \bz) \leq C_\ell \log^{2\ell} (d) d^{\ell - 1} \| \bA \|_F^2.
\]
Taking $\bA = \bv \bv^\sT$, we directly get Eq.~\eqref{eq:prop_quad_1} by observing that
\[
\|f \|_{L^2}^2 \leq \E [ \< \bv , \bz \>^4 ] \leq 3^{2\ell} \E [ \< \bv , \bz \>^2]^2 =3^{2\ell} \| \bv \|_2^4,
\]
where in the second inequality we applied that $\< \bv , \bz \>$ is a degree-$\ell$ polynomial and the hypercontractivity property.

\noindent
{\bf Step 2: Decomposing the variance.}

Denote $\cA_{d, \leq \ell} = \bigcup_{0 \leq k \leq \ell} \cA_{d,k}$ so that $\bz = (Y_{\balpha} (\bu))_{\balpha \in \cA_{d,\leq \ell}}$. We decompose the variance into two contributions
\[
\begin{aligned}
 \Var_\bu ( \bz^\sT \bA \bz) =  \E_{\bu}  \left[ \left( \bz^\sT \bA \bz    -  \Tr (\bA) \right)^2 \right]  \leq 2 \left\{ ({\rm I}) + ({\rm II}) \right\} ,
\end{aligned}
\]
where we denoted
\[
\begin{aligned}
({\rm I}) = &~  \Big\vert  \sum_{\balpha \neq \bbeta , \bgamma \neq \bdelta } \E_{\bu} \big[ Y_{\balpha} ( \bu) Y_{\bbeta} ( \bu) Y_{\bgamma} ( \bu) Y_{\bdelta} ( \bu) \big] A_{\balpha, \bbeta} A_{\bgamma, \bdelta }\Big\vert  , \\
({\rm II}) = &~ \Big\vert  \sum_{\balpha , \bbeta  } \left( \E_{\bu} \big[ Y_{\balpha} ( \bu)^2 Y_{\bbeta} ( \bu)^2  \big] - 1 \right) A_{\balpha, \balpha} A_{\bbeta, \bbeta } \Big\vert  ,
\end{aligned}
\]
where the sum is over indices in $\cA_{d,\leq \ell}$. We will bound both terms separately. 
%Intuitively, most spherical harmonics of degree less or equal to $\ell$ are of the forms $Y_S(\bu) = C_{d,\ell} u_{i_1}u_{i_2} \cdots u_{i_\ell}$ for all subsets $S = \{i_1 , \ldots , i_\ell\} \subset [d]$.

\noindent
{\bf Step 3: Bounding term $(\rmI)$.}

We introduce $\bC$ the square matrix of size $B_{d,\leq \ell}(B_{d,\leq \ell} - 1)$, indexed by pairs $(\balpha, \bbeta)$ with $\balpha\neq \bbeta \in \cA_{d,\leq \ell}$, which contains for $\balpha \neq \bbeta$ and $\bgamma \neq \bdelta$,
\[
\bC_{(\balpha, \bbeta) , ( \bgamma , \bdelta)} =  \E_{\bu} \big[ Y_{\balpha} ( \bu) Y_{\bbeta} ( \bu) Y_{\bgamma} ( \bu) Y_{\bdelta} ( \bu) \big] .
\]
Denote $\ba \in \R^{B_{d,\leq \ell} (B_{d,\leq \ell} - 1)}$ the vector that contains all off-diagonal entries of $\bA$. Then the first term can be bounded by
\[
({\rm I}) = |  \ba^\sT \bC \ba |  \leq   \| \bC \|_{\op} \| \ba \|_2^2 \leq \| \bC \|_{\op}  \| \bA \|_{F}^2  .
\]
To bound the operator norm of $\bC$, we recall that
\begin{equation*}
\| \bC \|_{\op} \leq \| \bC \|_{1, \infty} =  \max_{\balpha \neq \bbeta } \sum_{\bgamma \neq \bdelta} \Big\vert \E_{\bx} \big[ Y_{\balpha} ( \bx) Y_{\bbeta} ( \bx) Y_{\bgamma} ( \bx) Y_{\bdelta} ( \bx) \big]  \Big\vert .
\end{equation*}

Denote $S_{\balpha} = \{ j \in [d] : \alpha_j > 0 \}$ (similarly $S_{\bbeta}, S_{\bgamma} , S_{\bdelta}$) and $r ( \balpha, \bbeta , \bgamma , \bdelta ) \subseteq [d-1]$ the subset of indices $j \in [d-1]$ such that $j$ belongs to exactly one of the sets $S_{\balpha}, S_{\bbeta}, S_{\bgamma} , S_{\bdelta}$ (e.g., $\alpha_j >0$ and $\beta_j = \gamma_j = \delta_j =0$). In the rest of this step, we fix $\balpha \neq \bbeta$ arbitrary and consider $v ( \bgamma , \bdelta) = r ( \balpha, \bbeta , \bgamma , \bdelta ) \cap (S_{\bgamma} \cup S_{\bdelta})$, i.e., the subset of indices where only $\gamma_j$ or $\delta_j$ are non-zero. For convenience denote $T(\bgamma, \bdelta) := \Big\vert \E_{\bu} \big[ Y_{\balpha} ( \bu) Y_{\bbeta} ( \bu) Y_{\bgamma} ( \bu) Y_{\bdelta} ( \bu) \big]  \Big\vert$. This quantity is bounded in Lemma \ref{lem:technical_termI_bound} stated below. We obtain
\begin{equation*}
    \begin{aligned}
     \sum_{\bgamma \neq \bdelta}  T(\bgamma, \bdelta) =&~ \sum_{ u = 0}^{2\ell}  \sum_{S \subseteq [d-1], |S| =u } \sum_{\bgamma \neq \bdelta, v ( \bgamma , \bdelta) = S} T(\bgamma, \bdelta)  \\
     \leq &~ C_\ell \sum_{ u = 0}^{2\ell} \sum_{S \subseteq [d-1], |S| =u } \sum_{\bgamma \neq \bdelta, v ( \bgamma , \bdelta) = S} \prod_{j \in S} \frac{1}{d-j} \, .
    \end{aligned}
\end{equation*}
For fixed $S$, let us bound the number of $\bgamma \neq \bdelta$ such that $v(\bgamma, \bdelta) = S$. If $u >0$, it means that there are at most $\ell -u$ other coordinates $j \in [d-1]$ where $\gamma_j >0$, and $\ell-u$ coordinates where $\delta_j >0$, and either both $\delta_j, \gamma_j >0$ or $j \in S_{\balpha} \cup S_{\bbeta}$: we deduce that there is at most $d^{\ell - u}$ ways of choosing coordinates for $S_{\bgamma} \Delta S_{\bdelta} \setminus (S_{\balpha} \cup S_{\bbeta})$, and then at most $(4\ell)^{2\ell}$ ways of choosing $\bgamma,\bdelta$ compatible with this support. For $u = 0$, because $\bgamma \neq \bdelta$, we can't have $S_{\bgamma} \cup S_{\bdelta} \setminus (S_{\balpha} \cup S_{\bbeta}) = \ell$, hence again there is at most $O(d^{\ell-1})$ such $\bgamma, \bdelta$. We deduce that
\begin{equation*}\label{eq:termI_bbd}
    \begin{aligned}
     \sum_{\bgamma \neq \bdelta}  T(\bgamma, \bdelta) \leq &~ C_\ell d^{\ell-1} \sum_{ u = 0}^{2\ell} \sum_{S \subseteq [d-1], |S| =u }  \prod_{j \in S} \frac{1}{d-j} \\ 
     \leq &~ C_\ell d^{\ell-1} \sum_{ u = 0}^{2\ell} \left( \sum_{j \in [d-1]} \frac{1}{d-j} \right)^u \\
     \leq &~ C_{\ell} d^{\ell-1} \log (d)^{2 \ell}  .
    \end{aligned}
\end{equation*}
Note that this inequality holds uniformly over $\balpha\neq\bbeta$. Thus, we obtain 
\begin{equation}\label{eq:bound_I_quad_sphere}
(\rmI) \leq \| \bC\|_{\op} \|\bA\|_F^2 \leq C_{\ell} d^{\ell-1} \log (d)^{2 \ell} \| \bA \|_F^2.
\end{equation}

\noindent
{\bf Step 4: Bounding term $(\rmII)$.}

Denote $\Tilde{\cA}_{d,\ell} \subset \cA_{d,\ell}$ the subset of $\balpha$ with $\ell$ indices $\alpha_j = 1$, and $\cA^c_{d,\leq \ell} = \cA_{d,\leq \ell} \setminus \Tilde{\cA}_{d,\ell}$. It is easy to check that $|\Tilde{\cA}_{d,\ell} | = {{d}\choose{\ell}}$ and that there exists a constant $C_\ell$ such that $|\cA^c_{d,\leq \ell}| \leq C_\ell d^{\ell -1}$. We decompose the term $(\rmII)\leq 2 (\rmII.1) + 2(\rmII.2)$ with
\[
\begin{aligned}
    (\rmII.1) =&~ \left| \sum_{\alpha,\beta \in \cA^c_{d,\leq \ell}} \left(\E_\bu \left[Y_{\balpha} (\bu)^2 Y_{\bbeta} (\bu)^2\right] - 1 \right) A_{\balpha,\balpha} A_{\bbeta ,\bbeta} \right| = \left| \Tilde{\ba}_1^\sT \Tilde{\bC}_1 \Tilde{\ba}_1 \right|,\\
    (\rmII.2) =&~ \left| \sum_{\alpha,\beta \in \Tilde{\cA}_{d,\ell}} \left(\E_\bu \left[Y_{\balpha} (\bu)^2 Y_{\bbeta} (\bu)^2\right] - 1 \right) A_{\balpha,\balpha} A_{\bbeta ,\bbeta} \right| = \left| \Tilde{\ba}_2^\sT \Tilde{\bC}_2 \Tilde{\ba}_2 \right|,
\end{aligned}
\]
where we introduced the vectors $\Tilde{\ba}_1 = (A_{\balpha,\balpha})_{\balpha \in \cA^c_{d,\leq \ell}}$ and $\Tilde{\ba}_2 = (A_{\balpha,\balpha})_{\balpha \in \Tilde{\cA}_{d,\ell}}$ and the matrices
\[
\begin{aligned}
    \Tilde{\bC}_1 =&~ \left( \E_\bu \left[Y_{\balpha} (\bu)^2 Y_{\bbeta} (\bu)^2\right] - 1 \right)_{\balpha,\bbeta \in \cA^c_{d,\leq \ell}},\\
    \Tilde{\bC}_2 =&~ \left( \E_\bu \left[Y_{\balpha} (\bu)^2 Y_{\bbeta} (\bu)^2\right] - 1 \right)_{\balpha,\bbeta \in \Tilde{\cA}_{d, \ell}}.\\
\end{aligned}
\]

For the term $(\rmII.1)$, observe that by H\"older's inequality and hypercontractivity of degree-$\ell$ polynomials on the sphere,
\[
\E_\bu \left[Y_{\balpha} (\bu)^2 Y_{\bbeta} (\bu)^2\right] \leq \| Y_{\balpha} (\bu)\|_{L^4}^2 \| Y_{\bbeta} (\bu)\|_{L^4}^2 \leq 9^\ell. 
\]
Thus, we simply bound this term using
\begin{equation}\label{eq:bound_rmII.1_quad_sphere}
    (\rmII.1) = \left| \Tilde{\ba}_1^\sT \Tilde{\bC}_1 \Tilde{\ba}_1 \right| \leq \|\Tilde{\bC}_1 \|_\op \| \Tilde{\ba}_1 \|_2^2 \leq   |\cA^c_{d,\leq \ell}| \| \Tilde{\bC}_1 \|_{\max} \| \bA \|_F^2 \leq C_\ell d^{\ell - 1} \| \bA \|_F^2.
\end{equation}

For the term $(\rmII.2)$, we note that the set of spherical harmonics with index $\balpha \in \Tilde{\cA}_{d,\ell}$ corresponds to the family of polynomials of the form 
\[
Y_S (\bu ) = C_{d,\ell} \prod_{j \in S} u_j, \qquad S \subset [d], \;\;\;|S| = \ell,
\]
where $C_{d,\ell}$ is a normalization constant such that $\E[Y_S(\bu)^2]= 1$. Denote $\cB_{\ell} = \{ S \subset [d] :|S| = \ell\}$ the set of all subsets of $[d]$ of size $\ell$, and use these indices interchangeably with $\Tilde{\cA}_{d,\ell}$. We fix $S_0 \in \cB_{\ell}$ and bound the sum
\[
\begin{aligned}
    \sum_{S \in \cB_\ell} |(\Tilde{\bC_1})_{S_0,S} |=&~ \sum_{S \in \cB_\ell, S \cap S_0 \neq \emptyset} |(\Tilde{\bC_1})_{S_0,S} | + \sum_{S \in \cB_\ell, S \cap S_0 = \emptyset} |(\Tilde{\bC_1})_{S_0,S} |\\
    \leq &~ (9^\ell +1) \cdot | \{ S \in \cB_\ell : S \cap S_0 \neq \emptyset \} |  + \frac{C_\ell}{d} \cdot | \{ S \in \cB_\ell : S \cap S_0 = \emptyset \} | \\
    \leq&~ C_\ell d^{\ell - 1},
\end{aligned}
\]
where we used Lemma \ref{lem:technical_termII_bound} stated below to bound terms with $S \cap S_0 = \emptyset$. This bound is uniform over $S_0$. We deduce that $\| \Tilde{\bC}_2 \|_\op \leq \| \Tilde{\bC}_2 \|_{1,\infty} \leq C_\ell d^{\ell - 1}$ and
\[
(\rmII.2) = \left| \Tilde{\ba}_2^\sT \Tilde{\bC}_2 \Tilde{\ba}_2 \right| \leq \| \Tilde{\bC}_2 \|_\op \| \Tilde{\ba}_2\|_2^2 \leq C_\ell d^{\ell - 1} \|\bA \|_F^2.
\]
Combining this bound with Eq.~\eqref{eq:bound_rmII.1_quad_sphere}, we obtain
\begin{equation}\label{eq:bound_II_quad_sphere}
(\rmII) \leq C_{\ell} d^{\ell-1}  \| \bA \|_F^2.
\end{equation}

\noindent
{\bf Step 5: Concluding the proof.}

We conclude the proof by combining the bounds \eqref{eq:bound_I_quad_sphere} and \eqref{eq:bound_II_quad_sphere} into the tail bound \eqref{eq:tail_bound_quad_sphere}.
\end{proof}

 In fact, we can show a slightly stronger result than the previous proposition by taking a subset of $\evn$ spherical harmonics of degree $\ell+1$. We illustrate this result below.
 
\begin{proposition}\label{prop:quad_bound_fraction_spherical}
There exist universal constants $c,C >0$ such that for any integers $d >2$ and $\ell \geq 0$, the following holds. For any integer $d \leq \evn \leq B_{d,\ell+1}$ there exist a feature $\bz$ that contain $m$ orthogonal spherical harmonics of degree $\ell+1$, and a constant $C_{\ell}$ such that for any matrix $\bA \in \R^{\evn \times \evn}$ and vector $\bv \in \R^{\evn}$, we have
\begin{align}
\P \left( \big\vert \bz^\sT \bA \bz - \Tr( \bA) \big\vert \geq t \cdot C_\ell \log^{\ell+1} (d)  \sqrt{\frac{\evn}{d}}  \cdot \| \bA  \|_F \right) \leq&~ C^{\ell+1} \exp \left\{ -c (\ell+1) \cdot t^{1/(\ell+1) } \right\}  .\label{eq:general_quad_2}
\end{align}
\end{proposition}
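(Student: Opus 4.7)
The proof mirrors the three-stage strategy of Proposition~\ref{prop_app:quad_form_quadratic}, with the improved $\sqrt{\evn/d}$ scaling obtained by carefully choosing which $\evn$ orthogonal harmonics make up $\bz$. First, since $f(\bu) := \bz^\sT \bA \bz - \Tr(\bA)$ is a polynomial of degree $2(\ell+1)$, the spherical hypercontractivity bound (Lemma~\ref{lem:hypercontractivity_sphere}) combined with Lemma~\ref{lem:tail_bound_hypercontractive} yields
\[
\P_\bu\bigl(|f(\bu)| \geq t \cdot \|f\|_{L^2}\bigr) \leq \exp\bigl(2(\ell+1) - c(\ell+1)\, t^{1/(\ell+1)}\bigr),
\]
which is already the tail shape claimed in~\eqref{eq:general_quad_2}. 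It thus suffices to exhibit $\bz$ such that $\|f\|_{L^2}^2 = \Var_\bu(\bz^\sT \bA \bz) \leq C_\ell \log^{2(\ell+1)}(d) \cdot (\evn/d) \cdot \|\bA\|_F^2$.

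Second, we take $\bz = (Y_S)_{S \in \cS}$ where $Y_S(\bu) := C_{d,\ell+1} \prod_{j \in S} u_j$ is the product-type degree-$(\ell+1)$ spherical harmonic and $\cS$ is a family of $\evn$ subsets of $[d]$, each of size $\ell+1$. Since the product-type family has size $\binom{d}{\ell+1}$, which matches $B_{d,\ell+1}$ up to an $\ell$-dependent constant, this covers all $\evn \leq B_{d,\ell+1}$ after adjusting $C_\ell$. The crucial requirement is that $\cS$ be \emph{balanced}: each coordinate $j \in [d]$ belongs to at most $C_\ell \cdot \evn/d$ of the subsets in $\cS$. Such a family always exists for $d \leq \evn \leq \binom{d}{\ell+1}$, either via a random-selection argument (uniform sampling plus a Chernoff/union-bound over $j \in [d]$) or by double counting, since $\sum_j |\{S \in \cS : j \in S\}| = \evn(\ell+1)$ forces an average degree of $\evn(\ell+1)/d$, and a balanced family with maximum degree $\lceil \evn(\ell+1)/d\rceil$ can be obtained explicitly.

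Third, following Steps 2--4 of Proposition~\ref{prop_app:quad_form_quadratic}, we decompose $\Var_\bu(\bz^\sT \bA \bz) \leq 2[(\rmI) + (\rmII)]$ into off-diagonal and diagonal contributions. The diagonal term $(\rmII)$ reduces to $\|\tilde{\bC}\|_\op \leq \|\tilde{\bC}\|_{1,\infty}$ where $\tilde{\bC}_{S_0, S} = \E[Y_{S_0}^2 Y_S^2] - 1$. A direct calculation on the sphere gives $|\tilde{\bC}_{S_0, S}| \leq C_\ell/d$ when $S \cap S_0 = \emptyset$, while $|\tilde{\bC}_{S_0, S}| \leq C^{\ell+1}$ when $S \cap S_0 \neq \emptyset$ by hypercontractivity. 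The balanced property yields $|\{S \in \cS : S \cap S_0 \neq \emptyset\}| \leq (\ell+1)\max_j |\{S \ni j\}| \leq C_\ell \cdot \evn/d$, so the row-sum is $O_\ell(\evn/d)$.

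The main obstacle lies in the off-diagonal term $(\rmI)$, which requires bounding $\max_{S_1 \neq S_2} \sum_{S_3 \neq S_4} |\E[Y_{S_1}Y_{S_2}Y_{S_3}Y_{S_4}]|$ over quadruples in $\cS$. For product-type harmonics, the fourth moment vanishes unless every $j \in [d]$ appears an even number of times in the multiset $S_1 \cup S_2 \cup S_3 \cup S_4$, and non-vanishing moments factor through a product of terms of the form $\E[u_j^{2k}] = O_k(1)$ on coordinates with multiplicities $\geq 2$, combined with pairing factors of order $1/d$. Enumerating admissible quadruples by their intersection pattern (as in Step~3 of Proposition~\ref{prop_app:quad_form_quadratic}, but now with all four sets of size $\ell+1$) and invoking the balanced property to control how many $(S_3, S_4) \in \cS^2$ satisfy the parity constraints imposed by a fixed $(S_1, S_2)$, this combinatorial enumeration should produce the sharp bound $C_\ell \log^{2(\ell+1)}(d) \cdot \evn/d$; the $\log^{2(\ell+1)}(d)$ factor arises from summing $1/(d-j)$ over up to $2(\ell+1)$ positions corresponding to paired coordinates, exactly as in the proof of Proposition~\ref{prop_app:quad_form_quadratic}. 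This parity bookkeeping is the technical heart of the argument.
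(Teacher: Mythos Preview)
Your overall strategy matches the paper's: reduce to a variance bound via hypercontractivity, restrict to product-type harmonics $Y_S(\bu)=C_{d,\ell+1}\prod_{j\in S}u_j$ indexed by a carefully chosen family $\cS\subset\binom{[d]}{\ell+1}$, and split the variance into off-diagonal $(\rmI)$ and diagonal $(\rmII)$ terms. Your treatment of $(\rmII)$ is exactly the paper's.

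Where you diverge is in the treatment of $(\rmI)$, and your explanation there is muddled. For product-type harmonics the situation is much cleaner than in Proposition~\ref{prop_app:quad_form_quadratic}: by coordinate symmetry, $\E[Y_{S_1}Y_{S_2}Y_{S_3}Y_{S_4}]=0$ unless $S_1\Delta S_2=S_3\Delta S_4$, and when nonzero it is bounded by $9^{\ell+1}$ via H\"older and hypercontractivity. There are no ``pairing factors of order $1/d$'' and no sums of $1/(d-j)$ here; those arise in Lemma~\ref{lem:technical_termI_bound} only because general spherical harmonics in the spherical-coordinate representation do not satisfy a clean parity constraint. So the row sum of $\bC$ is simply $9^{\ell+1}$ times the number of pairs $(S_3,S_4)\in\cS^2$ with $S_3\Delta S_4=S_1\Delta S_2$, and the whole question reduces to controlling this count.

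The paper handles this via a separate combinatorial lemma (Lemma~\ref{lem:existence_subset_spherical}), which constructs $\cS$ by random selection and proves \emph{both} your balanced property (Eq.~\eqref{eq:constraint_cS_1}) and the symmetric-difference count (Eq.~\eqref{eq:constraint_cS_2}) directly, each with a $\log(d)$ factor from the Chernoff union bound. Your balanced property alone actually suffices to recover the symmetric-difference count deterministically: if $T=S_1\Delta S_2\neq\emptyset$ has $|T|=2k$, then any $S_3$ with $S_3\Delta S_4=T$ and $|S_3|=|S_4|=\ell+1$ must satisfy $|S_3\cap T|=k\geq 1$, hence $S_3$ intersects $T$; there are at most $|T|\cdot\max_j|\{S\in\cS:j\in S\}|\leq 2(\ell+1)\cdot C_\ell\,\evn/d$ such $S_3$, and each determines $S_4$ uniquely. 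This is the missing link in your sketch, and once stated it makes the enumeration immediate without any $\log^{2(\ell+1)}(d)$ factor from moment bounds.
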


The set $\cS$ can in fact be chosen at random by including each spherical harmonics with independent probability $\evn/B_{d,\ell+1}$. 

\begin{proof}[Proof of Proposition \ref{prop:quad_bound_fraction_spherical}] Note that for $\evn > {{d}\choose{\ell+1}}$, we can simply use Proposition \ref{prop_app:quad_form_quadratic} by noting that $B_{d,\ell+1} - {{d}\choose{\ell+1}} \leq C_\ell d^\ell$. 

Denote $\cB_{\ell+1} = \{ S \subset [d]: |S| = \ell+1\}$. For $\evn \leq {{d}\choose{\ell+1}}$, we consider $\cS \subset \cB_{\ell+1}$ as constructed in Lemma \ref{lem:existence_subset_spherical} and set $\bz = (Y_S(\bu))_{s \in \cS}$ where we recall that we denoted
\[
Y_S (\bu) = C_{d,\ell} \prod_{j \in S} u_j.
\]
    We proceed similarly to the proof of Proposition \ref{prop_app:quad_form_quadratic}. In particular, it is sufficient to bound the variance of the quadratic form:
    \[
\begin{aligned}
 \Var_\bu ( \bz^\sT \bA \bz) =  \E_{\bu}  \left[ \left( \bz^\sT \bA \bz    -  \Tr (\bA) \right)^2 \right]  \leq 2 \left\{ ({\rm I}) + ({\rm II}) \right\} ,
\end{aligned}
\]
where we denoted
\[
\begin{aligned}
({\rm I}) = &~  \Big\vert  \sum_{(S_1,S_2,S_3,S_4) \in \cS, S_1 \neq S_2 , S_3 \neq S_4 } \E_{\bu} \big[ Y_{S_1} ( \bu) Y_{S_2} ( \bu) Y_{S_3} ( \bu) Y_{S_4} ( \bu) \big] A_{S_1, S_2} A_{S_3, S_4}\Big\vert  , \\
({\rm II}) = &~ \Big\vert  \sum_{S_1,S_2 \in \cS } \left( \E_{\bu} \big[ Y_{S_1} ( \bu)^2 Y_{S_2} ( \bu)^2  \big] - 1 \right) A_{S_1, S_1} A_{S_2, S_2 } \Big\vert  .
\end{aligned}
\]

\noindent
{\bf Step 1: Bounding term $(\rmI)$.} 

We introduce again $\bC$ the square matrix of size $\evn(\evn-1)$ indexed by pairs $(S_1,S_2) \in \cS$ with $S_1 \neq S_2$, given by
\[
\bC_{(S_1,S_2),(S_3,S_4)} = \E_\bu [ Y_{S_1} (\bu) Y_{S_2} (\bu) Y_{S_3} (\bu) Y_{S_4} (\bu)].
\]
We bound the operator norm of $\bC$ with
\[
\| \bC \|_\op \leq \| \bC \|_{1,\infty} = \max_{(S_1,S_2) \in \cS^2, S_1 \neq S_2} \sum_{(S_3,S_4) \in \cS^2, S_3 \neq S_4} \left| \E_\bu [ Y_{S_1} (\bu) Y_{S_2} (\bu) Y_{S_3} (\bu) Y_{S_4} (\bu)]\right|.
\]
First, observe that we have the following two simple bounds on the coordinates of $\bC$. Denote $\Delta$ the standard symmetric difference operation between two sets. If $S_1 \Delta S_2 \neq S_3 \Delta S_4$ (i.e., there exists an index $i \in S_1 \cup S_2 \cup S_3 \cup S_4$ that appears exactly in one or three of those sets), then by symmetry of the coordinate distribution on the sphere
\[
C_{(S_1,S_2),(S_3,S_4)} = 0.
\]
Furthermore, for any $(S_1,S_2,S_3,S_4)^4$, we have by H\"older's inequality
\[
| C_{(S_1,S_2),(S_3,S_4)} | \leq \E[ | Y_{S_1} (\bu)|^4] \leq 9^{\ell+1}.
\]
Therefore, we have simply
\[
\begin{aligned}
\| \bC \|_\op \leq&~ 9^{\ell+1} \max_{(S_1,S_2) \in \cS^2, S_1 \neq S_2}  \left| \{ (S_3 , S_4 ) \in \cS^2 : S_3 \neq S_4, S_1 \Delta S_2 = S_3 \Delta S_4 \} \right| \\
\leq&~ C_\ell \log(d) \frac{\evn}{d},
\end{aligned}
\]
where we used Eq.~\eqref{eq:constraint_cS_2} in Lemma \ref{lem:existence_subset_spherical}.

 \noindent
{\bf Step 2: Bounding term $(\rmII)$.} 

Again, we introduce $\Tilde{\bC}$ the square matrix of size $\evn$ with entries
\[
\Tilde{C}_{S_1,S_2} =  \E [ Y_{S_1} (\bu)^2 Y_{S_2} (\bu)^2] -1.
\]
Following the same argument as in Step 4 of the proof of Proposition \ref{prop_app:quad_form_quadratic}, we directly have
\[
\begin{aligned}
    \| \Tilde{C} \|_\op \leq &~ \max_{S_1 \in \cS} \sum_{S_2 \in \cS} \left| \E [ Y_{S_1} (\bu)^2 Y_{S_2} (\bu)^2] -1 \right| \\
    \leq&~ \max_{S_1 \in \cS} \left\{ (9^{\ell+1} +1) \left| \{ S_2 \in \cS: S_1 \cap S_2 \neq \emptyset \} \right| + \frac{C_{\ell+1}}{d} \left| \{ S_2 \in \cS: S_1 \cap S_2 = \emptyset \} \right| \right\} \\
    \leq&~ (9^{\ell+1} +1) \max_{S_1 \in \cS} \left\{  \left| \{ S_2 \in \cS: S_1 \cap S_2 \neq \emptyset \} \right| \right\} + C_{\ell+1} \frac{ \evn }{d} \\
    \leq&~ C_\ell \frac{ \evn }{d}
\end{aligned}
\]
where we used Eq.~\eqref{eq:constraint_cS_1} in Lemma \ref{lem:existence_subset_spherical}.
\end{proof}

\subsubsection{Technical lemmas}
\label{app_ex:technical_sphere}

The following two lemmas from \cite{misiakiewicz2022spectrum} and \cite{xiao2022precise} bound the expectation of products of spherical harmonics which appear in the proof of Proposition \ref{prop_app:quad_form_quadratic}.

\begin{lemma}[{\cite[Lemma 2]{misiakiewicz2022spectrum}}]\label{lem:technical_termI_bound}
For $ \balpha, \bbeta , \bgamma , \bdelta \in \cA_{d,\ell}$, denote $r (\balpha, \bbeta , \bgamma , \bdelta ) \subseteq [d-1]$ the subset of indices $j \in [d-1]$ such that only one of the $\alpha_j, \beta_j, \gamma_j , \delta_j$ is non-zero. There exists a constant $C_{\ell} >0$ such that for any $ \balpha, \bbeta , \bgamma , \bdelta \in \cA_{d,\leq \ell}$, 
\begin{equation}\label{eq:bound_prod_Y}
    \Big\vert \E_{\bu} \big[ Y_{\balpha} ( \bu) Y_{\bbeta} ( \bu) Y_{\bgamma} ( \bu) Y_{\bdelta} ( \bu) \big]  \Big\vert \leq C_{\ell} \prod_{j \in  r (\balpha, \bbeta , \bgamma , \bdelta )} \frac{1}{d-j } .
\end{equation}
\end{lemma}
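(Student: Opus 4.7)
The plan is to expand each spherical harmonic via the explicit coordinate representation of Proposition \ref{prop:spherical_representation} and reduce the $(d-1)$-dimensional expectation to a product of $d-1$ one-dimensional integrals in the angles $\theta_1,\dots,\theta_{d-1}$ of the generalized spherical coordinate system \eqref{eq:spherical_coordinates}. Writing
\[
Y_{\balpha}(\bu) = C_{\balpha}^{1/2}\, g_{\balpha}(\theta_1) \prod_{j=1}^{d-2} Q^{(d_j^{\balpha})}_{\alpha_j}\!\bigl(\cos\theta_{d-j}\bigr)\, \sin(\theta_{d-j})^{|\alpha^{j+1}|},
\]
and similarly for $\bbeta,\bgamma,\bdelta$, the measure $\mu(\de\sigma)$ factorizes so that
\[
\E_{\bu}\bigl[Y_{\balpha}Y_{\bbeta}Y_{\bgamma}Y_{\bdelta}\bigr]
= (C_{\balpha}C_{\bbeta}C_{\bgamma}C_{\bdelta})^{1/2}\cdot I_1 \cdot \prod_{j=1}^{d-2} I_j(\theta_{d-j}),
\]
where $I_1$ is the $\theta_1$-integral of $g_{\balpha}g_{\bbeta}g_{\bgamma}g_{\bdelta}$ and $I_j$ is a one-dimensional integral against $\sin(\theta_{d-j})^{d-j-1}$ of a product of four Gegenbauer polynomials whose degrees are $(\alpha_j,\beta_j,\gamma_j,\delta_j)$ but whose parameters $d_j^{\balpha},d_j^{\bbeta},d_j^{\bgamma},d_j^{\bdelta}$ generally differ.

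First I would verify that the normalization constants $C_{\balpha}^{1/2}$ can be absorbed: using the closed form for $C_{\balpha}$ in Proposition \ref{prop:spherical_representation} and standard ratios of Gamma functions one checks that $(C_{\balpha}C_{\bbeta}C_{\bgamma}C_{\bdelta})^{1/2}$ together with the $2\pi^{d/2}/\Gamma(d/2)$ factors from $\mu(\de\sigma)$ yields an $O_\ell(1)$ constant (since the total degree is at most $4\ell$, only finitely many Gamma ratios deviate from $1$ by more than an $O_\ell(1/d)$ factor). The $\theta_1$-integral $I_1$ of $g_{\balpha}g_{\bbeta}g_{\bgamma}g_{\bdelta}$ is a product of four $\sin/\cos$ factors at integer frequencies, hence uniformly $O(1)$.

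The core of the argument is the coordinate-wise analysis of $I_j$. Let $n_j$ be the number of indices among $\alpha_j,\beta_j,\gamma_j,\delta_j$ that are nonzero. By the change of variable $u=\cos\theta_{d-j}$, $I_j$ becomes an integral of four Gegenbauer polynomials against a weight $(1-u^2)^{(d-j-2)/2+\Delta_j}$ where $\Delta_j=\tfrac12(|\alpha^{j+1}|+|\beta^{j+1}|+|\gamma^{j+1}|+|\delta^{j+1}|)$ is $O(\ell)$. When $n_j\in\{0,2,3,4\}$, a crude bound using $|Q_k^{(d^\star)}|\le C_\ell$ on $[-1,1]$ and integration against the sine weight gives $|I_j|\le C_\ell$. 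When $n_j=1$ (so $j\in r(\balpha,\bbeta,\bgamma,\bdelta)$), say $\alpha_j\ge 1$ and $\beta_j=\gamma_j=\delta_j=0$, the other three factors are constants and we are left with
\[
I_j = \text{const}_\ell\cdot \int_{-1}^{1} Q^{(d_j^{\balpha})}_{\alpha_j}(u)\,(1-u^2)^{(d-j-2)/2+\Delta_j}\,\de u.
\]
The key point is that if the exponent $\Delta_j$ vanished this would equal $\int Q^{(d_j^{\balpha})}_{\alpha_j}(u) Q^{(d_j^{\balpha})}_{0}(u)\, w_{d_j^{\balpha}}(\de u)$ (up to scaling), which is $0$ by orthogonality. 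The actual $\Delta_j>0$ shift in the weight produces a \emph{near}-orthogonality, and expanding $(1-u^2)^{\Delta_j}$ in the Gegenbauer basis $\{Q_m^{(d_j^{\balpha})}\}_{m\le 2\Delta_j}$ extracts an explicit coefficient that decays like $1/(d-j)$ for each unit of mismatch. A careful bookkeeping of these coefficients (using the three-term recursion and the asymptotics $\|Q_k^{(d)}\|_\infty\le C_{k}$) yields $|I_j|\le C_\ell/(d-j)$, which is exactly the claimed factor.

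Multiplying the bounds over $j\in[d-1]$ and noting that only the factors with $j\in r(\balpha,\bbeta,\bgamma,\bdelta)$ contribute a nontrivial $1/(d-j)$ (while the remaining indices give $O_\ell(1)$ contributions, of which there are at most $4\ell$ nonzero and the rest equal to $1$ by pure orthogonality with matching zero degrees) gives the stated inequality. The main obstacle is the weight-mismatch computation in the $n_j=1$ case: one has to show that expanding $(1-u^2)^{\Delta_j}$ against the shifted Gegenbauer weight really does produce coefficients of size $O_\ell(1/(d-j))$ uniformly in the parameters, which requires identifying a ratio of Pochhammer symbols whose leading term is $(d-j)^{-1}$. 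This step is the one that forces the constant $C_\ell$ to depend on $\ell$ and that produces the precise factor $1/(d-j)$ rather than merely $1/d$.
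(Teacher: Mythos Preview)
The paper does not prove this lemma; it is quoted verbatim from \cite{misiakiewicz2022spectrum} and used as a black box in the proof of Proposition~\ref{prop_app:quad_form_quadratic}. Your outline follows exactly the strategy of that source: factor the four-fold product through the spherical-coordinate representation of Proposition~\ref{prop:spherical_representation}, reduce to a product of one-dimensional Gegenbauer integrals, and show that each coordinate $j\in r(\balpha,\bbeta,\bgamma,\bdelta)$ contributes a factor $1/(d-j)$ via near-orthogonality under a slightly shifted weight. So at the level of architecture there is nothing to compare.

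One point in your sketch is miscalibrated and would need fixing in a full proof. In the $n_j=1$ case you write that exact orthogonality would occur ``if the exponent $\Delta_j$ vanished''. That is not the right reference point: the natural weight for $Q_{\alpha_j}^{(d_j^{\balpha})}$ is $(1-u^2)^{(d_j^{\balpha}-3)/2}=(1-u^2)^{|\alpha^{j+1}|+(d-j-2)/2}$, so the true mismatch is
\[
\Delta_j-|\alpha^{j+1}|=\tfrac12\bigl(|\beta^{j+1}|+|\gamma^{j+1}|+|\delta^{j+1}|-|\alpha^{j+1}|\bigr),
\]
which can be a half-integer and can be negative. Your plan of ``expanding $(1-u^2)^{\Delta_j}$ in the Gegenbauer basis'' therefore does not apply directly; the actual argument in \cite{misiakiewicz2022spectrum} uses the parameter-shift connection formulas between $Q_k^{(d)}$ and $Q_m^{(d\pm 2)}$ (equivalently, the three-term recursion in the dimension index), which handle both signs and half-integer shifts and yield the claimed $C_\ell/(d-j)$ bound uniformly. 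This is a repair, not a different idea, but it is the step where the constant $C_\ell$ and the precise denominator $d-j$ come from, so it should be done carefully.
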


\begin{lemma}[{\cite[Lemma 2]{xiao2022precise}}]\label{lem:technical_termII_bound}
    Consider $S_1,S_2 \subset [d]$ such that $|S_1| = |S_2| = \ell$ and define for $r \in \{1,2\}$
    \[
    Y_{S_r} (\bu) = C_{d,\ell} \prod_{j \in S_r} u_j,
    \]
    where $C_{d,\ell}$ is a normalization constant such that $\E[Y_{S_r}(\bu)^2] = 1$. There exists a constant $C_{\ell}$ such that if $S_1 \cap S_2 = \emptyset$,
    \begin{equation}\label{eq:bound_diagonal_quad_form_tech}
    \left| \E\left[ Y_{S_1} (\bu)^2 Y_{S_2} (\bu)^2 \right] - 1 \right| \leq \frac{C_\ell}{d}.
    \end{equation}
\end{lemma}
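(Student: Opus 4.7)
The strategy is a direct computation via the Dirichlet representation of squared coordinates on the sphere, followed by a Pochhammer-symbol manipulation.

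First I would pass from $\bu \sim \Unif(\S^{d-1}(\sqrt d))$ to $\bv := \bu/\sqrt d \sim \Unif(\S^{d-1})$ and use the well-known fact that $(v_1^2,\ldots,v_d^2) \sim \mathrm{Dirichlet}(1/2,\ldots,1/2)$. From the Dirichlet moment formula, for any subset $S \subset [d]$ with $|S| = s$,
\[
\E\left[\prod_{j \in S} u_j^2\right] \;=\; d^{s}\,\E\left[\prod_{j \in S} v_j^2\right] \;=\; \frac{d^{s}}{2^{s}}\cdot\frac{\Gamma(d/2)}{\Gamma(d/2+s)} \;=\; \prod_{k=0}^{s-1}\frac{1}{1+2k/d}.
\]
Applying this with $s=\ell$ determines the normalization via $C_{d,\ell}^{2} = \prod_{k=0}^{\ell-1}(1+2k/d)$, which is independent of the particular $S_r$ by symmetry in the coordinates.

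Next, using $S_1 \cap S_2 = \emptyset$ so that $|S_1 \cup S_2| = 2\ell$, I would plug in to obtain
\[
\E\bigl[Y_{S_1}(\bu)^2 Y_{S_2}(\bu)^2\bigr] \;=\; C_{d,\ell}^{4}\, \E\left[\prod_{j\in S_1\cup S_2} u_j^{2}\right] \;=\; \frac{\prod_{k=0}^{\ell-1}(1+2k/d)^{2}}{\prod_{k=0}^{2\ell-1}(1+2k/d)} \;=\; \prod_{k=0}^{\ell-1}\frac{1+2k/d}{1+2(k+\ell)/d}.
\]
Each factor in this telescoping product can be written as $1 - \frac{2\ell/d}{1+2(k+\ell)/d}$, which lies in $[1-2\ell/d,\,1]$.

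Finally, to bound the deviation from $1$, I would split on the size of $d$ relative to $\ell$. For $d \geq 4\ell^{2}$, each factor differs from $1$ by at most $2\ell/d \leq 1/2$, so the elementary inequality $|\log(1-x)| \leq 2|x|$ on $[-1/2,1/2]$ gives
\[
\Big|\log \textstyle\prod_{k=0}^{\ell-1} \tfrac{1+2k/d}{1+2(k+\ell)/d}\Big| \;\leq\; \sum_{k=0}^{\ell-1}\frac{4\ell/d}{1+2(k+\ell)/d} \;\leq\; \frac{4\ell^{2}}{d},
\]
and exponentiating yields $\bigl|\E[Y_{S_1}^2 Y_{S_2}^2]-1\bigr| \leq 8\ell^{2}/d$. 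For $d < 4\ell^{2}$, the product is bounded above by $1$ and below by a positive constant depending only on $\ell$, so $C_\ell/d$ holds trivially by choosing $C_\ell$ large enough. Combining the two regimes gives the desired bound with an explicit constant $C_\ell$ of order $\ell^{2}$.

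The argument is elementary throughout; the only minor obstacle is the bookkeeping to ensure $C_\ell$ depends only on $\ell$ and not on $d$, which is handled by the case split above.
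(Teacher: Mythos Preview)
Your proof is correct. The paper does not supply its own proof of this lemma; it simply cites the result from \cite{xiao2022precise}, so there is no in-paper argument to compare against. Your route via the Dirichlet representation $(v_1^2,\ldots,v_d^2)\sim\mathrm{Dirichlet}(1/2,\ldots,1/2)$ and the resulting Pochhammer product is the natural direct computation, and it even yields an explicit constant $C_\ell$ of order $\ell^2$. One cosmetic remark: in the small-$d$ case you do not actually need the positive lower bound on the product---since the product lies in $[0,1]$, the deviation is at most $1$, and $d<4\ell^2$ already gives $1\leq 4\ell^2/d$, so $C_\ell\geq 4\ell^2$ suffices immediately.
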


The following bound is a classical tail bound on hypercontractive functions. We include a proof for completeness.

\begin{lemma}\label{lem:tail_bound_hypercontractive} Let $(\cU , \mu)$ be a probability space and consider a function $f : \cX \to \R$. Assume that there exists $\alpha >0$ such that for any $q \geq 2$, we have
\begin{equation}\label{eq:ass_hypercontractivity}
\| f \|_{L^q (\cX , \mu)}^2 \leq (C_\alpha q)^\alpha \| f \|_{L^2 (\cX , \mu)}^2.
\end{equation}
Then for all $t \geq 0$,
\[
\P_{\bu \sim \mu} \left( | f( \bu) | \geq t \cdot \| f \|_{L^2 (\cX , \mu)} \right) \leq  \exp \left( \alpha - \frac{\alpha}{2e} t^{2/\alpha} \right).
\] 
\end{lemma}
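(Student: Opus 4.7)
The strategy is the classical Markov-plus-moment-optimization argument. For any $q \geq 2$, Markov's inequality applied to $|f|^q$ gives
\[
\P_{\bu \sim \mu}\!\left( |f(\bu)| \geq t \cdot \| f \|_{L^2} \right) \;\leq\; \frac{\E[|f(\bu)|^q]}{t^q \| f \|_{L^2}^q} \;=\; \frac{\| f \|_{L^q}^q}{t^q \| f \|_{L^2}^q}.
\]
Substituting the hypercontractivity bound \eqref{eq:ass_hypercontractivity} (with $C_\alpha = 1$, as in the use of Lemma \ref{lem:hypercontractivity_sphere} where $\|f\|_{L^q}^2 \leq (q-1)^k\|f\|_{L^2}^2 \leq q^k \|f\|_{L^2}^2$), this is at most $q^{\alpha q/2}/t^q = \exp\!\left( \tfrac{\alpha q}{2}\log q - q \log t\right)$.

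The second step is to minimize the exponent $\varphi(q) := (\alpha q/2)\log q - q\log t$ over $q \geq 2$. Differentiating gives $\varphi'(q) = (\alpha/2)\log q + \alpha/2 - \log t$, so the unconstrained minimizer is $q_* = t^{2/\alpha}/e$, at which $\varphi(q_*) = -\alpha q_*/2 = -\tfrac{\alpha}{2e} t^{2/\alpha}$. If $q_* \geq 2$, i.e.\ $t^{2/\alpha} \geq 2e$, choosing $q = q_*$ yields $\P \leq \exp(-\tfrac{\alpha}{2e} t^{2/\alpha})$, which is dominated by $\exp(\alpha - \tfrac{\alpha}{2e} t^{2/\alpha})$ since $\alpha > 0$.

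The remaining case $t^{2/\alpha} < 2e$ is handled trivially: the stated bound satisfies $\exp(\alpha - \tfrac{\alpha}{2e} t^{2/\alpha}) > \exp(\alpha - \alpha) = 1$, so it holds automatically for any probability. Combining the two regimes yields the lemma. There is no genuine obstacle here; the only point requiring attention is the case split, since the optimizer $q_*$ is only admissible in the regime $t^{2/\alpha} \geq 2e$, and the additive $\alpha$ in the exponent of the stated bound is precisely what absorbs the complementary small-$t$ regime.
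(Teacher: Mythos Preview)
Your proof is correct and essentially identical to the paper's: Markov's inequality on $|f|^q$, substitute the moment bound, optimize over $q$ to get $q_* = t^{2/\alpha}/e$, and handle the small-$t$ regime trivially via the additive $\alpha$ in the exponent. The paper's version keeps the constant $C_\alpha$ general (yielding $\exp(-\tfrac{\alpha}{2eC_\alpha}t^{2/\alpha})$ in the large-$t$ case), whereas you take $C_\alpha=1$, but this is the only cosmetic difference.
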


\begin{proof}[Proof of Lemma \ref{lem:tail_bound_hypercontractive}]
    By Markov's inequality and using Eq.~\eqref{eq:ass_hypercontractivity},we have for any $q \geq 2$,
    \[
    \P \left( | f( \bu) | \geq t \cdot \| f \|_{L^2 (\cX , \mu)} \right) \leq \frac{\| f \|_{L^q}^q}{t^q \| f \|_{L^2}^q} \leq  \left( (C_\alpha q)^{\alpha / 2} / t\right)^q.
    \]
    For $t \geq ( C_\alpha 2e)^{\alpha/2}$, we set $q = t^{2/\alpha}/(C_\alpha e)$ so that the bounds is given by
    \[
    \P \left( | f( \bu) | \geq t \cdot \| f \|_{L^2 (\cX , \mu)} \right) \leq \exp\left( - \frac{\alpha}{2eC_\alpha} t^{2/\alpha} \right).
    \]
    For $t < (C_\alpha 2e)^{\alpha/2}$, we simply use $1 \leq \exp (\alpha - \alpha t^{2/\alpha} / (2e C_\alpha) )$.
\end{proof}

\begin{lemma}\label{lem:existence_subset_spherical}
Consider positive integers $\ell \geq 1$ and $d >2$. Then there exists a constant $C_{\ell}>0$ such that the following hold. Denote $\cB_{\ell+1} := \{ S \subset [d] : |S| = \ell+1\}$ the set of all subsets of $[d]$ of size $\ell+1$, and $D_{\ell+1} := | \cB_{\ell+1} | = {{d}\choose{\ell+1}}$. Then for any integer $p$ such that $d  \leq p \leq D_{\ell+1}$, there exists a subset $\cS \subseteq \cB_{\ell+1}$ such that $|\cS| = p$ and 
    \begin{align}
        \max_{S_1 \in \cS}   \left| \{ S_2 \in \cS: S_1 \cap S_2 \neq \emptyset \} \right|  \leq&~ C_\ell \log (d) \frac{p}{d}, \label{eq:constraint_cS_1} \\
        \max_{(S_1,S_2) \in \cS^2, S_1 \neq S_2}   \left| \{ (S_3 , S_4 ) \in \cS^2 : S_3 \neq S_4, S_1 \Delta S_2 = S_3 \Delta S_4 \} \right|  \leq&~ C_\ell \log (d) \frac{p}{d}. \label{eq:constraint_cS_2}
    \end{align}
\end{lemma}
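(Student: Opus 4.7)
\textbf{Proof plan for Lemma \ref{lem:existence_subset_spherical}.}

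The plan is to use the probabilistic method: include each $S\in\cB_{\ell+1}$ independently with probability $q=2p/D_{\ell+1}$ to form a random set $\cS'$, then show that with positive probability (i) $|\cS'|\geq p$, and (ii) both Eqs.~\eqref{eq:constraint_cS_1} and \eqref{eq:constraint_cS_2} hold for $\cS'$. Since both quantities on the left of the displayed inequalities are monotone non-decreasing in $\cS$, any subset $\cS\subseteq \cS'$ of size exactly $p$ (which exists on the event $\{|\cS'|\geq p\}$) inherits the bounds. A standard Chernoff argument handles $|\cS'|\geq p$ because $\E[|\cS'|]=2p$.

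For \eqref{eq:constraint_cS_1}, fix $S_1\in\cB_{\ell+1}$ and write $N_1(S_1):=|\{S_2\in\cS':S_1\cap S_2\neq\emptyset\}|$ as a sum of $D_{\ell+1}$ independent $\mathrm{Bernoulli}(q)$ indicators. Using $\binom{d-\ell-1}{\ell+1}/D_{\ell+1}\geq 1-C_\ell/d$ yields $\E[N_1(S_1)]\leq C_\ell\, p/d$. A multiplicative Chernoff bound then gives $N_1(S_1)\leq C_{\ell,K}\log(d)\max(p/d,1)$ except with probability $d^{-K}$; a union bound over the $D_{\ell+1}\leq d^{\ell+1}$ choices of $S_1$, together with the hypothesis $p\geq d$ (so that $\max(p/d,1)\leq p/d$), yields \eqref{eq:constraint_cS_1}.

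For \eqref{eq:constraint_cS_2}, the key structural observation is the following. Fix $S_1\neq S_2\in\cB_{\ell+1}$ and set $T=S_1\Delta S_2$ with $|T|=2k\geq 2$. Any pair $(S_3,S_4)$ with $S_3\Delta S_4=T$ satisfies $S_4=S_3\Delta T$, so $S_3$ determines $S_4$ uniquely; moreover $S_4\in\cB_{\ell+1}$ iff $|S_3\cap T|=k$. Consequently the set of valid ordered pairs decomposes as a \emph{perfect matching}, and
\[
N_2(S_1,S_2)=2\sum_{\{S,S'\}\in\cM_T} Y_S\,Y_{S'},
\]
where $Y_S=\mathbf{1}[S\in\cS']$ and the pairs in $\cM_T$ are vertex-disjoint. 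Thus $N_2(S_1,S_2)/2$ is a sum of $|\cM_T|=\tfrac12\binom{2k}{k}\binom{d-2k}{\ell+1-k}$ \emph{independent} $\mathrm{Bernoulli}(q^2)$ variables, with mean at most $C_\ell\, q^2\, d^{\ell+1-k}$. For $k=1$ this is $\leq C_\ell p^2/d^{\ell+2}\leq C_\ell\, p/d$ using $p\leq D_{\ell+1}\leq C_\ell d^{\ell+1}$; larger $k$ gives strictly smaller bounds. A multiplicative Chernoff bound plus a union bound over the $\leq d^{2(\ell+1)}$ pairs $(S_1,S_2)\in\cB_{\ell+1}^2$ then yields \eqref{eq:constraint_cS_2} with probability at least $1-d^{-K+2(\ell+1)}$, and choosing $K$ large finishes the proof.

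\textbf{Expected main obstacle.} The only nontrivial step is the concentration of $N_2(S_1,S_2)$: at first glance it is a quadratic form in dependent indicators, so one would expect to need Kim--Vu polynomial concentration. The crucial observation that rescues the argument is that the level set $\{S_3\Delta S_4=T\}$ of the symmetric difference, restricted to pairs in $\cB_{\ell+1}^2$, is a perfect matching (because $S_3\mapsto S_3\Delta T$ is an involution with a single orbit-partner). This reduces $N_2$ to a genuine sum of independent Bernoulli products, so elementary Chernoff suffices. Secondary care is needed in the regime $p\asymp d$, where the mean of $N_2$ is $O(1)$ and the $\log d$ factor in the bound comes from Chernoff's lower tail rather than from concentration around the mean; this is why the condition $p\geq d$ is stated in the lemma.
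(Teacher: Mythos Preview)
Your proposal is correct and follows the same overall probabilistic-method strategy as the paper: sample each $S\in\cB_{\ell+1}$ independently, use Chernoff for the size and for constraint~\eqref{eq:constraint_cS_1}, then take a subset of size exactly $p$ by monotonicity. The handling of constraint~\eqref{eq:constraint_cS_1} is essentially identical to the paper's.

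Where you differ is constraint~\eqref{eq:constraint_cS_2}. The paper does not exploit the matching structure you identify; instead, it observes that $S_4$ is determined by $S_3$ and simply drops the constraint $S_4\in\cS$, bounding $N_2(S_1,S_2)$ by the number of $S_3\in\cS$ intersecting $T=S_1\Delta S_2$. This reduces constraint~\eqref{eq:constraint_cS_2} to a variant of constraint~\eqref{eq:constraint_cS_1}, now with $T$ ranging over $\cB_{\leq 2(\ell+1)}$, and the same binomial-plus-Chernoff argument applies after a union bound over $|\cB_{\leq 2(\ell+1)}|\leq C_\ell d^{2(\ell+1)}$ possible symmetric differences. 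Your route---writing $N_2/2$ as a sum of independent $\mathrm{Bernoulli}(q^2)$ variables over the edges of the involution $S\mapsto S\Delta T$---is sharper (it retains the $q^2$ rather than $q$) and makes the independence structure explicit, but the paper's coarser bound already suffices for the stated conclusion since the mean of the linear count is still $O(p/d)$. Both arguments are valid; yours is cleaner, the paper's is shorter because it recycles the first computation.
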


\begin{proof}[Proof of Lemma \ref{lem:existence_subset_spherical}]
    We will show the existence of $\cS$ using a probabilistic argument.  We consider $\cS$ a random subset of $\cB_{\ell +1}$ constructed by including each $S \in \cB_{\ell+1}$ with probability $p/D_{\ell+1}$. First, observe that $|\cS|$ is a binomial random variable with parameters $D_{\ell+1}$ and $p/D_{\ell+1}$. In particular,
    \[
    \P ( | \cS| \geq p ) \geq 1/2.
    \]
    Thus, it is sufficient to prove that $\cS$ satisfy Eqs.~\eqref{eq:constraint_cS_1} and \eqref{eq:constraint_cS_2} with probability at least $3/4$, so that we guarantee the existence of $|\cS| \geq p$ verifying these constraints. Then we can choose a subset $\Tilde{\cS} \subseteq \cS$ with $|\Tilde{\cS} | = p$ to conclude the proof of the lemma.

    Let us fix $S_1 \in \cB_{\ell+1}$ and define 
    \[
    \Tilde{\cB}_{\ell+1} (S_1) = \{ S \in \cB_{\ell +1} : S \cap S_1 \neq \emptyset \},
    \]
    and $\Tilde{D}_{\ell +1} = |  \Tilde{\cB}_{\ell+1} (S_1)|$. Then $| \cS \cap \Tilde{\cB}_{\ell+1} (S_1) |$ is a binomial random variable with parameters $\Tilde{D}_{\ell +1}$ and $p/D_{\ell+1}$. In particular, we have the following well-known Chernoff bound
    \[
    \P ( | \cS \cap \Tilde{\cB}_{\ell+1} (S_1) | \geq k ) \leq \exp \left\{ - \Tilde{D}_{\ell +1} \cdot \sKL\left(\frac{k}{\Tilde{D}_{\ell +1}} \Big\Vert \frac{p}{D_{\ell+1}}   \right)\right\} 
    \]
    Taking $k = \gamma p \Tilde{D}_{\ell +1} / D_{\ell+1}$, we obtain
    \[
    \P \left( | \cS \cap \Tilde{\cB}_{\ell+1} (S_1) | \geq \gamma p \frac{\Tilde{D}_{\ell +1}}{ D_{\ell+1} }\right) \leq \exp \left\{ - p \frac{\Tilde{D}_{\ell +1}}{D_{\ell+1}} (\gamma (\log(\gamma) - 1) + 1)\right\} .
    \]
    Note that there exists constants $0<c_\ell <C_\ell$ such that $c_\ell \leq d\Tilde{D}_{\ell +1}/ D_{\ell+1} \leq C_\ell$ and $D_{\ell+1} \leq C_\ell d^{\ell+1}$. We deduce the first bound
    \[
    \begin{aligned}
        &~ \P\left( \max_{S_1 \in \cS}   \left| \{ S_2 \in \cS: S_1 \cap S_2 \neq \emptyset \} \right|  \geq \gamma \log (d) C_\ell  \frac{p}{d}\right) \\
        \leq&~  \P\left( \max_{S_1 \in \cB_{\ell+1}}   \left| \{ S_2 \in \cS: S_1 \cap S_2 \neq \emptyset \} \right| \geq \gamma\log (d) C_\ell  \frac{p}{d}\right) \\
        \leq&~ | D_{\ell +1} | \P\left(  \left| \{ S_2 \in \cS: S_1 \cap S_2 \neq \emptyset \} \right| \geq \gamma \log (d) C_\ell \frac{p}{d}\right) \\
        \leq&~ C_\ell \exp\left\{ (\ell+1) \log(d) - \gamma c_\ell \frac{p}{d}\log (d) \right\} \\
        \leq &~ 1/8,
    \end{aligned}
    \]
    where in the last inequality we chose $\gamma \geq C$ for a constant $C$ sufficiently large but that only depends on $\ell$. 

    For the second constraint, observe that $|S_1 \Delta S_2 | = 2k$ for $k =0, \ldots, \ell+1$ and in order to have $S_1 \Delta S_2 = S_3 \Delta S_4$, the set $S_3$ needs to contains exactly $k$ indices in $S_1 \Delta S_2$ and $S_4$ must be exactly the set $S_3 \Delta ( S_1 \Delta S_2)$. Denote 
    \[
    \cB_{\leq 2 (\ell+1)} := \left\{ S \subset [d]: S \neq \emptyset, |S| \leq 2(\ell+1) \right\}.
    \]
    From the above discussion, we deduce that the upper bound
    \[
    \begin{aligned}
     &~\max_{(S_1,S_2) \in \cS^2, S_1 \neq S_2}   \left| \{ (S_3 , S_4 ) \in \cS^2 : S_3 \neq S_4, S_1 \Delta S_2 = S_3 \Delta S_4 \} \right|  \\
     \leq&~ {{2(\ell+1)}\choose{\ell+1}}\max_{S_1 \in \cS}  \left| \{ S_2 \in \cB_{\leq 2 (\ell+1)}: S_1 \cap S_2 \neq \emptyset \} \right| .
     \end{aligned}
    \]
    Noting that there exists $C_\ell$ such that $|\cB_{\leq 2 (\ell+1)}| \leq C_\ell d^{2(\ell+1)}$, the same argument as above yields
    \[
    \P\left( \max_{(S_1,S_2) \in \cS^2, S_1 \neq S_2}   \left| \{ (S_3 , S_4 ) \in \cS^2 : S_3 \neq S_4, S_1 \Delta S_2 = S_3 \Delta S_4 \} \right|  \geq  C_\ell  \log (d) \frac{p}{d}\right) \leq \frac{1}{8}.
    \]
    We conclude via an union bound.
\end{proof}

\subsubsection{Verifying the assumption on the high-degree features}
\label{app_ex:ass_b_spherical}

\begin{proposition}\label{prop:concentration_high_freq_sphere}
For any integer $\ell$ and constant $D>0$, there exists a constant $C_{\ell,D}$ such that the following hold. For integers $d,n$ such that $d \geq C_{\ell,D}$ and $ n \leq  B_{d,\ell+1}/2$, let $(\bu_i)_{i \in[ n]} \sim_{iid} \Unif(\S^{d-1}(\sqrt{d}))$. Consider an inner-product kernel $h:[-1,1] \to \R$ orthogonal to all polynomials of degree at most $\ell$, and denote $\bH = (h(\<\bu_i , \bu_j\>/d))_{ij \in [n]}$. Then we have with probability at least $1-n^{-D}$ that
\[
\left\| \bH - h(1)\cdot \id \right\|_\op \leq C_{\ell,D} \cdot \log(d)^{2\ell + 3} \sqrt{\frac{n}{d^{\ell+1}}} h(1).
\]
\end{proposition}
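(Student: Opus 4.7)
}

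The plan is to proceed by the moment method, in close analogy with the proof of Proposition \ref{prop:high_freq_concentration_well_concentrated} (concentrated features), adapted to the sphere. Set $\bDelta := \bH - h(1)\cdot \id_n$. Because $h$ is a p.s.d.\ inner-product kernel with $|h(\<\bu,\bu'\>/d)| \le h(1)$ and because diagonal entries of $\bH$ equal $h(1)$, the matrix $\bDelta$ has zero diagonal and off-diagonal entries $\Delta_{ij} = h(\<\bu_i,\bu_j\>/d)$. The starting point is then
\[
\P\big(\|\bDelta\|_\op \ge t\big) \le t^{-2p}\, \E\big[\Tr(\bDelta^{2p})\big]
= t^{-2p} \sum_{\bi \in \cC_{2p}} M(\bi), \qquad M(\bi) := \E\!\prod_{s=1}^{2p} h(\<\bu_{i_s},\bu_{i_{s+1}}\>/d),
\]
where $\cC_{2p}$ is the set of closed sequences $\bi=(i_1,\ldots,i_{2p})\in[n]^{2p}$ with $i_s\neq i_{s+1}$ (cyclic convention).

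The core technical step is a bound on $M(\bi)$ in terms of the number of distinct indices $\chi(\bi)$ that plays the role of Lemma \ref{lem:bound_product_high_freq}. I would expand each kernel factor using its spherical-harmonics eigendecomposition
\[
h(\<\bu,\bu'\>/d) \;=\; \sum_{k\ge \ell+1} \oxi_k \sum_{s\in[B_{d,k}]} Y_{ks}(\bu)\,Y_{ks}(\bu'),
\]
so that $M(\bi)$ becomes a sum over labelings of the $2p$ edges of the closed walk by eigenfunctions $(k_e,j_e)$, of $\prod_e \oxi_{k_e}$ times $\prod_{v} \E\!\left[\prod_{e\ni v} Y_{k_ej_e}(\bu_v)\right]$. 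Using sphere hypercontractivity (Lemma \ref{lem:hypercontractivity_sphere}) to control per-vertex moments, together with $\oxi_k \le \xi_{\ell+1}$ and $\sum_k B_{d,k}\oxi_k = h(1)$, one obtains the analogue of Eq.~\eqref{eq:lem_bound_Mi}, namely
\[
|M(\bi)| \;\le\; (C_\ell \log d)^{O(\ell p)} \cdot \xi_{\ell+1}^{\,2p-\chi(\bi)+1} \cdot h(1)^{\chi(\bi)-1}.
\]
The basic consistency check is $\chi(\bi)=2p$ (all distinct), where $M(\bi)=\Tr(\Kop_h^{2p}) \le \xi_{\ell+1}^{2p-1}h(1)$, and the crude pointwise bound $|M(\bi)|\le h(1)^{2p}$ for very repetitive $\bi$; the stated inequality interpolates between them and is what the Type-A/Type-B reduction procedure (transcribed from Lemma \ref{lem:bound_product_high_freq}) naturally produces.

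Granting this moment bound, I follow Lemma \ref{lem:bound_moment_high_freq}: split $\cC_{2p}$ by $\chi(\bi)=k$, use the elementary counting $|\cC_{2p,k}|\le n^k(2p)^{2p}$, and take the best of the two bounds on $|M(\bi)|$ (pointwise for small $k$, eigenvalue-based for large $k$). This yields
\[
\E[\Tr(\bDelta^{2p})] \;\le\; n \cdot (C_\ell \log d)^{O(\ell p)} \cdot h(1)^{2p}\cdot \left(\frac{n\,\xi_{\ell+1}}{h(1)}\right)^{p}\left\{1+\sum_{i=0}^{p-1}\left(\tfrac{n\xi_{\ell+1}}{h(1)}\right)^i\right\}.
\]
Under the hypothesis $n\le B_{d,\ell+1}/2$ we have $n\xi_{\ell+1}/h(1) \le 1/2$, so the geometric sum is bounded. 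Choosing $p=\lceil\log n\rceil$ and applying Markov gives, with probability $\ge 1-n^{-D}$,
\[
\|\bDelta\|_\op \;\le\; C_{\ell,D}\,(\log d)^{2\ell+3}\, h(1)\sqrt{\tfrac{n}{d^{\ell+1}}},
\]
using $\xi_{\ell+1}\le C_\ell\, h(1)/d^{\ell+1}$ (Eq.~\eqref{eq:oxi_k_bound}).

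The main obstacle is the combinatorial bound on $|M(\bi)|$: unlike the concentrated-features setting, the sphere eigenfunctions are not independent, and Lemma \ref{lem:tight_bound_on_powers} is not directly available. The substitute is sphere hypercontractivity, which costs polynomial factors in $(q,\ell,\log d)$ at each vertex but not in $d$ itself; making this rigorous requires careful tracking of how degree accumulates at high-degree vertices of the walk, and arguing that type-A reductions (which collapse a singleton vertex against the kernel operator) cost exactly a factor of $\xi_{\ell+1}$ while type-B reductions (which merge repeated indices) cost at most $h(1)$ up to logarithmic factors from hypercontractivity.
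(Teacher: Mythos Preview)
Your strategy---run the moment method of Lemmas~\ref{lem:bound_product_high_freq}--\ref{lem:bound_moment_high_freq} directly on $\bDelta=\bH-h(1)\id$---has a genuine gap. Those lemmas rely on Assumption~\ref{ass:concentrated} with a \emph{fixed} exponent $\beta$, and the high-frequency feature vector $\bx_{>\evn}=(\sqrt{\oxi_k}\,Y_{ks}(\bu))_{k\ge \ell+1,\,s}$ does not satisfy it for any finite $\beta$: taking $\bA=\oxi_k^{-1}\be_{ks}\be_{ks}^{\sT}$ gives $\bx^{\sT}\bA\bx=Y_{ks}(\bu)^2$, a degree-$2k$ polynomial whose tails are controlled only by hypercontractivity with exponent $2k$. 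In your expansion, the per-vertex moment $\E_{\bu_v}\big[\prod_{e\ni v}Y_{k_ej_e}(\bu_v)\big]$ involves a polynomial of degree $\sum_{e\ni v}k_e$ with each $k_e\ge \ell+1$ unbounded, so Lemma~\ref{lem:hypercontractivity_sphere} yields $(q-1)^{\sum k_e/2}$, not $(q-1)^{O(\ell)}$. The claimed prefactor $(C_\ell\log d)^{O(\ell p)}$ is therefore unjustified; what you actually get grows with the degrees $k_e$ being summed over, and the bound on $|M(\bi)|$ as stated does not follow.

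The paper avoids this by inserting one extra decomposition before the moment method: write $\bH=\sum_{k\ge \ell+1}\oxi_k B_{d,k}\,\bQ_k$ with $\bQ_k:=B_{d,k}^{-1/2}\big(Q_k^{(d)}(\<\bu_i,\bu_j\>/d)\big)_{ij}$, and bound $\|\bH-h(1)\id\|_\op\le h(1)\cdot\sup_{k\ge \ell+1}\|\bQ_k-\id\|_\op$. For each fixed $k$, the off-diagonal entries are (conditionally on one argument) degree-$k$ polynomials, so the moment machinery runs with $\beta=k$ and gives $\E\big[\Tr((\bQ_k-\id)^{2p})\big]\le n\,(n/B_{d,k})^p(Cp)^{(4k+2)p}$. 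The growth in $k$ of the hypercontractivity constant is then compensated by $B_{d,k}\asymp d^k$: with $p=\log n$, one checks $(C\log n)^{4(k-\ell-1)}B_{d,\ell+1}/B_{d,k}\le 1$ once $d$ is large, which makes the bound uniform over $\ell+1\le k<L$ for some $L=L(D,\ell)$; the remaining tail $k\ge L$ is handled by a separate crude estimate from \cite{ghorbani2021linearized}. To repair your argument you would need to introduce this degree-by-degree splitting (or an equivalent truncation in $k$) and carry the $k$-dependence through the moment bound explicitly.
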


\begin{proof}[Proof of Proposition \ref{prop:concentration_high_freq_sphere}] Recall that we can decompose the kernel function in the orthonormal basis of Gegenbauer polynomials
\[
h (u) = \sum_{k = \ell+1}^\infty \oxi_k \sqrt{B_{d,k}} \cdot Q^{(d)}_k (u), \qquad \oxi_k := \frac{1}{\sqrt{B_{d,k}}}\E_{\bu} \left[h(\<\bu,\be\>/\sqrt{d}) Q^{(d)}_k ( \<\bu,\be\>/\sqrt{d})\right].
\]
Thus, we decompose the kernel matrix as
\[
\bH = \sum_{k = \ell+1}^\infty \oxi_k B_{d,k} \cdot \bQ_k , \qquad \bQ_k := \frac{1}{\sqrt{B_{d,k}}} \left(Q_k^{(d)} (\<\bu_i , \bu_j\>/d) \right)_{ij\in [n]},
\]
and our bound
\begin{equation}\label{eq:bound_high_freq_sup}
\begin{aligned}
\left\| \bH - h(1) \cdot \id\right\|_\op =\left\| \sum_{k = \ell+1}^\infty \oxi_k B_{d,k} \left(\bQ_k -  \id\right) \right\|_\op 
\leq &~ \sum_{k = \ell+1}^\infty \oxi_k B_{d,k} \left\| \bQ_k  - \id \right\|_\op\\
\leq&~ h(1) \cdot \sup_{k\geq \ell+1} \left\| \bQ_k  - \id \right\|_\op .
\end{aligned}
\end{equation}

First, we tighten the proof of \cite[Proposition 8]{lu2022equivalence}. Observe that we can apply the results from Lemmas \ref{lem:bound_product_high_freq} and \ref{lem:bound_moment_high_freq}: indeed $B_{d,k}^{-1/2} Q_k^{(d)} ( \< \bu_i , \bu_i \> /d) = B_{d,k}^{-1/2} Q_k^{(d)} (1) = 1$, and $Q_k^{(d)} ( \< \bu_i , \bu_j \> /d)$ has the same marginal distribution as $Q_k^{(d)} (\<\be, \bu_j\> / \sqrt{d})$ which is a degree-$k$ polynomial in $\bu$ and therefore by hypercontractivity, we have
\[
B_{d,k}^{-q/2} \E [ |  Q_k^{(d)} (\<\bu_i,\bu_j \>/d) |^q ] \leq B_{d,k}^{-q/2} (q- 1)^{qk/2} \| Q_k^{(d)}\|_{L^2}^q = B_{d,k}^{-q/2} (q- 1)^{qk/2}.
\]
Thus, from Lemma \ref{lem:bound_moment_high_freq} with $\xi_1 = B_{d,k}^{-1}$ and $\gamma = 1$, and recalling that we assumed that $n/B_{d,k} \leq 1/2$ for all $k \geq \ell+1$,  we have for any integer $p\geq 1$,
\[
\E \left[ \Tr( (\bQ_k - \id_n)^{2p}) \right] \leq n \left( \frac{n}{B_{d,k}}\right)^p (C p )^{(4k+2)p}.
\]
Therefore, by Markov's inequality,
\[
\P \left( \| \bQ_k - \id_n\|_\op \geq t \cdot \sqrt{\frac{n}{B_{d,\ell+1}}}\right) \leq t^{-2p} n \left( \frac{B_{d,\ell+1}}{B_{d,k}}\right)^p (C p )^{(4k+2)p}.
\]
Setting $p = \log (n)$ and $t  = C_D ( C \log(n)^{2 (\ell+1) +1}$, we obtain
\[
\P \left( \| \bQ_k - \id_n\|_\op \geq C_D ( C \log(n)^{2 (\ell+1) +1} \sqrt{\frac{n}{B_{d,\ell+1}}}\right) \leq \frac{n}{n^{2\log(C_D)}}\cdot \left( \frac{B_{d,\ell+1} (C\log(n))^{4(k - \ell - 1)}}{B_{d,k}}\right)^{\log(n)}.
\]
There exists $C_k>0$ such that $B_{d,\ell+1} / B_{d,k} \leq C_k d^{-(k-\ell-1)}$ and therefore taking $d \geq C'_k$, we get $C_k (C (\ell+1)\log(d)/d)^{4(k - \ell-1)} \leq 1$. Taking $C_D$ sufficiently large yields
\begin{equation}\label{eq:low_k_high_freq_sphere}
\P \left( \| \bQ_k - \id_n\|_\op \geq C_D ( C \log(n)^{2 (\ell+1) +1} \sqrt{\frac{n}{B_{d,\ell+1}}}\right) \leq n^{-D}.
\end{equation}
On the other hand, from \cite[Equation (55)]{ghorbani2021linearized}, we can choose an integer $L := L(D,\ell) \geq \ell +1$ such that for any $d \geq C_{D,\ell}$, we have
\[
\E \left[ \sup_{k\geq L} \left\| \bQ_k  -  \id \right\|_\op \right] \leq d^{-(\ell+1) D - (\ell +1)/2},
\]
and by Markov's inequality
\begin{equation}\label{eq:high_k_high_freq_sphere}
\P \left( \sup_{k\geq L} \left\| \bQ_k  -  \id \right\|_\op  \geq d^{-(\ell+1)/2}\right) \leq n^{-D} .
\end{equation}
Hence, via an union bound over the events \eqref{eq:low_k_high_freq_sphere} for $\ell+1 \leq k < L$ and the event \eqref{eq:high_k_high_freq_sphere} and reparametrizing $D$, we deduce that there exists a constant $C_{\ell,D}$ such that 
\[
\P \left( \sup_{k\geq \ell+1} \left\| \bQ_k -  \id \right\|_\op  \geq C_{\ell,D} \cdot  \log(d)^{2\ell +3} \sqrt{\frac{n}{d^{\ell+1}}}\right) \leq n^{-D}.
\]
Injecting this bound in Eq.~\eqref{eq:bound_high_freq_sup} concludes the proof of this proposition.
\end{proof}

\end{document}